\documentclass[a4paper]{ociamthesis_modified}
\correctionstrue

\bibliographystyle{plainnat}
\usepackage{pdfpages}
\usepackage{natbib}

\usepackage{subcaption}
\usepackage[ruled,vlined]{algorithm2e}
\usepackage{caption}
\usepackage{graphicx}
\usepackage{amsmath}
\usepackage{amsthm}
\usepackage{amsfonts}
\usepackage{thmtools,thm-restate}
\usepackage{bbm}
\usepackage{mathtools}
\usepackage{authblk}

\newcommand{\indep}{\perp \!\!\! \perp}
\usepackage{capt-of}
\usepackage{comment}
\newcommand{\R}{\mathbb{R}}

\usepackage{amsmath}
\DeclareMathOperator*{\argmax}{arg\,max}

\usepackage{comment}
\usepackage{microtype}
\usepackage{booktabs}
\usepackage{centernot}

\usepackage[many]{tcolorbox}
\usepackage{xcolor}
\usepackage{color}
\usepackage{colortbl}

\definecolor{MyLightGray}{gray}{0.925}
\definecolor{MyDarkGray}{gray}{0.55}
\definecolor{myblue}{rgb}{0.00, 0.45, 0.70}
\definecolor{mygreen}{rgb}{0.01, 0.62, 0.45}
\definecolor{myred}{rgb}{0.84, 0.37, 0.00}

\PassOptionsToPackage{hyphens}{url}
\usepackage{hyperref}
\definecolor{blue}{RGB}{0,114,178}
\definecolor{red}{RGB}{213,80,20}
\definecolor{green}{RGB}{0,158,115}
\definecolor{purple}{RGB}{204,121,167}
\definecolor{orange}{RGB}{230,159, 0}
\definecolor{pink}{RGB}{204,121,167}

\definecolor{ourmethod}{gray}{0.93}

\definecolor{sns0}{HTML}{1f77b4}
\definecolor{sns1}{HTML}{ff7f0e}
\definecolor{sns2}{HTML}{2ca02c}
\definecolor{sns3}{HTML}{d62728}
\definecolor{sns4}{HTML}{9467bd}
\definecolor{sns5}{HTML}{8c564b}

\definecolor{sns-orange}{HTML}{ff7f0e}
\definecolor{sns-ambiguous}{HTML}{00528C}
\definecolor{sns-nonambiguous}{HTML}{2CA9FF}
\definecolor{sns-blue}{HTML}{1f77b4}
\definecolor{solarized@base03}{HTML}{002B36}
\definecolor{solarized@base02}{HTML}{073642}
\definecolor{solarized@base01}{HTML}{586e75}
\definecolor{solarized@base00}{HTML}{657b83}
\definecolor{solarized@base0}{HTML}{839496}
\definecolor{solarized@base1}{HTML}{93a1a1}
\definecolor{solarized@base2}{HTML}{EEE8D5}
\definecolor{solarized@base3}{HTML}{FDF6E3}
\definecolor{solarized@yellow}{HTML}{B58900}
\definecolor{solarized@orange}{HTML}{CB4B16}
\definecolor{solarized@red}{HTML}{DC322F}
\definecolor{solarized@magenta}{HTML}{D33682}
\definecolor{solarized@violet}{HTML}{6C71C4}
\definecolor{solarized@blue}{HTML}{268BD2}
\definecolor{solarized@cyan}{HTML}{2AA198}
\definecolor{solarized@green}{HTML}{859900}

\theoremstyle{plain}

\newtcolorbox{mainresult}{colback=solarized@violet!5!white,
colframe=solarized@violet,parbox, left=0.5mm, right=0.5mm,top=0.5mm,bottom=0.5mm}

\newtcolorbox{mainresultwithtitle}[2][]{colback=solarized@violet!7!white,
colframe=solarized@violet!7, 
colbacktitle=solarized@violet,
parbox=false, 
left=0.5mm, 
right=0.5mm, 
top=0.5mm, 
bottom=0.5mm, 
title={#2}, 
#1}

\newtcolorbox{importantresultwithtitle}[2][]{
  colback=solarized@cyan!7!white,
  colframe=solarized@cyan!7,
  colbacktitle=solarized@cyan,
  parbox=false,
  left=0.5mm,
  right=0.5mm,
  top=0.5mm,
  bottom=0.5mm,
  title={#2}, 
  #1
}

\newtcolorbox{whiteresult}{colback=solarized@violet!2!white,
colframe=solarized@violet,parbox, left=0.5mm, right=0.5mm,top=0.5mm,bottom=0.5mm}

\newtheorem{theorem}{Theorem}[section]
\renewenvironment{theorem}[1][]
 {\refstepcounter{theorem}
  \begin{mainresultwithtitle}[title={Theorem \thetheorem\ifx#1\empty\else~(#1)\fi}]\noindent}
 {\end{mainresultwithtitle}}

\newcounter{proposition}[section]
\renewcommand{\theproposition}{\thesection.\arabic{proposition}}

\newenvironment{proposition}[1][]
 {\refstepcounter{proposition}
  \begin{mainresultwithtitle}[title={Proposition \theproposition\ifx#1\empty\else~(#1)\fi}]\noindent}
 {\end{mainresultwithtitle}}

\newcounter{lemma}[section]
\renewcommand{\thelemma}{\thesection.\arabic{lemma}}

\newenvironment{lemma}[1][]
 {\refstepcounter{lemma}
  \begin{mainresultwithtitle}[title={Lemma \thelemma\ifx#1\empty\else~(#1)\fi}]\noindent}
 {\end{mainresultwithtitle}}

\newcounter{corollary}[section]
\renewcommand{\thecorollary}{\thesection.\arabic{corollary}}

\newenvironment{corollary}[1][]
 {\refstepcounter{corollary}
  \begin{mainresultwithtitle}[title={Corollary \thecorollary\ifx#1\empty\else~(#1)\fi}]\noindent}
 {\end{mainresultwithtitle}}

\newcounter{definition}[section]
\renewcommand{\thedefinition}{\thesection.\arabic{definition}}

\newenvironment{definition}[1][]
 {\refstepcounter{definition}
  \begin{importantresultwithtitle}[title={Definition \thedefinition\ifx#1\empty\else~(#1)\fi}]\noindent}
 {\end{importantresultwithtitle}}

\newtheorem{assumption}[theorem]{Assumption}

 \newcounter{limitation}[section]
\renewcommand{\thelimitation}{\thesection.\arabic{limitation}}

\newcounter{remark}[section]
\renewcommand{\theremark}{\thesection.\arabic{remark}}

\newcommand{\red}[1]{\textcolor{red}{#1}}
\newcommand{\p}[0]{\mathbb{P}}
\newcommand{\E}[0]{\mathbb{E}}
\newcommand{\tarprob}[0]{\mathbb{P}_{(X,Y)\sim P^{\pi^*}_{X,Y}}}
\newcommand{\ttar}[0]{\mathbb{P}_{(X,Y)\sim \tilde{P}^{\pi^*}_{X,Y}}}
\newcommand{\behprob}[0]{\mathbb{P}_{(X,Y)\sim P^{\pi^b}_{X,Y}}}
\newcommand{\expb}[0]{\mathbb{E}_{(X,Y)\sim P^{\pi^b}_{X,Y}}}
\newcommand{\expt}[0]{\mathbb{E}_{(X,Y)\sim P^{\pi^*}_{X,Y}}}
\newcommand{\exptt}[0]{\mathbb{E}_{(\tilde{X},\tilde{Y})\sim P^{\pi^*}_{X,Y}}}

\newcommand{\expatt}[0]{\mathbb{E}_{(\tilde{X},\tilde{Y})\sim \tilde{P}^{\pi^*}_{X,Y}}}

\newcommand{\expbcal}[0]{\mathbb{E}_{(X_i, Y_i) \sim P^{\pi^b}_{X,Y}}}

\newcommand{\ind}{\mathbbm{1}}
\newcommand{\Aspace}{\mathcal{A}}
\newcommand{\Xspace}{\mathcal{X}}

\newcommand{\abs}[1]{|#1|}

\newcommand{\Prob}{\mathbb{P}}

\newcommand{\eqas}{\overset{\mathrm{a.s.}}{=}}
\newcommand{\Law}{\mathrm{Law}}

\DeclarePairedDelimiterX{\infdivx}[2]{(}{)}{#1\;\delimsize\|\;#2}

\newcommand\ci{\perp\!\!\!\perp}

\newcommand{\dee}{\mathrm{d}}

\newcommand{\Xspacedim}{d}
\newcommand{\A}{A}
\newcommand{\X}{X}
\newcommand{\xx}{x}
\newcommand{\Xt}{\widehat{\X}}

\newcommand{\ax}{a}

\newcommand{\sx}{s}

\newcommand{\nx}{n}

\newcommand{\gx}{g}

\newcommand{\tx}{t}
\newcommand{\B}{B}
\newcommand{\N}{N}
\newcommand{\T}{T}

\newcommand{\fx}{f}

\newcommand{\lo}[1]{#1_{\mathrm{lo}}}
\newcommand{\up}[1]{#1_{\mathrm{up}}}

\newcommand{\Y}{Y}
\newcommand{\Yt}{\widehat{Y}}
\newcommand{\ylo}{\lo{y}}
\newcommand{\yup}{\up{y}}

\newcommand{\Ylo}{\lo{\Y}}
\newcommand{\Yup}{\up{\Y}}

\newcommand{\Q}{Q}
\newcommand{\Qt}{\widehat{\Q}}
\newcommand{\Qlo}{\lo{\Q}}
\newcommand{\Qup}{\up{\Q}}

\newcommand{\qlo}[1]{\lo{R}^{#1}}
\newcommand{\qup}[1]{\up{R}^{#1}}
\newcommand{\qt}[1]{\widehat{R}^{#1}}

\newcommand{\YClmean}{{\lo{\mu}}}

\newcommand{\Ytmean}{\widehat{\mu}}
\newcommand{\CIlen}{\Delta}

\newcommand{\Hyp}{\mathcal{H}}
\newcommand{\Hlo}{\lo{\Hyp}}
\newcommand{\plo}{p_{\textup{lo}}}
\newcommand{\Hup}{\up{\Hyp}}
\newcommand{\pup}{p_{\textup{up}}}

\newcommand{\D}{\mathcal{D}}
\newcommand{\Dt}{\widehat{\D}}

\newcommand{\twinfunction}{h}
\newcommand{\twinnoise}{U}
\newcommand{\ux}{u}

\newcommand{\AppendixName}{Appendix\xspace}
\usepackage{tabularx}

\newcommand{\tar}[0]{\pi^*}
\newcommand{\beh}[0]{\pi^b}
\newcommand{\hatbeh}[0]{\widehat{\pi}^b}
\newcommand{\ptar}[0]{p_{\pi^*}}
\newcommand{\Etar}[0]{\mathbb{E}_{\pi^*}}
\newcommand{\Etarred}[0]{\mathbb{E}_{\textcolor{red}{\pi^*}}}
\newcommand{\Ebeh}[0]{\mathbb{E}_{\pi^b}}
\newcommand{\Ebehblue}[0]{\mathbb{E}_{\textcolor{blue}{\pi^b}}}

\newcommand{\Vbeh}[0]{\textup{Var}_{\pi^b}}
\newcommand{\V}[0]{\textup{Var}}

\newcommand{\pbeh}[0]{p_{\pi^b}}

\newcommand{\thetadr}[0]{\hat{\theta}_{\textup{DR}}}
\newcommand{\thetamr}[0]{\hat{\theta}_{\textup{MR}}}
\newcommand{\thetaipw}[0]{\hat{\theta}_{\textup{IPW}}}
\newcommand{\thetaswitch}[0]{\hat{\theta}_{\textup{SwitchDR}}}
\newcommand{\thetadros}[0]{\hat{\theta}_{\textup{DRos}}}
\newcommand{\approxipw}[0]{\tilde{\theta}_{\textup{IPW}}}
\newcommand{\approxmr}[0]{\tilde{\theta}_{\textup{MR}}}
\newcommand{\thetagmdr}[0]{\tilde{\theta}_{\textup{DM-DR}}}
\newcommand{\ipw}[0]{\textup{IPW}}

\newcommand{\mr}[0]{\textup{MR}}
\newcommand{\dr}[0]{\textup{DR}}
\newcommand{\ate}[0]{\textup{ATE}}
\newcommand{\var}[0]{\textup{Var}}
\newcommand{\Yspace}[0]{\mathcal{Y}}

\newcommand{\Dtr}[0]{\mathcal{D}_{\textup{tr}}}

\newcommand{\gt}[0]{{\textup{gt}}}

\newcommand{\ateipw}{\widehat{\ate}_{\ipw}}
\newcommand{\atemr}{\widehat{\ate}_{\mr}}
\newcommand{\atedr}{\widehat{\ate}_{\dr}}
\newcommand{\thetasnmr}{\theta_{\textup{SNMR}}}
\newcommand{\tr}{{\textup{tr}}}
\newcommand{\myparagraph}[1]{\paragraph{#1}}
\newcommand{\flag}[1]{#1}
\usepackage{wrapfig}
\usepackage{tikz}
\usepackage{ifthen}
\usepackage{rotating}

\newboolean{compilePapers}
\newboolean{compileAppendices}

\AtBeginEnvironment{subappendices}{%
\chapter*{Appendix}
\addcontentsline{toc}{chapter}{Appendices}
\counterwithin{figure}{section}
\counterwithin{table}{section}
}

\title{Uncertainty Quantification and Causal Considerations for Off-Policy Decision Making}
\author{Muhammad Faaiz Taufiq}
\college{Wolfson College}

\degree{Doctor of Philosophy}
\degreedate{Michaelmas 2024}

\begin{document}

\setlength{\textbaselineskip}{22pt plus2pt}
\setlength{\frontmatterbaselineskip}{17pt plus1pt minus1pt}

\setlength{\baselineskip}{\textbaselineskip}

\setcounter{secnumdepth}{2}
\setcounter{tocdepth}{1}

\begin{romanpages}

\maketitle

\begin{originality}
    I hereby declare that except where specific reference is made to the work of others, the intellectual contents of this dissertation are original and have not been submitted in whole or in part for consideration for any other degree or qualification. My personal contributions are detailed in the authorship forms at the end of each chapter. This dissertation is my own work except as specified in the text and authorship forms.
\end{originality}

    \begin{dedication}
    \textit{This thesis is dedicated to my family.}\\
    \textit{I would not be where I am today without their love, support and sacrifices.}\\
    \end{dedication}

\begin{acknowledgements}
 	First and foremost I would like to thank my supervisors Rob Cornish, Arnaud Doucet and Yee Whye Teh 
for their support, patience and invaluable mentorship throughout my DPhil journey. 
Their guidance taught me the importance of perseverance, rigour and not compromising on quality -- 
traits which I will cherish for the rest of my life.
They were present for me whenever I needed their help, while also providing me all the flexibility and independence that I needed to conduct my research. 

I would also like to sincerely thank the collaborators who I have had the privilege of working with throught my DPhil. 
Special thanks to Jean-Francois Ton, Chris Holmes, Lenon Minorics and Yang Liu whose guidance and insights have not only contributed to the success of my research projects but also taught me valuable skills which I will draw upon in future as well.

I am deeply grateful to Google DeepMind for funding my DPhil. 
It would not have been possible for me to pursue my studies without their generous support.

My experience at Oxford would be incomplete without the friendships I developed here. 
I would like to specially mention the amazing people who enriched my time at the department with laughter and camaraderie: Jef, Sahra, Emi, Bobby, Jin, Robert, Michael, Alan, Carlo, Andrew, Valentin, Chris, Sheh, Tyler, Jessie, Desi and Guneet.  
Not only did they make my DPhil an enjoyable journey, these are also some of the most smart, talented and supportive individuals that I have ever met and they continue to inspire me everyday. 

To my sister, thank you for being the one unwavering constant in my ever-changing life. 
And to my parents, I cannot express the gratitude I feel for your untiring dedication towards my and my sister's education.
It is impossible for me to list the countless sacrifices that you have made to set us up for success in our lives. 

Finally, I am forever indebted to God for blessing me with this incredible life.
\end{acknowledgements}

\begin{abstract}
	Off-policy evaluation (OPE) is a critical challenge in robust decision-making that seeks to assess the performance of a new policy using data collected under a different policy. 
However, the existing OPE methodologies suffer from several limitations arising from statistical uncertainty as well as causal considerations. 
In this thesis, we address these limitations by presenting three different works. 

Firstly, we consider the problem of high variance in the importance-sampling-based OPE estimators. 
We propose a novel off-policy evaluation estimator, the Marginal Ratio (MR) estimator, to alleviate this problem.
By focusing on the marginal distribution of outcomes rather than the policy shift directly, the MR estimator achieves significant variance reduction compared to state-of-the-art methods, while maintaining unbiasedness. 

Next, we shift our attention towards uncertainty quantification in off-policy evaluation.
To this end, we propose Conformal Off-Policy Prediction (COPP) as a novel approach to quantify this uncertainty with finite-sample guarantees.
Unlike traditional methods focusing on point estimates of expected outcomes, COPP provides reliable predictive intervals for outcomes under a target policy. This enables robust decision-making in risk-sensitive applications and offers a more comprehensive understanding of policy performance. 

Finally, we address the fundamental challenge of causal inference in off-policy evaluation. 
Recognizing the limitations of traditional OPE methods under unmeasured confounding, we develop novel causal bounds for sequential decision settings that remain valid under arbitrary confounding.
We apply these bounds for the assessment of digital twin models without relying on strong causal assumptions. 
We propose a framework for causal falsification, allowing us to identify scenarios where the digital twin's predictions diverge from real-world behavior. This approach provides valuable insights into model reliability and helps ensure safe and effective decision-making.

We conclude this thesis with a discussion of our contributions and
limitations of the presented work, and outline interesting avenues for future research arising from our work.
\end{abstract}

\dominitoc %
\flushbottom

\tableofcontents
\end{romanpages}

\setboolean{compilePapers}{true}

\setboolean{compileAppendices}{true}

\flushbottom

\chapter{\label{ch:1-intro}Introduction} 

\minitoc

The ability to make well-informed decisions is crucial across a variety of domains. 
Whether it is a doctor prescribing the most effective treatment for a patient or a company launching a marketing campaign that resonates with its target audience \citep{xu2020contextual,li2010contextual,bastani2019online}, we constantly strive to take actions that lead to desirable outcomes. 
However, achieving this goal becomes increasingly challenging in the face of uncertainty. Real-world data is often noisy and incomplete, and the systems we interact with are complex and constantly evolving. As machine learning models become more integrated into critical applications, the need for robust decision-making under these challenging conditions becomes paramount.

This thesis explores the key challenges of robust decision-making in machine learning, specifically focusing on \emph{off-policy evaluation} \citep{uncertainty5, adaptive-ope, uncertainty2, uncertainty3, uncertainty4, doubly-robust}. Consider the example of a doctor who wants to assess a new treatment for a disease. Ideally, they would conduct a randomized controlled trial \citep{tsiatis2019dynamic} where patients are randomly assigned the new treatment or a standard one. However, such trials can be expensive, time-consuming or worse, ethically problematic. Off-policy evaluation (OPE) offers a compelling alternative by allowing us to evaluate the performance of a new decision-making policy (the new treatment) using data collected under a different policy (the standard treatment). This eliminates the need for costly experimentation and allows for quicker implementation of potentially more effective strategies.

However, off-policy evaluation presents its own set of challenges. These challenges stem from two main sources of uncertainty:

\begin{itemize}
    \item \textbf{Statistical uncertainty:} This arises from the inherent randomness in the data we have access to and the limitations of the models we use to represent the real world. 
    For instance, the doctor might have a limited number of patients in their historical dataset, and their model might not perfectly capture a patient's response to treatment due to model misspecification.
    In these circumstances, the conventional OPE methods may suffer from high variance and/or bias, thereby potentially resulting in misleading conclusions \citep{saito2021evaluating,su2020doubly,saito2022off}. 
    \item \textbf{Causal unidentifiability:} In many cases, it may be impossible to definitively establish the causal effects of actions even if we had access to infinite data. This arises due to factors like confounding variables, which can influence both the treatment and the outcome. Imagine the existence of some unmeasured factors, such as a patient's pre-existing conditions, that can influence both their initial treatment and their response to the treatment. This makes it challenging to isolate the true causal effect of the new treatment from the influence of these confounding variables \citep{tsiatis2019dynamic,kallus2018confounding,namkoong2020offpolicy}.
\end{itemize}
This thesis tackles these challenges head-on, proposing novel methods for off-policy evaluation that address both statistical and causal uncertainties.
Before we go into the specifics of these challenges, we introduce the problem of off-policy evaluation in contextual bandits which forms the basis of the setting considered in Chapters \ref{ch:3-mr} and \ref{ch:4-copp}.

\section{Off-policy evaluation in contextual bandits}
\subsection{Contextual bandits}
Contextual bandits \citep{Lattimore_Szepesvári_2020} provide a powerful framework for tackling decision-making problems where the effectiveness of an action depends on the specific context in which it is chosen. 
For instance, in medical decision-making, the optimal treatment for a patient might depend on various factors such as their age, medical history, and current symptoms. 
Contextual bandits allow us to model these complex decision-making scenarios by incorporating the notion of context.

In this setting, we use covariates $X \in \mathcal{X}$ to denote features which encapsulate the contextual information such as the patient's age and medical history. We use $A \in \mathcal{A}$ to represent the action chosen by some real-world agent (such as a doctor), and $Y \in \mathcal{Y}$ to denote the outcome/reward observed as a result of taking action $A$. For example, $Y \in \{0, 1\}$ might represent whether a patient survives ($Y=1$) or not ($Y=0$). The goal of a learner in contextual bandits is to choose actions $A$ for a context $X$ which maximises the reward $Y$. 

\subsection{Off-policy evaluation}
Off-policy evaluation (OPE) tackles a crucial challenge in decision-making: assessing the performance of a new policy using data collected under a different policy \citep{swaminathan2015counterfactual, wang2017optimal, farajtabar2018more, su2019continuous, metelli2021subgaussian, liu2019triply, sugiyama2012machine, swaminathan2017off}. This is particularly valuable when conducting controlled experiments with the new policy is impractical or unethical. Here, we formally define the OPE problem in contextual bandits which will set up the challenges tackled in Chapters \ref{ch:3-mr} and \ref{ch:4-copp} of this thesis. 

To be more concrete, let $\D\coloneqq \{(x_i, a_i, y_i)\}_{i=1}^n$ be a historically logged dataset with $n$ observations, generated by a (possibly unknown) \emph{behaviour policy} $\beh(a\mid x)$, i.e. the conditional distribution of agent's actions is $A\mid X=x \sim \beh(\,\cdot\mid x)$.
Next, suppose that we are given a different \emph{target policy}, which we denote by $\tar(a\mid x)$. Our goal is to estimate what the expected outcome \emph{would} be if actions were instead sampled from this target policy $\tar$.

\begin{importantresultwithtitle}[title=Off-policy evaluation (OPE)]\noindent
    The main objective of off-policy evaluation (OPE) is to estimate the expectation of the outcome $Y$ under a given target policy $\tar$ using only the logged data $\D$.
\end{importantresultwithtitle}

The key challenge of OPE arises from the fact that we do not
have access to samples from the target distribution which makes the estimation of off-policy value non-trivial in general.
To tackle this problem, the standard OPE methods make the following assumption.

\begin{assumption}[No unmeasured confounding]\label{assum:no-unmeasured-confounding}
    The agent's action in the observational data $A$ depends only on the context $X$ and possibly additional randomness independent of everything else. This means that when choosing the action, the agent does not rely on additional information relevant to the outcome which is not captured in the context. For instance, in a medical context, this assumption means that all of the information that clinicians use to make treatment decisions is captured in the data. This assumption is also referred to as the \emph{strong ignorability assumption} \citep{tsiatis2019dynamic}.
\end{assumption}

Then, under Assumption \ref{assum:no-unmeasured-confounding}, the off-policy value can be estimated using importance-sampling-based methods \citep{horvitz1952generalization}. 
However, these estimators come with their own set of limitations, which are described in the following section. 

\section{Limitations of existing OPE methods}
\subsection{High variance of OPE estimators}\label{subsec:high-variance}
The conventional off-policy value estimators use policy ratios $\rho(a, x) \coloneqq \tar(a\mid x)/\beh(a\mid x)$ as importance weights. 
In cases where the two policies are significantly different, the policy ratios $\rho(a, x)$ attain extreme values leading to a high variance in the OPE estimators. 
To alleviate this high variance, \cite{dudik2014doubly} proposed a Doubly Robust (DR) estimator for OPE which uses a control variate to decrease the variance of conventional OPE estimators. 
However, DR still relies on policy ratios as importance weights and as a result, also suffers from high variance when the policy shift is large. 
This problem is further exacerbated as the sizes of the action and context spaces grow \citep{sachdeva2020off, saito2022off}.
Chapter \ref{ch:3-mr} of this thesis specifically focuses on this limitation of OPE. 

Besides using control variates (as in DR estimator), several techniques have been proposed to address the variance issues associated with importance weights. 

\paragraph{Trading off variance for bias}
\cite{swaminathan2015counterfactual, swaminathan2015the, chaudhuri2019london} attempt to bound the importance weights within a certain range to prevent them from becoming excessively large. 
Besides this, the Direct Method (DM) \citep{Beygelzimer2008Offset} avoids the use of importance-sampling by estimating the reward function from observational data.
Similarly, Switch-DR \citep{wang2017optimal} aims to circumvent the high variance in conventional DR estimator by switching to DM when the importance weights are large.
However, these approaches introduce a bias-variance trade-off, as clipping the weights or using the learned reward function can introduce bias into the estimates. 

\paragraph{Marginalization-based techniques}
Several works explore marginalisation techniques for variance reduction. For example, \cite{saito2022off} propose Marginalized Inverse Propensity Score (MIPS), which considers the marginal shift in the distribution of a lower dimensional embedding of the action space, denoted by $E$, instead of considering the shift in the policies explicitly. 
While this approach reduces the variance, we show in Chapter \ref{ch:3-mr} that MIPS relies on an additional assumption regarding the action embeddings $E$ which does not hold in general.

In addition, various marginalisation ideas have also been proposed in the context of Reinforcement Learning (RL). For example, \cite{liu2018breaking, xie2019advances, kallus2020off} use methods which consider the shift in the marginal distribution of the states, and apply importance weighting with respect to this marginal shift rather than the trajectory distribution. Similarly, \cite{Fujimoto2021deep} use marginalisation for OPE in deep RL, where the goal is to consider the shift in marginal distributions of state and action. Although marginalization is a key trick of these estimators, these techniques are aimed at resolving the curse of horizon, a problem specific to RL.

\subsection{Lack of uncertainty quantification}\label{subsec:uncertainty-quantification}
Most techniques for OPE in contextual bandits focus on evaluating policies based on their \emph{expected} outcomes \citep{uncertainty5, adaptive-ope, uncertainty2, uncertainty3, uncertainty4, doubly-robust}. However, this can be problematic as
methods that are only concerned with the average outcome do not take into account any notions of
uncertainty in the outcome. Therefore, in risk-sensitive settings such as econometrics, where we want
to minimize the potential risks, metrics such as CVaR (Conditional Value at Risk) might be more
appropriate \citep{keramati2020being}. Additionally, when only small sample sizes of observational data are available, the average outcomes under finite data can be misleading, as they are prone to outliers and hence, metrics such as medians or quantiles are more robust in these scenarios \citep{altschuler2019best}. Next, we outline some recent works which tackle this challenge by developing methodologies to account for the uncertainty in off-policy performance using available data. 

\paragraph{Off-policy risk assessment in contextual bandits}
Instead of estimating bounds on the expected outcomes, \cite{risk-assessment, chandak2021universal} establish finite-sample bounds for a general class of metrics (e.g., Mean, CVaR, CDF) on the outcome. Their methods can be used to estimate quantiles of the outcomes under the target policy and are therefore robust to outliers. 
For example, \cite{chandak2021universal} proposed a non-parametric Weighted Importance Sampling (WIS) estimator for the empirical CDF of $Y$ under $\pi^*$,
which can be used to construct predictive intervals on the outcome under target policy. This can help us quantify the range of plausible outcomes $Y$ that are likely to occur if actions are chosen according to target policy $\tar$. However, the resulting bounds do not depend on the context $X$ (i.e., are not adaptive w.r.t. $X$). This can lead to overly conservative intervals, which may not be very informative. In Chapter \ref{ch:4-copp}, we circumvent this problem by proposing a methodology of constructing predictive intervals on $Y$ under target policy $\tar$ which are adaptive w.r.t. context $X$ and are therefore considerably more informative.

\section{Causal considerations for sequential decisions}
Having outlined some of the key limitations of OPE methods in contextual bandits, we now move on to the causal considerations for off-policy decision-making which will set up our contribution in Chapter \ref*{ch:5-causal}. 
Before we dive deeper into this topic, we introduce the sequential decision setting which generalises the contextual bandits framework.

\subsection{Sequential decision setting}
Contextual bandits encapsulate the single-decision regimes where, for each observed context, we take a single action and observe the resulting outcome. This is analogous to a doctor choosing a single treatment for a patient based on their current state. However, many real-world decision-making scenarios involve multiple interventions over time, where each action not only affects the immediate outcome but also influences the context for future decisions. To capture this complexity, we introduce the sequential decision setting in this section. This setting extends the framework of contextual bandits to handle sequential decision-making problems, allowing us to model more complex scenarios where interventions unfold over time and the context evolves dynamically.

We consider a setting with a fixed number of decisions per episode (i.e., a fixed time horizon) $T \in \{1, 2, \ldots\}$. For each $t\in \{0, \ldots, T\}$, we assume that the process gives rise to an observation at time $t$, denoted by $X_t$ which takes values in some space $\mathcal{X}_t \coloneqq \R^{\Xspacedim_\tx}$. 
Moreover, at time $t\in  \{1, \ldots, T\}$ a real-world agent (such as a doctor) chooses an action $A_t$ which takes values in some space $\mathcal{A}_t$. The agent's choice of $A_t$ may depend on the historical observations $(X_0, \ldots, X_{t-1})$ or any additional information not captured in historical observations that the agent can access. For example, in a medical context, the observations may consist of a patient's vital signs, and the actions may consist of possible treatments or interventions that the doctor chooses based on patient history.
The actions taken up to time $t$, i.e. $(A_1, A_2, \ldots, A_t)$ can influence the future observations $(X_t, X_{t+1}, \ldots, X_T)$. 
This describes the sequential decision setting, of which the contextual bandits are a special case when $T=1$. 

\subsection{Causal unidentifiability under unmeasured confounding}\label{subsec:unmeasured-confounding}
Most of the standard OPE methods for contextual bandits can be straightforwardly extended to sequential decision settings \citep{uehara2022reviewoffpolicyevaluationreinforcement}. 
However, like in contextual bandits, these estimators assume no unmeasured confounding (outlined in Assumption \ref{assum:no-unmeasured-confounding}) in the available observational data. 
Informally, in the sequential decision setting, this assumption holds when each action $\A_\tx$ is chosen by the behavioural agent solely on the basis of the information available at time $\tx$ that is actually recorded in the dataset, namely $\X_{0}, \A_1, \X_1, \ldots, \A_{\tx-1}, \X_{\tx-1}$, as well as possibly some additional randomness that is independent of the real-world process, such as the outcome of a coin toss. 
Unobserved confounding is present whenever this does not hold, i.e.\ whenever some unmeasured factor simultaneously influences both the agent's choice of action and the observation produced by the real-world process.
This can happen when the real-world agent has access to more information than is captured in the data. 
In such circumstances, the causal effect of a given action sequence may be unidentifiable from the available observational data, making it impossible to accurately estimate the value of a target policy.
To make this concrete, we provide an intuitive illustration of this phenomenon below using a toy example where the available observational data suffers from unmeasured confounding.

\begin{importantresultwithtitle}[title=Toy example: Unmeasured confounding in medical decision-making]\noindent
Suppose that we are interested in estimating the effect of a drug on the weight of patients in a certain population. 
Moreover, assume that this drug interacts with an enzyme that is only present in part of the population.
Denote by $U \in \{0, 1\}$ the presence or absence of the enzyme in a patient, and assume that when $U = 1$ the patient's weight increases after action the drug is administered, and that when $U = 0$ the drug has no effect.
Additionally, suppose that, among the patients whose data we have obtained, the drug was only prescribed to those for whom $U = 1$, perhaps on the basis of some initial lab reports available to the prescriber.
Finally, suppose that these lab results were \emph{not} included in the context $X$ captured in the observational dataset $\D$, so that the value of $U$ for each patient cannot be determined from the data we have available. 

In this setup, since the drug was only administered to patients with $U=1$, it would appear from the data that the drug causes patient weight to increase. 
However, when the drug is administered to the general population, i.e.\ regardless of the value of $U$, we would observe that the drug has no effect on patients for whom $U=0$. Figure \ref{fig:syn_ex_intro} illustrates this discrepancy under a toy model for
this scenario. In this example, since the data $\D$ contains no information about the presence or absence of the enzyme in patients, $U$, it is impossible to determine using the data $\D$ alone how the drug will affect a given population of patients.

\end{importantresultwithtitle}

\begin{figure}[t]
    \centering
    \includegraphics[height=3.3cm]{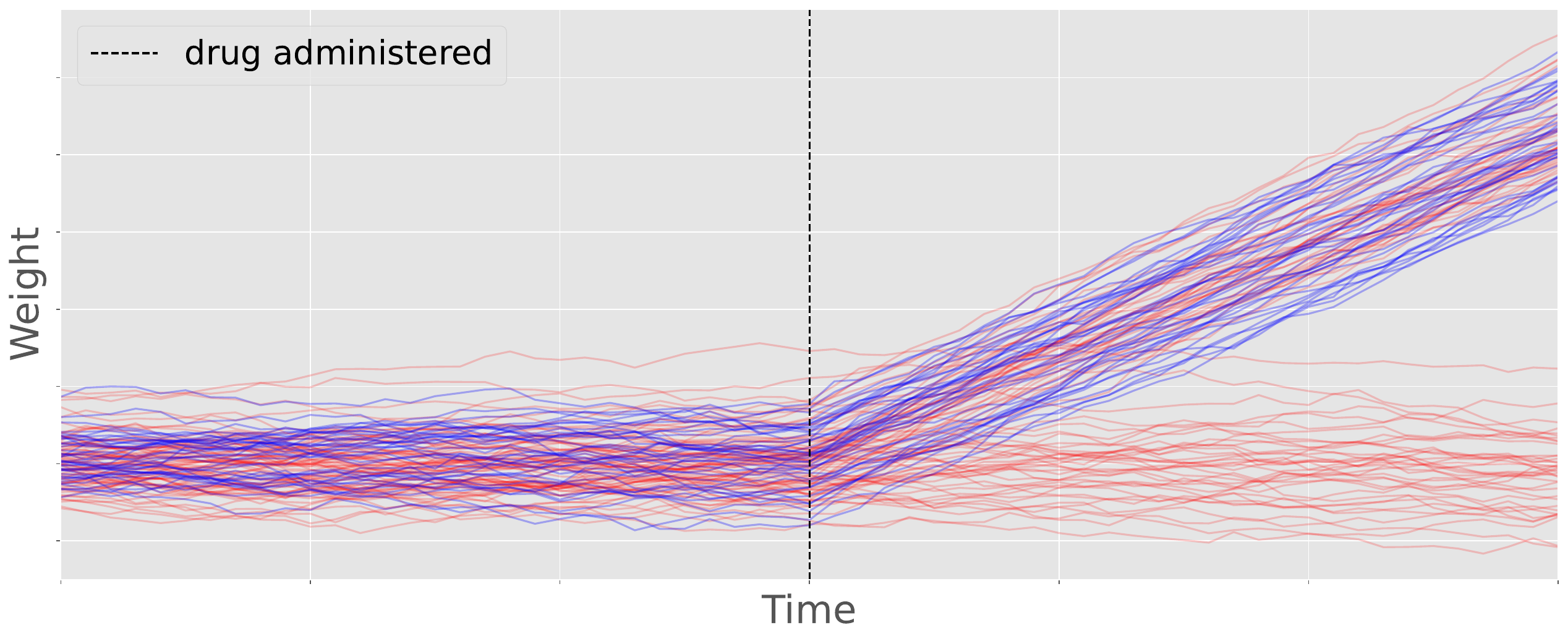}
    \caption{The discrepancy between observational data and interventional behaviour in the presence of unmeasured confounding: the range of outcomes observed in the data for patients who were administered the drug (blue) differs from what \emph{would} be observed if the drug were administered to the general population (red).}
    \label{fig:syn_ex_intro}
\end{figure}

In certain contexts it may be reasonable to assume that the data are unconfounded. For example, in certain situations it may be possible to gather data in a way that specifically guarantees there is no confounding.
Randomised controlled trials, which ensure that each $\A_\tx$ is chosen via a carefully designed randomisation procedure \citep{lavori2004dynamic,murphy2005experimental}, constitute a widespread example of this approach. However, for typical datasets, it is widely acknowledged that the assumption of no unmeasured confounding will rarely hold, and so OPE procedures based on this assumption may yield unreliable results in practice \citep{murphy2003optimal,tsiatis2019dynamic}. 
This is formalised in a foundational result from the causal inference literature, often referred to as the \emph{fundamental problem of causal inference} \citep{holland1986statistics}.
\begin{mainresultwithtitle}[title=Fundamental problem of causal inference (informal statement)]\noindent
    The causal effect of an action is not uniquely identified by the observational data distribution without additional assumptions.
\end{mainresultwithtitle}

\paragraph{Partial identification}
Since the precise identifiability of causal effects is not possible in the presence of unmeasured confounding, a notable line of work instead explores partial identification techniques \citep{manski,manski1989anatomy, manski2003partial}. Instead of the point identification of causal effects which may require strong unconfounding assumptions, partial identification typically considers the range of causal effects which may occur in the presence of confounding. For example, \cite{manski} constructs sharp bounds on the causal effects which can be readily estimated using the available observational data. While these bounds do not require any strong assumptions, they can be conservative. 

\paragraph{Sensitivity analysis}
Slightly stronger assumptions yield inferences that may be more powerful but less credible. To this end, \cite{rosenbaum2002observational} proposes a classical model of confounding for a single binary decision setting which posits that the unobserved confounders have a limited influence on the agent's actions in the real world.  \cite{namkoong2020offpolicy} extend this model to the multi-action sequential decision-making setting, and subsequently use this to obtain bounds on the off-policy value.

The Rosenbaum model is also closely related to (albeit different from) the marginal sensitivity model introduced by \cite{tan2006distributional} which also assumes bounds on the strength of unmeasured confounding on agent's actions. Subsequently, \cite{kallus2020minimax} use the marginal sensitivity model to develop a policy learning algorithm which remains robust to unmeasured confounding. However, these models impose assumptions on the strength of unmeasured confounding which can be impossible to verify using observational data alone, and therefore the inferences obtained may be misleading in many cases. 

\paragraph{Proxy causal learning}
This comprises methodologies for estimating the causal effect of actions on outcomes in the presence of unobserved confounding, using \emph{proxy variables} which contain relevant side information about the unmeasured confounders \citep{xu2021deep, tchetgen2020introduction, xu2024kernel}. This usually involves a two-stage regression. First, the relationship between action and proxies is modelled and subsequently, this model is used to learn the causal effect of actions on the outcomes. \cite{kuroki2014measurement} outline the necessary conditions on proxy variables to obtain the true causal effects. While proxy causal learning may be effective in cases where such proxy variables are available, in many real-world settings the available proxy variables may not satisfy the necessary conditions for identification of true causal effects.

Chapter \ref{ch:5-causal} of this thesis considers the challenges posed by unmeasured confounding in sequential decision setting. We propose a set of novel bounds on the causal effects in this setting, which remain valid in the presence of arbitrary unmeasured confounding and rely on minimal assumptions making them highly applicable to a wide variety of real-world settings.

\section{Contributions and thesis outline}
Having outlined some of the key challenges associated with off-policy evaluation, we dedicate the rest of this thesis to addressing each of these individually.
Specifically, this thesis is organised as follows:

\paragraph*{Chapter \ref*{ch:3-mr}: Variance reduction \citep{taufiq2023marginal}}
The first challenge we consider is that of high variance in existing OPE estimators based on importance sampling. 
As we mentioned in Section \ref{subsec:high-variance}, this variance is exacerbated in cases where there is low overlap between behaviour and target policies, or where the action or context space is high-dimensional.
To address this challenge, we propose a novel OPE estimator for contextual bandits, the Marginal Ratio (MR) estimator, which uses a marginalisation technique to focus on the shift in the marginal distribution of outcomes $Y$
directly, instead of the policies themselves. 
Unlike the conventional approaches which use policy ratios as importance weights, intuitively, our proposed estimator treats actions $A$ and contexts $X$ as latent variables.
Consequently, the resulting estimator is significantly more robust to the overlap between policies and the sizes of action and/or context spaces.
This chapter also includes extensive theoretical and empirical analyses demonstrating the benefits of the MR estimator compared to the state-of-the-art OPE estimators for contextual bandits.

\paragraph*{Chapter \ref*{ch:4-copp}: Uncertainty quantification \citep{taufiq2022conformal}}
As explained in Section \ref{subsec:uncertainty-quantification}, most OPE methods have focused on the expected outcome of a policy which does not capture the variability of the outcome $Y$.
In addition, many of these methods provide only asymptotic guarantees of validity at best. 
In this chapter, we address these limitations by considering a novel application of conformal prediction \citep{vovk2005algorithmic} to contextual bandits. 
Given data collected under a behavioral policy, we propose \emph{conformal off-policy prediction} (COPP), which can output reliable predictive intervals for the outcome under a new target policy. We provide theoretical finite-sample guarantees without making any additional assumptions beyond the standard contextual bandit setup, and empirically demonstrate the utility of COPP compared with existing methods on synthetic and real-world data.

\paragraph*{Chapter \ref*{ch:5-causal}: Causal considerations \citep{cornish2023causalfalsificationdigitaltwins}}
In this chapter we consider the sequential decision setting, where the available observational data may suffer from unmeasured confounding. 
As mentioned in Section \ref{subsec:unmeasured-confounding}, fundamental results from causal inference mean that in this setting the causal effect of interventions is unidentifiable from the observational distribution.  
To address this challenge, we provide a novel set of longitudinal causal bounds that remain valid under arbitrary unmeasured confounding.

Chapter \ref*{ch:5-causal} focuses on the application of these 
bounds for assessing the accuracy of \emph{digital twin models}.
These models are virtual systems designed to predict how a real-world process will evolve in response to interventions. 
To be considered accurate, these models must correctly capture the true causal effects of interventions.
Unfortunately, the causal unidentifiability results mean that observational data cannot be used to certify a twin in this sense if the data are confounded.
To circumvent this, we instead use our proposed causal bounds to find situations in which the twin \emph{is not} correct, and present a general-purpose statistical procedure for doing so.
Our approach yields reliable and actionable information about the twin under only the assumption of an i.i.d.\ dataset of observational trajectories, and remains sound even if the data are confounded.

\paragraph*{Chapter \ref*{ch:6-conclusion}: Conclusion} 
Finally, we conclude by summarising the main findings of the works presented in this thesis. 
In this chapter, we also discuss some of the limitations of our proposed methodologies and mention some interesting avenues for future research arising from these works. 

\section{An overview of work conducted during the DPhil}
In this section, we provide an overview of the research conducted during the doctoral studies by listing the papers which are included in this thesis, as well those which have been omitted.

\subsection{Works included in the thesis}
This is an integrated thesis where each chapter comprises a paper and therefore is self-contained.
These papers are listed here in chronological order for completeness.
\begin{enumerate}
    \item \textbf{Muhammad Faaiz Taufiq}*, Jean-Francois Ton*, Rob Cornish, Yee Whye Teh, and Arnaud Doucet.
    Conformal Off-Policy Prediction in Contextual Bandits. In \textit{\textcolor{purple}{Advances in Neural Information Processing
    Systems, 2022}}. \citep{taufiq2022conformal}
    \item Rob Cornish*, \textbf{Muhammad Faaiz Taufiq}*, Arnaud Doucet, and Chris Holmes. Causal Falsification
    of Digital Twins, 2023. \textit{Preprint}. \citep{cornish2023causalfalsificationdigitaltwins}
    \item \textbf{Muhammad Faaiz Taufiq}, Arnaud Doucet, Rob Cornish, and Jean-Francois Ton. Marginal Density Ratio for Off-Policy Evaluation in Contextual Bandits. In 
    \textit{\textcolor{purple}{Advances in Neural Information Processing Systems, 2023}}. \citep{taufiq2023marginal}
\end{enumerate}

\subsection{Works omitted from the thesis}
For the purposes of coherence and conciseness, several works which were part of the doctoral research have been omitted from this thesis. 
Here, we list these papers along with a brief description in chronological order for completeness. 
\begin{enumerate}
    \item \textbf{Muhammad Faaiz Taufiq}, Patrick Blöbaum, and Lenon Minorics. Manifold Restricted
    Interventional Shapley Values. In \textit{\textcolor{purple}{International Conference on Artificial Intelligence and
    Statistics, 2023}}.  \citep{taufiq2023manifold}
    \item  \textbf{Muhammad Faaiz Taufiq}, Jean-Francois Ton, and Yang Liu. Achievable Fairness on your Data
    with Utility Guarantees. \textit{Preprint}. \citep{taufiq2024achievablefairnessdatautility}
    \item Yang Liu, Yuanshun Yao, Jean-Francois Ton, Xiaoying Zhang, Ruocheng Guo, Hao Cheng, Yegor
    Klochkov, \textbf{Muhammad Faaiz Taufiq}, and Hang Li. 
    Trustworthy LLMs: A Survey and Guideline for Evaluating Large Language Models' Alignment, 2023. 
    In \textit{\textcolor{purple}{NeurIPS 2023 Workshop on Socially Responsible Language Modelling Research (SoLaR)}}.    
    \citep{liu2024trustworthyllmssurveyguideline}
\end{enumerate}
In \cite{taufiq2023manifold}, we consider the robustness of Shapley values, which are model-agnostic methods for explaining model predictions.
Many commonly used methods of computing Shapley values, known as off-manifold methods, are sensitive to model behaviour outside the data distribution.
This makes Shapley explanations highly sensitive to off-manifold perturbations of models, resulting in misleading explanations.
To circumvent this problem, we propose \emph{ManifoldShap}, which respects the model’s domain of validity by restricting model evaluations to the data manifold.
We show, theoretically and empirically, that ManifoldShap is robust to off-manifold perturbations of the model and leads to
more accurate and intuitive explanations than existing state-of-the-art Shapley methods.

Beyond this, \cite{taufiq2024achievablefairnessdatautility} considers fairness within the context of machine learning models. 
In this setting, training models that minimize disparity across different sensitive groups often leads to diminished accuracy, a phenomenon known as the fairness-accuracy tradeoff. 
The severity of this trade-off inherently depends on dataset characteristics such as dataset
imbalances or biases and therefore, using a uniform fairness requirement across diverse datasets
remains questionable. To address this, we present a computationally efficient approach to approximate the fairness-accuracy trade-off curve tailored to individual datasets, backed by rigorous
statistical guarantees.  Crucially, we introduce a novel methodology for quantifying uncertainty in our
estimates, thereby providing practitioners with a robust framework for auditing model fairness
while avoiding false conclusions due to estimation errors.

Finally, \cite{liu2024trustworthyllmssurveyguideline} presents a comprehensive survey of key dimensions that are crucial to consider when assessing the trustworthiness of Large Language Models (LLMs). 
The survey covers seven major categories of LLM trustworthiness: reliability, safety, fairness, resistance to misuse, explainability and reasoning, adherence to social norms, and robustness. 
The empirical results presented in this study indicate that, in general, more aligned models tend to perform better in terms of overall trustworthiness. However, the effectiveness of alignment varies across the different trustworthiness categories considered. This highlights the importance of conducting more fine-grained analyses, testing, and making continuous improvements on LLM alignment. 

\chapter{\label{ch:3-mr}Marginal Density Ratio for Off-Policy Evaluation in
Contextual Bandits} 

\minitoc

\ifthenelse{\boolean{compilePapers}}
    { %

\begin{abstract}

Off-Policy Evaluation (OPE) in contextual bandits is crucial for assessing new policies using existing data without costly experimentation. However, current OPE methods, such as Inverse Probability Weighting (IPW) and Doubly Robust (DR) estimators, suffer from high variance, particularly in cases of low overlap between target and behavior policies or large action and context spaces. In this paper, we introduce a new OPE estimator for contextual bandits, the Marginal Ratio (MR) estimator, which focuses on the shift in the marginal distribution of outcomes $Y$ instead of the policies themselves. Through rigorous theoretical analysis, we demonstrate the benefits of the MR estimator compared to conventional methods like IPW and DR in terms of variance reduction. Additionally, we establish a connection between the MR estimator and the state-of-the-art Marginalized Inverse Propensity Score (MIPS) estimator, proving that MR achieves lower variance among a generalized family of MIPS estimators. We further illustrate the utility of the MR estimator in causal inference settings, where it exhibits enhanced performance in estimating Average Treatment Effects (ATE). Our experiments on synthetic and real-world datasets corroborate our theoretical findings and highlight the practical advantages of the MR estimator in OPE for contextual bandits. 
\end{abstract}
\section{Introduction} 
In contextual bandits, the objective is to select an action $A$, guided by contextual information $X$, to maximize the resulting outcome $Y$. This paradigm is prevalent in many real-world applications such as healthcare, personalized recommendation systems, or online advertising \citep{li2010contextual, bastani2019online, xu2020contextual}. The objective is to perform actions, such as prescribing medication or recommending items, which lead to desired outcomes like improved patient health or higher click-through rates. Nonetheless, updating the policy presents challenges, as na\"ively implementing a new, untested policy may raise ethical or financial concerns. For instance, prescribing a drug based on a new policy poses risks, as it may result in unexpected side effects. As a result, recent research \citep{swaminathan2015counterfactual, wang2017optimal, farajtabar2018more, su2019continuous, metelli2021subgaussian, liu2019triply, sugiyama2012machine, swaminathan2017off} has concentrated on evaluating the performance of new policies (target policy) using only existing data that was generated using the current policy (behaviour policy). This problem is known as Off-Policy Evaluation (OPE).

Current OPE methods in contextual bandits, such as the Inverse Probability Weighting (IPW) \citep{horvitz1952generalization} and Doubly Robust (DR) \citep{dudik2014doubly} estimators primarily account for the policy shift by re-weighting the data using the ratio of the target and behaviour polices to estimate the target policy value. This can be problematic as it may lead to high variance in the estimators in cases of substantial policy shifts. The issue is further exacerbated in situations with large action or context spaces \citep{saito2022off}, since in these cases the estimation of policy ratios is even more difficult leading to extreme bias and variance.

In this work we show that this problem of high variance in OPE can be alleviated by using methods which directly consider the shift in the marginal distribution of the outcome $Y$ resulting from the policy shift, instead of considering the policy shift itself (as in IPW and DR). To this end, we propose a new OPE estimator for contextual bandits called the Marginal Ratio (MR) estimator, which weights the data directly based on the shift in the marginal distribution of outcomes $Y$ and consequently is much more robust to increasing sizes of action and context spaces than existing methods like IPW or DR. 
Our extensive theoretical analyses show that MR enjoys better variance properties than the existing methods making it highly attractive for a variety of applications in addition to OPE. One such application is the estimation of Average Treatment Effect (ATE) in causal inference, for which we show that MR provides greater sample efficiency than the most commonly used methods.

Our contributions in this paper are as follows:

\begin{itemize}
    \item Firstly, we introduce MR, an OPE estimator for contextual bandits, that focuses on the shift in the marginal distribution of $Y$ rather than the joint distribution of $(X, A, Y)$. 
    \flag{We show that MR has favourable theoretical properties compared to existing methods like IPW and DR. Our analysis also encompasses theory on the approximation errors of our estimator. 
    }

    \item Secondly, we explicitly lay out the connection between MR and  Marginalized Inverse Propensity Score (MIPS) \citep{saito2022off}, a recent state-of-the-art contextual bandits OPE method, and prove that MR attains lowest variance among a generalized family of MIPS estimators. 
    \item Thirdly, we show that the MR estimator can be applied in the setting of causal inference to estimate average treatment effects (ATE), and theoretically prove that the resulting estimator is more data-efficient with higher accuracy and lower variance than commonly used methods. 
    \item Finally, we verify all our theoretical analyses through a variety of experiments on synthetic and real-world datasets and empirically demonstrate that the MR estimator achieves better overall performance compared to current state-of-the-art methods. 
\end{itemize}

\section{Background}
\subsection{Setup and Notation} \label{sec:setup_notation}
We consider the standard contextual bandit setting. Let $X\in\Xspace$ be a context vector (e.g., user features), $A\in \Aspace$ denote an action (e.g., recommended website to the user), and $Y\in \Yspace$ denote a scalar reward or outcome (e.g., whether the user clicks on the website). The outcome and context are sampled from unknown probability distributions $p(y\mid x, a)$ and $p(x)$ respectively. Let $\D\coloneqq \{(x_i, a_i, y_i)\}_{i=1}^n$ be a historically logged dataset with $n$ observations, generated by a (possibly unknown) \emph{behaviour policy} $\beh(a\mid x)$.
Specifically, $\D$ consists of i.i.d. samples from the joint density under\textit{ behaviour policy},
\begin{align}
    \pbeh(x, a, y) \coloneqq p(y\mid x, a)\, \textcolor{blue}{\beh(a\mid x)}\,p(x). \label{eq:behav-joint-factorisation}
\end{align}
We denote the joint density of $(X, A, Y)$ under the \textit{target policy} as
\begin{align}
    \ptar(x, a, y) \coloneqq p(y\mid x, a)\, \textcolor{red}{\tar(a\mid x)}\,p(x). \label{eq:tar-joint-factorisation}
\end{align}

Moreover, we use $\pbeh(y)$ to denote the marginal density of $Y$ under the behaviour policy, 
\begin{align*}
    \pbeh(y) &= \int_{\Aspace \times \Xspace} \pbeh(x, a, y)\, \mathrm{d}a \, \mathrm{d}x,
\end{align*}
and likewise for the target policy $\tar$. Similarly, we use $\Ebeh$ and $\Etar$ to denote the expectations under the joint densities $\pbeh(x, a, y)$ and $\ptar(x, a, y)$ respectively.

\myparagraph{Off-policy evaluation (OPE)}
The main objective of OPE is to estimate the expectation of the outcome $Y$ under a given target policy $\tar$, i.e., $\Etar [Y]$, using only the logged data $\D$.

Throughout this work, we assume that the support of the target policy $\tar$ is included in the support of the behaviour policy $\beh$. This is to ensure that importance sampling yields unbiased off-policy estimators, and is satisfied for exploratory behaviour policies such as the $\epsilon$-greedy policies. 
\begin{assumption}[Support]
    For any $x \in \Xspace, a \in \Aspace$,  $\tar(a\mid x) >0 \implies \beh(a\mid x) >0$. 
\end{assumption}

\subsection{Existing off-policy evaluation methodologies}
Next, we will present some of the most commonly used OPE estimators before outlining the limitations of these methodologies. This motivates our proposal of an alternative OPE estimator. 

The value of the target policy can be expressed as the expectation of outcome $Y$ under the target data distribution $\ptar(x, a, y)$.
However in most cases, we do not have access to samples from this target distribution and hence we have to resort to importance sampling methods.
\paragraph{Inverse Probability Weighting (IPW) estimator}
One way to compute the target policy value, $\Etar[Y]$, when only given data generated from $\pbeh(x, a, y)$ is to rewrite the policy value as follows:

\begin{small}
\begin{align*}
    \Etarred[Y] =
    \int y \, \ptar(x, a, y) \,\mathrm{d}y \, \mathrm{d}a\, \mathrm{d}x   =
    \int y \, \underbrace{\frac{\ptar(x, a, y)}{\pbeh(x, a, y)}}_{\rho(a,x)}\, \pbeh(x, a, y) \,\mathrm{d}y \, \mathrm{d}a\, \mathrm{d}x =
    \Ebehblue\left[Y\,\rho(A, X)\right],
\end{align*}
\end{small} 
where 
$
\rho(a, x) \coloneqq \frac{\ptar(x, a, y)}{\pbeh(x, a, y)} = \frac{\tar(a|x)}{\beh(a|x)}
$, given the factorizations in Eqns. \eqref{eq:behav-joint-factorisation} and \eqref{eq:tar-joint-factorisation}.
This leads to the commonly used \emph{Inverse Probability Weighting (IPW)} \citep{horvitz1952generalization} estimator:
\[
\thetaipw \coloneqq \frac{1}{n}\sum_{i=1}^n \rho(a_i, x_i)\,y_i.
\]
When the behaviour policy is known, IPW is an unbiased and consistent estimator. However, it can suffer from high variance, especially as the overlap between the behaviour and target policies decreases. 

\myparagraph{Doubly Robust (DR) estimator} 
To alleviate the high variance of IPW, \cite{dudik2014doubly} proposed a \emph{Doubly Robust (DR)} estimator for OPE. 
DR uses an estimate of the conditional mean $\hat{\mu}(a, x) \approx\E[Y\mid X=x, A=a]$ (\emph{outcome model}), as a control variate to decrease the variance of IPW. It is also doubly robust in that it yields accurate value estimates if either the importance weights $\rho(a, x)$ or the outcome model $\hat{\mu}(a, x)$ is well estimated \citep{dudik2014doubly, jiang2016doubly}. 
The DR estimator for $\Etar[Y]$ can be written as follows:
\[
\thetadr = \frac{1}{n} \sum_{i=1}^n \rho(a_i, x_i)\,(y_i - \hat{\mu}(a_i, x_i)) + \hat{\eta}(\tar),
\]
where
$
\hat{\eta}(\tar) = \frac{1}{n} \sum_{i=1}^n \sum_{a'\in \Aspace} \hat{\mu}(a', x_i) \tar(a'\mid x_i) \approx \E_{\tar}[\hat{\mu}(A, X)]$. Here, $\hat{\eta}(\tar)$ is referred to as the Direct Method (DM) as it uses $\hat{\mu}(a, x)$ directly to estimate target policy value. 

\subsection{Limitation of existing methodologies} 
To estimate the value of the target policy $\tar$, the existing methodologies consider the shift in the joint distribution of $(X, A, Y)$  as a result of the policy shift (by weighting samples by policy ratios). As we show in Section \ref{subsec:comparison}, considering the joint shift can lead to inefficient policy evaluation and high variance especially as the policy shift increases \citep{li2018addressing}.
Since our goal is to estimate $\Etar[Y]$, we will show in the next section that considering only the shift in the marginal distribution of the outcomes $Y$ from $\pbeh(Y)$ to $\ptar(Y)$, leads to a more efficient OPE methodology compared to existing approaches.

To better comprehend why only considering the shift in the marginal distribution is advantageous, let us examine an extreme example where we assume that $Y \indep A \mid X$, i.e., the outcome $Y$ for a user $X$ is independent of the action $A$ taken. In this specific instance, $\Etar[Y] = \Ebeh[Y] \approx 1/n\sum_{i=1}^n y_i,$ indicating that an unweighted empirical mean serves as a suitable unbiased estimator of $\Etar[Y]$. However, IPW and DR estimators use policy ratios $\rho(a, x)  = \frac{\tar(a \mid x)}{\beh(a \mid x)}$ as importance weights. In case of large policy shifts, these ratios may vary significantly, leading to high variance in IPW and DR.

In this particular example, the shift in policies is inconsequential as it does not impact the distribution of outcomes $Y$. Hence, IPW and DR estimators introduce additional variance due to the policy ratios when they are not actually required. This limitation is not exclusive to this special case; in general, methodologies like IPW and DR exhibit high variance when there is low overlap between target and behavior policies \citep{li2018addressing} even if the resulting shift in marginals of the outcome $Y$ is not significant.

Therefore, we propose the \emph{Marginal Ratio (MR)} OPE estimator for contextual bandits in the subsequent section, which circumvents these issues by focusing on the shift in the marginal distribution of the outcomes $Y$. Additionally, we provide extensive theoretical insights on the comparison of MR to existing state-of-the-art methods, such as IPW and DR.

\section{Marginal Ratio (MR) estimator}

Our method's key insight involves weighting outcomes by the marginal density ratio of outcome $Y$:
\begin{align*}
\Etarred[Y] &= \int_{\Yspace} y \, \ptar(y)\, \mathrm{d}y = \int_\Yspace y\, \frac{\ptar(y)}{\pbeh(y)} \, \pbeh(y) \, \mathrm{d}y = \Ebehblue\left[Y\, w(Y) \right],
\end{align*}
where 
$
w(y) \coloneqq \frac{\ptar(y)}{\pbeh(y)}.
$
This leads to the Marginal Ratio OPE estimator:
\begin{align*}
    \thetamr \coloneqq \frac{1}{n}\sum_{i=1}^n w(y_i) \, y_i.
\end{align*}

In Section \ref{subsec:comparison} we prove that by only considering the shift in the marginal distribution of outcomes, the MR estimator achieves a lower variance than the standard OPE methods. In fact, this estimator does not depend on the shift between target and behaviour policies directly. Instead, it depends on the shift between the marginals $\pbeh(y)$ and $\ptar(y)$.

\myparagraph{Estimation of $w(y)$} When the weights $w(y)$ are known exactly, the MR estimator is unbiased and consistent. However, in practice the weights $w(y)$ are often not known and must be estimated using the logged data $\D$. Here, we outline an efficient way to estimate $w(y)$ by first representing it as a conditional expectation, which can subsequently be expressed as the solution to a regression problem.
\begin{lemma}\label{lemma:weights-est}
Let $w(y)=\frac{\ptar(y)}{\pbeh(y)}$ and $\rho(a, x) = \frac{\tar(a\mid x)}{\beh(a\mid x)}$, then $w(y) = \Ebeh\left[ \rho(A, X) \mid \,Y=y \right]$, and,
\begin{align}
 w = \arg\min_{f} \, \Ebeh \left[(\rho(A, X)-f(Y))^2\right]. \label{eq:weights-obj-main}
\end{align}
\end{lemma}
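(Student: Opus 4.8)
The plan is to establish the two claims of Lemma~\ref{lemma:weights-est} in sequence. First I would prove the pointwise identity $w(y) = \Ebeh[\rho(A,X)\mid Y=y]$, which is a one-line Bayes-rule computation using the explicit factorisations of the joint densities. Then the variational characterisation \eqref{eq:weights-obj-main} follows from the standard fact that a conditional expectation is the $L^2$-projection of a random variable onto the subspace of (measurable) functions of the conditioning variable.

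For the first identity, I would start from the definition of the conditional expectation under $\pbeh$: for $\pbeh$-almost every $y$ with $\pbeh(y)>0$,
\begin{align*}
\Ebeh[\rho(A,X)\mid Y=y] = \frac{1}{\pbeh(y)}\int_{\Aspace\times\Xspace}\rho(a,x)\,\pbeh(x,a,y)\,\mathrm{d}a\,\mathrm{d}x.
\end{align*}
Substituting the factorisations \eqref{eq:behav-joint-factorisation} and \eqref{eq:tar-joint-factorisation} and using $\rho(a,x)=\tar(a\mid x)/\beh(a\mid x)$, the behaviour-policy factor cancels and the integrand becomes $p(y\mid x,a)\,\tar(a\mid x)\,p(x)=\ptar(x,a,y)$; integrating over $(a,x)$ gives $\ptar(y)$, so the right-hand side equals $\ptar(y)/\pbeh(y)=w(y)$. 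The Support assumption ensures $\rho$ is well defined $\pbeh$-a.s.\ and that $\Ebeh[\rho(A,X)]=\Etar[1]=1<\infty$, so the conditional expectation exists.

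For \eqref{eq:weights-obj-main}, I would invoke the orthogonality decomposition: for any measurable $f$ with $f(Y)\in L^2(\pbeh)$,
\begin{align*}
\Ebeh\!\left[(\rho(A,X)-f(Y))^2\right] = \Ebeh\!\left[(\rho(A,X)-w(Y))^2\right] + \Ebeh\!\left[(w(Y)-f(Y))^2\right],
\end{align*}
since the cross term $2\,\Ebeh[(\rho(A,X)-w(Y))(w(Y)-f(Y))]$ vanishes: $w(Y)-f(Y)$ is $\sigma(Y)$-measurable and, by the first part, $\Ebeh[\rho(A,X)-w(Y)\mid Y]=0$, so the tower property kills it. The first term on the right does not depend on $f$, hence the objective is minimised exactly when $\Ebeh[(w(Y)-f(Y))^2]=0$, i.e.\ at $f=w$ ($\pbeh$-a.s.), which is \eqref{eq:weights-obj-main}.

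I expect the only delicate point to be the integrability bookkeeping rather than any real obstacle: the regression characterisation is meaningful precisely when $\rho(A,X)\in L^2(\pbeh)$ (equivalently, the $\chi^2$-divergence between the target and behaviour joint distributions is finite), in which case conditional Jensen gives $\Ebeh[w(Y)^2]\le\Ebeh[\rho(A,X)^2]<\infty$ so all terms in the decomposition above are finite; when $\rho(A,X)\notin L^2(\pbeh)$ the objective in \eqref{eq:weights-obj-main} is $+\infty$ for every $f(Y)\in L^2(\pbeh)$ and the statement is vacuous. I would simply record this finiteness condition explicitly, since it is mild and already implicit in using MR for variance reduction.
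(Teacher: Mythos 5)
Your proof is correct and follows essentially the same route as the paper: the same density-ratio computation (multiply and divide by $\pbeh(x,a\mid y)$, cancel the behaviour-policy factor, integrate out $(a,x)$) gives $w(y)=\Ebeh[\rho(A,X)\mid Y=y]$, and the regression characterisation is then the standard $L^2$-projection property of conditional expectation, which the paper simply invokes while you spell out the orthogonality decomposition and the (mild) integrability bookkeeping.
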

Lemma \ref{lemma:weights-est} allows us to approximate $w(y)$ using a parametric family $\{f_\phi: \mathbb{R}\rightarrow \mathbb{R} \mid \phi \in \Phi\}$ (e.g.\ neural networks) and defining $\hat{w}(y)\coloneqq f_{\phi^*}(y)$, where $\phi^*$ solves the regression problem in Eq. \eqref{eq:weights-obj-main}.

Note that MR can also be estimated alternatively by directly estimating $h(y) \coloneqq w(y)\,y$ 
using a similar regression technique as above and computing $\thetamr = 1/n \sum_{i=1}^n h(y_i)$. We include additional details along with empirical comparisons in Appendix \ref{sec:alt-estimation-method}. 

\subsection{Theoretical analysis}\label{subsec:comparison}
Recall that the traditional OPE estimators like IPW and DR use importance weights which account for the the shift in the joint distributions of $(X, A, Y)$. In this section, we prove that by considering only the shift in the marginal distribution of $Y$ instead, MR achieves better variance properties than these estimators.
Our analysis in this subsection assumes that the ratios $\rho(a, x)$ and $w(y)$ are known exactly. Since the OPE estimators considered are unbiased in this case, 
our analysis of variance is analogous to that of the mean squared error (MSE) here.
We address the case where the weights are not known exactly in Section \ref{subsec:weight-estimation-error}.
First, we make precise our intuition that the shift in the joint distribution of $(X, A, Y)$ is `greater' than the shift in the marginal distribution of outcomes $Y$. 
We formalise this using the notion of $f$-divergences.
\begin{proposition}\label{tv_prop}
Let $f:[0, \infty) \rightarrow \mathbb{R}$ be a convex function with $f(1)=0$, and $\textup{D}_f(P || Q)$ denotes the $f$-divergence between distributions $P$ and $Q$. Then,
\[
\textup{D}_f\left(\ptar(x,a,y)\,||\, \pbeh(x,a,y)\right) \geq \textup{D}_f\left(\ptar(y)\,||\, \pbeh(y)\right).
\]
\end{proposition}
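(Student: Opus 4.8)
The plan is to recognize this as an instance of the data-processing inequality for $f$-divergences, applied to the marginalization map $(x,a,y)\mapsto y$. First I would note that the marginalization onto the $Y$-coordinate is a particular (deterministic) Markov kernel: if $(X,A,Y)$ is distributed according to a joint law $\mu$ on $\Xspace\times\Aspace\times\Yspace$, then the push-forward under the projection $\Pi(x,a,y)=y$ has density equal to the $Y$-marginal of $\mu$. Since $f$-divergences are non-increasing under the application of any Markov kernel to both arguments, applying $\Pi$ to both $\ptar(x,a,y)$ and $\pbeh(x,a,y)$ yields exactly $\textup{D}_f(\ptar(y)\,||\,\pbeh(y)) \le \textup{D}_f(\ptar(x,a,y)\,||\,\pbeh(x,a,y))$.

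To keep the argument self-contained I would give the short direct proof rather than citing the general DPI. Writing $q(x,a,y) \coloneqq \pbeh(x,a,y)$ and $p(x,a,y) \coloneqq \ptar(x,a,y)$, and using the factorizations \eqref{eq:behav-joint-factorisation}, \eqref{eq:tar-joint-factorisation}, one has by definition
\[
\textup{D}_f\!\left(p\,||\,q\right) = \int f\!\left(\frac{p(x,a,y)}{q(x,a,y)}\right) q(x,a,y)\,\mathrm{d}x\,\mathrm{d}a\,\mathrm{d}y.
\]
Conditioning on $Y=y$, write $q(x,a,y) = q(x,a\mid y)\,\pbeh(y)$ and similarly for $p$, so that the inner integral over $(x,a)$ at fixed $y$ is $\pbeh(y)\int f\!\big(\tfrac{p(x,a\mid y)}{q(x,a\mid y)}\cdot \tfrac{\ptar(y)}{\pbeh(y)}\big) q(x,a\mid y)\,\mathrm{d}x\,\mathrm{d}a$. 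Now apply Jensen's inequality to the convex function $f$, with the probability measure $q(x,a\mid y)\,\mathrm{d}x\,\mathrm{d}a$: the average of $\tfrac{p(x,a\mid y)}{q(x,a\mid y)}\cdot\tfrac{\ptar(y)}{\pbeh(y)}$ against $q(\cdot\mid y)$ equals $\tfrac{\ptar(y)}{\pbeh(y)}\int p(x,a\mid y)\,\mathrm{d}x\,\mathrm{d}a = \tfrac{\ptar(y)}{\pbeh(y)}$, so the inner integral is bounded below by $\pbeh(y)\, f\!\big(\tfrac{\ptar(y)}{\pbeh(y)}\big)$. Integrating this bound over $y$ gives precisely $\textup{D}_f(\ptar(y)\,||\,\pbeh(y))$, completing the proof.

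The main obstacle — really the only subtlety — is handling the support/null-set issues cleanly: the ratios $p/q$ must be interpreted on the support of $q$, and one should check that Assumption 2.3 (support inclusion, $\tar(a\mid x)>0 \implies \beh(a\mid x)>0$, hence $p \ll q$) guarantees the divergences are well-defined and that the conditional densities $q(x,a\mid y)$ exist for $\pbeh$-a.e.\ $y$. A minor point is to state the convention for $f$ at $0$ or $\infty$ if one wants the fully general statement, but under the absolute-continuity assumption this does not arise. I would also remark that equality holds iff the density ratio $p/q$ is $\pbeh$-a.s.\ a function of $y$ alone, which connects naturally to Lemma 3.1 (where $w(y) = \Ebeh[\rho(A,X)\mid Y=y]$ is exactly this conditional expectation).
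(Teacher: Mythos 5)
Your proof is correct and is essentially the paper's own argument: both reduce to Jensen's inequality applied conditionally on $Y$, using that the conditional average of the joint density ratio given $Y$ equals $\ptar(Y)/\pbeh(Y)$ (the paper phrases this via $\Ebeh[f(\tar(A\mid X)/\beh(A\mid X))\mid Y]$ after simplifying the joint ratio to the policy ratio, while you keep the decomposition $\tfrac{p(x,a\mid y)}{q(x,a\mid y)}\cdot\tfrac{\ptar(y)}{\pbeh(y)}$, which is the same computation). Your framing as the data-processing inequality and the equality-condition remark are accurate additions but do not change the route.
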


\paragraph{Intuition}
Proposition \ref{tv_prop} shows that the shift in the joint distributions is at least as `large' as the shift in the marginals of the outcome $Y$. Traditional OPE estimators, therefore take into consideration more of a distribution shift than needed, and consequently lead to inefficient estimators. In contrast, the MR estimator mitigates this problem by only considering the shift in the marginal distributions of outcomes resulting from the policy shift. 
This provides further intuition on why the MR estimator has lower variance compared to existing methods. 
%
%
%
%

%
%
%

\begin{proposition}[Variance comparison with IPW estimator]\label{prop:var_mr}
When the weights $\rho(a, x)$ and $w(y)$ are known exactly, we have that $\Vbeh[\thetamr] \leq \Vbeh[\thetaipw]$. In particular,
\begin{align*}
    \Vbeh[\thetaipw] - \Vbeh[\thetamr]
    = \frac{1}{n} \Ebeh \left[ \Vbeh\left[ \rho(A, X) \mid Y \right]\, Y^2 \right] \geq 0.
\end{align*}
\end{proposition}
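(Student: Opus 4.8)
The plan is to reduce the claim to a single-observation statement using the i.i.d.\ structure of $\D$, and then to recognize the difference of second moments as a conditional variance via the tower property. Since $\D$ consists of i.i.d.\ draws from $\pbeh$, both estimators are sample averages and hence $\Vbeh[\thetaipw] = \frac{1}{n}\Vbeh[\rho(A,X)\,Y]$ and $\Vbeh[\thetamr] = \frac{1}{n}\Vbeh[w(Y)\,Y]$. Because the behaviour policy is known, both estimators are unbiased for $\Etar[Y]$ (this is exactly the rewriting $\Etar[Y] = \Ebeh[Y\rho(A,X)] = \Ebeh[Yw(Y)]$ used to define them), so the two variances share the same ``$-(\text{mean})^2$'' term. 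Subtracting,
\begin{align*}
\Vbeh[\thetaipw] - \Vbeh[\thetamr] = \frac{1}{n}\left(\Ebeh\!\left[\rho(A,X)^2\,Y^2\right] - \Ebeh\!\left[w(Y)^2\,Y^2\right]\right).
\end{align*}

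Next I would invoke Lemma \ref{lemma:weights-est}, which gives $w(Y) = \Ebeh[\rho(A,X)\mid Y]$. By the tower property, conditioning on $Y$ (and noting $Y^2$ is $\sigma(Y)$-measurable),
\begin{align*}
\Ebeh\!\left[\rho(A,X)^2\,Y^2\right] = \Ebeh\!\left[Y^2\,\Ebeh\!\left[\rho(A,X)^2 \mid Y\right]\right],
\end{align*}
and therefore
\begin{align*}
\Ebeh\!\left[\rho(A,X)^2\,Y^2\right] - \Ebeh\!\left[w(Y)^2\,Y^2\right] = \Ebeh\!\left[Y^2\left(\Ebeh\!\left[\rho(A,X)^2\mid Y\right] - \left(\Ebeh\!\left[\rho(A,X)\mid Y\right]\right)^2\right)\right] = \Ebeh\!\left[Y^2\,\Vbeh\!\left[\rho(A,X)\mid Y\right]\right].
\end{align*}
Combining with the previous display yields the stated identity, and nonnegativity is immediate since $Y^2 \ge 0$ and $\Vbeh[\rho(A,X)\mid Y] \ge 0$ pointwise, so the expectation is $\ge 0$; hence $\Vbeh[\thetamr] \le \Vbeh[\thetaipw]$.

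This argument is essentially a targeted instance of the law of total variance applied to the random variable $\rho(A,X)\,Y$ conditioned on $Y$, with the cross terms handled by the fact that $w(Y)\,Y$ is exactly the conditional expectation $\Ebeh[\rho(A,X)\,Y \mid Y]$. I do not anticipate a genuine obstacle here; the only things to be careful about are (i) citing Lemma \ref{lemma:weights-est} for the identification $w(Y)=\Ebeh[\rho(A,X)\mid Y]$ rather than re-deriving it, and (ii) the implicit integrability assumption that $\Ebeh[\rho(A,X)^2\,Y^2] < \infty$, so that all conditional moments and the variance decomposition are well defined; this is the standing assumption under which the variances in the statement are meaningful at all.
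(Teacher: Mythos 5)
Your proposal is correct and follows essentially the same route as the paper's proof: both use unbiasedness of the two estimators to cancel the mean terms, reduce to the difference of single-sample second moments via the i.i.d.\ structure, and then apply the tower property together with $w(Y)=\Ebeh[\rho(A,X)\mid Y]$ (Lemma \ref{lemma:weights-est}) to identify the difference as $\frac{1}{n}\Ebeh[\Vbeh[\rho(A,X)\mid Y]\,Y^2]$. Your added remarks on integrability and the law-of-total-variance interpretation are fine but do not change the argument.
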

\myparagraph{Intuition}
Proposition \ref{prop:var_mr} shows that the variance of MR estimator is smaller than that of the IPW estimator when the weights are known exactly. 
Moreover, the proposition also shows that the difference between the two variances will increases as the variance $\Vbeh\left[ \rho(A, X) \mid Y \right]$ increases. This variance is likely to be large when the policy shift between $\beh$ and $\tar$ is large, or when the dimensions of contexts $X$ and/or the actions $A$ is large, and therefore in these cases the MR estimator will perform increasingly better than the IPW estimator.
A similar phenomenon occurs for DR as we show next, even though in this case the variance of MR is not in general smaller than that of DR. 

\begin{proposition}[Variance comparison with DR estimator]\label{prop:var_dr}
When the weights $\rho(a, x)$ and $w(y)$ are known exactly and $\mu(A, X) \coloneqq \E[Y\mid X, A]$, we have that,
\begin{align*}
     \Vbeh[\thetadr] - \Vbeh[\thetamr]
    \geq \frac{1}{n} \Ebeh \left[ \Vbeh\left[\rho(A, X)\,Y \mid Y \right] -  \Vbeh\left[\rho(A, X)\,\mu(A, X) \mid X \right] \right].
\end{align*}
\end{proposition}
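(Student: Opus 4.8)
The plan is to start from the general variance decomposition for sums of i.i.d. terms and peel off the conditional variance contributions from each estimator in a way that allows direct comparison. Since both $\thetadr$ and $\thetamr$ are unbiased (weights known exactly) and are averages of $n$ i.i.d. terms, it suffices to compare the per-sample variances $\Vbeh[\rho(A,X)(Y-\mu(A,X)) + \eta(A,X)]$ and $\Vbeh[w(Y)Y]$, where $\eta(A,X) \coloneqq \sum_{a'} \mu(a',X)\tar(a'\mid X)$, and then divide by $n$ at the end.

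First I would apply the law of total variance to $\thetamr$'s per-sample term by conditioning on $Y$: since $w(Y)Y$ is $\sigma(Y)$-measurable, $\Vbeh[w(Y)Y] = \Vbeh[\Ebeh[w(Y)Y\mid Y]] = \Vbeh[w(Y)Y]$ trivially, so instead the useful move is to write $\Vbeh[w(Y)Y]$ via the tower property against the \emph{IPW} term: using Lemma \ref{lemma:weights-est}, $w(Y) = \Ebeh[\rho(A,X)\mid Y]$, hence $w(Y)Y = \Ebeh[\rho(A,X)Y \mid Y]$, so $\Vbeh[w(Y)Y] = \Vbeh[\rho(A,X)Y] - \Ebeh[\Vbeh[\rho(A,X)Y\mid Y]]$ by the law of total variance. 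Next I would expand $\Vbeh$ of the DR per-sample term. Conditioning on $(A,X)$ and using $\Ebeh[Y-\mu(A,X)\mid A,X]=0$, the cross term between $\rho(A,X)(Y-\mu(A,X))$ and the $\sigma(A,X)$-measurable part vanishes in the conditional expectation, giving $\Vbeh[\text{DR term}] = \Ebeh[\rho(A,X)^2\Vbeh[Y\mid A,X]] + \Vbeh[\rho(A,X)\mu(A,X) - \eta(A,X) + \eta(A,X)]$; after noting $\Ebeh[\rho(A,X)\mu(A,X)\mid X] = \eta(A,X)$ (a standard importance-sampling identity, since summing $\rho\cdot\mu$ over the behaviour action distribution recovers the target-weighted mean), one finds $\Vbeh[\text{DR term}] = \Ebeh[\rho(A,X)^2\Vbeh[Y\mid A,X]] + \Vbeh[\eta(A,X)] + \Ebeh[\Vbeh[\rho(A,X)\mu(A,X)\mid X]]$. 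Meanwhile $\Vbeh[\rho(A,X)Y]$, expanded by conditioning on $(A,X)$, equals $\Ebeh[\rho(A,X)^2\Vbeh[Y\mid A,X]] + \Vbeh[\rho(A,X)\mu(A,X)]$, and the latter splits as $\Vbeh[\eta(A,X)] + \Ebeh[\Vbeh[\rho(A,X)\mu(A,X)\mid X]]$. Therefore $\Vbeh[\rho(A,X)Y] = \Vbeh[\text{DR term}]$, and subtracting yields $\Vbeh[\thetadr] - \Vbeh[\thetamr] = \frac{1}{n}\Ebeh[\Vbeh[\rho(A,X)Y\mid Y]]$.

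At this point I should reconcile with the stated bound, which has an extra negative term $-\Vbeh[\rho(A,X)\mu(A,X)\mid X]$ inside the expectation; this suggests the intended DR variant uses an outcome-model estimate $\hat\mu \neq \mu$ (or that the bound is a deliberately loose lower bound absorbing slack), so I would either (i) redo the expansion of the DR term keeping a generic control variate and track where the $\Vbeh[\rho\mu\mid X]$ term enters with the opposite sign, or (ii) simply observe that $\Ebeh[\Vbeh[\rho(A,X)Y\mid Y]] \geq \Ebeh[\Vbeh[\rho(A,X)Y\mid Y] - \Vbeh[\rho(A,X)\mu(A,X)\mid X]]$ and present the weaker inequality as stated. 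The main obstacle is the bookkeeping around the $\eta(A,X)$ control-variate term and the identity $\Ebeh[\rho\mu\mid X] = \eta$: getting the conditioning sets right ($Y$ versus $(A,X)$ versus $X$) in each application of the law of total variance is where sign errors creep in, and matching the precise form of the stated bound may require being careful about whether the theorem intends $\mu$ or an approximation $\hat\mu$ in the DR estimator.
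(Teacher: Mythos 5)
Your overall strategy (compare per-sample variances, handle the MR term via $w(Y)=\Ebeh[\rho(A,X)\mid Y]$ and the law of total variance, and expand the DR term by conditioning on $(A,X)$ and then on $X$) is the same as the paper's, but your expansion of the DR term contains an algebra error that breaks the argument. The per-sample DR term is $\rho(A,X)(Y-\hat{\mu}(A,X))+\eta(X)$, where $\eta(X)=\sum_{a'}\hat{\mu}(a',X)\,\tar(a'\mid X)$ depends on $X$ only, so its conditional expectation given $(A,X)$ is $\rho(A,X)(\mu(A,X)-\hat{\mu}(A,X))+\eta(X)$, which equals $\eta(X)=\Ebeh[\rho(A,X)\hat{\mu}(A,X)\mid X]$ when $\hat{\mu}=\mu$ --- it is \emph{not} $\rho(A,X)\mu(A,X)$, as your expression ``$\Vbeh[\rho\mu-\eta+\eta]$'' effectively asserts. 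Consequently $\Vbeh[\text{DR term}]=\Ebeh[\rho^2\,\var[Y\mid X,A]]+\Vbeh[\Ebeh[\rho\mu\mid X]]$ (plus the non-negative term $\Ebeh[\Vbeh[\rho(\mu-\hat{\mu})\mid X]]$ for a generic outcome model), which is \emph{smaller} than $\Vbeh[\rho Y]$ by exactly $\Ebeh[\Vbeh[\rho\mu\mid X]]$. Your conclusion $\Vbeh[\text{DR term}]=\Vbeh[\rho Y]$ would say the control variate gives no variance reduction at all, and it leads you to the false identity $\Vbeh[\thetadr]-\Vbeh[\thetamr]=\tfrac{1}{n}\Ebeh[\Vbeh[\rho(A,X)Y\mid Y]]$.

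Because of this, your reconciliation (ii) does not work: it derives the stated bound from an identity that is wrong by precisely the term $\Ebeh[\Vbeh[\rho\mu\mid X]]$ you are treating as disposable slack. That term is not slack --- it is the genuine control-variate reduction of DR relative to IPW, and when $\hat{\mu}=\mu$ the stated inequality is in fact an equality, so any derivation claiming a strictly larger difference is incorrect. Your option (i) is the right fix and is what the paper does: keep $\hat{\mu}$ generic, apply the law of total variance twice (first on $(X,A)$, then on $X$), use $\Ebeh[\rho\hat{\mu}\mid X]=\eta(X)$, and drop the non-negative term $\Ebeh[\Vbeh[\rho(\mu-\hat{\mu})\mid X]]$ to obtain $n\,\Vbeh[\thetadr]\geq \Ebeh[\rho^2\,\var[Y\mid X,A]]+\Vbeh[\Ebeh[\rho\mu\mid X]]=\Vbeh[\rho Y]-\Ebeh[\Vbeh[\rho\mu\mid X]]$; combining this with Proposition \ref{prop:var_mr} (your MR step, which is correct) yields the claim. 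As written, your proposal does not carry out this corrected expansion, so the proof has a genuine gap.
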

\paragraph{Intuition}
Proposition \ref{prop:var_dr} shows that if the conditional variance $\Vbeh\left[ \rho(A, X)\,Y \mid Y \right]$ is greater than $\Vbeh\left[ \rho(A, X)\,\mu(A, X) \mid X \right]$ on average, the variance of the MR estimator will be less than that of the DR estimator. 
Intuitively, this will occur when the dimension of context space $\Xspace$ is high because in this case the conditional variance over $X$ and $A$, $\Vbeh\left[\rho(A, X)\,Y \mid Y \right]$ is likely to be greater than the conditional variance over $A$, $\Vbeh\left[ \rho(A, X)\,\mu(A, X) \mid X \right]$. Our empirical results in Appendix \ref{subsec:mips-empirical} are consistent with this intuition.
Additionally, we also provide theoretical comparisons with other extensions of DR, such as Switch-DR \citep{wang2017optimal} and DR with Optimistic Shrinkage (DRos) \citep{su2020doubly} in Appendix \ref{sec:dr-extensions}, and show that a similar intuition applies for these results. 
\flag{We emphasise that the well known results in \cite{wang2017optimal} which show that IPW and DR estimators achieve the optimal \emph{worst case} variance (where the worst case is taken over a class of possible outcome distributions $Y\mid X, A$) are not at odds with our results presented here (as the distribution of $Y\mid X, A$ is fixed in our setting).}

\subsubsection{Comparison with Marginalised Inverse Propensity Score (MIPS) \citep{saito2022off}}\label{subsec:mips-comparison}
In this section, we compare MR against the recently proposed Marginalised Inverse Propensity Score (MIPS) estimator \citep{saito2022off}, which uses a marginalisation technique to reduce variance and provides a robust OPE estimate specifically in contextual bandits with large action spaces. We prove that the MR estimator achieves lower variance than the MIPS estimator and doesn't require new assumptions.

\myparagraph{MIPS estimator}
As we mentioned earlier, the variance of the IPW estimator may be high when the action $A$ is high dimensional. To mitigate this, the MIPS estimator assumes the existence of a (potentially lower dimensional) action embedding $E$, which summarises all `relevant' information about the action $A$. Formally, this assumption can be written as follows: 
\begin{assumption}\label{assum:indep-mips}
    The action $A$ has no direct effect on the outcome $Y$, i.e., 
    $$Y \indep A \mid X, E.$$
\end{assumption}
For example, in the setting of a recommendation system where $A$ corresponds to the items recommended, $E$ may correspond to the item categories. Assumption \ref{assum:indep-mips} then intuitively means that item category $E$ encodes all relevant information about the item $A$ which determines the outcome $Y$. Assuming that such action embedding $E$ exists, \cite{saito2022off} prove that the MIPS estimator $\hat{\theta}_{\textup{MIPS}}$, defined as
\[
\hat{\theta}_{\textup{MIPS}} \coloneqq \frac{1}{n}\sum_{i=1}^n \frac{\ptar(e_i, x_i)}{\pbeh(e_i, x_i)}\, y_i = \frac{1}{n}\sum_{i=1}^n \frac{\ptar(e_i\mid x_i)}{\pbeh(e_i \mid x_i)}\, y_i,
\]
provides an unbiased estimator of target policy value $\Etar[Y]$. Moreover,
$\Vbeh[\hat{\theta}_{\textup{MIPS}}] \leq \Vbeh[\thetaipw]$.

\begin{figure}[ht]
\centering
\begin{tikzpicture}

\node[circle,draw, minimum size=1.2cm] (R0) at (0,0) {\begin{small}$(X, A)$\end{small}
};
\node[circle,draw, minimum size=1.2cm] (R1) at (2,0) {\begin{small}$(X, E)$\end{small}};
\node[circle,draw, minimum size=1.2cm] (Y) at (4,0) {$Y$};

\path[->, thick] (R0) edge (R1);
\path[->, thick] (R1) edge (Y);

\end{tikzpicture}
\caption{Bayesian network corresponding to Assumption \ref{assum:indep-mips}.}
\label{fig:embedding_mips}
\vspace{-0.2cm}
\end{figure}

\myparagraph{Intuition}
The context-embedding pair $(X, E)$ can be seen as a representation of the context-action pair $(X, A)$ which contains less `redundant information' regarding the outcome $Y$. Intuitively, the MIPS estimator, which only considers the shift in the distribution of $(X, E)$ is therefore more efficient than the IPW estimator (which considers the shift in the distribution of $(X, A)$ instead). 

\myparagraph{MR achieves lower variance than MIPS}
Given the intuition above, we should achieve greater variance reduction as the amount of redundant information in the representation $(X, E)$ decreases. We formalise this in Appendix \ref{app:gmips} and show that the variance of MIPS estimator decreases as the representation gets closer to $Y$ in terms of information content. As a result, we achieve the greatest variance reduction by considering the marginal shift in the outcome $Y$ itself (as in MR) rather than the shift in the representation $(X, E)$ (as in MIPS). The following result formalizes this finding. 
\begin{theorem}\label{prop:mips_main_text}
    When the weights $w(y)$, $\frac{\ptar(e, x)}{\pbeh(e, x)}$ and $\rho(a, x)$ are known exactly, then under Assumption \ref{assum:indep-mips}, 
    \begin{align*}
        \Ebeh[\thetamr] = \Ebeh[\hat{\theta}_{\textup{MIPS}}] = \Etar[Y], \quad \textup{and} \quad \Vbeh[\thetamr] \leq \Vbeh[\hat{\theta}_{\textup{MIPS}}] \leq \Vbeh[\thetaipw].
    \end{align*}
\end{theorem}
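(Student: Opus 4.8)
The plan is to establish the two equalities first, then the variance chain, using as the linchpin an identity that writes the MR weight as a conditional expectation of the MIPS weight given $Y$; the variance half then reduces to the same law-of-total-variance argument already used for Proposition~\ref{prop:var_mr}. For unbiasedness of $\thetamr$, I would invoke Lemma~\ref{lemma:weights-est}, which gives $w(y)=\Ebeh[\rho(A,X)\mid Y=y]$, and conclude by the tower rule that $\Ebeh[w(Y)\,Y]=\Ebeh[\Ebeh[\rho(A,X)\mid Y]\,Y]=\Ebeh[\rho(A,X)\,Y]=\Etar[Y]$. For $\hat\theta_{\textup{MIPS}}$ I would recall that in the MIPS framework the embedding channel $p(e\mid x,a)$ is a fixed property of the environment (policy-independent) and that Assumption~\ref{assum:indep-mips} forces $p(y\mid x,a,e)=p(y\mid x,e)$, so that $\pbeh(x,e,y)=p(y\mid x,e)\,\pbeh(x,e)$ and $\ptar(x,e,y)=p(y\mid x,e)\,\ptar(x,e)$ with the \emph{same} kernel $p(y\mid x,e)$; a change of measure against the ratio $\ptar(x,e)/\pbeh(x,e)$ then yields $\Ebeh[\tfrac{\ptar(E,X)}{\pbeh(E,X)}\,Y]=\Etar[Y]$. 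This is exactly the unbiasedness statement of \citet{saito2022off}, which I would cite rather than reprove, giving $\Ebeh[\thetamr]=\Ebeh[\hat\theta_{\textup{MIPS}}]=\Etar[Y]$.

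The key step is the identity $w(Y)=\Ebeh[W\mid Y]$, where $W\coloneqq\tfrac{\ptar(E,X)}{\pbeh(E,X)}$ denotes the MIPS weight. I would verify it by direct computation:
\[
\Ebeh[W\mid Y=y] = \int \frac{\ptar(x,e)}{\pbeh(x,e)}\,\frac{\pbeh(x,e,y)}{\pbeh(y)}\,\dee x\,\dee e = \frac{1}{\pbeh(y)}\int p(y\mid x,e)\,\ptar(x,e)\,\dee x\,\dee e = \frac{\ptar(y)}{\pbeh(y)} = w(y),
\]
where the middle equality uses the factorization $\pbeh(x,e,y)=p(y\mid x,e)\,\pbeh(x,e)$ that Assumption~\ref{assum:indep-mips} supplies. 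This is precisely where the assumption is essential: it is what guarantees that the conditional law of $Y$ given $(X,E)$ does not depend on the policy, so that the change of measure collapses onto the outcome marginal. I expect this to be the main obstacle — not technically deep, but requiring care to state the policy-independence of $p(e\mid x,a)$ and $p(y\mid x,e)$ and to track exactly where Assumption~\ref{assum:indep-mips} is consumed; everything downstream is routine.

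For the variance inequalities, both estimators are empirical means of i.i.d.\ terms, so $n\Vbeh[\thetamr]=\Vbeh[w(Y)Y]$ and $n\Vbeh[\hat\theta_{\textup{MIPS}}]=\Vbeh[WY]$. Since $\Ebeh[WY\mid Y]=Y\,\Ebeh[W\mid Y]=w(Y)Y$ by the key identity, the law of total variance gives
\[
\Vbeh[WY] - \Vbeh[w(Y)Y] = \Ebeh\!\big[\Vbeh[WY\mid Y]\big] = \Ebeh\!\big[Y^2\,\Vbeh[W\mid Y]\big] \ge 0,
\]
which is the same move as in the proof of Proposition~\ref{prop:var_mr} with $W$ in place of $\rho(A,X)$, and yields $\Vbeh[\thetamr]\le\Vbeh[\hat\theta_{\textup{MIPS}}]$. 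The remaining inequality $\Vbeh[\hat\theta_{\textup{MIPS}}]\le\Vbeh[\thetaipw]$ is the MIPS variance-reduction result of \citet{saito2022off} recalled just above the theorem, so I would cite it, and concatenating the two bounds completes the chain $\Vbeh[\thetamr]\le\Vbeh[\hat\theta_{\textup{MIPS}}]\le\Vbeh[\thetaipw]$.
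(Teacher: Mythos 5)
Your proposal is correct and follows essentially the same route as the paper: the paper derives Theorem \ref{prop:mips_main_text} as a corollary of the general G-MIPS chain result (Proposition \ref{prop:mips_generalised} in Appendix \ref{app:gmips}), whose inductive step is exactly your key identity $w(Y)=\Ebeh\bigl[\tfrac{\ptar(E,X)}{\pbeh(E,X)}\mid Y\bigr]$ (valid because $p(y\mid x,e)$ is policy-independent under Assumption \ref{assum:indep-mips}) combined with the law of total variance. The only cosmetic difference is that you cite \cite{saito2022off} for the unbiasedness of MIPS and for $\Vbeh[\hat{\theta}_{\textup{MIPS}}]\leq\Vbeh[\thetaipw]$, whereas the paper re-derives that half within its general chain argument.
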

This analysis provides a link between the MR and MIPS estimators in the framework of contextual bandits, and shows that the MR estimator achieves lower variance than MIPS estimator while not requiring any additional assumptions (e.g.\ Assumption \ref{assum:indep-mips} as in MIPS). We also verify this empirically in Section \ref{sec:exp-synth} by reproducing the experimental setup in \cite{saito2022off} along with the MR baseline.

\subsubsection{Weight estimation error}\label{subsec:weight-estimation-error}
Our analysis so far assumes prior knowledge of the behavior policy $\beh$ and the marginal ratios $w(y)$. However, in practice, both quantities are often unknown and must be estimated from data. To this end, we assume access to an additional training dataset $\Dtr = \{(x^\tr_i, a^\tr_i, y^\tr_i)\}_{i=1}^m$ (for weight estimation), in addition to the evaluation dataset $\D = \{(x_i, a_i, y_i)\}_{i=1}^n$ (for computing the OPE estimate). 
The estimation of $\hat{w}(y)$ involves a two-step process that exclusively utilizes data from $\Dtr$:
\begin{enumerate}[label=(\roman*)]
    \item First, we estimate the policy ratio $\hat{\rho}(a, x) \approx \frac{\tar(a | x)}{\beh(a | x)}$. This can be achieved by estimating the behaviour policy $\hatbeh$, and defining $\hat{\rho}(a, x)\coloneqq \frac{\tar(a\mid x)}{\hatbeh(a\mid x)}$. Alternatively, $\hat{\rho}(a, x)$ can also be estimated directly by using density ratio estimation techniques as in \cite{sondhi2020balanced}.
    \item Secondly, we estimate the weights $\hat{w}(y)$ using Eq. \eqref{eq:weights-obj-main} with $\hat{\rho}$ instead of $\rho$.
\end{enumerate}

In practice, one may consider splitting $\Dtr$ for each estimation step outlined above. Moreover,
each approximation step may introduce bias and therefore, the MR estimator may have two sources of bias.
While classical OPE methods like IPW and DR also suffer from bias because of $\hat{\rho}$ estimation, the estimation of $\hat{w}(y)$ is specific to MR. However, we show below
that given any policy ratio estimate $\hat{\rho}$, if $\hat{w}(y)$ approximates $\Ebeh[\hat{\rho}(A, X)\mid Y=y]$ `well enough' (i.e., the estimation step (ii) shown above is `accurate enough'), 
then MR achieves a lower variance than IPW and incurs little extra bias.

\begin{proposition}\label{prop:bias-and-var-main}
Suppose that the IPW and MR estimators are defined as,
\[
\approxipw \coloneqq \frac{1}{n}\sum_{i=1}^n\hat{\rho}(a_i, x_i)\, y_i, \quad \textup{and }\quad \approxmr \coloneqq \frac{1}{n}\sum_{i=1}^n\hat{w}(y_i)\, y_i,
\]
and define the approximation error as $\epsilon \coloneqq \hat{w}(Y) - \tilde{w}(Y)$, where $\tilde{w}(Y) \coloneqq \Ebeh[\hat{\rho}(A, X)\mid Y]$. Then we have that, $\textup{Bias}(\approxmr) - \textup{Bias}(\approxipw) = \Ebeh[\epsilon\,Y]$. Moreover,
\begin{small}
\begin{align}
    \Vbeh[\approxipw] - \Vbeh[\approxmr]
    &= \frac{1}{n}(\underbrace{\Ebeh[\Vbeh[\hat{\rho}(A, X)\,Y\mid Y]]}_{\geq 0} - \Vbeh[\epsilon\,Y] - 2\,\textup{Cov}(\tilde{w}(Y)\,Y, \epsilon\,Y)). \label{eq:var-difference-approximate-weights}
\end{align}
\end{small}
\end{proposition}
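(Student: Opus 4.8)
The plan is to condition throughout on the training dataset $\Dtr$, so that $\hat{\rho}$ and $\hat{w}$ may be treated as fixed deterministic functions and the evaluation samples $(x_i,a_i,y_i)$ in $\D$ are i.i.d.\ draws from $\pbeh$. Under this conditioning both $\approxipw$ and $\approxmr$ are empirical means of i.i.d.\ terms, so $\Ebeh[\approxipw] = \Ebeh[\hat{\rho}(A,X)\,Y]$, $\Ebeh[\approxmr] = \Ebeh[\hat{w}(Y)\,Y]$, and $\Vbeh[\approxipw] = \frac{1}{n}\Vbeh[\hat{\rho}(A,X)\,Y]$, $\Vbeh[\approxmr] = \frac{1}{n}\Vbeh[\hat{w}(Y)\,Y]$. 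This reduces everything to a comparison of the variances (and means) of the two single-sample summands. This is the approximate-weights analogue of Proposition \ref{prop:var_mr}.

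For the bias claim I would use the tower property: since $\tilde{w}(Y) = \Ebeh[\hat{\rho}(A,X)\mid Y]$, we have $\Ebeh[\hat{\rho}(A,X)\,Y] = \Ebeh[\,Y\,\Ebeh[\hat{\rho}(A,X)\mid Y]\,] = \Ebeh[\tilde{w}(Y)\,Y]$. Both estimators target $\Etar[Y]$, so subtracting the two biases gives $\textup{Bias}(\approxmr) - \textup{Bias}(\approxipw) = \Ebeh[\hat{w}(Y)\,Y] - \Ebeh[\tilde{w}(Y)\,Y] = \Ebeh[(\hat{w}(Y)-\tilde{w}(Y))\,Y] = \Ebeh[\epsilon\,Y]$, which is the first assertion.

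For the variance difference, write $\hat{w}(Y)\,Y = \tilde{w}(Y)\,Y + \epsilon\,Y$ and expand to get $\Vbeh[\hat{w}(Y)\,Y] = \Vbeh[\tilde{w}(Y)\,Y] + \Vbeh[\epsilon\,Y] + 2\,\textup{Cov}(\tilde{w}(Y)\,Y,\,\epsilon\,Y)$. Next apply the law of total variance to $\hat{\rho}(A,X)\,Y$ while conditioning on $Y$; since $\Ebeh[\hat{\rho}(A,X)\,Y\mid Y] = Y\,\tilde{w}(Y)$, this yields $\Vbeh[\hat{\rho}(A,X)\,Y] = \Ebeh[\Vbeh[\hat{\rho}(A,X)\,Y\mid Y]] + \Vbeh[\tilde{w}(Y)\,Y]$. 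Subtracting the two expressions, the $\Vbeh[\tilde{w}(Y)\,Y]$ terms cancel, and dividing by $n$ gives $\Vbeh[\approxipw] - \Vbeh[\approxmr] = \frac{1}{n}\big(\Ebeh[\Vbeh[\hat{\rho}(A,X)\,Y\mid Y]] - \Vbeh[\epsilon\,Y] - 2\,\textup{Cov}(\tilde{w}(Y)\,Y,\,\epsilon\,Y)\big)$, with $\Vbeh[\hat{\rho}(A,X)\,Y\mid Y]\ge 0$ pointwise so the bracketed first term is non-negative.

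The computation is largely routine once the conditioning structure is fixed; the only mild subtlety — the main obstacle — is bookkeeping the decomposition so that the $\Vbeh[\tilde{w}(Y)\,Y]$ contributions cancel exactly, which is precisely what leaves the cross-covariance term (rather than anything depending on $\tilde{w}$ alone) as the price paid for weight-estimation error. One should also be careful to interpret all expectations and variances conditionally on $\Dtr$ (equivalently, to treat $\hat{\rho},\hat{w}$ as fixed), since unconditioning would introduce additional variance from the randomness of the training set that is not accounted for in the stated identity.
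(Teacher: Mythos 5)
Your proof is correct and follows essentially the same route as the paper: the tower property giving $\Ebeh[\hat{\rho}(A,X)\,Y]=\Ebeh[\tilde{w}(Y)\,Y]$ for the bias identity, and the law of total variance (conditioning on $Y$) for the IPW summand combined with the expansion $\hat{w}(Y)\,Y=\tilde{w}(Y)\,Y+\epsilon\,Y$ for the MR summand, with the $\Vbeh[\tilde{w}(Y)\,Y]$ terms cancelling. Your explicit remark about conditioning on $\Dtr$ is a harmless (and sensible) clarification of what the paper does implicitly.
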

\myparagraph{Intuition} The $\epsilon$ term defined in Proposition \ref{prop:bias-and-var-main} denotes the error of the second approximation step outlined above. 
As a direct consequence of this result, we show in Appendix \ref{sec:wide_nns_weight_estimation} that as the error $\epsilon$ becomes small (specifically as $\Ebeh[\epsilon^2]\rightarrow 0$), the difference between biases of MR and IPW estimator becomes negligible.
Likewise, the terms $\Vbeh[\epsilon\,Y]$ and $\textup{Cov}(\tilde{w}(Y)\,Y, \epsilon\,Y)$ in Eq. \eqref{eq:var-difference-approximate-weights} will also be small and as a result the variance of MR will be lower than that of IPW (as the first term is positive). 

In fact, using recent results regarding the generalisation error of neural networks \citep{lai2023generalization}, we show that when using 2-layer wide neural networks to approximate the weights $\hat{w}(y)$, the estimation error $\epsilon$ declines with increasing training data size $m$. Specifically, under certain regularity assumptions we obtain $\Ebeh[\epsilon^2] = O(m^{-2/3})$. Using this we show that as the training data size $m$ increases, the biases of MR and IPW estimators become roughly equal with a high probability, and
\[
\Vbeh[\approxipw] - \Vbeh[\approxmr] = \frac{1}{n}\,\Ebeh[\Vbeh[\hat{\rho}(A, X)\,Y\mid Y]] + O(m^{-1/3}).
\]
Therefore the variance of MR estimator falls below that of IPW for large enough $m$. The empirical results shown in Appendix \ref{subsec:mips-empirical} are consistent with this result. Due to space constraints, the main technical result has been included in Appendix \ref{sec:wide_nns_weight_estimation}.

\subsection{Application to causal inference}\label{subsec:application-to-causal-inference}
 Beyond contextual bandits, the variance reduction properties of the MR estimator make it highly useful in a wide variety of other applications. Here, we show one such application in the field of causal inference, where MR can be used for the estimation of average treatment effect (ATE) \citep{pearl2009causality} and leads to some desirable properties in comparison to the conventional ATE estimation approaches. Specifically, we illustrate that the MR estimator for ATE utilizes the evaluation data $\D$ more efficiently and achieves lower variance than state-of-the-art ATE estimators and consequently provides more accurate ATE estimates.
To be concrete, the goal in this setting is to estimate ATE, defined as follows:
\[
\ate \coloneqq \E[Y(1)-Y(0)].
\]
Here $Y(a)$ corresponds to the outcome under a deterministic policy $\pi_a(a'\mid x) \coloneqq \ind(a'=a)$. Hence any OPE estimator can be used to estimate $\E[Y(a)]$ (and therefore ATE) by considering target policy $\tar = \pi_a$.
An important distinction between MR and existing approaches (like IPW or DR) is that, when estimating $\E[Y(a)]$, the existing approaches only use datapoints in $\D$ with $A=a$. To see why this is the case, we note that the policy ratios $\frac{\tar(A|X)}{\beh(A|X)} = \frac{\ind(A=a)}{\beh(A|X)}$ are zero when $A\neq a$. In contrast, the MR weights $\frac{\ptar(Y)}{\pbeh(Y)}$ are not necessarily zero for datapoints with $A\neq a$, and therefore the MR estimator uses all evaluation datapoints when estimating $\E[Y(a)]$. 

As such we show that MR applied to ATE estimation leads to a smaller variance than the existing approaches. Moreover, because MR is able to use all datapoints when estimating $\E[Y(a)]$, MR will generally be more accurate than the existing methods especially in the setting where the data is imbalanced, i.e., the number of datapoints with $A=a$ is small for a specific action $a$.
In Appendix \ref{app:causal-inference}, we formalise this variance reduction of the MR ATE estimator compared to IPW and DR estimators, by deriving analogous results to Propositions \ref{prop:var_mr} and \ref{prop:var_dr}. In addition, we also show empirically in Section \ref{subsec:causal-experiments} that the MR ATE estimator outperforms the most commonly used ATE estimators.

%

%

%

%
%
%
%

%
%
%
%
%
%
%
%
%
%
%
%
%
%
%
%
%
%
%

\section{Related work}
Off-policy evaluation is a central problem both in contextual bandits \citep{dudik2014doubly, wang2017optimal, liu2018breaking, farajtabar2018more, su2019continuous, su2020doubly, kallus2021optimal, metelli2021subgaussian, saito2020open} and in RL \citep{thomas2016data, xie2019advances, kallus2020off, liu2020understanding}. 
Existing OPE methodologies can be broadly categorised into Direct Method (DM), Inverse Probability Weighting (IPW), and Doubly Robust (DR). 
While DM typically has a low variance, it suffers from high bias when the reward model is misspecified \citep{voloshin2021empirical}. 
On the other hand, IPW \citep{horvitz1952generalization} and DR \citep{dudik2014doubly, wang2017optimal, su2020doubly} use policy ratios as importance weights when estimating policy value and suffer from high variance as overlap between behaviour and target policies increases or as the action/context space gets larger \citep{sachdeva2020off, saito2022off}. To circumvent this problem, techniques like weight clipping or normalisation \citep{swaminathan2015counterfactual, swaminathan2015the, chaudhuri2019london} are often employed, however, these can often increase bias.

In contrast to these approaches, \cite{saito2022off} propose MIPS, which considers the marginal shift in the distribution of a lower dimensional embedding of the action space. While this approach reduces the variance associated with IPW, we show in Section \ref{subsec:mips-comparison} that the MR estimator achieves a lower variance than MIPS while not requiring any additional assumptions (like Assumption \ref{assum:indep-mips}).

In the context of Reinforcement Learning (RL), various marginalisation techniques of importance weights have been used to propose OPE methodologies.
\cite{liu2018breaking, xie2019advances, kallus2020off} use methods which considers the shift in the marginal distribution of the states, and applies importance weighting with respect to this marginal shift rather than the trajectory distribution. Similarly, \cite{Fujimoto2021deep} use marginalisation for OPE in deep RL, where the goal is to consider the shift in marginal distributions of state and action. Although marginalization is a key trick of these estimators, these techniques do not consider the marginal shift in reward as in MR and are aimed at resolving the curse of horizon, a problem specific to RL. Apart from this, \cite{rowland2020conditional} propose a general framework of OPE based on conditional expectations of importance ratios for variance reduction. While their proposed framework includes reward conditioned importance ratios, this is not the main focus and there is little theoretical and empirical comparison of their proposed methodology with existing state-of-the-art methods like DR. 

Finally we note that the idea of approximating the ratio of intractable marginal densities via leveraging the fact that this ratio can be reformulated as the conditional expectation of a ratio of tractable densities is a standard idea in computational statistics \citep{meng1996simulating} and has been exploited more recently to perform likelihood-free inference \citep{brehmer2020mining}. In particular, while  \cite{meng1996simulating} typically approximates this expectation through Markov chain Monte Carlo, \cite{brehmer2020mining} uses regression instead, however without any theory.

\section{Empirical evaluation}
In this section, we provide empirical evidence to support our theoretical results by investigating the performance of our MR estimator against the current state-of-the-art OPE methods. The code to reproduce our experiments has been made available at: \href{https://github.com/faaizT/MR-OPE}{\textcolor{blue}{github.com/faaizT/MR-OPE}}.

\subsection{Experiments on synthetic data}\label{sec:exp-synth}
For our synthetic data experiment, we reproduce the experimental setup for the synthetic data experiment in \cite{saito2022off} by reusing their code with minor modifications.
Specifically, $\Xspace \subseteq \mathbb{R}^d$, for various values of $d$ as described below. Likewise, the action space $\Aspace = \{0, \dots, n_a-1\}$, with $n_a$ taking a range of different values. Additional details regarding the reward function, behaviour policy $\beh$, and the estimation of weights $\hat{w}(y)$ have been included in Appendix \ref{subsec:mips-empirical} for completeness.

\myparagraph{Target policies} 
To investigate the effect of increasing policy shift, we define a class of policies,
\[
\pi^{\alpha^\ast}(a | x) = \alpha^\ast\,\ind(a = \arg\max_{a'\in \Aspace} q(x, a')) + \frac{1-\alpha^\ast}{|\Aspace|} \quad \textup{where} \quad q(x, a) \coloneqq \E[Y\mid X=x, A=a],
\]
where $\alpha^\ast \in [0, 1]$ allows us to control the shift between $\beh$ and $\tar$. In particular, as we show later, the shift between $\beh$ and $\tar$ increases as $\alpha^\ast \rightarrow 1$. Using the ground truth behaviour policy $\beh$, we generate a dataset which is split into training and evaluation datasets of sizes $m$ and $n$ respectively.

\myparagraph{Baselines} 
We compare our estimator with DM, IPW, DR and MIPS estimators. Our setup includes action embeddings $E$ satisfying Assumption \ref{assum:indep-mips}, and so MIPS is unbiased.
Additional baselines have been considered in Appendix \ref{subsec:mips-empirical}.
For MR, we split the training data to estimate $\hatbeh$ and $\hat{w}(y)$, whereas for all other baselines we use the entire training data to estimate $\hatbeh$ for a fair comparison.
\begin{figure}[t]
     \centering
     \begin{subfigure}[b]{0.5\textwidth}
         \centering
         \includegraphics[height=1.06in]{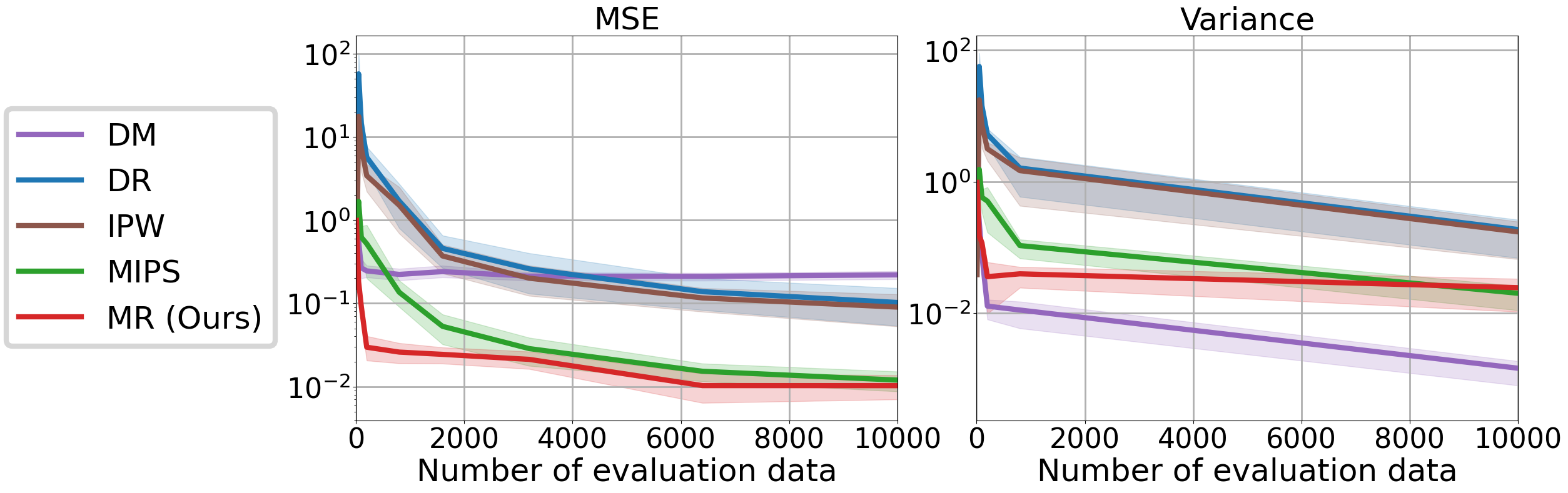}
         \caption{Results with varying evaluation data size $n$.}
         \label{fig:mse-vs-neval}
     \end{subfigure}%
     \begin{subfigure}[b]{0.5\textwidth}
         \centering
         \includegraphics[height=1.06in]{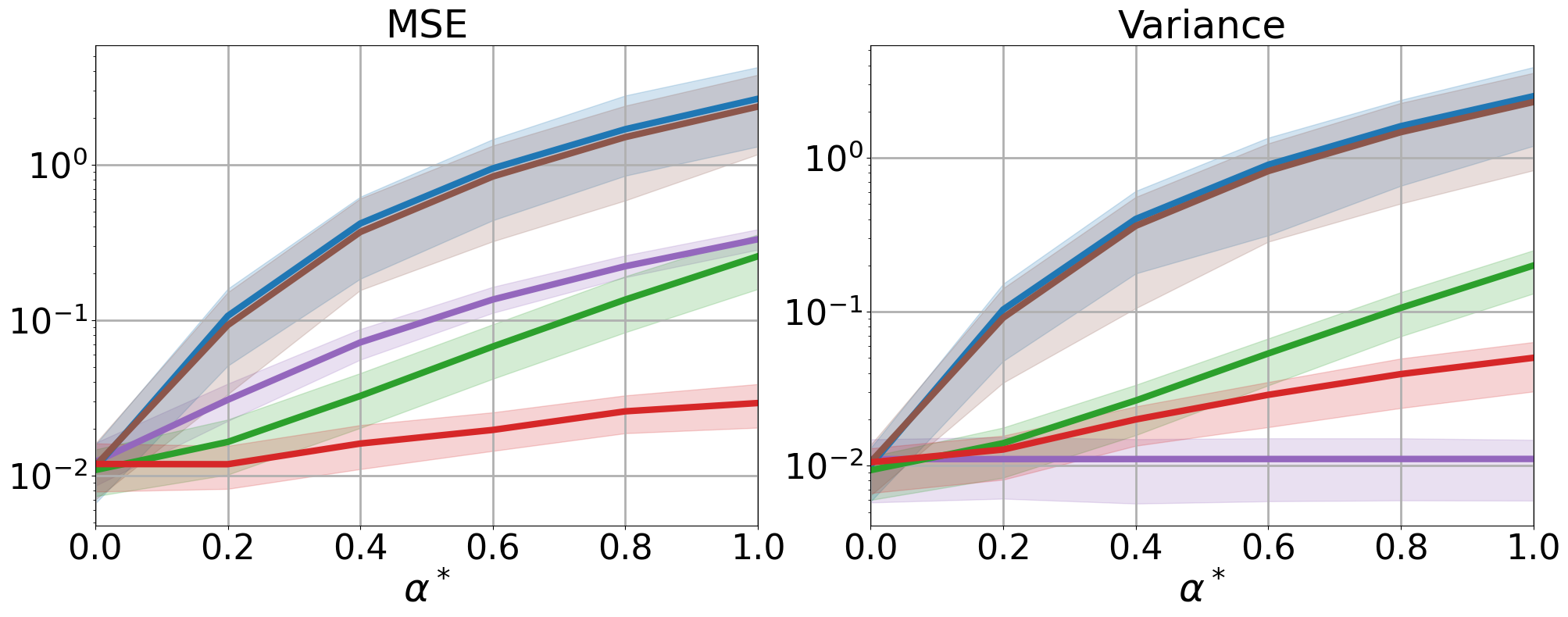}
         \caption{Results with varying $\alpha^\ast$.}
         \label{fig:mse-vs-betatar}
     \end{subfigure}\\
    \caption{Results for synthetic data experiment. In \ref{fig:mse-vs-neval} we have $\alpha^\ast=0.8$ and in \ref{fig:mse-vs-betatar} we have $n = 800$.}
    \label{fig:syn_results1}
\end{figure}

\myparagraph{Results}
We compute the target policy value using the $n$ evaluation datapoints. Here, the MSE of the estimators is computed over 10 different sets of logged data replicated with different seeds. The results presented have context dimension $d=1000$, number of actions $n_a=100$ and training data size $m=5000$. More experiments for a variety of parameter values are included in Appendix \ref{subsec:mips-empirical}.

\myparagraph{Varying number of evaluation data $n$} 
In Figure \ref{fig:mse-vs-neval} we plot the results with increasing size of evaluation data $n$ increases. MR achieves the smallest MSE among all the baselines considered when $n$ is small, with the MSE of MR being at least an order of magnitude smaller than every baseline for $n\leq 500$. This shows that MR is significantly more accurate than the baselines when the size of the evaluation data is small. As $n\rightarrow \infty$, the difference between the results for MR and MIPS decreases. However, MR attains smaller variance and MSE than MIPS generally, verifying our analysis in Section \ref{subsec:mips-comparison}.
Moreover, Figure \ref{fig:mse-vs-neval} shows that while the variance of MR is greater than that of DM, it still achieves the lowest MSE overall, owing to the high bias of DM.

\begin{wrapfigure}{r}{0.35\textwidth}
    \centering
    \includegraphics[width=0.35\textwidth]{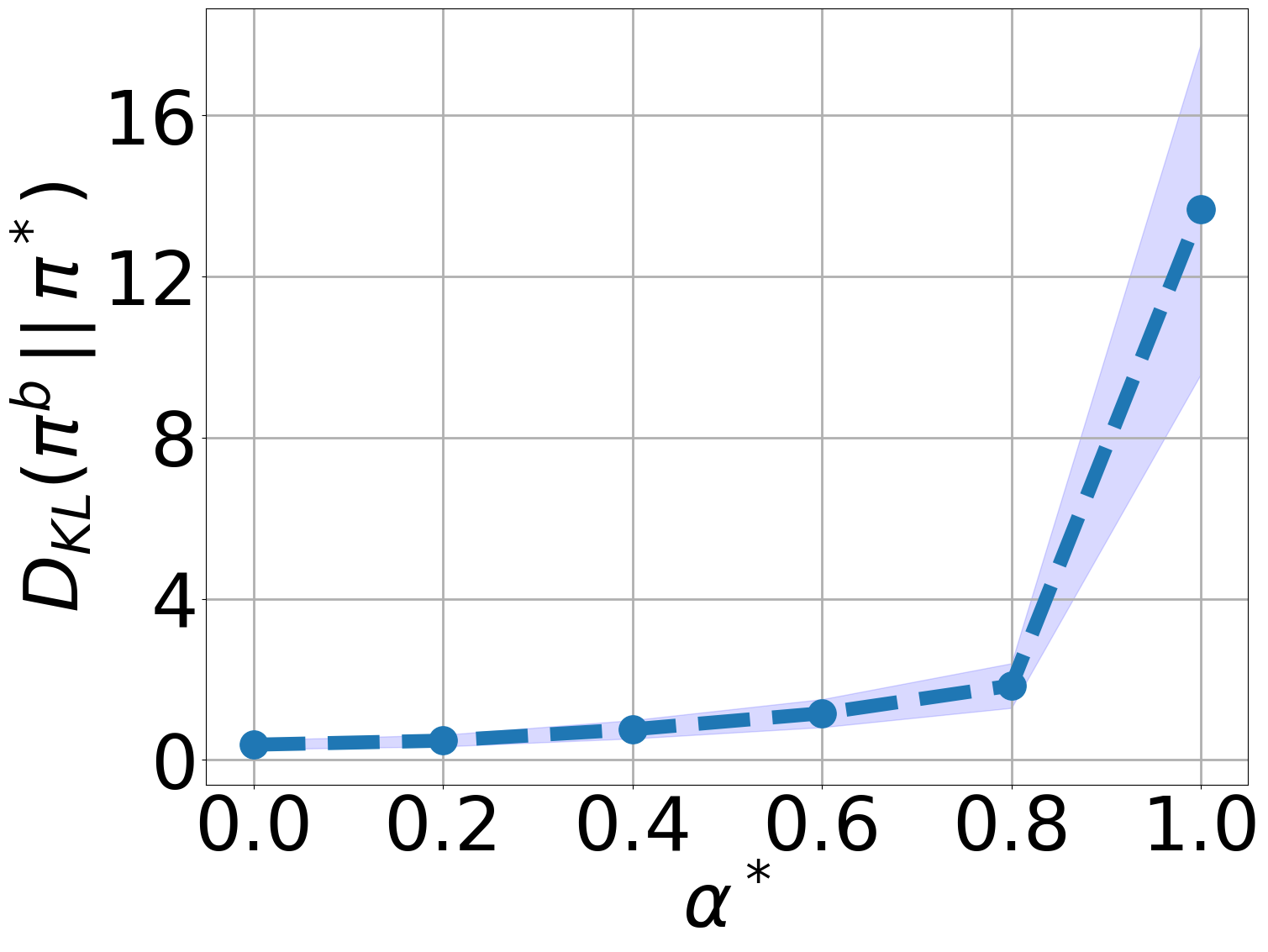}
\end{wrapfigure}
\myparagraph{Varying $\alpha^\ast$}
As $\alpha^\ast$ parameter of the target policy increases, so does the shift between the policies $\beh$ and $\pi^{\alpha^\ast}$ as illustrated by the figure on the right, which plots the KL-divergence $D_{\textup{KL}}(\beh\, || \, \pi^{\alpha^\ast})$ as a function of $\alpha$.
Figure \ref{fig:mse-vs-betatar} plots the results for increasing policy shift. 
Overall, the MSE of MR estimator is lowest among all the baselines. Moreover, while the MSE and variance of all estimators increase with increasing $\alpha^\ast$ the increase in these quantities is lower for the MR estimator than for the other baselines. Therefore, the relative performance of MR estimator improves with increasing policy shift and MR remains robust to increase in policy shift.

\myparagraph{Additional ablation studies}
In Appendix \ref{subsec:mips-empirical}, we investigate the effect of varying context dimensions $d$, number of actions $n_a$ and number of training data $m$. In every case, we observe that the MR estimator has a smaller MSE than all other baselines considered. In particular, MR remains robust to increasing $n_a$ whereas the MSE and variance of IPW and DR estimators degrade substantially when $n_a \geq 2000$. Likewise, MR outperforms the baselines even when the training data size $m$ is small.

\begin{sidewaystable}[!htp]
    \centering
    \caption{Mean squared error of target policy value with standard errors over 10 different seeds for different classification datasets. Here, number of evaluation data $n=1000$, and $\alpha^\ast=0.6$.}
    \label{tab:classification-dataset-results}
    \begin{footnotesize}
    \begin{scshape}

\begin{tabular}{llllllll}
\toprule
Dataset &             Digits &               Letter &          OptDigits &          PenDigits &           SatImage  &              Mnist & CIFAR-100\\
\midrule
DM        &  0.1508$\pm$0.0015 &    0.0886$\pm$0.0026 &  0.0485$\pm$0.0016 &   0.0520$\pm$0.0016 &  0.0208$\pm$0.0009  &  0.1109$\pm$0.0014 & 0.0020$\pm$0.0001 \\
DR        &    0.1334$\pm$0.0400 &    \red{35.085$\pm$17.768} &  0.0464$\pm$0.0061 &  0.2343$\pm$0.1404 &   0.0560$\pm$0.0395 &  0.2617$\pm$0.0139 & \red{3823.9$\pm$2023.2} \\
DRos      &  0.0847$\pm$0.0025 &    0.2363$\pm$0.0586 &  0.0384$\pm$0.0025 &  0.0138$\pm$0.0029 &  0.0078$\pm$0.0008 &  0.2151$\pm$0.0061 & 0.2628$\pm$0.1087 \\
IPW       &  0.1632$\pm$0.0462 &  \red{45.253$\pm$22.057} &   0.0844$\pm$0.0056 &  0.1342$\pm$0.0531 &    0.0900$\pm$0.0676 & 0.3359$\pm$0.0118 & \red{4116.9$\pm$2097.9}\\
SwitchDR  &  0.0982$\pm$0.0032 &    0.2387$\pm$0.0507 &  0.0557$\pm$0.0047 &   0.0342$\pm$0.0090 &  0.0136$\pm$0.0012  &   0.2750$\pm$0.0102 & 1.1644$\pm$0.8227 \\
MR (Ours) &  \textbf{0.0034$\pm$0.0001} &    \textbf{0.0018$\pm$0.0004} &  \textbf{0.0006$\pm$0.0002} &  \textbf{0.0008$\pm$0.0002} &  \textbf{0.0016$\pm$0.0003} &  \textbf{0.0121$\pm$0.0009} &  \textbf{0.0007$\pm$0.0002}\\
\bottomrule
\end{tabular}

\end{scshape}
\end{footnotesize}
\end{sidewaystable}
\subsection{Experiments on classification datasets}
Following previous works on OPE in contextual bandits \citep{dudik2014doubly, kallus2021optimal, mehrdad2018more,wang2017optimal}, we transform classification datasets into contextual bandit feedback data in this experiment.
We consider five UCI classification datasets \citep{dua2019uci} as well as Mnist \citep{deng2012mnist} and CIFAR-100 \citep{krizhevsky2009learning} datasets, each of which comprises $\{(x_i, a^\gt_i)\}_{i}$, where $x_i\in \Xspace$ are feature vectors and $a^\gt_i\in \Aspace$ are the ground-truth labels.
In the contextual bandits setup, the feature vectors $x_i$ are considered to be the contexts, whereas the actions correspond to the possible class of labels. For the context vector $x_i$ and the action $a_i$, the reward $y_i$ is defined as $y_i \coloneqq \ind(a_i = a^\gt_i)$, i.e., the reward is 1 when the action is the same as the ground truth label and 0 otherwise. Here, the baselines considered include the DM, IPW and DR estimators as well as Switch-DR \citep{wang2017optimal} and DR with Optimistic Shrinkage (DRos) \citep{su2020doubly}. We do not consider a MIPS baseline here as there is no natural embedding $E$ of $A$. Additional details are provided in Appendix \ref{subsec:additional-experiments-classification}. 

In Table \ref{tab:classification-dataset-results}, we present the results with number of evaluation data $n=1000$ and number of training data $m=500$. 
The table shows that across all datasets, MR achieves the lowest MSE among all methods. \flag{Moreover, for the Letter and CIFAR-100 datasets the IPW and DR yield large bias and variance arising from poor policy estimates $\hatbeh$. Despite this, the MR estimator which utilizes the \emph{same} $\hatbeh$ for the estimation of $\hat{w}(y)$ leads to much more accurate results.} We also verify that MR outperforms the baselines for increasing policy shift and evaluation data $n$ in Appendix \ref{subsec:additional-experiments-classification}.

\subsection{Application to ATE estimation}\label{subsec:causal-experiments}
In this experiment, we investigate the empirical performance of the MR estimator for ATE estimation. 

\myparagraph{Twins dataset}
We use the Twins dataset studied in \cite{louizos2017causal}, which comprises data from twin births in the USA between 1989-1991. The treatment $a=1$ corresponds to being born the heavier twin and the outcome $Y$ corresponds to the mortality of each of the twins in their first year of life. Specifically, $Y(1)$ corresponds to the mortality of the heavier twin (and likewise for $Y(0)$). To simulate the observational study, we follow a similar strategy as in \cite{louizos2017causal} to selectively hide one of the two twins as explained in Appendix \ref{app:ate-empirical}. We obtain a total of 11,984 datapoints, of which 5000 datapoints are used to train the behaviour policy $\hatbeh$ and outcome model $\hat{q}(x, a)$.

\begin{table}[t]
    \centering
    \caption{Mean absolute ATE estimation error $\epsilon_\ate$ with standard errors over 10 different seeds, for increasing number of evaluation data $n$.}
    \label{tab:ate_errors-main}
    \begin{small}
    \begin{tabular}{lllll}
\toprule
$n$ &             50   &             200  &             1600 &             3200 \\
\midrule
DM       &  0.092$\pm$0.003 &  0.092$\pm$0.003 &  0.092$\pm$0.004 &  0.092$\pm$0.004 \\
DR       &  0.101$\pm$0.024 &  \textbf{0.065$\pm$0.009} &  0.071$\pm$0.005 &  0.069$\pm$0.004 \\
\textsc{DRos}     &    0.100$\pm$0.017 &  0.089$\pm$0.006 &   0.093$\pm$0.004 &  0.087$\pm$0.004 \\
IPW      &  0.092$\pm$0.024 &  0.088$\pm$0.014 &  0.067$\pm$0.007 &  0.067$\pm$0.007 \\
\textsc{SwitchDR} &  0.101$\pm$0.024 &  \textbf{0.065$\pm$0.009} &  0.071$\pm$0.005 &  0.069$\pm$0.004 \\
MR (Ours)      &  \textbf{0.062$\pm$0.007} &  \textbf{0.065$\pm$0.007} &  \textbf{0.061$\pm$0.005} &  \textbf{0.061$\pm$0.006} \\
\bottomrule
\end{tabular}
\end{small}
\end{table}
Here, we consider the same baselines as the classification data experiments in previous section.
For our evaluation, we consider the absolute error in ATE estimation, $\epsilon_\ate$, defined as:
$
\epsilon_\ate \coloneqq | \hat{\theta}^{(n)}_\ate - \theta_\ate |.
$
Here, $\hat{\theta}^{(n)}_\ate$ denotes the value of the ATE estimated using $n$ evaluation datapoints.
We compute the ATE value using the $n$ evaluation datapoints, over 10 different sets of observational data (using different seeds). Table \ref{tab:ate_errors-main} shows that MR achieves the lowest estimation error $\epsilon_\ate$ for all values of $n$ considered here. While the performance of other baselines improves with increasing $n$, MR outperforms them all. 

\section{Discussion}

In this paper, we proposed an OPE method for contextual bandits called marginal ratio (MR) estimator, which considers only the shift in the marginal distribution of the outcomes resulting from the policy shift. Our theoretical and empirical analysis showed that MR achieves better variance and MSE compared to the current state-of-the-art methods and is more data efficient overall. Additionally, we demonstrated that MR applied to ATE estimation provides more accurate results than most commonly used methods. Next, we discuss limitations of our methodology and possible avenues for future work.

\myparagraph{Limitations}
The MR estimator requires the additional step of estimating $\hat{w}(y)$ which may introduce an additional source of bias in the value estimation. However, $\hat{w}(y)$ can be estimated by solving a simple 1d regression problem, and as we show empirically in Appendix \ref{app:experiments}, MR achieves the smallest bias among all baselines considered in most cases. Most notably, our ablation study in Appendix \ref{subsec:mips-empirical} shows that even when the training data is reasonably small, MR outperforms the baselines considered. 

\myparagraph{Future work}
The MR estimator can also be applied to policy optimisation problems, where the data collected using an `old' policy is used to learn a new policy. This approach has been used in Proximal Policy Optimisation (PPO) \citep{schulman2017proximal} for example, which has gained immense popularity and has been applied to reinforcement learning with human feedback (RLHF) \citep{lambert2022illustrating}. We believe that the MR estimator applied to these methodologies could lead to improvements in the stability and convergence of these optimisation schemes, given its favourable variance properties.

        \section*{Acknowledgements}
We would like to thank Jake Fawkes, Siu Lun Chau, Shahine Bouabid and Robert Hu for their useful feedback. 
We also appreciate the insights and constructive criticisms provided by the anonymous reviewers.
MFT acknowledges his PhD funding from Google DeepMind.

    }
    { %
    }

    \includepdf[pages=-]{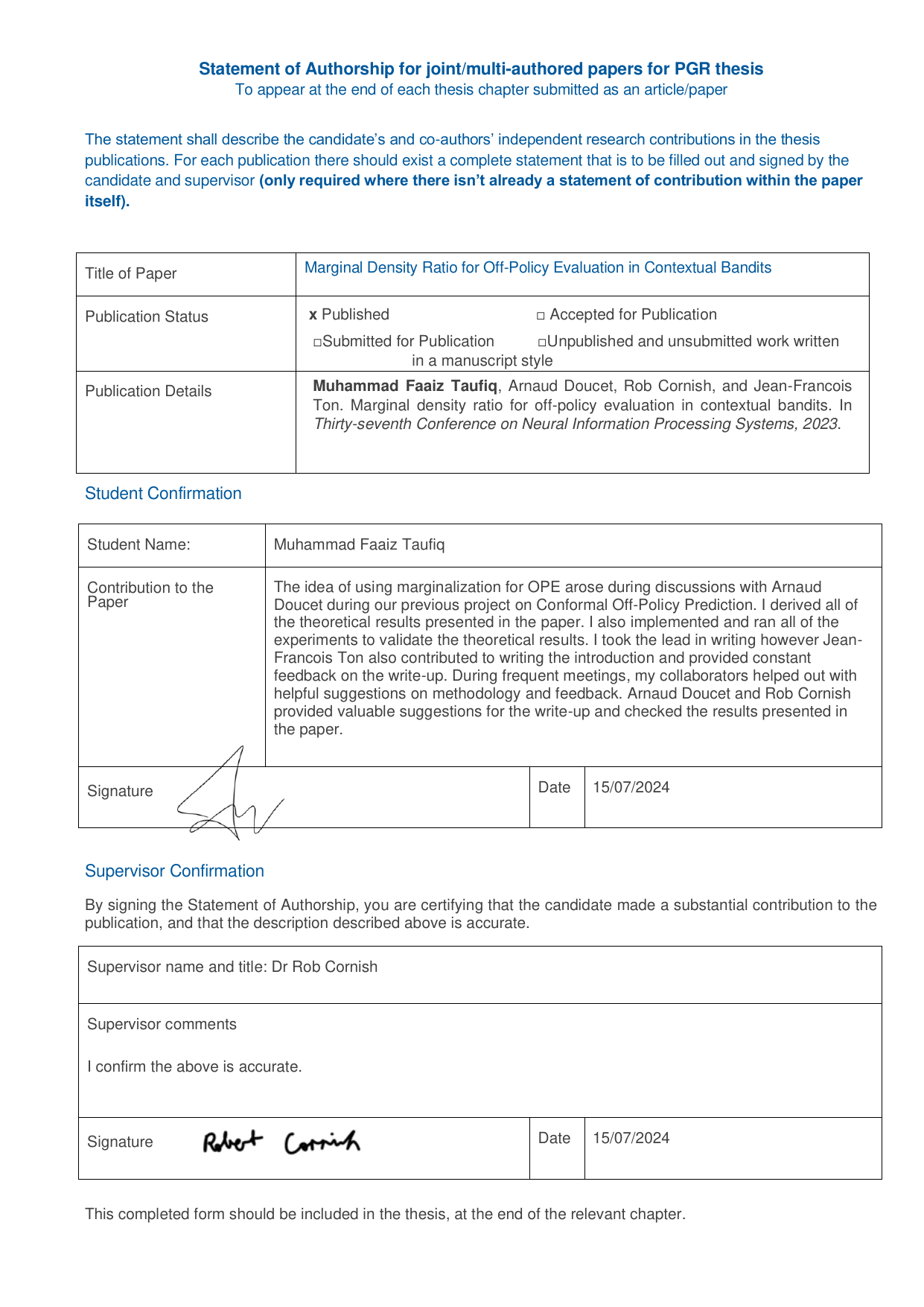}

\clearpage

\chapter{\label{ch:4-copp}Conformal Off-Policy Prediction in Contextual Bandits} 

\minitoc

\ifthenelse{\boolean{compilePapers}}
    { %

\begin{abstract}
  Most off-policy evaluation methods for contextual bandits have focused on the expected outcome of a policy, which is estimated via methods that at best provide only asymptotic guarantees. However, in many applications, the expectation may not be the best measure of performance as it does not capture the variability of the outcome. In addition, particularly in safety-critical settings, stronger guarantees than asymptotic correctness may be required. To address these limitations, we consider a novel application of conformal prediction to contextual bandits. Given data collected under a behavioral policy, we propose \emph{conformal off-policy prediction} (COPP), which can output reliable predictive intervals for the outcome under a new target policy. We provide theoretical finite-sample guarantees without making any additional assumptions beyond the standard contextual bandit setup, and empirically demonstrate the utility of COPP compared with existing methods on synthetic and real-world data. 
\end{abstract}

\section{Introduction}

Before deploying a decision-making policy to production, it is usually important to understand the plausible range of outcomes that it may produce.
However, due to resource or ethical constraints, it is often not possible to obtain this understanding by testing the policy directly in the real-world.
In such cases we have to rely on observational data collected under a different policy than the target.
Using this observational data to evaluate the target policy is known as off-policy evaluation (OPE).%

Traditionally, most techniques for OPE in contextual bandits focus on evaluating policies based on their \textbf{expected} outcomes; see e.g., \cite{uncertainty5, adaptive-ope, uncertainty2, uncertainty3, uncertainty4, doubly-robust}.
However, this can be problematic as methods that are only concerned with the average outcome do not take into account any notions of variance, for example. Therefore, in risk-sensitive settings such as econometrics, where we want to minimize the potential risks, metrics such as CVaR (Conditional Value at Risk) might be more appropriate \citep{keramati2020being}. Additionally, when only small sample sizes of observational data are available, the average outcomes under finite data can be misleading, as they are prone to outliers and hence, metrics such as medians or quantiles are more robust in these scenarios \citep{altschuler2019best}.

\begin{figure}
     \centering
     \begin{subfigure}[t]{0.5\textwidth}
         \centering
         \includegraphics[height=1.85in]{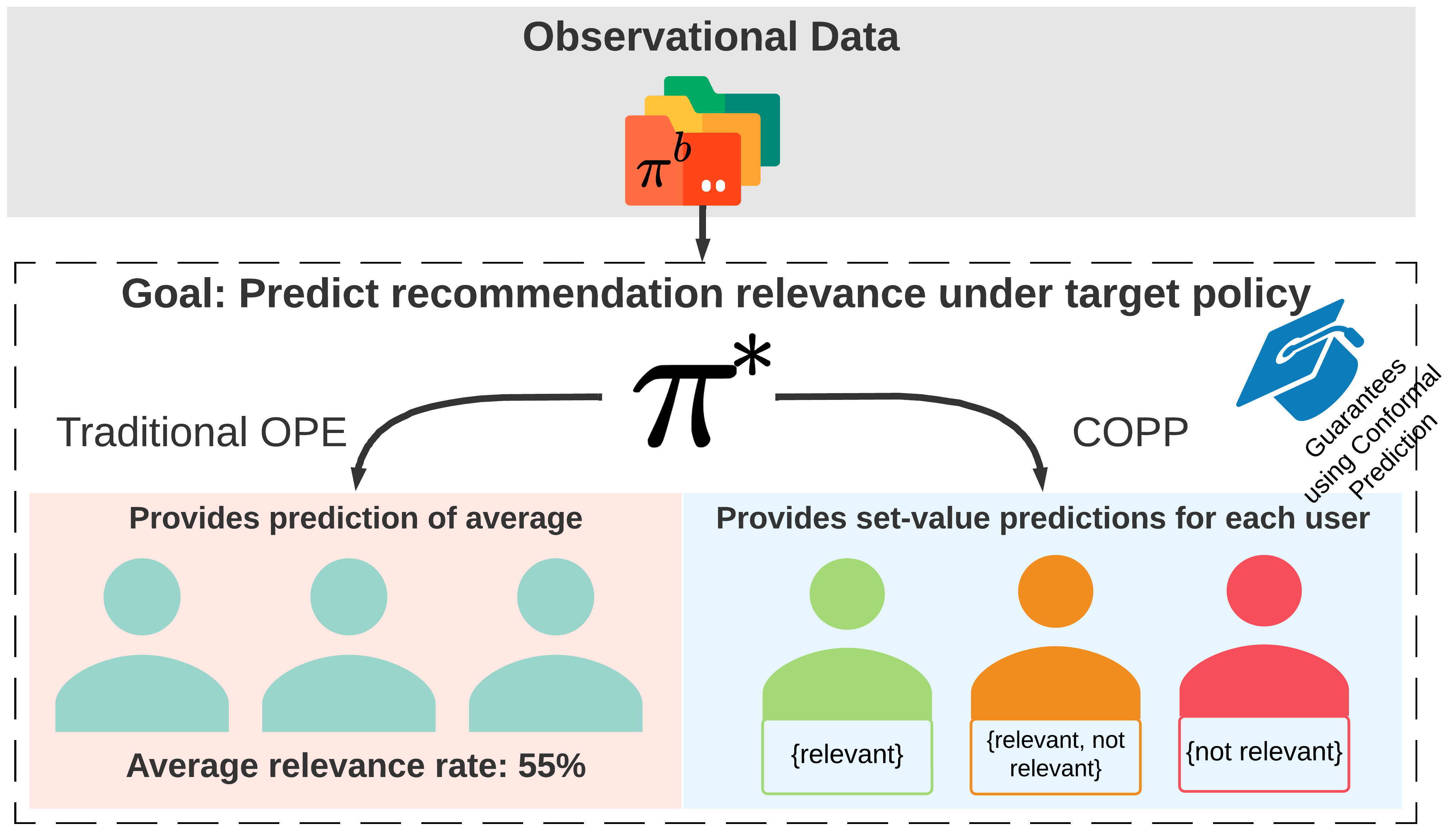}
     \end{subfigure}%
     \begin{subfigure}[t]{0.5\textwidth}
         \centering
         \includegraphics[height=1.85in]{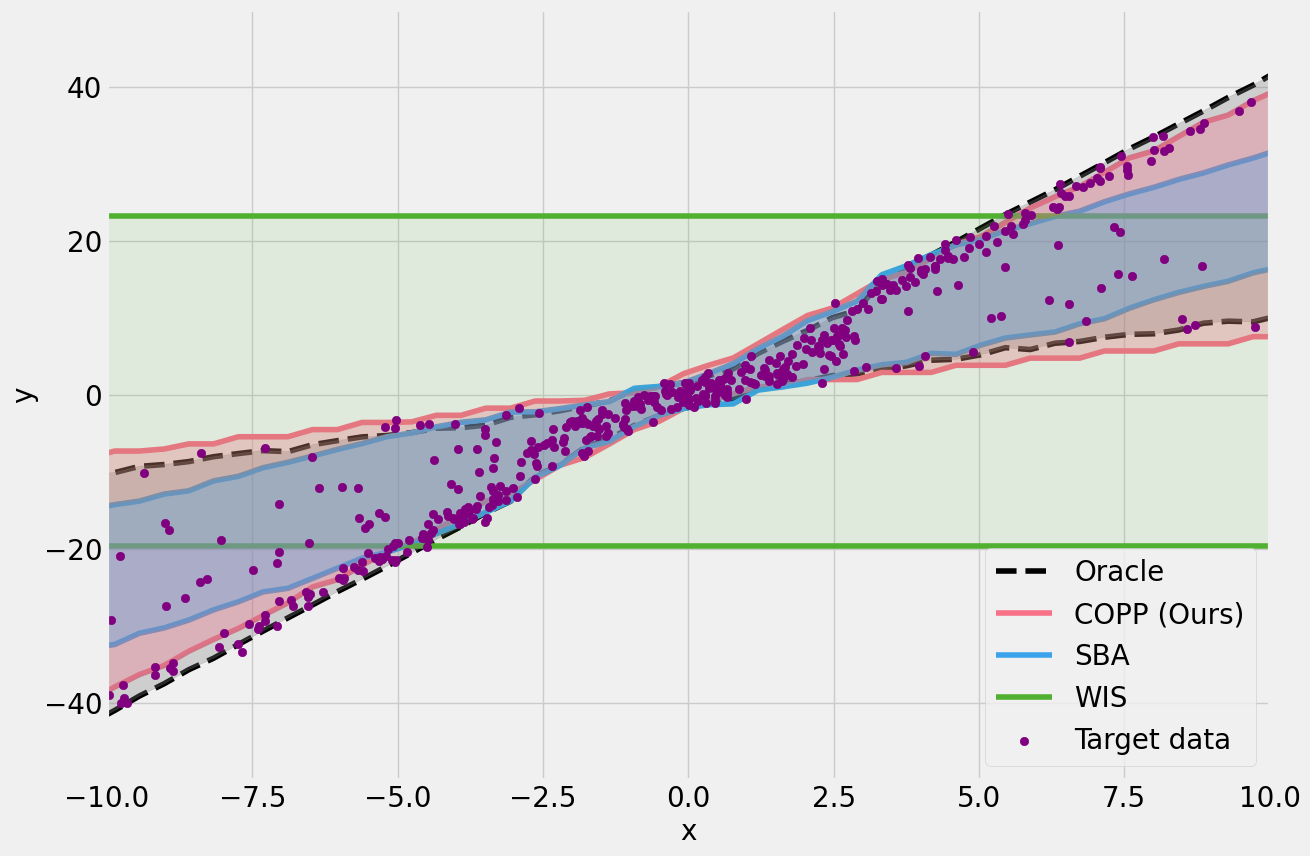}
     \end{subfigure}

    \caption{\textbf{Left (a):} Conformal Off-Policy Prediction against standard off-policy evaluation methods. \textbf{Right (b):} $90\%$ predictive intervals for $Y$ against $X$ for COPP, competing methods and the oracle.}\label{fig:copp}
\end{figure}

Notable exceptions in the OPE literature are \cite{risk-assessment, chandak2021universal}. Instead of estimating bounds on the expected outcomes, \cite{risk-assessment, chandak2021universal} establish finite-sample bounds for a general class of metrics (e.g., Mean, CVaR, CDF) on the outcome. Their methods can be used to estimate quantiles of the outcomes under the target policy and are therefore robust to outliers. However, the resulting bounds do not depend on the covariates $X$ (not adaptive w.r.t. $X$). This can lead to overly conservative intervals, as we will show in our experiments and can become uninformative when the data are heteroscedastic (see Fig. \ref{fig:copp}b).

In this paper, we propose Conformal Off-Policy Prediction (COPP), a novel algorithm that uses Conformal Prediction (CP) \citep{vovk2005algorithmic} to construct predictive interval/sets for outcomes in contextual bandits (see Fig.\ref{fig:copp}a) using an observational dataset.
COPP enjoys both finite-sample theoretical guarantees and adaptivity w.r.t.\ the covariates $X$, and, to the best of our knowledge, is the first such method based on CP that can be applied to stochastic policies and continuous action spaces.
In summary, our contributions are: 
\begin{enumerate}[label=(\roman*)]
  \item We propose an application of CP to construct predictive intervals for bandit outcomes that is more general (applies to stochastic policies and continuous actions) than previous work.
  \item We provide theoretical guarantees for COPP, including finite-sample guarantees on marginal coverage and asymptotic guarantees on conditional coverage.
  \item We show empirically that COPP outperforms standard methods in terms of coverage and predictive interval width when assessing new policies. 
\end{enumerate}

\subsection{Problem setup}\label{sec:problem_setup}

Let $\mathcal{X}$ be the covariate space (e.g., user  data), $\mathcal{A}$ the action space (e.g., recommended items) and $\mathcal{Y}$ the outcome space (e.g., relevance to the user).
We allow both $\mathcal{A}$ and $\mathcal{Y}$ to be either discrete or continuous.
In our setting, we are given logged observational data $\mathcal{D}_{obs}=\{x_i, a_i, y_i \}_{i=1}^{n_{obs}}$ where actions are sampled from a behavioural policy $\pi^{b}$, i.e. $A \mid x \sim \pi^{b}(\cdot\mid x)$ and $Y \mid x,a \sim P(\cdot \mid x, a)$. We assume that we do not suffer from unmeasured confounding. At test time, we are given a state $x^{test}$ and a new policy $\pi^*$. While $\pi^{b}$ may be unknown, we assume the target policy $\pi^*$ to be known.

We consider the task of rigorously quantifying the performance of $\pi^*$ without any distributional assumptions on $X$ or $Y$. Many existing approaches estimate $\mathbb{E}_{\pi^*}[Y]$, which is useful for comparing two policies directly as they return a single number. However, the expectation does not convey fine-grained information about how the policy performs for a specific value of $X$, nor does it account for the uncertainty in the outcome $Y$.

Here, we aim to construct intervals/sets on the outcome $Y$ which are (i) adaptive w.r.t. $X$, (ii) capture the variability in the outcome $Y$ and (iii) provide finite-sample guarantees. Current methods lack at least one of these properties (see Sec. \ref{sec:related_work}). One way to achieve these properties is to construct a set-valued function of $x$, $\hat{C}(x)$ which outputs a \emph{subset} of $\mathcal{Y}$. Given any finite dataset $\mathcal{D}_{obs}$, this subset is guaranteed to contain the true value of $Y$ with any pre-specified probability, i.e.
\begin{align}
     \hspace{-0.24cm}1- \alpha \hspace{-0.05cm}\leq  \tarprob(Y \in \hat{C}(X)) \hspace{-0.05cm}\leq 1- \alpha + o_{n_{obs}}(1) \label{guarantee}
\end{align}
where $n_{obs}$ is the size of available observational data and $P^{\pi^*}_{X,Y}$ is the joint distribution of $(X,Y)$ under target policy $\pi^*$. In practice, $\hat{C}(x)$ can be used as a diagnostic tool downstream for a granular assessment of likely outcomes under a target policy. The probability in (\ref{guarantee}) is taken over the joint distribution of $(X, Y)$, meaning that \eqref{guarantee} holds marginally in $X$ (marginal coverage) and not for a given $X=x$ (conditional coverage). In Sec. \ref{sec:cond_cov}, we provide additional regularity conditions under which not only marginal but also conditional coverage holds. Next, we introduce the Conformal Prediction framework, which allows us to construct intervals $\hat{C}(x)$ that satisfy \eqref{guarantee} along with properties (i)-(iii). 

\section{Background}
Conformal prediction \citep{vovk2005algorithmic, shafer2008tutorial} is a methodology that was originally used to compute distribution-free prediction sets for regression and classification tasks. Before introducing COPP, which applies CP to contextual bandits, we first illustrate how CP can be used in standard regression.

\subsection{Standard conformal prediction} 

Consider the problem of regressing $\mbox{Y} \in \mathcal{Y}$ against $X\in \mathcal{X}$.
Let $\hat{f}$ be a model trained on the \emph{training} data $\mathcal{D}_{tr} = \{X_i^0, Y_i^0\}_{i=1}^m \overset{\textup{i.i.d.}}{\sim} P_{X,Y}$ and let the \emph{calibration} data $\mathcal{D}_{cal} = \{X_i, Y_i\}_{i=1}^n \overset{\textup{i.i.d.}}{\sim} P_{X,Y}$ be independent of $\mathcal{D}_{tr}$. Given a desired coverage rate $1-\alpha \in (0,1)$, we construct a band $\hat{C}_n:\mathcal{X}\rightarrow \{\text{subsets of }\mathcal{Y}\}$, based on the calibration data such that, for a new i.i.d. test data $(X,Y) \sim P_{X,Y}$,
\begin{align}
    1-\alpha \leq \mathbb{P}_{(X,Y)\sim P_{X,Y}}(Y\in \hat{C}_n(X)) \leq 1-\alpha + \frac{1}{n+1}, \label{cp_guarantee}
\end{align}
where the probability is taken over $X,Y$ and $\mathcal{D}_{cal} = \{X_i, Y_i\}_{i=1}^n$ and is conditional upon $\mathcal{D}_{tr}$.

In order to obtain $\hat{C}_n$ satisfying \eqref{cp_guarantee}, we introduce a non-conformity score function $V_i = s(X_i, Y_i)$, e.g., $(\hat{f}(X_i) - Y_i)^2$. We assume here $\{V_i\}_{i=1}^n$ have no ties almost surely. Intuitively, the non-conformity score $V_i$ uses the outputs of the predictive model $\hat{f}$ on the calibration data, to measure how far off these predictions are from the ground truth response. Higher scores correspond to worse fit between $x$ and $y$ according to $\hat{f}$. We define the empirical distribution of the scores $\{V_i\}_{i=1}^n \cup \{\infty\}$
\begin{align}\label{eq:std_emp_score}
 \hat{F}_{n} \coloneqq \frac{1}{n+1} \sum_{i=1}^n \delta_{V_i} + \frac{1}{n+1}\delta_{\infty}  
\end{align}
with which we can subsequently construct the conformal interval $\hat{C}_n$ that satisfies \eqref{cp_guarantee} as follows:
\begin{align}
    \hat{C}_n(x) \coloneqq \{y: s(x,y) \leq \eta\} \label{eq:interval}
\end{align}
where $\eta$ is an empirical quantile of $\{V_i\}_{i=1}^n$, i.e. $\eta = \text{Quantile}_{1-\alpha}(\hat{F}_{n})$ is the $1-\alpha$ quantile.

Intuitively, for roughly $100\cdot(1-\alpha) \%$ of the calibration data, the score values will be below $\eta$. Therefore, if the new datapoint $(X, Y)$ and $\mathcal{D}_{cal}$ are i.i.d., the probability $\p(s(X,Y) \leq \eta)$ (which is equal to $\p(Y \in \hat{C}_n(X))$ by \eqref{eq:interval}) will be roughly $1-\alpha$. Exchangeability of the data is crucial for the above to hold. In the next section we will explain how \cite{tibshirani2020conformal} relax the exchangeability assumption.

\subsection{Conformal prediction under covariate shift}\label{CP_cov_shift}
\cite{tibshirani2020conformal} extend the CP framework beyond the setting of exchangeable data, by constructing valid intervals even when the calibration data and test data are not drawn from the same distribution. The authors focus on the \textit{covariate shift} scenario i.e. the distribution of the covariates changes at test time:
\begin{align}
    &(X_i, Y_i) \overset{\textup{i.i.d}}{\sim} P_{X,Y} = P_X \times P_{Y\mid X}, \quad i = 1, \dots, n \nonumber \\
    &(X, Y) \sim \tilde{P}_{X,Y} = \tilde{P}_{X} \times P_{Y\mid X},~\text{independently}\nonumber
\end{align}

where the ratio $w(x)\coloneqq\mathrm{d}\tilde{P}_{X}/\mathrm{d}P_{X}(x)$ is known.
The key realization in \cite{tibshirani2020conformal} is that the requirement of \textit{exchangeability} in CP can be relaxed to a more general property, namely \textit{weighted exchangeability} (see Def. \ref{def:weighted_exch}). 
They propose a weighted version of conformal prediction, which shifts the empirical distribution of non-conformity scores, $\hat{F}_{n}$, at a point $x$, using weights $w(x)$. This adjusts $\hat{F}_{n}$ for the covariate shift, before picking the quantile $\eta$: $$\hat{F}_{n}^{x} \coloneqq  \sum_{i=1}^n p_i^w(x) \delta_{V_i} + p_{n+1}^w(x)\delta_{\infty}\quad \textup{ where,}$$ 
\begin{align*}
    p_i^{w}(x) = \frac{w(X_i)}{\sum_{j=1}^n w(X_j) + w(x)}, \quad p_{n+1}^{w}(x) = \frac{w(x)}{\sum_{j=1}^n w(X_j) + w(x)}.
\end{align*}

In standard CP (without covariate shift), the weight function satisfies $w(x)=1$ for all $x$, and we recover \eqref{eq:std_emp_score}. Next, we construct the conformal prediction intervals $\hat{C}_n$ as in standard CP using \eqref{eq:interval} where $\eta$ now depends on $x$ due to $p^w_i(x)$. The resulting intervals, $\hat{C}_n$, satisfy: 
\begin{align*}
     \mathbb{P}_{(X,Y)\sim \tilde{P}_{X, Y}}(Y\in \hat{C}_n(X)) \geq 1-\alpha    
\end{align*}
As mentioned previously in Sec. \ref{sec:problem_setup}, the above demonstrates marginal coverage guarantees over test point $X$ and calibration dataset $\mathcal{D}_{cal}$, not conditional on a given $X=x$ or a fixed $\mathcal{D}_{cal}$.  We will discuss this nuance later on in Sec. \ref{sec:cond_cov}. In addition, previous work by \citeauthor{vovk2012} shows that conditioned on a single calibration dataset, standard CP can achieve coverage that is `close' to the required coverage with high probability. However, this has not been extended to the case where the distribution shifts. This is out of the scope of this paper and an interesting future direction.

\begin{algorithm}[!htp]
\SetAlgoLined
\textbf{Inputs:} Observational data $\mathcal{D}_{obs}=\{X_i, A_i, Y_i\}_{i=1}^{n_{obs}}$, conf. level $\alpha$, a score function $s(x,y)\in\mathbb{R}$, new data point $x^{test}$, target policy $\pi^*$ \;
\textbf{Output:} Predictive interval $\hat{C}_n(x^{test})$\;
Split $\mathcal{D}_{obs}$ into training data ($\mathcal{D}_{tr}$) and calibration data ($\mathcal{D}_{cal}$) of sizes $m$ and $n$ respectively\;
Use $\mathcal{D}_{tr}$ to estimate weights $\hat{w}(\cdot, \cdot)$ using \eqref{weight-est}\;
Compute $V_i \coloneqq s(X_i, Y_i)$ for $(X_i, A_i, Y_i) \in \mathcal{D}_{cal}$\;
Let $\hat{F}_{n}^{x, y}$ be the weighted distribution of scores 
$$\hat{F}_{n}^{x, y} \coloneqq  \sum_{i=1}^n p_i^{\hat{w}}(x, y) \delta_{V_i} + p_{n+1}^{\hat{w}}(x, y)\delta_{\infty}$$\\
where $p_i^{\hat{w}}(x, y) = \frac{\hat{w}(X_i, Y_i)}{\sum_{j=1}^n \hat{w}(X_j, Y_j) + \hat{w}(x, y)}$ and $p_{n+1}^{\hat{w}}(x, y) = \frac{\hat{w}(x, y)}{\sum_{j=1}^n \hat{w}(X_j, Y_j) + \hat{w}(x, y)}$\;
For $x^{test}$ construct:
$
    \hat{C}_n(x^{test})\hspace{-0.1cm} \coloneqq \{y: s(x^{test},y) \leq \text{Quantile}_{1-\alpha}(\hat{F}_{n}^{x^{test}, y})\} \nonumber
$

\textbf{Return} $\hat{C}_n(x^{test})$
  \caption{Conformal Off-Policy Prediction (COPP)}
  \label{cp_covariate_shift}
\end{algorithm}

Thus \cite{tibshirani2020conformal} show that the CP algorithm can be extended to the setting of covariate shift with the resulting predictive intervals satisfying the coverage guarantees when the weights are known. The extension of these results to approximate weights was proposed in  \cite{lei2020conformal} and is generalized to our setting in Sec. \ref{sec:theory}. 
\section{Conformal Off-Policy Prediction (COPP)}
In the contextual bandits introduced in Sec. \ref{sec:problem_setup}, we assume that the observational data $\mathcal{D}_{obs} = \{x_i, a_i, y_i\}_{i=1}^{n_{obs}}$ is generated from a behavioural policy $\pi^b$. At inference time we are given a new target policy $\pi^*$ and want to provide intervals on the outcomes $Y$ for covariates $X$ that satisfy \eqref{guarantee}.

The key insight of our approach is to consider the following joint distribution of $(X,Y)$:
\begin{align*}
    P^{\pi^{b}}(x, y)=& P(x) \int P(y| x, a) \textcolor{red}{\pi^{b}(a|x)} \mathrm{d}a  =P(x) \textcolor{red}{P^{\pi^{b}}(y|x)} \\
    P^{\pi^*}(x, y) =& P(x)\int P(y| x, a) \textcolor{red}{\pi^*(a|x)}  \mathrm{d}a = P(x)\textcolor{red}{P^{\pi^*}(y|x)}
\end{align*}

Therefore, the change of policies from $\pi^b$ to $\pi^*$ causes a shift in the joint distributions of $(X, Y)$ from $P^{\pi^{b}}_{X, Y}$ to $P^{\pi^*}_{X, Y}$. More precisely, a shift in the conditional distribution of $Y|X$. As a result, our problem boils down to using CP in the setting where the conditional distribution $P^{\pi^{b}}_{Y\mid X}$ changes to $P^{\pi^{*}}_{Y \mid X}$ due to the different policies, while the covariate distribution $P_X$ remains the same. 

Hence our problem is not concerned about covariate shift as addressed in \cite{tibshirani2020conformal}, but instead uses the idea of \textit{weighted exchangeability} to extend CP to the setting of policy shift. To account for this distributional mismatch, our method shifts the empirical distribution of non-conformity scores at a point $(x, y)$ using the weights $w(x,y) = \mathrm{d}P^{\pi^{*}}_{X,Y}/\mathrm{d}P^{\pi^{b}}_{X,Y}(x,y) = \mathrm{d}P^{\pi^{*}}_{Y|X}/\mathrm{d}P^{\pi^{b}}_{Y|X}(x,y)$ as follows:
\begin{align}
   \textstyle  \hat{F}_{n}^{x, y} &\coloneqq \sum_{i=1}^n p_i^w(x, y)\delta_{V_i} + p_{n+1}^w(x,y)\delta_\infty, \label{score-dist-pshift}
\end{align}
where,
\begin{align*}
p_i^{w}(x, y) &= \frac{w(X_i, Y_i)}{\sum_{j=1}^n w(X_j, Y_j) + w(x, y)} \quad \textup{and,} \\
p_{n+1}^{w}(x, y) &= \frac{w(x, y)}{\sum_{j=1}^n w(X_j, Y_j) + w(x, y)}. 
\end{align*}

The intervals are then constructed as below which we call Conformal Off-Policy Prediction (see Algorithm \ref{cp_covariate_shift}).
\begin{align}
    \hat{C}_n(x^{test}) \coloneqq \{y: s(x^{test},y) \leq \eta(x^{test}, y)\} \hspace{0.2cm} \textup{where, }  \eta(x, y) \coloneqq \text{Quantile}_{1-\alpha}( \hat{F}_{n}^{x, y}). \label{cp-sets}
\end{align}

\paragraph{Remark}
    The weights $w(x, y)$ in \eqref{score-dist-pshift} depend on $x$ and $y$, as opposed to only $x$. In particular, finding the set of $y$'s satisfying \eqref{cp-sets} becomes more complicated than for the standard covariate shifted CP which only requires a single computation of $\eta(x)$ for a given $x$ as shown in \eqref{eq:interval}. In our case however, we have to create a $k$ sized grid of potential values of $y$ for every $x$ to find $\hat{C}_n(x)$. This operation is embarrassingly parallel and hence does not add much computational overhead compared to the standard CP, especially because CP mainly focuses on scalar predictions.

\subsection{Estimation of weights $w(x, y)$}\label{sec:weights}
So far we have been assuming that we know the weights $w(x, y)$ exactly. However, in most real-world settings, this will not be the case. Therefore, we must resort to estimating $w(x, y)$ using observational data. In order to do so, we first split the observational data into training ($\mathcal{D}_{tr}$) and calibration ($\mathcal{D}_{cal}$) data. Next, using $\mathcal{D}_{tr}$, we estimate $\hat{\pi}^b(a\mid x) \approx \pi^b(a \mid x)$ and $\hat{P}(y \mid x, a) \approx P(y \mid x, a)$ (which is independent of the policy). We then compute a Monte Carlo estimate of weights using the following:
\begin{align}
    \hat{w}(x, y) &= \frac{\tfrac{1}{h}\sum_{k=1}^{h} \hat{P}(y|x, A^*_k)}{\tfrac{1}{h} \sum_{k=1}^{h} \hat{P}(y|x, A_k)} \approx \frac{\int P(y|x, a) \textcolor{red}{\pi^*(a|x)} \mathrm{d}a}{\int P(y| x, a) \textcolor{red}{\pi^b(a|x)} \mathrm{d}a},  \label{weight-est}
\end{align}
where $A_k\sim \hat{\pi}^b(\cdot \mid x),~ A_k^* \sim  \pi^*(\cdot \mid x)$ and $h$ is the number of Monte Carlo samples.

\begin{importantresultwithtitle}[title=Why not construct intervals using \text{$\hat{P}(y|x, a)$} directly?]\noindent
    We could directly construct predictive intervals $\hat{C}_n(x)$ over outcomes by sampling $$Y_j \overset{\textup{i.i.d.}}{\sim} \hat{P}^{\pi^*}(y|x) = \int \hat{P}(y|x, a)\pi^*(a|x)\mathrm{d}a.$$ However, the coverage of these intervals directly depends on the estimation error of $\hat{P}(y|x, a)$. This is not the case in COPP, as the coverage does not depend on $\hat{P}(y|x, a)$ directly but rather on the estimation of $\hat{w}(x, y)$ (see Prop. \ref{prop2}). We hypothesize that this indirect dependence of COPP on $\hat{P}(y|x, a)$ makes it less sensitive to the estimation error. In Sec. \ref{sec:exp}, our empirical results support this hypothesis as COPP provides more accurate coverage than directly using $\hat{P}(y|x, a)$ to construct intervals. Lastly, in Appendix \ref{sec:alternate_weights_est} we show how we can avoid estimating $\hat{P}(y|x, a)$ by proposing an alternative method for estimating the weights directly. We leave this for future work.
\end{importantresultwithtitle}

\section{Theoretical guarantees}\label{sec:theory}
\subsection{Marginal coverage}

In this section we provide theoretical guarantees on marginal coverage $\tarprob(Y \in \hat{C}_n(X))$ for the cases where the weights $w(x, y)$ are known exactly as well as when they are estimated. Using the idea of \textit{weighted exchangeability}, we extend \cite[Theorem 2]{tibshirani2020conformal} to our setting. 

\begin{proposition}\label{coverage_theorem}
Let $\{X_i, Y_i\}_{i =1}^n \overset{\textup{i.i.d.}}{\sim}P^{\pi^b}_{X,Y}$ be the calibration data. For any score function $s$, and any $\alpha \in (0,1)$, define the conformal predictive interval at a point $x\in \mathbb{R}^d$ as 
$$\hat{C}_n(x) \coloneqq \left\{y \in \mathbb{R}: s(x, y) \leq \eta(x,y) \right\}$$
where $\eta(x, y) \coloneqq \text{Quantile}_{1-\alpha}( \hat{F}_{n}^{x, y})$, and $\hat{F}_{n}^{x, y}$ is as defined in \eqref{score-dist-pshift} with exact weights $w(x,y)$.
If $P^{\pi^*}(y| x)$ is absolutely continuous w.r.t. $P^{\pi^b}(y| x)$,
then $\hat{C}_{n}$ satisfies
$$\tarprob(Y \in \hat{C}_{n}(X)) \geq 1-\alpha \nonumber.$$
\end{proposition}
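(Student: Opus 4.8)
The plan is to reduce Proposition~\ref{coverage_theorem} to the weighted-exchangeability machinery of \cite{tibshirani2020conformal}, exactly as in the covariate-shift case, once the correct importance weight has been identified. First I would use the fact that the covariate marginal $P(x)$ is unaffected by the choice of policy: the joint densities are $P^{\pi^*}(x,y) = P(x)\,P^{\pi^*}(y\mid x)$ and $P^{\pi^b}(x,y) = P(x)\,P^{\pi^b}(y\mid x)$, so absolute continuity of $P^{\pi^*}(y\mid x)$ w.r.t.\ $P^{\pi^b}(y\mid x)$ is equivalent to $P^{\pi^*}_{X,Y} \ll P^{\pi^b}_{X,Y}$, with Radon--Nikodym derivative $w(x,y) = \mathrm{d}P^{\pi^*}_{X,Y}/\mathrm{d}P^{\pi^b}_{X,Y}(x,y) = \mathrm{d}P^{\pi^*}_{Y\mid X}/\mathrm{d}P^{\pi^b}_{Y\mid X}(y\mid x)$. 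This derivative is finite $P^{\pi^b}$-a.s., and since $P^{\pi^*} \ll P^{\pi^b}$ it is also finite $P^{\pi^*}$-a.s.; this is the only place the hypothesis is used, and it is what makes the weighted empirical distribution $\hat{F}_n^{x,y}$ in \eqref{score-dist-pshift} well defined.

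Next I would verify weighted exchangeability of the augmented sample $Z_i \coloneqq (X_i, Y_i)$, $i = 1,\dots,n$, together with the test point $Z_{n+1} \coloneqq (X, Y)$. Since $Z_1, \dots, Z_n \iidsim P^{\pi^b}_{X,Y}$ independently of $Z_{n+1} \sim P^{\pi^*}_{X,Y}$, the joint density factorizes as $\prod_{i=1}^n p^{\pi^b}(z_i)\,p^{\pi^*}(z_{n+1}) = \big(\prod_{i=1}^{n+1} p^{\pi^b}(z_i)\big)\,w(z_{n+1})$, i.e.\ it has the form (symmetric function of $z_{1:n+1}$) $\times \prod_i w_i(z_i)$ with weight functions $w_i \equiv 1$ for $i \le n$ and $w_{n+1} = w$. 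This is precisely the definition of weighted exchangeability (Def.~\ref{def:weighted_exch}), and the induced normalized weights at a candidate point $(x,y)$ are exactly the $p_i^w(x,y)$ and $p_{n+1}^w(x,y)$ appearing in \eqref{score-dist-pshift}.

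Then I would invoke the weighted conformal quantile lemma (\cite[Lemma~3]{tibshirani2020conformal}): for weighted exchangeable $Z_1,\dots,Z_{n+1}$ and scores $V_i = s(Z_i)$, one has $\mathbb{P}\big(V_{n+1} \le \mathrm{Quantile}_{1-\alpha}(\sum_{i=1}^n p_i^w(Z_{n+1})\delta_{V_i} + p_{n+1}^w(Z_{n+1})\delta_{V_{n+1}})\big) \ge 1-\alpha$, where the probability is over the calibration sample and the test point jointly. Replacing the atom $\delta_{V_{n+1}}$ by $\delta_\infty$ can only increase the $(1-\alpha)$-quantile, hence only enlarges the event, so the same bound holds with $\hat{F}_n^{X,Y}$ in place of that distribution; that is, $\mathbb{P}\big(s(X,Y) \le \eta(X,Y)\big) \ge 1-\alpha$ with $\eta(x,y) = \mathrm{Quantile}_{1-\alpha}(\hat{F}_n^{x,y})$. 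Finally, by the definition of $\hat{C}_n$ in \eqref{cp-sets}, the event $\{s(X,Y) \le \eta(X,Y)\}$ is literally $\{Y \in \hat{C}_n(X)\}$, so $\tarprob(Y \in \hat{C}_n(X)) \ge 1-\alpha$, as claimed.

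The only genuinely non-routine step is the first one: recognizing that switching the policy induces a \emph{pure conditional} shift in $Y \mid X$ while the covariate law is held fixed, so that the joint Radon--Nikodym derivative coincides with the conditional one and is controlled precisely by the stated absolute-continuity assumption. Everything after that is a verbatim transcription of \cite{tibshirani2020conformal}. In particular, the fact that $w$, and hence $\eta$, depends jointly on $(x,y)$ rather than on $x$ alone causes \emph{no} difficulty for the coverage proof, because the threshold and the test score are both evaluated at the same pair $(X,Y)$; the joint dependence only matters later, when one wishes to exhibit the set $\hat{C}_n(x)$ explicitly (hence the grid search noted in the Remark after \eqref{cp-sets}).
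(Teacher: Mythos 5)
Your proposal is correct and follows essentially the same route as the paper's own proof: identify the policy shift as a shift in $P_{Y\mid X}$ with $P_X$ fixed, establish weighted exchangeability of the augmented sample with weights $w_i\equiv 1$ for $i\le n$ and $w_{n+1}=w$, and then apply the weighted conformal quantile argument of \cite{tibshirani2020conformal} (which the paper re-derives verbatim rather than citing). Your explicit handling of the absolute-continuity hypothesis and of the $\delta_\infty$ replacement only makes explicit steps the paper leaves implicit.
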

%
%
%
%
%
%
%
%
%
%
%
%
%
%
%
%
%
%
%
%
%
%
%
%
%
%
%
%
%
%
%
%
%
%
%
%
%
%
Proposition \ref{coverage_theorem} assumes  exact weights $w(x, y)$, which is usually not the case. For CP under covariate shift, \cite{lei2020conformal} showed that even when the weights are approximated, i.e., $\hat{w}(x, y) \neq w(x, y)$, we can still provide finite-sample upper and lower bounds on the coverage, albeit with an error term $\Delta_w$ (defined in \eqref{delta_w}). Next, we extend this result to our setting when the weight function $w(x, y)$ is approximated as in Section \ref{sec:weights}.

\begin{proposition}\label{prop2}
Let $\hat{C}_n$ be the conformal predictive intervals obtained as in Proposition \ref{coverage_theorem}, with weights $w(x,y)$ replaced by approximate weights $\hat{w}(x,y) = \hat{w}(x,y;\mathcal{D}_{tr})$, where the training data, $\mathcal{D}_{tr}$, is fixed. Assume that $\hat{w}(x, y)$ satisfies $(\expb[\hat{w}(X,Y)^r])^{1/r} \leq M_r < \infty$ for some $r \geq 2$.
Define $\Delta_w$ as,
\begin{align}
    \Delta_w \coloneqq \tfrac{1}{2}\expb \mid \hat{w}(X,Y) - w(X,Y)\mid  \label{delta_w}.\\
    \text{Then, } \hspace{0.2cm} \tarprob(Y\in \hat{C}_n(X)) \geq 1-\alpha - \Delta_w.\nonumber
\end{align}
If, in addition, non-conformity scores $\{V_i\}_{i=1}^n$ have no ties almost surely, then we also have
\begin{align}
    \tarprob(Y\in \hat{C}_n(X)) \leq 1-\alpha + \Delta_w + cn^{1/r-1}, \nonumber
\end{align}
for some positive constant $c$ depending only on $M_r$ and $r$.
\end{proposition}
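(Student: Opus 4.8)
The plan is to reduce the approximate-weight case to the exact-weight statement of Proposition~\ref{coverage_theorem}, following the argument of \cite{lei2020conformal} for conformal prediction under covariate shift, now with the off-policy weight $w(x,y)=\mathrm{d}P^{\tar}_{Y\mid X}/\mathrm{d}P^{\beh}_{Y\mid X}(x,y)$ and its plug-in estimator $\hat{w}$ from Section~\ref{sec:weights}. Throughout, $\mathcal{D}_{tr}$ is held fixed, so $\hat{w}$ is a deterministic function and all probabilities are conditional on $\mathcal{D}_{tr}$; the moment hypothesis $(\expb[\hat{w}(X,Y)^r])^{1/r}\le M_r<\infty$ with $r\ge2$ enters only in the upper bound. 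For the lower bound I would first note that, had the exact weights $w$ been used in Algorithm~\ref{cp_covariate_shift}, Proposition~\ref{coverage_theorem} already gives $\tarprob(Y\in\hat{C}_n(X))\ge1-\alpha$. With $\hat{w}$ instead, the only change is that the quantile $\eta(x,y)$ is read off the discrete measure $\hat{F}_n^{x,y}$ whose atom masses form the self-normalised vector $p^{\hat w}(x,y)$ rather than $p^{w}(x,y)$. By weighted exchangeability the law of which calibration index a fresh target-policy point matches is governed by the \emph{exact} normalised weights $p^{w}$, so by the standard rank argument the coverage is of the form $\expb[\sum_i p_i^{w}(X,Y)\,\ind\{V_i\le\eta(X,Y)\}]$, whereas the construction of $\eta$ forces $\sum_i p_i^{\hat w}(X,Y)\,\ind\{V_i\le\eta(X,Y)\}\ge1-\alpha$; subtracting, the coverage deficit is at most $\expb\|p^{w}(X,Y)-p^{\hat w}(X,Y)\|_1$, which a direct computation (as in \cite{lei2020conformal}) bounds by $\Delta_w=\tfrac12\expb|\hat{w}(X,Y)-w(X,Y)|$. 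Equivalently, one may pass to the tilted law whose density ratio against $P^{\beh}_{X,Y}$ is $\hat{w}/\expb[\hat{w}]$, for which $\hat{C}_n$ has exact coverage $\ge1-\alpha$ by Proposition~\ref{coverage_theorem}, and then transfer back via the total-variation distance to $P^{\tar}_{X,Y}$, which is of order $\Delta_w$. Either route yields $\tarprob(Y\in\hat{C}_n(X))\ge1-\alpha-\Delta_w$.

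For the upper bound, the no-ties assumption on $\{V_i\}_{i=1}^n$ removes atoms at the empirical quantile, so the overshoot of the coverage above $1-\alpha$ stems entirely from $\eta(x,y)$ being the quantile of the \emph{random} measure $\hat{F}_n^{x,y}$ rather than of its population analogue. Bounding this overshoot reduces to a concentration estimate for the normalising sum $\tfrac1n\sum_{i=1}^n\hat{w}(X_i,Y_i)$ around its mean; under only an $r$-th moment bound on $\hat{w}$, a von~Bahr--Esseen / Marcinkiewicz--Zygmund inequality gives fluctuations of order $n^{1/r-1}$, which is the origin of the term $cn^{1/r-1}$ with $c$ depending only on $M_r$ and $r$. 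Adding the $\Delta_w$ incurred by the same $w$-versus-$\hat{w}$ comparison as above then gives $\tarprob(Y\in\hat{C}_n(X))\le1-\alpha+\Delta_w+cn^{1/r-1}$.

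The main obstacle is this last step: making the overshoot argument precise under merely $r$-th moment control of $\hat{w}$ — rather than boundedness, where one would instead recover the sharper $1/n$ correction familiar from ordinary conformal prediction — and carrying the dependence of $c$ on $M_r$ and $r$ through the moment inequality. The lower bound is, by contrast, essentially immediate once Proposition~\ref{coverage_theorem} is available and the normalised-weight perturbation has been estimated; I would import the remaining bookkeeping from \cite{lei2020conformal} more or less verbatim, the only genuinely new ingredient being the off-policy reading of the weights $w$ and $\hat{w}$.
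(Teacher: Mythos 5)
Your lower-bound argument is fine: the second route you describe (tilt $P^{\beh}_{X,Y}$ by $\hat{w}$, apply Proposition \ref{coverage_theorem} exactly under the tilted law where $\hat{w}$ \emph{is} the true ratio, then transfer back to $P^{\pi^*}_{X,Y}$ by total variation, which equals $\Delta_w$ after the scale-invariant normalisation $\expb[\hat{w}(X,Y)]=1$) is precisely the paper's proof, and your alternative $\ell_1$-perturbation route would also work with a little bookkeeping.

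The gap is in the upper bound, which you yourself flag as "the main obstacle" but then sketch with the wrong mechanism. The term $cn^{1/r-1}$ does not come from concentration of the normalising sum $\tfrac{1}{n}\sum_{i}\hat{w}(X_i,Y_i)$: under an $r$-th moment bound with $r\geq 2$ that sum has finite variance and its fluctuations are $O(n^{-1/2})$, which for $r>2$ is strictly \emph{larger} than $n^{1/r-1}$, so a von Bahr--Esseen / Marcinkiewicz--Zygmund bound on the sum would at best give a weaker statement than the proposition claims; moreover, concentration of the sum alone does not control the coverage overshoot at all. The correct argument (the paper's, following \cite{lei2020conformal}) works under the tilted measure: the no-ties rank argument gives
\begin{align}
\ttar\bigl(Y\in\hat{C}_n(X)\bigr)\;\leq\; 1-\alpha \;+\; \E\Bigl[\max_{i\in[n+1]} p_i^{\hat{w}}(\tilde{X},\tilde{Y})\Bigr],
\end{align}
so the overshoot is governed by the expected \emph{maximum self-normalised weight}. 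One then splits on the event $\mathcal{A}=\{\sum_{i=1}^n\hat{w}(X_i,Y_i)\leq n/2\}$, whose probability is $O(M_r^2/n)$ by Chebyshev (this is the only place the denominator's concentration is used), and on $\mathcal{A}^c$ bounds the maximum by $\tfrac{2}{n}\bigl(\E[\hat{w}(\tilde{X},\tilde{Y})]+\E\max_i\hat{w}(X_i,Y_i)\bigr)$, where $\E[\hat{w}(\tilde{X},\tilde{Y})]=\expb[\hat{w}(X,Y)^2]\leq M_r^2$ (this is where $r\geq 2$ enters) and $\E\max_i\hat{w}(X_i,Y_i)\leq\bigl(\sum_i\expb[\hat{w}(X_i,Y_i)^r]\bigr)^{1/r}\leq n^{1/r}M_r$. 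The maximum of $n$ i.i.d.\ weights with bounded $r$-th moment, divided by the normalisation $n$, is the true origin of $n^{1/r-1}$; your route misses this step, and without it the constant's dependence on $M_r,r$ and the stated rate cannot be recovered. Finally, the upper bound must also be established under the tilted law first and then transferred by the same TV argument, contributing the additional $\Delta_w$.
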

Proposition \ref{prop2} provides finite-sample guarantees with approximate weights $\hat{w}(\cdot, \cdot)$. Note that if the weights are known exactly then the above proposition can be simplified by setting $\Delta_w =0$. In the case where the weight function is estimated \textit{consistently}, we recover the exact coverage asymptotically. A natural question to ask is whether the consistency of $\hat{w}(x, y)$ implies the consistency of $\hat{P}(y|x, a)$; in which case one could use $\hat{P}(y|x, a)$ directly to construct the intervals. We prove that this is not the case in general and provide detailed discussion in Appendix \ref{sec:weights_estimation_app}. 

\subsection{Conditional coverage}\label{sec:cond_cov}
So far we only considered marginal coverage \eqref{guarantee}, where the probability is over both $X$ and $Y$. Here, we provide results on conditional coverage $\p_{Y \sim P^{\pi^*}_{Y \mid X}}(Y \in \hat{C}_n(X) \mid X)$ which is a strictly stronger notion of coverage than marginal coverage \citep{foygel2021limits}. \cite{vovk2012, lei2014distribution} prove that exact conditional coverage cannot be achieved without making additional assumptions. However, we show that, in the case where $Y$ is a continuous random variable and we can estimate the quantiles of $P^{\pi^*}_{Y \mid X}$ consistently, we get an approximate conditional coverage guarantee using the below proposition.
\begin{proposition}[Asymptotic conditional coverage]\label{conditional-res}
Let $m, n$ be the number of training and calibration data respectively, $\hat{q}_{\beta, m} (x)= \hat{q}_{\beta, m} (x; \mathcal{D}_{tr})$ be an estimate of the $\beta$-th conditional quantile $q_\beta (x)$ of $P^{\pi^*}_{Y \mid X=x}$, $\hat{w}_m(x, y) = \hat{w}_m(x, y; \mathcal{D}_{tr})$ be an estimate of $w(x,y)$ and $\hat{C}_{m,n}(x)$ be the conformal interval resulting from algorithm \ref{cp_covariate_shift} with score function $s(x, y) = \max \{y - \hat{q}_{\alpha_{hi}} (x), \hat{q}_{\alpha_{lo}} (x) - y \}$ where $\alpha_{hi} - \alpha_{lo} = 1 - \alpha$. Assume that the following hold:
\begin{enumerate}
    \item $\lim_{m \rightarrow \infty} \expb |\hat{w}_{m}(X, Y) -  w(X, Y)|  = 0$.
    \item there exists $r, b_1, b_2 > 0$ such that $P^{\pi^*}(y \mid x) \in [b_1, b_2]$ uniformly over all $(x, y)$ with $y \in [q_{\alpha_{lo}}(x) - r, q_{\alpha_{lo}}(x) + r] \cup [q_{\alpha_{hi}}(x) - r, q_{\alpha_{hi}}(x) + r]$,
    \item  $\exists k > 0$ s.t. $\lim_{m\rightarrow\infty} \mathbb{E}_{X\sim P_X}[H^k_{m}(X)] = 0$
    where $$H_m(x) = \max\{|\hat{q}_{\alpha_{lo}, m}(x) - q_{\alpha_{lo}}(x)|, |\hat{q}_{\alpha_{hi}, m}(x) - q_{\alpha_{hi}}(x)|\}$$
\end{enumerate}
Then for any $t > 0$, we have that $ \lim_{m, n \rightarrow \infty} \p(\p_{Y \sim P^{\pi^*}_{Y\mid X} }(Y\in \hat{C}_{m, n}(X) \mid X) \leq 1 - \alpha - t) = 0.$
\end{proposition}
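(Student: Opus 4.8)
The plan is to show that the conformal interval $\hat{C}_{m,n}(x)$ is sandwiched between two intervals of the form $[\hat{q}_{\alpha_{lo},m}(x) - \eta, \hat{q}_{\alpha_{hi},m}(x) + \eta]$ for a data-dependent threshold $\eta = \eta_{m,n}$, and then to track how this threshold behaves as $m,n\to\infty$. Concretely, with the quantile-style score $s(x,y) = \max\{y - \hat{q}_{\alpha_{hi},m}(x),\ \hat{q}_{\alpha_{lo},m}(x) - y\}$, the set $\{y : s(x,y)\le \eta\}$ is exactly the interval $[\hat{q}_{\alpha_{lo},m}(x) - \eta,\ \hat{q}_{\alpha_{hi},m}(x) + \eta]$, so the only subtlety is that the threshold $\eta(x,y)=\mathrm{Quantile}_{1-\alpha}(\hat F_n^{x,y})$ in \eqref{cp-sets} depends on $y$ through the self-normalising weight $p^w_{n+1}(x,y)$. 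The first step is therefore to argue that $\eta(x,y)$ varies little: since $p^w_{n+1}(x,y)$ contributes at most one atom of mass $O(1/n)$ in expectation (under the moment bound $M_r$), the dependence on $y$ perturbs the weighted quantile by a vanishing amount, and one can bound $\hat C_{m,n}(x)$ above and below by fixed intervals $[\hat q_{\alpha_{lo},m}(x) - \eta^\pm,\ \hat q_{\alpha_{hi},m}(x)+\eta^\pm]$ with $\eta^+ - \eta^- \to 0$.

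The second step is to control the oracle version of the threshold. Consider the hypothetical score computed with the \emph{true} quantiles, $s^\star(x,y) = \max\{y - q_{\alpha_{hi}}(x),\ q_{\alpha_{lo}}(x) - y\}$, and the weighted empirical quantile of $\{s^\star(X_i,Y_i)\}$ under the exact weights $w$. By Proposition \ref{prop2} (with $\Delta_w = 0$ since we are using exact weights here, and the $cn^{1/r-1}$ term), this oracle interval $\{y : s^\star(x,y)\le \eta^\star_n\}$ has target-policy coverage converging to $1-\alpha$; moreover, because $P^{\pi^*}(y\mid x)\in[b_1,b_2]$ near the true quantiles (assumption 2), coverage at level $1-\alpha$ with the score $s^\star$ forces $\eta^\star_n \to 0$ in probability. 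Then I would replace, one at a time, (i) the true quantiles $q_{\alpha_{lo}},q_{\alpha_{hi}}$ by their estimates, absorbing the discrepancy into the quantity $H_m(x)$ which $\to 0$ in $L^k(P_X)$ by assumption 3, and (ii) the exact weights $w$ by $\hat w_m$, absorbing that discrepancy via assumption 1 and the stability bound in the proof of Proposition \ref{prop2} ($\Delta_w\to 0$). Each replacement changes the effective threshold by an amount that is $o_p(1)$, so the realised threshold $\eta_{m,n}$ satisfies $\eta_{m,n}\to 0$ in probability.

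The final step assembles these pieces. Write the conditional coverage as
\begin{align*}
\p_{Y\sim P^{\pi^*}_{Y\mid X}}\!\big(Y\in\hat C_{m,n}(X)\mid X\big)
&\ge \p_{Y\sim P^{\pi^*}_{Y\mid X}}\!\big(Y\in [q_{\alpha_{lo}}(X) - \delta_{m,n}(X),\ q_{\alpha_{hi}}(X) + \delta_{m,n}(X)]\mid X\big),
\end{align*}
where $\delta_{m,n}(X) = \eta_{m,n} + H_m(X)$ bounds the total slack from threshold and quantile-estimation error. Since $q_{\alpha_{lo}},q_{\alpha_{hi}}$ are the true conditional quantiles, the right-hand side is at least $\alpha_{hi} - \alpha_{lo} - (b_2 - 0)\cdot 0 = 1-\alpha$ when $\delta_{m,n}(X) = 0$, and more generally is at least $1-\alpha - 2b_2\,\delta_{m,n}(X)$ using the density lower/upper bounds in assumption 2 (the interval only shrinks relative to the oracle quantile interval by at most $\delta_{m,n}(X)$ on each side, costing at most $b_2\delta_{m,n}(X)$ of mass per side). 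Hence for any $t>0$,
\begin{align*}
\p\!\left(\p_{Y\sim P^{\pi^*}_{Y\mid X}}(Y\in\hat C_{m,n}(X)\mid X)\le 1-\alpha - t\right)
\le \p\!\left(2 b_2\,\delta_{m,n}(X) \ge t\right) \to 0,
\end{align*}
where the convergence follows from $\eta_{m,n}\xrightarrow{p}0$ together with $\mathbb{E}_X[H_m^k(X)]\to 0$ (which gives $H_m(X)\xrightarrow{p}0$ over the randomness of $X$ and $\mathcal D_{tr}$, by Markov), using a union bound to split the event $\{2b_2\delta_{m,n}(X)\ge t\}$ into $\{\eta_{m,n}\ge t/(4b_2)\}$ and $\{H_m(X)\ge t/(4b_2)\}$.

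The main obstacle is the second step — converting a coverage statement for the oracle score into the conclusion $\eta^\star_n\xrightarrow{p}0$, and then propagating the quantile- and weight-estimation errors through the self-normalised weighted quantile without the errors compounding. This requires the two-sided density bound (assumption 2) to translate "coverage $\approx 1-\alpha$" into "threshold $\approx 0$", and careful bookkeeping to ensure the $L^k$ control of $H_m$ and the $L^1$ control of $\hat w_m - w$ each contribute only an additive $o_p(1)$ to the threshold rather than interacting multiplicatively; handling the $y$-dependence of the weighted quantile (step one) cleanly is the technical glue that makes this bookkeeping go through.
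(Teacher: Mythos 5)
Your overall skeleton for the outer layers matches the paper's: absorb the quantile-estimation error $H_m$ through the density upper bound $b_2$, then finish with Markov's inequality on $\mathbb{E}[H_m^k(X)]$, a choice of slack $\epsilon < \min(r/2, t/(2b_2))$, and a union bound. The genuine gap is in your middle step, the control of the conformal threshold. You set out to prove the two-sided statement $\eta_{m,n}\to 0$ in probability, deducing it from the marginal coverage guarantees of Proposition \ref{prop2} applied to an oracle score and then perturbing by $H_m$ and by $\hat w_m - w$. This is both more than the result needs and not available under the stated hypotheses: the proposition only asserts a \emph{lower} bound on conditional coverage, so nothing about a large (conservative) threshold needs to be ruled out; and your route leans on the upper-coverage half of Proposition \ref{prop2} and on your ``$O(1/n)$ in expectation'' treatment of the $y$-dependence of $p^{w}_{n+1}(x,y)$, both of which require the moment condition $(\expb[\hat w(X,Y)^r])^{1/r}\le M_r$ (and no ties), which Proposition \ref{conditional-res} does not assume. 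More fundamentally, a marginal coverage statement does not pin down the data-dependent quantile $\eta(x,y)$; turning ``coverage $\approx 1-\alpha$'' into ``$\eta^\star_n \to 0$ in probability'' is exactly the hard step, and your sketch leaves it unargued (you flag it yourself as the main obstacle).

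The paper's proof avoids all of this by establishing only the one-sided fact that is needed: for every $\epsilon\in(0,r/2)$, $\p(\eta(\tilde X,\tilde Y) < -\epsilon \mid \tilde X)\to 0$. The $y$-dependence of the threshold is removed by a monotone bound, not an $O(1/n)$ estimate: on $\{\eta(\tilde X,\tilde Y)<-\epsilon\}$ the self-normalised weighted fraction of calibration scores with $V_i\le-\epsilon$ must be at least $1-\alpha$, and this fraction only increases if the test-point weight is dropped from the denominator, which eliminates $(\tilde X,\tilde Y)$ entirely. Then the strong law of large numbers applied to $\tfrac1n\sum_i \hat w_i\mathbbm{1}(V_i\le-\epsilon)$ and $\tfrac1n\sum_i \hat w_i$, with Assumption 1 used to swap $\hat w_m$ for $w$, shows the fraction converges to $\tarprob(s(X,Y)\le-\epsilon)$, which by the density \emph{lower} bound $b_1$ of Assumption 2 together with Markov's inequality and Assumption 3 is at most $1-\alpha-\epsilon b_1/2 + o(1) < 1-\alpha$; hence the bad event has vanishing probability. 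If you keep your structure, replace your step two with this CDF/LLN argument. Also fix the orientation slip in your step three: the coverage of $\hat C_{m,n}(X)$ should be lower-bounded by the coverage of the oracle quantile interval \emph{shrunk} by the total slack (costing at most $b_2$ times the slack per side), not by a widened interval.
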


One caveat of Prop. \ref{conditional-res} is that Assumption 3 is rather strong. In general, consistently estimating the quantiles under the target policy $\pi^*$ is not straightforward given that we only have access to observational data from $\pi^b$. While one can use a weighted pinball loss to estimate quantiles under $\pi^*$, consistent estimation of these quantiles would require a consistent estimate of the weights (see Appendix  \ref{sec:estimating_target_quantiles}). Hence, unlike \cite[Theorem 1]{lei2020conformal}, our Prop. \ref{conditional-res} is not a ``\textit{doubly robust}'' result.

\begin{importantresultwithtitle}[title=Towards group balanced coverage]\noindent \label{sec:group_balanced_cov}
As pointed out by \cite{conf-bates}, we may want predictive intervals that have the same coverage across different groups, e.g., across male and female users \citep{Romano2020With}. Standard CP will not necessarily achieve this, as the coverage guarantee \eqref{guarantee} is over the entire population of users.
However, we can use COPP on each subgroup separately to obtain group balanced coverage. A more detailed discussion on how to construct such intervals has been included in Appendix \ref{sec:grp-bal}.
\end{importantresultwithtitle}

\section{Related work}\label{sec:related_work}
\paragraph{Conformal prediction} A number of works have explored the use of CP under distribution shift. The works of \cite{tibshirani2020conformal} and \cite{lei2020conformal} are particularly notable as they extend CP to the general setting of \textit{weighted exchangeability}.  In particular, \cite{lei2020conformal} use CP for counterfactual inference where the goal is to obtain predictive intervals on the outcomes of treatment and control groups. The authors formulate the counterfactual setting into that of covariate shift in the input space $\mathcal{X}$ and show that under certain assumptions, finite-sample coverage can be guaranteed.

Fundamentally, our work differs from \cite{lei2020conformal} by framing the problem as a shift in the conditional $P_{Y\mid X}$ rather than as a shift in the marginal $P_X$.
The resulting methodology we obtain from this then differs from \cite{lei2020conformal} in a variety of ways.
For example, while \cite{lei2020conformal} assume a deterministic target policy, COPP can also be applied to stochastic target policies, which have been used in a variety of applications, such as recommendation systems or RL applications \citep{swaminathan2016off, su2020doubly, farajtabar2018more}. 
Likewise, unlike \cite{lei2020conformal}, COPP is applicable to continuous action spaces, e.g., doses of medication administered.

In addition, when the target policy is deterministic, there is an important methodological difference between COPP and \cite{lei2020conformal}.
In particular, \cite{lei2020conformal} construct the intervals on outcomes by splitting calibration data w.r.t.\ actions.
In contrast, it can be shown that COPP uses the entire calibration data when constructing intervals on outcomes.
This is a consequence of integrating out the actions in the weights $w(x, y)$ \eqref{weight-est}, and empirically leads to smaller variance in coverage compared to \cite{lei2020conformal}.
See \ref{sec:comp_lc} for the experimental results comparing COPP to \cite{lei2020conformal} for deterministic policies.

\cite{osama2020learning} propose using CP to \textit{construct} robust policies in contextual bandits with discrete actions. Their methodology uses CP to choose actions and does not involve evaluating target policies. Hence, the problem being considered is orthogonal to ours. There has also been concurrent work adapting CP to individual treatment effect (ITE) sensitivity analysis model \citep{jin2021sensitivity, yin2021conformal}. Similar to our approach, these works formulate the sensitivity analysis problem as one of CP under the joint distribution shift $P_{X, Y}$. While our methodologies are related, the application of CP explored in these works, i.e. ITE estimation under unobserved confounding, is fundamentally different. 

\paragraph{Uncertainty in contextual bandits} Recall from the introduction, that most works in this area have focused on quantifying uncertainty in expected outcome (policy value) \citep{doubly-robust, uncertainty5}. Despite providing finite sample-guarantees on the expectation, these methods do not account for the variability in the outcome itself and in general are not adaptive w.r.t. $X$, i.e. they do not satisfy properties (i), (ii) from Sec. \ref{sec:problem_setup}. \cite{risk-assessment, chandak2021universal} on the other hand, propose off-policy assessment algorithms for contextual bandits w.r.t. a more general class of risk objectives such as Mean, CVaR etc. Their methodologies can be applied to our problem, to construct predictive intervals for off-policy outcomes. However, unlike COPP, these intervals are not adaptive w.r.t. $X$, i.e. do not satisfy property (i) in Sec. \ref{sec:problem_setup}. Moreover, they do not provide upper bounds on coverage probability, which often leads to overly conservative intervals, as shown in our experiments. Lastly, while distributional perspective has been explored in reinforcement learning \citep{distributional-rl}, no finite sample-guarantees are available to the best of our knowledge.
\section{Experiments} \label{sec:exp}

\paragraph{Baselines for comparison}
Given our problem setup, there are no established baselines. Instead, we compare our proposed method COPP to the following competing methods, which were constructed to capture the uncertainty in the outcome distribution and take into account the policy shift. 

\paragraph{Weighted Importance Sampling (WIS) CDF estimator} Given observational dataset $\mathcal{D}_{obs} = \{x_i, a_i, y_i\}_{i=1}^{n_{obs}}$, \cite{risk-assessment} proposed a non-parametric WIS-based estimator for the empirical CDF of $Y$ under $\pi^*$, 
$
\hat{F}_{WIS}(t) \coloneqq \frac{\sum_{i=1}^{n_{obs}} \hat{\rho}(a_i, x_i) \mathbbm{1}(y_i \leq t)}{\sum_{i=1}^{n_{obs}} \hat{\rho}(a_i, x_i)}
$
where $\hat{\rho}(a, x) \coloneqq \frac{\pi^*(a \mid x)}{\hat{\pi}^b(a \mid x)}$ are the importance weights. We can use $\hat{F}_{WIS}$ to get predictive intervals $[y_{\alpha/2}, y_{1-\alpha/2}]$ where $y_\beta \coloneqq \text{Quantile}_\beta(\hat{F}_{WIS})$. The intervals $[y_{\alpha/2}, y_{1-\alpha/2}]$ do not depend on $x$.

\begin{table}[t]
    \caption{Toy experiment results with required coverage $90\%$. While WIS intervals provide required coverage, the mean interval length is huge compared to COPP (see table \ref{tab:length_toy}).}
    \begin{minipage}[b]{.48\linewidth}
      \centering
      \subcaption{Mean coverage as a function of policy shift with 2 standard errors over 10 runs.}\label{tab:coverage_toy}
      \resizebox{1\columnwidth}{!}{%
        \begin{tabular}{lccc}
\toprule
Coverage &  $\Delta_{\epsilon}=0.0$ &  $\Delta_{\epsilon}=0.1$ &  $\Delta_{\epsilon}=0.2$ \\
\midrule
COPP (Ours)            &                    \textbf{0.90 $\pm$ 0.01}&                    \textbf{0.90 $\pm$ 0.01}&                    \textbf{0.91 $\pm$ 0.01}\\
WIS                  &                    \textbf{0.89 $\pm$ 0.01}&                     \textbf{0.91 $\pm$ 0.02}&                     0.94 $\pm$ 0.02\\
SBA                  &                     \textbf{0.90 $\pm$ 0.01}&                     0.88 $\pm$ 0.01&                     0.87 $\pm$ 0.01\\
\midrule
\midrule
COPP (GT weights Ours)      &                     \textbf{0.90 $\pm$ 0.01}&                     \textbf{0.90 $\pm$ 0.01}&                     \textbf{0.90 $\pm$ 0.01}\\
CP (no policy shift) &                     \textbf{0.90 $\pm$ 0.01}&                     0.87 $\pm$ 0.01&                     0.85 $\pm$ 0.01\\
CP (union) &                      0.96 $\pm$ 0.01 &         0.96 $\pm$ 0.01 &         0.96 $\pm$ 0.01 \\
\bottomrule
\end{tabular}
}
    \end{minipage}%
    \hspace{0.5cm}
    \begin{minipage}[b]{.48\linewidth}
      \centering
      \subcaption{Mean interval length as a function of policy shift with 2 standard errors over 10 runs.}\label{tab:length_toy}
      \resizebox{1\columnwidth}{!}{%
        \begin{tabular}{lccc}
\toprule
Interval Lengths &  $\Delta_{\epsilon}=0.0$ &  $\Delta_{\epsilon}=0.1$ &  $\Delta_{\epsilon}=0.2$ \\
\midrule
COPP (Ours)           &                     9.08 $\pm$ 0.10&                     9.48 $\pm$ 0.22&                     9.97 $\pm$ 0.38\\
WIS                  &                    \red{24.14 $\pm$ 0.30}&               \red{32.96 $\pm$ 1.80}&             \red{43.12 $\pm$ 3.49}\\
SBA                  &                     8.78 $\pm$ 0.12&                     8.94 $\pm$ 0.10&                     8.33 $\pm$ 0.09\\
\midrule
\midrule
COPP (GT weights Ours)      &                     8.91 $\pm$ 0.09&                     9.25 $\pm$ 0.12&                     9.59 $\pm$ 0.20\\
CP (no policy shift) &                     9.00 $\pm$ 0.10&                     9.00 $\pm$ 0.10&                     9.00 $\pm$ 0.10\\
CP (union) &                     10.66 $\pm$ 0.18 &         11.04 $\pm$ 0.2 &         11.4 $\pm$ 0.26 \\
\bottomrule
\end{tabular}%
}
    \end{minipage} 
\end{table}

\paragraph{Sampling Based Approach (SBA)} As mentioned in Sec. \ref{sec:weights}, we can directly use the estimated $\hat{P}(y\mid x, a)$ to construct the predictive intervals as follows. For a given $x^{test}$, we generate $A_i \overset{\textup{i.i.d.}}{\sim} \pi^*(\cdot \mid x^{test})$, and $Y_i \sim \hat{P}(\cdot \mid x^{test}, A_i)$ for $i \leq \ell$. We then define the predictive intervals for $x^{test}$ using the $\alpha/2$ and $1-\alpha/2$ quantiles of $\{Y_i\}_{i \leq \ell}$. While SBA is not a standard baseline, it is a natural comparison to make to answer the question of ``why not construct the intervals using $\hat{P}(y|x, a)$ directly''?

\subsection{Toy experiment}\label{sec:exp_toy} 
 We start with synthetic experiments and an ablation study, in order to dissect and understand our proposed methodology in more detail. We assume that our policies are stationary and there is overlap between the behaviour and target policy, both of which are standard assumptions \citep{risk-assessment, drobust, ope-rl}.
\subsubsection{Synthetic data experiments setup}

In order to understand how COPP works, we construct a simple experimental setup where we can control the amount of \textit{``policy shift''} and know the ground truth. In this experiment, $X \in \mathbb{R}$, $A \in \{1, 2, 3, 4\}$ and $Y \in \mathbb{R}$, where $X$ and $Y\mid x, a$ are normal random variables. Further details and additional experiments on continuous action spaces are given in Appendix \ref{sec:toy_experiments_descrip}.   

\paragraph{Behaviour and target policies}
We define a family of policies $\pi_\epsilon(a \mid x)$, where we use the parameter $\epsilon \in (0,1/3)$ to control the policy shift between target and behaviour policies. Exact form of $\pi_\epsilon(a \mid x)$ is given in \ref{sec:toy_experiments_descrip}. For the behaviour policy $\pi^b$, we use $\epsilon^b = 0.3$ (i.e. $\pi^b(a \mid x) \equiv  \pi_{0.3}(a \mid x)$), and for target policies $\pi^*$, we use $\epsilon^* \in \{0.1, 0.2, 0.3\}$. Using the true behaviour policy, $\pi^b$, we generate observational data $\mathcal{D}_{obs} = \{x_i, a_i, y_i\}_{i=1}^{n_{obs}}$ which is then split into training ($\mathcal{D}_{tr}$) and calibration ($\mathcal{D}_{cal}$) datasets, of sizes $m$ and $n$ respectively.

\paragraph{Estimation of ratios, $\hat{w}(x, y)$\,}
Using the training dataset $\mathcal{D}_{tr}$, we estimate $P(y | x, a)$ as $\hat{P}(y | x, a) = \mathcal{N}(\mu(x, a), \sigma(x, a))$, where $\mu(x, a), \sigma(x, a)$ are both neural networks (NNs). Similarly, we use NNs to estimate the behaviour policy $\hat{\pi}^b$ from $\mathcal{D}_{tr}$. Next, to estimate $\hat{w}(x, y)$, we use \eqref{weight-est} with $h = 500$.

\paragraph{Score}
For the score function, we use the same formulation as in \cite{romano2019conformalized}, i.e. $s(x, y) = \max\{ \hat{q}_{\alpha_{lo}}(x) - y, y - \hat{q}_{\alpha_{hi}}(x) \}$, where $\hat{q}_\beta(x)$ denotes the $\beta$ quantile estimate of $P^{\pi^b}_{Y\mid X=x}$ trained using pinball loss.

Lastly, our weights $w(x, y)$ depend on $x$ \textbf{and} $y$ and hence we use a grid of $100$ equally spaced out $y$'s in our experiments to determine the predictive interval which satisfies $\hat{C}_n(x) \coloneqq \{y: s(x,y) \leq \text{Quantile}_{1-\alpha}(\hat{F}_{n}^{x, y})\}$. This is parallelizable and hence does not add much computational overhead.

\paragraph{Results} Table \ref{tab:coverage_toy} shows the coverages of different methods as the policy shift $\Delta_{\epsilon}=\epsilon^b - \epsilon^*$ increases. The behaviour policy $\pi^b = \pi_{0.3}$ is fixed and we use $n=5000$ calibration datapoints, across 10 runs. Table \ref{tab:coverage_toy} shows, how COPP stays very close to the required coverage of $90\%$ across all target policies compared to WIS and SBA. WIS intervals are overly conservative i.e. above the required coverage, while the SBA intervals suffer from under-coverage i.e. below the required coverage. These results supports our hypothesis from Sec. \ref{sec:weights}, which stated that COPP is less sensitive to estimation errors of $\hat{P}(y|x, a)$ compared to directly using $\hat{P}(y|x, a)$ for the intervals, i.e. SBA. 

Next, Table \ref{tab:length_toy} shows the mean interval lengths and even though WIS has reasonable coverage for $\Delta_{\epsilon}=0.0$ and $0.1$, the average interval length is huge compared to COPP. Fig. \ref{fig:copp}b shows the predictive intervals for one such experiment with $\pi^* = \pi_{0.1}$ and $\pi^b = \pi_{0.3}$. We can see that SBA intervals are overly optimistic, while WIS intervals are too wide and are not adaptive w.r.t. $X$. COPP produces intervals which are much closer to the oracle intervals. 

\subsubsection{Ablation study} 

To isolate the effect of weight estimation error and policy shift, we conduct an ablation study, comparing COPP with estimated weights to COPP with Ground Truth (GT) weights and standard CP (assuming no policy shift). Table \ref{tab:coverage_toy} shows that at $\Delta_\epsilon = 0$, i.e. no policy shift, standard CP achieves the required coverage as expected. However the coverage of standard CP intervals decreases as the policy shift $\Delta_\epsilon$ increases. COPP, on the other hand, attains the required coverage of $90\%$, by adapting the predictive intervals with increasing policy shift. Table \ref{tab:length_toy} shows that the average interval length of COPP increases with increasing policy shift $\Delta_\epsilon$. Furthermore, Table \ref{tab:coverage_toy} illustrates that while COPP achieves the required coverage for different target policies, on average it is slightly more conservative than using COPP with GT weights. This can be explained by the estimation error in $\hat{w}(x,y)$. Additionally, to investigate the effect of integrating out the actions in \eqref{weight-est}, we also perform CP for each action $a$ separately (as in \cite{lei2020conformal}) and then take the union of the intervals across these actions. In the union method, the probability of an action being chosen is not taken into account, (i.e., intervals are independent of $\pi^*$) and hence the coverage is overly conservative as expected.

Lastly, we investigate how increasing the number of calibration data $n$ affects the coverage for all the methodologies. We observe that coverage of COPP is closer to the required coverage of $90\%$ compared to the competing methodologies. Additionally, the coverage of COPP converges to the required coverage as $n$ increases; see Appendix \ref{app:N-cal_exp_toy} for detailed experimental results.

\begin{table}[t]
\centering
\caption{Mean coverage as a function of policy shift $\Delta_\epsilon$ and 2 standard errors over 10 runs. COPP attains the required coverage of $90\%$, whereas the competing methods, WIS and SBA, are over-conservative i.e. coverage above $90\%$. In addition, when we do not account for the policy shift, standard CP becomes progressively worse with increasing policy shift.}\label{tab:MSR}
\resizebox{0.7\columnwidth}{!}{%
\begin{tabular}{lccccc}
\toprule
 &  $\Delta_{\epsilon}=0.0$ &  $\Delta_{\epsilon}=0.1$ &  $\Delta_{\epsilon}=0.2$ &  $\Delta_{\epsilon}=0.3$ &  $\Delta_{\epsilon}=0.4$ \\
\midrule
COPP (Ours)            &                  \textbf{0.90 $\pm$ 0.00}&                  \textbf{0.90 $\pm$ 0.02}&                  \textbf{0.90 $\pm$ 0.01}&                  \textbf{0.89 $\pm$ 0.01}&                  \textbf{0.91 $\pm$ 0.01}\\
WIS                  &                  1.00 $\pm$ 0.00&                  1.00 $\pm$ 0.00&                  0.92 $\pm$ 0.00&                  0.94 $\pm$ 0.00&                  0.91 $\pm$ 0.00\\
SBA                  &                  0.99 $\pm$ 0.00&                  0.99 $\pm$ 0.00&                  0.98 $\pm$ 0.00&                  0.97 $\pm$ 0.00&                  0.96 $\pm$ 0.00\\
\midrule
\midrule
CP (no policy shift) &                  \textbf{0.91 $\pm$ 0.02}&                 \textbf{ 0.92 $\pm$ 0.02}&                  0.93 $\pm$ 0.01&                  0.94 $\pm$ 0.01&                  0.96 $\pm$ 0.01\\
\bottomrule
\end{tabular}%
}
\end{table}

\subsection{Experiments on Microsoft Ranking Dataset}

We now apply COPP onto a real dataset i.e. the Microsoft Ranking dataset 30k \citep{msr, swaminathan2016off, bietti2018contextual}. Due to space constraints, we have added additional extensive experiments on UCI datasets in Appendix \ref{sec:UCI}.

\paragraph{Dataset}
The dataset contains relevance scores for websites recommended to different users, and comprises of $30,000$ user-website pairs. For each user-website pair, the data contains a $136$-dimensional feature vector, which consists of user's attributes corresponding to the website, such as length of stay or number of clicks on the website. Furthermore, for each user-website pair, the dataset also contains a relevance score, i.e. how relevant the website was to the user.

First, given a user, we sample (with replacement) $5$ websites from the data corresponding to that user. Next, we reformulate this into a contextual bandit where $a_i \in \{1,2,3,4,5\}$ corresponds to the action of recommending the $a_i$'th website to the user $i$. $x_i$ is obtained by combining the $5$ user-website feature vectors corresponding to the user $i$ i.e. $x_i \in \mathbb{R}^{5 \times 136}$. $y_i \in\{0,1,2,3,4\}$ corresponds to the relevance score for the $a_i$'th website, i.e. the recommended website. The goal is to construct prediction sets that are guaranteed to contain the true relevance score with a probability of $90\%$.

\paragraph{Behaviour and target policies} 
We first train a NN classifier model, $\hat{f}_\theta$, mapping each 136-dimensional user-website feature vector to the softmax scores for each relevance score class. We use this trained model $\hat{f}_\theta$ to define a family of policies which pick the most relevant website as predicted by $\hat{f}_\theta$ with probability $\epsilon$ and the rest uniformly with probability $(1-\epsilon)/4$ (see Appendix \ref{sec:MSR_experiments_decrip} for more details). Like the previous experiment, we use $\epsilon$ to control the shift between behaviour and target policies. For $\pi^b$, we use $\epsilon^b = 0.5$ and for $\pi^*$, $\epsilon^* \in \{0.1, 0.2, 0.3, 0.4, 0.5\}$. 

\paragraph{Estimation of ratios $\hat{w}(X, Y)$}
To estimate the $\hat{P}(y \mid x, a)$ we use the trained model $\hat{f}_\theta$ as detailed in Appendix \ref{sec:MSR_experiments_decrip}. To estimate the behaviour policy $\hat{\pi}^b$, we train a neural network classifier model $\mathcal{X} \rightarrow \mathcal{A}$, and we use \eqref{weight-est} to estimate the weights $\hat{w}(x, y)$.

\paragraph{Score} The space of outcomes $\mathcal{Y}$ in this experiment is discrete. We define $\hat{P}^{\pi^b}(y \mid x) = \sum_{i = 1}^5 \hat{\pi}^b(A = i|x) \hat{P}(y|x, A = i)$. Using similar formulation as in \cite{conf-bates}, we define the score:
$$
s(x, y) = \sum_{y' = 0}^4 \hat{P}^{\pi^b}(y' \mid x) \mathbbm{1}(\hat{P}^{\pi^b}(y' \mid x) \geq \hat{P}^{\pi^b}(y \mid x)).
$$
Since $\mathcal{Y}$ is discrete, we no longer need to construct a grid of $y$ values on which to compute $\text{Quantile}_{1-\alpha}(\hat{F}_{n}^{x, y})$. Instead, we will simply compute this quantity on each $y \in \mathcal{Y}$, when constructing the predictive sets $\hat{C}_{n}(x^{test})$.

\paragraph{Results}
Table \ref{tab:MSR} shows the coverages of different methodologies across varying target policies $\pi_{\epsilon^*}$. The behaviour policy $\pi^b = \pi_{0.5}$ is fixed and we use $n=5000$ calibration datapoints, across 10 runs. Table \ref{tab:MSR} also shows that the coverage of WIS and SBA sets is dependent upon the policy shift, with both being overly conservative across the different target policies as compared to COPP. Recall that the WIS sets do not depend on $x^{test}$ and as a result we get the same set for each test data point. This becomes even more problematic when $Y$ is discrete -- if, for each label $y$, $\tarprob(Y = y)>10\%$, then WIS sets (with the required coverage of $90\%$) are likely to contain every label $y \in \mathcal{Y}$.
In comparison, COPP is able to stay much closer to the required coverage of $90\%$ across all target policies. We have also added standard CP without policy shift as a sanity check, and observed that the sets get increasingly conservative as the policy shift increases.

Finally, we also plotted how the coverage changes as the number of calibration data $n$ increases. We observe again that the coverage of COPP is closer to the required coverage of $90\%$ compared to the competing methodologies. Due to space constraints, we have added the plots in Appendix \ref{app:N-cal_exp_msr}.

\paragraph{Class-balanced conformal prediction}
Using the methodology described in Sec. \ref{sec:group_balanced_cov}, we construct predictive sets, $\hat{C}^{\mathcal{Y}}_n(x)$, which offer label conditioned coverage guarantees (see \ref{sec:grp-bal}), i.e. for all $y\in \mathcal{Y}$, 
$$
\tarprob(Y \in \hat{C}^{\mathcal{Y}}_n(X) \mid Y = y) \geq 1- \alpha.
$$
We empirically demonstrate that $\hat{C}^{\mathcal{Y}}_n$ provides label conditional coverage, while $\hat{C}_n$ obtained using alg. \ref{cp_covariate_shift} may not. Due to space constraints, details on construction of $\hat{C}^{\mathcal{Y}}_n$ as well as experimental results have been included in Appendix \ref{sec:results_class_bal_coverage}.

\section{Conclusion and limitations}\label{sec:lims}

In this paper, we propose COPP, an algorithm for constructing predictive intervals on off-policy outcomes, which are adaptive w.r.t. covariates $X$. We theoretically prove that COPP can guarantee finite-sample coverage by adapting the framework of conformal prediction to our setup.
Our experiments show that conventional methods cannot guarantee any user pre-specified coverage, whereas COPP can.
For future work, it would be interesting to apply COPP to policy training. This could be a step towards robust policy learning by optimising the worst case outcome \citep{stutz2021learning}.

We conclude by mentioning several limitations of COPP. 
Firstly, we do not guarantee conditional coverage in general.
We outline conditions under which conditional coverage holds asymptotically (Prop. \ref{conditional-res}), however, this relies on somewhat strong assumptions.
Secondly, our current method estimates the weights $w(x, y)$ through $P(y \mid x, a)$, which can be challenging.
We address this limitation in Appendix \ref{sec:alternate_weights_est}, where we propose an alternative method to estimate the weights directly, without having to model $P(y \mid x, a)$. %
Lastly, reliable estimation of our weights $\hat{w}(x, y)$ requires sufficient overlap between behaviour and target policies. The results from COPP may suffer in cases where this assumption is violated, which we illustrate empirically in Appendix \ref{subsec:cts_act}.
We believe these limitations suggest interesting research questions that we leave to future work.

\section*{Acknowledgements}
We would like to thank Andrew Jesson, Sahra Ghalebikesabi, Robert Hu, Siu Lun Chau and Tim Rudner for useful feedback.
JFT is supported by the EPSRC and MRC through the OxWaSP CDT programme (EP/L016710/1).
MFT acknowledges his PhD funding from Google DeepMind.
RC and AD are supported by the Engineering and Physical Sciences Research Council (EPSRC) through the Bayes4Health programme [Grant number EP/R018561/1].  

\newpage

    }
    { %
    }

    \includepdf[pages=-]{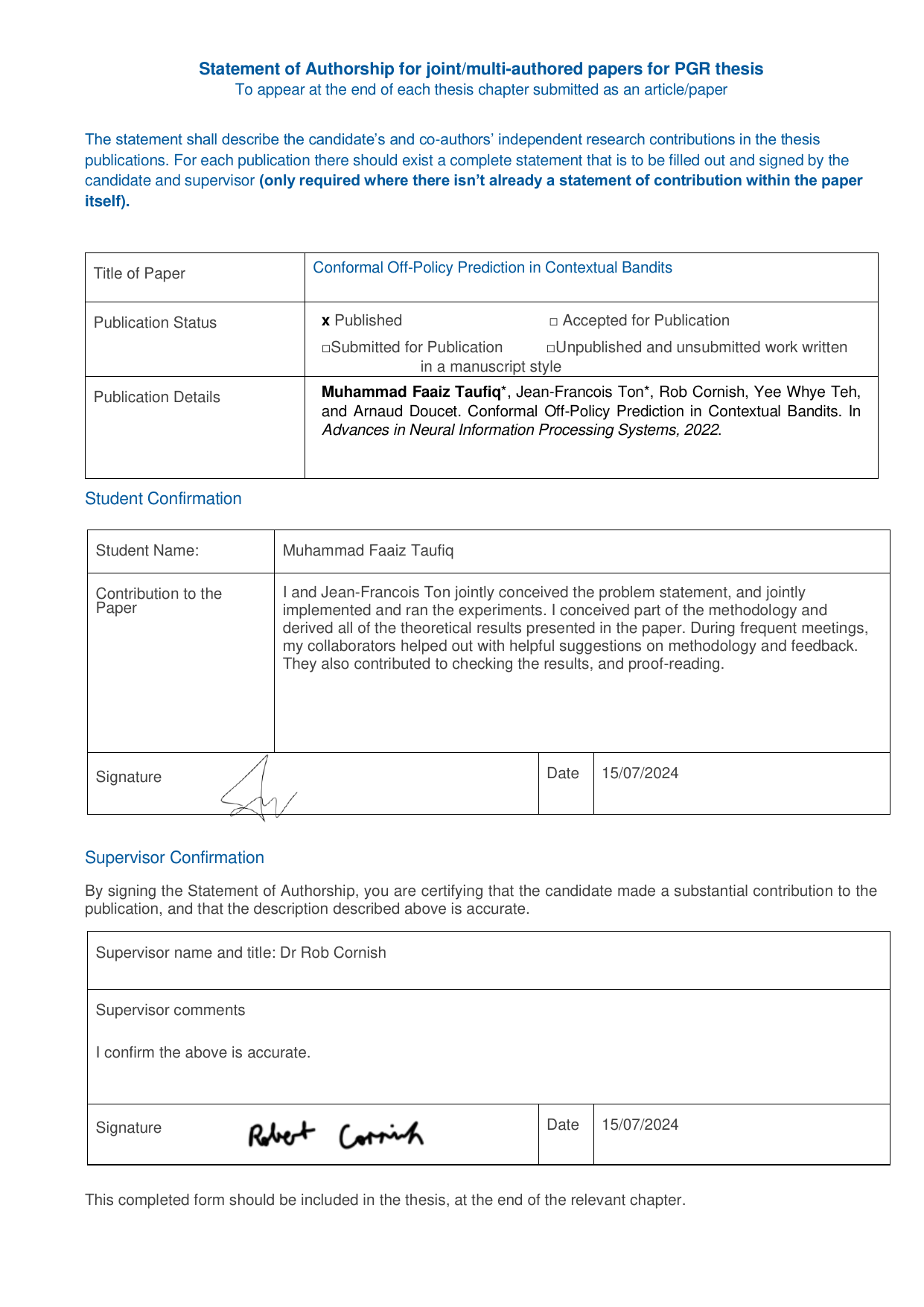}

\chapter{\label{ch:5-causal}Causal Falsification of Digital Twins}

\minitoc

\ifthenelse{\boolean{compilePapers}}
    { %

\begin{abstract}
\emph{Digital twins} are simulation-based models designed to predict how a real-world process will evolve in response to interventions.
This modelling paradigm holds substantial promise in many applications, but rigorous procedures for assessing their accuracy are essential for safety-critical settings.
We consider how to assess the accuracy of a digital twin using real-world data.
We formulate this as causal inference problem, which leads to a precise definition of what it means for a twin to be ``correct''. %
Unfortunately, fundamental results from causal inference mean observational data cannot be used to certify a twin in this sense unless potentially tenuous assumptions are made, such as that the data are unconfounded.
To avoid these assumptions, we propose instead to find situations in which the twin \emph{is not} correct, and present a general-purpose statistical procedure for doing so.
Our approach yields reliable and actionable information about the twin under only the assumption of an i.i.d.\ dataset of observational trajectories, and remains sound even if the data are confounded.
We apply our methodology to a large-scale, real-world case study involving sepsis modelling within the Pulse Physiology Engine, which we assess using the MIMIC-III dataset of ICU patients.
\end{abstract}
        \section{Introduction}

\subsection{Motivation}

There is increasing interest in the use of simulation-based models for obtaining \emph{causal} insights.
Such models aim to describe what \emph{would} occur when different actions or interventions are applied to some real-world process of interest, thereby allowing planning and decision-making to be done with a fuller understanding of the different outcomes that may result.
In many applications, models of this kind are referred to as \emph{digital twins} \citep{barricelli2019survey,jones2020characterising,niederer2021scaling}.
These have been considered for a wide range of use-cases including aviation \citep{tuegel2011reengineering}, manufacturing \citep{lu2020digital}, healthcare \citep{corral2020digital,coorey2022health}, civil engineering \citep{sacks2020construction}, and agriculture \citep{jans2020digital}.

Many applications of digital twins are considered safety-critical, which means the cost of deploying an inaccurate twin to production is potentially very high.
As such, methodology for assessing the performance of a twin before its deployment is essential for the safe, widespread adoption of digital twins in practice \citep{niederer2021scaling}.
In this work, we consider the problem of assessing twin accuracy and propose a concrete, theoretically grounded, and general-purpose methodology to this end.
We focus specifically on the use of statistical methods that leverage data obtained from the real-world process that the twin is designed to model.
Such strategies are increasingly viable for many applications as datasets grow larger, and offer the promise of lower overheads compared with alternative strategies that rely for instance on domain expertise.

We formulate twin assessment as a problem of \emph{causal inference} \citep{rubin1974estimating,rubin2005causal,pearl2009causality,hernan2020causal}.
In particular, we consider a twin to be accurate if it correctly captures the behaviour of a real-world process of interest in response to certain interventions, rather than the behaviour of the process as it evolves on its own.
This seems in keeping with the overall (if often implicit) design objectives that underlie many applications, including those cited above.
Our causal assessment approach also highlights certain pitfalls associated with conventional methods that do not account for causal factors, and that can give rise to misleading inferences about the twin as a result.

In most cases, it is desirable for an assessment procedure to be reliable and robust, and for its conclusions about the twin to be highly trustworthy.
As such, our goal in this paper is to obtain a methodology that is always \emph{sound}, even possibly at the expense of being conservative: we prefer not to draw any conclusion about the accuracy of the twin at all than to draw some conclusion that is potentially misleading.
To this end, we rely on minimal assumptions about the twin and the real-world process of interest.
In addition to improving robustness, this also means our resulting methodology is very general, and may be applied to a wide variety of twins across application domains.

\subsection{Contribution}

We begin by providing a causal model for a general-purpose twin and the data we have available for assessment.
We use this to show precisely that it is not possible to use observational data to \emph{certify} that the twin is causally accurate unless strong and often tenuous assumptions are made about the data-generating process, such as that the data are free of unmeasured confounding.
To avoid these assumptions, we propose an assessment paradigm instead based on \emph{falsification}: we search for specific cases when the twin is \emph{not} accurate, rather than trying to quantify its accuracy in a more holistic sense. %

To obtain a practical methodology suitable for real twins, we provide a novel set of longitudinal causal bounds that hold without additional causal assumptions.
These bounds generalize the classical bounds of \cite{manski}, and can be considerably more informative in comparison.
We use this result as the basis for a general-purpose statistical testing procedure for falsifying a twin.
Overall, our method relies on only the assumption of an independent and identically distributed (i.i.d.)\ dataset of observational trajectories: it does not require modelling the dynamics of the real-world process or any internal implementation details of the twin, and remains sound in the presence of arbitrary unmeasured confounding. %
We demonstrate the effectiveness of our procedure through a large-scale, real-world case study in which we use the MIMIC-III ICU dataset \citep{mimic} to assess the Pulse Physiology Engine \citep{pulse}, an open-source model for human physiology simulation.

\subsection{Related work}

Various high-level guidelines and workflows have been proposed for the assessment of digital twins in the literature to-date \citep{roy2011comprehensive,grieves2017digital,khan2018digital,corral2020digital,kochunas2021digital,niederer2021scaling,dahmen2022verification}. %
In some cases, these guidelines have been codified as standards: for example, the ASME V\&V40 Standard \citep{amse2018assessing} provides a risk-based framework for assessing the credibility of a model from a variety of factors that include source code quality and the mathematical form of the model \citep{galappaththige2022credibility}.
However, a significant gap still exists between these guidelines and a practical implementation that could be deployed for real twins, and the need for a rigorous lower-level framework to enable the systematic assessment of twins has been noted in this literature \citep{corral2020digital,niederer2021scaling,kapteyn2021probabilistic,masison2021modular}.
We contribute towards this effort by describing a precise statistical methodology for twin assessment that can be readily implemented in practice, and which is accompanied by theoretical guarantees of robustness that hold under minimal assumptions.

In addition, a variety of concrete assessment procedures have been applied to certain specific digital twin models in the literature.
For example, the Pulse Physiology Engine \citep{pulse}, which we consider in our empirical case study, as well as the related BioGears \citep{sepsis-modelling,biogears} were both assessed by comparing their outputs with ad hoc values based either on available medical literature or the opinions of subject matter experts.
Other twins have been assessed by comparing their outputs with real-world data through a variety of bespoke numerical schemes \citep{larrabide2012fast,hemmler2019patient,DT-patient,jans2020digital,galappaththige2022credibility}.
In contrast, our paper proposes a general-purpose statistical procedure for assessing twins that may be applied generically across many applications and architectures. %
To the best of our knowledge, our paper is also the first to identify the need for a causal approach to twin assessment and the pitfalls that arise when causal considerations are not properly accounted for.

\section{Causal formulation} \label{sec:causal-formulation}

\subsection{The real-world process}

We begin by providing a causal model the real-world process that the twin is designed to simulate.
We do so in the language of \emph{potential outcomes} \citep{rubin1974estimating,rubin2005causal}, although we note that we could have used the alternative framework of directed acyclic graphs and structural causal models \citep{pearl2009causality} (see also \cite{imbens2020potential} for a comparison of the two).
We assume the real-world process operates over a fixed time horizon $\T \in \{1, 2, \ldots\}$.
This simplifies our presentation in what follows, and it is straightforward to generalize our methodology to variable length time horizons if needed.
For each $\tx \in \{0, \ldots, \T\}$, we assume the process gives rise to an \emph{observation} at time $\tx$, which takes values in some real-valued space $\Xspace_\tx \coloneqq \R^{\Xspacedim_\tx}$.
We also assume that the process can be influenced by some \emph{action} taken at each time $\tx \in \{1, \ldots, \T\}$.
We denote the space of actions available at time $\tx$ by $\Aspace_\tx$, which in this work we assume is always finite.
For example, in a robotics context, the observations may consist of all the readings of all the sensors of the robot, and the actions may consist of commands that can be input by an external user.
In a medical context, the observations may consist of the vital signs of a patient, and the actions may consist of possible treatments or interventions.
To streamline notation, we will index these spaces using vector notation, so that e.g.\ $\Aspace_{1:\tx}$ denotes the cartesian product $\Aspace_1 \times \cdots \times \Aspace_\tx$, and $\ax_{1:\tx} \in \Aspace_{1:\tx}$ is a choice of $\ax_1 \in \Aspace_1, \ldots, \ax_\tx \in \Aspace_\tx$.

We model the dynamics of the real-world process via the longitudinal potential outcomes framework proposed by \cite{robins1986new}, which imposes only a weak temporal structure on the underlying phenomena of interest and so may be applied across a wide range of applications in practice.
In particular, for each $\ax_{1:\T} \in \Aspace_{1:\T}$, we posit the existence of random variables or \emph{potential outcomes} $\X_0, \X_1(\ax_1), \ldots, \X_\T(\ax_{1:\T})$, where $\X_\tx(\ax_{1:\tx})$ takes values in $\Xspace_\tx$.
We will denote this sequence more concisely as $\X_{0:\T}(\ax_{1:\T})$.
Intuitively, $\X_0$ represents data available before the first action, while $\X_{1:\T}(\ax_{1:\T})$ represents the sequence of real-world outcomes that \emph{would} occur if actions $\ax_{1:\T}$ were taken successively.
These quantities are therefore of fundamental interest for planning a course of actions to achieve some desired result.

As random variables, each $\X_{\tx}(\ax_{1:\tx})$ may depend on additional randomness that is not explicitly modelled, and so in particular may be influenced by all the previous potential outcomes $\X_{0:\tx-1}(\ax_{1:\tx-1})$, and possibly other random quantities.
This models a process whose initial state is determined by external factors, such as when a patient from some population first presents at a hospital, and where the process then evolves according to specific actions chosen from $\Aspace_{1:\T}$ as well as additional external factors.
It is clear that this structure applies to a wide range of phenomena occurring in practice.

\subsection{The digital twin}

We think of the twin as a computational device that, when executed, outputs a sequence of values intended to simulate a possible future trajectory of the real-world process when certain actions in $\Aspace_{1:\T}$ are chosen, conditional on some initial data in $\Xspace_0$.
We allow the twin to make use of an internal random number generator to produce outputs that vary stochastically even under fixed inputs (although our framework encompasses twins that evolve deterministically also).
By executing the twin repeatedly, a user may therefore estimate the range of behaviours that the real-world process may exhibit under different action sequences, which can then inform planning and decision-making downstream.

Precisely, we model the output the twin would produce at timestep $\tx \in \{1, \ldots, \T\}$ after receiving initialisation $\xx_0 \in \Xspace_0$ and successive inputs $\ax_{1:\tx} \in \Aspace_{1:\tx}$ as the quantity $\twinfunction_\tx(\xx_0, \ax_{1:\tx}, \twinnoise_{1:\tx})$, where $\twinfunction_\tx$ is a measurable function taking values in $\Xspace_{\tx}$, and each $\twinnoise_\sx$ is some (possibly vector-valued) random variable.
We will denote $\Xt_{\tx}(\xx_0, \ax_{1:\tx}) \coloneqq \twinfunction_\tx(\xx_0, \ax_{1:\tx}, \twinnoise_{1:\tx})$, which we also refer to as a potential outcome.
A full twin trajectory therefore consists of $\Xt_1(\xx_0, \ax_1), \ldots, \Xt_\T(\xx_0, \ax_{1:\T})$, which we write more compactly by $\Xt_{1:\T}(\xx_0, \ax_{1:\T})$.
Conceptually, $\twinfunction_1, \ldots, \twinfunction_\T$ constitute the program that executes inside the twin, and $\twinnoise_{1:\T}$ may be thought of as the collection of all outputs of the internal random number generator that the twin uses.
We assume these random numbers $\twinnoise_{1:\T}$ and the real-world outcomes $(\X_{0:\T}(\ax_{1:\T}) : \ax_{1:\T} \in \Aspace_{1:\T})$ are independent, which is mild in practice.
We also assume that repeated executions of the twin give rise to i.i.d.\ copies of $\twinnoise_{1:\T}$.
This means that, given fixed inputs $\xx_0$ and $\ax_{1:\T}$, repeated executions of the twin produce i.i.d.\ copies of $\Xt_{1:\T}(\xx_0, \ax_{1:\T})$.
Otherwise, we make no assumptions about the precise form of either the $\twinfunction_\tx$ or the $\twinnoise_\tx$, which allows our model to encompass a wide variety of possible twin implementations.

\subsection{Correctness}

Before we can consider how to assess the twin, we must first define how we want the twin ideally to behave.
The following condition seems appropriate for many applications.

\begin{definition}[Correctness] \label{eq:interventional-correctness}
    The twin is \emph{interventionally correct} if, for $\Law[\X_0]$-almost all $\xx_0$ and $\ax_{1:\T} \in \Aspace_{1:\T}$, distribution of $\Xt_{1:\T}(\xx_0, {\ax}_{1:\T})$ is equal to the conditional distribution of $\X_{1:\T}(\ax_{1:\T})$ given $\X_0 = \xx_0$.
\end{definition}

Operationally, if a twin is interventionally correct, then by repeatedly executing the twin and applying Monte Carlo techniques, it is possible to approximate arbitrarily well the conditional distribution of the future of the real-world process under each possible choice of action sequence.
The same can also be shown to hold when each action at each time $\tx$ is chosen dynamically on the basis of previous observations in $\Xspace_{0:\tx}$. %
As a result, an interventionally correct twin may be used for \emph{planning}, or in other words may be used to select a policy for choosing actions that will yield a desirable distribution over observations at each step.
We emphasise that interventional correctness does not mean the twin will accurately predict the behaviour of any \emph{specific} trajectory of the real-world process in an almost sure sense (unless the real-world process is deterministic), but only the distribution of outcomes that will be observed over repeated independent trajectories.
However, this is sufficient for many applications, and appears to be the strongest guarantee possible when dealing with real-world phenomena whose underlying behaviour is stochastic.

Definition \ref{eq:interventional-correctness} introduces some technical difficulties that arise in the general case when conditioning on events with probability zero (e.g.\ $\{\X_0 = \xx_0\}$ if $\X_0$ is continuous).
In what follows, it is more convenient to consider an unconditional formulation of interventional correctness.
This is supplied by the following result, which considers the behaviour of the twin when it is initialised with the (random) value of $\X_0$ taken from the real-world process, rather than with a fixed choice of $\xx_0$.
See Section \ref{sec:unconditional-interventional-correctness-proof} of the \AppendixName for a proof.

\begin{proposition} \label{prop:interventional-correctness-alternative-characterisation}
    The twin is interventionally correct if and only if, for all choices of $\ax_{1:\T} \in \Aspace_{1:\T}$, the distribution of $(\X_0, \Xt_{1:\T}(\X_0, \ax_{1:\T}))$ is equal to the distribution of $\X_{0:\T}(\ax_{1:\T})$.
\end{proposition}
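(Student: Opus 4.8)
The plan is to recast both sides of the claimed equivalence as statements about regular conditional distributions given $\X_0$, and then to conclude using the essential uniqueness of such disintegrations. All the joint laws involved live on finite products of the spaces $\Xspace_\tx = \R^{\Xspacedim_\tx}$, which are standard Borel, so regular conditional distributions exist throughout.

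Fix $\ax_{1:\T} \in \Aspace_{1:\T}$ and set $P \coloneqq \Law[\X_{0:\T}(\ax_{1:\T})]$ and $Q \coloneqq \Law[(\X_0, \Xt_{1:\T}(\X_0, \ax_{1:\T}))]$, two probability measures on $\Xspace_0 \times \Xspace_{1:\T}$. Both have first marginal $\Law[\X_0]$ (for $Q$ this is immediate, since its first coordinate is literally $\X_0$). A regular conditional distribution for $P$ given the first coordinate is, by definition, a version of $\xx_0 \mapsto \Law[\X_{1:\T}(\ax_{1:\T}) \mid \X_0 = \xx_0]$. For $Q$, the key observation is that $\Xt_{1:\T}(\X_0, \ax_{1:\T}) = \twinfunction_{1:\T}(\X_0, \ax_{1:\T}, \twinnoise_{1:\T})$ for a measurable tuple-valued map $\twinfunction_{1:\T}$, and that $\twinnoise_{1:\T}$ is independent of the real-world potential outcomes — in particular of $\X_0$ — by assumption. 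The standard independence (``freezing'') lemma then gives that $\xx_0 \mapsto \Law[\twinfunction_{1:\T}(\xx_0, \ax_{1:\T}, \twinnoise_{1:\T})] = \Law[\Xt_{1:\T}(\xx_0, \ax_{1:\T})]$ is a regular conditional distribution for $Q$ given the first coordinate.

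Since two probability measures on $\Xspace_0 \times \Xspace_{1:\T}$ with the same first marginal $\Law[\X_0]$ are equal if and only if their regular conditional distributions given the first coordinate agree for $\Law[\X_0]$-almost every $\xx_0$, we obtain that $P = Q$ if and only if $\Law[\Xt_{1:\T}(\xx_0, \ax_{1:\T})] = \Law[\X_{1:\T}(\ax_{1:\T}) \mid \X_0 = \xx_0]$ for $\Law[\X_0]$-a.a.\ $\xx_0$. Ranging over the finite set $\Aspace_{1:\T}$ and taking a common null set (a finite union), ``$Q = P$ for all $\ax_{1:\T}$'' is then exactly the statement that the twin is interventionally correct in the sense of Definition \ref{eq:interventional-correctness}. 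Noting that $\Law[\X_{0:\T}(\ax_{1:\T})] = \Law[(\X_0, \X_{1:\T}(\ax_{1:\T}))]$ trivially, this is precisely the equivalence asserted in Proposition \ref{prop:interventional-correctness-alternative-characterisation}.

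The genuinely substantive step is identifying the disintegration of $Q$: everything rests on correctly applying the independence lemma to the composition $\twinfunction_{1:\T}(\X_0, \ax_{1:\T}, \twinnoise_{1:\T})$, which is where the modelling assumption that the twin's internal randomness $\twinnoise_{1:\T}$ is independent of the real-world process (together with measurability of the $\twinfunction_\tx$) is used. The remaining ingredients — existence of regular conditional distributions on standard Borel spaces, their essential uniqueness, and the finite union of null sets over $\Aspace_{1:\T}$ — are routine bookkeeping.
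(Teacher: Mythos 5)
Your proposal is correct and follows essentially the same route as the paper's proof: disintegrate both joint laws along their common $\X_0$-marginal, identify $\xx_0 \mapsto \Law[\Xt_{1:\T}(\xx_0, \ax_{1:\T})]$ as a regular conditional distribution of $\Xt_{1:\T}(\X_0, \ax_{1:\T})$ given $\X_0$ using the independence of $\twinnoise_{1:\T}$ from $\X_0$, and conclude by essential uniqueness of the disintegration. The only cosmetic difference is that the paper verifies the freezing-lemma step by an explicit kernel computation where you cite the standard independence lemma, which is a legitimate shortcut.
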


\subsection{Online prediction}

Our model here represents a twin at time $\tx = 0$ making predictions about all future timesteps $\tx \in \{1, \ldots, \T\}$ under different choices of inputs $\ax_{1:\T}$.
In practice, many twins are designed to receive new information at each timestep in an online fashion and update their predictions for subsequent timesteps accordingly \citep{grieves2017digital,niederer2021scaling}.
Various notions of correctness can be devised for this online setting.
We describe two possibilities in Section \ref{sec:online-prediction} of the \AppendixName, and show that these notions of correctness essentially reduce to Definition \ref{eq:interventional-correctness}, which motivates our focus on that notion in what follows.

\section{Data-driven twin assessment} \label{sec:data-driven-twin-assessment}

\subsection{Overall setup}

There are many conceivable methods for assessing the accuracy of a twin, including static analysis of the twin's source code and the solicitation of domain expertise, and in practice it seems most robust to use a combination of different techniques rather than relying on any single one \citep{amse2018assessing,niederer2021scaling}.
However, in this paper, we focus on what we will call \emph{data-driven assessment}, which we see as an important component of a larger assessment pipeline. %
That is, we consider the use of statistical methods that rely solely on a dataset of trajectories obtained from the real-world process and the twin.
We show in this section that without further assumptions, it is not possible to obtain a data-driven assessment procedure that can \emph{certify} that a twin is interventionally correct.
We instead propose a strategy based on \emph{falsifying} the twin, which we develop into a concrete statistical testing procedure in later sections.

We will assume access to a dataset of trajectories obtained by observing the interaction of some behavioural agents with the real-world process.
We model each trajectory as follows.
First, we represent the action chosen by the agent at time $\tx \in \{1, \ldots, \T\}$ as an $\Aspace_\tx$-valued random variable $\A_\tx$.
We then obtain a trajectory in our dataset by recording at each step the action $\A_\tx$ chosen and the observation $\X_\tx(\A_{1:\tx})$ corresponding to this choice of action.
As a result, each observed trajectory has the following form:
\begin{equation} \label{eq:observed-data-trajectory}
    \X_0, \A_1, \X_1(\A_1), \ldots, \A_\T, \X_\T(\A_{1:\T}).
\end{equation}
This corresponds to the standard \emph{consistency} assumption in causal inference \citep{hernan2020causal}, and intuitively means that the potential outcome $\X_\tx(\ax_{1:\tx})$ is observed in the data when the agent actually chose $\A_{1:\tx} = \ax_{1:\tx}$.
We model our full dataset as a set of i.i.d.\ copies of \eqref{eq:observed-data-trajectory}. %

\subsection{Certification is unsound in general}

A natural high-level strategy for twin assessment has the following structure.
First, some hypothesis $\Hyp$ is chosen with the following property:
 \begin{equation} \label{eq:verification-hypothesis-property}
    \text{If $\Hyp$ is true, then the twin is interventionally correct.}%
\end{equation}
Data is then used to try to show $\Hyp$ is true, perhaps up to some level of confidence. %
If successful, it follows by construction that the twin is interventionally correct.
Assessment procedures designed to \emph{certify} the twin in this way are appealing because they promise a strong guarantee of accuracy for certified twins. %
Unfortunately, the following foundational result from the causal inference literature (often referred to as the \emph{fundamental problem of causal inference} \citep{holland1986statistics}) means that data-driven certification procedures of this kind are in general unsound, as we explain next.
For completeness, Section \ref{sec:non-identifiability-result-proof-supp} of the \AppendixName includes a self-contained proof of this result in our notation.

\begin{theorem} \label{prop:nonidentifiability}
    If $\Prob(\A_{1:\T} \neq \ax_{1:\T}) > 0$, then the distribution of $\X_{0:\T}(\ax_{1:\T})$ is not uniquely identified by the distribution of the data in \eqref{eq:observed-data-trajectory} without further assumptions.
\end{theorem}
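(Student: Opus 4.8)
The plan is to prove non-identifiability in the usual way: exhibit two data-generating models for the real-world process that induce \emph{the same} distribution over observed trajectories \eqref{eq:observed-data-trajectory} but assign \emph{different} distributions to the target potential outcome $\X_{0:\T}(\ax_{1:\T})$. Rather than building both from scratch, I would start from an \emph{arbitrary} model $P$ for the collection $\big(\X_0,\, (\X_{1:\T}(\bar\ax_{1:\T}))_{\bar\ax_{1:\T}\in\Aspace_{1:\T}},\, \A_{1:\T}\big)$ that is consistent in the sense of \eqref{eq:observed-data-trajectory} and satisfies $P(\A_{1:\T}\neq\ax_{1:\T})>0$ (this covers, in particular, any model compatible with a prescribed observed-data law satisfying the hypothesis), realise all its variables on one probability space, and construct a second model $Q$ by performing a small local surgery on a \emph{single} potential outcome — the terminal one along the target branch.

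Concretely, I would leave $\X_0$, all the behavioural actions $\A_{1:\T}$, and every potential outcome $\X_\tx(\bar\ax_{1:\tx})$ with $(\tx,\bar\ax_{1:\tx})\neq(\T,\ax_{1:\T})$ exactly as under $P$, and replace only the terminal potential outcome along the target branch by
\[
\X_\T^Q(\ax_{1:\T}) \coloneqq
\begin{cases}
\X_\T(\ax_{1:\T}) & \text{if } \A_{1:\T}=\ax_{1:\T},\\[2pt]
v & \text{if } \A_{1:\T}\neq\ax_{1:\T},
\end{cases}
\]
for a fixed constant $v\in\Xspace_\T$ chosen below. The key observation is that the observed trajectory \eqref{eq:observed-data-trajectory} only ever involves the potential outcomes $\X_\tx(\A_{1:\tx})$ evaluated at the \emph{realised} action prefixes: these are untouched for $\tx<\T$, and for $\tx=\T$ one has $\X_\T^Q(\A_{1:\T})=\X_\T(\A_{1:\T})$ in both cases (on $\{\A_{1:\T}=\ax_{1:\T}\}$ the surgery returns the original value; on $\{\A_{1:\T}\neq\ax_{1:\T}\}$ the index read off, $\A_{1:\T}$, is not the index $\ax_{1:\T}$ that was modified). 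Hence, on this coupling, the observed trajectory is pathwise identical under $P$ and $Q$, so the two models induce exactly the same distribution of \eqref{eq:observed-data-trajectory} and both still satisfy consistency; moreover $Q(\A_{1:\T}\neq\ax_{1:\T})=P(\A_{1:\T}\neq\ax_{1:\T})>0$, so the hypothesis is preserved.

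It then remains to choose $v$ so that the distribution of $\X_{0:\T}(\ax_{1:\T})=\big(\X_0,\X_1(\ax_1),\ldots,\X_{\T-1}(\ax_{1:\T-1}),\X_\T^Q(\ax_{1:\T})\big)$ genuinely differs between $P$ and $Q$. Splitting on $\{\A_{1:\T}=\ax_{1:\T}\}$ and $\{\A_{1:\T}\neq\ax_{1:\T}\}$ gives, for every $w\in\Xspace_\T$,
\[
Q\big(\X_\T(\ax_{1:\T})=w\big)-P\big(\X_\T(\ax_{1:\T})=w\big)
= \ind\{w=v\}\,P(\A_{1:\T}\neq\ax_{1:\T}) \;-\; P\big(\X_\T(\ax_{1:\T})=w,\ \A_{1:\T}\neq\ax_{1:\T}\big),
\]
so at $w=v$ this equals $P(\A_{1:\T}\neq\ax_{1:\T})\big(1-P(\X_\T(\ax_{1:\T})=v\mid\A_{1:\T}\neq\ax_{1:\T})\big)$, which is strictly positive as soon as the conditional mass at $v$ is below $1$; since $P(\X_\T(\ax_{1:\T})\in\cdot\mid\A_{1:\T}\neq\ax_{1:\T})$ is a probability measure on the infinite set $\Xspace_\T=\R^{\Xspacedim_\T}$, it puts mass $1$ on at most one singleton, so such a $v$ exists. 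For this $v$ the marginal law of $\X_\T(\ax_{1:\T})$ — hence the law of the vector $\X_{0:\T}(\ax_{1:\T})$ — differs between $P$ and $Q$, giving the claimed non-identifiability (and, since $P$ was arbitrary among models compatible with the observed-data law, this shows the failure is not an artefact of a special case).

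I do not expect a genuine obstacle here: the argument is essentially bookkeeping. The only two points that require care are (i) checking that the surgery leaves every component of the observed trajectory unchanged — in particular that reading $\X_\T(\A_{1:\T})$ off the realised path never touches the modified index $\ax_{1:\T}$ except on the event $\{\A_{1:\T}=\ax_{1:\T}\}$, where the modification acts as the identity — and (ii) picking the perturbation $v$ so that it actually moves the target marginal instead of being absorbed into the already-observed part; both are handled above.
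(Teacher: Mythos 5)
Your proposal is correct and takes essentially the same route as the paper's own proof: the paper performs exactly this surgery, replacing the potential outcome on the target branch by a fixed point on the event $\{\A_{1:\tx} \neq \ax_{1:\tx}\}$, with the point chosen so that its conditional mass given that event is not $1$, and then comparing the two mixture decompositions of the target law. The only cosmetic difference is that the paper carries out the surgery at some timestep $\tx$ with $\Prob(\A_{1:\tx} \neq \ax_{1:\tx}) > 0$, while you fix $\tx = \T$, which the hypothesis supplies directly; both choices work identically.
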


Since the distribution of the data encodes the information that would be contained in an infinitely large dataset of trajectories, Theorem \ref{prop:nonidentifiability} imposes a fundamental limit on what can be learned about the distribution of $\X_{0:\T}(\ax_{1:\T})$ from the data we have assumed.
It follows that if $\Hyp$ is any hypothesis satisfying \eqref{eq:verification-hypothesis-property}, then $\Hyp$ cannot be determined to be true from even an infinitely large dataset.
This is because, if we could do so, then we could also determine the distribution of $\X_{0:\T}(\ax_{1:\T})$, since by Proposition \ref{prop:interventional-correctness-alternative-characterisation} this would be equal to the distribution of $(\X_0, \Xt_{\T}(\X_0, \ax_{1:\T}))$.
In other words, we cannot use the data alone to certify that the twin is interventionally correct.

\subsection{The assumption of no unmeasured confounding} \label{sec:no-unmeasured-confounding-assumption}

Theorem \ref{prop:nonidentifiability} is true in the general case, when no additional assumptions about the data-generating process are made.
One way forward is therefore to introduce assumptions under which the distribution of $\X_{0:\T}(\ax_{1:\T})$ \emph{can} be identified.
This would mean it is possible to certify that the twin is interventionally correct, since, at least in principle, we could simply check whether this matches the distribution of $(\X_0, \Xt_{1:\T}(\X_{0}, \ax_{1:\T}))$ produced by the twin.

The most common such assumption in the causal inference literature is that the data are free of \emph{unmeasured confounding}.
Informally, this holds when each action $\A_\tx$ is chosen by the behavioural agent solely on the basis of the information available at time $\tx$ that is actually recorded in the dataset, namely $\X_{0}, \A_1, \X_1(\A_1), \ldots, \A_{\tx-1}, \X_{\tx-1}(\A_{1:\tx-1})$, as well as possibly some additional randomness that is independent of the real-world process, such as the outcome of a coin toss. (This can be made precise via the \emph{sequential randomisation assumption} introduced by \cite{robins1986new}.)
Unobserved confounding is present whenever this does not hold, i.e.\ whenever some unmeasured factor simultaneously influences both the agent's choice of action and the observation produced by the real-world process.

It is reasonable to assume that the data are unconfounded in certain contexts.
For example, in certain situations it may be possible to gather data in a way that specifically guarantees there is no confounding.
Randomised controlled trials, which ensure that each $\A_\tx$ is chosen via a carefully designed randomisation procedure \citep{lavori2004dynamic,murphy2005experimental}, constitute a widespread example of this approach.
Likewise, it is possible to show that the data are unconfounded if each $\X_\tx(\ax_{1:\tx})$ is a deterministic function of $\X_{0:\tx-1}(\ax_{1:\tx-1})$ and $\ax_{\tx}$, which may be reasonable to assume for example in certain low-level physics or engineering contexts.
(See Section \ref{eq:deterministic-potential-outcomes-are-unconfounded} of the \AppendixName for a proof.)
However, for stochastic phenomena and for typical datasets, it is widely acknowledged that the assumption of no unmeasured confounding will rarely hold, and so assessment procedures based on this assumption may yield unreliable results in practice \citep{murphy2003optimal,tsiatis2019dynamic}.
Section \ref{sec:motivating-example} of the \AppendixName illustrates this concretely with a toy scenario.

\subsection{General-purpose assessment via falsification}

Our goal is to obtain an assessment methodology that is general-purpose, and as such we would like to avoid introducing assumptions such as unconfoundedness that do not hold in general.
To achieve this, borrowing philosophically from \cite{popper2005logic}, we propose a strategy that replaces the goal of verifying the interventional correctness of the twin with that of \emph{falsifying} it.
Specifically, we consider hypotheses $\Hyp$ with the dual property to \eqref{eq:verification-hypothesis-property}, namely:
\begin{equation} \label{eq:falsification-hypothesis-property}
    \text{If the twin is interventionally correct, then $\Hyp$ is true.}
\end{equation}
We will then try to show that each such $\Hyp$ is \emph{false}.
Whenever we are successful, we will thereby have gained some knowledge about a failure mode of the twin, since by construction the twin can only be correct if $\Hyp$ is true.
In effect, each $\Hyp$ we falsify will constitute a \emph{reason} that the twin is not correct, and may suggest concrete improvements to its design, or may identify cases where its output should not be trusted.

Importantly, unlike for \eqref{eq:verification-hypothesis-property}, Theorem \ref{prop:nonidentifiability} does not preclude the possibility of data-driven assessment procedures based on \eqref{eq:falsification-hypothesis-property}.
As we show below, there do exist hypotheses $\Hyp$ satisfying \eqref{eq:falsification-hypothesis-property} that can in principle be determined to be false from the data alone without additional assumptions.
In this sense, falsification provides a means for \emph{sound} data-driven twin assessment, %
whose results can be relied upon across a wide range of circumstances.
On the other hand, falsification approaches cannot provide a \emph{complete} guarantee about the accuracy of a twin: even if we fail to falsify many $\Hyp$ satisfying \eqref{eq:falsification-hypothesis-property}, we cannot then infer that the twin is correct.
As such, in situations where (for example) it is reasonable to believe that the data are in fact unconfounded, it may be desirable to use this assumption to obtain additional information about the twin than is possible from falsification alone.

\section{Longitudinal causal bounds} \label{sec:causal-bounds}

\subsection{Statement of result} \label{sec:causal-bounds-statement}

One possible means for obtaining interventional information about the twin is via the classical bounds proposed by \cite{manski}.
These bounds hold without further assumptions, and so could in principle give rise to a sound falsification procedure of the kind we are seeking.
However, although they have been successfully applied in various cases, Manski's bounds are often very conservative, and so would not lead to very informative results if used directly.
To address this, we propose a novel generalisation of these bounds that explicitly accounts for the temporal structure of our setting.
As we explain below, our bounds can become considerably more informative than those of Manski, while also not requiring the addition of untestable causal assumptions.
We provide these bounds next, along with several theoretical results about their behaviour and optimality.
In Section \ref{sec:hypotheses-from-causal-bounds}, we use these bounds to define a class of $\Hyp$ with the desired property \eqref{eq:falsification-hypothesis-property}, which then yields a procedure for falsifying twins through hypothesis testing techniques.

\begin{theorem} \label{thm:causal-bounds}
    Suppose $(\Y(\ax_{1:\tx}) : \ax_{1:\tx} \in \Aspace_{1:\tx})$ are real-valued potential outcomes defined jointly with $(\X_{0:\T}(\ax_{1:\T}) : \ax_{1:\T} \in \Aspace_{1:\T})$ and $\A_{1:\T}$. If for some $\tx \in \{1, \ldots, \T\}$, $\ax_{1:\tx} \in \Aspace_{1:\tx}$, measurable $\B_{0:\tx} \subseteq \Xspace_{0:\tx}$, and $\ylo, \yup \in \R$ we have
\noindent
    \begin{gather}
        \Prob(\X_{0:\tx}(\ax_{1:\tx}) \in \B_{0:\tx}) > 0 \label{eq:B-positivity-assumption} \\
        \Prob(\ylo \leq \Y(\ax_{1:\tx}) \leq \yup \mid \X_{0:\tx}(\ax_{1:\tx}) \in \B_{0:\tx}) = 1,  \label{eq:Y-boundedness-assumption}
    \end{gather}
    \noindent then it holds that
    \begin{equation} \label{eq:causal-bounds-fully-written}
        \E[\Ylo \mid \X_{0:\N}(\A_{1:\N}) \in \B_{0:\N}]
        \leq \E[\Y(\ax_{1:\tx}) \mid \X_{0:\tx}(\ax_{1:\tx}) \in \B_{0:\tx}]
        \leq  \E[\Yup \mid \X_{0:\N}(\A_{1:\N}) \in \B_{0:\N}],
    \end{equation}
    where we define $\N \coloneqq \max \{0 \leq \sx \leq \tx \mid \A_{1:\sx} = \ax_{1:\sx}\}$, and similarly
    \begin{align*}
        \Ylo \coloneqq \ind(\A_{1:\tx} &= \ax_{1:\tx}) \, \Y(\A_{1:\tx}) + \ind(\A_{1:\tx} \neq \ax_{1:\tx}) \, \ylo \\
        \Yup \coloneqq \ind(\A_{1:\tx} &= \ax_{1:\tx}) \, \Y(\A_{1:\tx}) + \ind(\A_{1:\tx} \neq \ax_{1:\tx}) \, \yup.
    \end{align*}

\end{theorem}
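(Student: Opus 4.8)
The plan is to prove the two inequalities in \eqref{eq:causal-bounds-fully-written} separately; since swapping the roles of $\ylo$ and $\yup$ and reversing all inequalities turns one argument into the other, it suffices to describe the upper bound. Throughout, write $E \coloneqq \{\X_{0:\tx}(\ax_{1:\tx}) \in \B_{0:\tx}\}$ and $F \coloneqq \{\X_{0:\N}(\A_{1:\N}) \in \B_{0:\N}\}$ for the two conditioning events, and set $q \coloneqq \Prob(E)$ and $p \coloneqq \Prob(F)$; assumption \eqref{eq:B-positivity-assumption} gives $q > 0$. Everything in sight will turn out to be bounded on the relevant events, so all the conditional expectations are well-defined and finite.

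The first and most delicate step is to pin down how $E$ and $F$ are related, and this is where the random horizon $\N$ has to be handled carefully. On $\{\A_{1:\tx} = \ax_{1:\tx}\}$ we have $\N = \tx$ and $\A_{1:\N} = \ax_{1:\N}$, hence $\X_{0:\N}(\A_{1:\N}) = \X_{0:\tx}(\ax_{1:\tx})$ and $\B_{0:\N} = \B_{0:\tx}$, so $F \cap \{\A_{1:\tx} = \ax_{1:\tx}\} = E \cap \{\A_{1:\tx} = \ax_{1:\tx}\}$. On the complementary event $\{\A_{1:\tx} \neq \ax_{1:\tx}\}$ we have $\N < \tx$ and, by maximality of $\N$, $\A_{1:\N} = \ax_{1:\N}$, so $\X_{0:\N}(\A_{1:\N}) = \X_{0:\N}(\ax_{1:\N})$, which by the prefix consistency built into the potential-outcomes notation is the length-$(\N{+}1)$ prefix of $\X_{0:\tx}(\ax_{1:\tx})$; by the convention that $\B_{0:\N}$ is the projection of $\B_{0:\tx}$ onto the first $\N{+}1$ coordinates, this prefix lies in $\B_{0:\N}$ whenever $\X_{0:\tx}(\ax_{1:\tx}) \in \B_{0:\tx}$. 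Combining the two cases yields $E \subseteq F$ and $F \setminus E \subseteq \{\A_{1:\tx} \neq \ax_{1:\tx}\}$, and in particular $p \geq q > 0$ and $\Prob(F \setminus E) = p - q$.

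Next I would record the pointwise bounds on $E$. Assumption \eqref{eq:Y-boundedness-assumption} says $\ylo \leq \Y(\ax_{1:\tx}) \leq \yup$ holds $\Prob$-a.s.\ on $E$; feeding this and the identity $\ind(\A_{1:\tx} = \ax_{1:\tx})\,\Y(\A_{1:\tx}) = \ind(\A_{1:\tx} = \ax_{1:\tx})\,\Y(\ax_{1:\tx})$ into the definitions of $\Ylo$ and $\Yup$ gives $\ylo \le \Ylo \le \Y(\ax_{1:\tx}) \le \Yup \le \yup$ a.s.\ on $E$. The key decomposition is then obtained from $E \subseteq F$ and $\Yup = \yup$ on $F \setminus E$:
\[
\E[\Yup\,\ind_F] \;=\; \E[\Yup\,\ind_E] + \yup\,\Prob(F\setminus E) \;=\; \E[\Yup\,\ind_E] + \yup\,(p-q).
\]
Finally I would chain the estimates: using $\Y(\ax_{1:\tx}) \le \Yup$ on $E$, then $p - q \ge 0$ together with $\E[\Yup\,\ind_E] \le \yup\,\Prob(E) = \yup\,q$ (from $\Yup \le \yup$ a.s.\ on $E$), and then the decomposition above,
\[
p\,\E[\Y(\ax_{1:\tx})\,\ind_E] \;\le\; p\,\E[\Yup\,\ind_E] \;\le\; q\,\E[\Yup\,\ind_E] + (p-q)\,\yup\,q \;=\; q\,\E[\Yup\,\ind_F].
\]
Dividing through by $pq > 0$ yields $\E[\Y(\ax_{1:\tx}) \mid E] \le \E[\Yup \mid F]$, i.e.\ the right-hand inequality; the left-hand inequality comes from the same computation applied to $\Ylo$ and $\ylo$ with the inequalities reversed.

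I expect the only real obstacle to be bookkeeping rather than anything conceptual: getting the set identities $F \cap \{\A_{1:\tx} = \ax_{1:\tx}\} = E \cap \{\A_{1:\tx} = \ax_{1:\tx}\}$ and $E \subseteq F$ exactly right, since they depend on the random stopping level $\N$ and on the nesting conventions for the observation spaces and the sets $\B_{0:\sx}$. It is worth emphasising that the argument uses no independence whatsoever between $\A_{1:\T}$ and the potential outcomes — only the almost-sure pointwise bounds on $E$ and the partition of $F$ into $E$ and $F \setminus E$ — which is precisely why the bounds remain valid under arbitrary unmeasured confounding.
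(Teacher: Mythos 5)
Your proof is correct and takes essentially the same route as the paper's: it rests on the same two structural facts (that the events $E=\{\X_{0:\tx}(\ax_{1:\tx})\in\B_{0:\tx}\}$ and $F=\{\X_{0:\N}(\A_{1:\N})\in\B_{0:\N}\}$ coincide on $\{\A_{1:\tx}=\ax_{1:\tx}\}$, and that $E\subseteq F$ because $\N\le\tx$ and the observed prefix equals $\X_{0:\N}(\ax_{1:\N})$) together with the worst-case fill-in by $\ylo,\yup$ off that event. The difference is only presentational: the paper writes $\E[\Y(\ax_{1:\tx})\mid E]$ as a convex combination over $\{\A_{1:\tx}=\ax_{1:\tx}\}$ and its complement and shifts mixture weight via the propensity inequality $\Prob(\A_{1:\tx}=\ax_{1:\tx}\mid E)\ge\Prob(\A_{1:\tx}=\ax_{1:\tx}\mid F)$, whereas your unconditional indicator algebra performs the same weight shift while automatically sidestepping the paper's explicit caveats about conditioning on possibly-null sub-events.
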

(See Section \ref{sec:causal-bounds-proofs} of the \AppendixName for a proof.)

For brevity, in what follows we will write the terms in \eqref{eq:causal-bounds-fully-written} as $\Qlo$, $\Q$, and $\Qup$ respectively, so the conclusion of this result becomes $\Qlo \leq \Q \leq \Qup$.

Intuitively, $\Y(\ax_{1:\tx})$ here may be thought of as some quantitative outcome of interest.
For example, in a medical context, $\Y(\ax_{1:\tx})$ might represent the heart rate of a patient at time $\tx$ after receiving some treatments $\ax_{1:\tx}$.
When defining our hypotheses below, we consider the specific form $\Y(\ax_{1:\tx}) \coloneqq \fx(\X_{0:\tx}(\ax_{1:\tx}))$, where $\fx : \Xspace_{0:\tx} \to \R$ is some scalar function.
The value $\Q$ is then simply the (conditional) average behaviour of this outcome.
By Theorem \ref{prop:nonidentifiability}, $\Q$ is in general not identified by the data since it depends on $\X_{0:\tx}(\ax_{1:\tx})$.
On the other hand, both $\Qlo$ and $\Qup$ \emph{are} identified, since the relevant random variables $\Ylo$, $\Yup$, $\N$, and $\X_{0:\N}(\A_{1:\N})$ can all be expressed as functions of the observed data $\X_{0:\tx}(\A_{1:\tx})$ and $\A_{1:\tx}$.
In this way, Theorem \ref{thm:causal-bounds} bounds the behaviour of a non-identifiable quantity in terms of identifiable ones.
At a high level, this is achieved by replacing $\Y(\ax_{1:\tx})$, whose value is only observed on the event $\{\A_{1:\tx} = \ax_{1:\tx}\}$, with $\Ylo$ and $\Yup$, which are equal to $\Y(\ax_{1:\tx})$ when its value is observed (i.e.\ when $\A_{1:\tx} = \ax_{1:\tx}$), and which fall back to the worst-case values of $\ylo$ and $\yup$ otherwise.
We emphasise that Theorem \ref{thm:causal-bounds} does not require any additional causal assumptions, and in particular remains true under arbitrary unmeasured confounding.

In the structural causal modelling framework \citep{pearl2009causality}, a related result to Theorem \ref{thm:causal-bounds} was given as Corollary 1 by \citet{bareinboim}.
However, their result involves a complicated ratio of unknown quantities that makes estimation of their bounds difficult, since it is not obvious how to obtain an unbiased estimator for their ratio term.
In contrast, our proposed causal bounds are considerably simpler, since both $\Qlo$ and $\Qup$ here are expressed as (conditional) expectations. 
This makes their unbiased estimation straightforward, which we use to obtain exact confidence intervals for both terms in Section \ref{sec:statistical-methodology}.

\subsection{Informativeness} \label{sec:informativeness-of-our-bounds}

For Theorem \ref{thm:causal-bounds} to be useful in practice, we would like the bounds $[\Qlo, \Qup]$ to be relatively narrower than the worst-case bounds $[\ylo, \yup]$ that are trivially implied by \eqref{eq:Y-boundedness-assumption}.
We can quantify the extent to which this occurs by the ratio
\begin{equation} \label{eq:tightness-of-bounds-measure}
    \frac{\Qup - \Qlo}{\yup - \ylo} = 1 - \Prob(\A_{1:\tx} = \ax_{1:\tx} \mid \X_{0:\N}(\A_{1:\N}) \in \B_{0:\N}),
\end{equation}
where the equality here follows from the definitions of $\Qlo$ and $\Qup$ together with some straightforward manipulations.
In other words, the (relative) tightness of our bounds is determined by the value of $\Prob(\A_{1:\tx} = \ax_{1:\tx} \mid \X_{0:\N}(\A_{1:\N}) \in \B_{0:\N})$, which is itself closely related to the classical \emph{propensity score} in the causal inference literature \citep{rosenbaum1983central}. %
Intuitively, as this probability grows larger, so too does $\Prob(\Y(\ax_{1:\tx}) = \Y(\A_{1:\tx}) \mid \X_{0:\N}(\A_{1:\N}) \in \B_{0:\N})$, which means the effect of unmeasured confounding on the value of $\Q$ is reduced, leading to tighter bounds.

Theorem \ref{thm:causal-bounds} is a generalisation of the bounds proposed by \cite{manski}, which can be recovered as the case where $\B_{0:\tx} = \Xspace_{0:\tx}$, so that \eqref{eq:causal-bounds-fully-written} becomes $\E[\Ylo] \leq \E[\Y(\ax_{1:\tx})] \leq \E[\Yup]$.
In practice, Manski's result is often regarded as quite uninformative.
From \eqref{eq:tightness-of-bounds-measure}, this is true whenever $\Prob(\A_{1:\tx} = \ax_{1:\tx})$ is small, which often occurs in many applications, particularly for longer action sequences.
On the other hand, in many contexts it seems reasonable to anticipate that certain longer action sequences will be fairly likely to occur when conditioned on some intermediate observations.
In other words, $\Prob(\A_{1:\tx} = \ax_{1:\tx} \mid \X_{0:\N}(\A_{1:\N}) \in \B_{0:\N})$ may be large, even if $\Prob(\A_{1:\tx} = \ax_{1:\tx})$ is not.
By choosing $\B_{0:\tx}$ carefully, we can therefore obtain tighter bounds than would be possible by using Manski's original result.
The following straightforward result provides a sufficient condition for this to hold.
In Section \ref{sec:case-study} below, we also show empirically that Theorem \ref{thm:causal-bounds} yields more informative results in our case study compared with Manski's original bounds.

\begin{proposition} \label{prop:our-bounds-vs-manskis}
    Consider the same setup as Theorem \ref{thm:causal-bounds}, where also $\ylo \leq \Y(\ax_{1:\tx}) \leq \yup$ almost surely.
    If $\Prob(\A_{1:\tx} = \ax_{1:\tx} \mid \X_{0:\N}(\A_{1:\N}) \in \B_{0:\N}) > \Prob(\A_{1:\tx} = \ax_{1:\tx})$, then the width of Manski's bounds exceeds that of Theorem \ref{thm:causal-bounds}, i.e.\ $\E[\Yup] - \E[\Ylo] > \Qup - \Qlo$.
\end{proposition}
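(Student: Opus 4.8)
The plan is to deduce the claim entirely from the tightness identity \eqref{eq:tightness-of-bounds-measure} established in Section~\ref{sec:informativeness-of-our-bounds}, together with a one-line direct evaluation of the width of Manski's bounds. We may assume $\ylo < \yup$, since if $\ylo = \yup$ then $\Y(\ax_{1:\tx})$ is almost surely constant and both interval widths are zero, so the asserted strict inequality cannot hold and the statement should be read under the non-degenerate convention $\ylo < \yup$. Under the stated setup, the hypotheses \eqref{eq:B-positivity-assumption} and \eqref{eq:Y-boundedness-assumption} of Theorem~\ref{thm:causal-bounds} hold for the given $(\tx, \ax_{1:\tx}, \B_{0:\tx}, \ylo, \yup)$, and they also hold when $\B_{0:\tx}$ is replaced by $\Xspace_{0:\tx}$ (then \eqref{eq:B-positivity-assumption} is trivial and \eqref{eq:Y-boundedness-assumption} reduces to the assumed unconditional bound $\ylo \le \Y(\ax_{1:\tx}) \le \yup$). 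Hence Theorem~\ref{thm:causal-bounds}, and in particular \eqref{eq:tightness-of-bounds-measure}, is applicable in both cases; recall that $\B_{0:\tx} = \Xspace_{0:\tx}$ is exactly the case that recovers Manski's bounds $\E[\Ylo] \le \E[\Y(\ax_{1:\tx})] \le \E[\Yup]$.

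First I would rewrite \eqref{eq:tightness-of-bounds-measure} for the given $\B_{0:\tx}$ as
\[
    \Qup - \Qlo = (\yup - \ylo)\bigl(1 - \Prob(\A_{1:\tx} = \ax_{1:\tx} \mid \X_{0:\N}(\A_{1:\N}) \in \B_{0:\N})\bigr).
\]
Next I would compute the width of Manski's bounds directly from the definitions of $\Ylo$ and $\Yup$ in Theorem~\ref{thm:causal-bounds}: these two random variables coincide on $\{\A_{1:\tx} = \ax_{1:\tx}\}$ and differ by exactly $\yup - \ylo$ on its complement, so $\Yup - \Ylo = \ind(\A_{1:\tx} \neq \ax_{1:\tx})(\yup - \ylo)$ pointwise, and taking expectations gives
\[
    \E[\Yup] - \E[\Ylo] = (\yup - \ylo)\,\Prob(\A_{1:\tx} \neq \ax_{1:\tx}) = (\yup - \ylo)\bigl(1 - \Prob(\A_{1:\tx} = \ax_{1:\tx})\bigr).
\]
This is also just \eqref{eq:tightness-of-bounds-measure} specialised to $\B_{0:\tx} = \Xspace_{0:\tx}$, for which the conditioning event has probability one; doing the computation directly avoids having to unpack the precise meaning of $\B_{0:\N}$ in that case.

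Subtracting the two displays then yields
\[
    \bigl(\E[\Yup] - \E[\Ylo]\bigr) - \bigl(\Qup - \Qlo\bigr) = (\yup - \ylo)\Bigl(\Prob(\A_{1:\tx} = \ax_{1:\tx} \mid \X_{0:\N}(\A_{1:\N}) \in \B_{0:\N}) - \Prob(\A_{1:\tx} = \ax_{1:\tx})\Bigr),
\]
whose right-hand side is strictly positive since $\yup - \ylo > 0$ and the bracketed difference of probabilities is strictly positive by the hypothesis of the proposition. This is precisely the required inequality $\E[\Yup] - \E[\Ylo] > \Qup - \Qlo$.

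There is no genuinely hard step once \eqref{eq:tightness-of-bounds-measure} is in hand: the argument is pure bookkeeping. The only points that need any care are (i) confirming that Theorem~\ref{thm:causal-bounds} is legitimately applicable both with the chosen $\B_{0:\tx}$ and with $\Xspace_{0:\tx}$, which follows from the mildness of the positivity and boundedness conditions noted above, and (ii) evaluating $\E[\Yup] - \E[\Ylo]$, which the direct pointwise identity for $\Yup - \Ylo$ handles cleanly without reference to the $\B_{0:\N}$ notation.
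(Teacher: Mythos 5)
Your argument is correct and is essentially the paper's own proof: the paper likewise derives $\Qup - \Qlo = (\yup - \ylo)\bigl(1 - \Prob(\A_{1:\tx}=\ax_{1:\tx} \mid \X_{0:\N}(\A_{1:\N}) \in \B_{0:\N})\bigr)$ and $\E[\Yup] - \E[\Ylo] = (\yup - \ylo)\bigl(1 - \Prob(\A_{1:\tx}=\ax_{1:\tx})\bigr)$ from the definitions of $\Ylo, \Yup$ and subtracts, exactly as you do via \eqref{eq:tightness-of-bounds-measure} and the pointwise identity $\Yup - \Ylo = \ind(\A_{1:\tx} \neq \ax_{1:\tx})(\yup - \ylo)$. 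Your explicit handling of the degenerate case $\ylo = \yup$ is a small point of care the paper leaves implicit, but otherwise the two proofs coincide.
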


Beyond allowing us to obtain tighter bounds, the conditional nature of Theorem \ref{thm:causal-bounds} also appears of interest simply for its own sake.
In particular, Theorem \ref{thm:causal-bounds} describes the interventional behaviour of $\Y(\ax_{1:\tx})$ conditional on the behaviour of the trajectory $\X_{0:\tx}(\ax_{1:\tx})$, thereby providing more granular information than can be obtained from unconditional bounds of \cite{manski} alone.

\subsection{Optimality}

The following result shows that Theorem \ref{thm:causal-bounds} cannot be improved without further assumptions. 
Intuitively speaking, there always exists \emph{some} family of potential outcomes that produces the same observational data as our model, but that attains the worst-case bounds $\Qlo$ or $\Qup$.
Therefore, we cannot rule out the possibility that the true potential outcomes achieve $\Qlo$ or $\Qup$ from the observational data alone.

\begin{proposition} \label{prop:sharpness-of-bounds}
    Under the same setup as in Theorem \ref{thm:causal-bounds}, there always exists potential outcomes $(\tilde{\X}_{0:\T}(\ax_{1:\T}'), \tilde{\Y}(\ax_{1:\tx}') : \ax_{1:\T}' \in \Aspace_{1:\T})$ also satisfying \eqref{eq:Y-boundedness-assumption} (mutatis mutandis) with 
    $$(\tilde{\X}_{0:\T}(\A_{1:\T}), \tilde{\Y}(\A_{1:\tx}), \A_{1:\T}) \eqas (\X_{0:\T}(\A_{1:\T}), \Y(\A_{1:\tx}), \A_{1:\T})$$ 
    but for which $$\E[\tilde{\Y}(\ax_{1:\tx}) \mid \tilde{\X}_{0:\tx}(\ax_{1:\tx}) \in \B_{0:\tx}] = \Qlo.$$
    The corresponding statement is also true for $\Qup$.
\end{proposition}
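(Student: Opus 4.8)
The plan is to exhibit, for the lower bound, a single alternative family of potential outcomes that reproduces the observational data exactly but assigns the most pessimistic admissible values to every unobserved counterfactual, engineered so that its conditional expectation $\tilde\Q \coloneqq \E[\tilde{\Y}(\ax_{1:\tx}) \mid \tilde{\X}_{0:\tx}(\ax_{1:\tx}) \in \B_{0:\tx}]$ equals $\Qlo$ exactly. The observation driving the construction is that $\Qlo = \E[\Ylo \mid \X_{0:\N}(\A_{1:\N}) \in \B_{0:\N}]$ depends only on the observed data, so any data-reproducing family automatically has the \emph{same} lower causal bound; hence it is enough to make the lower inequality of Theorem~\ref{thm:causal-bounds} tight for the new family. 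Two conditions suffice: first, $\tilde{\Y}(\ax_{1:\tx})$ should take the worst-case value $\ylo$ on the part of the conditioning event where $\A_{1:\tx} \neq \ax_{1:\tx}$; second --- and this is the crux --- the event $\{\tilde{\X}_{0:\tx}(\ax_{1:\tx}) \in \B_{0:\tx}\}$ should coincide almost surely with $\{\X_{0:\N}(\A_{1:\N}) \in \B_{0:\N}\}$, which forces us to modify the unobserved portion of the counterfactual \emph{covariate} trajectory, not merely the outcome.

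I would build $\tilde{\X}$ as follows. For every action sequence that is not a prefix of $\ax_{1:\tx}$, and for every index $\sx > \tx$, set $\tilde{\X}_{\sx}(\cdot) \coloneqq \X_{\sx}(\cdot)$, unchanged. For a prefix $\ax_{1:\sx}$ of $\ax_{1:\tx}$ with $\sx \le \tx$, note that $\{\A_{1:\sx} = \ax_{1:\sx}\} = \{\N \ge \sx\}$; on this event $\X_{\sx}(\ax_{1:\sx})$ is observed, so consistency forces $\tilde{\X}_{\sx}(\ax_{1:\sx}) \coloneqq \X_{\sx}(\ax_{1:\sx})$ there, while on the complement $\{\N < \sx\}$, where $\X_{\sx}(\ax_{1:\sx})$ is unobserved, set $\tilde{\X}_{\sx}(\ax_{1:\sx}) \coloneqq \Psi\big(\X_{0:\N}(\A_{1:\N})\big)_{\sx}$. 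Here $\Psi$ is a measurable map with the property that $\big(x_{0:\N}, \Psi(x_{0:\N})\big) \in \B_{0:\tx}$ whenever $x_{0:\N} \in \B_{0:\N}$; such a $\Psi$ exists because $\B_{0:\sx}$ is the projection of $\B_{0:\tx}$ onto $\Xspace_{0:\sx}$ (as used in Theorem~\ref{thm:causal-bounds}), so each point of $\B_{0:\N}$ admits a completion into $\B_{0:\tx}$, and the correspondence $x_{0:\N} \mapsto \{x_{\N+1:\tx} : (x_{0:\N}, x_{\N+1:\tx}) \in \B_{0:\tx}\}$ has measurable graph. A short case split on $\N$ then yields the event identity: on $\{\N = \tx\}$ one has $\tilde{\X}_{0:\tx}(\ax_{1:\tx}) = \X_{0:\tx}(\A_{1:\tx})$, while on $\{\N = \sx < \tx\}$ one has $\tilde{\X}_{0:\tx}(\ax_{1:\tx}) = \big(\X_{0:\sx}(\A_{1:\sx}), \Psi(\X_{0:\sx}(\A_{1:\sx}))\big)$, and in either case membership in $\B_{0:\tx}$ is equivalent to $\X_{0:\N}(\A_{1:\N}) \in \B_{0:\N}$. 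Everything is built from the original random variables together with the deterministic map $\Psi$, so no enlargement of the probability space is needed.

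For the outcomes I would set $\tilde{\Y}(\ax'_{1:\tx}) \coloneqq \Y(\ax'_{1:\tx})$ for $\ax'_{1:\tx} \neq \ax_{1:\tx}$, and $\tilde{\Y}(\ax_{1:\tx}) \coloneqq \Ylo$. Three short verifications then remain. (i) Data reproduction: $\tilde{\Y}(\A_{1:\tx}) = \Y(\A_{1:\tx})$ almost surely --- on $\{\A_{1:\tx} = \ax_{1:\tx}\}$ because $\Ylo$ equals $\Y(\A_{1:\tx})$ there, and off it by definition --- so with the covariate analysis one gets $(\tilde{\X}_{0:\T}(\A_{1:\T}), \tilde{\Y}(\A_{1:\tx}), \A_{1:\T}) \eqas (\X_{0:\T}(\A_{1:\T}), \Y(\A_{1:\tx}), \A_{1:\T})$. (ii) Boundedness (mutatis mutandis): $\Prob(\tilde{\X}_{0:\tx}(\ax_{1:\tx}) \in \B_{0:\tx}) = \Prob(\X_{0:\N}(\A_{1:\N}) \in \B_{0:\N}) \ge \Prob(\X_{0:\tx}(\ax_{1:\tx}) \in \B_{0:\tx}) > 0$ by \eqref{eq:B-positivity-assumption}, and on this event $\tilde{\Y}(\ax_{1:\tx})$ is either $\Y(\ax_{1:\tx})$ --- which then satisfies $\ylo \le \Y(\ax_{1:\tx}) \le \yup$ by \eqref{eq:Y-boundedness-assumption}, since on $\{\A_{1:\tx} = \ax_{1:\tx}\}$ membership in the conditioning event forces $\X_{0:\tx}(\ax_{1:\tx}) \in \B_{0:\tx}$ --- or the constant $\ylo$, both of which lie in $[\ylo, \yup]$. (iii) Value: by the event identity, $\tilde\Q = \E[\Ylo \mid \X_{0:\N}(\A_{1:\N}) \in \B_{0:\N}] = \Qlo$. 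The claim for $\Qup$ follows verbatim with $\tilde{\Y}(\ax_{1:\tx}) \coloneqq \Yup$.

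I expect the only genuinely delicate point to be the existence and measurability of the selection $\Psi$, together with pinning down the convention for $\B_{0:\sx}$ with $\sx < \tx$ (the projection of $\B_{0:\tx}$, consistent with Theorem~\ref{thm:causal-bounds}); the remainder is routine bookkeeping with the definition of $\N$ and the automatic consistency of potential-outcome prefixes. If needed, the measurability issue is dispatched by working on standard Borel spaces and invoking the Jankov--von Neumann uniformization theorem, or --- as is typical in applications --- by taking $\B_{0:\tx}$ to be a product set $\B_0 \times \cdots \times \B_\tx$, for which $\Psi$ may be chosen constant on each fibre.
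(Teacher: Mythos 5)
Your construction is correct and is essentially the paper's own proof: both keep the observed data intact, set $\tilde{\Y}(\ax_{1:\tx}) = \Ylo$ (resp.\ $\Yup$), and route the unobserved portion of the counterfactual covariate trajectory into $\B_{0:\tx}$ so that the conditioning event coincides almost surely with $\{\X_{0:\N}(\A_{1:\N}) \in \B_{0:\N}\}$, whence the conditional expectation equals $\Qlo$. The only difference is in how the tail is completed: since $\B_{0:\tx}$ is a product set $\B_0 \times \cdots \times \B_\tx$ under the paper's notational convention, the paper simply fixes a point $\xx_{\N+1:\tx}$ with each $\xx_\sx \in \B_\sx$ (exactly your ``constant on each fibre'' fallback), so the measurable-selection machinery you invoke for general $\B_{0:\tx}$ is not needed.
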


Apart from attempting to tighten our bounds on $\Q$, in some cases we may wish to consider bounding the alternative quantity $\E[\Y(\ax_{1:\tx}) \mid \X_{0:\tx}(\ax_{1:\tx})]$ that conditions on the \emph{value} of $\X_{0:\tx}(\ax_{1:\tx})$ rather than on the event $\{\X_{0:\tx}(\ax_{1:\tx}) \in \B_{0:\tx}\}$.
To achieve this, it is natural to generalize our assumption \eqref{eq:Y-boundedness-assumption} by supposing we now have measurable functions $\ylo, \yup : \Xspace_{0:\tx} \to \R$ such that
\begin{equation} \label{eq:worst-case-behaviour-functional-assumption}
    \ylo(\X_{0:\tx}(\ax_{1:\tx})) \leq \Y(\ax_{1:\tx}) \leq \yup(\X_{0:\tx}(\ax_{1:\tx})) \qquad \qquad \text{almost surely,}
\end{equation}
and our goal is to obtain measurable functions $\lo{\gx}, \up{\gx} : \Xspace_{0:\tx} \to \R$ such that
\begin{equation} \label{eq:almost-sure-bound-desired-result}
    \lo{\gx}(\X_{0:\tx}(\ax_{1:\tx})) \leq \E[\Y(\ax_{1:\tx}) \mid \X_{0:\tx}(\ax_{1:\tx})] \leq \up{\gx}(\X_{0:\tx}(\ax_{1:\tx})) \qquad \qquad \text{almost surely.}
\end{equation}
As we describe in Section \ref{sec:impossibility-of-bounds-for-continuous-data} of the \AppendixName, bounds of this kind can be obtained directly from Theorem \ref{thm:causal-bounds} if $\X_{0:\tx}(\ax_{1:\tx})$ is discrete, or by a simple modification of the proof of Theorem \ref{thm:causal-bounds} if $\X_{1:\tx}(\ax_{1:\tx})$ is discrete (but $\X_0$ is possibly continuous).
However, somewhat surprisingly, in general we cannot obtain nontrivial bounds of this kind without further assumptions beyond the discrete case.
To make this precise, we will say that a given $\lo{\gx}$ and $\up{\gx}$ are \emph{permissible} if \eqref{eq:almost-sure-bound-desired-result} holds when $(\X_{0:\T}(\ax_{1:\tx}), \Y(\ax_{1:\tx}), \A_{1:\T} : \ax_{1:\T}' \in \Aspace_{1:\T})$ are replaced by any potential outcomes $(\tilde{\X}_{0:\T}(\ax_{1:\T}'), \tilde{\Y}(\ax_{1:\tx}'), \tilde{\A}_{1:\T} : \ax_{1:\T}' \in \Aspace_{1:\T})$  for which $\Law[\tilde{\X}_{0:\T}(\tilde{\A}_{1:\T}), \tilde{\A}_{1:\T}, \tilde{\Y}(\tilde{\A}_{1:\tx})] = \Law[\X_{0:\T}(\A_{1:\T}), \A_{1:\T}, \Y(\A_{1:\tx})]$, and which also satisfy \eqref{eq:worst-case-behaviour-functional-assumption} (mutatis mutandis).
Intuitively, this means that $\lo{\gx}$ and $\up{\gx}$ depend only on the information we have available, i.e.\ the observational distribution and our assumed worst-case values.
We then have the following:

\begin{theorem} \label{thm:no-causal-bounds-for-continuous-data}
    Suppose $\X_0$ is almost surely constant, $\Prob(\A_1 \neq \ax_1) > 0$, and for some $\sx \in \{1, \ldots, \tx\}$ we have $\Prob(\X_{\sx}(\A_{1:\sx}) = \xx_{\sx}) = 0$ for all $\xx_{\sx} \in \Xspace_{\sx}$.
    Then $\lo{\gx}, \up{\gx} : \Xspace_{0:\tx} \to \R$ are permissible bounds only if they are trivial, i.e.\ 
    \begin{align*}
        \lo{\gx}(\X_{0:\tx}(\ax_{1:\tx})) \leq \ylo(\X_{0:\tx}(\ax_{1:\tx})) \quad \textup{and} \quad \up{\gx}(\X_{0:\tx}(\ax_{1:\tx})) \geq \yup(\X_{0:\tx}(\ax_{1:\tx})) \quad \textup{almost surely}.
    \end{align*}
\end{theorem}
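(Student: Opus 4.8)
The plan is to prove this by contradiction: assume there exist permissible bounds $\lo{\gx}, \up{\gx}$ that are nontrivial on a set of positive probability, and construct an alternative family of potential outcomes that is observationally indistinguishable from the original one but violates the claimed bound. I would focus on the lower bound $\lo{\gx}$ (the argument for $\up{\gx}$ being symmetric). Suppose toward a contradiction that $\Prob\big(\lo{\gx}(\X_{0:\tx}(\ax_{1:\tx})) > \ylo(\X_{0:\tx}(\ax_{1:\tx}))\big) > 0$. Since $\X_0$ is a.s.\ constant and $\Prob(\A_1 \neq \ax_1) > 0$, the event $\{\A_{1:\tx} \neq \ax_{1:\tx}\}$ has positive probability, and on this event the true value $\Y(\ax_{1:\tx})$ is never observed; the data only pin down $\Y(\A_{1:\tx})$ on $\{\A_{1:\tx} = \ax_{1:\tx}\}$, together with the marginal law of $\X_{0:\T}(\A_{1:\T})$ and $\A_{1:\T}$. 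The key structural fact I would exploit is the hypothesis that $\Prob(\X_{\sx}(\A_{1:\sx}) = \xx_{\sx}) = 0$ for every $\xx_\sx$, i.e.\ the intermediate observation $\X_\sx$ is atomless; this gives me enough ``room'' to redistribute the conditional law of $\Y(\ax_{1:\tx})$ given $\X_{0:\tx}(\ax_{1:\tx})$ on the unobserved event without changing anything the data can see.

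The construction proceeds as follows. I would leave the observed part untouched: set $\tilde{\X}_{0:\T}(\ax_{1:\T}') \coloneqq \X_{0:\T}(\ax_{1:\T}')$ for all $\ax_{1:\T}'$ and $\tilde{\A}_{1:\T} \coloneqq \A_{1:\T}$, and set $\tilde{\Y}(\ax_{1:\tx}) \coloneqq \Y(\ax_{1:\tx})$ on the event $\{\A_{1:\tx} = \ax_{1:\tx}\}$. On the complementary event $\{\A_{1:\tx} \neq \ax_{1:\tx}\}$, I am free to define $\tilde{\Y}(\ax_{1:\tx})$ to be anything consistent with the worst-case constraint \eqref{eq:worst-case-behaviour-functional-assumption}, in particular I would set $\tilde{\Y}(\ax_{1:\tx}) \coloneqq \ylo(\tilde{\X}_{0:\tx}(\ax_{1:\tx}))$ there. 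This $\tilde{\Y}$ satisfies \eqref{eq:worst-case-behaviour-functional-assumption} (mutatis mutandis) and matches the original joint law of $(\X_{0:\T}(\A_{1:\T}), \A_{1:\T}, \Y(\A_{1:\tx}))$ almost surely by construction, so the tuple $(\tilde{\X}, \tilde{\A}, \tilde{\Y})$ is an admissible competitor in the definition of permissibility. The point of the atomlessness assumption on $\X_\sx$ is to guarantee we have freedom to set up the dependence structure so that conditioning on the \emph{value} $\X_{0:\tx}(\ax_{1:\tx}) = x_{0:\tx}$ — an event of probability zero — the conditional law of $\tilde{\Y}(\ax_{1:\tx})$ can be taken to be a point mass at $\ylo(x_{0:\tx})$ for a set of $x_{0:\tx}$ carrying positive probability under $\Law[\X_{0:\tx}(\ax_{1:\tx})]$. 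Concretely: I would pick a regular conditional distribution of $\mathbbm{1}(\A_{1:\tx}=\ax_{1:\tx})$ given $\X_{0:\tx}(\ax_{1:\tx})$ and note that, because $\X_\sx$ is atomless, on a positive-probability set of trajectory values this conditional probability is strictly less than $1$; on such values $\tilde{\Y}(\ax_{1:\tx})$ equals $\ylo$ with positive conditional probability, and by further degrees of freedom (choosing the unobserved part of $\tilde{\Y}$ to be measurable w.r.t.\ enough extra randomness) one can in fact arrange $\E[\tilde{\Y}(\ax_{1:\tx}) \mid \tilde{\X}_{0:\tx}(\ax_{1:\tx})] = \ylo(\tilde{\X}_{0:\tx}(\ax_{1:\tx}))$ on that set. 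This contradicts permissibility of $\lo{\gx}$, since permissibility demands $\lo{\gx}(\tilde{\X}_{0:\tx}(\ax_{1:\tx})) \leq \E[\tilde{\Y}(\ax_{1:\tx}) \mid \tilde{\X}_{0:\tx}(\ax_{1:\tx})] = \ylo$ a.s., yet we assumed $\lo{\gx} > \ylo$ with positive probability on exactly such trajectory values.

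The main obstacle — and the part that needs genuine care rather than routine bookkeeping — is making the conditioning on a probability-zero event rigorous and showing that the construction of $\tilde{\Y}$ on the unobserved event can be carried out \emph{measurably} and jointly with all the potential outcomes, so that the resulting object is a legitimate family of potential outcomes with the required observational law. This is where the atomlessness hypothesis $\Prob(\X_\sx(\A_{1:\sx}) = \xx_\sx) = 0$ is essential: without it (e.g.\ if all $\X_\tx$ were discrete) one could not separate the ``observed'' and ``unobserved'' parts of the conditional law finely enough, which is consistent with the remark in the excerpt that nontrivial bounds \emph{are} available in the discrete case via Theorem \ref{thm:causal-bounds}. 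I would handle the measurability via a disintegration / regular conditional probability argument, possibly augmenting the underlying probability space with an independent uniform randomizer and using a measurable selection to push mass onto $\ylo$. A secondary technical point is to verify that \eqref{eq:worst-case-behaviour-functional-assumption} continues to hold for $\tilde{\Y}$, which is immediate from the choice $\tilde{\Y} = \ylo(\tilde{\X}_{0:\tx})$ on the unobserved event and $\tilde{\Y} = \Y$ (which already satisfies the bound) on the observed event. Once these measurability details are in place, the contradiction is immediate, and the symmetric argument for $\up{\gx}$ using $\yup$ in place of $\ylo$ completes the proof.
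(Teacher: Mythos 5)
There is a genuine gap, and it lies exactly at the step you flag as needing "genuine care": your construction keeps $\tilde{\X}_{0:\T}(\ax_{1:\T}') \coloneqq \X_{0:\T}(\ax_{1:\T}')$ and only modifies $\tilde{\Y}(\ax_{1:\tx})$ on $\{\A_{1:\tx} \neq \ax_{1:\tx}\}$, but with that choice the quantity you need, $\E[\tilde{\Y}(\ax_{1:\tx}) \mid \tilde{\X}_{0:\tx}(\ax_{1:\tx})]$, cannot be pushed down to $\ylo(\tilde{\X}_{0:\tx}(\ax_{1:\tx}))$ on a set where $\lo{\gx} > \ylo$. Since $\tilde{\X} = \X$ and the observational law pins the joint conditional law of $(\X_{0:\tx}(\ax_{1:\tx}), \Y(\ax_{1:\tx}))$ given $\{\A_{1:\tx}=\ax_{1:\tx}\}$, the best any competitor of your form can achieve is
\[
\E[\tilde{\Y}(\ax_{1:\tx}) \mid \X_{0:\tx}(\ax_{1:\tx})] = \E[\Y(\A_{1:\tx}) \mid \X_{0:\tx}(\ax_{1:\tx}), \A_{1:\tx}=\ax_{1:\tx}]\,\Prob(\A_{1:\tx}=\ax_{1:\tx}\mid \X_{0:\tx}(\ax_{1:\tx})) + \ylo(\X_{0:\tx}(\ax_{1:\tx}))\,\Prob(\A_{1:\tx}\neq\ax_{1:\tx}\mid \X_{0:\tx}(\ax_{1:\tx})),
\]
which is a nondegenerate convex combination strictly above $\ylo$ wherever the conditional propensity is positive — and the atomlessness of $\X_{\sx}(\A_{1:\sx})$ does nothing to make that propensity zero for the \emph{original} trajectories. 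Your appeal to "further degrees of freedom / extra randomness" cannot repair this: conditional on the value of $\X_{0:\tx}(\ax_{1:\tx})$, the contribution from the observed branch is fixed by the data, so the claimed identity $\E[\tilde{\Y}(\ax_{1:\tx}) \mid \tilde{\X}_{0:\tx}(\ax_{1:\tx})] = \ylo$ on a positive-probability set is simply unattainable within your construction. (At most you would establish a Manski-type nontrivial lower bound, not triviality.)

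The missing idea is to use the freedom you declined to use: modify the \emph{covariate} potential outcomes on the unobserved branch. The paper's proof fixes an arbitrary $\xx_{1:\T} \in \Xspace_{1:\T}$ and sets $\tilde{\X}_r(\ax_{1:r}') \coloneqq \ind(\A_{1:r}=\ax_{1:r}')\,\X_r(\ax_{1:r}') + \ind(\A_{1:r}\neq\ax_{1:r}')\,\xx_r$, together with $\tilde{\Y}(\ax_{1:\tx}') \coloneqq \ind(\A_{1:\tx}=\ax_{1:\tx}')\,\Y(\ax_{1:\tx}') + \ind(\A_{1:\tx}\neq\ax_{1:\tx}')\,\ylo(\tilde{\X}_{0:\tx}(\ax_{1:\tx}'))$. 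This is observationally indistinguishable (indeed almost surely equal on the observed branch), and the hypothesis $\Prob(\X_{\sx}(\A_{1:\sx})=\xx_{\sx})=0$ now bites: on $\{\A_1\neq\ax_1\}$ the trajectory $\tilde{\X}_{0:\tx}(\ax_{1:\tx})$ sits at the fixed point $(\X_0,\xx_{1:\tx})$, which the observed branch hits with probability zero, so $\Prob(\A_{1:\tx}=\ax_{1:\tx}\mid\tilde{\X}_{0:\tx}(\ax_{1:\tx}))=0$ there and the conditional mean collapses exactly to $\ylo$. Permissibility then forces $\lo{\gx}(\xx_0,\xx_{1:\tx})\le\ylo(\xx_0,\xx_{1:\tx})$, and since $\xx_{1:\tx}$ was arbitrary (and $\X_0$ is a.s.\ constant) this holds everywhere, hence almost surely — note this also delivers the theorem directly, without your contradiction framing or any regular-conditional-probability machinery. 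Your intuition that atomlessness provides the "room" is correct, but it is exploited through this relocation of the unobserved trajectories onto a null point, not through redistributing the conditional law of $\Y$ alone.
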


Here the assumption that $\X_0$ is constant essentially means we consider a special case of our model where there are no covariates available before the first action is taken, and serves mainly to simplify the proof.
We conjecture that Theorem \ref{thm:no-causal-bounds-for-continuous-data} holds more generally, provided the other assumptions are accordingly made to be conditional on $\X_0$ also.
In any case, this result shows that general purpose bounds on $\E[\Y(\ax_{1:\tx}) \mid \X_{0:\tx}(\ax_{1:\tx})]$ are not forthcoming, and Theorem \ref{thm:causal-bounds} is the best we can hope for in general.
We show below that this result is nevertheless powerful enough to obtain useful information in practice.

\section{Falsification methodology} \label{sec:hypotheses-from-causal-bounds}

\subsection{Hypotheses derived from causal bounds} \label{sec:hypotheses-from-causal-bounds-setup}

We now use Theorem \ref{thm:causal-bounds} to obtain a hypothesis testing procedure that can be used to falsify the twin, and that does not rely on any further assumptions than we have already provided.
To this end, we first define the hypotheses $\Hyp$ satisfying \eqref{eq:falsification-hypothesis-property} that we will consider. %
Each of these will depend on a specific choice of the following parameters:

\begin{multicols}{2}
    \begin{itemize}
        \item A timestep $\tx \in \{1, \ldots, \T\}$
        \item A measurable function $\fx : \Xspace_{0:\tx} \to \R$
    \end{itemize}
    \columnbreak
    \begin{itemize}
        \item A sequence of actions $\ax_{1:\tx} \in \Aspace_{1:\tx}$
        \item A sequence of subsets $B_{0:t} \subseteq \Xspace_{0:t}$. 
    \end{itemize}
\end{multicols}
\noindent
To streamline notation, in this section, we will consider these parameters to be fixed.
However, we emphasize that our construction can be instantiated for many different choices of these parameters, and indeed we will do so in our case study below.
We think of $\fx$ as expressing a specific outcome of interest at time $\tx$ in terms of the data we have assumed.
Accordingly, for each $\ax_{1:\tx}' \in \Aspace_{1:\tx}$, we define new potential outcomes
$\Y(\ax_{1:\tx}') \coloneqq \fx(\X_{0:\tx}(\ax_{1:\tx}'))$.
For example, in a medical context, if $\X_{0:\tx}(\ax_{1:\tx}')$ represents a full patient history at time $\tx$ after treatments $\ax_{1:\tx}'$, then $\Y(\ax_{1:\tx}')$ might represent the patient's heart rate after these treatments. 
Likewise, $\B_{0:\tx}$ selects a subgroup of patients of interest, e.g.\ elderly patients whose blood pressure values were above some threshold at some timesteps before $\tx$.

The hypotheses we consider are based on the corresponding outcome produced by the twin when initialised at $\X_0$, which we define for $\ax_{1:\tx}' \in \Aspace_{1:\tx}$ as $\Yt(\ax_{1:\tx}') \coloneqq \fx(\X_0, \Xt_{1:\tx}(\X_0, \ax_{1:\tx}'))$.
Supposing it holds that
\begin{equation} \label{eq:B-twin-positivity-assumption}
    \Prob(\Xt_{1:\tx}(\X_0, \ax_{1:\tx}) \in \B_{1:\tx}) > 0,
\end{equation}
we may then define
$\Qt \coloneqq \E[\Yt(\ax_{1:\tx}) \mid \X_0 \in \B_0, \Xt_{1:\tx}(\X_0, \ax_{1:\tx}) \in \B_{1:\tx}]$, i.e.\
the analogue of $\Q$ for the twin.
By Proposition \ref{prop:interventional-correctness-alternative-characterisation}, if the twin is interventionally correct, then $\Qt = \Q$.
Theorem \ref{thm:causal-bounds} therefore implies that the following hypotheses have our desired property \eqref{eq:falsification-hypothesis-property}:
\begin{gather*}
\text{$\Hlo$: If \eqref{eq:B-positivity-assumption}, \eqref{eq:Y-boundedness-assumption}, and \eqref{eq:B-twin-positivity-assumption} hold, then $\Qt \geq \Qlo$} \\
\text{$\Hup$: If \eqref{eq:B-positivity-assumption}, \eqref{eq:Y-boundedness-assumption}, and \eqref{eq:B-twin-positivity-assumption} hold, then $\Qt \leq \Qup$.}
\end{gather*}
Moreover, $\Hlo$ and $\Hup$ can in principle be determined to be true or false from the information we have assumed available, since $\Qlo$ and $\Qup$ depend only on the observational data, and $\Qt$ can be estimated by generating trajectories from the twin.

When either $\Hlo$ or $\Hup$ is falsified, it immediately follows that the twin is not interventionally correct.
However, even more than this, a falsification describes a concrete failure mode with various potential implications downstream, which is considerably more useful information about the twin in practice.
For example, if $\Hlo$ is false (i.e.\ if $\Qt < \Qlo$), it follows that, among those trajectories for which $(\X_0, \Xt_{1:\tx}(\X_0, \ax_{1:\tx})) \in \B_{0:\tx}$, the mean of $\Yt(\ax_{1:\tx})$ is too small.
(Higher moments could also be considered by choosing $\fx$ appropriately.) %
In light of this, a user might choose not to rely on outputs of the twin produced under these circumstances, while a developer seeking to improve the twin could focus their attention on the specific parts of its implementation that give rise to this behaviour.
We illustrate this concretely through our case study in Section \ref{sec:case-study}.

\subsection{Exact testing procedure} \label{sec:statistical-methodology}

We now describe a procedure for testing $\Hlo$ and $\Hup$ using a finite dataset that obtains exact control over type I error without relying on additional assumptions or asymptotic approximations.
We show in our case study below that this procedure is nevertheless powerful enough to obtain useful information about a twin in practice.

We focus here on obtaining a p-value for $\Hlo$ given a fixed choice of parameters $(\tx, \fx, \ax_{1:\tx}, \B_{0:\tx})$.
Our procedure for $\Hup$ is symmetrical, or may be regarded as a special case of testing $\Hlo$ by replacing $\fx$ with $-\fx$.
Multiple hypotheses may then be handled via standard techniques such as the method of \cite{holm1979simple} (which we use in our case study) or of \cite{benjamini2001control}, both of which apply without additional assumptions.

As above, we assume access to a finite dataset $\D$ of i.i.d.\ copies of \eqref{eq:observed-data-trajectory}.
We also assume access to a dataset $\Dt(\ax_{1:\tx})$ of i.i.d.\ copies of
\begin{equation} \label{eq:twin-trajecs}
    \X_0, \Xt_1(\X_0, \ax_1), \ldots, \Xt_\tx(\X_0, \ax_{1:\tx}).
\end{equation}
In practice, these copies can be obtained by initialising the twin with some value $\X_0$ taken from $\D$ without replacement and supplying inputs $\ax_{1:\tx}$.
If each $\X_0$ in $\D$ is used to initialize the twin at most once, then the resulting trajectories in $\Dt(\ax_{1:\tx})$ are guaranteed to be i.i.d., since we assumed in Section \ref{sec:causal-formulation} that the potential outcomes $\Xt_\tx(\xx_0, \ax_{1:\tx})$ produced by the twin are independent across runs.
We adopt this approach in our case study.

Observe that $\Hlo$ is false only if \eqref{eq:B-positivity-assumption}, \eqref{eq:Y-boundedness-assumption}, and \eqref{eq:B-twin-positivity-assumption} all hold.
We account for this in our testing procedure as follows.
First, \eqref{eq:Y-boundedness-assumption} immediately follows if 
\begin{equation} \label{eq:fx-boundedness-assumption}
    \ylo \leq \fx(\xx_{0:\tx}) \leq \yup \qquad \text{for all $\xx_{0:\tx} \in \B_{0:\tx}$.}
\end{equation}
This holds automatically in certain cases, such as for binary outcomes (e.g.\ patient survival), or otherwise can be enforced simply by clipping the value of $\fx$ to live within $[\ylo, \yup]$.
We describe a practical means for choosing $\fx$ in this way in Section \ref{sec:case-study}.

To account for \eqref{eq:B-positivity-assumption} and \eqref{eq:B-twin-positivity-assumption}, we simply check whether there exists some trajectory in $\D$ with $\A_{1:\tx} = \ax_{1:\tx}$ and $\X_{0:\tx}(\A_{1:\tx}) \in \B_{0:\tx}$, and some trajectory in $\Dt(\ax_{1:\tx})$ with $(\X_0, \Xt_{1:\tx}(\X_0, \ax_{1:\tx})) \in \B_{0:\tx}$.
If there are not, then we refuse to reject $\Hlo$ at any significance level; otherwise, we proceed to test $\Qt \geq \Qlo$ as described next.
It easily follows that there is zero probability we will reject $\Hlo$ if \eqref{eq:B-positivity-assumption} and \eqref{eq:B-twin-positivity-assumption} do not in fact hold, and so our overall type I error is controlled at the desired level.

To test $\Qt \geq \Qlo$, we begin by constructing a one-sided lower confidence interval for $\Qlo$, and a one-sided upper confidence interval for $\Qt$.
In detail, for each significance level $\alpha \in (0, 1)$, we obtain $\qlo{\alpha}$ and $\qt{\alpha}$ as functions of $\D$ and $\Dt(\ax_{1:\tx})$ such that
\begin{equation}
    \Prob(\Qlo \geq \qlo{\alpha}) \geq 1 - \frac{\alpha}{2} \qquad\,\, \Prob(\Qt \leq \qt{\alpha}) \geq 1 - \frac{\alpha}{2}. \label{eq:q-confidence-interval-new}
\end{equation}
We will also ensure that these are nested, i.e.\ $\qlo{\alpha} \leq \qlo{\alpha'}$ and $\qt{\alpha'} \leq \qt{\alpha}$ if $\alpha \leq \alpha'$.
We describe two methods for obtaining $\qlo{\alpha}$ and $\qt{\alpha}$ satisfying these conditions below.

From these confidence intervals, we obtain a test for the hypothesis $\Qt \geq \Qlo$ that rejects when $\qt{\alpha} < \qlo{\alpha}$.
A straightforward argument given in Section \ref{sec:hyp-testing-supplement} of the \AppendixName shows that this controls the type I error at the desired level $\alpha$.
Nestedness also yields a $p$-value obtained as the smallest value of $\alpha$ for which this test rejects, i.e.\
$\plo \coloneqq \inf\{\alpha \in (0, 1) \mid \text{$\qt{\alpha} < \qlo{\alpha}$}\}$,
or $1$ if the test does not reject at any level.

We now consider how to obtain confidence intervals for $\Qlo$ and $\Qt$ satisfying the desired conditions above.
To this end, observe that both quantities are (conditional) expectations involving random variables that can be computed from $\D$ or $\Dt(\ax_{1:\tx})$.
This allows both to be estimated unbiasedly, which in turn can be used to derive confidence intervals via standard techniques.
For example, consider the subset of trajectories in $\Dt(\ax_{1:\tx})$ with $(\X_0, \Xt_{\tx}(\X_0, \ax_{1:\tx})) \in \B_{0:\tx}$.
For each such trajectory, we obtain a corresponding value of $\Yt(\ax_{1:\tx})$ that is i.i.d.\ and has expectation $\Qt$.
Similarly, for $\Qlo$, we extract the subset of trajectories in $\D$ for which $\X_{0:\N}(\A_{1:\N}) \in \B_{0:\N}$ holds. %
The values of $\Ylo$ obtained from each such trajectory are then i.i.d.\ and have expectation $\Qlo$.

At this point, our problem now reduces to that of constructing a confidence interval for the expectation of a random variable using i.i.d.\ copies of it.
Various techniques exist for this, and we consider two possibilities in our case study.
The first leverages the fact that $\Yt(\ax_{1:\tx})$ and $\Ylo$ are bounded in $[\ylo, \yup]$, which gives rise to $\qlo{\alpha}$ and $\qt{\alpha}$ via an application of Hoeffding's inequality.
This approach has the appealing property that \eqref{eq:q-confidence-interval-new} holds exactly, although often at the expense of conservativeness.
In practice, this could be mitigated by instead obtaining confidence intervals via (for example) the bootstrap \citep{efron1979bootstrap}, although at the expense of requiring (often mild) asymptotic assumptions.
Section \ref{sec:confidence-intervals-methodology-supplement} of the \AppendixName describes both methods in greater detail.
Our empirical results reported in the next section all use Hoeffding's inequality are hence exact, but we also provide additional results using bootstrapping in Section \ref{sec:boostrapping-details-supplement} of the \AppendixName.

\section{Case Study: Pulse Physiology Engine} \label{sec:case-study}

\subsection{Experimental setup} \label{sec:experimental-setup}

We applied our assessment methodology to the Pulse Physiology Engine \citep{pulse}, an open-source model for human physiology simulation.
Pulse simulates trajectories of various physiological metrics for patients with conditions like sepsis, COPD, and ARDS.
We describe the main steps of our experimental procedure and results below, with full details given in the Section \ref{sec:experiments-supplement} of the \AppendixName. 

We utilized the MIMIC-III dataset \citep{mimic}, a comprehensive collection of longitudinal health data from critical care patients at the Beth Israel Deaconess Medical Center (2001-2012).
We focused on patients meeting the sepsis-3 criteria \citep{sepsis-criteria}, following the methodology of \cite{ai-clinician} for selecting these.
This yielded 11,677 sepsis patient trajectories.
We randomly selected 5\% of these (583 trajectories, denoted as $\D_0$) to use for choosing the parameters of our hypotheses via a sample splitting approach \citep{cox1975note}, with the remaining 95\% (11,094 trajectories, denoted as $\D$) reserved for the actual testing.

We considered hourly observations of each patient over the the first four hours of their ICU stay, i.e.\ $\T = 4$.
We defined the observation spaces $\Xspace_{0:\T}$ using a total of 17 features included in our extracted MIMIC trajectories for this time period, including static demographic quantities and patient vitals.
Following \cite{ai-clinician}, the actions we considered involved the administration of intravenous fluids and vasopressors, which both play a primary role in the treatment of sepsis in clinical practice.
Since these are recorded in MIMIC as continuous doses, we discretised their values via the same procedure as \cite{ai-clinician}, obtaining finite action spaces $\Aspace_1 = \cdots = \Aspace_4$, each with $25$ distinct actions.

We defined a collection of hypothesis parameters $(\tx, \fx, \ax_{1:\tx}, \B_{0:\tx})$, each of which we then used to define an $\Hlo$ and $\Hup$ to test.
For this, we chose 14 different physiological quantities of interest to assess, including heart rate, skin temperature, and respiration rate (see Table \ref{tab:hypotheses_hoeffding_full} in the \AppendixName for a complete list).
For each of these, we selected combinations of $\tx$, $\ax_{1:\tx}$, and $\B_{0:\tx}$ observed for at least one patient trajectory in $\D_0$.
We took \(\ylo\) and \(\yup\) to be the .2 and .8 quantiles of the same physiological quantity as was recorded in $\D_0$, and defined $\fx$ as the function that extracts this quantity from \(\Xspace_\tx\) and clips its value between \(\ylo\) and \(\yup\), so that \eqref{eq:fx-boundedness-assumption} holds.
We describe this procedure in full in Section \ref{sec:hypothesis-parameters-supplement} of the \AppendixName.
We also investigated the sensitivity of our procedure to the choice of $\ylo$ and $\yup$ and found it to be relatively stable: see Section \ref{sec:sensitity-analysis-appendix} of the \AppendixName.
We obtained 721 unique parameter choices, each of which produced two hypotheses $\Hlo$ and $\Hup$, leading to 1,442 hypotheses in total.

We generated data from Pulse to test the chosen hypotheses.
For each $\ax_{1:\tx}$ occurring in any of our hypotheses, we obtained the dataset $\Dt(\ax_{1:\tx})$ as described in Section \ref{sec:statistical-methodology}.
Specifically, we sampled $\X_0$ without replacement from $\D$, and used this to initialize a twin trajectory.
Then, at each hour $\tx' \in \{1, \ldots, 4\}$ in the simulation, we administered a dose of intravenous fluids and vasopressors corresponding to $\ax_{\tx'}$ and recorded the resulting patient features generated by Pulse. 
We describe this procedure in full in Section \ref{sec:pulse-trajectories-supplement} in the \AppendixName.
This produced a total of 26,115 simulated trajectories.

\subsection{Hypothesis rejections} 

We tested the hypotheses just described using our methodology from Section \ref{sec:statistical-methodology}.
Here we report the results when using Hoeffding's inequality to obtain confidence intervals for $\Qlo$, $\Qup$, and $\Qt$.
We also tried confidence intervals obtained via bootstrapping, and obtained similar if less conservative results (see Section \ref{sec:experiments-supplement} of the \AppendixName).
We used the Holm-Bonferroni method to adjust for multiple tests, with family-wise error rate of 0.05.

\begin{table}%
    \centering
\begin{footnotesize}
\begin{tabular}{l|cc|cc}
& \multicolumn{2}{c}{Ours} & \multicolumn{2}{c}{Manski} \\
                   Physiological quantity &  Rejs. &  Hyps. &  Rejs. &  Hyps. \\
\midrule
  Chloride Blood Concentration (Chloride) &            24 &            94 & 1 & 46  \\
      Sodium Blood Concentration (Sodium) &            21 &            94 & 9 & 46  \\
Potassium Blood Concentration (Potassium) &            13 &            94 & 0 & 46  \\
                  Skin Temperature (Temp) &            10 &            86 & 9 & 46  \\
    Calcium Blood Concentration (Calcium) &             5 &            88 & 0 & 46  \\
    Glucose Blood Concentration (Glucose) &             5 &            96 & 1 & 46  \\
      Arterial CO$_2$ Pressure (paCO$_2$) &             3 &            70 & 0 & 46  \\
Bicarbonate Blood Concentration (HCO$_3$) &             2 &            90 & 1 & 46  \\
       Systolic Arterial Pressure (SysBP) &             2 &           154 & 0 & 46  \\
\bottomrule
\end{tabular}
\end{footnotesize}
    \caption{Total hypotheses (Hyps.) and rejections (Rejs.) per physiological quantity using our causal bounds, as well as those of \cite{manski}} \label{tab:hypotheses}
\end{table}

We obtained rejections for hypotheses corresponding to 10 different physiological quantities shown in Table \ref{tab:hypotheses}.
(Table \ref{tab:hypotheses_hoeffding_full} in the \AppendixName shows all hypotheses we tested, including those not rejected.)
We may therefore infer that, at a high level, Pulse does not simulate these quantities accurately for the population of sepsis patients we consider.
This appears of interest in a variety of downstream settings: for example, a developer could use this information when considering how to improve the accuracy of Pulse, while a practitioner using Pulse may wish to rely less on these outputs as a result.

To assess the relative performance of our bounds from Theorem \ref{thm:causal-bounds} compared with the unconditional bounds of \citet{manski}, we also reran this analysis with each $\tx$, $\fx$, and $\ax_{1:\tx}$ chosen as before, but with each $\B_{0:\tx}$ now set to the whole space $\Xspace_{0:\tx}$.
This in turn led to fewer hypotheses, namely 690 in total, which were evenly divided between hypotheses of the form $\E[\Yt(\ax_{1:\tx})] \geq \E[\Ylo]$ and those of the form $\E[\Yt(\ax_{1:\tx})] \leq \E[\Yup]$.
We kept all other aspects of our procedure the same as just described, including our method for obtaining confidence intervals and controlling for multiple testing.
The number of rejections that we obtained in this case is also shown in Table \ref{tab:hypotheses}.
As anticipated by our discussion in Section \ref{sec:informativeness-of-our-bounds}, the conservativeness of Manski's bounds led to considerably fewer rejections than our more general result given in Theorem \ref{thm:causal-bounds}, even when considered as a proportion of the total hypotheses we tested.

\subsection{p-value plots}

To obtain more granular information about the failure modes of the twin just identified, we examined the $p$-values obtained for each hypothesis $\Hlo$ and $\Hup$ tested using our causal bounds, which we denote here by $\plo$ and $\pup$.
Figure \ref{fig:p_values} shows the distributions of $-\log_{10}{\plo}$ and $-\log_{10}{\pup}$ that we obtained for all physiological quantities for which some hypothesis was rejected.
(The remaining $p$-values are shown in Figure \ref{fig:p_values_hoeff_complete} in the \AppendixName.)
Notably, in each row, one distribution is always tightly concentrated at $-\log_{10} p = 0$ (i.e.\ $p = 1$).
This means that, for all physiological outcomes of interest, there was either very little evidence in favour of rejecting any $\Hlo$, or very little in favour of rejecting any $\Hup$.
In other words, across configurations of $(\tx, \fx, \ax_{1:\tx}, \B_{0:\tx})$ that were rejected, the twin consistently either underestimated or overestimated each quantity on average.
For example, Pulse consistently underestimated chloride blood concentration and skin temperature, while it consistently overestimated sodium and glucose blood concentration levels.
Like Table \ref{tab:hypotheses}, this information appears of interest and actionable in a variety of downstream tasks.

\begin{figure}[t]
    \centering
    \includegraphics[height=4.5cm]{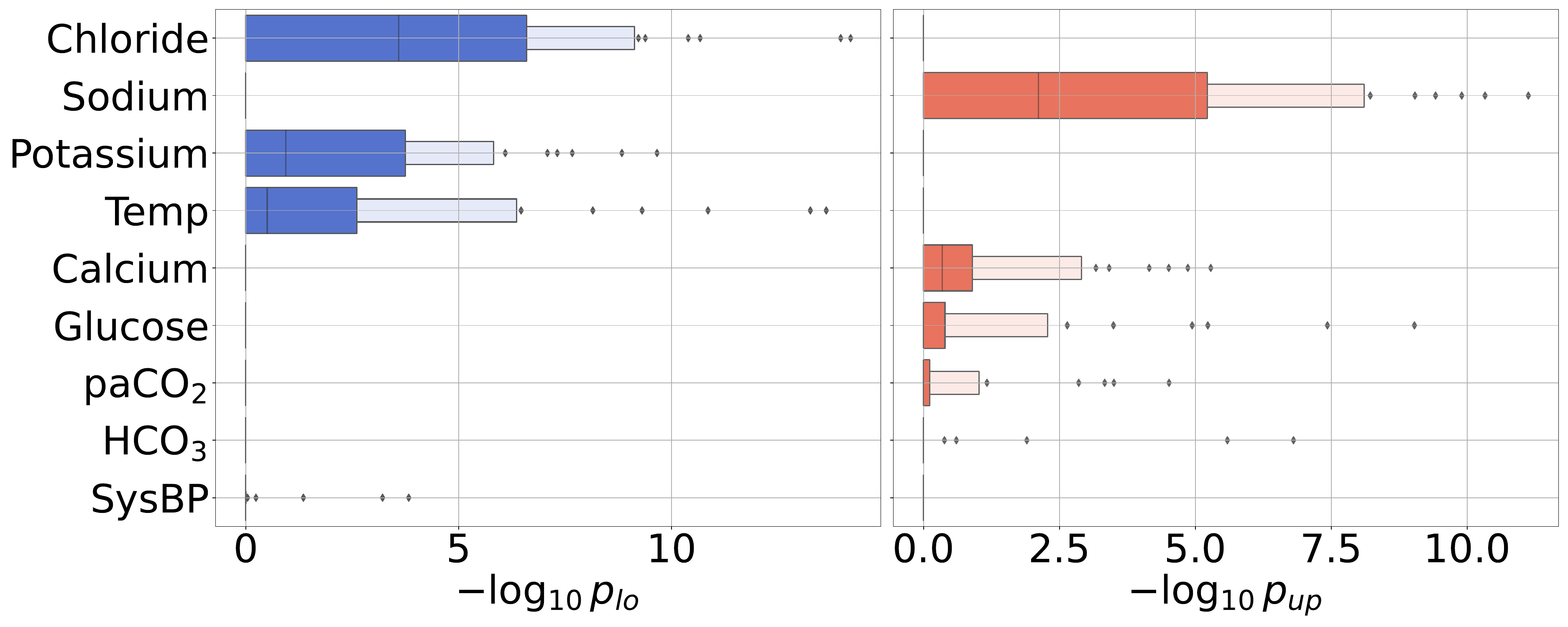}
    \caption{Distributions of $-\log_{10}{p_\textup{lo}}$ and $-\log_{10}{p_\textup{up}}$ across hypotheses, grouped by physiological quantity. Higher values indicate greater evidence in favour of rejection.}
    \label{fig:p_values}
\end{figure}

\subsection{Pitfalls of naive assessment}

A naive approach to twin assessment involves simply comparing the output of the twin with the observational data directly, without accounting for causal considerations.
We now show that, unlike our methodology, the results produced in this way can be potentially misleading.
In Figure \ref{fig:longitudinal_plots}, for two different choices of $(\ax_{1:4}, \B_{1:4})$, we plot estimates of $\Qt_\tx$ and $\Q^{\textup{obs}}_\tx$ for $\tx \in \{1, \ldots, 4\}$, where
\begin{align*}
    \Qt_\tx &\coloneqq \E[\Yt(\ax_{1:\tx})\mid \X_0 \in \B_0, \Xt_{1:\tx}(\X_0, \ax_{1:\tx}) \in \B_{1:\tx}] \\
    \Q^{\textup{obs}}_\tx &\coloneqq \E[\Y(\A_{1:\tx})\mid \X_{0:\tx}(\A_{1:\tx})\in \B_{0:\tx}, \A_{1:\tx}=\ax_{1:\tx}].
\end{align*}
\begin{figure}%
    \centering
    \includegraphics[height=4cm]{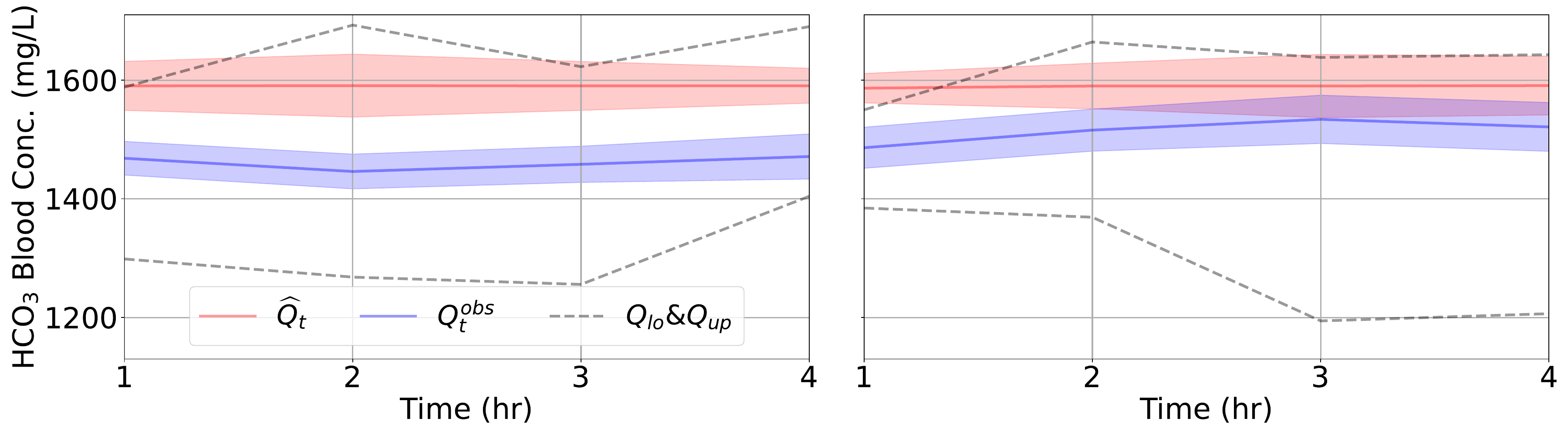}
    \caption{Estimates and 95\% confidence intervals for $\Qt_\tx$ and $\Q^{\textup{obs}}_\tx$ at each $1\leq \tx \leq 4$ for two choices of $(\B_{0:4}, \ax_{1:4})$, where $\Yt(\ax_{1:\tx})$ and $\Y(\ax_{1:\tx})$ correspond to HCO$_3$ concentration.
    The dashed lines indicate lower and upper 95\% confidence intervals for $\Qlo, \Qup$ respectively.  }
    \label{fig:longitudinal_plots}
\end{figure}
Here $\Qt_\tx$ is just $\Qt$ as defined above with its dependence on $\tx$ made explicit.
Each plot also shows one-sided 95\% confidence intervals on $\Qlo$ and $\Qup$ at each $\tx \in \{1, \ldots, 4\}$ obtained from Hoeffding's inequality. 
Directly comparing the estimates of $\Qt_\tx$ and $\Q^{\textup{obs}}_\tx$ would suggest that the twin is comparatively more accurate for the right-hand plot, as these estimates are closer to one another in that case.
However, the output of the twin in the right-hand plot is falsified at $\tx = 1$, as can be seen from the fact that confidence interval for $\Qt_1$ lies entirely above the one-sided confidence interval for $\Qup$ at that timestep.
On the other hand, the output of the twin in the left-hand plot is not falsified at any of the timesteps shown, so that the twin may in fact be accurate for these $(\ax_{1:4}, \B_{1:4})$, contrary to what a naive assessment strategy would suggest.
Our methodology provides a principled means for twin assessment that avoids drawing potentially misleading inferences like this.

A similar phenomenon appears in Figure \ref{fig:histograms}, which for two choices of $\B_{0:\tx}$ and $\ax_{1:\tx}$ shows histograms of raw glucose values obtained from the observational data conditional on $\A_{1:\tx}=\ax_{1:\tx}$ and $\X_{0:\tx}(\A_{1:\tx})\in \B_{0:\tx}$, and from the twin conditional on $\Xt_{0:\tx}(\ax_{1:\tx})\in \B_{0:\tx}$.
Below each histogram we also show 95\% confidence intervals for $\Qup$ and $\Qt$ obtained from Hoeffding's inequality.
While Figures \ref{fig:glucosea} and \ref{fig:glucoseb} appear visually very similar, the inferences produced by our testing procedure are different: the hypothesis corresponding to the right-hand plot is rejected, since there is no overlap between the confidence intervals underneath, while the hypothesis corresponding to the left-hand plot is not.
This was not an isolated case and several other examples of this phenomenon are shown in Figure \ref{fig:histograms-supplement} in the \AppendixName. 
This demonstrates that the inferences obtained from our procedure do not depend only on the distribution of observed outcomes (which is essentially the same for both cases).
Instead, as discussed in Section \ref{sec:causal-bounds-statement}, these also account for the worst-case effects of unmeasured confounding that may exist in the observational data.

\begin{figure}%
    \centering
    \begin{subfigure}[b]{0.5\textwidth}
    \centering
    \includegraphics[height=5cm]{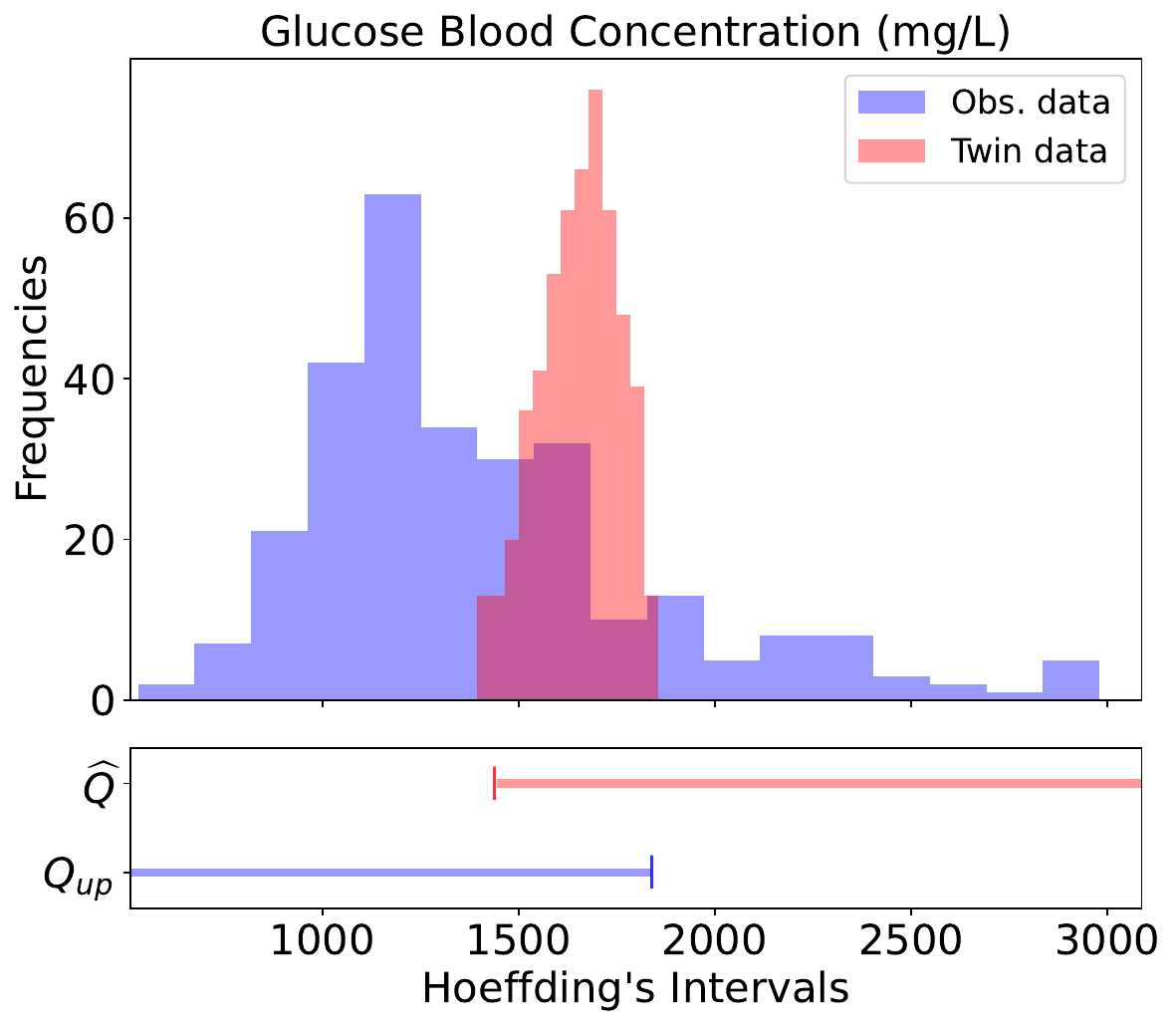}
    \subcaption{Not rejected}
    \label{fig:glucosea}
    \end{subfigure}%
    \begin{subfigure}[b]{0.5\textwidth}
    \centering
    \includegraphics[height=5cm]{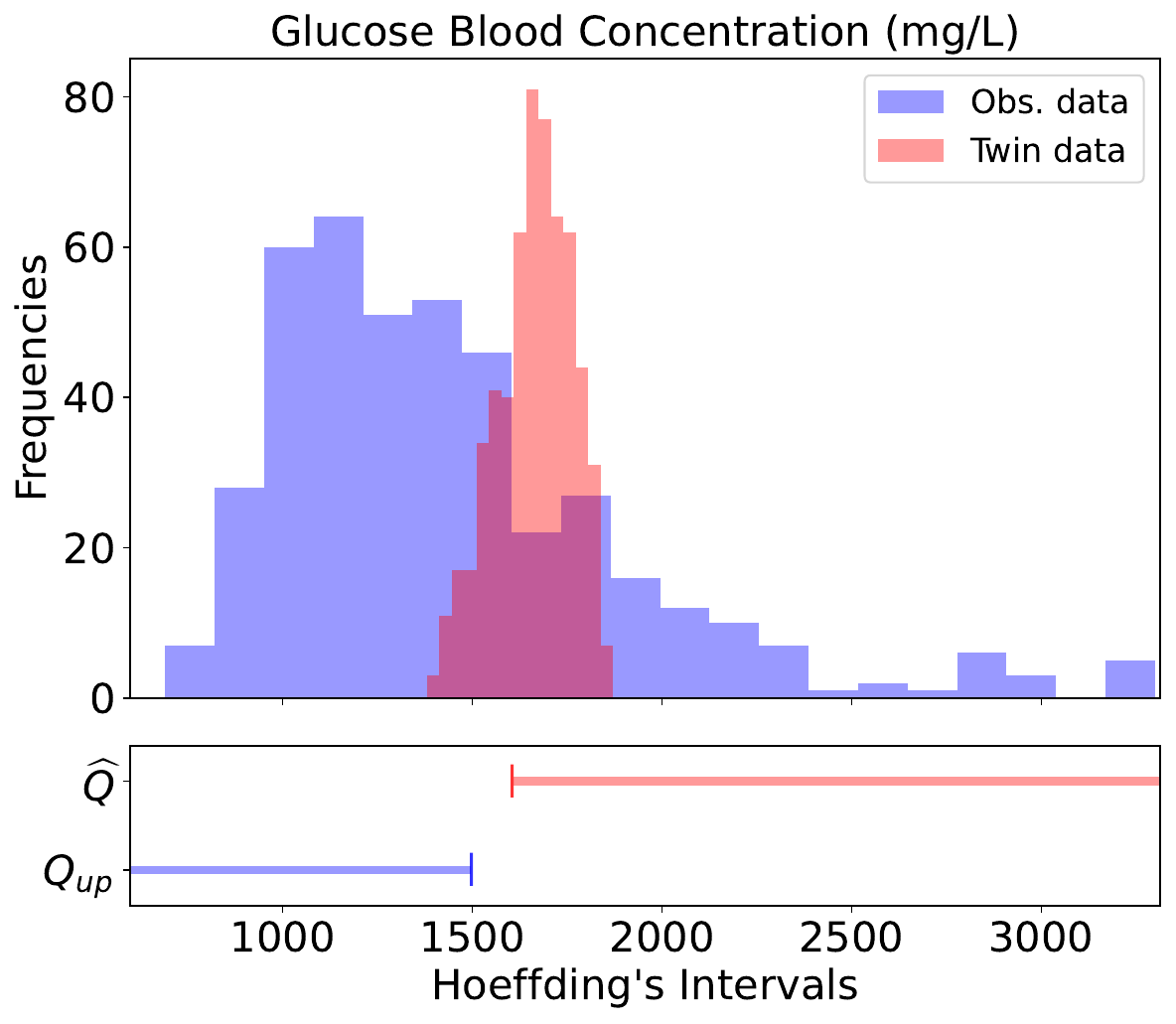}
    \subcaption{Rejected}
    \label{fig:glucoseb}
    \end{subfigure}
    \caption{Raw glucose values from the observational data and twin for two choices of $(\B_{0:\tx}, \ax_{1:\tx})$, with confidence intervals for $\Qt$ and $\Qup$ shown below. The horizontal axes are truncated to the .025 and .975 quantiles of the observational data for clarity. Untruncated plots are shown in Figure \ref{fig:histograms-supplement} of the \AppendixName.}
    \label{fig:histograms}
\end{figure}

\section{Discussion}

We have advocated for a causal approach to digital twin assessment, and have presented a statistical procedure for doing so that obtains rigorous theoretical guarantees under minimal assumptions.
We now highlight the key limitations of our approach.
Importantly, our methodology implicitly assumes that there is no distribution shift between testing and deployment time.
If the conditional distribution of $\X_{1:\T}(\ax_{1:\T})$ given $\X_0$ changes at deployment time, then so too does the set of twins that are interventionally correct, and if this change is significant enough, our assessment procedure may yield misleading results.
Distribution shift in this sense is a separate issue to unobserved confounding, and arises in a wide variety of statistical problems beyond ours.

Additionally, the procedure we used in our case study to choose the hypothesis parameters $\B_{0:\tx}$ was ad hoc.
For scalability, it would likely be necessary to obtain $\B_{0:\tx}$ via a more automated procedure.
It may also be desirable to choose $\B_{0:\tx}$ dynamically in light of previous hypotheses tested, zooming in to regions containing possible failure modes to obtain increasingly granular information about the twin.
We see opportunities here for using machine learning techniques, but leave this to future work.

Various other extensions and improvements appear possible.
For example, 
one can leverage ideas from the literature on partial identification \citep{manski2003partial} to obtain greater statistical efficiency, for example by building on the line of work initiated by \cite{imbens2004confidence} for obtaining more informative confidence intervals.
Beyond this, it may sometimes be useful to consider additional assumptions that lead to less conservative assessment results.
For example, various methods for \emph{sensitivity analysis} have been proposed that model the \emph{degree} to which the actions of the behavioural agent are confounded \citep{rosenbaum2002observational,tan2006distributional,yadlowsky2022bounds}.
This can yield tighter bounds on $\Q$ than are implied by Theorem \ref{thm:causal-bounds}, albeit at the expense of less robustness if these assumptions are violated.

        \section*{Acknowledgements}

The authors are highly appreciative of the troubleshooting and development assistance provided by the Pulse team.
RC, AD, and CH were supported by the Engineering and Physical Sciences Research Council (EPSRC) through the Bayes4Health programme [Grant number EP/R018561/1].
MFT was funded by Google DeepMind.
The authors declare there are no competing interests.

    }
    { %
    }

    \includepdf[pages=-]{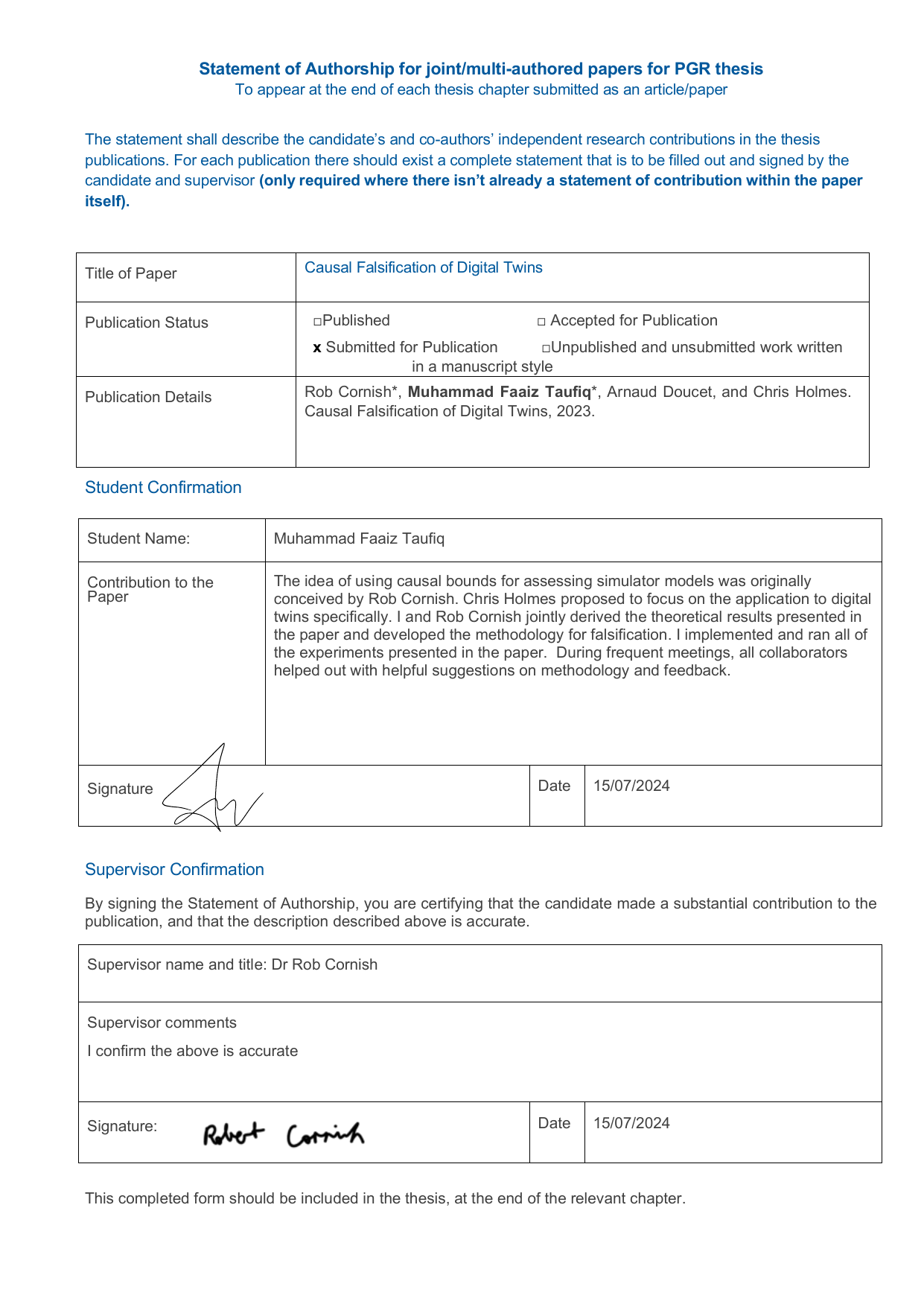}

\chapter{\label{ch:6-conclusion}Conclusion and Future Work} 

\minitoc

\section{Discussion}

Before deploying a decision-making policy to production, it is usually important to understand the
plausible range of outcomes that it may produce. However, due to resource or ethical constraints, it is
often not possible to obtain this understanding by testing the policy directly in the real-world. In such
cases we have to rely on off-policy evaluation (OPE), which uses observational data collected under a different behavioural policy to evaluate the target policy in some way. 
In this thesis, we have considered some of the challenges posed by the current OPE methodologies and proposed novel solutions to each of these individually.
We have also demonstrated the practical utility of these solutions by applying them to a range of real-world problems involving large scale datasets.
To be more specific, below we provide a brief summary of the challenges tackled in this thesis:
\begin{itemize}
    \item In Chapter \ref*{ch:3-mr} we consider the problem of variance reduction in OPE estimators. To address this, we propose the Marginal Ratio (MR) estimator, which uses a marginalization technique to provide a more efficient and robust estimator for contextual bandits, and may also be of interest in other domains such as causal inference. 
    \item Next, in Chapter \ref*{ch:4-copp} we provide a novel methodology of uncertainty quantification in off-policy outcomes based on conformal prediction \citep{vovk2005algorithmic}. Our proposed technique can help practitioners quantify the plausible range of outcomes that are likely to occur under the target policy, and comes with sound finite-sample probabilistic guarantees.
    \item Finally, in Chapter \ref*{ch:5-causal} we explore the case when the assumption of no unmeasured confounding (needed for the existing OPE methodologies) is violated. 
    We provide a set of novel causal bounds which remain valid in this case, and subsequently use these bounds to develop a procedure for robust assessment of digital twin models using observational data which remains valid under only the assumption of an i.i.d. dataset of observational trajectories. 
\end{itemize}

\section{Limitations}
Here, we outline some of the limitations of the methodologies described in this thesis. 

\paragraph*{Distributional shift in data generating mechanism}
Our methodologies highlighted in this thesis implicitly assume that the data generating process remains unchanged between testing and deployment times. 
Technically, for contextual bandits this assumption means that the conditional distribution of $Y$ given $(X, A)$ does not shift when the target policy is deployed. 
If this distribution changes at deployment time, then so too does the distribution of outcomes that \emph{would} be observed under the target policy. 
If this shift is significant enough, our methodologies may yield misleading results.  

\paragraph*{Estimation errors}
The techniques mentioned in Chapters \ref*{ch:3-mr} and \ref*{ch:4-copp} involve an additional estimation of marginal density ratios for importance sampling. 
While we outline straightforward regression methodologies for estimating these importance ratios directly, this step may still introduce an additional source of bias in the value estimation. 

\paragraph*{Scalability limitations}
Increasing data dimensionality may pose additional challenges for our solutions, especially those described in Chapters \ref*{ch:4-copp} and \ref*{ch:5-causal}. 
For example, in Chapter \ref*{ch:4-copp}, the estimation of importance ratios for our conformal off-policy prediction (COPP) algorithm may become more challenging when $(X, A)$ is high-dimensional, thereby yielding biased results.
Likewise in Chapter \ref*{ch:5-causal}, the procedure we used in our case study to choose the hypothesis parameters was ad hoc, which may not scale to high-dimensional datasets.
For scalability, it would likely be necessary to obtain these parameters via a more automated procedure.

\section{Directions for future work}
Our work in this thesis opens up several interesting avenues for future research. We highlight some of these below.

\paragraph*{Off-policy learning}
This thesis has largely focused on robust off-policy assessment methodologies. However, our findings are highly applicable to robust policy optimisation problems, where the data collected using an `old’ policy is used to learn a new policy. 
Proximal Policy Optimisation \citep{schulman2017proximal} is one such approach which has gained immense popularity and has been applied to reinforcement learning with human feedback \citep{lambert2022illustrating}. 
We believe that our MR estimator proposed in Chapter \ref*{ch:3-mr} applied to these methodologies could lead to improvements in the stability and convergence
of these optimisation schemes, given its favourable variance properties. 
Similarly, our conformal off-policy prediction (COPP) algorithm when applied to off-policy learning could be a
step towards robust policy learning by optimising the worst-case outcome \citep{stutz2021learning}.

\paragraph*{Addressing the curse of horizon in sequential decision-making}
Chapters \ref*{ch:3-mr} and \ref*{ch:4-copp} of this thesis specifically consider OPE in contextual bandits. 
This setting offers a strong foundational framework for conducting rigorous theoretical and empirical analyses, 
however, it would be interesting to extend the application of these methodologies to sequential decision frameworks.
While some follow-up works have attempted to apply our COPP algorithm to Markov Decision Processes \citep{foffano2023conformal, zhang2023conformal, kuipers2024conformal}, the obtained confidence sets become increasingly conservative with increasing time horizon. 
It is worth exploring methodologies for obtaining intervals which remain valid and informative even in sequential decision settings with large time horizons.

\paragraph*{Application to transfer learning}
Finally, our solutions in Chapters \ref*{ch:3-mr} and \ref*{ch:4-copp} involves learning importance ratios which may also be of interest in other domains beyond OPE. 
One such area is \emph{transfer learning} which considers cases where the testing data distribution is different from the training data distribution. 
Classical transfer learning methods rely on importance weighting to handle the distribution mismatch \citep{SHIMODAIRA2000Improving, sugiyama2007covariate, huang2007correcting, sugiyama2008direct,lu2021rethinking}. 
Our proposed regression techniques may be of interest for obtaining these weights efficiently in high-dimensional datasets in this setting.

\ifthenelse{\boolean{compileAppendices}}
    { %
        \startappendices
        \chapter{\label{app:mr}Marginal Density Ratio for Off-Policy
Evaluation in Contextual Bandits}

\minitoc

\newpage

\section{Proofs}
\begin{proof}[Proof of Lemma \ref{lemma:weights-est}]
First, we express the weights $w(y)$ as the conditional expectation as follows:
\begin{align*}
    w(y) &= \frac{\ptar(y)}{\pbeh(y)} \\
    &= \int_{\Xspace, \Aspace} \frac{\ptar(x,a,y)}{\pbeh(y)}\,\mathrm{d}a\, \mathrm{d}x\\
    &= \int_{\Xspace, \Aspace} \frac{\ptar(x,a,y)}{\pbeh(y)}\,\frac{\pbeh(x, a\mid y)}{\pbeh(x, a\mid y)}\,\mathrm{d}a\, \mathrm{d}x\\
    &= \int_{\Xspace, \Aspace} \frac{\ptar(x,a,y)}{\pbeh(x, a, y)}\,\pbeh(x, a\mid y)\,\mathrm{d}a\, \mathrm{d}x\\
    &= \int_{\Xspace, \Aspace} \rho(a, x)\,\pbeh(x, a\mid y)\,\mathrm{d}a\, \mathrm{d}x\\
    &= \Ebeh[\rho(A, X)\mid Y=y],
\end{align*}
where $\rho(a, x) = \frac{\ptar(x, a, y)}{\pbeh(x, a, y)} = \frac{\tar(a\mid x)}{\beh(a\mid x)}$.
Since conditional expectations can be defined as the solution of regression problem, the result follows. 
\end{proof}

\begin{proof}[Proof of Proposition \ref{tv_prop}] We have
\begin{align*}
    \textup{D}_f\left(\ptar(x,a,y)\,||\, \pbeh(x,a,y)\right) &= \Ebeh\left[ f\left( \frac{\ptar(X,A,Y)}{\pbeh(X,A,Y)}\right) \right]\\
    &= \Ebeh\left[ f\left( \frac{\tar(A\mid X)}{\beh(A\mid X)}\right) \right]\\
    &= \Ebeh\left[\Ebeh\left[ f\left( \frac{\tar(A\mid X)}{\beh(A\mid X)}\right) \Bigg| Y \right]\right]\\
    &\geq \Ebeh\left[ f\left( \Ebeh\left[\frac{\tar(A\mid X)}{\beh(A\mid X)}\Bigg| Y \right]\right) \right]\quad\text{(Jensen's inequality)}\\
    &= \Ebeh\left[ f\left( \frac{\ptar(Y)}{\pbeh(Y)} \right) \right]\\
    &= \textup{D}_f\left(\ptar(y)\,||\, \pbeh(y)\right).
\end{align*}
\end{proof}

\begin{proof}[Proof of Proposition \ref{prop:var_mr}]
Since $\Ebeh[\thetaipw] = \Ebeh[\thetamr] = \Etar[Y]$, we have that,  
\begin{align*}
    \Vbeh[\thetaipw] - \Vbeh[\thetamr] &= \Ebeh[\thetaipw]^2 - \Ebeh[\thetamr]^2 \\
    &= \frac{1}{n} \left(\Ebeh\left[\rho(A, X)^2\, Y^2 \right] - \Ebeh\left[w(Y)^2\, Y^2 \right] \right)\\
    &= \frac{1}{n} \left(\Ebeh\left[\Ebeh[\rho(A, X)^2\mid Y]\, Y^2 \right] - \Ebeh\left[w(Y)^2\, Y^2 \right] \right)\\
    &= \frac{1}{n} \left(\Ebeh\left[\Ebeh[\rho(A, X)^2\mid Y]\, Y^2 \right] - \Ebeh\left[\Ebeh[\rho(A, X)\mid Y]^2\, Y^2 \right] \right)\\
    &= \frac{1}{n} \Ebeh \left[ \Vbeh\left[ \rho(A, X) \mid Y \right]\, Y^2 \right].
\end{align*}
In the second last step above, we use the fact that $w(y) = \Ebeh[\rho(A, X)\mid Y=y]$. 
\end{proof}

\begin{proof}[Proof of Proposition \ref{prop:var_dr}]
Let $\hat{\mu}(a, x) \approx \E[Y\mid X=x, A=a]$ denote the outcome model in DR estimator. Then, using multiple applications of the law of total variance we get that
\begin{align*}
    n\, \Vbeh[\thetadr] &= \Vbeh\left[\rho(A, X)\,(Y - \hat{\mu}(A, X)) + \sum_{a'\in \Aspace} \hat{\mu}(a', X)\,\tar(a'\mid X)\right]\\
    &= \Vbeh\left[\rho(A, X)\,(Y - \hat{\mu}(A, X)) + \Etar[\hat{\mu}(A, X)\mid X]\right]\\
    &= \Ebeh[\Vbeh[\rho(A, X)\,(Y - \hat{\mu}(A, X)) + \Etar[\hat{\mu}(A, X)\mid X]\mid X, A]]\\
    &\quad+ \Vbeh[\Ebeh[\rho(A, X)\,(Y - \hat{\mu}(A, X)) + \Etar[\hat{\mu}(A, X)\mid X]\mid X, A]]\\
    &= \Ebeh[\rho(A, X)^2 \var[Y\mid X, A]] \\
    &\quad+ \Vbeh[\Ebeh[\rho(A, X)\,(Y - \hat{\mu}(A, X)) + \Ebeh[\rho(A, X)\,\hat{\mu}(A, X)\mid X]\mid X, A]]\\
    &= \Ebeh[\rho(A, X)^2 \var[Y\mid X, A]] \\
    &\quad+ \Vbeh[\rho(A, X)\,(\mu(A, X) - \hat{\mu}(A, X)) + \Ebeh[\rho(A, X)\,\hat{\mu}(A, X)\mid X]]\\
    &= \Ebeh[\rho(A, X)^2 \var[Y\mid X, A]] \\
    &\quad+ \Vbeh[\Ebeh[\rho(A, X)\,(\mu(A, X) - \hat{\mu}(A, X)) + \Ebeh[\rho(A, X)\,\hat{\mu}(A, X)\mid X]\mid X]] \\
    &\quad+ \Ebeh[\Vbeh[\rho(A, X)\,(\mu(A, X) - \hat{\mu}(A, X)) + \Ebeh[\rho(A, X)\,\hat{\mu}(A, X)\mid X]\mid X]]\\
    &= \Ebeh[\rho(A, X)^2 \var[Y\mid X, A]]+ \Vbeh[\Ebeh[\rho(A, X)\,\mu(A, X)\mid X]] \\
    &\quad+ \Ebeh[\Vbeh[\rho(A, X)\,(\mu(A, X) - \hat{\mu}(A, X))\mid X]]\\
    &\geq \Ebeh[\rho(A, X)^2 \var[Y\mid X, A]] + \Vbeh[\Ebeh[\rho(A, X)\,\mu(A, X)\mid X]].
\end{align*}
Using this, we get that
\begin{align*}
    &n (\Vbeh[\thetadr] - \Vbeh[\thetamr]) \\
    &\quad\geq \Ebeh[\rho(A, X)^2 \var[Y\mid X, A]] +  \Vbeh \left[\Ebeh[\rho(A, X) \,\mu(A, X)\mid X]\right] - \Vbeh[w(Y)\,Y].
\end{align*}
Again, using the law of total variance,
\begin{align*}
    \Vbeh[\rho(A, X)\,Y] &= \Ebeh[\Vbeh[\rho(A, X)\,Y \mid X, A]] + \Vbeh[\Ebeh[\rho(A, X)\,Y\mid X, A]]\\
    &= \Ebeh[\rho(A, X)^2\var[Y \mid X, A]] + \Vbeh[ \rho(A, X)\,\mu(A, X)]\\
    &= \Ebeh[\rho(A, X)^2\var[Y \mid X, A]] + \Vbeh \left[\Ebeh[\rho(A, X) \,\mu(A, X)\mid X]\right] \\
    &\quad+ \Ebeh \left[\Vbeh[\rho(A, X) \,\mu(A, X)\mid X]\right].
\end{align*}
Rearranging and substituting back into the expression earlier, we get that
\begin{align*}
    &n (\Vbeh[\thetadr] - \Vbeh[\thetamr]) \\
    &\quad\geq \Vbeh[\rho(A, X)\,Y] - \Ebeh \left[\Vbeh[\rho(A, X) \,\mu(A, X)\mid X]\right] - \Vbeh[w(Y)\,Y].
\end{align*}
Now, from Proposition \ref{prop:var_mr} we know that \begin{align*}
    n (\Vbeh[\thetaipw] - \Vbeh[\thetamr]) = \Vbeh[\rho(A, X)\,Y] - \Vbeh[w(Y)\,Y] = \Ebeh \left[ \Vbeh\left[ \rho(A, X) \mid Y \right]\, Y^2 \right].
\end{align*}
Therefore, 
\begin{align*}
    &n (\Vbeh[\thetadr] - \Vbeh[\thetamr]) \\
    &\quad\geq \Ebeh \left[ \Vbeh\left[ \rho(A, X) \mid Y \right]\, Y^2 \right] - \Ebeh \left[\Vbeh[\rho(A, X) \,\mu(A, X)\mid X]\right]\\
    &\quad= \Ebeh \left[\Vbeh\left[ \rho(A, X)\,Y \mid Y \right] - \Vbeh[\rho(A, X) \,\mu(A, X)\mid X] \right].
\end{align*}
\end{proof}

\begin{proof}[Proof of Theorem \ref{prop:mips_main_text}]
This result follows straightforwardly from Proposition \ref{prop:mips_generalised} in Appendix \ref{app:gmips}.    
\end{proof}

\begin{proof}[Proof of Proposition \ref{prop:bias-and-var-main}]
\begin{align*}
    \textup{Bias}(\thetaipw) &= \Ebeh[\hat{\rho}(A, X)\, Y] - \Etar[Y] \\
    &= \Ebeh\left[\Ebeh[\hat{\rho}(A, X)\mid Y]\,Y \right] - \Etar[Y]  \\
    &= \Ebeh[\hat{w}(Y)\, Y] - \Ebeh[\epsilon\, Y] - \Etar[Y] \\
    &= \textup{Bias}(\thetamr) - \Ebeh[\epsilon\, Y].
\end{align*}
Next, to prove the variance result, we first use the law of total variance to obtain
\begin{align*}
    \Vbeh[\thetaipw] &= \frac{1}{n} \Vbeh[\hat{\rho}(A, X)\,Y]\\
    &= \frac{1}{n} \left( \Vbeh[\Ebeh[\hat{\rho}(A, X)\,Y\mid Y]] + \Ebeh[\Vbeh[\hat{\rho}(A, X)\,Y\mid Y]]\right)\\
    &= \frac{1}{n} \left( \Vbeh[\tilde{w}(Y)\,Y] + \Ebeh[\Vbeh[\hat{\rho}(A, X)\,Y\mid Y]]\right).
\end{align*}
Moreover, using the fact that $\hat{w}(Y) = \tilde{w}(Y) + \epsilon$ we get that,
\begin{align*}
    \Vbeh[\thetamr] &= \frac{1}{n} \Vbeh[\hat{w}(Y)\,Y]\\
    &= \frac{1}{n} \Vbeh[\left(\tilde{w}(Y) + \epsilon \right)\,Y]\\
    &= \frac{1}{n} \left( \Vbeh[\tilde{w}(Y)\,Y] + \Vbeh[\epsilon\,Y] + 2\,\textup{Cov}(\tilde{w}(Y)\,Y, \epsilon\,Y)\right).
\end{align*}
Putting together the two variance expressions derived above, we get that
\begin{align*}
    &\Vbeh[\thetaipw] - \Vbeh[\thetamr]\\
    &\quad=
    \frac{1}{n}\left(\Ebeh[\Vbeh[\hat{\rho}(A, X)\mid Y]\,Y^2] - \Vbeh[\epsilon\,Y] - 2\,\textup{Cov}(\tilde{w}(Y)\,Y, \epsilon\,Y) \right).
\end{align*}

\end{proof}

\section{Comparison with extensions of the doubly robust estimator}\label{sec:dr-extensions}
In this section, we theoretically investigate the variance of MR against the commonly used extensions of the DR estimator, namely Switch-DR \citep{wang2017optimal} and DR with Optimistic Shrinkage (DRos) \citep{su2020doubly}. At a high level, these estimators seek to reduce the variance of the vanilla DR estimator by considering modified importance weights, thereby trading off the variance for additional bias.
Below, we provide the explicit definitions of these estimators for completeness.

\paragraph{Switch-DR estimator}
The original DR estimator can still have a high variance when the importance weights are large due to a large policy shift. Switch-DR \citep{wang2017optimal} aims to circumvent this problem by switching to DM when the importance weights are large:
\[
\thetaswitch \coloneqq \frac{1}{n} \sum_{i=1}^n \rho(a_i, x_i)\,(y_i - \hat{\mu}(a_i, x_i))\ind(\rho(a_i, x_i) \leq \tau) + \hat{\eta}(\tar),
\]
where $\tau \geq 0$ is a hyperparameter, $\hat{\mu}(a, x) \approx \E[Y \mid X=x, A=a]$ is the outcome model, and 
$$
\hat{\eta}(\tar) = \frac{1}{n} \sum_{i=1}^n \sum_{a'\in \Aspace} \hat{\mu}(a', x_i) \tar(a'\mid x_i) \approx \E_{\tar}[\hat{\mu}(A, X)]
$$
where $a_i^* \sim \tar(\cdot \mid x_i)$.

\paragraph{Doubly Robust with Optimal Shrinkage (DRos)}
DRos proposed by \citep{su2020doubly} uses new weights $\hat{\rho}_\lambda(a_i, x_i)$ which directly minimises sharp bounds on the MSE of the resulting estimator,
\[
\thetadros \coloneqq \frac{1}{n} \sum_{i=1}^n \hat{\rho}_\lambda(a_i, x_i)\,(y_i - \hat{\mu}(a_i, x_i)) + \hat{\eta}(\tar),
\]
where $\lambda \geq 0$ is a pre-defined hyperparameter and $\hat{\rho}_\lambda$ is defined as
\[
\hat{\rho}_\lambda(a, x) \coloneqq \frac{\lambda}{\rho^2(a, x) + \lambda}\, \rho(a, x).
\]
When $\lambda = 0$, $\hat{\rho}_\lambda(a, x) = 0$ leads to DM, whereas as $\lambda \rightarrow \infty$, $\hat{\rho}_\lambda(a, x) \rightarrow \rho(a, x)$ leading to DR.

More generally, both of these estimators can be written as follows:
\[
\hat{\theta}_{\textup{DR}}^{\tilde{\rho}} \coloneqq \frac{1}{n} \sum_{i=1}^n \tilde{\rho}(a_i, x_i)\,(y_i - \hat{\mu}(a_i, x_i)) + \hat{\eta}(\tar).
\]
Here, when $\tilde{\rho}(a, x) = \rho(a, x)\ind(\rho(a_i, x_i) \leq \tau)$, we recover the Switch-DR estimator and likewise when $\tilde{\rho}(a, x) = \hat{\rho}_\lambda(a, x)$, we recover DRos. 

\subsection{Variance comparison with the DR extensions}
Next, we provide a theoretical result comparing the variance of the MR estimator with these DR extension methods.
\begin{proposition}\label{prop:var_dr_extensions}
    When the weights $w(y)$ are known exactly and the outcome model is exact, i.e., $\hat{\mu}(a, x) = \mu(a, x) = \E[Y \mid X=x, A=a]$ in the DR estimator $\hat{\theta}_{\textup{DR}}^{\tilde{\rho}}$ defined above,
    \begin{align*}
    &\Vbeh[\hat{\theta}_{\textup{DR}}^{\tilde{\rho}}] - \Vbeh[\thetamr]\\
    &\qquad \qquad \qquad  \geq \frac{1}{n} \Ebeh \left[ \Vbeh\left[ \rho(A, X) \mid Y \right]\, Y^2 -  \Vbeh\left[ \rho(A, X)\mu(A, X) \mid X \right] \right] - \Delta,
\end{align*}
where $\Delta \coloneqq \frac{1}{n}\Ebeh\left[(\rho^2(A, X) - \tilde{\rho}^2(A, X))\,\V[Y\mid X, A]\right]$. 
\end{proposition}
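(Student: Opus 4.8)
\textbf{Proof plan for Proposition \ref{prop:var_dr_extensions}.}

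The plan is to reuse the variance decomposition of the DR estimator that was already carried out in the proof of Proposition \ref{prop:var_dr}, but track the effect of replacing the importance weights $\rho(a,x)$ by the modified weights $\tilde{\rho}(a,x)$. First I would recall from that proof that, with an exact outcome model $\hat{\mu}(a,x) = \mu(a,x)$, the DR estimator with weights $\rho$ satisfies
\[
n\,\Vbeh[\thetadr] = \Ebeh[\rho(A,X)^2\var[Y\mid X,A]] + \Vbeh[\Ebeh[\rho(A,X)\mu(A,X)\mid X]] + \Ebeh[\Vbeh[\rho(A,X)(\mu(A,X)-\hat\mu(A,X))\mid X]],
\]
and the last term vanishes when $\hat\mu = \mu$. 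The same chain of law-of-total-variance steps applies verbatim to $\hat{\theta}_{\textup{DR}}^{\tilde{\rho}}$, because nothing in that derivation used properties of $\rho$ beyond it being a fixed function of $(A,X)$; it only used that $\hat\eta(\tar) = \Ebeh[\rho(A,X)\hat\mu(A,X)\mid X]$ conditioned appropriately, and when $\hat\mu = \mu$ this control-variate term is still $\Etar[\mu(A,X)\mid X]$ regardless of $\tilde\rho$. So I would write
\[
n\,\Vbeh[\hat{\theta}_{\textup{DR}}^{\tilde{\rho}}] = \Ebeh[\tilde\rho(A,X)^2\var[Y\mid X,A]] + \Vbeh\big[\Ebeh[\tilde\rho(A,X)\mu(A,X)\mid X] + \text{(correction terms)}\big],
\]
being careful here: the subtlety is that when $\tilde\rho \neq \rho$, the term $\Ebeh[\tilde\rho(A,X)(Y - \mu(A,X))\mid X,A] = 0$ still holds pointwise (since $\Ebeh[Y-\mu(A,X)\mid X,A]=0$), so the cross terms behave exactly as before and the $\Vbeh$ part reduces to $\Vbeh[\Ebeh[\rho(A,X)\mu(A,X)\mid X]]$ — note $\rho$, not $\tilde\rho$, because the control variate $\hat\eta(\tar)$ uses the true policy ratio implicitly through $\tar$. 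Thus $n\,\Vbeh[\hat{\theta}_{\textup{DR}}^{\tilde{\rho}}] = \Ebeh[\tilde\rho(A,X)^2\var[Y\mid X,A]] + \Vbeh[\Ebeh[\rho(A,X)\mu(A,X)\mid X]]$.

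Next I would compare this with the expression $n\,\Vbeh[\thetadr] = \Ebeh[\rho(A,X)^2\var[Y\mid X,A]] + \Vbeh[\Ebeh[\rho(A,X)\mu(A,X)\mid X]]$ obtained when $\hat\mu=\mu$, giving
\[
n\,\Vbeh[\hat{\theta}_{\textup{DR}}^{\tilde{\rho}}] = n\,\Vbeh[\thetadr] - \Ebeh[(\rho(A,X)^2 - \tilde\rho(A,X)^2)\var[Y\mid X,A]] = n\,\Vbeh[\thetadr] - n\Delta.
\]
Then I would invoke the bound already proven in Proposition \ref{prop:var_dr}, namely $n(\Vbeh[\thetadr] - \Vbeh[\thetamr]) \geq \Ebeh[\Vbeh[\rho(A,X)Y\mid Y] - \Vbeh[\rho(A,X)\mu(A,X)\mid X]]$, and rewrite the first conditional variance using $\Vbeh[\rho(A,X)Y\mid Y] $; note that in Proposition \ref{prop:var_mr}'s proof this same quantity was shown to equal $\Ebeh[\Vbeh[\rho(A,X)\mid Y]Y^2]$ when aggregated, so after taking expectations I can present the right-hand side in the form $\Ebeh[\Vbeh[\rho(A,X)\mid Y]\,Y^2 - \Vbeh[\rho(A,X)\mu(A,X)\mid X]]$ matching the statement. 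Combining,
\[
n(\Vbeh[\hat{\theta}_{\textup{DR}}^{\tilde{\rho}}] - \Vbeh[\thetamr]) = n(\Vbeh[\thetadr] - \Vbeh[\thetamr]) - n\Delta \geq \Ebeh[\Vbeh[\rho(A,X)\mid Y]Y^2 - \Vbeh[\rho(A,X)\mu(A,X)\mid X]] - n\Delta,
\]
and dividing by $n$ gives the claim.

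The main obstacle I anticipate is bookkeeping the control-variate term correctly: one must be sure that replacing $\rho$ by $\tilde\rho$ in the IPW-like part of the DR estimator does \emph{not} change the $\Vbeh[\Ebeh[\rho(A,X)\mu(A,X)\mid X]]$ summand, because that summand originates from $\hat\eta(\tar) = \Etar[\mu(A,X)\mid X]$, which depends only on the true target policy and the exact outcome model, not on the (possibly clipped or shrunk) weights used for the residual correction. Getting the decomposition of $\Vbeh[\hat{\theta}_{\textup{DR}}^{\tilde{\rho}}]$ exactly right — and in particular verifying that the only term that changes relative to $\thetadr$ is the $\Ebeh[\tilde\rho^2\var[Y\mid X,A]]$ piece — is the crux; once that is established, the rest is a direct substitution into the already-proven Proposition \ref{prop:var_dr}.
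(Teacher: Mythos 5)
Your proposal is correct and follows essentially the same route as the paper: the key step in both is the law-of-total-variance decomposition conditioning on $(X,A)$, which shows that with $\hat{\mu}=\mu$ the modified weights enter only through the conditional-variance term, giving $n\,\Vbeh[\hat{\theta}_{\textup{DR}}^{\tilde{\rho}}] = \Ebeh[\tilde{\rho}^2(A,X)\,\V[Y\mid X,A]] + \Vbeh[\Ebeh[\rho(A,X)\mu(A,X)\mid X]]$, i.e.\ the variance of the exact-outcome-model DR estimator minus $n\Delta$. The only difference is organizational: you then invoke Proposition \ref{prop:var_dr} (with $\hat{\mu}=\mu$) to compare against $\thetamr$, whereas the paper re-derives that comparison inline via $\Ebeh[\rho^2\V[Y\mid X,A]] = \Vbeh[\rho Y]-\Vbeh[\rho\mu]$ and conditioning on $Y$; both are valid and yield the stated bound, and your observation that the control-variate term keeps $\rho$ (through $\Etar[\mu\mid X]$) rather than $\tilde{\rho}$ is exactly the point the paper's computation relies on.
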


\begin{proof}[Proof of Proposition \ref{prop:var_dr_extensions}]
Using the fact that $\hat{\mu}(a, x) = \mu(a, x) $ and the law of total variance, we get that
\begin{align*}
    n\,\Vbeh[\hat{\theta}_{\textup{DR}}^{\tilde{\rho}}] &= \Vbeh[\tilde{\rho}(A, X)\,(Y -\hat{\mu}(A, X)) + \sum_{a'\in \Aspace}\hat{\mu}(a', X)\tar(a'\mid X) ]\\
    &= \Vbeh[\tilde{\rho}(A, X)\,(Y -\hat{\mu}(A, X)) + \Etar[\hat{\mu}(A, X)\mid X] ]\\
    &= \Vbeh[\tilde{\rho}(A, X)\,(Y -\mu(A, X)) + \Etar[\mu(A, X)\mid X] ]\\
    &= \Vbeh[\Ebeh[\tilde{\rho}(A, X)\,(Y -\mu(A, X)) + \Etar[\mu(A, X)\mid X] \mid X, A]] \\
    &\qquad+ \Ebeh[\Vbeh[\tilde{\rho}(A, X)\,(Y -\mu(A, X)) + \Etar[\mu(A, X)\mid X]\mid X, A]]\\
    &= \Vbeh[\Etar[\mu(A, X)\mid X]] + \Ebeh[\tilde{\rho}^2(A, X)\V[Y\mid X, A]]\\
    &= \Vbeh[\Etar[\mu(A, X)\mid X]] + \Ebeh[\rho^2(A, X)\,\V[Y\mid X, A]] \\
    &\qquad+ \underbrace{\Ebeh[(\tilde{\rho}^2(A, X) - \rho^2(A, X))\,\V[Y\mid X, A]]}_{-n\,\Delta}\\
    &= \Vbeh[\Ebeh[\rho(A, X)\,\mu(A, X)\mid X]] + \Ebeh[\rho^2(A, X)\,\V[Y\mid X, A]] - n\,\Delta.
\end{align*}
    Again, using the law of total variance we can rewrite the second term on the RHS above as,
    \begin{align*}
        &\Ebeh[\rho^2(A, X)\,\V[Y\mid X, A]] \\
        &\quad= \Vbeh[\rho(A, X)\, Y] - \Vbeh[\rho(A, X)\,\mu(A, X)]\\
        &\quad= \Vbeh[\Ebeh[\rho(A, X)\mid Y]\, Y] + \Ebeh[\Vbeh[\rho(A, X)\mid Y]\,Y^2] \\
        &\qquad- \Vbeh[\rho(A, X)\,\mu(A, X)]\\
        &\quad= \Vbeh[w(Y)\, Y] + \Ebeh[\Vbeh[\rho(A, X)\mid Y]\,Y^2] - \Vbeh[\rho(A, X)\,\mu(A, X)]\\
        &\quad= n\,\Vbeh[\thetamr] + \Ebeh[\Vbeh[\rho(A, X)\mid Y]\,Y^2] - \Vbeh[\rho(A, X)\,\mu(A, X)].
    \end{align*}
    Putting this together, we get that
    \begin{align*}
        &n\,\Vbeh[\hat{\theta}_{\textup{DR}}^{\tilde{\rho}}] \\
        &\quad= n\,\Vbeh[\thetamr] + \Ebeh[\Vbeh[\rho(A, X)\mid Y]\,Y^2] - \Vbeh[\rho(A, X)\,\mu(A, X)] \\
        &\qquad+ \Vbeh[\Ebeh[\rho(A, X)\,\mu(A, X)\mid X]] - n\,\Delta\\
        &\quad= n\,\Vbeh[\thetamr] + \Ebeh[\Vbeh[\rho(A, X)\mid Y]\,Y^2] - \Ebeh[\Vbeh[\rho(A, X)\,\mu(A, X)\mid X]] - n\,\Delta,
    \end{align*}
    where in the last step above, we again use the law of total variance. Rearranging the above leads us to the result. 
\end{proof}
\paragraph{Intuition}
Note that for both of the DR extensions under consideration, the modified ratios $\tilde{\rho}(a, x)$ satisfy $0\leq \tilde{\rho}(a, x)\leq \rho(a, x)$ and hence $\Delta \geq 0$ (using the definition of $\Delta$ in Proposition \ref{prop:var_dr_extensions}).
When the modified ratios $\tilde{\rho}(a, x)$ are `close' to the true policy ratios $\rho(a, x)$, then using the definition of $\Delta$, we have that $\Delta \approx 0$. In this case, the result above provides a similar intuition to Proposition \ref{prop:var_dr} in the main text. Specifically, in this case we have that if $\Vbeh\left[ \rho(A, X)\,Y \mid Y \right]$ is greater than $\Vbeh\left[ \rho(A, X)\,\mu(A, X) \mid X \right]$ on average, the variance of the MR estimator will be less than that of the DR extension under consideration. 
Intuitively, this will occur when the dimension of context space $\Xspace$ is high because in this case the conditional variance over $X$ and $A$, $\Vbeh\left[\rho(A, X)\,Y \mid Y \right]$ is likely to be greater than the conditional variance over $A$, $\Vbeh\left[ \rho(A, X)\,\mu(A, X) \mid X \right]$.

In contrast if the modified ratios $\tilde{\rho}(a, x)$ differ substantially from $\rho(a, x)$, then $\Delta$ will be large and the variance of MR may be higher than that of the resulting DR extension. However, this comes at the cost of significantly higher bias in the DR extension and consequently MSE of the DR extension will be high in this case.

\section{Weight estimation error} \label{sec:wide_nns_weight_estimation}
In this section, we theoretically investigate the effects of using the estimated importance weights $\hat{w}(y)$ rather than $\hat{\rho}(a, x)$ on the bias and variance of the resulting OPE estimator. Further to our discussion in Section \ref{subsec:weight-estimation-error}, we focus in this section on the approximation error when using a wide neural network to estimate the weights $\hat{w}(y)$. To this end, we use recent results regarding the generalization of wide neural networks \citep{lai2023generalization} to show that the estimation error of the approximation step (ii) in the Section \ref{subsec:weight-estimation-error} declines with increasing number of training data when $\hat{w}(y)$ is estimated using wide neural networks. Before providing the main result, we explicitly lay out the assumptions needed.

\subsection{Using wide neural networks to approximate the weights $\hat{w}(y)$}
\begin{assumption}\label{assumption:weights-in-rkhs}
    Let 
    $
    \tilde{w}(y) \coloneqq \Ebeh[\hat{\rho}(A, X)\mid Y=y].
    $
    Suppose $\tilde{w}  \in \mathcal{H}_1$ and $||\tilde{w}||_{\mathcal{H}_1} \leq R$ for some constant $R$, where $\mathcal{H}_1$ is the reproducing kernel Hilbert space (RKHS) associated with the Neural Tangent Kernel $K_1$ associated with 2 layer neural network defined on $\mathbb{R}$.
\end{assumption}
\begin{assumption}\label{assumption:outcome-bounded}
    There exists an $M \in [0, \infty)$ such that $\mathbb{P}_{\beh}(|Y| \leq M) = 1$.
\end{assumption}

\begin{assumption}\label{assumption:pol-ratios-bounded}
$\hat{\rho}(a_i, x_i)$ satisfies
    \begin{align*}
        \hat{\rho}(a_i, x_i) = \tilde{w}(y_i) + \eta_i,
    \end{align*}
    where $\eta_i \overset{\textup{iid}}{\sim} \mathcal{N}(0, \sigma^2)$ for some $\sigma > 0$. 
\end{assumption}

\begin{theorem}\label{prop:bias-and-var-v3}
Suppose that the IPW and MR estimators are defined as,
\[
\approxipw \coloneqq \frac{1}{n}\sum_{i=1}^n\hat{\rho}(a_i, x_i)\, y_i, \quad \text{and }\quad \approxmr \coloneqq \frac{1}{n}\sum_{i=1}^n\hat{w}_m(y_i)\, y_i,
\]
where $\hat{w}_m(y)$ is obtained by regressing to the estimated policy ratios $\hat{\rho}(a, x)$ using $m$ i.i.d. training samples $\Dtr \coloneqq \{(x^\tr_i, a^\tr_i, y^\tr_i)\}_{i=1}^m$, i.e., by minimising the loss
\begin{align*}
    \mathcal{L}(\phi) = \E_{(X, A, Y)\sim \Dtr} \left[\left(\hat{\rho}(A, X) - f_{\phi}(Y)\right)^2\right].
\end{align*}
Suppose Assumptions \ref{assumption:weights-in-rkhs}-\ref{assumption:pol-ratios-bounded} hold, then for any given $\delta \in (0, 1)$, if $f_\phi$ is a two-layer neural network with width $k$ that is sufficiently large and stops the gradient flow at time $t_* \propto m^{2/3}$, then for sufficiently large $m$, there exists a constant $C_1$ independent of $\delta$ and $m$, such that  
\[
|\textup{Bias}(\approxmr) - \textup{Bias}(\approxipw)| \leq C_1\, m^{-1/3} \log{\frac{6}{\delta}}
\]
holds with probability at least $(1-\delta)(1-o_{k}(1))$. Moreover, 
there exist constants $C_2, C_3$ independent of $\delta$ and $m$ such that 
\begin{align*}
    &n (\Vbeh[\approxipw] - \Vbeh[\approxmr]) \\
    &\qquad \qquad \qquad \geq \underbrace{\Ebeh[\Vbeh[\hat{\rho}(A, X)\,Y\mid Y]]}_{\geq 0} - C_2\,m^{-2/3}\,\log^2{\frac{6}{\delta}} - C_3\,m^{-1/3}\,\log{\frac{6}{\delta}}
\end{align*}
holds with probability at least $(1-\delta)(1-o_{k}(1))$. Here, the randomness comes from the joint distribution of training samples and random initialization of parameters in the neural network $f_{\phi}$. 
\end{theorem}

\begin{proof}[Proof of Theorem \ref{prop:bias-and-var-v3}]
The proof of this theorem relies on \cite[Theorem 4.1]{lai2023generalization}. 
Recall the definition $\tilde{w}(Y)\coloneqq \Ebeh[\hat{\rho}(A, X) \mid Y]$. 
We can rewrite our setup in the setting of \cite[Theorem 4.1]{lai2023generalization}, by relabelling $\hat{\rho}(a, x)$ in our setup as $y$ in their setup and relabelling $y$ in our setup as $x$ in their setup. 
Then, given $\delta \in (0, 1)$, from \cite[Theorem 4.1]{lai2023generalization}, it follows that under Assumptions \ref{assumption:weights-in-rkhs}-\ref{assumption:pol-ratios-bounded} that there exists a constant $C$ independent of $\delta$ and $m$, such that
    \begin{align}
        \Ebeh[\epsilon^2] \leq C\, m^{-2/3} \, \log^2{\frac{6}{\delta}} \label{eqn:theorem-statement}
    \end{align}
    holds with probability at least $(1-\delta)(1-o_{k}(1))$, where $\epsilon \coloneqq \hat{w}_m(Y) - \tilde{w}(Y)$. Recall from Proposition \ref{prop:bias-and-var-main} that
    \[|\textup{Bias}(\approxmr) - \textup{Bias}(\approxipw)|  = |\Ebeh[\epsilon\,Y] |. \]
    From this it follows using Cauchy-Schwarz inequality that,
    \begin{align*}
        |\textup{Bias}(\approxmr) - \textup{Bias}(\approxipw)| &=|\Ebeh[\epsilon\,Y] | \leq \left(\Ebeh[\epsilon^2] \Ebeh[Y^2]\right)^{1/2}.
    \end{align*}
    Combining the above with Eqn. \eqref{eqn:theorem-statement}, it follows that,
    \begin{align*}
        |\textup{Bias}(\approxmr) - \textup{Bias}(\approxipw)| \leq C^{1/2}\, m^{-1/3}\, \log{\frac{6}{\delta}} (\Ebeh[Y^2])^{1/2} = C_1 \, m^{-1/3}\, \log{\frac{6}{\delta}}
    \end{align*}
    holds with probability at least $(1-\delta)(1-o_{k}(1))$, where $C_1 = C^{1/2}\, (\Ebeh[Y^2])^{1/2}$. 

    Next, to prove the variance result, we recall from Proposition \ref{prop:bias-and-var-main} that
    \begin{align*}
        n (\Vbeh[\approxipw] - \Vbeh[\approxmr]) &= \Ebeh[\Vbeh[\hat{\rho}(A, X) \mid Y]\, Y^2] - \Vbeh[\epsilon\,Y] - 2\,\textup{Cov}(\epsilon\,Y, \tilde{w}(Y)\,Y)
    \end{align*}
    Now note that, under Assumption \ref{assumption:outcome-bounded},
    \begin{align*}
         \Vbeh[\epsilon\,Y]  \leq \Ebeh[(\epsilon\,Y)^2] \leq M^2 \Ebeh[\epsilon^2] \leq C\, M^2\, m^{-2/3} \, \log^2{\frac{6}{\delta}} = C_2\, m^{-2/3} \, \log^2{\frac{6}{\delta}},
    \end{align*}
    holds with probability at least $(1-\delta)(1-o_{k}(1))$, where $C_2 = C\, M^2$. Similarly, we have that with probability at least $(1-\delta)(1-o_{k}(1))$,
    \begin{align*}
        |\textup{Cov}(\epsilon\,Y, \tilde{w}(Y)\,Y)| &= |\Ebeh[\epsilon\,\tilde{w}(Y)\,Y^2] - \Ebeh[\epsilon\,Y]\Ebeh[\tilde{w}(Y)\,Y] |\\ 
        &\leq |\Ebeh[\epsilon \,\tilde{w}(Y)\,Y^2]| + |\Ebeh[\epsilon\,Y]\Ebeh[\tilde{w}(Y)\,Y]|\\
        &\leq \left(\Ebeh[\epsilon^2]\Ebeh[\tilde{w}(Y)^2\,Y^4]\right)^{1/2} + (\Ebeh[\epsilon^2]\Ebeh[Y^2])^{1/2} |\Ebeh[\tilde{w}(Y)\,Y]|\\
        &= (\Ebeh[\epsilon^2])^{1/2}\,\left( (\Ebeh[\tilde{w}(Y)^2\,Y^4])^{1/2} + (\Ebeh[Y^2])^{1/2}\,|\Ebeh[\tilde{w}(Y)\,Y]|\right)\\
        &\leq C_3\,m^{-1/3}\,\log{\frac{6}{\delta}},
    \end{align*}
    where $C_3 = C\,(\Ebeh[\tilde{w}(Y)^2\,Y^4])^{1/2} + (\Ebeh[Y^2])^{1/2}\,|\Ebeh[\tilde{w}(Y)\,Y]|$, and we use Cauchy-Schwarz inequality in the third step above. Putting this together, we obtain the required result.
\end{proof}

\paragraph{Intuition} This theorem shows that as the number of training samples $m$ increases, the biases of MR and IPW estimators become roughly equal, whereas the variance of MR estimator falls below that of the IPW estimator. The empirical results shown in Appendix \ref{subsec:mips-empirical} are consistent with this result.
Moreover, in Theorem \ref{prop:bias-and-var-v3}, the estimated policy ratio $\hat{\rho}(a, x)$ is fixed for increasing $m$, i.e., we do not update $\hat{\rho}(a, x)$ as more training data becomes available. While this may seem as a disadvantage for the IPW estimator, we point out that the result also holds when the policy ratio is exact (i.e., $\hat{\rho}(a, x) = \rho(a, x)$) and hence the IPW estimator is unbiased.

\paragraph{Relaxing Assumption \ref{assumption:pol-ratios-bounded}}
\cite{lai2023generalization}[Theorem 4.1] suppose that the data has the relationship shown in Assumption \ref{assumption:pol-ratios-bounded}. However, the theorem relies on Corollary 4.4 in \cite{lin2020optimal}, which requires a strictly weaker assumption (Assumption 1 in \cite{lin2020optimal}). Therefore, we can relax Assumption \ref{assumption:pol-ratios-bounded} to the following assumption.
\begin{assumption}\label{assum:relaxed-assumption}
    There exists positive constants $Q$ and $M$ such that for all $l \geq 2$ with $l \in \mathbb{N}$
    \[
\Ebeh[\hat{\rho}(A, X)^l\mid Y] \leq \frac{1}{2} \,l!\,M^{l-2}\,Q^2 
    \]
    $\pbeh$-almost surely.
\end{assumption}
It is easy to check that Assumption \ref{assum:relaxed-assumption} is strictly weaker than Assumption \ref{assumption:pol-ratios-bounded}, and is also satisfied if the policy ratio $\hat{\rho}(A, X)$ is almost surely bounded. For simplicity, we use the stronger assumption in our Proposition \ref{prop:bias-and-var-v3}.  

\section{Generalised formulation of the MIPS estimator \citep{saito2022off}}\label{app:gmips}
As described in Section \ref{subsec:mips-comparison}, the MIPS estimator proposed by \cite{saito2022off} assumes the existence of \emph{action embeddings} $E$ which summarise all relevant information about the action $A$, and achieves a lower variance than the IPW estimator. To achieve this, the MIPS estimator only considers the shift in the distribution of $(X, E)$ as a result of policy shift, instead of considering the shift in $(X, A)$ (as in IPW estimator). In this section, we show that this idea can be generalised to instead consider general representations $R$ of the context-action pair $(X, A)$, which encapsulate all relevant information about the outcome $Y$. The MIPS estimator is a special case of this generalised setting where the representation $R$ is of the form $(X, E)$.

\paragraph{Generalised MIPS (G-MIPS) estimator}
Suppose that there exists an embedding $R$ of the context-action pair $(X, A)$, with the Bayesian network shown in Figure \ref{fig:embedding_single}. Here, $R$ may be a lower-dimensional representation of the $(X, A)$ pair which contains all the information necessary to predict the outcome $Y$. This corresponds to the following conditional independence assumption:
\begin{assumption}\label{assum:indep-general}
    The context-action pair $(X, A)$ has no direct effect on the outcome $Y$ given $R$, i.e., 
    $Y \indep (X, A) \mid R$.
\end{assumption}

\begin{figure}[ht]
\centering
\begin{tikzpicture}

\node[circle,draw, minimum size=1.2cm] (R0) at (0,0) {\begin{small}$(X, A)$\end{small}
};
\node[circle,draw, minimum size=1.2cm] (R1) at (2,0) {$R$};
\node[circle,draw, minimum size=1.2cm] (Y) at (4,0) {$Y$};

\path[->, thick] (R0) edge (R1);
\path[->, thick] (R1) edge (Y);

\end{tikzpicture}
\caption{Bayesian network corresponding to Assumption \ref{assum:indep-general}.}
\label{fig:embedding_single}
\end{figure}
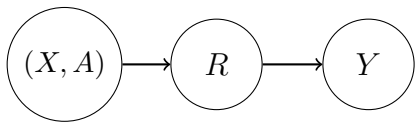
As illustrated in Figure \ref{fig:embedding_single}, Assumption \ref{assum:indep-general} means that the embedding $R$ fully mediates every possible effect of $(X, A)$ on $Y$. The generalised MIPS estimator $\hat{\theta}_{\textup{G-MIPS}}$ of target policy value, $\Etar[Y]$, is defined as
\[
\hat{\theta}_{\textup{G-MIPS}} \coloneqq \frac{1}{n}\sum_{i=1}^n \frac{\ptar(r_i)}{\pbeh(r_i)}\, y_i,
\]
where $\pbeh(r)$ denote the density of $R$ under the behaviour policy (likewise for $\ptar(r)$). Under assumption \ref{assum:indep-general}, $\hat{\theta}_{\textup{G-MIPS}}$ provides an unbiased estimator of target policy value. 
Similar to Lemma \ref{lemma:weights-est}, the density ratio $\frac{\ptar(r)}{\pbeh(r)}$ can be estimated by solving the regression problem
\begin{align}
    \arg \min_f \Ebeh \left(\frac{\tar(A\mid X)}{\beh(A\mid X)} - f\left(R\right)\right)^2. \label{eq:embedding-ratio-estimation}
\end{align}

\subsection{Variance reduction of G-MIPS estimator}\label{app:gmips-var-reduction}
By only considering the shift in the embedding $R$, the G-MIPS estimator achieves a lower variance relative to the vanilla IPW estimator. The following result, which is a straightforward extension of \cite[Theorem 3.6]{saito2022off}, formalises this.

\begin{proposition}[Variance reduction of G-MIPS]\label{prop:mips_var_reduction}
    When the ratios $\rho(a, x)$ and $\frac{\ptar(r)}{\pbeh(r)}$ are known exactly then under Assumption \ref{assum:indep-general}, we have that $\Ebeh[\thetaipw] = \Ebeh[\hat{\theta}_{\textup{G-MIPS}}] = \Etar[Y]$. Moreover,
\begin{align*}
     \Vbeh[\thetaipw] - \Vbeh[\hat{\theta}_{\textup{G-MIPS}}]
    \geq \frac{1}{n}\Ebeh \left[ \E[Y^2\mid R] \Vbeh[\rho(A, X)\mid R] \right] \geq 0.
\end{align*}
\end{proposition}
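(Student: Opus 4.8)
\textbf{Proof plan for Proposition \ref{prop:mips_var_reduction}.}

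The plan is to mirror the argument used for Proposition \ref{prop:var_mr}, replacing the conditioning on $Y$ with conditioning on the representation $R$, and then invoke the conditional independence Assumption \ref{assum:indep-general} to factor the relevant conditional expectations. First I would note that unbiasedness of both estimators is either given in the statement or follows from the same importance-sampling identity as before: $\Etar[Y] = \Ebeh[\rho(A,X)\,Y]$ and, using $\Ebeh[\rho(A,X)\mid R] = \ptar(R)/\pbeh(R)$ together with Assumption \ref{assum:indep-general} (which gives $\Ebeh[Y\mid R, X, A] = \Ebeh[Y\mid R]$), one gets $\Ebeh[(\ptar(R)/\pbeh(R))\,Y] = \Ebeh[\rho(A,X)\,Y] = \Etar[Y]$. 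So the estimators have equal means, and the variance difference reduces to a difference of second moments scaled by $1/n$:
\[
n\big(\Vbeh[\thetaipw] - \Vbeh[\hat{\theta}_{\textup{G-MIPS}}]\big) = \Ebeh[\rho(A,X)^2\,Y^2] - \Ebeh\!\left[\left(\tfrac{\ptar(R)}{\pbeh(R)}\right)^{\!2} Y^2\right].
\]

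Next I would condition both terms on $R$ using the tower property. For the first term, $\Ebeh[\rho(A,X)^2 Y^2] = \Ebeh\big[\Ebeh[\rho(A,X)^2 Y^2 \mid R]\big]$, and here is where Assumption \ref{assum:indep-general} does the work: since $Y \indep (X,A)\mid R$, we have $\Ebeh[\rho(A,X)^2 Y^2\mid R] = \Ebeh[\rho(A,X)^2\mid R]\,\Ebeh[Y^2\mid R]$. For the second term, $\ptar(R)/\pbeh(R)$ is $R$-measurable, so $\Ebeh[(\ptar(R)/\pbeh(R))^2 Y^2\mid R] = (\ptar(R)/\pbeh(R))^2\,\Ebeh[Y^2\mid R] = \big(\Ebeh[\rho(A,X)\mid R]\big)^2\,\Ebeh[Y^2\mid R]$, using Lemma \ref{lemma:weights-est}-style identity $\Ebeh[\rho(A,X)\mid R] = \ptar(R)/\pbeh(R)$ (which follows by the same marginalization computation as in the proof of Lemma \ref{lemma:weights-est}). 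Subtracting,
\[
n\big(\Vbeh[\thetaipw] - \Vbeh[\hat{\theta}_{\textup{G-MIPS}}]\big) = \Ebeh\!\left[\Ebeh[Y^2\mid R]\left(\Ebeh[\rho(A,X)^2\mid R] - \big(\Ebeh[\rho(A,X)\mid R]\big)^2\right)\right] = \Ebeh\!\big[\Ebeh[Y^2\mid R]\,\Vbeh[\rho(A,X)\mid R]\big],
\]
which is nonnegative because both factors inside the outer expectation are nonnegative. This gives the claimed inequality (in fact with equality).

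The only genuinely delicate point — and the main obstacle, though a mild one — is justifying the factorization $\Ebeh[\rho(A,X)^2 Y^2\mid R] = \Ebeh[\rho(A,X)^2\mid R]\,\Ebeh[Y^2\mid R]$ from Assumption \ref{assum:indep-general}: one needs that $\rho(A,X)^2$ is a function of $(X,A)$ (true, since $\rho(a,x) = \tar(a\mid x)/\beh(a\mid x)$) and that conditional independence $Y \indep (X,A)\mid R$ implies $\Ebeh[g(X,A)h(Y)\mid R] = \Ebeh[g(X,A)\mid R]\Ebeh[h(Y)\mid R]$ for suitable $g,h$, which is a standard consequence of conditional independence. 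I would also double-check the intermediate identity $\Ebeh[\rho(A,X)\mid R] = \ptar(R)/\pbeh(R)$ by writing $\ptar(r)/\pbeh(r) = \int \rho(a,x)\,\pbeh(x,a\mid r)\,\dee a\,\dee x$, exactly as in the proof of Lemma \ref{lemma:weights-est}, which requires no extra assumptions. Everything else is routine bookkeeping with the law of total expectation, so I would keep the write-up short and point to the proof of Proposition \ref{prop:var_mr} for the parallel structure.
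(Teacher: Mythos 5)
Your proof is correct and follows essentially the same route as the paper's: tower over $R$, the identity $\Ebeh[\rho(A,X)\mid R]=\ptar(R)/\pbeh(R)$, and the factorization afforded by Assumption \ref{assum:indep-general} yield the exact identity $n\big(\Vbeh[\thetaipw]-\Vbeh[\hat{\theta}_{\textup{G-MIPS}}]\big)=\Ebeh\big[\Ebeh[Y^2\mid R]\,\Vbeh[\rho(A,X)\mid R]\big]$, just as in the paper. The only cosmetic difference is that you cancel the (equal) means and compare second moments directly, in the style of the paper's proof of Proposition \ref{prop:var_mr}, whereas the paper's G-MIPS proof routes through the law of total variance; both arguments are equivalent.
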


\begin{proof}[Proof of Proposition \ref{prop:mips_var_reduction}]
The following proof, which is included for completeness, is a straightforward extension of \cite[Theorem 3.6]{saito2022off}. 
\begin{align*}
    &n (\Vbeh[\thetaipw] - \Vbeh[\hat{\theta}_{\textup{MIPS}}])\\
    &\quad= \Vbeh\left[\frac{\tar(A|X)}{\beh(A|X)}\,Y\right] - \Vbeh\left[\frac{\ptar(R)}{\pbeh(R)}\,Y \right]\\
     &\quad= \Vbeh\left[\Ebeh\left[\frac{\tar(A|X)}{\beh(A|X)}\,Y \Bigg| R\right]\right] + \Ebeh\left[ \Vbeh\left[\frac{\tar(A|X)}{\beh(A|X)}\,Y\Bigg| R \right]\right] - \Vbeh\left[\Ebeh\left[ \frac{\ptar(R)}{\pbeh(R)}\,Y \Bigg| R\right]\right]\\
     &\qquad- \Ebeh\left[\Vbeh\left[ \frac{\ptar(R)}{\pbeh(R)}\,Y \Bigg| R\right]\right]
\end{align*}
Now using the conditional independence Assumption \ref{assum:indep-general}, the first term on the RHS above becomes,
\begin{align*}
    \Vbeh\left[\Ebeh\left[\frac{\tar(A|X)}{\beh(A|X)}\,Y \Bigg| R\right]\right] &= \Vbeh\left[\Ebeh\left[\frac{\tar(A|X)}{\beh(A|X)}\Bigg| R\right]\,\Ebeh\left[Y | R\right]\right]\\
    &= \Vbeh\left[\frac{\ptar(R)}{\pbeh(R)}\,\Ebeh\left[Y | R\right]\right],
\end{align*}
where in the last step above we use the fact that
\begin{align*}
    \Ebeh\left[\frac{\tar(A|X)}{\beh(A|X)}\Bigg| R\right] = \frac{\ptar(R)}{\pbeh(R)}.
\end{align*}
Putting this together, we get that
\begin{align}
    &n (\Vbeh[\thetaipw] - \Vbeh[\hat{\theta}_{\textup{MIPS}}]) \nonumber\\ 
    &\quad= \Ebeh\left[ \Vbeh\left[\frac{\tar(A|X)}{\beh(A|X)}\,Y\Bigg| R \right]\right] - \Ebeh\left[\Vbeh\left[ \frac{\ptar(R)}{\pbeh(R)}\,Y \Bigg| R\right]\right]. \label{eq:variance-difference}
\end{align}
Since we have that 
\begin{align*}
    \Ebeh\left[\frac{\tar(A|X)}{\beh(A|X)}\,Y\Bigg| R \right] = \Ebeh\left[\frac{\tar(A|X)}{\beh(A|X)}\Bigg| R \right]\,\Ebeh\left[Y| R \right] = \frac{\ptar(R)}{\pbeh(R)}\,\Ebeh\left[Y| R \right],
\end{align*}
Eq. \eqref{eq:variance-difference} becomes,
\begin{align*}
    &\Ebeh\left[ \Vbeh\left[\frac{\tar(A|X)}{\beh(A|X)}\,Y\Bigg| R \right]\right] - \Ebeh\left[\Vbeh\left[ \frac{\ptar(R)}{\pbeh(R)}\,Y \Bigg| R\right]\right] \\
    &\quad= \Ebeh\left[ \Ebeh\left[ \left(\frac{\tar(A|X)}{\beh(A|X)}\,Y \right)^2\Bigg| R  \right] - \Ebeh\left[ \left(\frac{\ptar(R)}{\pbeh(R)}\,Y \right)^2\Bigg| R \right] \right]\\
    &\quad= \Ebeh\left[ \Ebeh\left[ \left(\frac{\tar(A|X)}{\beh(A|X)} \right)^2 \Bigg| R  \right] \, \Ebeh\left[Y^2 | R  \right] - \left(\frac{\ptar(R)}{\pbeh(R)}\right)^2\,\Ebeh\left[Y^2 | R \right] \right]\\
    &\quad= \Ebeh\left[\Ebeh\left[Y^2 | R \right]\, \left( \Ebeh\left[ \left(\frac{\tar(A|X)}{\beh(A|X)} \right)^2 \Bigg| R  \right] - \left(\Ebeh\left[ \frac{\tar(A|X)}{\beh(A|X)} \Bigg| R  \right]\right)^2\, \right) \right]\\
    &\quad=\Ebeh\left[\Ebeh\left[Y^2 | R \right]\, \Vbeh\left[\frac{\tar(A|X)}{\beh(A|X)} \Bigg| R \right]\right].
\end{align*}
\end{proof}

\paragraph{Intuition}
Here, $R$ contains all relevant information regarding the outcome $Y$. Moreover, intuitively $R$ can be thought of as the state obtained by `filtering out' relevant information about $Y$ from $(X, A)$. Therefore, $R$ contains less `redundant' information regarding the outcome $Y$ as compared to the covariate-action pair $(X, A)$. As a result, the G-MIPS estimator which only considers the shift in the marginal distribution of $R$ due to the policy shift is more efficient than the IPW estimator, which considers the shift in the joint distribution of $(X, A)$ instead.
In fact, as the amount of `redundant' information regarding $Y$ decreases in the embedding $R$, the G-MIPS estimator becomes increasingly efficient with decreasing variance. We formalise this as follows:
\begin{assumption}\label{assum:two-embeddings}
    Assume there exist embeddings $R^{(1)}, R^{(2)}$ of the covariate-action pair $(X, A)$, with Bayesian network shown in Figure \ref{fig:embedding_double}. 
    This corresponds to the following conditional independence assumptions:
    \[
    R^{(2)} \indep (X, A) \mid R^{(1)}, \qquad \textup{and} \qquad Y \indep (R^{(1)}, X, A) \mid R^{(2)}.
    \]
\end{assumption}
\begin{figure}[h!]
\centering
\begin{tikzpicture}

\node[circle,draw, minimum size=1.2cm] (R0) at (0,0) {\begin{small}$(X, A)$\end{small}};
\node[circle,draw, minimum size=1.2cm] (R1) at (2,0) {$R^{(1)}$};
\node[circle,draw, minimum size=1.2cm] (R2) at (4,0) {$R^{(2)}$};
\node[circle,draw, minimum size=1.2cm] (Y) at (6,0) {$Y$};

\path[->, thick] (R0) edge (R1);
\path[->, thick] (R1) edge (R2);
\path[->, thick] (R2) edge (Y);

\end{tikzpicture}
\caption{Bayesian network corresponding to Assumption \ref{assum:two-embeddings}.}
\label{fig:embedding_double}
\end{figure}  
We can define G-MIPS estimators for these embeddings to obtain unbiased OPE estimators under Assumption \ref{assum:two-embeddings} as follows:
\[    \hat{\theta}^{(j)}_{\textup{G-MIPS}} \coloneqq \frac{1}{n}\sum_{i=1}^n \frac{\ptar(r_i^{(j)})}{\pbeh(r_i^{(j)})}\, y_i,
\]
for $j\in \{1, 2\}$. Here, $\frac{\ptar(r^{(j)})}{\pbeh(r^{(j)})}$ is the ratio of marginal densities of $R^{(j)}$ under target and behaviour policies. 
We next show that the variance of $\hat{\theta}^{(j)}_{\textup{G-MIPS}}$ decreases with increasing $j$.
\begin{proposition}\label{prop:mips_generalised}
    When the ratios $\rho(a, x)$, $w(y)$ and $\frac{\ptar(r^{(j)})}{\pbeh(r^{(j)})}$ are known exactly for $j \in \{1, 2\}$, then under Assumption \ref{assum:two-embeddings} we get that
    \[
    \Ebeh[\thetaipw] = \Ebeh[\hat{\theta}^{(1)}_{\textup{G-MIPS}}] = \Ebeh[\hat{\theta}^{(2)}_{\textup{G-MIPS}}] = \Ebeh[\thetamr] = \Etar[Y].
    \]
    Moreover, 
    \[
    \Vbeh[\thetaipw] \geq \Vbeh[\hat{\theta}^{(1)}_{\textup{G-MIPS}}] \geq  \Vbeh[\hat{\theta}^{(2)}_{\textup{G-MIPS}}] \geq \Vbeh[\thetamr].
    \]
\end{proposition}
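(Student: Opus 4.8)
The plan is to derive the four-term chain of (in)equalities by applying the G-MIPS variance-reduction argument of Proposition \ref{prop:mips_var_reduction} once for each link of the Markov chain $(\X, \A) \to R^{(1)} \to R^{(2)} \to Y$ encoded by Assumption \ref{assum:two-embeddings}. The first step is to observe that the two conditional independencies of Assumption \ref{assum:two-embeddings} imply the full chain factorisation: under either the behaviour or the target policy, the joint density of $(\X, \A, R^{(1)}, R^{(2)}, Y)$ factors as $p(x)\,\pix(a\mid x)\,p(r^{(1)}\mid a, x)\,p(r^{(2)}\mid r^{(1)})\,p(y\mid r^{(2)})$, and the only factor depending on the policy is $\pix(a\mid x)$. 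In particular $p(r^{(2)}\mid r^{(1)})$ and $p(y\mid r^{(2)})$ are policy-invariant, so that $\pbeh(r^{(2)}\mid r^{(1)}) = \ptar(r^{(2)}\mid r^{(1)})$ and $\pbeh(y\mid r^{(2)}) = \ptar(y\mid r^{(2)})$; likewise $\pbeh(r^{(1)}\mid a, x) = \ptar(r^{(1)}\mid a, x)$. A routine d-separation check against this chain also yields $Y\indep(\X,\A)\mid R^{(1)}$, $Y\indep(\X,\A)\mid R^{(2)}$ and $Y\indep R^{(1)}\mid R^{(2)}$. The unbiasedness claims are then immediate: $R^{(1)}$ and $R^{(2)}$ each satisfy Assumption \ref{assum:indep-general}, so Proposition \ref{prop:mips_var_reduction} gives $\Ebeh[\hat{\theta}^{(1)}_{\textup{G-MIPS}}] = \Ebeh[\hat{\theta}^{(2)}_{\textup{G-MIPS}}] = \Etar[Y]$, while $\Ebeh[\thetaipw] = \Ebeh[\thetamr] = \Etar[Y]$ was established earlier.

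For the variance ordering I would isolate a single ``one-step'' lemma abstracting the computation inside the proof of Proposition \ref{prop:mips_var_reduction}. Let $Z \to Z'$ be variables such that the conditional law of $Z'$ given $Z$ is policy-invariant and $Y\indep Z\mid Z'$, and write $u(z) \coloneqq \ptar(z)/\pbeh(z)$, $u'(z') \coloneqq \ptar(z')/\pbeh(z')$. The lemma asserts, first,
\[
\Ebeh[u(Z)\mid Z'=z'] = \int u(z)\,\pbeh(z\mid z')\,\dee z = \frac{1}{\pbeh(z')}\int p(z'\mid z)\,\ptar(z)\,\dee z = u'(z'),
\]
where the middle equality uses Bayes' rule and the policy-invariance of $p(z'\mid z)$; and second, that the estimators $\frac{1}{n}\sum_i u(Z_i)\,Y_i$ and $\frac{1}{n}\sum_i u'(Z'_i)\,Y_i$ are both unbiased for $\Etar[Y]$ and satisfy
\[
\Vbeh\!\left[\frac{1}{n}\sum_{i} u(Z_i)\,Y_i\right] - \Vbeh\!\left[\frac{1}{n}\sum_{i} u'(Z'_i)\,Y_i\right] = \frac{1}{n}\,\Ebeh\!\left[\Ebeh[Y^2\mid Z']\,\Vbeh[u(Z)\mid Z']\right] \geq 0.
\]
The derivation of this identity is word-for-word the law-of-total-variance manipulation in the proof of Proposition \ref{prop:mips_var_reduction}, the only inputs being the display above together with $\Ebeh[u(Z)\,Y\mid Z'] = u'(Z')\,\Ebeh[Y\mid Z']$, which is where $Y\indep Z\mid Z'$ is used.

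Finally I would invoke this lemma three times along the chain: with $(Z,Z') = ((\X,\A),R^{(1)})$, then $(Z,Z') = (R^{(1)}, R^{(2)})$, then $(Z,Z') = (R^{(2)}, Y)$. In each case the two hypotheses of the lemma were verified in the first paragraph: policy-invariance of the relevant conditional law is part of the chain factorisation, and the required independence is $Y\indep(\X,\A)\mid R^{(1)}$, $Y\indep R^{(1)}\mid R^{(2)}$, and the trivial $Y\indep R^{(2)}\mid Y$. The three resulting inequalities are $\Vbeh[\thetaipw] \geq \Vbeh[\hat{\theta}^{(1)}_{\textup{G-MIPS}}]$, $\Vbeh[\hat{\theta}^{(1)}_{\textup{G-MIPS}}] \geq \Vbeh[\hat{\theta}^{(2)}_{\textup{G-MIPS}}]$, and $\Vbeh[\hat{\theta}^{(2)}_{\textup{G-MIPS}}] \geq \Vbeh[\thetamr]$, on noting that in the first application $u(Z) = \rho(\A,\X)$ (so the starting estimator is $\thetaipw$) and in the last $u'(Z') = \ptar(Y)/\pbeh(Y) = w(Y)$ (so the final estimator is exactly $\thetamr$). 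Concatenating the three yields the claim. The main obstacle is not the algebra --- which simply reuses the proof of Proposition \ref{prop:mips_var_reduction} --- but the Markov-chain bookkeeping: one must check that each intermediate conditional law, $p(r^{(2)}\mid r^{(1)})$ and $p(y\mid r^{(2)})$, is genuinely policy-free even though it can be written as a marginal of policy-dependent quantities, and that the d-separation statements hold. Both follow cleanly from the explicit factorisation, but they are the crux of the argument.
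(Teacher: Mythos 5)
Your proposal is correct and follows essentially the same route as the paper: the paper also proves the result by a generic one-step argument (stated as an induction over $j$, with $R^{(0)} \coloneqq (X,A)$ and $R^{(3)} \coloneqq Y$) whose core is exactly your lemma — the identity $\Ebeh[u(Z)\mid Z'] = u'(Z')$ from policy-invariance of $p(z'\mid z)$, plus the law-of-total-variance computation under $Y\indep Z\mid Z'$ — applied along the chain $(X,A)\to R^{(1)}\to R^{(2)}\to Y$, with the final link $\hat{\theta}^{(3)}_{\textup{G-MIPS}} = \thetamr$ handled by the same trivial conditioning on $Y$ that you use. Your explicit verification of the chain factorisation and the d-separation statements is a mild elaboration of steps the paper takes for granted, not a different argument.
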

\begin{proof}[Proof of Proposition \ref{prop:mips_generalised}]
    First, we prove that the G-MIPS estimators are unbiased using induction on $j$. We define $R^{(0)} \coloneqq (X, A)$ and $\hat{\theta}^{(0)}_{\textup{G-MIPS}}$ defined as
    \[
    \hat{\theta}^{(0)}_{\textup{G-MIPS}} \coloneqq \frac{1}{n}\sum_{i=1}^n \frac{\ptar(r_i^{(0)})}{\pbeh(r_i^{(0)})}\, y_i,
    \]
    recovers the IPW estimator $\thetaipw$. When $j=0$, we know that $\hat{\theta}^{(0)}_{\textup{G-MIPS}} = \thetaipw$ is unbiased. 
    Now, assume that $\Ebeh[\hat{\theta}^{(j)}_{\textup{G-MIPS}}] = \Etar[Y]$.

    Conditional on $R^{(j)}$, $R^{(j+1)}$ does not depend on the policy. Therefore, 
    \begin{align*}
        \frac{\ptar(r^{(j)})}{\pbeh(r^{(j)})} = \frac{\ptar(r^{(j)})\,p(r^{(j+1)}\mid r^{(j)}) }{\pbeh(r^{(j)})\,p(r^{(j+1)}\mid r^{(j)})} = \frac{\ptar(r^{(j)}, r^{(j+1)})}{\pbeh(r^{(j)}, r^{(j+1)})}.
    \end{align*}
    And therefore,
    \begin{align*}
        \frac{\ptar(r^{(j+1)})}{\pbeh(r^{(j+1)})} &= \int_{r^{(j)}} \frac{\ptar(r^{(j)}, r^{(j+1)})}{\pbeh(r^{(j)}, r^{(j+1)})} \, \pbeh(r^{(j)} \mid r^{(j+1)}) \,\mathrm{d} r^{(j)}\\ 
        &= \int_{r^{(j)}} \frac{\ptar(r^{(j)})}{\pbeh(r^{(j)})} \,\pbeh(r^{(j)} \mid r^{(j+1)}) \,\mathrm{d} r^{(j)}\\ 
        &= \Ebeh\left[\frac{\ptar(R^{(j)})}{\pbeh(R^{(j)})} \Bigg|  R^{(j+1)}=r^{(j+1)}\right].
    \end{align*}
    Using this and the fact that $R^{(j)}\indep Y \mid R^{(j+1)}$, we get that
    \begin{align*}
        \Ebeh\left[\hat{\theta}^{(j+1)}_{\textup{G-MIPS}} \right] &= \Ebeh\left[\frac{\ptar(R^{(j+1)})}{\pbeh(R^{(j+1)})}\, Y \right]\\
        &= \Ebeh\left[\frac{\ptar(R^{(j+1)})}{\pbeh(R^{(j+1)})}\, \Ebeh[Y| R^{(j+1)}] \right]\\
        &= \Ebeh\left[\Ebeh\left[\frac{\ptar(R^{(j)})}{\pbeh(R^{(j)})} \Bigg|  R^{(j+1)}\right]\, \Ebeh[Y| R^{(j+1)}] \right] \\
        &= \Ebeh\left[\Ebeh\left[\frac{\ptar(R^{(j)})}{\pbeh(R^{(j)})} \, Y \Bigg|  R^{(j+1)}\right]\right]\\
        &= \Ebeh\left[ \frac{\ptar(R^{(j)})}{\pbeh(R^{(j)})} \, Y \right]\\
        &= \Ebeh\left[\hat{\theta}^{(j)}_{\textup{G-MIPS}} \right] = \Etar[Y].
    \end{align*}
    Next, to prove the variance result we consider the difference
    \begin{align*}
        &\Vbeh[\hat{\theta}^{(j)}_{\textup{G-MIPS}}] - \Vbeh[\hat{\theta}^{(j+1)}_{\textup{G-MIPS}}] \\
        &= \frac{1}{n}\left(\Vbeh\left[\frac{\ptar(R^{(j)})}{\pbeh(R^{(j)})}\, Y\right] - \Vbeh\left[\frac{\ptar(R^{(j+1)})}{\pbeh(R^{(j+1)})}\, Y\right]\right) \\
        &= \frac{1}{n}\Bigg(\Vbeh\left[ \Ebeh\left[\frac{\ptar(R^{(j)})}{\pbeh(R^{(j)})}\, Y \Bigg| R^{(j+1)} \right] \right] + \Ebeh\left[ \Vbeh\left[\frac{\ptar(R^{(j)})}{\pbeh(R^{(j)})}\, Y \Bigg| R^{(j+1)} \right] \right] \\
        &\qquad- \Vbeh\left[\frac{\ptar(R^{(j+1)})}{\pbeh(R^{(j+1)})}\, \Ebeh[Y\mid R^{(j+1)}]\right] - \Ebeh \left[\left(\frac{\ptar(R^{(j+1)})}{\pbeh(R^{(j+1)})}\right)^2\,\Vbeh[Y\mid R^{(j+1)}] \right] \Bigg)
    \end{align*}
    where in the last step we use the law of total variance. Now, using the fact that $R^{(j)}\indep Y \mid R^{(j+1)}$, we can rewrite the expression above as
    \begin{align*}
        &= \frac{1}{n}\Bigg(\Vbeh\left[ \Ebeh\left[\frac{\ptar(R^{(j)})}{\pbeh(R^{(j)})}\Bigg| R^{(j+1)} \right]\, \Ebeh[Y | R^{(j+1)} ] \right] + \Ebeh\left[ \Vbeh\left[\frac{\ptar(R^{(j)})}{\pbeh(R^{(j)})}\, Y \Bigg| R^{(j+1)} \right] \right] \\
        &\qquad- \Vbeh\left[\frac{\ptar(R^{(j+1)})}{\pbeh(R^{(j+1)})}\, \Ebeh[Y\mid R^{(j+1)}]\right] - \Ebeh \left[\left(\frac{\ptar(R^{(j+1)})}{\pbeh(R^{(j+1)})}\right)^2\,\Vbeh[Y\mid R^{(j+1)}] \right] \Bigg)\\
        &= \frac{1}{n}\Bigg( \Ebeh\left[ \Vbeh\left[\frac{\ptar(R^{(j)})}{\pbeh(R^{(j)})}\, Y \Bigg| R^{(j+1)} \right] \right] - \Ebeh \left[\left(\frac{\ptar(R^{(j+1)})}{\pbeh(R^{(j+1)})}\right)^2\,\Vbeh[Y\mid R^{(j+1)}] \right]\Bigg).
    \end{align*}
    Moreover, again using the conditional independence $R^{(j)}\indep Y \mid R^{(j+1)}$, we can expand the first term in the expression above as follows:
    \begin{align*}
        \Ebeh\left[ \Vbeh\left[\frac{\ptar(R^{(j)})}{\pbeh(R^{(j)})}\, Y \Bigg| R^{(j+1)} \right] \right] &=
        \Ebeh\Bigg[ \Ebeh\left[\frac{\ptar^2(R^{(j)})}{\pbeh^2(R^{(j)})} \Bigg| R^{(j+1)} \right]\,\Ebeh[Y^2 | R^{(j+1)}] \\
        &\qquad- \left(\Ebeh\left[\frac{\ptar(R^{(j)})}{\pbeh(R^{(j)})} \Bigg| R^{(j+1)} \right] \Ebeh[Y | R^{(j+1)}] \right)^2 \Bigg]\\
        &\geq 
        \Ebeh\Bigg[ \left(\Ebeh\left[\frac{\ptar(R^{(j)})}{\pbeh(R^{(j)})} \Bigg| R^{(j+1)} \right]\right)^2 \,\Ebeh[Y^2 | R^{(j+1)}] \\
        &\qquad- \left(\frac{\ptar(R^{(j+1)})}{\pbeh(R^{(j+1)})} \Ebeh[Y | R^{(j+1)}] \right)^2 \Bigg]\\
        &= \Ebeh\Bigg[ \left(\frac{\ptar(R^{(j+1)})}{\pbeh(R^{(j+1)})}\right)^2 \, \Vbeh[Y\mid R^{(j+1)}] \Bigg].
    \end{align*}
    Here, to get the inequality above, we use the fact that $\E[X^2] \geq (\E[X])^2$. Putting this together, we get that $\Vbeh[\hat{\theta}^{(j)}_{\textup{G-MIPS}}] - \Vbeh[\hat{\theta}^{(j+1)}_{\textup{G-MIPS}}] \geq 0$.

    Moreover, the result $\Vbeh[\hat{\theta}^{(2)}_{\textup{G-MIPS}}] \geq \Vbeh[\thetamr]$ follows straightforwardly from above by defining $R^{(3)} \coloneqq Y$. Then, the embeddings satisfy the causal structure 
    \[
    R^{(0)} \rightarrow R^{(1)} \rightarrow R^{(2)}  \rightarrow R^{(3)} \rightarrow Y.
    \]
    Using the result above, we know that $\Vbeh[\hat{\theta}^{(2)}_{\textup{G-MIPS}}] \geq \Vbeh[\hat{\theta}^{(3)}_{\textup{G-MIPS}}]$. But now it is straightforward to see that $\hat{\theta}^{(3)}_{\textup{G-MIPS}} = \thetamr$, and the result follows.
\end{proof}

\paragraph{Intuition}
Here, $R^{(j+1)}$ can be thought of as the embedding obtained by `filtering out' relevant information about $Y$ from $R^{(j)}$. As such, the amount of `redundant' information regarding the outcome $Y$ decreases successively along the sequence $R^{(0)} (\coloneqq (X, A)), R^{(1)}, R^{(2)}$. As a result, the G-MIPS estimators which only consider the shift in the marginal distributions of $R^{(j)}$ due to policy shift become increasingly efficient with decreasing variance as $j$ increases. Define the representation $R^{(3)} \coloneqq Y$, then the corresponding G-MIPS estimator reduces to the MR estimator, i.e., $\hat{\theta}^{(3)}_{\textup{G-MIPS}} = \thetamr$. Moreover, this estimator has minimum variance among all the G-MIPS estimators $\{\hat{\theta}^{(j)}_{\textup{G-MIPS}}\}_{0\leq j\leq k}$, as the representation $R^{(3)}$ contains precisely the least amount of information necessary to obtain the outcome $Y$. In other words, $Y$ itself serves as the `best embedding' of covariate-action pair $R^{(0)}$ which contains all relevant information regarding $Y$. We verify this empirically in Appendix \ref{subsec:mips-empirical} by reproducing the experimental setup in \cite{saito2022off} along with the MR baseline. Additionally, the MR estimator does not rely on assumptions like \ref{assum:indep-general} for unbiasedness. 

In addition to this, solving the regression problem in Eq. \eqref{eq:embedding-ratio-estimation} will typically be more difficult when $R$ is higher dimensional (as is likely to be the case for many choices of embeddings $R$), leading to high bias. In contrast, for MR the embedding $R=Y$ is one dimensional and therefore the regression problem is significantly easier to solve and yields lower bias. Our empirical results in Appendix \ref{app:experiments} confirm this.

\subsection{Doubly robust G-MIPS estimators}
Consider the setup for the G-MIPS estimator shown in Figure \ref{fig:embedding_single}. In this case, we can derive a doubly robust extension of the G-MIPS estimator, denoted as GM-DR, which uses an estimate of the conditional mean $\tilde{\mu}(r) \approx \E[Y\mid R=r]$ as a control variate to decrease the variance of G-MIPS estimator. This can be explicitly written as follows:
\begin{align}
\thetagmdr \coloneqq \frac{1}{n} \sum_{i=1}^n \frac{\ptar(r_i)}{\pbeh(r_i)}\,(y_i - \tilde{\mu}(r_i)) + \tilde{\eta}(\tar). \label{eq:gmips-dr}    
\end{align}
where $\tilde{\eta}(\tar) = \frac{1}{n} \sum_{i=1}^n \sum_{r' \in \mathcal{R}} \tilde{\mu}(r') \, \ptar(r' \mid x_i)$ is the analogue of the direct method. Here, $\mathcal{R}$ denotes the space of the possible of the representations $R$\footnote{the $\sum_{r' \in \mathcal{R}}$ can be replaced with $\int_{r' \in \mathcal{R}} \mathrm{d}r'$ when $\mathcal{R}$ is continuous}. Moreover, given the density $p(r \mid x, a)$, we can compute $\ptar(r\mid x)$ using
\[
\ptar(r\mid x) = \sum_{a' \in \Aspace} p(r \mid x, a')\,\tar(a'\mid x).
\]
It is straightforward to extend ideas from \cite{dudik2014doubly} to show that estimator $\thetagmdr$ is doubly robust in that it will yield accurate value estimates if either the importance weights $\frac{\ptar(r)}{\pbeh(r)}$ or the outcome model $\tilde{\mu}(r)$ is well estimated. 

\paragraph{There is no analogous DR extension of the MR estimator}
A consequence of considering the embedding $R=Y$ (as in MR) is that in this case we do not have an analogous doubly robust extension as above. To see why this is the case, note that when $R=Y$, we get that $\tilde{\mu}(r) = \E[Y\mid R=r] = \E[Y\mid Y=y] = y$. If we substitute this $\tilde{\mu}(r)$ in \eqref{eq:gmips-dr}, we are simply left with $\tilde{\eta}(\tar)$ on the right hand side (as the first term cancels out). This means that the resulting estimator does not retain the doubly robust nature as we no longer obtain an accurate estimate if either the outcome model or the importance ratios are well estimated.
\section{Application to causal inference}\label{app:causal-inference}
In this section, we investigate the application of the MR estimator for the estimation of average treatment effect (ATE). In this setting, we suppose that $\Aspace = \{0, 1\}$, and the goal is to estimate ATE defined as follows:
\[
\ate \coloneqq \E[Y(1)-Y(0)]
\]
Here, we use the potential outcomes notation \citep{robins1986new} to denote the outcome under a deterministic policy $\tar(a'\mid x) = \mathbbm{1}(a'=a)$ as $Y(a)$. 

Specifically, the IPW estimator applied to ATE estimation yields:
\[
\ateipw = \frac{1}{n} \sum_{i=1}^n \rho_{\ate}(a_i, x_i) \times y_i,
\]
where 
\[
\rho_{\ate}(a, x) \coloneqq \frac{\mathbbm{1}(a=1) - \mathbbm{1}(a=0)}{\beh (a|x)}.
\]
Similarly, the MR estimator can be written as
\[
\atemr = \frac{1}{n}\sum_{i=1}^n w_{\ate}(y_i)\times y_i, 
\]
where
\[
w_{\ate}(y) = \frac{p_{\pi^{(1)}}(y) - p_{\pi^{(0)}}(y)}{\pbeh(y)},
\] 
and $\pi^{(a)}(a'\mid x) \coloneqq \mathbbm{1}(a'=a)$ for $a\in \{0,1\}$.

Again, using the fact that $w_{\ate}(Y) \eqas \E[\rho_{\ate}(A, X)\mid Y]$, we can obtain $w_{\ate}$ by minimising a simple mean-squared loss:
\begin{align*}
    w_{\ate} =\arg \min_{f} \Ebeh \Big[\frac{\mathbbm{1}(A=1)- \mathbbm{1}(A=0)}{\beh (A|X)}-f(Y)\Big]^2.
\end{align*}
\begin{proposition}[Variance comparison with IPW ATE estimator]\label{prop:ate_variance}
When the weights $\rho_{\ate}(a, x)$ and $w_{\ate}(y)$ are known exactly, we have that $\V[\atemr] \leq \V[\ateipw]$. Specifically,
\begin{align*}
    \V[\ateipw] - \V[\atemr] = \frac{1}{n}\E\left[ \V\left[\rho_{\ate}(A, X) | Y \right]\,Y^2 \right] \geq 0.
\end{align*}
\end{proposition}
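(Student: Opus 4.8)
The plan is to mimic exactly the argument used for Proposition \ref{prop:var_mr}, since the ATE estimators $\ateipw$ and $\atemr$ have precisely the same structure as $\thetaipw$ and $\thetamr$ but with $\rho$ replaced by $\rho_{\ate}$ and $w$ replaced by $w_{\ate}$. First I would record the two facts that make the argument go through: (i) both estimators are unbiased for $\ate$, i.e.\ $\E[\ateipw] = \E[\atemr] = \ate$, which follows from the importance-sampling identities $\E[\rho_{\ate}(A,X)\,Y] = \E[Y(1) - Y(0)]$ and $\E[w_{\ate}(Y)\,Y] = \ate$ (the latter being immediate from $\ate = \int y\,(p_{\pi^{(1)}}(y) - p_{\pi^{(0)}}(y))\,\dee y = \int y\, w_{\ate}(y)\,\pbeh(y)\,\dee y$); and (ii) the tower property $w_{\ate}(Y) \eqas \E[\rho_{\ate}(A,X)\mid Y]$, which is exactly the $\ate$-analogue of Lemma \ref{lemma:weights-est} and is proved the same way (write $p_{\pi^{(a)}}(y) = \int p(y\mid x,a')\,\pi^{(a)}(a'\mid x)\,p(x)\,\dee a'\,\dee x$, divide and multiply by $\pbeh(x,a\mid y)$, and recognise the conditional expectation).

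Given these, the computation is short. Since both estimators are unbiased with the same mean, $\V[\ateipw] - \V[\atemr] = \E[\ateipw^2] - \E[\atemr^2]$ up to the common $(\E[\ate])^2$ term cancelling; more precisely, because the $n$ summands are i.i.d.,
\begin{align*}
    \V[\ateipw] - \V[\atemr]
    &= \frac{1}{n}\left( \E[\rho_{\ate}(A,X)^2\,Y^2] - \E[w_{\ate}(Y)^2\,Y^2] \right) \\
    &= \frac{1}{n}\left( \E[\E[\rho_{\ate}(A,X)^2\mid Y]\,Y^2] - \E[\E[\rho_{\ate}(A,X)\mid Y]^2\,Y^2] \right) \\
    &= \frac{1}{n}\,\E\!\left[ \V[\rho_{\ate}(A,X)\mid Y]\,Y^2 \right],
\end{align*}
where the second line uses the tower property both to pull $Y^2$ out and to rewrite $\E[w_{\ate}(Y)^2\,Y^2]$ via (ii), and the third line is the definition of conditional variance. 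Nonnegativity is then immediate since $\V[\rho_{\ate}(A,X)\mid Y] \geq 0$ and $Y^2 \geq 0$.

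I do not anticipate a genuine obstacle here — the result is a direct transcription of Proposition \ref{prop:var_mr} to the two-action deterministic-target setting. The only points requiring a line of care are the unbiasedness of $\ateipw$ (which needs the support/overlap assumption so that the policy ratios are well defined and integrable, and which uses $\E[\mathbbm{1}(A=a)\,g(X,Y)/\pbeh(a\mid X)] = \E_{\pi^{(a)}}[g(X,Y)]$ for each $a \in \{0,1\}$ separately, then subtracts) and verifying that the tower-property lemma goes through with the signed weight $\rho_{\ate}$; both are routine. One should also note, as a sanity check, that $Y^2$ here is integrable enough for all expectations to be finite, which can be folded into the standing assumptions or stated as a mild moment condition, exactly as in the proof of Proposition \ref{prop:var_mr}.
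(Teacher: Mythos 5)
Your proposal is correct and follows essentially the same route as the paper: the paper's proof likewise reduces the difference of variances to the single-sample terms and applies the law of total variance with the key identity $w_{\ate}(Y) \eqas \E[\rho_{\ate}(A,X)\mid Y]$, which is algebraically the same computation as your second-moment decomposition carried over from Proposition \ref{prop:var_mr}. The extra remarks on unbiasedness and moment conditions are fine but not needed beyond what the standing assumptions already give.
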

\begin{proof}[Proof of Proposition \ref{prop:ate_variance}] We have
\begin{align}
    \V[\ateipw] - \V[\atemr] &= \frac{1}{n}\left( \V[\rho_{\ate}(A, X)\, Y] - \V[w_{\ate}(Y)\,Y] \right). \label{eq:variance_ate_ipw_minus_mr}
\end{align}
Using the tower law of variance, we get that
\begin{align*}
    \V[\rho_{\ate}(A, X)\, Y] 
    &= \V[\E[\rho_{\ate}(A, X)\,  Y\mid Y]] + \E[\V[\rho_{\ate}(A, X)\, Y\mid Y]]\\
    &= \V[\E[\rho_{\ate}(A, X)\mid Y]\,  Y] + \E[\V[\rho_{\ate}(A, X)\mid Y]\,Y^2]\\
    &= \V[w_{\ate}(Y)\,Y] + \E[\V[\rho_{\ate}(A, X)\mid Y]\,Y^2].
\end{align*}
Putting this together with \eqref{eq:variance_ate_ipw_minus_mr} we obtain,
\begin{align*}
    \V[\ateipw] - \V[\atemr] &= \frac{1}{n} \E[\V[\rho_{\ate}(A, X)\mid Y]\,Y^2],
\end{align*}
which straightforwardly leads to the result.
\end{proof}

Given the above definitions, the IPW estimator for $\E[Y(a)]$ would only consider datapoints with $A=a$, as it weights the samples using the policy ratios $\mathbbm{1}(A=a)/\beh(A|X)$ which are only non-zero when $A=a$. 
This is however not the case with the MR estimator, as it uses the weights $\ptar(Y)/\pbeh(Y)$ which are not necessarily zero for $A\neq a$. Therefore, MR uses all evaluation datapoints $\D$ when estimating $\E[Y(a)]$. The MR estimator therefore leads to a more efficient use of evaluation data in this example. 

Likewise, the doubly robust (DR) estimator applied to ATE estimation yields,
\begin{align*}
    \atedr \coloneqq \frac{1}{n} \sum_{i=1}^n \rho_{\ate}(a_i, x_i)\,\left(y_i - \hat{\mu}(a_i, x_i)\right) + \frac{1}{n} \sum_{i=1}^n \left( \hat{\mu}(1, x_i)-  \hat{\mu}(0, x_i)\right),
\end{align*}
where $\hat{\mu}(a, x)\approx \E[Y\mid X=x, A=a]$. 
Like in classical off-policy evaluation, DR yields an accurate estimator of ATE when either the weights $\rho_{\ate}(a, x)$ or the outcome model i.e., $\hat{\mu}(a, x) = \E[Y\mid X=x, A=a]$, are well estimated.
However, despite this doubly robust nature of the estimator, we can show that the variance of the DR estimator may be higher than that of the MR estimator in many cases. The following result formalises this variance comparison between the DR and MR estimators, and is analogous to the result in Proposition \ref{prop:var_dr} derived for classical off-policy evaluation. 
\begin{proposition}[Variance comparison with DR ATE estimator]\label{prop:ate_var_dr}
    When the weights $\rho_{\ate}(a, x)$ and $w_{\ate}(y)$ are known exactly,
    \begin{align*}
    &\V[\atedr] - \V[\atemr]\\
    &\qquad \qquad \qquad \geq \frac{1}{n}\E \left[ \V\left[ \rho_\ate(A, X)\,Y \mid Y \right] -  \V\left[ \rho_\ate(A, X)\mu(A, X) \mid X \right] \right],
\end{align*}
where $\mu(A, X) \coloneqq \E[Y\mid X, A]$.
\end{proposition}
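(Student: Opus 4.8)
The plan is to mirror, essentially verbatim, the proof of Proposition~\ref{prop:var_dr}, replacing the policy ratio $\rho(a,x)$ by $\rho_\ate(a,x)$ throughout and invoking the ATE-specific variance identity of Proposition~\ref{prop:ate_variance} in place of Proposition~\ref{prop:var_mr}. The only structural difference between the two settings is that the ``direct method'' term of $\atedr$ is $\frac1n\sum_i(\hat\mu(1,x_i)-\hat\mu(0,x_i))$ rather than $\hat\eta(\tar)$. But since $\hat\mu(1,x)-\hat\mu(0,x)=\Ebeh[\rho_\ate(A,X)\hat\mu(A,X)\mid X=x]$ and, likewise, $\mu(1,x)-\mu(0,x)=\Ebeh[\rho_\ate(A,X)\mu(A,X)\mid X=x]$, this term plays exactly the role that $\Etar[\hat\mu(A,X)\mid X]$ did in the original proof; in particular it is $\sigma(X)$-measurable, which is all the original argument used about it.

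First I would compute $n\V[\atedr]$ by repeated application of the law of total variance. Since the summands of $\atedr$ are i.i.d., $n\V[\atedr]$ is the variance of a single summand $Z=\rho_\ate(A,X)(Y-\hat\mu(A,X))+\hat\mu(1,X)-\hat\mu(0,X)$. Conditioning on $(X,A)$ gives $\V[Z]=\E[\rho_\ate(A,X)^2\V[Y\mid X,A]]+\V\big[\rho_\ate(A,X)(\mu(A,X)-\hat\mu(A,X))+\hat\mu(1,X)-\hat\mu(0,X)\big]$, using $\E[\rho_\ate(A,X)(Y-\hat\mu(A,X))\mid X,A]=\rho_\ate(A,X)(\mu(A,X)-\hat\mu(A,X))$. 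Conditioning the second term on $X$, the direct-method part is constant and its conditional mean cancels $\E[\rho_\ate(A,X)\hat\mu(A,X)\mid X]$, so the term equals $\E\big[\V[\rho_\ate(A,X)(\mu(A,X)-\hat\mu(A,X))\mid X]\big]+\V\big[\E[\rho_\ate(A,X)\mu(A,X)\mid X]\big]$. Dropping the first (nonnegative) piece yields $n\V[\atedr]\geq\E[\rho_\ate(A,X)^2\V[Y\mid X,A]]+\V\big[\E[\rho_\ate(A,X)\mu(A,X)\mid X]\big]$.

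Next I would expand $\V[\rho_\ate(A,X)Y]$ the same way: conditioning on $(X,A)$ gives $\V[\rho_\ate(A,X)Y]=\E[\rho_\ate(A,X)^2\V[Y\mid X,A]]+\V[\rho_\ate(A,X)\mu(A,X)]$, and then conditioning $\V[\rho_\ate(A,X)\mu(A,X)]$ on $X$ gives $\V[\rho_\ate(A,X)\mu(A,X)]=\V\big[\E[\rho_\ate(A,X)\mu(A,X)\mid X]\big]+\E\big[\V[\rho_\ate(A,X)\mu(A,X)\mid X]\big]$. Substituting back into the previous bound gives $n\V[\atedr]\geq\V[\rho_\ate(A,X)Y]-\E\big[\V[\rho_\ate(A,X)\mu(A,X)\mid X]\big]$. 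Finally, Proposition~\ref{prop:ate_variance} gives $\V[\rho_\ate(A,X)Y]-\V[w_\ate(Y)Y]=\E[\V[\rho_\ate(A,X)\mid Y]Y^2]=\E[\V[\rho_\ate(A,X)Y\mid Y]]$, and since $n\V[\atemr]=\V[w_\ate(Y)Y]$, subtracting yields $n\V[\atedr]-n\V[\atemr]\geq\E[\V[\rho_\ate(A,X)Y\mid Y]]-\E[\V[\rho_\ate(A,X)\mu(A,X)\mid X]]=\E\big[\V[\rho_\ate(A,X)Y\mid Y]-\V[\rho_\ate(A,X)\mu(A,X)\mid X]\big]$, which is the claim after dividing by $n$.

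The only thing beyond bookkeeping is to confirm the argument is insensitive to $\rho_\ate$ taking negative values (unlike $\rho$ in the main text): this is immediate, since the law of total variance and the inequality $\E[Z^2]\geq(\E Z)^2$ require no sign conditions, and the one term we discard, $\E\big[\V[\rho_\ate(A,X)(\mu(A,X)-\hat\mu(A,X))\mid X]\big]$, is a conditional variance and hence nonnegative regardless. The main obstacle is therefore simply keeping the nested conditionings ($(X,A)$ then $X$, versus $Y$) straight so that the terms line up correctly with Proposition~\ref{prop:ate_variance}.
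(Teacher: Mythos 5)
Your proposal is correct and follows essentially the same route as the paper: the law of total variance conditioning first on $(X,A)$ and then on $X$, dropping the nonnegative term $\E[\V[\rho_\ate(A,X)(\mu(A,X)-\hat\mu(A,X))\mid X]]$, using $\E[\rho_\ate(A,X)\hat\mu(A,X)\mid X]=\hat\mu(1,X)-\hat\mu(0,X)$, and then invoking the identity from Proposition \ref{prop:ate_variance} to pass to the MR variance. The remaining differences are only in the order of the algebraic bookkeeping, so there is nothing further to add.
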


\begin{proof}[Proof of Proposition \ref{prop:ate_var_dr}]
Using the law of total variance, we get that
\begin{align*}
    n\,\V[\atedr] &= \V[\rho_\ate(A, X)\,(Y -\hat{\mu}(A, X)) + (\hat{\mu}(1, X) - \hat{\mu}(0, X))]\\
    &= \V[ \E[\rho_\ate(A, X)\,(Y -\hat{\mu}(A, X)) + (\hat{\mu}(1, X) - \hat{\mu}(0, X))\mid X, A]] \\
    &\qquad+ \E[\V[\rho_\ate(A, X)\,(Y -\hat{\mu}(A, X)) + (\hat{\mu}(1, X) - \hat{\mu}(0, X))\mid X, A]]\\
    &= \V[ \rho_\ate(A, X)\,(\mu(A, X) -\hat{\mu}(A, X)) + (\hat{\mu}(1, X) - \hat{\mu}(0, X))]\\
    &\qquad+ \E[\rho^2_\ate(A, X)\V[Y\mid X, A]].
\end{align*}
    Again, using the law of total variance we can rewrite the first term on the RHS above as,
    \begin{align*}
        &\V[ \rho_\ate(A, X)\,(\mu(A, X) -\hat{\mu}(A, X)) + (\hat{\mu}(1, X) - \hat{\mu}(0, X))]\\
        &\quad= \V[\E[ \rho_\ate(A, X)\,(\mu(A, X) -\hat{\mu}(A, X)) + (\hat{\mu}(1, X) - \hat{\mu}(0, X))\mid  X]] \\
        &\qquad+ \E[\V[ \rho_\ate(A, X)\,(\mu(A, X) -\hat{\mu}(A, X)) + (\hat{\mu}(1, X) - \hat{\mu}(0, X))\mid  X]] \\
        &\quad\geq  \V[\E[ \rho_\ate(A, X)\,(\mu(A, X) -\hat{\mu}(A, X)) + (\hat{\mu}(1, X) - \hat{\mu}(0, X))\mid  X]]\\
        &\quad=  \V[\E[ \rho_\ate(A, X)\,(\mu(A, X) -\hat{\mu}(A, X)) + \rho_\ate(A, X)\,\hat{\mu}(A, X)\mid  X]]\\
        &\quad=  \V[\E[ \rho_\ate(A, X)\,\mu(A, X)\mid  X]],
    \end{align*}
    where, in the second last step above we use the fact that 
    \[
    \E[\rho_\ate(A, X)\,\hat{\mu}(A, X)\mid  X] = \hat{\mu}(1, X) - \hat{\mu}(0, X).
    \]

    Putting this together, we get that
    \begin{align*}
        n\,\V[\atedr] \geq \V[\E[ \rho_\ate(A, X)\,\mu(A, X)\mid  X]] + \E[\rho^2_\ate(A, X)\V[Y\mid X, A]].
    \end{align*}
    Therefore, 
    \begin{align*}
        &n\,(\V[\atedr] - \V[\atemr]) \\
        &\quad\geq \V[\E[ \rho_\ate(A, X)\,\mu(A, X)\mid  X]] + \E[\rho^2_\ate(A, X)\V[Y\mid X, A]] - \V[w_\ate(Y)\,Y]\\
        &\quad= \V[\E[ \rho_\ate(A, X)\,\mu(A, X)\mid  X]] + \E[\V[\rho_\ate(A, X)\,Y\mid X, A]] - \V[w_\ate(Y)\,Y]\\
        &\quad= \V[\E[ \rho_\ate(A, X)\,\mu(A, X)\mid  X]] + \V[\rho_\ate(A, X)\, Y] - \V[\E[\rho_\ate(A, X)\,Y\mid X, A]]\\
        &\qquad- \V[w_\ate(Y)\,Y]\\
        &\quad= \V[\E[ \rho_\ate(A, X)\,\mu(A, X)\mid  X]] + \V[\E[\rho_\ate(A, X)\mid Y]\, Y] + \E[\V[\rho_\ate(A, X)\mid Y]\,Y^2]\\
        &\qquad- \V[\E[\rho_\ate(A, X)\,Y\mid X, A]]- \V[w_\ate(Y)\,Y]\\
        &\quad= \V[\E[ \rho_\ate(A, X)\,\mu(A, X)\mid  X]] + \V[w_\ate(Y)\, Y] + \E[\V[\rho_\ate(A, X)\mid Y]\,Y^2]\\
        &\qquad- \V[\E[\rho_\ate(A, X)\,Y\mid X, A]]- \V[w_\ate(Y)\,Y]\\
        &\quad= \V[\E[ \rho_\ate(A, X)\,\mu(A, X)\mid  X]]- \V[\E[\rho_\ate(A, X)\,Y\mid X, A]] + \E[\V[\rho_\ate(A, X)\mid Y]\,Y^2]\\
        &\quad= \V[\rho_\ate(A, X)\,\mu(A, X)] - \E[\V[\rho_\ate(A, X)\,\mu(A, X)\mid  X]] - \V[\rho_\ate(A, X)\,\mu(A, X)]\\
        &\qquad+ \E[\V[\rho_\ate(A, X)\mid Y]\,Y^2]\\
        &\quad= \E \left[ \V\left[ \rho_\ate(A, X) \mid Y \right]\, Y^2 -  \V\left[ \rho_\ate(A, X)\mu(A, X) \mid X \right] \right].
    \end{align*}
\end{proof}

Proposition \ref{prop:ate_var_dr} shows that if $\V\left[Y\, \rho_\ate(A, X) \mid Y \right]$ is greater than the conditional variance $\V\left[ \rho_\ate(A, X)\mu(A, X) \mid X \right]$ on average, the variance of the MR estimator will be less than that of the DR estimator. Intuitively, this is likely to happen when the dimension of context space $\Xspace$ is high because in this case, the conditional variance over $X$ and $A$, $\V\left[Y\, \rho_\ate(A, X) \mid Y \right]$ is likely to be greater than the conditional variance over $A$, $\V\left[ \rho_\ate(A, X)\mu(A, X) \mid X \right]$.

\section{Experimental Results}\label{app:experiments}
In this section, we provide additional experimental details for the results presented in the main text. We also include extensive experimental results to provide further empirical evidence in favour of the MR estimator. 

\paragraph{Computational details}
We ran our experiments on Intel(R) Xeon(R) CPU E5-2690 v4 @ 2.60GHz with 8GB RAM
per core. We were able to use 150 CPUs in parallel to iterate over different configurations and seeds.
However, we would like to note that for each run our algorithms only requires 1 CPU and at most 30 minutes to run as our neural networks are relatively small. Throughout our experiments, whenever the outcome $Y$ was continuous, we used a fully connected neural network with three hidden layers with 512, 256 and 32 nodes respectively (and ReLU activation function) to estimate the weights $\hat{w}(y)$. On the other hand, when the outcome is discrete we can directly estimate $\hat{w}(y) \approx \E[\hat{\rho}(A, X)\mid Y=y]$ by calculating the sample mean of $\hat{\rho}(A, X)$ on samples with $Y=y$. Additionally, for each configuration of parameters in our experiments, we ran experiments for 10 different seeds (in \{0, 1, \ldots, 9\}).

\subsection{Alternative methodology of estimating MR}
In addition to the OPE baselines like IPW, DM and DR estimators considered in the main text, we also include empirically investigate an alternative methodology of estimating MR.
Below we describe this methodology, denoted as `MR (alt)', in greater detail:
\subsubsection{MR (alt)}\label{sec:alt-estimation-method}
Recall our definition of MR estimator:
\[
\thetamr \coloneqq \frac{1}{n} \sum_{i=1}^n w(y_i)\,y_i.
\]
In the main text, we propose estimating the weights $w(y)$ first and using this to estimate $\thetamr$ using the above expression. Alternatively, we can estimate $h(y) \coloneqq y\,w(y)$ using 
\begin{align*}
    h = \arg\min_{f} \, \Ebeh \left[ \Bigg(Y\,\frac{\tar(A|X)}{\beh (A|X)}-f(Y)\Bigg)^2\right].
\end{align*}
Subsequently, the MR estimator can be written as:
\[
\thetamr = \frac{1}{n}\sum_{i=1}^n h(y_i).
\]
We refer to this alternative methodology as `MR-alt' and compare it empirically against the original methodology (which we simply refer to as `MR'). 
In general, it is difficult to say which of the two methods will perform better. Intuitively speaking, in cases where the behaviour of the quantity $Y\,\frac{\tar(A|X)}{\beh (A|X)}$ with varying $Y$ is `smoother' than that of $\frac{\tar(A|X)}{\beh (A|X)}$, the alternative method is expected to perform better. Our empirical results in the next sections show that the relative performance of the two methods varies for different data generating mechanisms.

\subsection{Synthetic data experiments}\label{subsec:mips-empirical}
Here, we include additional experimental details for the synthetic data experiments presented in Section \ref{sec:exp-synth} for completeness. For this experiment, we use the same setup as the synthetic data experiment in \cite{saito2022off}, reproduced by reusing their code with minor modifications.

\paragraph{Setup}
Here, we sample the $d$-dimensional context vectors $x$ from a standard normal distribution. 
The setup used also includes $3$-dimensional categorical action embeddings $E \in \mathcal{E}$, which are sampled from the following conditional distribution given action $A=a$,
\[
p(e\mid a) = \prod_{k=1}^{3}\frac{\exp{(\alpha_{a, e_{k}})}}{\sum_{e'\in \mathcal{E}_k} \exp{(\alpha_{a, e'})}},
\]
which is independent of the context $X$. $\{\alpha_{a, e_k}\}$ is a set of parameters sampled independently from the standard normal distribution. Each dimension of $\mathcal{E}$ has a cardinality of $10$, i.e., $\mathcal{E}_k = \{1, 2, \dots, 10\}$.

\paragraph{Reward function}
The expected reward is then defined as:
\[
q(x, e) = \sum_{k=1}^{3} \eta_k \cdot (x^T\, M\, x_{e_k} + \theta_x^T\, x + \theta_e^T\, x_{e_k}),
\]
where $M$, $\theta_x$ and $\theta_e$ are parameter matrices or vectors sampled from a uniform distribution with range $[-1, 1]$. $x_{e_k}$ is a context vector corresponding to the $k$-th dimension of the action embedding, which is unobserved to the estimators. $\eta_k$ specifies the importance of the $k$-th dimension of the action embedding, sampled from Dirichlet distribution so that $\sum_{k=1}^{3} \eta_k = 1$. 

\paragraph{Behaviour and target policies}
The behaviour policy $\beh$ is defined by applying the softmax function to $q(x, a) = \E[q(X, E)\mid A=a, X=x]$ as 
\[
\beh(a\mid x) = \frac{\exp{(-q(x, a))}}{\sum_{a'\in \Aspace}\exp{(-q(x, a'))}}.
\]

For the target policy, we define the class of parametric policies,
\[
\pi^{\alpha^\ast}(a | x) = \alpha^\ast\,\ind(a = \arg\max_{a'\in \Aspace} q(x, a')) + \frac{1-\alpha^\ast}{|\Aspace|},
\]
where $\alpha^\ast \in [0, 1]$ controls the shift between the behaviour and target policies. As shown in the main text, as $\alpha^\ast \rightarrow 1$, the shift between behaviour and target policies increases.

\paragraph{Baselines}
In the main text, we compare MR with DM, IPW, DR and MIPS estimators. In addition to these baselines, here we also consider Switch-DR \citep{wang2017optimal} and DR with Optimistic Shrinkage (DRos) \citep{su2020doubly}.
Following \cite{saito2022off}, we use the random forest \citep{breiman2001machine} along with 2-fold cross-fitting \citep{newey2018cross} to obtain $\hat{q}(x, e)$ for DR and DM methods.
To estimate $\pbeh(a\mid x, e)$ for MIPS estimator, we use logistic regression. 
We also include the results for MR estimated using the alternative methodology described in Section \ref{sec:alt-estimation-method}. We refer to this as `MR (alt)'.

\paragraph{Estimation of behaviour policy $\hatbeh$ and marginal ratio $\hat{w}(y)$}
We do not assume that the true behaviour policy $\beh$ is known, and therefore estimate $\hatbeh$ using the available training data.
For the MR estimator, we estimate the behaviour policy using a random forest classifier trained on 50\% of the training data and use the rest of the training data to estimate the marginal ratios $\hat{w}(y)$ using multi-layer perceptrons (MLP). Moreover, for a fair comparison we use a different behaviour policy estimate $\hatbeh$ for all other baselines which is trained on the entire training data.

\begin{figure}[h!]
    \centering
	\begin{subfigure}{0.8\textwidth}
	    \centering
	    \includegraphics[width=1\textwidth]{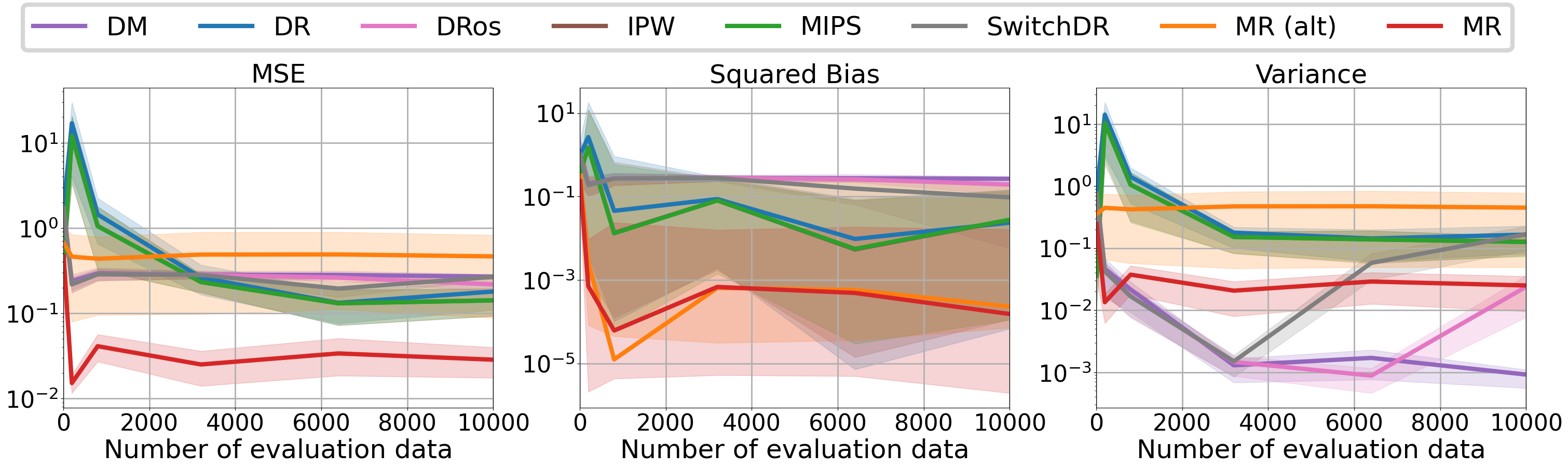}
	    \subcaption{$d=1000$, $n_{a}=100$, $\alpha^\ast = 0.8$}
	    \label{subfig:d-1000-na-250-neval-mips}
	\end{subfigure}\\
	\begin{subfigure}{0.8\textwidth} 
	    \centering
	    \includegraphics[width=1\textwidth]{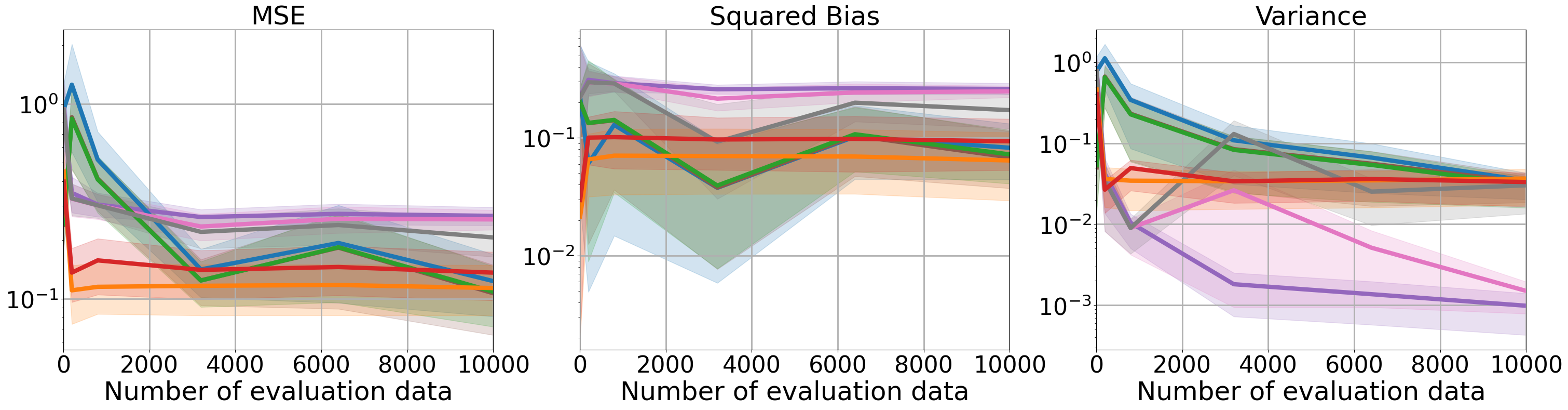}
	    \subcaption{$d=5000$, $n_{a}=250$, $\alpha^\ast = 0.8$}
	    \label{subfig:d-5000-na-250-neval-08-mips}
	\end{subfigure}\\
	\begin{subfigure}{0.8\textwidth} 
	    \centering
	    \includegraphics[width=1\textwidth]{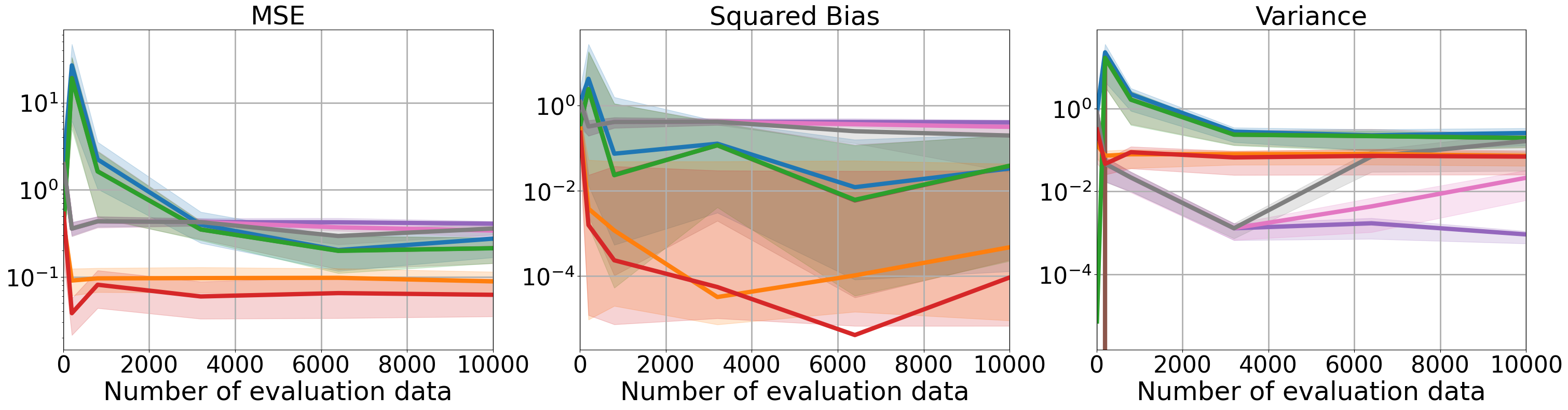}
	    \subcaption{$d=5000$, $n_{a}=250$, $\alpha^\ast = 1.0$}
	    \label{subfig:d-5000-na-250-neval-1-mips}
	\end{subfigure}
    \caption{MSE with varying size of evaluation dataset $n$ for different choices of parameters.}
    \label{fig:mse-vs-neval-mips}
\end{figure}

\begin{figure}[h!]
    \centering
	\begin{subfigure}{0.8\textwidth}
	    \centering
	    \includegraphics[width=1\textwidth]{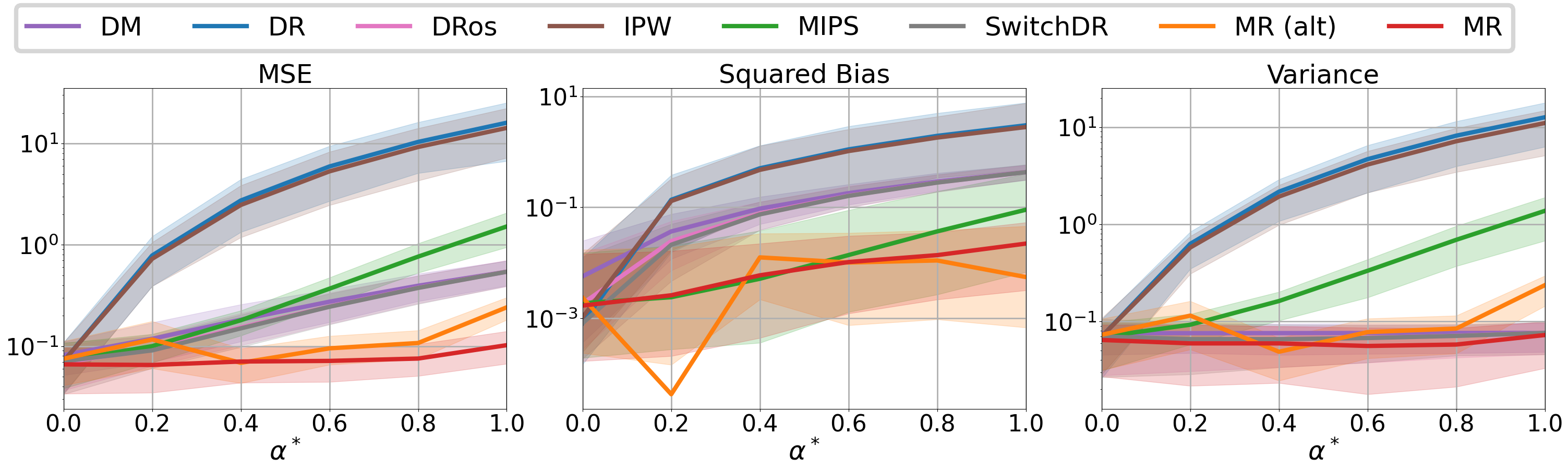}
	    \subcaption{$d=100$, $n_{a}=100$, $n = 100$}
	    \label{subfig:d-100-na-100-neval-100-alphatar-mips}
	\end{subfigure}\\
	\begin{subfigure}{0.8\textwidth} 
	    \centering
	    \includegraphics[width=1\textwidth]{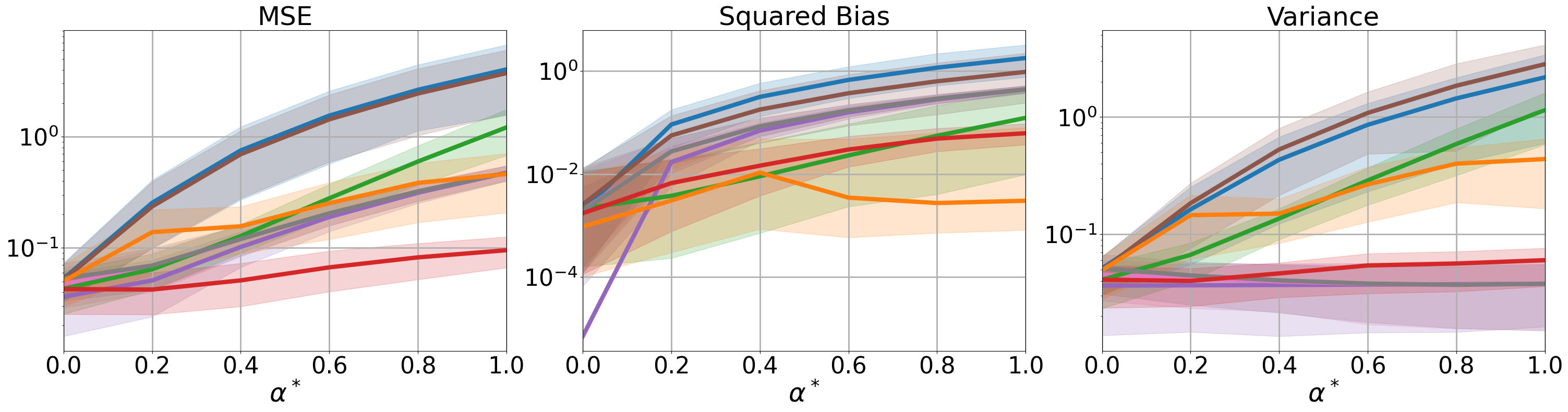}
	    \subcaption{$d=100$, $n_{a}=250$, $n = 100$}
	    \label{subfig:d-100-na-250-neval-100-alphatar-mips}
	\end{subfigure}\\
	\begin{subfigure}{0.8\textwidth} 
	    \centering
	    \includegraphics[width=1\textwidth]{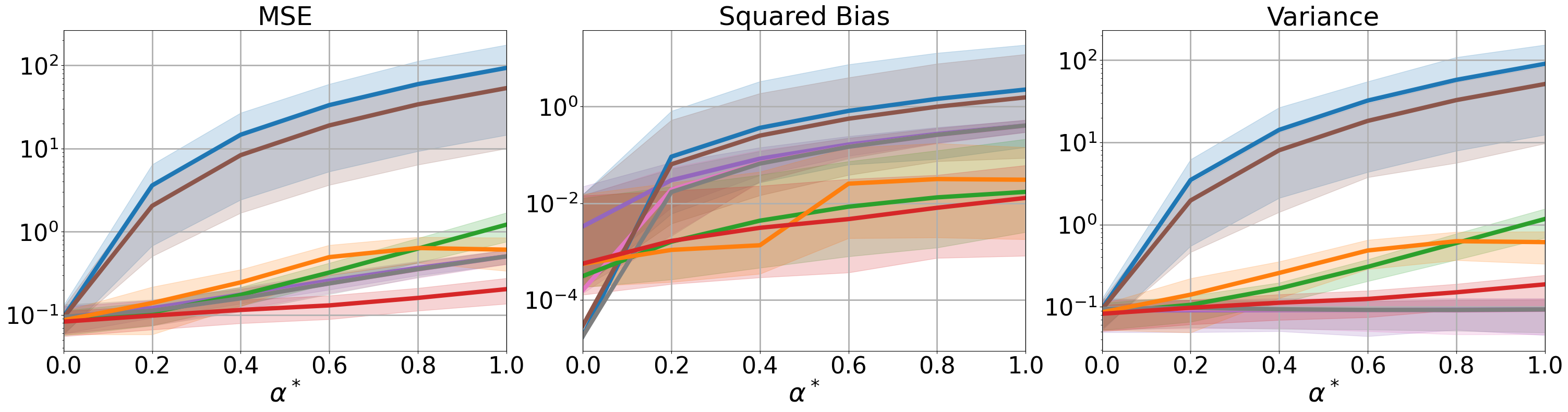}
	    \subcaption{$d=1000$, $n_{a}=250$, $n = 100$}
	    \label{subfig:d-1000-na-250-neval-100-alphatar-mips}
	\end{subfigure}
    \caption{MSE with varying $\alpha^\ast$ for different choices of parameters.}
    \label{fig:mse-vs-alphatar-mips}
\end{figure}

\subsubsection{Results}
For this experiment, the results are computed over 10 different sets of logged data replicated with different seeds, and in Figures \ref{fig:mse-vs-neval-mips} - \ref{fig:mse-vs-nac-mips} we use a total of $m=5000$ training data. 

\paragraph{Varying size of evaluation data $n$}
Figure \ref{fig:mse-vs-neval-mips} shows that MR outperforms the other baselines, in terms of MSE and squared bias, when the number of evaluation data $n\leq 1000$. Additionally, we observe that in this experiment, MR estimated using our original methods (`MR'), yields better results than the alternative method of estimating MR (`MR (alt)'). Moreover, while the variance of DM is lower than that of MR, the DM method has a high bias and consequently a high MSE. We note that while the difference between MSE and variance of MIPS and MR estimators decreases with increasing evaluation data size, MR still outperforms MIPS in terms of both MSE and variance.

\paragraph{Varying $\alpha^\ast$}
Figure \ref{fig:mse-vs-alphatar-mips} shows the results with increasing policy shift. It can be seen that overall MR methods achieve the smallest MSE with increasing policy shift. Moreover, the difference between MSE and variance of MR and IPW/DR methods increases with increasing policy shift, showing that MR performs especially better than these baselines when the difference between behaviour and target policies is large. Similarly, we observe in Figure \ref{fig:mse-vs-alphatar-mips} that as the shift between the behaviour and target policy increases with increasing $\alpha^\ast$, so does the difference between the MSE and variance of MR and the MIPS estimators. This shows that generally MR outperforms MIPS estimator in terms of variance and MSE, and that MR performs especially better than MIPS as the difference between behaviour and target policies increases.

\paragraph{Varying $d$ and $n_a$}
Figures \ref{fig:mse-vs-d-mips} and \ref{fig:mse-vs-nac-mips} show that MR outperforms the other baselines as the context dimensions and/or number of actions increase. In fact, these figures show that MR is significantly robust to increasing dimensions of action and context spaces, whereas baselines like IPW and DR perform poorly in large action spaces.

\paragraph{Varying $m$}
Figure \ref{fig:mse-vs-ntr-mips} shows the results with increasing number of training data $m$. We again observe that the MR methods `MR' and `MR (alt)' outperforms the other baselines in terms of the MSE and squared bias even when the number of training data is low. Moreover, the variance of both the MR estimators continues to improve with increasing number of training data.

In this experiment, we observe that overall `MR (alt)' performs worse than the original MR estimator (`MR' in the figures). However, as we observe in Appendix \ref{sec:app-additional-results}, this does not happen consistently across all experiments, which suggests that the comparative performance of the two MR methods depends on the data generating mechanism.

\begin{figure}[h!]
    \centering
	\begin{subfigure}{0.8\textwidth}
	    \centering
	    \includegraphics[width=1\textwidth]{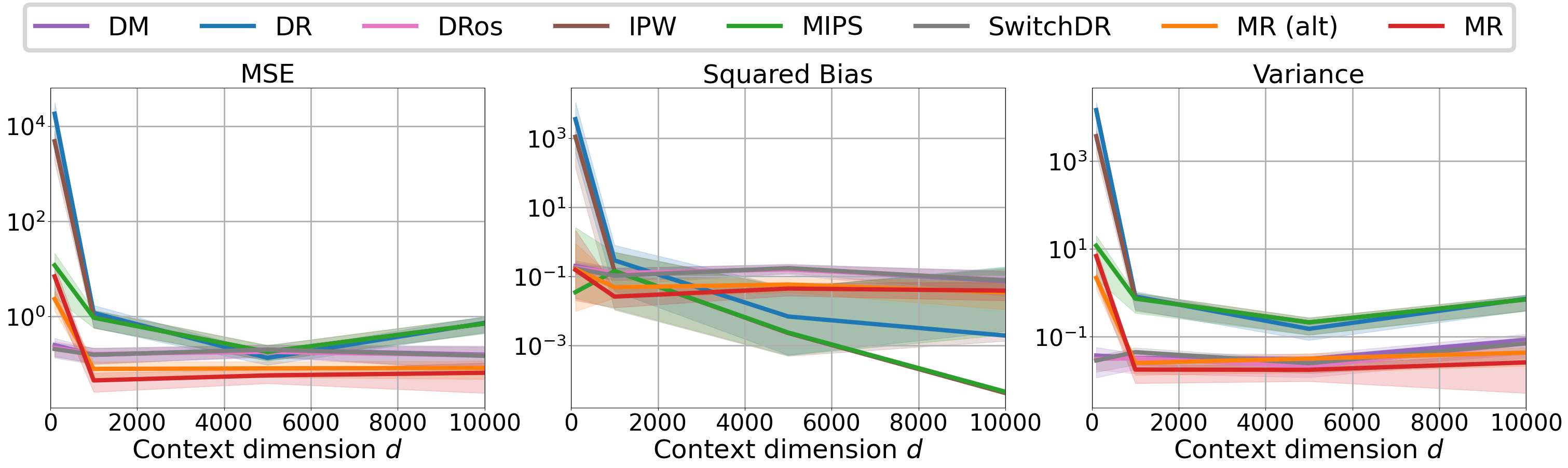}
	    \subcaption{$n_{a}=20$, $n = 200$, $\alpha^\ast = 0.8$}
	    \label{subfig:na-20-neval-200-alphatar-0-8-d-mips}
	\end{subfigure}\\
	\begin{subfigure}{0.8\textwidth} 
	    \centering
	    \includegraphics[width=1\textwidth]{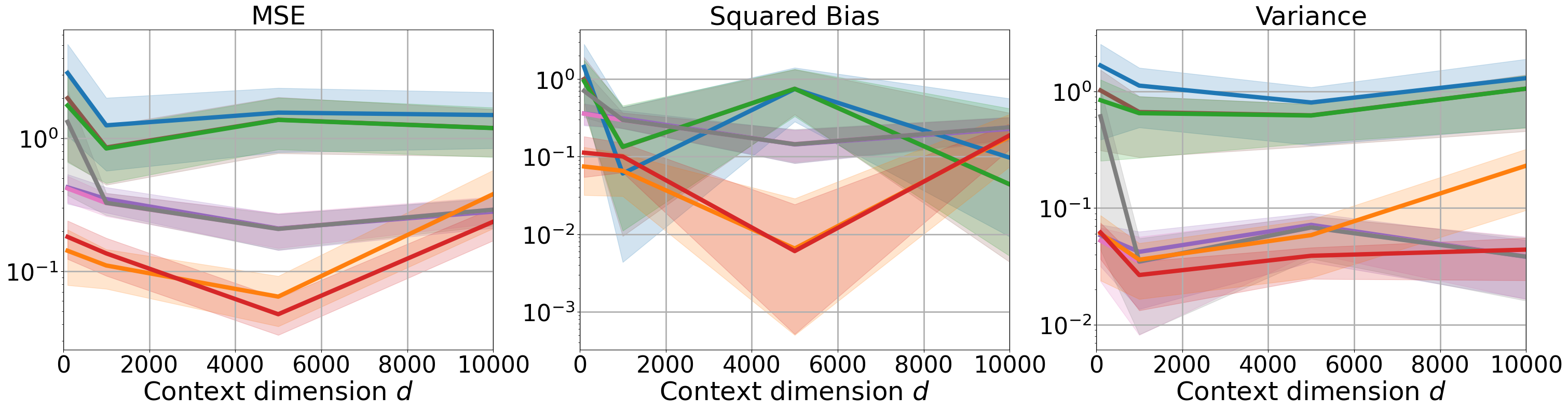}
	    \subcaption{$n_{a}=100$, $n = 200$, $\alpha^\ast = 0.8$}
	    \label{subfig:na-100-neval-200-alphatar-0-8-d-mips}
	\end{subfigure}\\
	\begin{subfigure}{0.8\textwidth} 
	    \centering
	    \includegraphics[width=1\textwidth]{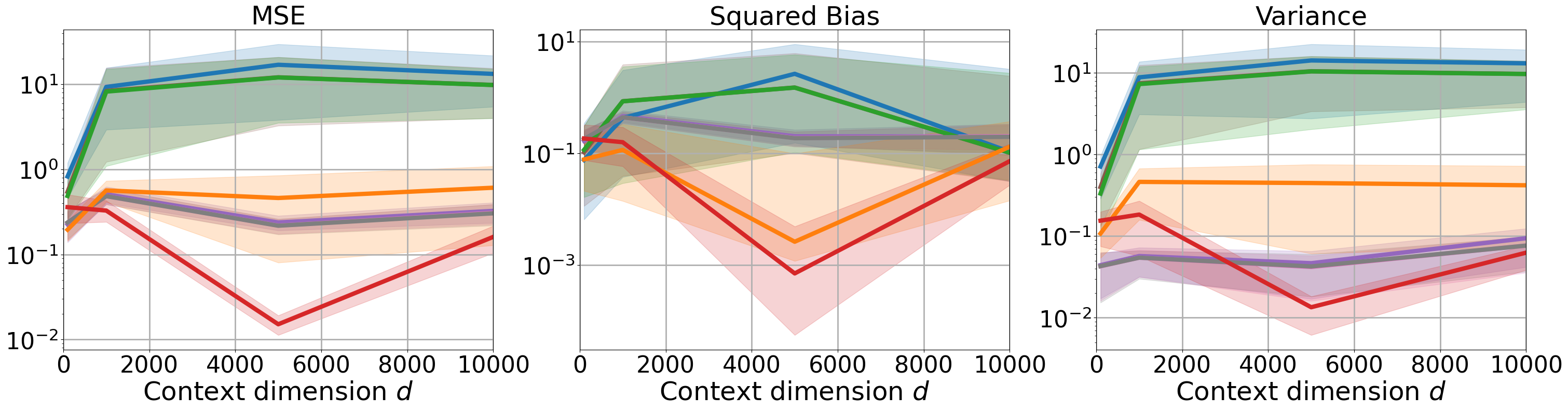}
	    \subcaption{$n_{a}=250$, $n = 200$, $\alpha^\ast = 0.8$}
	    \label{subfig:na-250-neval-200-alphatar-0-8-d-mips}
	\end{subfigure}
    \caption{MSE with varying context dimensions $d$ for different choices of parameters.}
    \label{fig:mse-vs-d-mips}
\end{figure}

\begin{figure}[h!]
    \centering
	\begin{subfigure}{0.8\textwidth}
	    \centering
	    \includegraphics[width=1\textwidth]{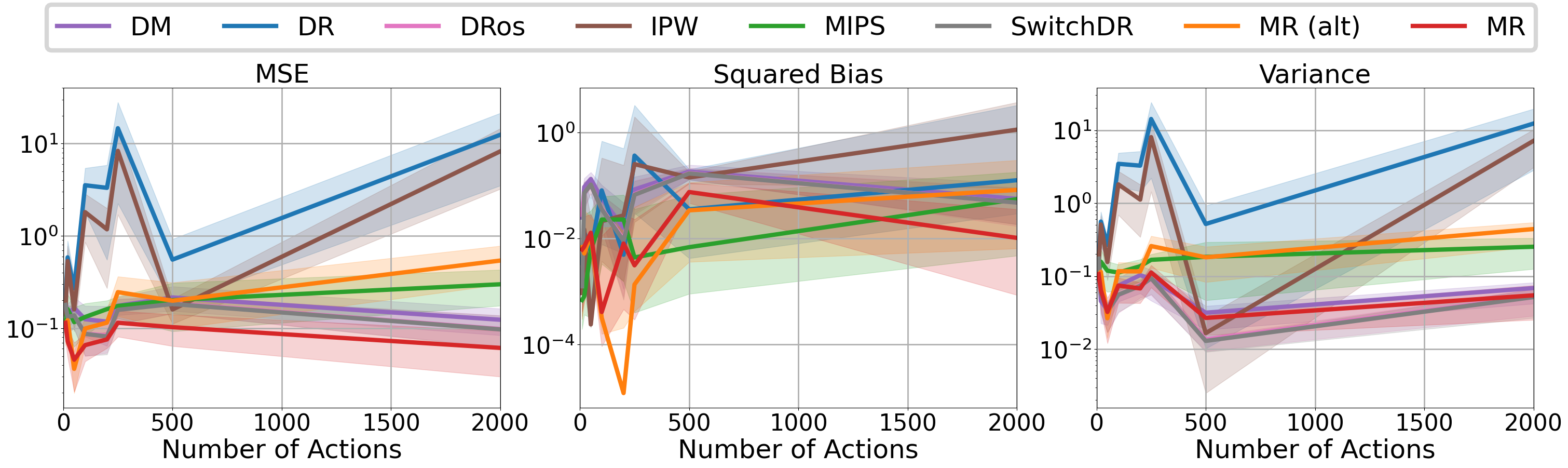}
	    \subcaption{$d=1000$, $n = 100$, $\alpha^\ast = 0.4$}
	    \label{subfig:d-1000-neval-100-alphatar-0-4-nac-mips}
	\end{subfigure}\\
	\begin{subfigure}{0.8\textwidth} 
	    \centering
	    \includegraphics[width=1\textwidth]{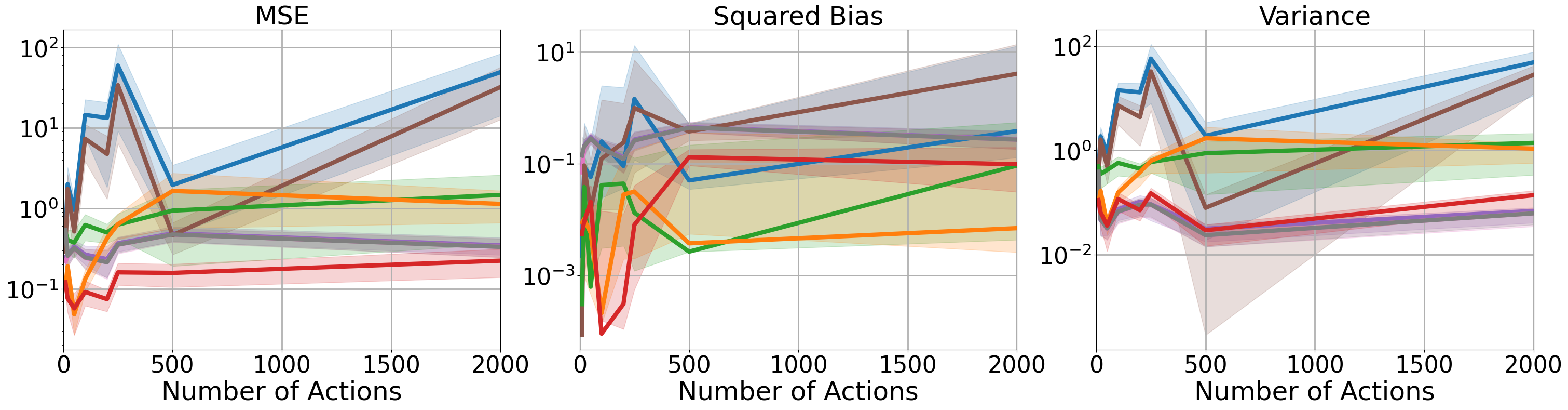}
	    \subcaption{$d=1000$, $n = 100$, $\alpha^\ast = 0.8$}
	    \label{subfig:d-1000-neval-100-alphatar-0-8-nac-mips}
	\end{subfigure}\\
	\begin{subfigure}{0.8\textwidth} 
	    \centering
	    \includegraphics[width=1\textwidth]{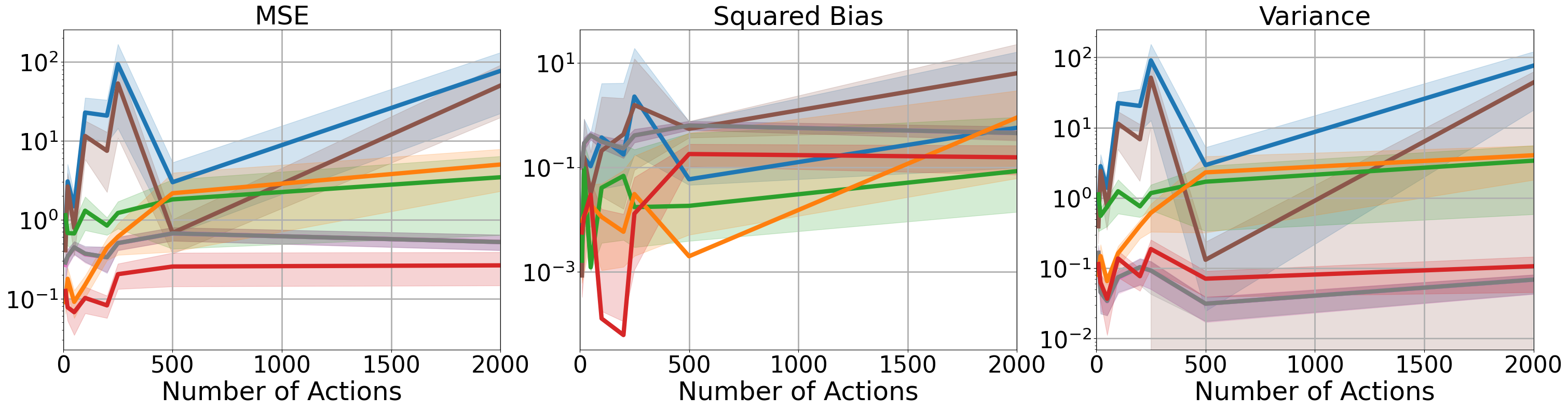}
	    \subcaption{$d=1000$, $n = 100$, $\alpha^\ast = 1.0$}
	    \label{subfig:d-1000-neval-100-alphatar-1-0-nac-mips}
	\end{subfigure}
    \caption{MSE with varying number of actions $n_a$ for different choices of parameters.}
    \label{fig:mse-vs-nac-mips}
\end{figure}

\begin{sidewaystable}[ht]
    \centering
        \caption{Mean-squared error results with 2 standard errors for synthetic data setup considered in Section \ref{sec:exp-synth} with $d=5000$, $n_a = 50$, $\alpha^\ast = 0.8$. We use a fixed budget of datapoints (denoted by $N$) for each baseline and in the case of MR we use $m=2000$ of the available datapoints to estimate $\hat{w}(y)$ and the rest of data to evaluate the MR estimator (i.e. $n = N-2000$ for MR). In contrast, for IPW and MIPS since the importance ratios are already known, we use all of the $N$ datapoints for evaluation of the off-policy value (i.e. $n=N$ for IPW and MIPS).}
    \label{tab:known_ratios}
    \begin{tiny}
    \begin{tabular}{l|llllll}
\toprule
& $N$ & 2800 & 3200 & 6400 & 10000 & 12000 \\
\midrule
\multirow{2}{*}{
\begin{tiny}
\textbf{GT weights $\rho(a, x)$ and estimated reward model $\hat{\mu}(a, x)$}
\end{tiny}}
& DM & 0.137$\pm$0.028 & 0.099$\pm$0.012 & 0.103$\pm$0.012 & 0.093$\pm$0.010 & 0.089$\pm$0.010 \\
\multirow{2}{*}{
\begin{tiny}
($m=2000$ used for training $\hat{\mu}(a, x)$ and $n=N-2000$ 
\end{tiny}}
& DR & 0.227$\pm$0.065 & 0.068$\pm$0.035 & 0.068$\pm$0.022 & \textbf{0.024$\pm$0.011} & 0.045$\pm$0.015 \\
\multirow{2}{*}{
\begin{tiny}
used for evaluation)
\end{tiny}} & DRos & 0.128$\pm$0.027 & 0.072$\pm$0.011 & 0.049$\pm$0.014 & 0.063$\pm$0.014 & 0.051$\pm$0.016 \\
& SwitchDR & 0.128$\pm$0.027 & 0.059$\pm$0.014 & 0.052$\pm$0.013 & 0.061$\pm$0.015 & 0.056$\pm$0.016 \\
\hline
\\
\multirow{2}{*}{
\begin{tiny}
\textbf{GT weights} (all of $N$ datapoints are used for evaluation)
\end{tiny}}
& IPW & 0.237$\pm$0.062 & 0.066$\pm$0.036 & 0.067$\pm$0.021 & 0.025$\pm$0.011 & \textbf{0.044$\pm$0.014} \\
& MIPS & 0.236$\pm$0.062 & 0.065$\pm$0.035 & 0.067$\pm$0.021 & 0.025$\pm$0.011 & \textbf{0.044$\pm$0.014} \\
\hline
\multirow{3}{*}{
\begin{tiny}
\textbf{Estimated weights $\hat{w}(y)$}
\end{tiny}}
\multirow{3}{*}{
\begin{tiny}
($m=2000$ used for training
\end{tiny}} \\
\multirow{3}{*}{
\begin{tiny}
and $n=N-2000$ used for evaluation)
\end{tiny}} 
& MR (Ours) & \textbf{0.045$\pm$0.015} & \textbf{0.042$\pm$0.014} & \textbf{0.048$\pm$0.020} & 0.049$\pm$0.020 & 0.047$\pm$0.016
\\
\\
\bottomrule
\end{tabular}
    \end{tiny}
\end{sidewaystable}
\subsubsection{Known policy ratios $\rho(a, x)$}
Our previous setting of unknown importance policy ratios $\rho(a, x)$ captures a wide variety of real-world applications, ranging from health care to autonomous driving. In addition, to demonstrate the utility of MR in settings with known $\rho(a, x), p(e\mid a, x)$ and unknown $w(y)$ (for our proposed method, MR), we have conducted additional experiments. Here, we use a fixed budget of datapoints (denoted by $N$) for each baseline and for MR we allocate $m=2000$ of the available datapoints to estimate $\hat{w}(y)$ and use the remaining for evaluating the MR estimator (i.e., $n=N-2000$ for MR). In contrast, for IPW and MIPS (since the importance ratios are already known), we use all of the $N$ datapoints to evaluate the off-policy value (i.e. $n=N$ for IPW and MIPS).

The results included in Table \ref{tab:known_ratios} show that MR achieves the smallest MSE among the baselines for $N\leq 6400$. However, we observe that the MSE of IPW, DR and MIPS (with true importance weights) falls below that of MR (with estimated weights $\hat{w}$) when the data size $N$ is large enough (i.e., $N\geq 10,000$). This is to be expected since IPW, DR and MIPS are unbiased (i.e., use ground truth importance ratios $\rho(a, x)$) whereas MR uses estimated weights $\hat{w}(y)$ (and hence may be biased). MR still performs the best when $N\leq 6400$.

\subsection{Experiments on classification datasets}\label{subsec:additional-experiments-classification}
Here, we conduct experiments on four classification datasets, OptDigits, PenDigits, SatImage and Letter datasets from the UCI repository \citep{dua2019uci}, the Digits dataset from scikit-learn library, as well as the Mnist \citep{deng2012mnist} and CIFAR-100 datasets \citep{krizhevsky2009learning}.

\paragraph{Setup}
Following previous works \citep{dudik2014doubly, kallus2021optimal, mehrdad2018more,wang2017optimal}, the classification datasets are transformed to contextual bandit feedback data. The classification dataset comprises $\{x_i, a^\gt_i\}_{i=1}^{n_0}$, where $x_i\in \Xspace$ are feature vectors and $a^\gt_i\in \Aspace$ are the ground-truth labels. In the contextual bandits setup, the feature vectors $x_i$ are considered to be the contexts, whereas the actions correspond to the possible class of labels. We split the dataset into training and testing datasets of sizes $m$ and $n$ respectively. We present the results for a range of different values of $m$ and $n$.

\paragraph{Reward function}
Let $X$ be a context with ground truth label $A^\gt$, we define the reward for action $A$ as:
\[
Y \coloneqq \ind(A = A^\gt).
\]

\paragraph{Behaviour and target policies}
Using the $m$ training datapoints, we first train a classifier $f: \Xspace\rightarrow \mathbb{R}^{|\Aspace|}$ which takes as input the feature vectors $x_i$ and outputs a vector of softmax probabilities over labels, i.e. the $a$-th component of the vector $f(x)$, denoted as $(f(x))_{a}$ corresponds to the estimated probability $\p(A^{\gt} = a \mid X=x)$.

Next, we use $f$ to define the ground truth behaviour policy, 
\[
\beh(a\mid x) = (f(x))_{a}.
\]
For the target policies, we use $f$ to define a parametric class of target policies using a trained classifier $f: \Xspace\rightarrow \mathbb{R}^{|\Aspace|}$. 
\[
\pi^{\alpha^\ast}(a\mid x) =\alpha^\ast\cdot \ind(a= \arg\max_{a' \in \Aspace} (f(x))_{a'}) +  \frac{1-\alpha^\ast}{|\Aspace|},
\]
where $\alpha^\ast \in [0, 1]$. A value of $\alpha^\ast$ close to 1 leads to a near-deterministic and well-performing policy. As $\alpha^\ast$ decreases, the policy gets increasingly worse and `noisy'. In this experiment, we consider target policies $\tar = \pi^{\alpha^\ast}$ for $\alpha^\ast \in \{0.0, 0.2, 0.4, \dots, 1.0\}$. 

Using the behaviour policy defined above, we generate the contextual bandits data described with training and evaluation datasets of sizes $m$ and $n$ respectively.

\paragraph{Estimation of behaviour policy $\hatbeh$ and marginal ratio $\hat{w}(y)$}
We do not assume that the behaviour policy $\beh$ is known, and therefore estimate it using training data. To estimate the behaviour policy $\hatbeh$, we train a random forest classifier using the training data. This estimate of behaviour policy is used for all the baselines in our experiment. 
Since the reward is binary, we can estimate the marginal ratios $\hat{w}(y) = \Ebeh[\hat{\rho}(A, X)\mid Y=y]$ by directly estimating the sample mean of $\hat{\rho}(A, X)$ for datapoints with $Y=y$. We re-use the $m$ training datapoints to estimate this sample mean. 

\paragraph{Baselines}
We compare our estimator with Direct Method (DM), IPW and DR estimators. 
In addition, we also consider Switch-DR \citep{wang2017optimal} and DR with Optimistic Shrinkage (DRos) \citep{su2020doubly}.
To estimate $\hat{q}(x, a)$ for DM and DR estimators, we use random forest classifiers (since reward $Y$ is binary). Moreover, because of the binary nature of $Y$, the alternative method of estimating MR yields the same estimator as the original method, therefore we do not consider the two separately here. Additionally, in this experiment, we do not include MIPS (or G-MIPS) baseline, as there is no natural informative embedding $E$ of the action $A$. 

\subsubsection{Results}
For this experiment, we compute the results over 10 different sets of logged data replicated with different seeds.
Figures \ref{fig:optdigits} - \ref{fig:cifar100} show the results corresponding to each baseline for the different datasets. It can be seen that across all datasets, the MR achieves the smallest MSE with increasing evaluation data size $n$. Moreover, across all datasets, MR attains the minimum MSE with relatively small number of evaluation data ($n\leq 100$).

Unlike the experiments in Section \ref{sec:exp-synth}, we observe that the KL-divergence between target and behaviour policy decreases as $\alpha^\ast$ increases (see Figure \ref{fig:kl_div_multiclass}). 
Therefore, as $\alpha^\ast$ increases the shift between target and behaviour policies decreases.
Figures \ref{fig:optdigits} - \ref{fig:digits} show that as $\alpha^\ast$ increases, 
the difference between the MSE, squared bias and variance of MR and the other baselines decreases. This confirms our findings from earlier experiments that MR performs especially better than the other baselines when the difference between behaviour and target policies is large.

Moreover, the figures also include results with increasing number of training data $m$. It can be seen that MR out-performs the baselines even when the number of training data $m$ is small ($m = 100$). Moreover, the relative advantage of MR improves with increasing $m$.

\begin{figure}[t]
    \centering
    \includegraphics[width=0.3\textwidth]{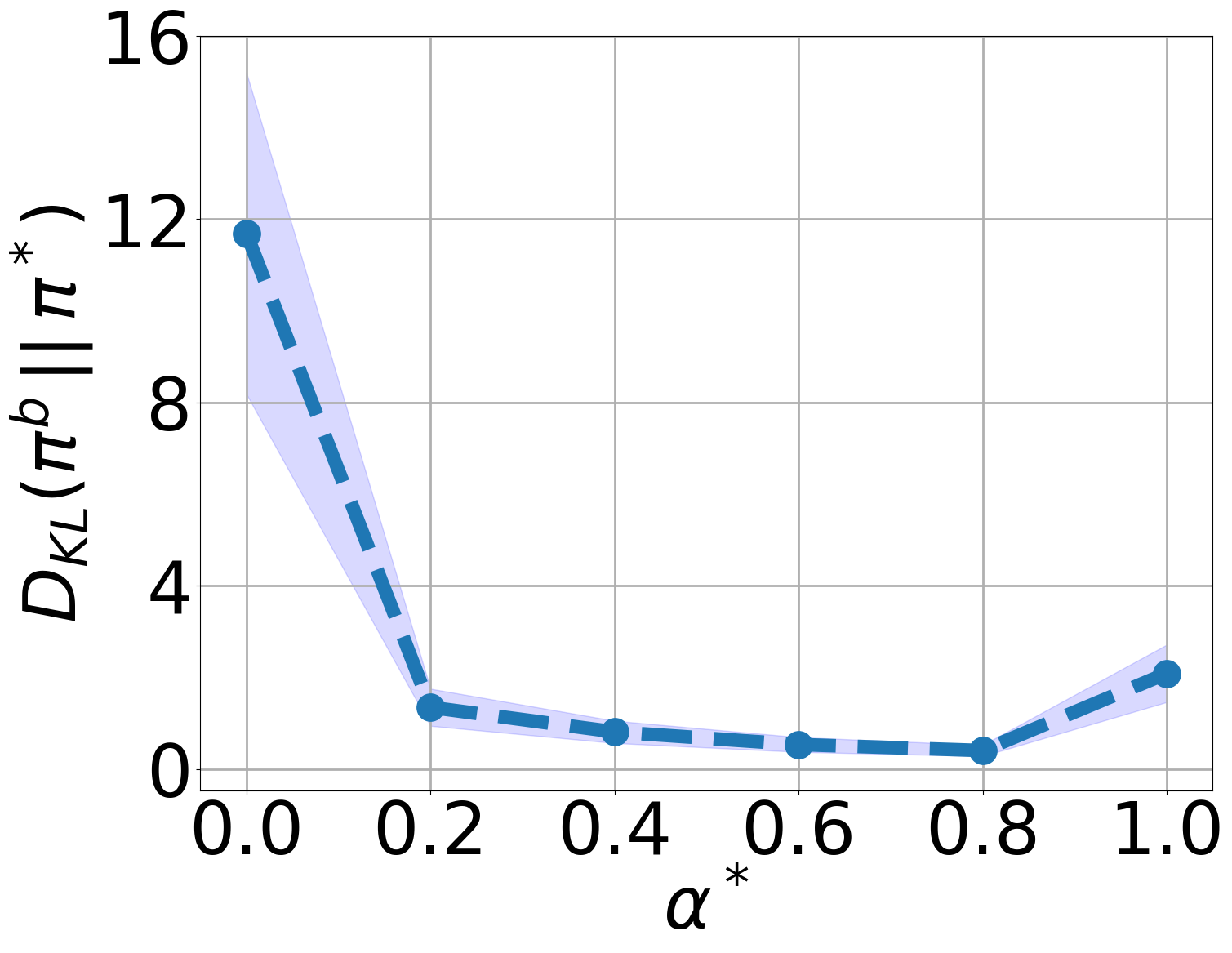}
    \caption{KL divergence $D_{\textup{KL}}(\beh \, || \, \tar)$ with increasing $\alpha^\ast$ for the classification data experiments. Here, we only include the results for a specific choice of parameters for the Letter dataset. We observe similar results for other datasets and parameter choices.}
    \label{fig:kl_div_multiclass}
\end{figure}

\begin{figure}[ht]
    \centering
	\begin{subfigure}{0.8\textwidth}
	    \centering
	    \includegraphics[width=1\textwidth]{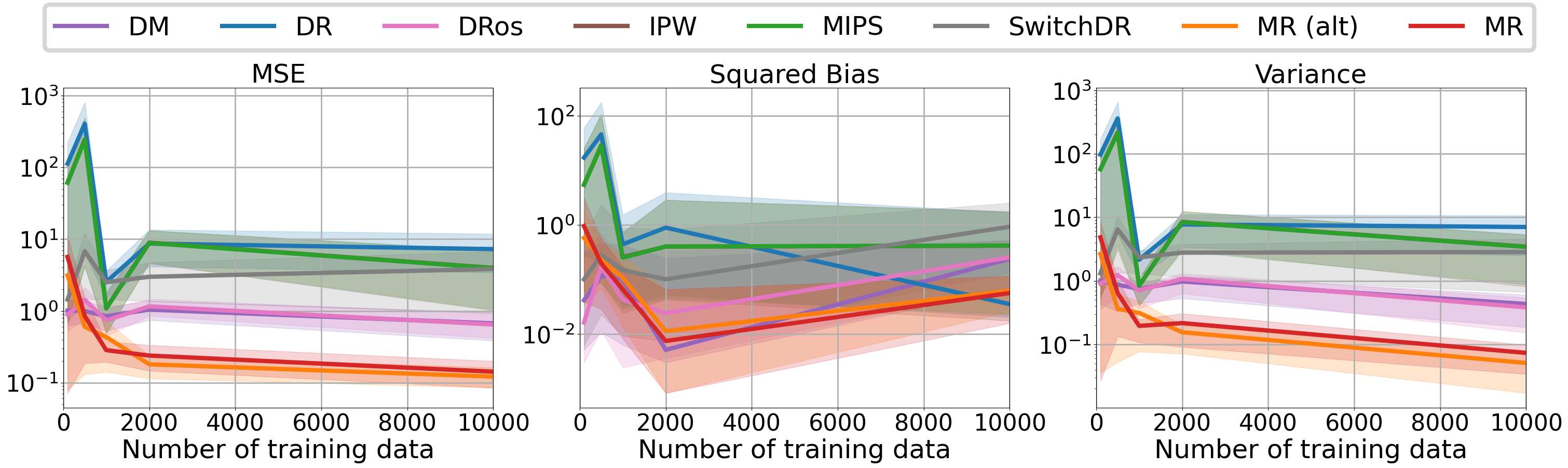}
	    \subcaption{$d=1000$, $n = 10$, $\alpha^\ast = 0.8$}
	    \label{subfig:d-1000-neval-10-alphatar-0-8-ntr-mips}
	\end{subfigure}\\
	\begin{subfigure}{0.8\textwidth} 
	    \centering
	    \includegraphics[width=1\textwidth]{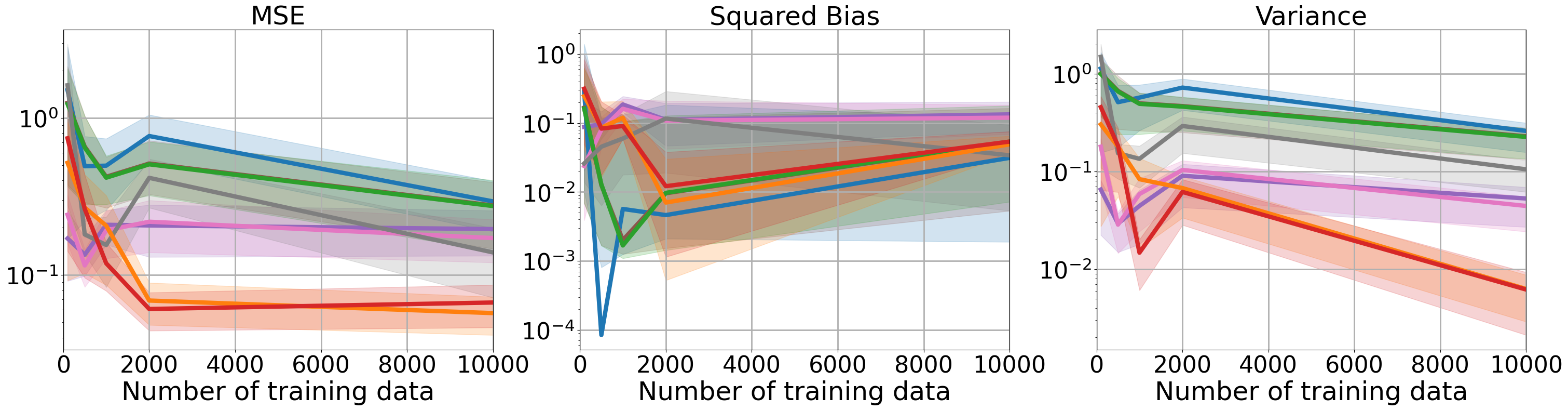}
	    \subcaption{$d=1000$, $n = 200$, $\alpha^\ast = 0.8$}
	    \label{subfig:d-1000-neval-200-alphatar-0-8-ntr-mips}
	\end{subfigure}\\
	\begin{subfigure}{0.8\textwidth} 
	    \centering
	    \includegraphics[width=1\textwidth]{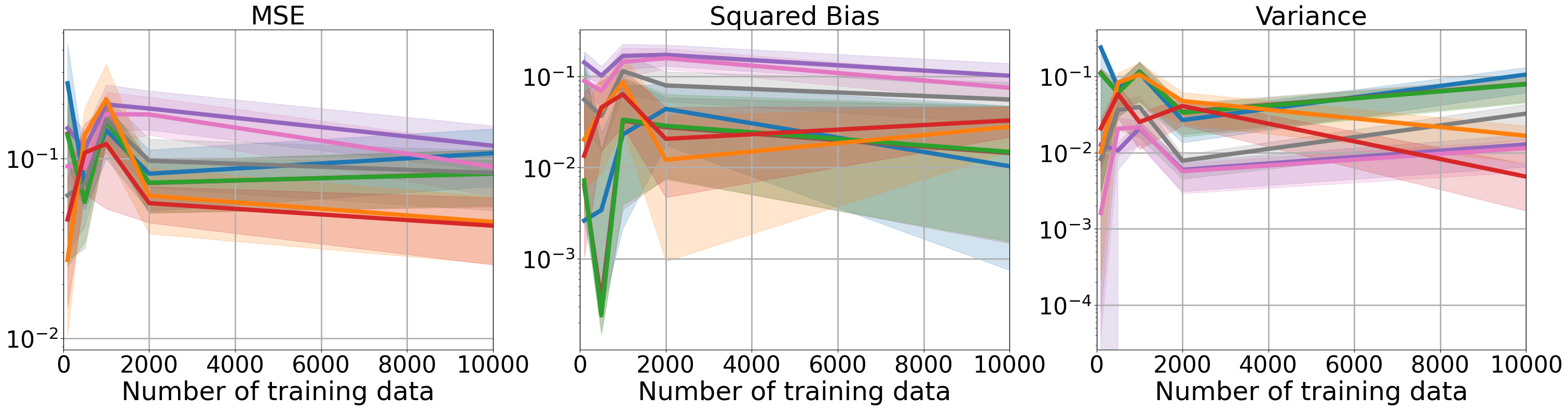}
	    \subcaption{$d=1000$, $n = 800$, $\alpha^\ast = 0.8$}
	    \label{subfig:d-1000-neval-800-alphatar-0-8-ntr-mips}
	\end{subfigure}
    \caption{MSE with varying number of training data $m$ for different choices of parameters.}
    \label{fig:mse-vs-ntr-mips}
\end{figure}

\begin{figure}[h!]
    \centering
	\begin{subfigure}{0.8\textwidth}
	    \centering
	    \includegraphics[width=1\textwidth]{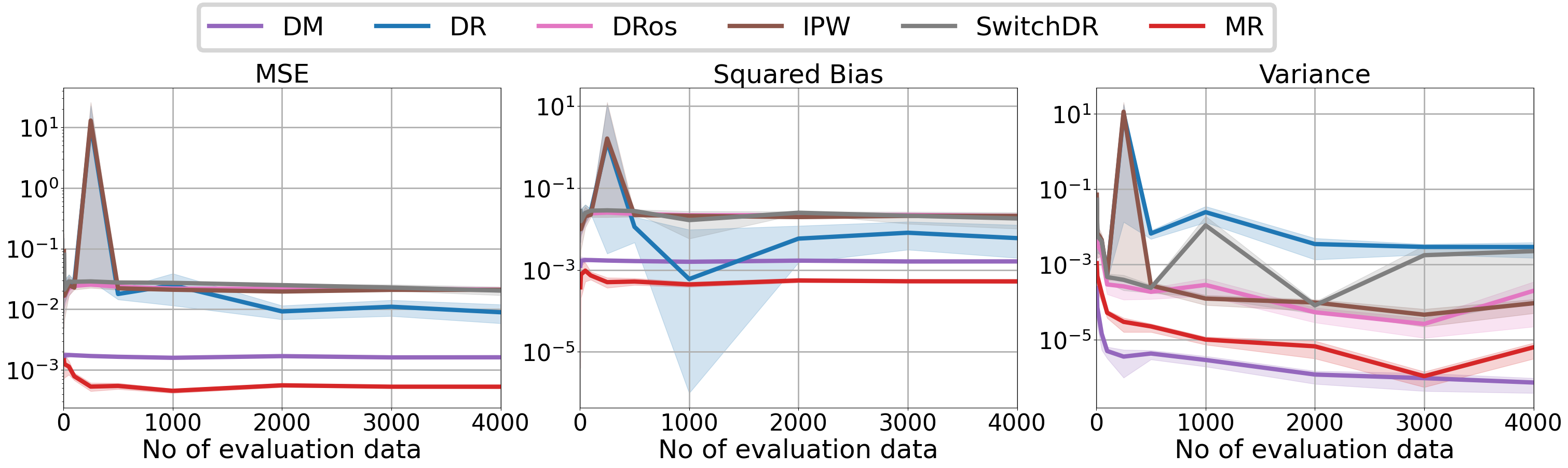}
	    \subcaption{Results with varying $n$ for $\alpha^\ast = 0.2$ and $m=1000$}
	    \label{subfig:opt-neval}
	\end{subfigure}\\
	\begin{subfigure}{0.8\textwidth} 
	    \centering
	    \includegraphics[width=1\textwidth]{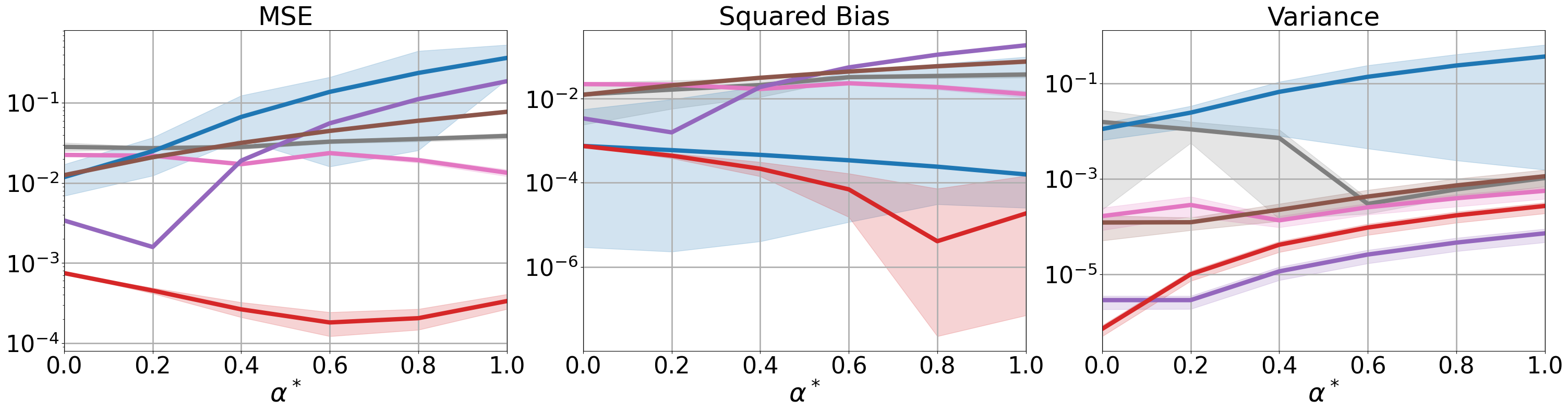}
	    \subcaption{Results with varying $\alpha^\ast$ for $m = n = 1000$}
	    \label{subfig:opt-ae}
	\end{subfigure}\\
    \begin{subfigure}{0.8\textwidth} 
	    \centering
	    \includegraphics[width=1\textwidth]{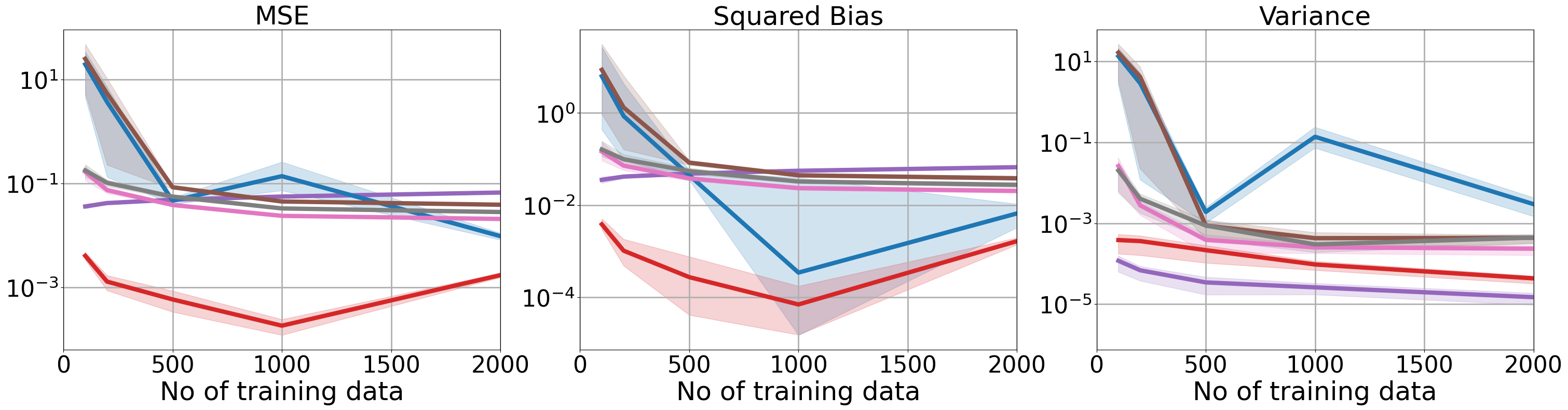}
	    \subcaption{Results with varying $m$ for $n = 1000$ and $\alpha^\ast = 0.6$}
	    \label{subfig:opt-ntr}
	\end{subfigure}\\
    \caption{Results for OptDigits dataset}
    \label{fig:optdigits}
\end{figure}

\begin{figure}[h!]
    \centering
	\begin{subfigure}{0.8\textwidth}
	    \centering
	    \includegraphics[width=1\textwidth]{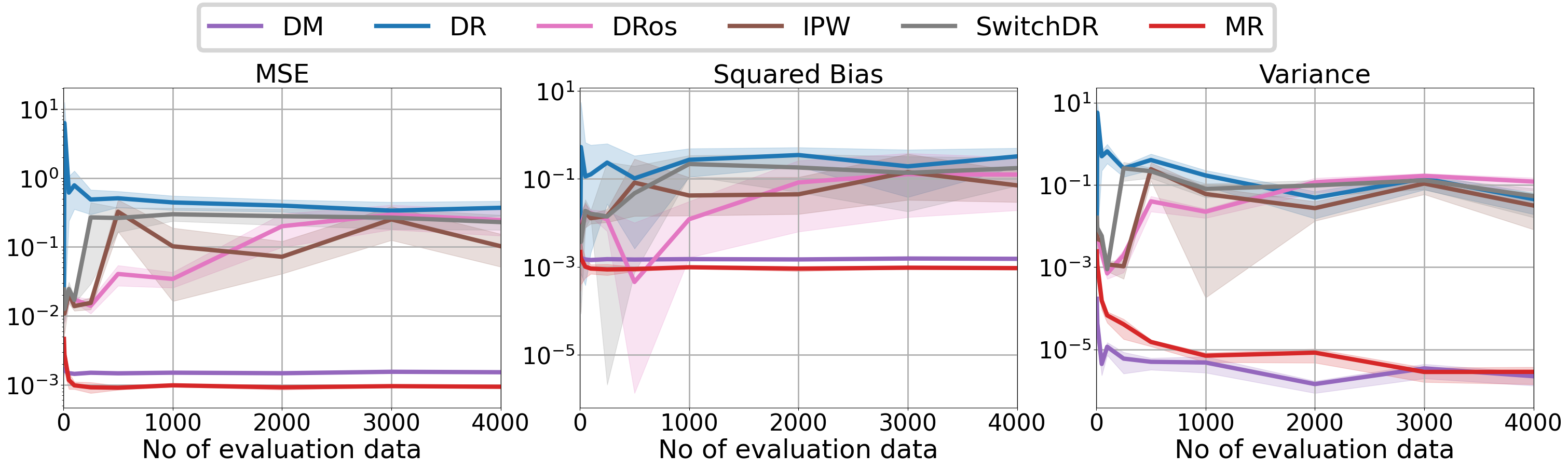}
	    \subcaption{Results with varying $n$ for $\alpha^\ast = 0.2$ and $m=1000$}
	    \label{subfig:pen-neval}
	\end{subfigure}\\
	\begin{subfigure}{0.8\textwidth} 
	    \centering
	    \includegraphics[width=1\textwidth]{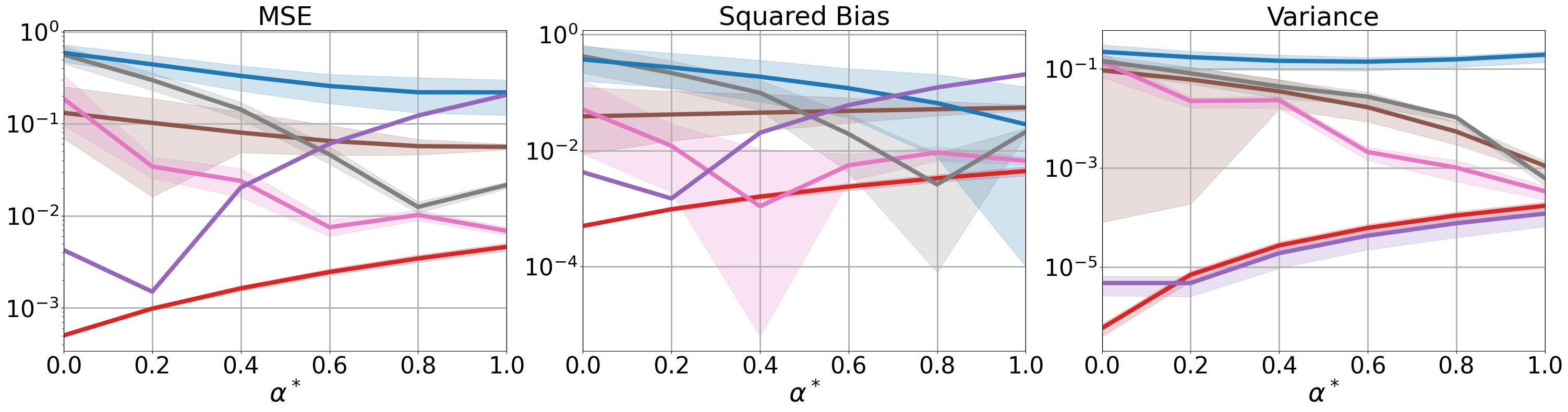}
	    \subcaption{Results with varying $\alpha^\ast$ for $m = n = 1000$}
	    \label{subfig:pen-ae}
	\end{subfigure}\\
        \begin{subfigure}{0.8\textwidth} 
	    \centering
	    \includegraphics[width=1\textwidth]{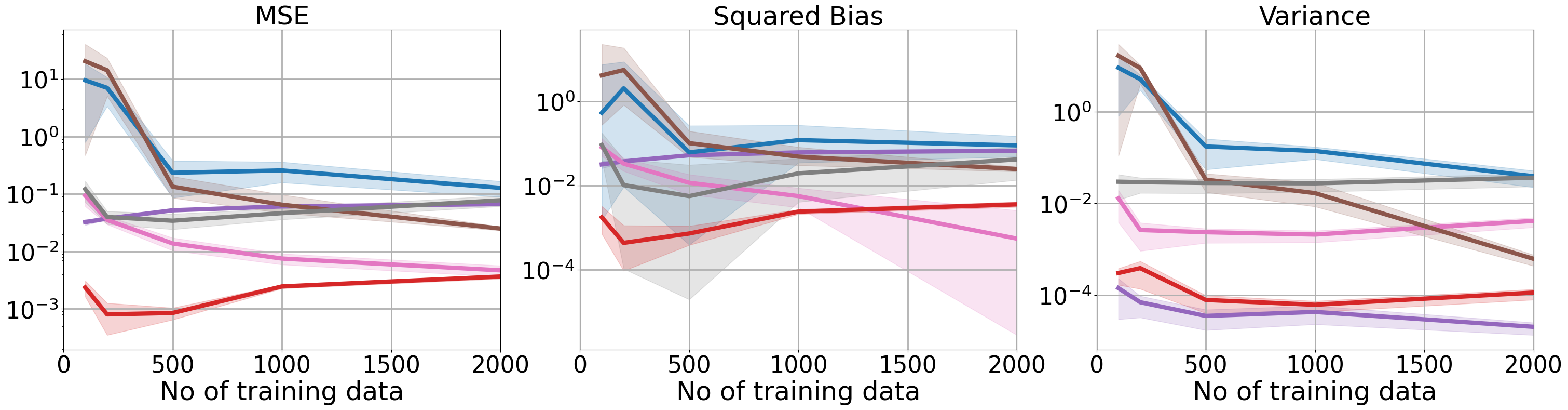}
	    \subcaption{Results with varying $m$ for $\alpha^\ast=0.6$ and $n = 1000$}
	    \label{subfig:pen-tr}
	\end{subfigure}
    \caption{Results for PenDigits dataset}
    \label{fig:pendigits}
\end{figure}

\begin{figure}[h!]
    \centering
	\begin{subfigure}{0.8\textwidth}
	    \centering
	    \includegraphics[width=1\textwidth]{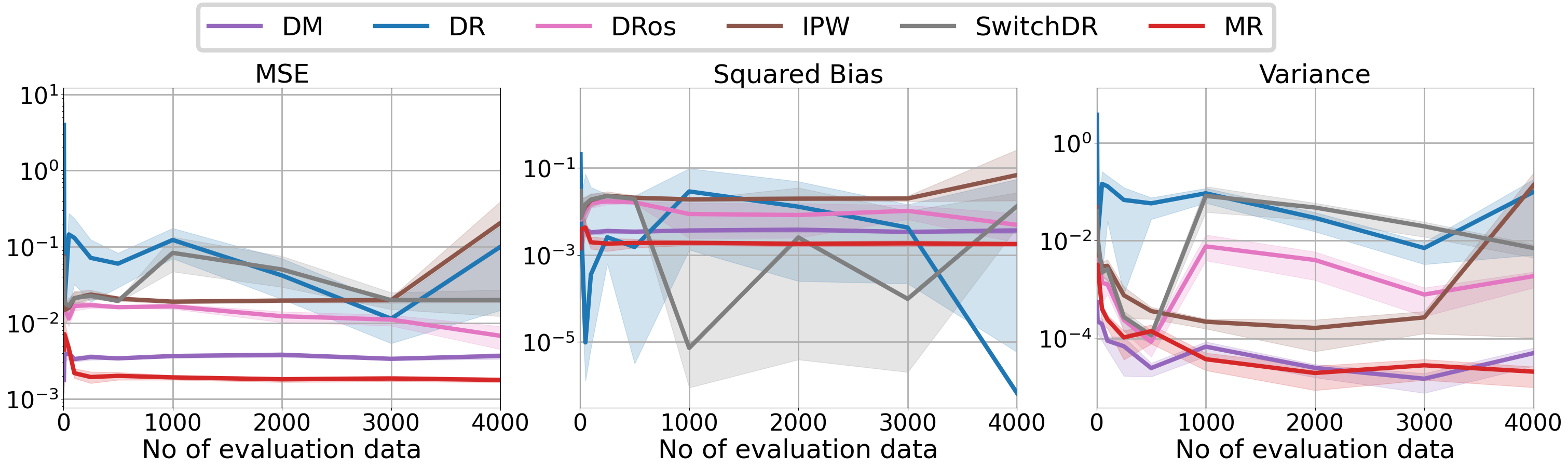}
	    \subcaption{Results with varying $n$ for $\alpha^\ast = 0.2$ and $m=1000$}
	    \label{subfig:sat-neval}
	\end{subfigure}\\
	\begin{subfigure}{0.8\textwidth} 
	    \centering
	    \includegraphics[width=1\textwidth]{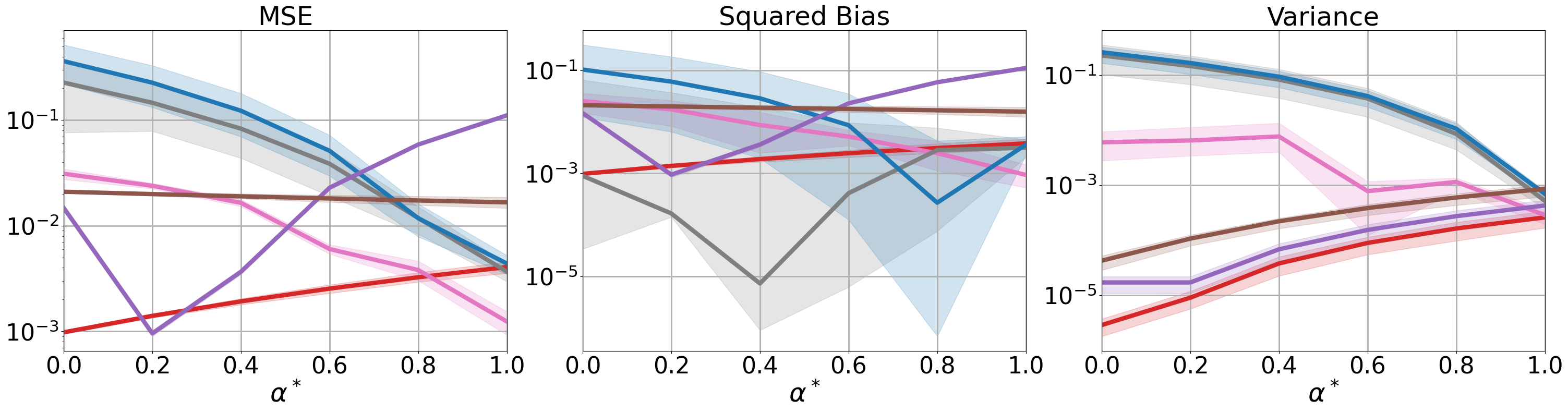}
	    \subcaption{Results with varying $\alpha^\ast$ for $n = 1000$}
	    \label{subfig:sat-ae}
	\end{subfigure}\\
 	\begin{subfigure}{0.8\textwidth} 
	    \centering
	    \includegraphics[width=1\textwidth]{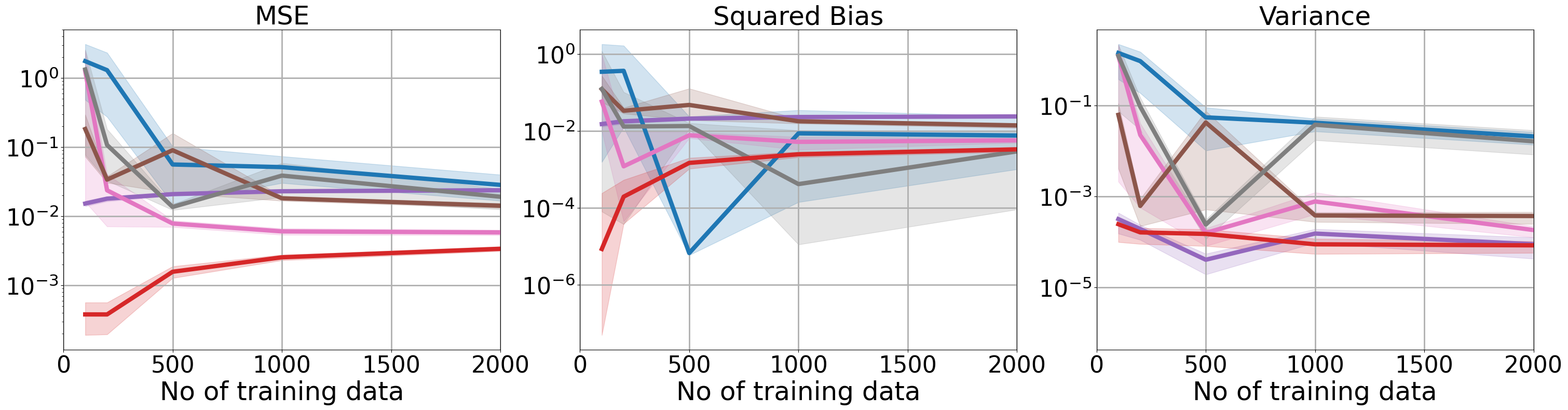}
	    \subcaption{Results with varying $m$ for $\alpha^\ast=0.6$ and $n = 1000$}
	    \label{subfig:sat-tr}
	\end{subfigure}
    \caption{Results for SatImage dataset}
    \label{fig:satimage}
\end{figure}

\begin{figure}[h!]
    \centering
	\begin{subfigure}{0.8\textwidth}
	    \centering
	    \includegraphics[width=1\textwidth]{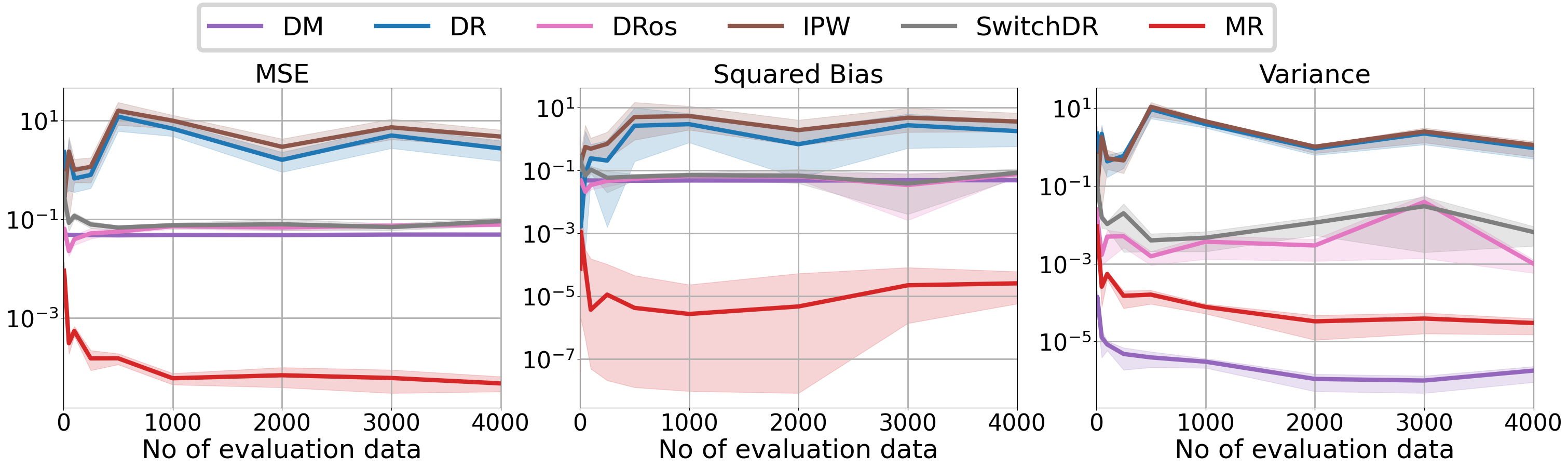}
	    \subcaption{Results with varying $n$ for $\alpha^\ast = 0.2$ and $m=1000$}
	    \label{subfig:letter-neval}
	\end{subfigure}\\
	\begin{subfigure}{0.8\textwidth} 
	    \centering
	    \includegraphics[width=1\textwidth]{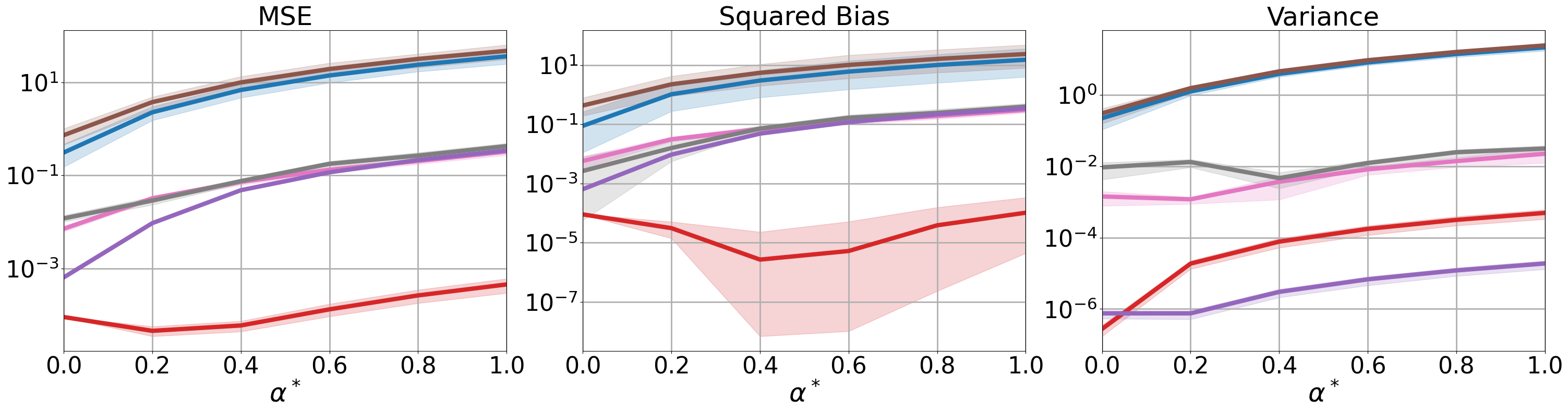}
	    \subcaption{Results with varying $\alpha^\ast$ for $m = n = 1000$}
	    \label{subfig:letter-ae}
	\end{subfigure}\\
        \begin{subfigure}{0.8\textwidth} 
	    \centering
	    \includegraphics[width=1\textwidth]{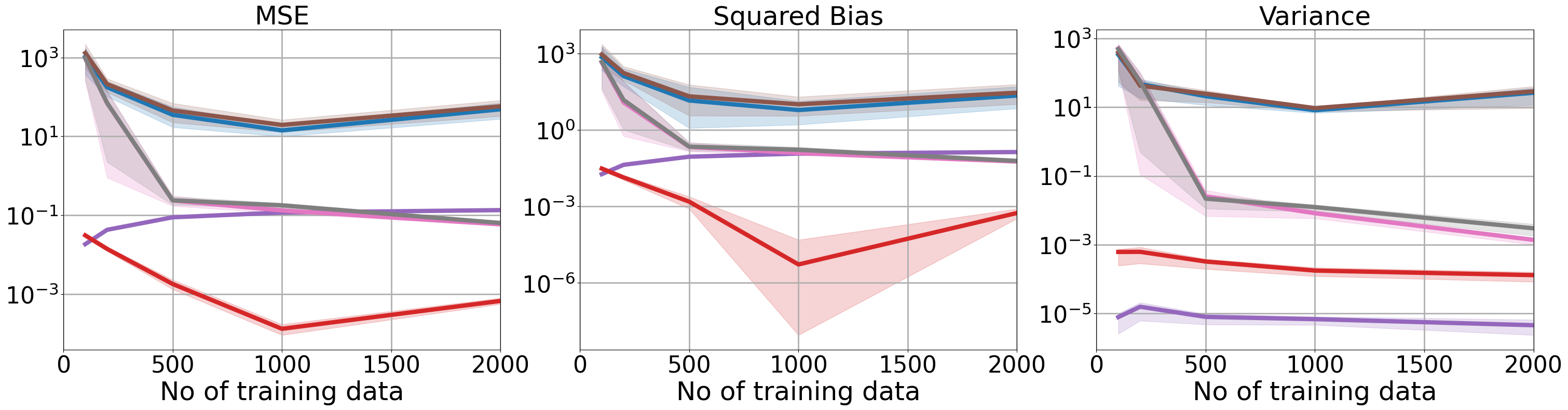}
	    \subcaption{Results with varying $m$ for $\alpha^\ast=0.6$ and $n = 1000$}
	    \label{subfig:letter-tr}
	\end{subfigure}
    \caption{Results for Letter dataset}
    \label{fig:letter}
\end{figure}

\begin{figure}[h!]
    \centering
	\begin{subfigure}{0.8\textwidth}
	    \centering
	    \includegraphics[width=1\textwidth]{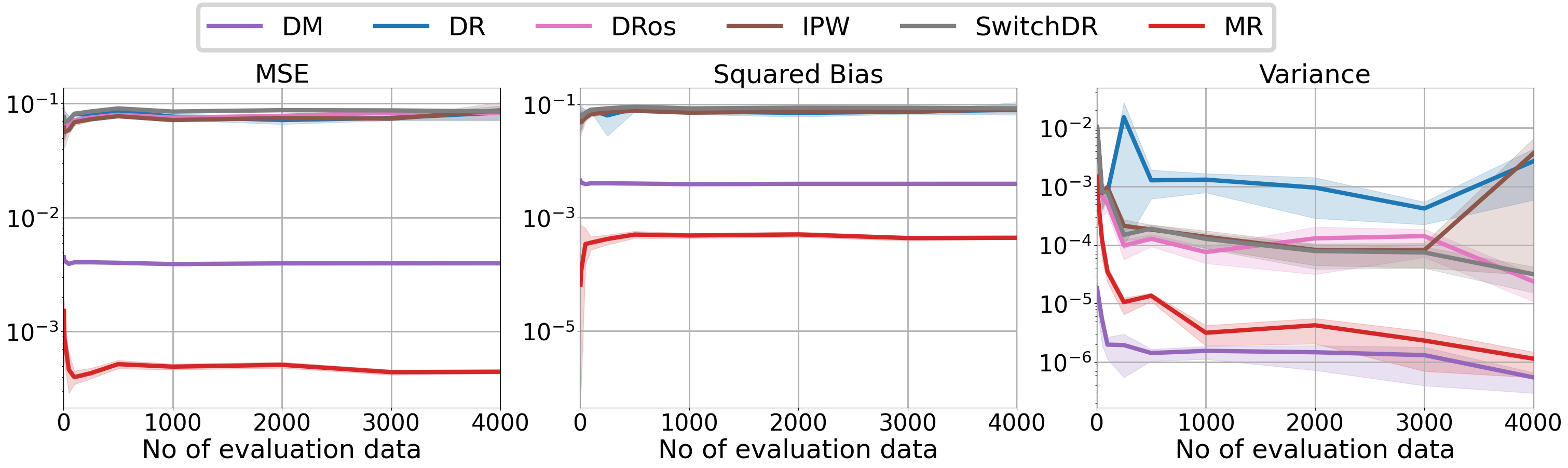}
	    \subcaption{Results with varying $n$ for $\alpha^\ast = 0.2$ and $m=1000$}
	    \label{subfig:mnist-neval}
	\end{subfigure}\\
	\begin{subfigure}{0.8\textwidth} 
	    \centering
	    \includegraphics[width=1\textwidth]{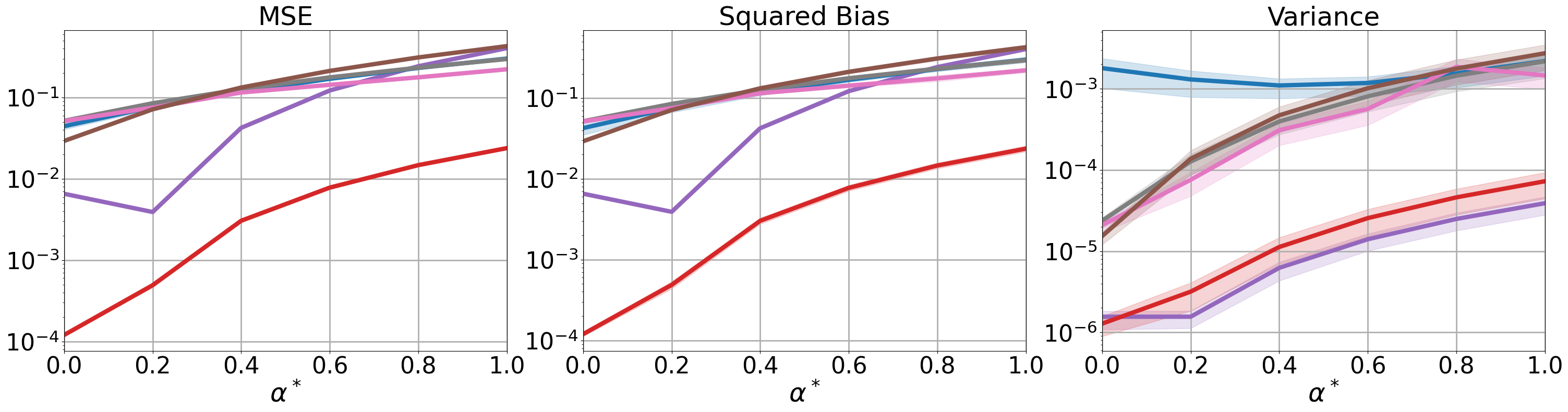}
	    \subcaption{Results with varying $\alpha^\ast$ for $m= n = 1000$}
	    \label{subfig:mnist-ae}
	\end{subfigure}\\
 	\begin{subfigure}{0.8\textwidth} 
	    \centering
	    \includegraphics[width=1\textwidth]{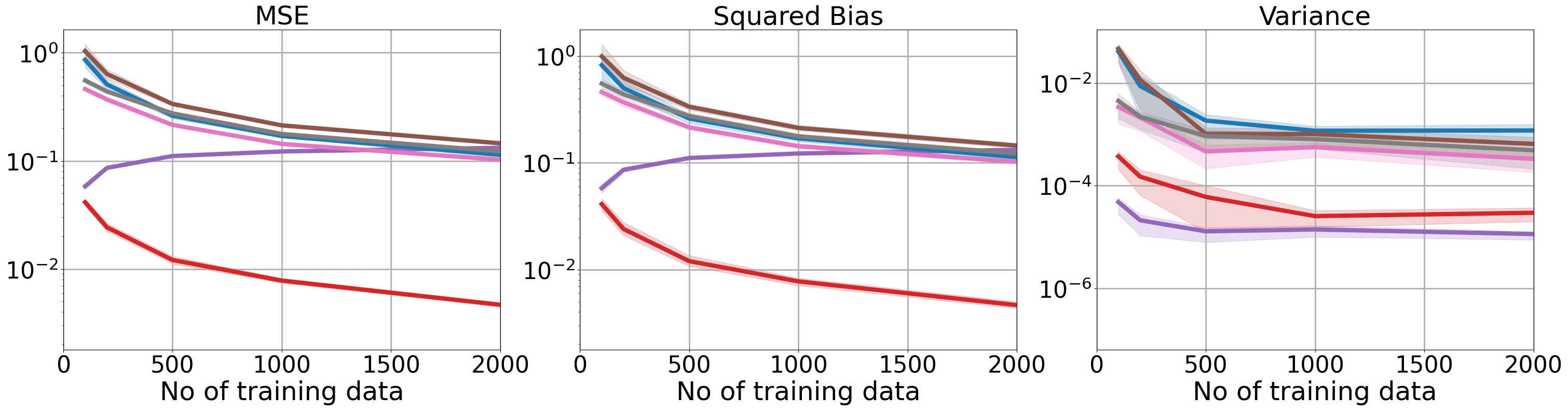}
	    \subcaption{Results with varying $m$ for $\alpha^\ast=0.6$ and $n = 1000$}
	    \label{subfig:mnist-tr}
	\end{subfigure}
    \caption{Results for Mnist dataset}
    \label{fig:mnist}
\end{figure}

\begin{figure}[h!]
    \centering
	\begin{subfigure}{0.8\textwidth}
	    \centering
	    \includegraphics[width=1\textwidth]{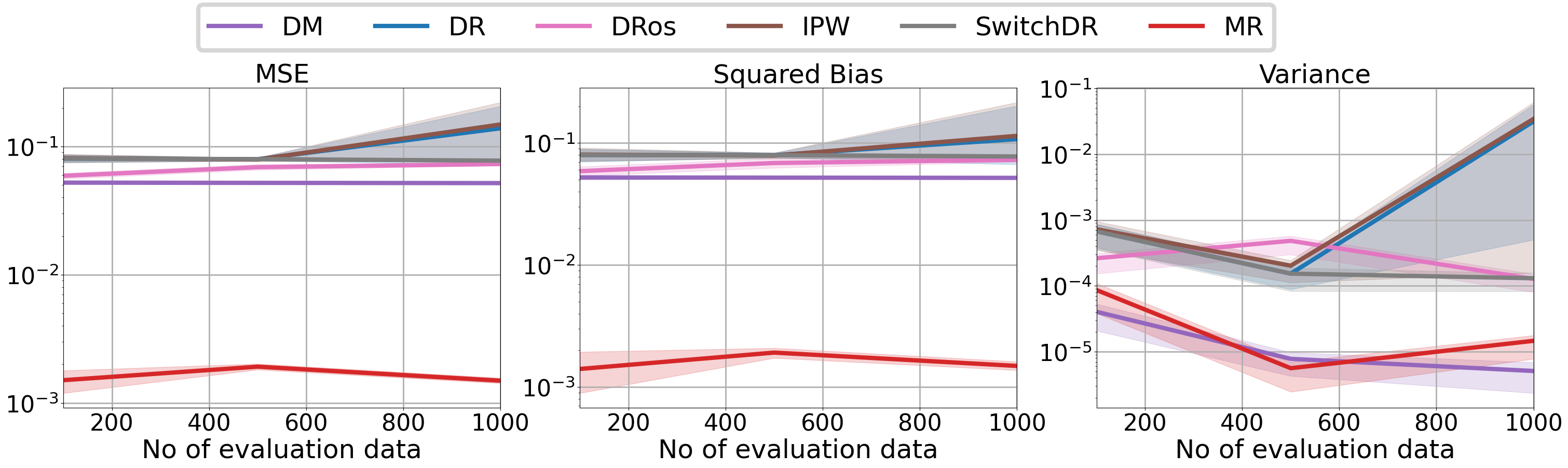}
	    \subcaption{Results with varying $n$ for $\alpha^\ast = 0.2$ and $m=500$}
	    \label{subfig:digits-neval}
	\end{subfigure}\\
	\begin{subfigure}{0.8\textwidth} 
	    \centering
	    \includegraphics[width=1\textwidth]{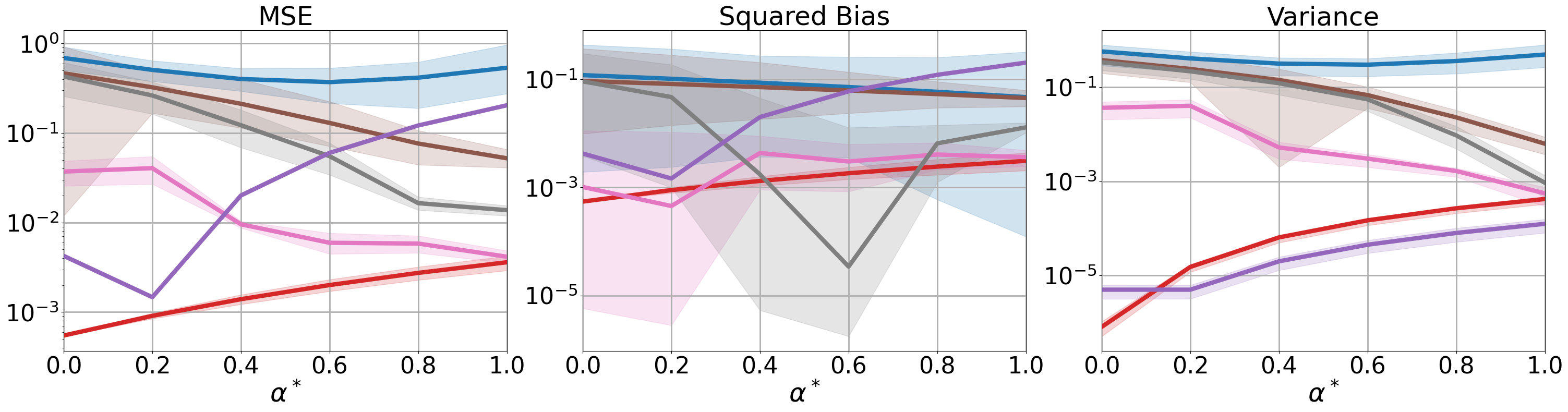}
	    \subcaption{Results with varying $\alpha^\ast$ for $n = 500$ and $m=1000$}
	    \label{subfig:digits-ae}
	\end{subfigure}\\
 	\begin{subfigure}{0.8\textwidth} 
	    \centering
	    \includegraphics[width=1\textwidth]{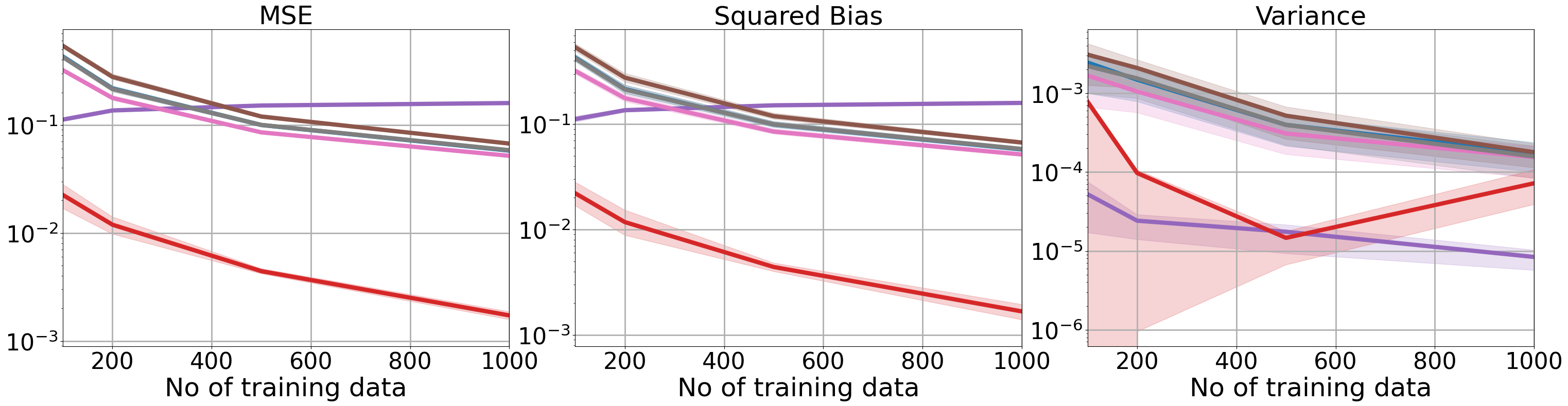}
	    \subcaption{Results with varying $m$ for $\alpha^\ast=0.6$ and $n = 500$}
	    \label{subfig:digits-tr}
	\end{subfigure}
    \caption{Results for Digits dataset. Note that compared to other datasets we consider smaller maximum dataset sizes $m,n$ here as the total number of available datapoints was 1797.}
    \label{fig:digits}
\end{figure}

\begin{figure}[h!]
    \centering
	\begin{subfigure}{0.8\textwidth}
	    \centering
	    \includegraphics[width=1\textwidth]{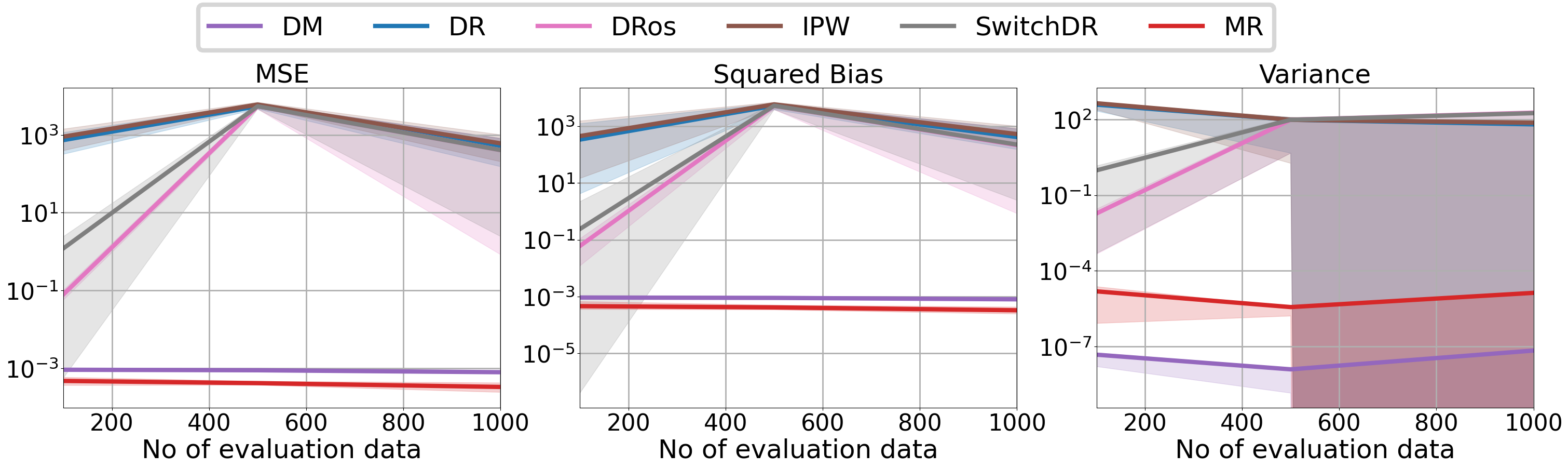}
	    \subcaption{Results with varying $n$ for $\alpha^\ast = 0.4$ and $m=2000$}
	    \label{subfig:cifar100-neval}
	\end{subfigure}\\
	\begin{subfigure}{0.8\textwidth} 
	    \centering
	    \includegraphics[width=1\textwidth]{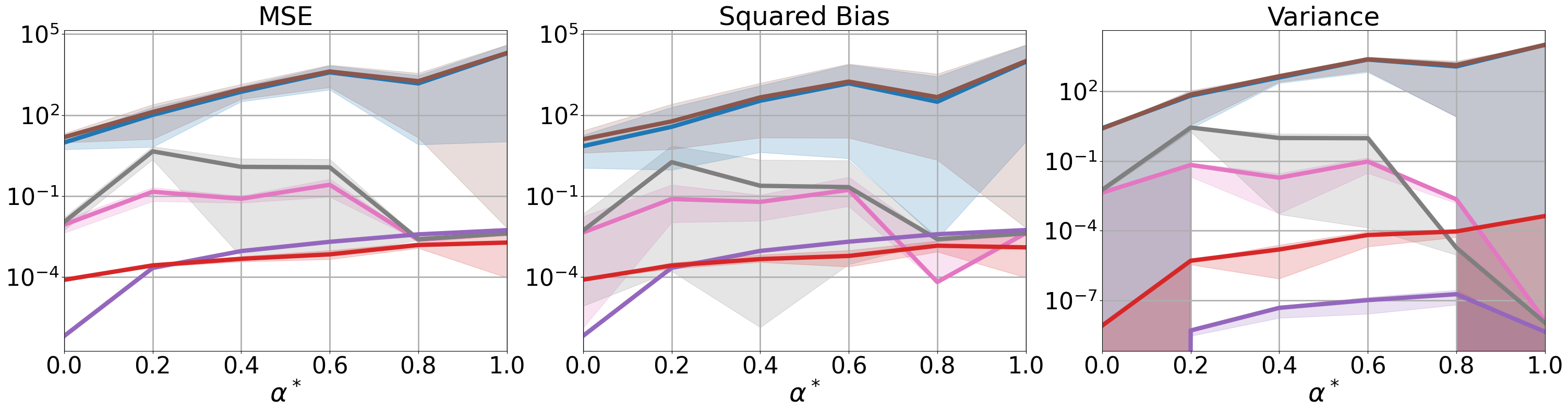}
	    \subcaption{Results with varying $\alpha^\ast$ for $n = 100$ and $m=2000$}
	    \label{subfig:cifar-ae}
	\end{subfigure}\\
 	\begin{subfigure}{0.8\textwidth} 
	    \centering
	    \includegraphics[width=1\textwidth]{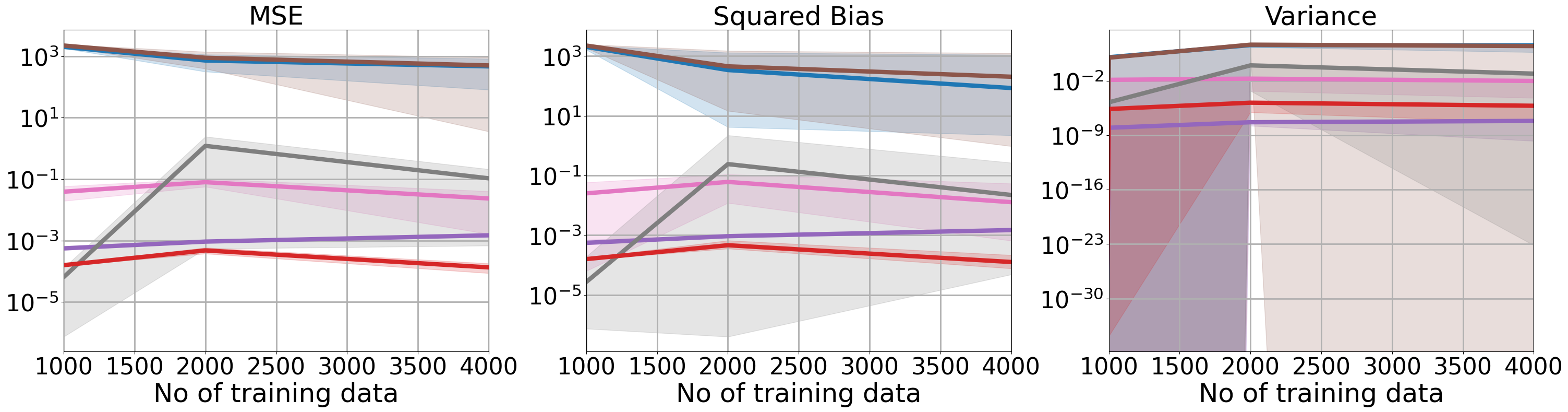}
	    \subcaption{Results with varying $m$ for $\alpha^\ast=0.4$ and $n = 100$}
	    \label{subfig:cifar100-tr}
	\end{subfigure}
    \caption{Results for CIFAR-100 dataset.}
    \label{fig:cifar100}
\end{figure}

\subsection{Application to Average Treatment Effect (ATE) estimation}\label{app:ate-empirical}
In this subsection, we provide additional details for our experiment applying MR to the problem of ATE estimation presented in the main text. We begin by describing the dataset being used in this experiment.

\paragraph{Twins dataset}
We use the Twins dataset as studied by \cite{louizos2017causal}, which comprises data from twin births in the USA between 1989-1991. The treatment $a=1$ corresponds to being born the heavier twin and the outcome $Y$ corresponds to the mortality of each of the twins in their first year of life. Since the data includes records for both twins, their outcomes would be considered as the two potential outcomes. Specifically, $Y(1)$ corresponds to the mortality of the heavier twin (and likewise for $Y(0)$). Closely following the methodology of \cite{louizos2017causal}, we only chose twins which are the same sex and weigh less than 2kgs. This provides us with a dataset of 11984 pairs of twins. 

The mortality rate for the lighter twin is 18.9\% and for the heavier twin is 16.4\%, leading to the ATE value being $\theta_\ate = -2.5\%$. For each twin-pair we obtained 46 covariates relating to the parents, the pregnancy and birth. 

\paragraph{Treatment assignment}
To simulate an observational study, we selectively hide one of the two twins by defining the treatment variable $A$ which depends on the feature \emph{GESTAT10}. This feature, which takes integer values from 0 to 9, is obtained by grouping the number of gestation weeks prior to birth into 10 groups.
Then we sample actions $A$ as follows, 
\[
A \mid X \sim \textup{Bern}(Z/10),
\]
where $Z$ is \emph{GESTAT10}, and $X$ are all the 46 features corresponding to a twin pair (including \emph{GESTAT10}). 

Using the treatment assignments defined above, we generate the observational data by selectively hiding one of the two twins from each pair. Next, we randomly split this dataset into training and evaluation datasets of sizes $m$ and $n$ respectively. In this experiment, we consider $m=5000$ training datapoints. 

\paragraph{Baselines}
Recall that ATE estimation can be formulated as the difference between off-policy values of deterministic policies $\pi^{(1)} \coloneqq \ind(A=1)$ and $\pi^{(0)} \coloneqq \ind(A=0)$. Therefore, any OPE estimator can be applied to ATE estimation. In this experiment, we compare our estimator against the baselines considered in our OPE experiments in Section \ref{subsec:additional-experiments-classification}. This includes the Direct Method (DM), IPW and DR estimators as well as Switch-DR \citep{wang2017optimal} and DR with Optimistic Shrinkage (DRos) \citep{su2020doubly}. To estimate $\hat{q}(x, a)$ for DM and DR estimators, we use multi-layer perceptrons (MLP) trained on the $m$ training datapoints. Additionally, we estimate the behaviour policy $\hatbeh$ using random forest classifier trained on the full training dataset. 

Since the outcome in this experiment is binary, we estimate the weights $w(y) = \Ebeh[\hat{\rho}(A, X)\mid Y=y]$ directly by estimating the sample mean of $\hat{\rho}(A, X)$ for datapoints with $Y=y$. This means that the alternative method of estimating MR yields the same value as the default method. We therefore do not consider these estimators separately. Additionally, since there is no natural embedding $R$ of the covariate-action space which satisfies the conditional dependence Assumption \ref{assum:indep-general}, we do not consider the G-MIPS (or MIPS) estimator either.   

\paragraph{Performance metric}
For our evaluation, we consider the absolute error in ATE estimation, $\epsilon_\ate$, defined as:
\[
\epsilon_\ate \coloneqq | \hat{\theta}^{(n)}_\ate - \theta_\ate |.
\]
Here, $\hat{\theta}^{(n)}_\ate$ denotes the value of the ATE estimated using $n$ evaluation datapoints. For example, for the IPW estimator, the $\hat{\theta}^{(n)}_\ate$ can be written as:
\[
\hat{\theta}^{(n)}_\ate = \ateipw = \frac{1}{n} \sum_{i=1}^n \left(\frac{\ind(a_i=1)-\ind(a_i=0)}{\hatbeh(a_i\mid x_i)}\right)\, y_i.
\]

All results for this experiment are provided in the main text.

\subsection{Additional synthetic data experiments} \label{sec:app-additional-results}
\begin{figure}[ht]
     \centering
     \begin{subfigure}[b]{0.8\textwidth}
         \centering
         \includegraphics[width=\textwidth]{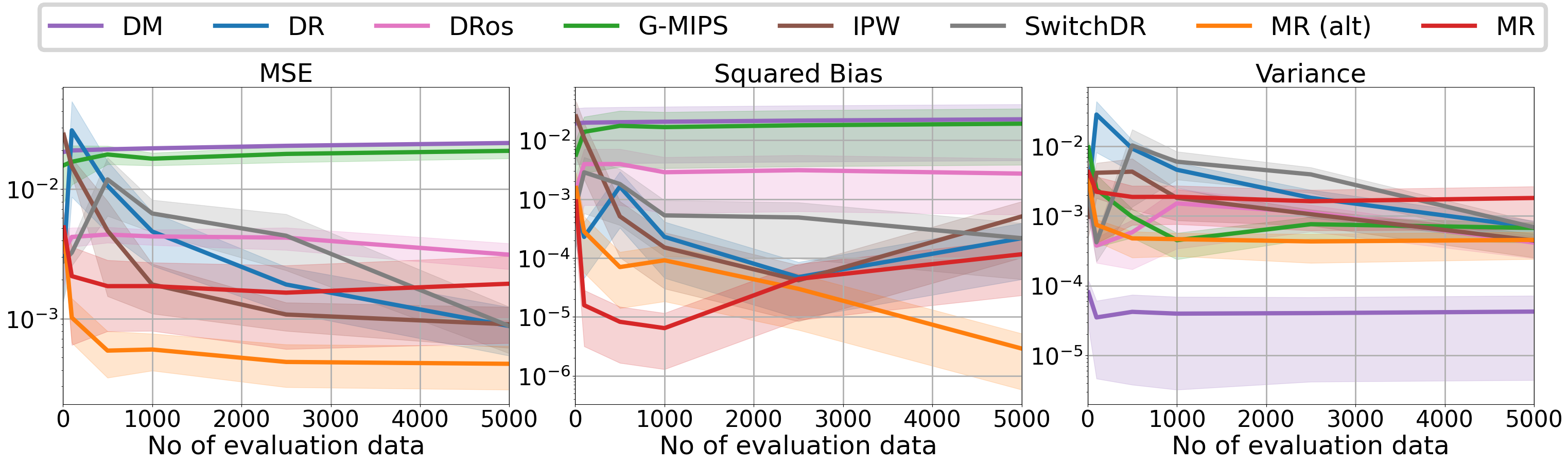}
         \caption{$d=1000$, $n_{a}=100$, $\alpha^\ast = 0.4$.}
         \label{fig:mse-vs-neval-conf2a}
     \end{subfigure}\\
     \begin{subfigure}[b]{0.8\textwidth}
         \centering
         \includegraphics[width=\textwidth]{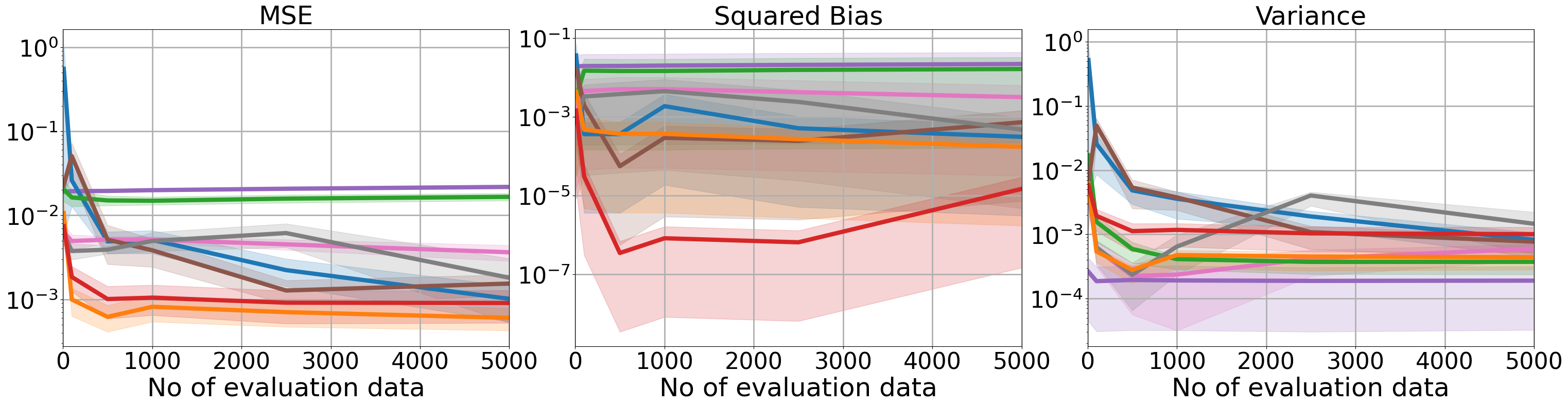}
         \caption{$d=10000$, $n_{a}=100$, $\alpha^\ast = 0.4$.}
         \label{fig:mse-vs-neval-conf2b}
     \end{subfigure}
     \caption{Results with varying size of evaluation dataset $n$.}
     \label{fig:mse-vs-neval-conf2}
 \end{figure}

 \begin{figure}[ht]
     \centering
    \begin{subfigure}[b]{0.8\textwidth}
         \centering
         \includegraphics[width=\textwidth]{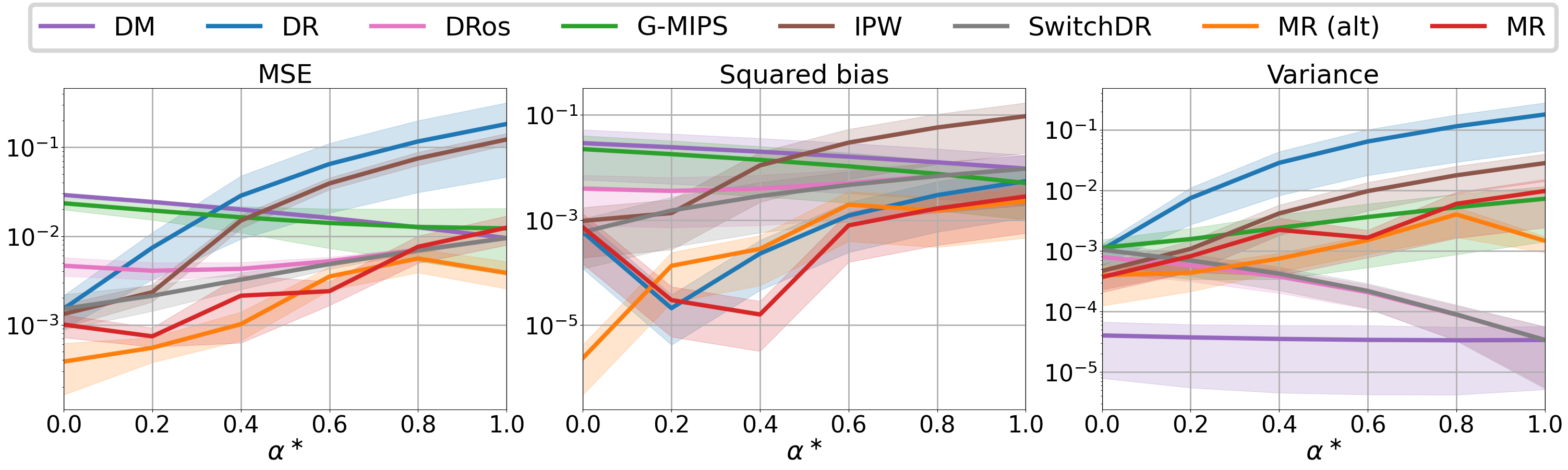}
         \caption{$d=1000$, $n_{a}=100$, $n = 100$.}
         \label{fig:mse-vs-betatar-conf2a}
     \end{subfigure}\\
     \begin{subfigure}[b]{0.8\textwidth}
         \centering
         \includegraphics[width=\textwidth]{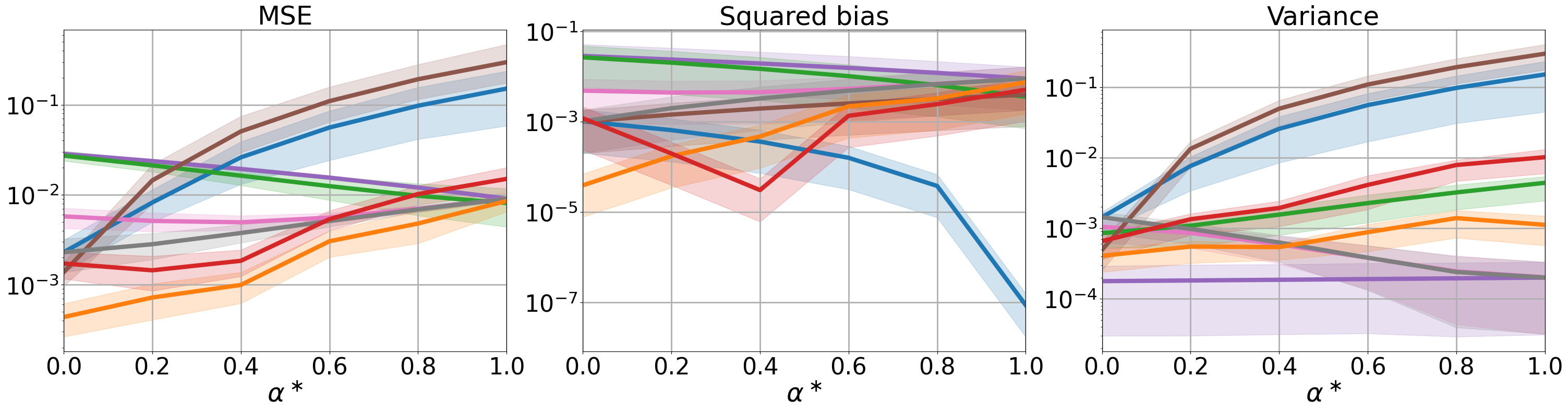}
         \caption{$d=10000$, $n_{a}=100$, $n = 100$.}
         \label{fig:mse-vs-betatar-conf2b}
     \end{subfigure}
     \caption{Results with varying $\alpha^\ast$.}
     \label{fig:mse-vs-betatar-conf2}
 \end{figure}

 \begin{figure}[ht]
     \centering
    \begin{subfigure}[b]{0.8\textwidth}
         \centering
         \includegraphics[width=\textwidth]{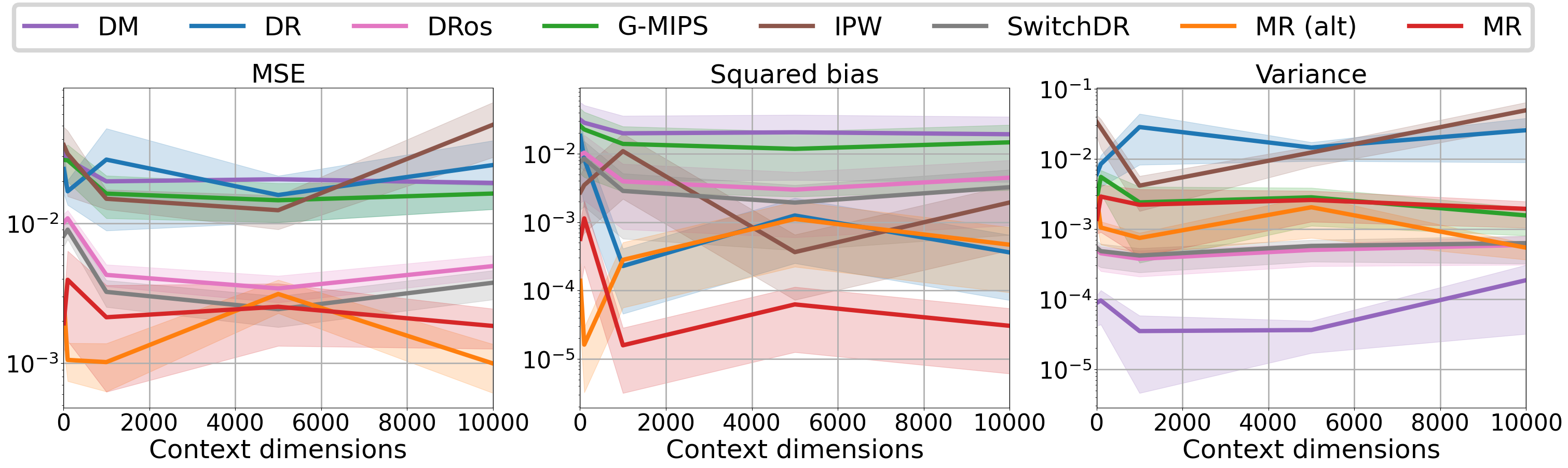}
         \caption{$n_{a}=100$, $n = 100$, $\alpha^\ast = 0.4$.}
         \label{fig:mse-vs-d-conf2a}
     \end{subfigure}\\
     \begin{subfigure}[b]{0.8\textwidth}
         \centering
         \includegraphics[width=\textwidth]{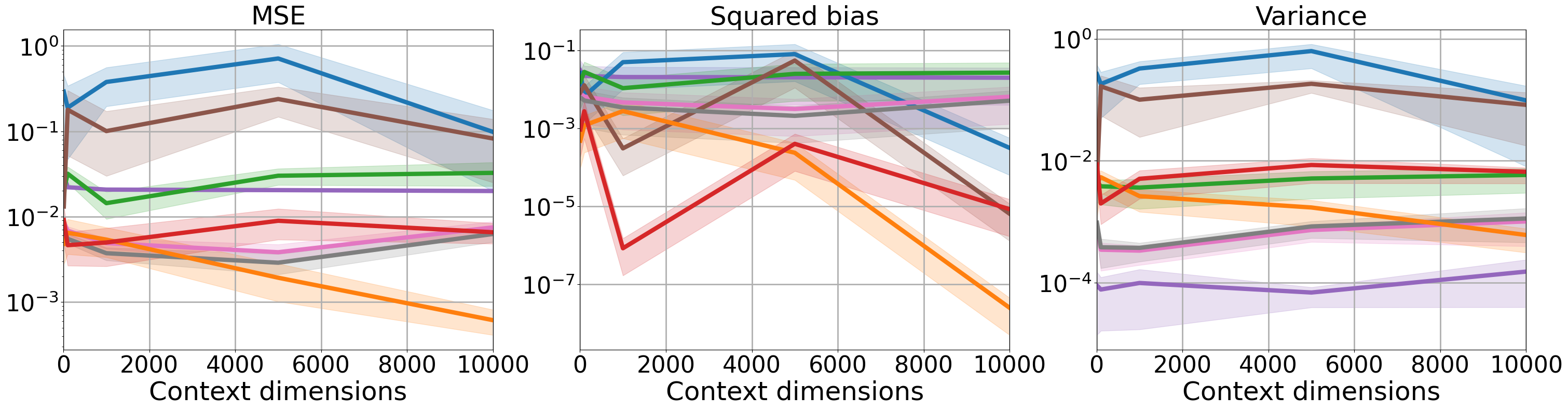}
         \caption{$n_{a}=500$, $n = 100$, $\alpha^\ast = 0.4$.}
         \label{fig:mse-vs-d-conf2b}
     \end{subfigure}
     \caption{Results with varying context dimensions $d$.}
     \label{fig:mse-vs-d-conf2}
 \end{figure}

 \begin{figure}[ht]
     \centering
    \begin{subfigure}[b]{0.8\textwidth}
         \centering
         \includegraphics[width=\textwidth]{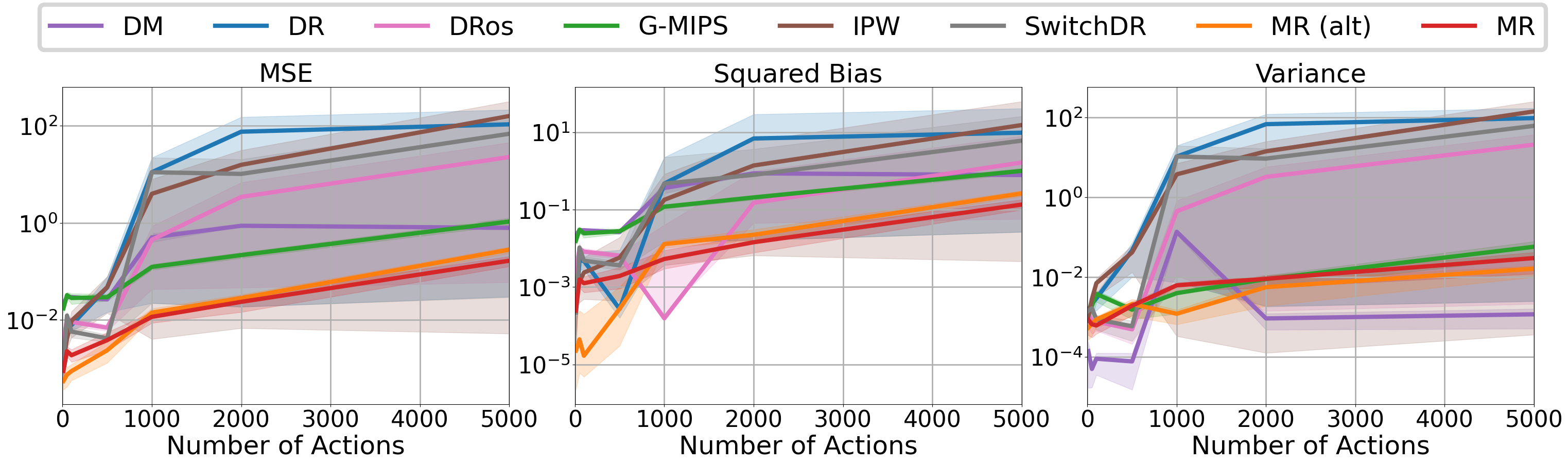}
         \caption{$d=100$, $n = 100$, $\alpha^\ast = 0.2$.}
         \label{fig:mse-vs-nac-conf2a}
     \end{subfigure}\\
     \begin{subfigure}[b]{0.8\textwidth}
         \centering
         \includegraphics[width=\textwidth]{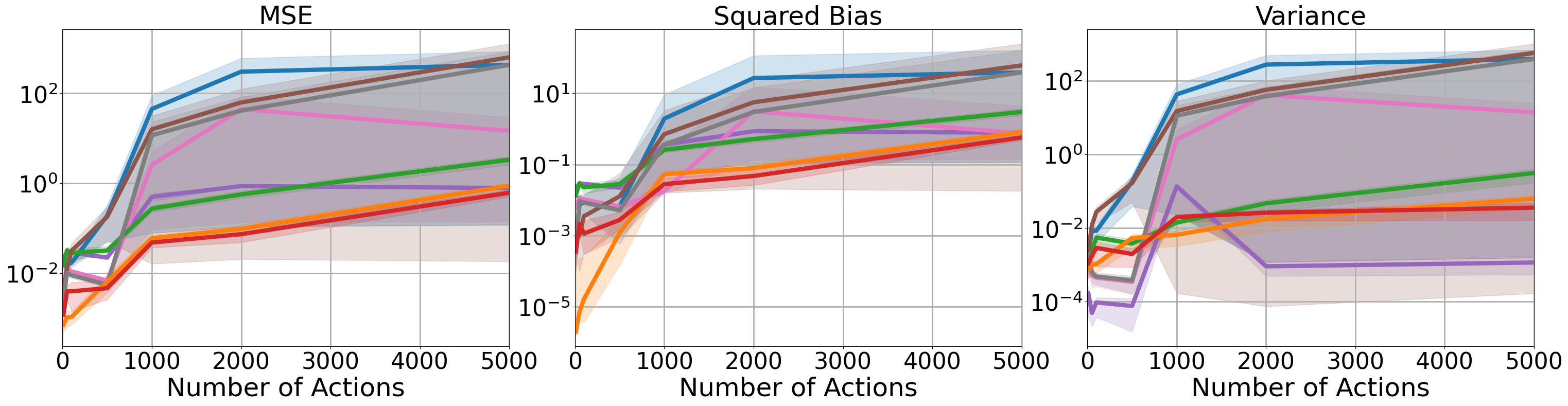}
         \caption{$d=100$, $n = 100$, $\alpha^\ast = 0.4$.}
         \label{fig:mse-vs-nac-conf2b}
     \end{subfigure}
     \caption{Results with varying number of actions $n_{a}$.}
     \label{fig:mse-vs-nac-conf2}
 \end{figure}

In addition to the synthetic data experiments provided in Section \ref{sec:exp-synth}, we also consider an additional synthetic data setup to obtain further empirical evidence in favour of the MR estimator, and also compare it against the generalised version of the MIPS estimator (described as G-MIPS in Appendix \ref{app:gmips}).
Here, we use a similar setup to \cite{saito2022off} (albeit without action embeddings $E$) where the $d$-dimensional context vectors $x$ are sampled from a standard normal distribution. Likewise, the action space is finite and comprises of $n_a$ actions, i.e.\ $\Aspace = \{0, \dots, n_a-1\}$, with $n_a$ taking a range of different values. The reward function is defined as follows:

 \paragraph{Reward function}
The expected reward $q(x, a)\coloneqq\E[Y\mid x, a]$ for these experiments is defined as follows:
\[
    q(x, a) = \sin \left(a \cdot ||x||_2 \right). 
\]
The reward $Y$ is obtained by adding a normal noise random variable to $q(x, a)$
\[
Y = q(X, A) + \epsilon, 
\]
where $\epsilon \sim \mathcal{N}(0, 0.01)$. Here, it can be seen that conditional on $R=(||X||_2, A)$, the reward $Y$ does not depend on $(X, A)$, i.e., the embedding $R$ satisfies the conditional independence assumption $Y \indep (X, A) \mid R$. 

\paragraph{Behaviour and target policies}
We first define a behaviour policy by applying softmax function to $q(x, a)$ as
\[
\beh(a\mid x) = \frac{\exp{(q(x, a))}}{\sum_{a' \in \Aspace} \exp{(q(x, a'))}}.
\]
Just like in Section \ref{sec:exp-synth}, to investigate the effect of increasing policy shift, we define a class of policies,
\[
\pi^{\alpha^\ast}(a | x) = \alpha^\ast\,\ind(a = \arg\max_{a'\in \Aspace} q(x, a')) + \frac{1-\alpha^\ast}{|\Aspace|} \quad \textup{where} \quad q(x, a) \coloneqq \E[Y\mid X=x, A=a],
\]
where $\alpha^\ast \in [0, 1]$ allows us to control the shift between $\beh$ and $\tar$. Again, the shift between $\beh$ and $\tar$ increases as $\alpha^\ast \rightarrow 1$. Using the ground truth behaviour policy $\beh$, we generate a dataset which is split into training and evaluation datasets of sizes $m$ and $n$ respectively. 

In Figures \ref{fig:mse-vs-neval-conf2} - \ref{fig:mse-vs-nac-conf2}, we present the results for this experimental setup for different choices of paramater configurations. 

\paragraph{Estimation of behaviour policy $\hatbeh$ and marginal ratio $\hat{w}(y)$}
For the MR estimator, we estimate the behaviour policy using a random forest classifier trained on 50\% of the training data and use the rest of the training data to estimate the marginal ratios $\hat{w}(y)$ using multi-layer perceptrons (MLP). Moreover, for a fair comparison we use a different behaviour policy estimate $\hatbeh$ for all other baselines which is trained on the entire training data. 

\paragraph{Additional Baselines}
In addition to the baselines considered in the main text (Section \ref{sec:exp-synth}), we also consider Switch-DR \citep{wang2017optimal} and DR with Optimistic Shrinkage (DRos) \citep{su2020doubly}. In addition, we also include the results for MR estimated using the alternative method (`MR (alt)') outlined in Section \ref{sec:alt-estimation-method}. For the G-MIPS estimator (defined in Appendix \ref{app:gmips}) considered here, we use $R = (a, ||x||_2)$\footnote{It is easy to see that in our setup, the embedding $R = (a, ||x||_2)$ satisfies the conditional independence assumption $Y \indep (X, A) \mid R$ needed for G-MIPS estimator to be unbiased}. 
To estimate $\hat{q}(x, a)$ for DM and DR estimators, we use multi-layer perceptrons (MLPs).

\subsubsection{Results}
For this experiment, the results are computed over 10 different sets of logged data replicated with different seeds, and in Figures \ref{fig:mse-vs-neval-conf2} - \ref{fig:mse-vs-nac-conf2} we use a total of $m=5000$ training data. 

\paragraph{Varying $n$}
Figure \ref{fig:mse-vs-neval-conf2} shows that MR outperforms the other baselines, in terms of MSE and squared bias, when the number of evaluation data $n\leq 1000$. Additionally, we observe that in this experiment, MR esitmated using alternative methods, MR (alt), yields better results than the original method of estimating MR. Moreover, while the variance of DM is lower than that of MR, the DM method has a high bias and consequently a high MSE.

\paragraph{Varying $\alpha^\ast$}
Figure \ref{fig:mse-vs-betatar-conf2} shows the results with increasing policy shift. It can be seen that overall MR methods achieve the smallest MSE with increasing policy shift. Moreover, the difference between MSE and variance of MR and IPW/DR methods increases with increasing policy shift, showing that MR performs especially better than these baselines when the difference between behaviour and target policies is large.

\paragraph{Varying $d$ and $n_a$}
Figures \ref{fig:mse-vs-d-conf2} and \ref{fig:mse-vs-nac-conf2} show that MR outperforms the other baselines as the context dimensions and/or number of actions increase. In fact, Figure \ref{fig:mse-vs-nac-conf2} shows that MR is significantly robust to increasing action space, whereas baselines like IPW and DR perform poorly in large action spaces.

\paragraph{Varying $m$}
Figure \ref{fig:mse-vs-ntr-conf2} shows the results with increasing number of training data $m$. We again observe that the MR methods `MR' and `MR (alt)' outperforms the other baselines in terms of the MSE and squared bias even when the number of training data is low. Moreover, the variance of both the MR estimators continues to improve with increasing number of training data.

Unlike our experimental results in Section \ref{subsec:mips-empirical}, `MR (alt)' performs better than the original MR estimator overall. This shows that one of these two methods is not better than the other consistently in all cases, and their relative performance depends on the dataset under consideration. 

\begin{figure}[ht]
     \centering
    \begin{subfigure}[b]{0.8\textwidth}
         \centering
         \includegraphics[width=\textwidth]{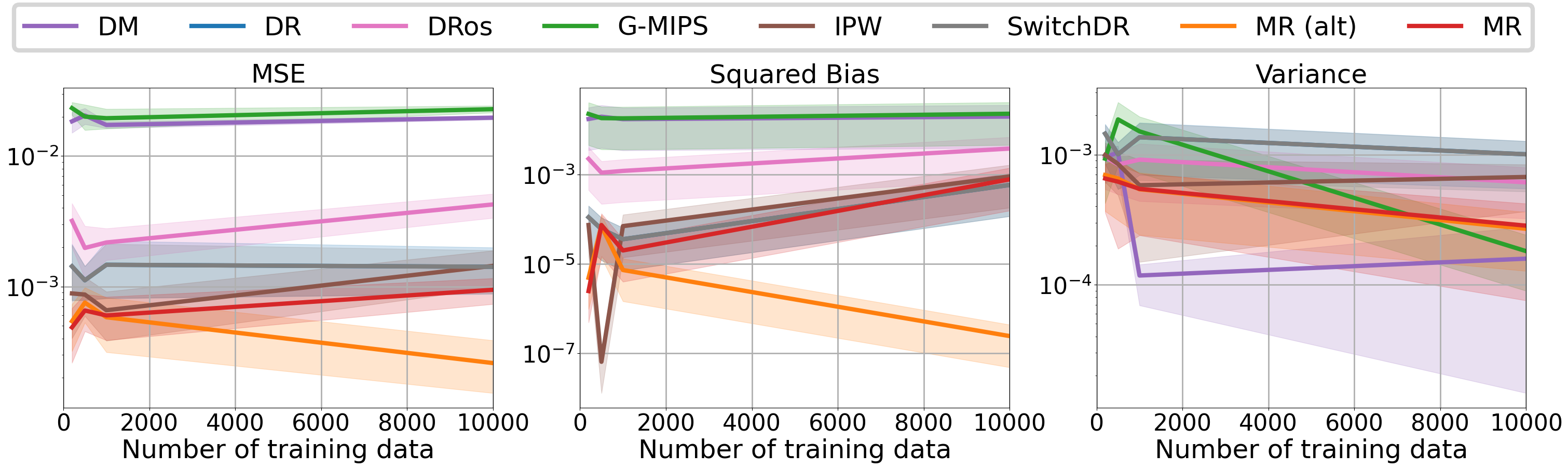}
         \caption{$d=5000$, $n = 100$, $n_a = 10$, $\alpha^\ast = 0.2$.}
         \label{fig:mse-vs-ntr-conf2a}
     \end{subfigure}\\
     \begin{subfigure}[b]{0.8\textwidth}
         \centering
         \includegraphics[width=\textwidth]{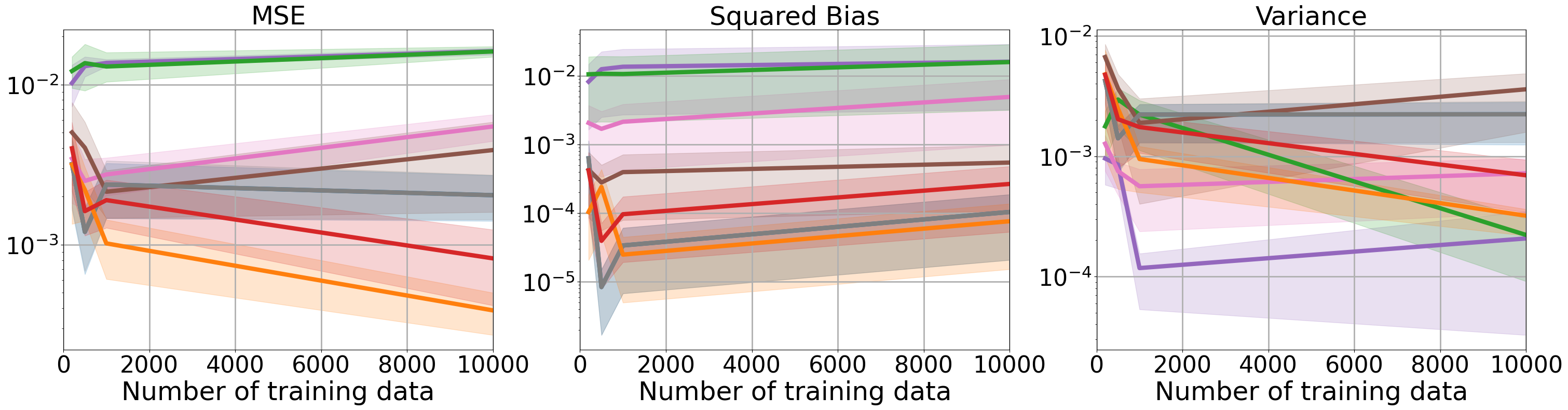}
         \caption{$d=5000$, $n = 100$, $n_a = 10$, $\alpha^\ast = 0.4$.}
         \label{fig:mse-vs-ntr-conf2b}
     \end{subfigure}
     \caption{Results with varying number of training data $m$.}
     \label{fig:mse-vs-ntr-conf2}
 \end{figure}

\subsection{Self-normalised MR estimator}
\begin{figure}[ht]
     \centering
    \begin{subfigure}[b]{0.75\textwidth}
         \centering
         \includegraphics[width=\textwidth]{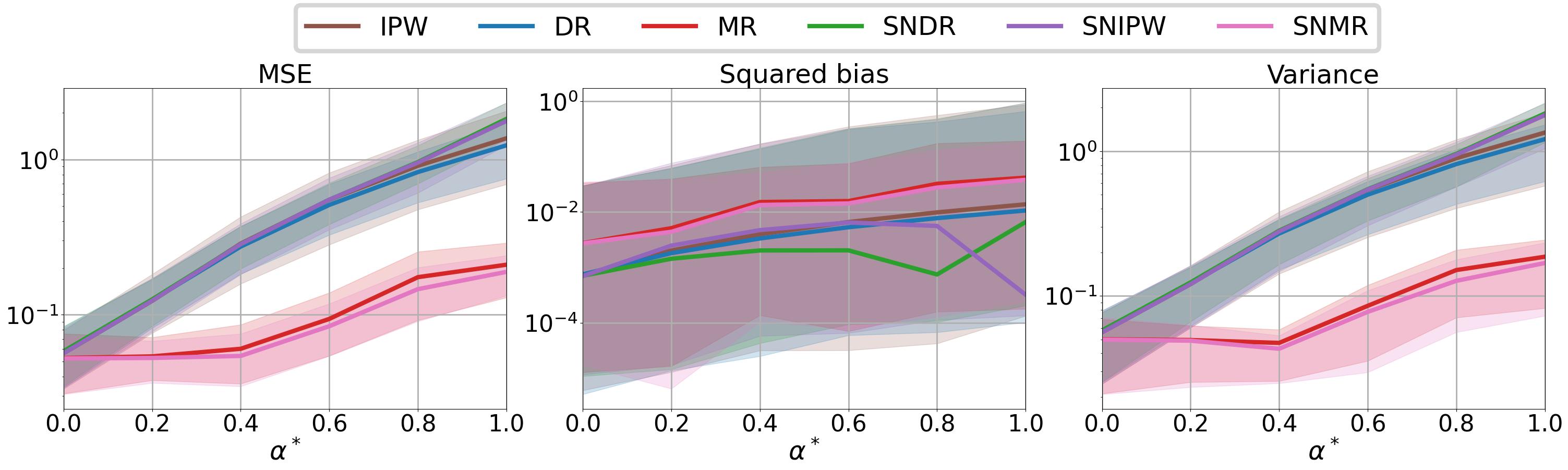}
         \caption{$d=10000$, $n = 200$, $n_a = 20$, $m = 5000$.}
         \label{fig:self-norma}
     \end{subfigure}\\
     \begin{subfigure}[b]{0.75\textwidth}
         \centering
         \includegraphics[width=\textwidth]{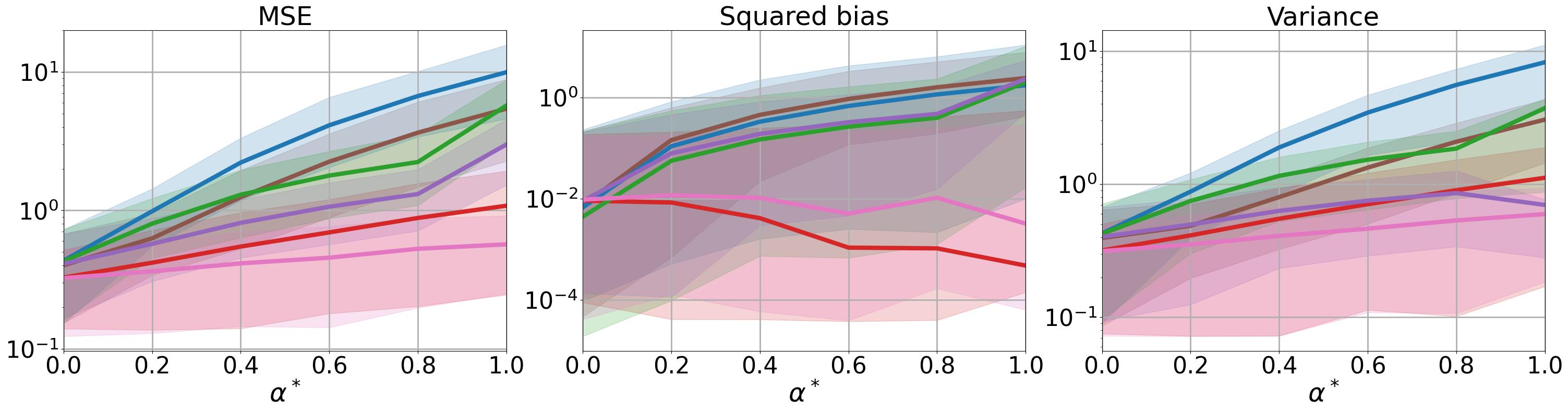}
         \caption{$d=5000$, $n = 200$, $n_a = 20$, $m=1000$.}
         \label{fig:self-normb}
     \end{subfigure}\\
     \begin{subfigure}[b]{0.75\textwidth}
         \centering
         \includegraphics[width=\textwidth]{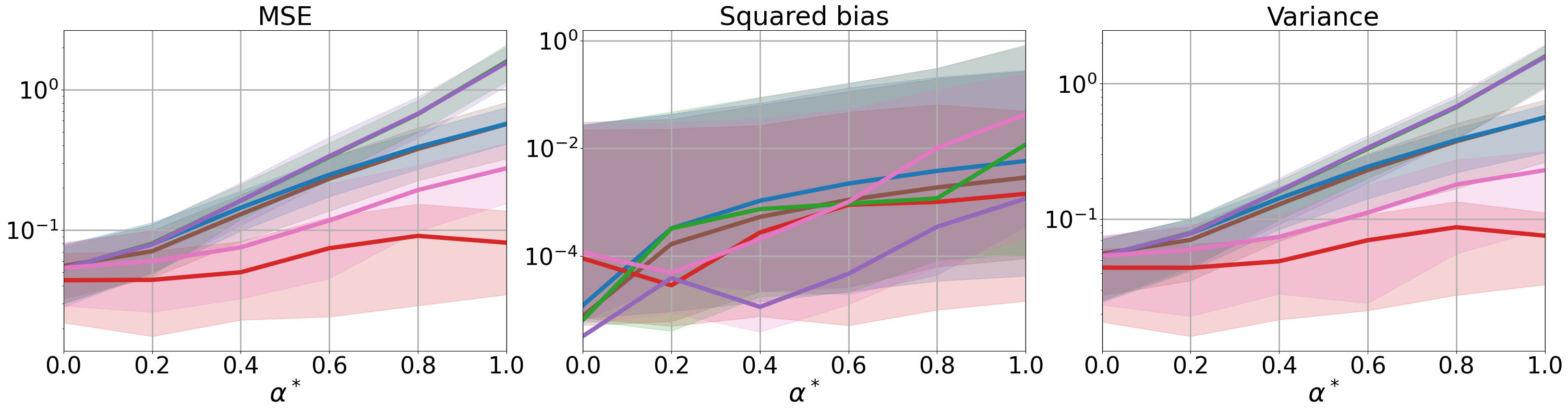}
         \caption{$d=10000$, $n = 200$, $n_a = 20$, $m=5000$.}
         \label{fig:self-normc}
     \end{subfigure}
     \caption{Results for self-normalised estimators with varying target policy shift $\alpha^\ast$ for synthetic data setup considered in Section \ref{sec:exp-synth}. Here, ``SN'' denotes self-normalised estimators.}
     \label{fig:self-norm}
 \end{figure}
Self-normalization trick has been used in practice to reduce the variance in off-policy estimators \citep{swaminathan2015the}. This technique is also applicable to the MR estimator, and leads to the self-normalized MR estimator (denoted as $\thetasnmr$) defined as follows:
\[
\thetasnmr \coloneqq \sum_{i=1}^n \frac{w(Y_i)}{\sum_{j=1}^n w(Y_j)}\,Y_i.
\]

We conducted experiments to investigate the effect of self-normalisation on the performance of the IPW, DR and MR estimators. Figure \ref{fig:self-norm} shows results for three different choices of parameter configurations. Overall, we observe that in all settings, the MR and self-normalised MR (SNMR) estimator outperform all other baselines including the self-normalised IPW and DR estimators (denoted as SNIPW and SNDR respectively). Moreover, in some settings, where the importance ratios achieve very high values, self-normalisation can reduce the variance and MSE of the corresponding estimator (for example, Figure \ref{fig:self-normb}). However, we also observe cases in which self-normalization does not significantly change the results (Figure \ref{fig:self-norma}), or may even slightly worsen the MSE of the estimators (Figure \ref{fig:self-normc}). 
        \chapter{\label{app:copp}Conformal Off-Policy Prediction in Contextual Bandits}

\minitoc

\section{Proofs}\label{sec:proofs}
\subsection{Proof of Proposition \ref{coverage_theorem}} 
This proof is a direct adaptation of \cite[Lemma 3]{tibshirani2020conformal}, and has only been included for the sake of completeness.

In this proof, we use the notion of \textit{weighted exchangeability} as defined in Section 3.2 of \cite{tibshirani2020conformal}.
\begin{definition}[Weighted exchangeability]\label{def:weighted_exch}
Random variables $V_1, \dots, V_n$ are said to be \textit{weighted exchangeable} with weight functions $w_1, \dots, w_n$, if the density $f$ of their joint distribution can be factorized as
\begin{align}
    f(v_1, \dots, v_n) = \prod_{i=1}^n w_i(v_i) g(v_1, \dots, v_n)
\end{align}
where $g$ is any function that does not depend on the ordering of its inputs, i.e. $g(v_{\sigma(1)}, \dots, v_{\sigma(n)}) = g(v_1, \dots, v_n)$ for any permutation $\sigma$ of $1, \dots, n$.
\end{definition}

\begin{lemma}\label{exchangeability_lemma}
Let $Z_i = (X_i, Y_i) \in \mathbb{R}^d \times \mathbb{R}$, $i=1,...,n+1$, be such that $\{(X_i, Y_i)\}_{i=1}^n \overset{\textup{i.i.d.}}{\sim}P^{\pi^b}_{X,Y}$ and $(X_{n+1}, Y_{n+1}) \sim P^{\pi^*}_{X,Y}$. Then $Z_1, \dots, Z_{n+1}$ are weighted exchangeable with weights $w_i \equiv 1$, $i\leq n$ and $w_{n+1}(X,Y) = \mathrm{d}P^{\pi^{*}}_{X,Y}/\mathrm{d}P^{\pi^{b}}_{X,Y}(X,Y)$.
\end{lemma}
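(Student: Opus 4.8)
\textbf{Proof plan for Lemma \ref{exchangeability_lemma}.}

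The plan is to write down the joint density of $(Z_1,\dots,Z_{n+1})$ and exhibit the weighted-exchangeability factorisation of Definition \ref{def:weighted_exch} directly. First I would note that, by independence, the joint density of $(Z_1,\dots,Z_{n+1})$ with respect to an appropriate product base measure factorises as
\begin{align*}
    f(z_1,\dots,z_{n+1}) = \left(\prod_{i=1}^n p_{\pi^b}(z_i)\right) p_{\pi^*}(z_{n+1}),
\end{align*}
where $p_{\pi^b}$ and $p_{\pi^*}$ denote the densities of $P^{\pi^b}_{X,Y}$ and $P^{\pi^*}_{X,Y}$ respectively (these exist, at least relative to $P^{\pi^b}_{X,Y}$, under the absolute-continuity hypothesis that $P^{\pi^*}(y\mid x)$ is absolutely continuous w.r.t.\ $P^{\pi^b}(y\mid x)$, since the covariate marginal $P_X$ is shared). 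The next step is to insert the Radon--Nikodym derivative: writing $w(x,y) \coloneqq \mathrm{d}P^{\pi^*}_{X,Y}/\mathrm{d}P^{\pi^b}_{X,Y}(x,y)$, we have $p_{\pi^*}(z_{n+1}) = w(z_{n+1})\, p_{\pi^b}(z_{n+1})$, so that
\begin{align*}
    f(z_1,\dots,z_{n+1}) = w(z_{n+1}) \prod_{i=1}^{n+1} p_{\pi^b}(z_i).
\end{align*}

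Now I would match this against Definition \ref{def:weighted_exch}: set $w_i \equiv 1$ for $i \leq n$, set $w_{n+1} \coloneqq w$, and set $g(z_1,\dots,z_{n+1}) \coloneqq \prod_{i=1}^{n+1} p_{\pi^b}(z_i)$. The function $g$ is a product over all $n+1$ arguments of the same fixed density $p_{\pi^b}$, hence is manifestly invariant under any permutation $\sigma$ of its inputs, i.e.\ $g(z_{\sigma(1)},\dots,z_{\sigma(n+1)}) = g(z_1,\dots,z_{n+1})$. Since $f = \big(\prod_{i=1}^{n+1} w_i(z_i)\big)\, g(z_1,\dots,z_{n+1})$ with $w_i$ as stated, this is exactly the required factorisation, so $Z_1,\dots,Z_{n+1}$ are weighted exchangeable with the claimed weights.

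The only genuinely non-routine point is justifying the existence of the densities and the Radon--Nikodym derivative $w(x,y)$, i.e.\ checking that $P^{\pi^*}_{X,Y} \ll P^{\pi^b}_{X,Y}$. This follows because the two joint laws share the same covariate marginal $P_X$ (the policy change only affects the conditional $P_{Y\mid X}$, as spelled out in the setup), and by assumption $P^{\pi^*}(y\mid x) \ll P^{\pi^b}(y\mid x)$ for $P_X$-almost every $x$; one then invokes the standard disintegration/Fubini argument to conclude absolute continuity of the joints and to identify $w(x,y)$ with $\mathrm{d}P^{\pi^*}_{Y\mid X}/\mathrm{d}P^{\pi^b}_{Y\mid X}(x,y)$. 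Everything else is bookkeeping, and the argument is a verbatim adaptation of \cite[Lemma 3]{tibshirani2020conformal} with "covariate shift" replaced by "policy-induced conditional shift". I would present this cleanly in three short displayed lines plus a sentence on absolute continuity, and then the lemma is proved.
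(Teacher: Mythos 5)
Your proof is correct and follows essentially the same route as the paper, which treats the lemma as a direct instance of Lemma 3 of \cite{tibshirani2020conformal}: factor the joint density by independence, absorb the $(n+1)$-th term via the Radon--Nikodym derivative $w = \mathrm{d}P^{\pi^*}_{X,Y}/\mathrm{d}P^{\pi^b}_{X,Y}$, and identify the symmetric factor $g$ as the product of behaviour-policy densities (equivalently, take $\otimes^{n+1}P^{\pi^b}_{X,Y}$ as the base measure so $g\equiv 1$). Your remark on why absolute continuity of the joints follows from the shared covariate marginal and the conditional assumption matches the paper's identification $w(x,y)=\mathrm{d}P^{\pi^*}_{Y\mid X}/\mathrm{d}P^{\pi^b}_{Y\mid X}(x,y)$, so nothing is missing.
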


\begin{proof}
The proof below is merely a verification that our proposed weights still retain the coverage guarantees and is mainly taken from \cite{tibshirani2020conformal}. Hence, we follow the same strategy as in \cite{tibshirani2020conformal}, with the exception that we have the weights as in Lemma \ref{exchangeability_lemma}, hence inducing a lot of simplifications. As in \cite{tibshirani2020conformal}, we assume for simplicity that $V_1, \dots, V_{n+1}$ are distinct almost surely, however the result holds in general case as well. We define $f$ as the joint distribution of the random variables $\{X_i, Y_i\}_{i=1}^{n+1}$. We also denote $E_z$ as the event of $\{Z_1, \dots, Z_{n+1}\}$ = $\{z_1, \dots, z_{n+1}\}$ and let $v_i = s(z_i) = s(x_i, y_i)$, then for each $i$:
\begin{align}
    \mathbbm{P}\{V_{n+1} = v_i| E_z\} = \mathbbm{P}\{Z_{n+1}=z_i|E_z\} = \frac{\sum_{\sigma:\sigma(n+1)=i}f(z_{\sigma(1)}, \dots, z_{\sigma(n+1)})}{\sum_{\sigma}f(z_{\sigma(1)}, \dots, z_{\sigma(n+1)})}
\end{align}

Now using the fact that $Z_1, \dots, Z_{n+1}$ are weighted exchangeable:

\begin{align}
    \frac{\sum_{\sigma:\sigma(n+1)=i}f(z_{\sigma(1)}, \dots, z_{\sigma(n+1)})}{\sum_{\sigma}f(z_{\sigma(1)}, \dots, z_{\sigma(n+1)})}&= \frac{\sum_{\sigma:\sigma(n+1)=i}\prod_{j=1}^{n+1}w_{j}(z_{\sigma(j)})g(z_{\sigma(1)}, \dots, z_{\sigma(n+1)})}{\sum_{\sigma}\prod_{j=1}^{n+1} w_{j}(z_{\sigma(j)})g(z_{\sigma(1)}, \dots, z_{\sigma(n+1)})}\label{eq:simply}\\ 
    &= \frac{w_{n+1}(z_i)g(z_{1}, \dots, z_{n+1})}{\sum_{j=1}^{n+1} w_{n+1}(z_{j})g(z_{1}, \dots, z_{n+1})}\nonumber\\
    &= p_i^w(z_{n+1}) \nonumber
\end{align}
where we recall that
\begin{align}
    p_i^{w}(x, y) \coloneqq \frac{w(X_i, Y_i)}{\sum_{j=1}^n w(X_j, Y_j) + w(x, y)}. \nonumber
\end{align}
We get simplifications in \eqref{eq:simply} due to the weights defined in Lemma \ref{exchangeability_lemma}, i.e. $w_i \equiv 1$ for $i \leq n$ and $w_{n+1}(x, y) = w(x, y) = \mathrm{d}P^{\pi^{*}}_{X,Y}/\mathrm{d}P^{\pi^{b}}_{X,Y}(x,y)$.
Next, just as in \cite{tibshirani2020conformal} we can view:
\begin{align}
    V_{n+1} = v_i| E_z \sim \sum_{i=1}^{n+1}p_i^w(z_{n+1}) \delta_{v_i}
\end{align}
which implies that:
$$
\mathbbm{P}\{V_{n+1} \leq \text{Quantile}_{\beta}(\sum_{i=1}^{n+1}p_i^w(z_{n+1}) \delta_{v_i}) | E_z\} \geq \beta.
$$
This is equivalent to 
$$
\mathbbm{P}\{V_{n+1} \leq \text{Quantile}_{\beta}(\sum_{i=1}^{n+1}p_i^w(Z_{n+1}) \delta_{v_i}) | E_z\} \geq \beta
$$
and, after marginalizing, one has
$$
\mathbbm{P}\{V_{n+1} \leq \text{Quantile}_{\beta}(\sum_{i=1}^{n+1}p_i^w(Z_{n+1}) \delta_{v_i})\} \geq \beta
$$
This is equivalent to the claim in Proposition \ref{coverage_theorem}.
\end{proof}

\subsection{Proof of Proposition \ref{prop2}}
The following proof is an adaptation of \cite[Proposition 1]{lei2020conformal} to our setting.

Before detailing the main proof, we introduce a preliminary result which will be used in the proof of Proposition \ref{prop2}.

\begin{lemma}\label{Aevent}
Let $\hat{w}(x,y)$ be an estimate of the weights $w(x,y) = \mathrm{d}P^{\pi^{*}}_{X,Y}/\mathrm{d}P^{\pi^{b}}_{X,Y}(x,y)$, and $$(\expb[\hat{w}(X,Y)^r])^{1/r} \leq M_r < \infty$$ for some $r \geq 2$. Let $(X_i, Y_i) \overset{\textup{i.i.d.}}{\sim} P^{\pi^b}_{X,Y}$ and $\mathcal{A}$ denote the event that 
\[
\sum_{i=1}^n \hat{w}(X_i, Y_i) \leq n/2.
\]
Then, 
\[
\mathbb{P}(\mathcal{A}) \leq \frac{c_1 M_r^2}{n}
\]
where $c_1$ is an absolute constant, and the probability is taken over $\{X_i, Y_i\}_{i=1}^n  \overset{\textup{i.i.d.}}{\sim} P^{\pi^b}_{X,Y}$.
\end{lemma}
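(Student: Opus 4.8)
\textbf{Proof proposal for Lemma \ref{Aevent}.}

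The plan is to bound $\Prob(\mathcal{A})$ by showing that $\sum_{i=1}^n \hat w(X_i,Y_i)$ concentrates around its mean, which is exactly $n$. First I would verify that $\expb[\hat w(X,Y)] = 1$. This does \emph{not} follow from $\hat w$ being an estimate of the true density ratio; rather, it is the key structural fact, and I would need to assume (or note that it holds by construction) that $\hat w$ is itself a valid density ratio, i.e.\ $\hat w(x,y) = \mathrm{d}\tilde P / \mathrm{d}P^{\pi^b}_{X,Y}(x,y)$ for some probability measure $\tilde P$ — indeed the Monte Carlo estimator in \eqref{weight-est} has this form. Granting this, $\expb[\hat w(X,Y)] = \int \hat w \, \mathrm{d}P^{\pi^b}_{X,Y} = \int \mathrm{d}\tilde P = 1$, so $\expb\left[\sum_{i=1}^n \hat w(X_i,Y_i)\right] = n$.

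The event $\mathcal{A}$ then says $\sum_{i=1}^n \hat w(X_i,Y_i) \leq n/2$, i.e.\ the sum deviates below its mean by at least $n/2$. I would write $S_n \coloneqq \sum_{i=1}^n (\hat w(X_i,Y_i) - 1)$, a sum of i.i.d.\ centered random variables, so that $\mathcal{A} = \{S_n \leq -n/2\}$. Since $r \geq 2$, the variance $\sigma^2 \coloneqq \Var_{\pi^b}(\hat w(X,Y))$ is finite and bounded by $\expb[\hat w(X,Y)^2] \leq M_2^2 \leq M_r^2$ (the last inequality by Jensen / monotonicity of $L^p$ norms in $p$). Applying Chebyshev's inequality,
\[
\Prob(\mathcal{A}) = \Prob(S_n \leq -n/2) \leq \Prob(|S_n| \geq n/2) \leq \frac{\Var(S_n)}{(n/2)^2} = \frac{n\sigma^2}{n^2/4} = \frac{4\sigma^2}{n} \leq \frac{4 M_r^2}{n}.
\]
This gives the claim with $c_1 = 4$.

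The main obstacle here is not the concentration argument itself — Chebyshev suffices since we only need an $O(M_r^2/n)$ bound — but rather pinning down that $\expb[\hat w(X,Y)] = 1$, which relies on $\hat w$ arising as a genuine density ratio (true for the estimator of Section \ref{sec:weights}) rather than as an arbitrary regression estimate; I would state this explicitly at the start of the proof. A secondary minor point is the passage $M_2 \leq M_r$ for $r \geq 2$, which is just the standard monotonicity of $p \mapsto (\expb[|\cdot|^p])^{1/p}$; I would invoke it in one line. If one wanted the sharper exponent-$r$ refinements that appear in \cite{lei2020conformal}, one would instead use the Marcinkiewicz–Zygmund or Rosenthal inequality to control $\expb[|S_n|^r]$ and then Markov's inequality at order $r$, but for the stated bound Chebyshev is enough.
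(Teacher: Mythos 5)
Your proposal is correct and takes essentially the same route as the paper's own proof: after centring the sum at its mean $n$, the paper likewise applies Chebyshev's inequality and bounds the second moment by $\expb[\hat{w}(X,Y)^2] \leq M_r^2$ (monotonicity of $L^p$ norms), arriving at the same $O(M_r^2/n)$ bound with a slightly cruder absolute constant. The only notable difference is how $\expb[\hat{w}(X,Y)]=1$ is secured: the paper assumes it ``WLOG'' via rescaling, which is legitimate because the conformal weights $p_i^{\hat{w}}$ are invariant to scaling of $\hat{w}$, whereas your appeal to $\hat{w}$ being an exact density ratio by construction is more fragile (the Monte Carlo estimator need not integrate to exactly one against $P^{\pi^b}_{X,Y}$) --- though you rightly flag this as the point needing an explicit assumption.
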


\subsubsection*{Proof of Lemma \ref{Aevent}}
The condition $\expb[\hat{w}(X, Y)^r] < \infty \implies \behprob(\hat{w}(X, Y)< \infty) = 1$ and $\expb[\hat{w}(X, Y)]<\infty$. WLOG assume $\expb[\hat{w}(X, Y)] = 1$. Recall that $p_{i}^{\hat{w}}(x, y) \coloneqq \frac{\hat{w}(X_i, Y_i)}{\sum_{i=1}^n \hat{w}(X_i, Y_i) + \hat{w}(x,y)}$, and therefore, $p_i^{\hat{w}}(x,y)$ are invariant to weight scaling. Since $\expbcal[\hat{w}(X_i, Y_i)]^2 \leq M_r^2$ and $\expbcal(\hat{w}(X_i, Y_i)) = 1$, using Chebyshev's inequality %
\begin{align}
    \p\left( \sum_{i=1}^n  \hat{w}(X_i, Y_i) \leq n/2 \right) =& \p\left( \sum_{i=1}^n  (\hat{w}(X_i, Y_i)-1) \leq -n/2 \right) \nonumber\\
    \leq& \p\left( |\sum_{i=1}^n  (\hat{w}(X_i, Y_i) -1)| \geq n/2 \right) \nonumber \\
    \leq& \frac{4}{n^2}\E \Bigg[\left( \sum_{i=1}^n \hat{w}(X_i, Y_i) - \E[\hat{w}(X_i, Y_i)] \right)^2\Bigg] \nonumber \\
    =& \frac{4}{n^2} \left\{ n \E |\hat{w}(X_1, Y_1) - \E[\hat{w}(X_1, Y_1)] |^2\right\}  \label{holder1}\\ 
    \leq& \frac{16}{n^2}  n \E |\hat{w}(X_1, Y_1)|^2  \label{holder2} \\
    \leq& \frac{c_1 M_r^2}{n} \nonumber
\end{align}
where to get from (\ref{holder1}) to (\ref{holder2}) we use:
\begin{align}
    \E |\hat{w}(X_1, Y_1) - \E[\hat{w}(X_1, Y_1)] |^2 &\leq 2 \E \left[\hat{w}(X_1, Y_1)^2 + \E[\hat{w}(X_1, Y_1)]^2 \right] \nonumber\\
    &\leq 4\E[\hat{w}(X_1, Y_1)^2]. \nonumber
\end{align}
\qed

We can now prove Proposition \ref{prop2}.
\begin{proof}
The condition $\expb[\hat{w}(X, Y)^r] < \infty \implies \behprob(\hat{w}(X, Y)< \infty) = 1$ and $\expb[\hat{w}(X, Y)]<\infty$. WLOG assume $\expb[\hat{w}(X, Y)] = 1$. Let $\tilde{P}^{\pi^*}_{X, Y}$ be a probability measure with 
\[
    \mathrm{d}\tilde{P}^{\pi^*}_{X, Y}(x,y) \coloneqq \hat{w}(x,y) \mathrm{d}P^{\pi^b}_{X, Y}(x,y)
\]
and $(\tilde{X}, \tilde{Y}) \sim \tilde{P}^{\pi^*}_{X,Y}$ that is independent of the data. By H\"older's inequality, 
\begin{align}
    \expatt[\hat{w}(\tilde{X}, \tilde{Y})] =& \int_{\tilde{x}, \tilde{y}} \frac{\mathrm{d}\tilde{P}^{\pi^*}(\tilde{x}, \tilde{y})}{\mathrm{d}P^{\pi^b}(\tilde{x}, \tilde{y})}\mathrm{d}\tilde{P}^{\pi^*}(\tilde{x}, \tilde{y}) \nonumber \\
    =& \expb[\hat{w}(X, Y)^2] \leq M_r^2 < \infty \nonumber 
\end{align}
Note using Proposition \ref{coverage_theorem} with $(\tilde{X}, \tilde{Y})$ denoting $(X_{n+1}, Y_{n+1})$ for simplicity 
\begin{align}
    &\mathbb{P}(\tilde{Y} \in \hat{C}(\tilde{X}, \tilde{Y})) \nonumber \\
    &\quad= \exptt \left[\mathbb{P}\left(s(\tilde{X}, \tilde{Y}) \leq \text{Quantile}_{1-\alpha}\left(\sum_{i=1}^n p_i^{\hat{w}}(\tilde{X}, \tilde{Y})\delta_{V_i} + p_{n+1}^{\hat{w}}(\tilde{X}, \tilde{Y})\delta_\infty \right)\mid \mathcal{E}(\tilde{V})\right)\right] \label{eq4}
\end{align}
where $\mathcal{E}(\tilde{V})$ denotes the unordered set of $V_1, \dots, V_{n+1}$. Marginalising over $\{(X_i, Y_i)\}_{i=1}^n$, we obtain
\begin{align}
    (\ref{eq4}) \leq \E\left(1-\alpha + \max_{i \in [n+1]} p_i^{\hat{w}}(\tilde{X}, \tilde{Y}) \right) \label{eq5}
\end{align}
where the expectation is over $\{(X_i, Y_i)\}_{i=1}^n \overset{\textup{i.i.d.}}{\sim} P^{\pi^b}_{X, Y}$ and $(\tilde{X}, \tilde{Y}) \sim \tilde{P}^{\pi^*}_{X, Y}$. Let $\mathcal{A}$ denote the event that 
\[
\sum_{i=1}^n \hat{w}(X_i, Y_i) \leq n/2.
\]
using Lemma \ref{Aevent} and $\E[\hat{w}(\tilde{X}, \tilde{Y})] \leq M_r^2 $, we get that
\begin{align}
    \E\left[\max_{i \in [n+1]} p_i^{\hat{w}}(\tilde{X}, \tilde{Y})\right] =& \E\left[ \frac{\max \{\hat{w}(\tilde{X}, \tilde{Y}), \max_i \hat{w}(X_i, Y_i) \}}{\hat{w}(\tilde{X}, \tilde{Y}) + \sum_{i=1}^n \hat{w}(X_i, Y_i) } \right] \nonumber \\
    \leq& \E\left[ \frac{\max \{\hat{w}(\tilde{X}, \tilde{Y}), \max_i \hat{w}(X_i, Y_i) \}}{\hat{w}(\tilde{X}, \tilde{Y}) + \sum_{i=1}^n \hat{w}(X_i, Y_i) } \mathbbm{1}_{\mathcal{A}^C} \right] + \mathbb{P}(\mathcal{A}) \nonumber \\
    \leq& \E\left[ \frac{2\max \{\hat{w}(\tilde{X}, \tilde{Y}), \max_i \hat{w}(X_i, Y_i) \}}{n} \mathbbm{1}_{\mathcal{A}^C} \right] + \frac{c_1 M_r^2}{n} \nonumber \\
    \leq& \frac{2}{n} \left( \E[\hat{w}(\tilde{X}, \tilde{Y})] + \E \max_i \hat{w}(X_i, Y_i) 
    \right) + \frac{c_1 M_r^2}{n} \nonumber \\
    \leq& \frac{2}{n}\left( \E[\hat{w}(\tilde{X}, \tilde{Y})] + \left(\sum_{i=1}^n \E[\hat{w}(X_i, Y_i)^r]\right)^{1/r}\right) + \frac{c_1 M_r^2}{n} \nonumber \\
    \leq& \frac{2}{n}\left(M_r^2 + n^{1/r}M_r \right) + \frac{c_1 M_r^2}{n}.  \nonumber
\end{align} 
This implies that 
\[
\ttar(Y \in \hat{C}(X)) \leq 1-\alpha + cn^{1/r-1}
\]
for some constant $c$ that only depends on $M_r$ and $r$.
Note that 
\begin{align}
    | \ttar(Y \in \hat{C}(X)) - \tarprob(Y \in \hat{C}(X)) | \leq d_{\textup{TV}}(\tilde{P}^{\pi^*}, P^{\pi^*})  \label{eq6}
\end{align}
where $d_{\textup{TV}}$ is the total variation norm which satisfies
\begin{align}
    d_{\textup{TV}}(\tilde{P}^{\pi^*}, P^{\pi^*}) =& \frac{1}{2} \int | \hat{w}(x,y)\mathrm{d}P^{\pi^b}(x,y) - \mathrm{d}P^{\pi^*}(x,y) | \nonumber \\
    =& \frac{1}{2} \int | \hat{w}(x,y)\mathrm{d}P^{\pi^b}(x,y) - w(x,y) \mathrm{d}P^{\pi^b}(x,y) | \nonumber\\
    =& \frac{1}{2} \expb[|\hat{w}(X,Y) - w(X,Y) |] = \Delta_w. \label{eq7}
\end{align}
Putting together (\ref{eq6}) and (\ref{eq7}), we get
\begin{align}
    \tarprob(Y \in \hat{C}(X)) \leq 1-\alpha + \Delta_w + cn^{1/r-1}.
\end{align}
For the lower bound, using Proposition \ref{coverage_theorem} we get that 
\begin{align}
    \p_{(\tilde{X}, \tilde{Y}) \sim \tilde{P}^{\pi^*}_{X,Y}}(\tilde{Y} \in \hat{C}(\tilde{X}, \tilde{Y})) =& \mathbb{P}\left(s(\tilde{X}, \tilde{Y}) \leq \text{Quantile}_{1-\alpha}\left(\sum_{i=1}^n p_i^{\hat{w}}(\tilde{X}, \tilde{Y})\delta_{V_i} + p_{n+1}^{\hat{w}}(\tilde{X}, \tilde{Y})\delta_\infty \right)\right) \nonumber\\
    \geq 1-\alpha.
\end{align}
Using (\ref{eq6}) we thus obtain 
\begin{align}
    \tarprob(Y \in \hat{C}(X)) \geq& \ttar(Y \in \hat{C}(X)) - d_{TV}(\tilde{P}^{\pi^*}, P^{\pi^*}) \nonumber \\
    \geq& 1-\alpha - \Delta_w.
\end{align}
\end{proof}

\subsection{Proof of Proposition \ref{conditional-res}}
For notational convenience, we suppress the subscripts $m$ and $n$ in $\hat{q}, \hat{w}, \hat{C}$. Moreover, we use $\hat{w}_i$ to denote $\hat{w}(X_i, Y_i)$ and $\eta(x, y)$ to denote $\textup{Quantile}_{1-\alpha}(\sum_{i=1}^n \hat{p}_i(x, y)\delta_{V_i} + \hat{p}_{n+1}(x, y) \delta_{\infty})$.

\begin{proof}

We use $(\tilde{X}, \tilde{Y}) \sim P^{\pi^*}_{X,Y}$ in place of $(X_{n+1}, Y_{n+1})$ and let $\epsilon < r/2$. By the definition of $\hat{C}(\tilde{X})$, we directly have
\begin{align}
    \p(\tilde{Y} \in \hat{C}(\tilde{X}) \mid \tilde{X}) =& \p(s(\tilde{X}, \tilde{Y}) \leq \eta(\tilde{X}, \tilde{Y}) \mid \tilde{X}) \nonumber \\
    \geq& \p(s^*(\tilde{X}, \tilde{Y}) \leq \eta(\tilde{X}, \tilde{Y}) - H(\tilde{X}) \mid \tilde{X}) \label{dr-e1}
\end{align}
where $ s^*(\tilde{X}, \tilde{Y}) \coloneqq \max \{\tilde{Y} - q_{\alpha_{hi}}(\tilde{X}), q_{\alpha_{lo}}(\tilde{X}) - \tilde{Y} \}$ and the probability is taken over $\{(X_i, Y_i)\}_{i=1}^n\overset{\textup{i.i.d.}}{\sim} P^{\pi^b}_{X, Y}$ and $\tilde{Y} \sim P^{\pi^*}_{Y|X=\tilde{X}}$. We then get 
\begin{align}
    \eqref{dr-e1} \geq& \p(s^*(\tilde{X}, \tilde{Y}) \leq -\epsilon - H(\tilde{X}) \mid \tilde{X}) - \p(\eta(\tilde{X}, \tilde{Y}) < -\epsilon \mid \tilde{X}) \nonumber \\
    \geq& \p(s^*(\tilde{X}, \tilde{Y}) \leq -\epsilon - H(\tilde{X}) \mid \tilde{X}) \left(\mathbbm{1}(H(\tilde{X}) \leq \epsilon) + \mathbbm{1}(H(\tilde{X}) > \epsilon)\right) - \p(\eta(\tilde{X}, \tilde{Y}) < -\epsilon \mid \tilde{X}) \label{dr-eq2} \\
    \geq& \left(\p(s^*(\tilde{X}, \tilde{Y}) \leq 0 \mid \tilde{X}) - b_2 \{ \epsilon + H(\tilde{X})\}\right)\mathbbm{1}(H(\tilde{X}) \leq \epsilon)\nonumber \\ 
    & + \p(s^*(\tilde{X}, \tilde{Y}) \leq -\epsilon - H(\tilde{X}) \mid \tilde{X})\mathbbm{1}(H(\tilde{X}) > \epsilon) - \p(\eta(\tilde{X}, \tilde{Y}) < -\epsilon \mid \tilde{X})   \label{dr-eq3} \\
    \geq& \p(s^*(\tilde{X}, \tilde{Y}) \leq 0 \mid \tilde{X})\mathbbm{1}(H(\tilde{X}) \leq \epsilon) - b_2 \{ \epsilon + H(\tilde{X})\mathbbm{1}(H(\tilde{X}) \leq \epsilon)\} \nonumber \\ 
    &+ \left(\p(s^*(\tilde{X}, \tilde{Y}) \leq 0 \mid \tilde{X}) - \p(s^*(\tilde{X}, \tilde{Y}) \in (-\epsilon - H(\tilde{X}), 0))\right)\mathbbm{1}(H(\tilde{X}) > \epsilon) \nonumber\\
    &- \p(\eta(\tilde{X}, \tilde{Y}) < -\epsilon \mid \tilde{X})\nonumber\\
    \geq& \p(s^*(\tilde{X}, \tilde{Y})\leq 0 \mid \tilde{X} ) - b_2 \{ \epsilon + H(\tilde{X}) \mathbbm{1}(H(\tilde{X}) \leq \epsilon ) \} - \mathbbm{1}(H(\tilde{X}) > \epsilon)\nonumber\\
    &- \p(\eta(\tilde{X}, \tilde{Y}) < -\epsilon \mid \tilde{X}) \label{dr-eq4-add}
\end{align}
where, to get from \eqref{dr-eq2} to \eqref{dr-eq3}, we use the condition $2\epsilon < r$ and Assumption 2

\begin{align}
    \eqref{dr-eq4-add} \geq& \p(s^*(\tilde{X}, \tilde{Y})\leq 0 \mid \tilde{X} ) -  b_2 \{ \epsilon + H(\tilde{X})\} - \mathbbm{1}(H(\tilde{X}) > \epsilon) - \p(\eta(\tilde{X}, \tilde{Y}) < -\epsilon \mid \tilde{X}) \label{dr-eq4} \\
    =& 1 - \alpha -  b_2 \{ \epsilon + H(\tilde{X})\} - \mathbbm{1}(H(\tilde{X}) > \epsilon) - \p(\eta(\tilde{X}, \tilde{Y}) < -\epsilon \mid \tilde{X}). \label{dr-eq5}
\end{align}
Next, we derive an upper bound on $\p(\eta(\tilde{X}, \tilde{Y}) < -\epsilon \mid \tilde{X})$. Let $G$ denote the CDF of the random distribution $\sum_{i=1}^n \hat{p}_i(x, y)\delta_{V_i} + \hat{p}_{n+1}(x, y) \delta_{\infty}$. Then, $\eta(\tilde{X}, \tilde{Y}) < -\epsilon$ implies $G(-\epsilon) \geq 1-\alpha$ and thus $\p(\eta(\tilde{X}, \tilde{Y}) < -\epsilon \mid \tilde{X}) \leq \p(G(-\epsilon) \geq 1-\alpha \mid \tilde{X})$ a.s.
Moreover, we have
\begin{align}
    \p(G(-\epsilon) \geq 1 - \alpha \mid \tilde{X}) =& \p \left( \frac{\sum_{i=1}^n \hat{w}_i \mathbbm{1}(V_i \leq - \epsilon) }{\sum_{i=1}^n \hat{w}_i + \hat{w}(\tilde{X}, \tilde{Y}) } \geq 1 - \alpha \mid \tilde{X} \right) \nonumber \\
    \leq& \p \left( \frac{\sum_{i=1}^n \hat{w}_i \mathbbm{1}(V_i \leq - \epsilon) }{\sum_{i=1}^n \hat{w}_i} \geq 1 - \alpha \mid \tilde{X} \right) \label{dr-eq6} \\ 
    =& \p \left( \frac{\sum_{i=1}^n \hat{w}_i \mathbbm{1}(V_i \leq - \epsilon) }{\sum_{i=1}^n \hat{w}_i} \geq 1 - \alpha \right) \label{dr-eq1}
\end{align}
where, to get from \eqref{dr-eq6} to \eqref{dr-eq1} we use the independence of $\{(X_i, Y_i)\}_{i=1}^n$ and $\tilde{X}$.
Now we observe that
\begin{align}
    \frac{\sum_{i=1}^n \hat{w}_i \mathbbm{1}(V_i \leq - \epsilon) }{n} =& \frac{\sum_{i=1}^n (\hat{w}_i - w_i) \mathbbm{1}(V_i \leq - \epsilon) }{n} + \frac{\sum_{i=1}^n w_i \mathbbm{1}(V_i \leq - \epsilon) }{n}. \nonumber
\end{align}
As $n\rightarrow \infty$, the strong law of large numbers yields
\begin{align}
    \left| \frac{\sum_{i=1}^n (\hat{w}_i - w_i) \mathbbm{1}(V_i \leq - \epsilon) }{n}\right| &\overset{a.s.}{\longrightarrow} \left|\expb\left[ (\hat{w}(X, Y) - w(X,Y))\mathbbm{1}(s(X,Y) \leq -\epsilon)\right]\right| \nonumber \\
    &\leq \expb\left[ |\hat{w}(X, Y) - w(X,Y)|\mathbbm{1}(s(X,Y) \leq -\epsilon)\right] \nonumber \\
    &\leq \expb\left[ |\hat{w}(X, Y) - w(X,Y)|\right] \overset{m \rightarrow \infty}{\longrightarrow} 0
\end{align}
from Assumption 1 and
\begin{align}
    \frac{\sum_{i=1}^n w_i \mathbbm{1}(V_i \leq - \epsilon) }{n} \overset{a.s.}{\longrightarrow} \expb[w(X,Y) \mathbbm{1}(s(X,Y) \leq -\epsilon)] = \tarprob(s(X,Y) \leq -\epsilon).
\end{align}
Using the triangle inequality,
\begin{align}
    \tarprob(s(X,Y) \leq -\epsilon) &\leq \tarprob(s^*(X,Y) \leq -\epsilon/2) + \p(H(X) \geq \epsilon/2) \label{dr-eq7} \\
    &\leq \tarprob(s^*(X,Y) \leq 0) - \epsilon b_1/2 + 2^k\mathbb{E}[H^k(X)]/\epsilon^k \label{dr-eq8} \\
    &= 1- \alpha - \epsilon b_1/2 + 2^k\mathbb{E}[H^k(X)]/\epsilon^k \overset{m \rightarrow \infty}{\longrightarrow} 1- \alpha - \epsilon b_1/2. \nonumber
\end{align}
To get from \eqref{dr-eq7} to \eqref{dr-eq8}, we use Assumption 2 and Markov's inequality. Similarly, we have

\begin{align}
    \frac{\sum_{i=1}^n \hat{w}_i}{n} =& \frac{\sum_{i=1}^n (\hat{w}_i - w_i)}{n} + \frac{\sum_{i=1}^n w_i }{n} \nonumber
\end{align}
so, as $n \rightarrow \infty$, 
\begin{align}
    \left| \frac{\sum_{i=1}^n (\hat{w}_i - w_i) }{n}\right| &\overset{a.s.}{\longrightarrow} \left|\expb\left[ (\hat{w}(X, Y) - w(X,Y))\right]\right| \nonumber\\
    &\leq \expb\left[ |\hat{w}(X, Y) - w(X,Y)|\right] \overset{m \rightarrow \infty}{\longrightarrow} 0,
\end{align}
and
\begin{align}
    \frac{\sum_{i=1}^n w_i }{n} \overset{a.s.}{\longrightarrow} \expb[w(X,Y)] = 1.
\end{align}

Putting this all together using the continuous mapping theorem, we get that, almost surely,

\begin{align}
\lim_{m \rightarrow \infty} \lim_{n \rightarrow \infty} \frac{\sum_{i=1}^n \hat{w}_i \mathbbm{1}(V_i \leq - \epsilon) }{\sum_{i=1}^n \hat{w}_i} = \lim_{m \rightarrow \infty} \lim_{n \rightarrow \infty} \frac{\sum_{i=1}^n \hat{w}_i \mathbbm{1}(V_i \leq - \epsilon)/n }{\sum_{i=1}^n \hat{w}_i/n} = 1- \alpha - \epsilon b_1/2.
\end{align}
Since convergence almost surely implies convergence in probability, we have
\begin{align}
    \lim_{m \rightarrow \infty} \lim_{n \rightarrow \infty} \p \left( \frac{\sum_{i=1}^n \hat{w}_i \mathbbm{1}(V_i \leq - \epsilon) }{\sum_{i=1}^n \hat{w}_i} \geq 1 - \alpha \right) = 0.
\end{align}
This implies that, for any $\epsilon > 0$, $\lim_{m \rightarrow \infty} \lim_{n \rightarrow \infty} \p(\eta(\tilde{X}, \tilde{Y}) < -\epsilon \mid \tilde{X}) = 0$ almost surely.

Using Markov's inequality and Assumption 3
\begin{align}
    \p(H(X) > \epsilon) \leq \mathbb{E}[H^k(X)]/\epsilon^k \overset{m \rightarrow \infty}{\longrightarrow} 0.
\end{align}
So as $m\rightarrow \infty$, $H(X)\overset{\mathcal{P}}{\rightarrow} 0$. Similarly, $\mathbbm{1}(H(X) > \epsilon) \overset{\mathcal{P}}{\rightarrow} 0$ as $m\rightarrow \infty$.

Recall (using \ref{dr-eq5}) that, for any $\epsilon \in (0, r/2)$, almost surely,
\begin{align}
    \p(\tilde{Y} \in \hat{C}(\tilde{X}) \mid \tilde{X}) - (1-\alpha -b_2 \epsilon) \geq - b_2 H(\tilde{X}) - \mathbbm{1}(H(\tilde{X}) > \epsilon) - \p(\eta(\tilde{X}, \tilde{Y}) < -\epsilon \mid \tilde{X}).
\end{align}
For given $t > 0$, pick $\epsilon < \min(r/2, t/2b_2)$. Then,
\begin{align}
    \p(\tilde{Y} \in \hat{C}(\tilde{X}) \mid \tilde{X}) - (1-\alpha -t/2) \geq - b_2 H(\tilde{X}) - \mathbbm{1}(H(\tilde{X}) > \epsilon) - \p(\eta(\tilde{X}, \tilde{Y}) < -\epsilon \mid \tilde{X}). \label{dr-eq10}
\end{align}
Each term on the right hand side of \eqref{dr-eq10} converges in probability to 0 as $m, n \rightarrow \infty$, and therefore using continuous mapping theorem  
$$ 
b_2 H(\tilde{X}) + \mathbbm{1}(H(\tilde{X}) > \epsilon) + \p(\eta(\tilde{X}, \tilde{Y}) < -\epsilon \mid \tilde{X}) \overset{\mathcal{P}}{\rightarrow} 0.
$$
This implies
\begin{align}
    &\p(\p(\tilde{Y} \in \hat{C}(\tilde{X}) \mid \tilde{X}) \leq 1-\alpha -t) \nonumber\\
    &\quad \leq \p(b_2 H(\tilde{X}) + \mathbbm{1}(H(\tilde{X}) > \epsilon) + \p(\eta(\tilde{X}, \tilde{Y}) < -\epsilon \mid \tilde{X}) \geq t/2) \rightarrow 0. \nonumber
\end{align}
Therefore, 
\begin{align}
    \lim_{m \rightarrow \infty} \lim_{n \rightarrow \infty} \p(\p(\tilde{Y} \in \hat{C}(\tilde{X}) \mid \tilde{X}) \leq 1-\alpha -t) = 0.
\end{align}
\end{proof}

\newpage

\section{Conformal Off-Policy Prediction (COPP)}

\subsection{Further comments on the differences between \cite{lei2020conformal} and COPP}\label{sec:comp_lc}
In this subsection, we elaborate on the differences between our work and \cite{lei2020conformal}.

Firstly, \cite{lei2020conformal} consider a setup in which the distribution of $X$ is shifted, and construct intervals on the outcome under a specific (deterministic) action, i.e.\ $Y(a)$. In contrast, we consider a setup in which the distribution of $Y|X$ is shifted due to a change in the policy which is non trivial, and construct bounds on the outcome under this new policy (which could be stochastic). Additionally, since the theory in our methodology relies on the ratio of the joint distribution $P_{X,Y}$, our framework can be straightforwardly extended to the case where both, the conditional $P_{Y|X}$ and the covariate distribution $P_X$ shift.

Secondly, as already mentioned in section \ref{sec:related_work}, \cite{lei2020conformal} can only be applied to the case where we have a deterministic target policy and a discrete action space, whereas COPP generalizes to the stochastic policy and continuous action space. This limitation of \cite{lei2020conformal} can be partially addressed by employing the ``\textit{union method}'' as described in the main text, which consists of constructing CP intervals for each action separately before taking the union of the intervals. However, we showed in our experiments that this leads to overly conservative intervals i.e. coverage above the required $1-\alpha$ in Table \ref{tab:coverage_toy}. This is because the predictive interval does not depend on the target policy, since every action is treated identically when taking the union. This approach is moreover unsuitable for continuous action spaces, whereas COPP applies without modification.

Thirdly, as stated in in section \ref{sec:related_work}, even in the case when we only consider deterministic target policies, there is an important methodological difference between COPP and \cite{lei2020conformal}. \cite{lei2020conformal} construct the intervals on $Y(a)$ by only using calibration data with $A=a$ (see eq. 3.4 in \cite{lei2020conformal}). In contrast, it can be shown that COPP uses the entire calibration data when constructing intervals on $Y(a)$. This is a consequence of integrating out the actions in the weights $w(x, y)$ (sec \ref{sec:weights}). This empirically leads to smaller variance in coverage compared to \cite{lei2020conformal} as evidenced by the experimental results in \ref{subsec:comp_lc}.

Finally, in our paper we are \emph{not} interested in a linear combination of the $Y(a)$ as in \cite{lei2020conformal}, who consider the linear combination of the form $Y(1)-Y(0)$. Instead, as described in section \ref{sec:problem_setup}, we are interested in the outcome $Y$ under the new target policy $\pi^*$ (sometimes denoted as $Y(\pi^*)$ in the literature), which cannot be expressed as a linear combination of $Y(a)$. As a result, there does not appear to be a straightforward application of \cite[Section 4.3]{lei2020conformal} to our setup which relies on the linear combination assumption to be applicable.

\begin{figure*}[t]
    \centering
    \includegraphics[width=0.45\textwidth, height=0.3\textwidth]{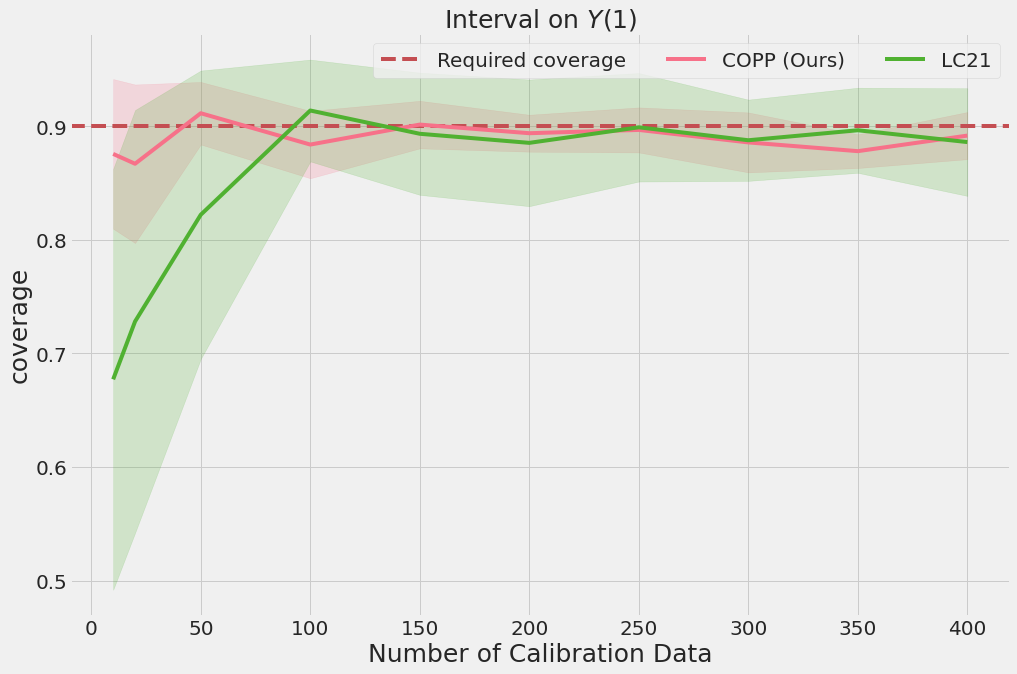}
    \includegraphics[width=0.45\textwidth, height=0.3\textwidth]{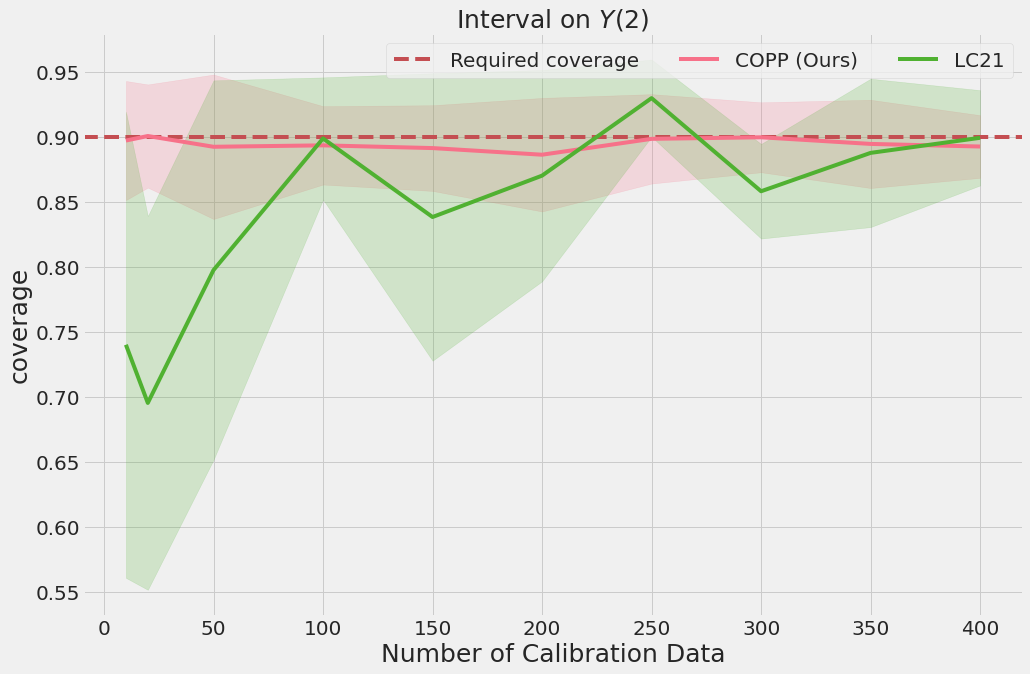}\\
    \includegraphics[width=0.45\textwidth, height=0.3\textwidth]{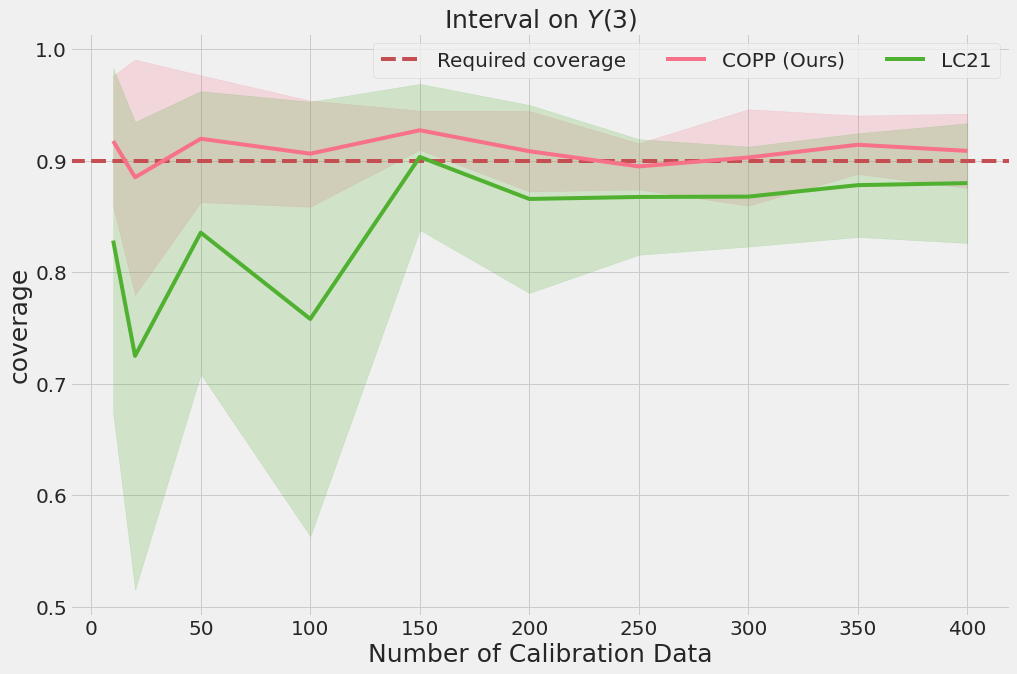}
    \includegraphics[width=0.45\textwidth, height=0.3\textwidth]{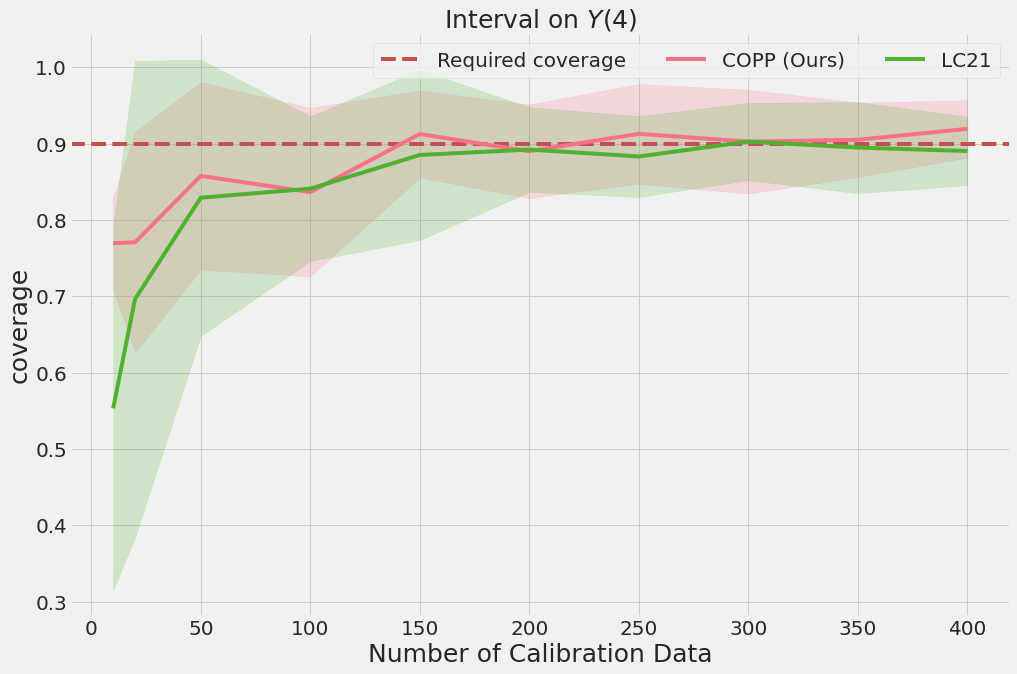}
    \caption{Results for synthetic data experiment with $\pi^b = \pi_{0.3}$ and deterministic target policies.}
    \label{fig:comp_lc}
\end{figure*}
\subsection{Comparison with \cite{lei2020conformal} on deterministic target policies.} \label{subsec:comp_lc}
In order to further clarify the distinction between COPP and \cite{lei2020conformal}, we conducted additional experiments when the target policy is deterministic i.e. $\pi^*(A|X) = \mathbbm{1}\{A=a\}$. In the main text we modified \cite{lei2020conformal} to our setting of stochastic policies by constructing the conformal intervals through the union of the CP sets across the actions. Here we aim to apply COPP to the setting of \cite{lei2020conformal}, i.e. deterministic target policy.

As mentioned in in the main text, given that we are integrating out the action in Eq. \ref{weight-est}, we are essentially able to use the full dataset when constructing the CP intervals. To see this explicitly, consider the case where $Y \mid X, A$ is a normal random variable (as in our toy experiment). In this case, it can be straightforwardly shown that the weights $w(x_i, y_i)$ will be non-zero, and therefore, when constructing the COPP intervals using \eqref{score-dist-pshift}, we are able to use all the calibration datapoints.

This is contrary to \cite{lei2020conformal}, who only consider calibration data with $A=a$, when constructing the CP intervals for $Y(a)$. Below, we use the same experimental setup as our toy experiment in section \ref{sec:exp_toy} (see section \ref{sec:toy_experiments_descrip} for more details) with the difference here that we now consider deterministic target policies. In figure \ref{fig:comp_lc} we plot the coverage for given deterministic target policies against the number of calibration datapoints. In this figure, we refer to the methodology of \cite{lei2020conformal} as \emph{LC21}. Here, we use the behavioural policy $\pi_{0.3}$ and a deterministic target policy which takes a single fixed action $a \in \{1, 2, 3, 4\}$ at test time. In the title of each subfigure, $Y(a)$ corresponds to the outcome for the target policy $\pi^*(A=a\mid X) = \mathbbm{1}(A=a)$.

\textbf{Results}: We first note in Figure \ref{fig:comp_lc} that the coverage of COPP intervals has a lower variance than \cite{lei2020conformal}. Given that COPP is able to use all the data when constructing the CP intervals, as opposed to \cite{lei2020conformal} which only uses a subset, our bounds have lower variance while also attaining the coverage guarantees. We observe this difference particularly in the case when we have little calibration data. Given that \cite{lei2020conformal} have to split the data into $4$ different splits (we have 4 different actions), the calibration data for each action is relatively small, whereas we are able to use the whole dataset to construct our CP intervals.

\subsection{Motivation of using stochastic policies for bandits}
One of the key difference between our method and that of \cite{lei2020conformal} is that our method can be applied to the setting where the target policy is stochastic. In many settings, deterministic target policies might not be applicable such as in the settings of recommendation systems or RL where exploration is needed \citep{swaminathan2016off, su2020doubly}. For example, COPP can be used to compare different recommendation systems given some logged data. We explore this application in our MSR experiments where the target policies correspond to different recommendation systems which are, by default, stochastic. Other applications which also make use of stochastic policies bandit problems can be found in \cite{su2020doubly, farajtabar2018more}.

\subsection{COPP for Group-balanced coverage}\label{sec:grp-bal}
As \cite{conf-bates} point out, we may want predictive intervals that have same error rates across multiple different groups. Using our example of a recommendation system, we may want the predictive intervals to have same coverage across male and female users. 

Formally, this problem can be expressed as follows. Let $\Omega = \{\Omega_1, \cdots, \Omega_k \}$ be subsets of $\mathcal{X} \times \mathcal{Y}$ with $\tarprob((X,Y) \in \Omega_j) > 0$ for $j\in \{1, \dots, k\}$. We would like to construct  predictive intervals $\hat{C}_n^\Omega$ which satisfy 
\begin{align}
    \tarprob(Y \in \hat{C}_n^\Omega(X) \mid (X, Y) \in  \Omega_j) \geq 1-\alpha \hspace{0.2cm} \textup{for all $j\in \{1, \dots, k\}$.} \nonumber
\end{align}
CP offers us the ability to construct such intervals $\hat{C}_n^\Omega$, by simply running algorithm \ref{cp_covariate_shift} (main text) on each group separately. This has been visualized in figure \ref{fig:grps}. 

\begin{figure}[!htp]
    \centering
    \includegraphics[height=0.25\textwidth]{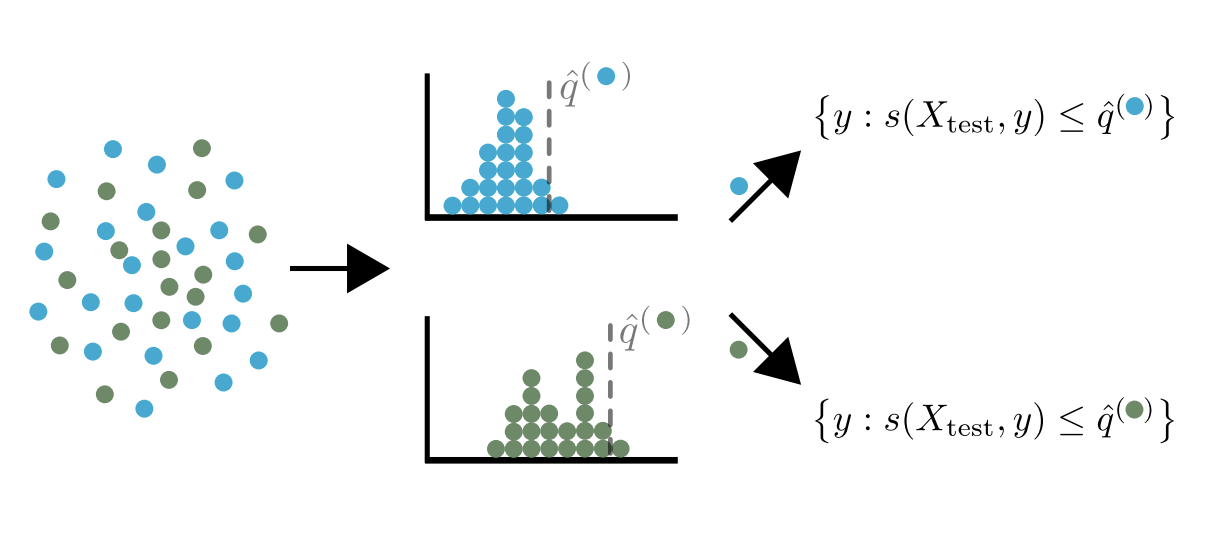}
    \caption{Figure taken from \cite{conf-bates}. To achieve group-balanced coverage, we simply run conformal prediction separately on each group.}
    \label{fig:grps}
\end{figure}
Formally, this procedure can be described as follows. We group scores into different groups according to each subset.
\begin{align}
    \{(X_i^{\Omega_j}, Y_i^{\Omega_j})\}_{i = 1}^{n_j} &\coloneqq \{(X_i, Y_i): (X_i, Y_i) \in \Omega_j\}_{i = 1}^{n} \hspace{0.2cm} \textup{and,} \nonumber \\
    V_i^{\Omega_j} &\coloneqq (X_i^{\Omega_j}, Y_i^{\Omega_j}) \nonumber
\end{align}
Then, within each subset, we calculate the conformal quantile, 
\begin{align}
    \eta^{\Omega_j}(x, y) \coloneqq \textup{Quantile}_{1-\alpha}( \hat{F}^{\Omega_j}_n(x, y)) \nonumber
\end{align}
where,
\begin{align}
    \hat{F}^{\Omega_j}_n(x, y) &\coloneqq \sum_{i=1}^{n_j} p_i^{\Omega_j} (x, y)\delta_{V_i^{\Omega_j}} + p_{n+1}^{\Omega_j}(x,y)\delta_\infty  \hspace{0.2cm} \textup{where,} \nonumber \\
    p_i^{\Omega_j} (x, y) &\coloneqq \frac{w(X^{\Omega_j}_i, Y^{\Omega_j}_i)}{\sum_{i=1}^{n_j} w(X^{\Omega_j}_i, Y^{\Omega_j}_i) + w(x,y)} \nonumber\\
    p_{n+1}^{\Omega_j} (x, y) &\coloneqq \frac{w(x,y)}{\sum_{i=1}^{n_j} w(X^{\Omega_j}_i, Y^{\Omega_j}_i) + w(x,y)} \nonumber
\end{align}
Next, we construct the set $\hat{C}_n^{\Omega}$ as follows:
\begin{align}
    \hat{C}_n^{\Omega}(x^{test}) &\coloneqq \bigcup_{j=1}^k \hat{C}_n^{\Omega_j}(x^{test}) \hspace{0.2cm} \textup{where,} \nonumber \\
    \hat{C}_n^{\Omega_j}(x^{test}) &\coloneqq \{y:  (x^{test}, y) \in \Omega_j \textup{ and } s(x^{test},y) \leq \eta^{\Omega_j}(x^{test}, y)  \}. \nonumber \\
\end{align}
\begin{proposition}[Coverage guarantee for class-balanced conformal prediction]\label{prop:grp-balanced-cp}
Let $\Omega = \{\Omega_1, \cdots, \Omega_k \}$ be subsets of $\mathcal{X} \times \mathcal{Y}$ with $\tarprob((X,Y) \in \Omega_j) > 0$ for $j\in \{1, \dots, k\}$. Then, the set $\hat{C}_n^{\Omega}$ defined above satisfies the coverage guarantee 
\begin{align}
    \tarprob(Y \in \hat{C}_n^\Omega(X) \mid (X, Y) \in  \Omega_j) \geq 1-\alpha \hspace{0.2cm} \textup{for all $j\in \{1, \dots, k\}$.} \nonumber
\end{align}
\end{proposition}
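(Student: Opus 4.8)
The plan is to reduce the statement to the marginal coverage guarantee already established in Proposition \ref{coverage_theorem}, applied separately within each group $\Omega_j$ to the conditional distributions $P^{\pi^b}_{X,Y}(\cdot\mid\Omega_j)$ and $P^{\pi^*}_{X,Y}(\cdot\mid\Omega_j)$. First I would observe that, by construction, $\hat{C}_n^{\Omega}(x)\supseteq\hat{C}_n^{\Omega_j}(x)$ for every $j$ and every $x$, so it suffices to prove $\tarprob(Y\in\hat{C}_n^{\Omega_j}(X)\mid (X,Y)\in\Omega_j)\geq 1-\alpha$. Moreover, on the event $\{(X,Y)\in\Omega_j\}$ the membership constraint in the definition of $\hat{C}_n^{\Omega_j}$ is automatically satisfied by the test point, so there $Y\in\hat{C}_n^{\Omega_j}(X)$ is equivalent to $s(X,Y)\leq\eta^{\Omega_j}(X,Y)$.

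The key observation is that the weighted empirical distribution $\hat{F}_n^{\Omega_j}$ is scale invariant in the weight function: the probabilities $p_i^{\Omega_j}$ and $p_{n+1}^{\Omega_j}$ are unchanged if $w(\cdot,\cdot)$ is replaced by $c\,w(\cdot,\cdot)$ for any constant $c>0$, since $c$ cancels between numerator and denominator. Now, assuming as in Proposition \ref{coverage_theorem} that $P^{\pi^*}(y\mid x)$ is absolutely continuous with respect to $P^{\pi^b}(y\mid x)$, the measures $P^{\pi^*}_{X,Y}$ and $P^{\pi^b}_{X,Y}$ share the common marginal $P_X$, hence $P^{\pi^*}_{X,Y}\ll P^{\pi^b}_{X,Y}$; since $P^{\pi^b}((X,Y)\in\Omega_j)>0$ this passes to the conditioned measures, and the Radon--Nikodym derivative of $P^{\pi^*}_{X,Y}(\cdot\mid\Omega_j)$ with respect to $P^{\pi^b}_{X,Y}(\cdot\mid\Omega_j)$ is $(x,y)\mapsto\frac{P^{\pi^b}((X,Y)\in\Omega_j)}{P^{\pi^*}((X,Y)\in\Omega_j)}\,w(x,y)$, i.e.\ a strictly positive constant times $w$. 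Combined with scale invariance, this means that running Algorithm \ref{cp_covariate_shift} with weights $w$ on the calibration data restricted to $\Omega_j$ produces exactly the intervals one would obtain using the correct density ratio between the conditioned source and target distributions.

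With this in hand I would invoke Proposition \ref{coverage_theorem} within group $\Omega_j$. The only wrinkle is that the number $n_j$ of calibration points falling in $\Omega_j$ is random rather than fixed, which I would handle by conditioning on $\{n_j=m\}$: given this event the points $\{(X_i^{\Omega_j},Y_i^{\Omega_j})\}_{i=1}^m$ are i.i.d.\ from $P^{\pi^b}_{X,Y}(\cdot\mid\Omega_j)$, and since the test point is drawn independently of the calibration data, its conditional law given $(X,Y)\in\Omega_j$ is still $P^{\pi^*}_{X,Y}(\cdot\mid\Omega_j)$ and remains independent of $m$. Proposition \ref{coverage_theorem} (with sample size $m$, source $P^{\pi^b}_{X,Y}(\cdot\mid\Omega_j)$, target $P^{\pi^*}_{X,Y}(\cdot\mid\Omega_j)$, and weight function $w$, a rescaling of the true conditional density ratio) then yields $\tarprob(s(X,Y)\leq\eta^{\Omega_j}(X,Y)\mid (X,Y)\in\Omega_j,\,n_j=m)\geq 1-\alpha$ for every $m\geq 1$; the case $m=0$ is trivial since then $\hat{F}_n^{\Omega_j}$ is the point mass at $\infty$, so $\hat{C}_n^{\Omega_j}(x)=\mathcal{Y}$. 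Taking the expectation over $m$ (a function of the calibration data only, hence independent of the test point) gives $\tarprob(Y\in\hat{C}_n^{\Omega_j}(X)\mid (X,Y)\in\Omega_j)\geq 1-\alpha$, and combined with the first paragraph this completes the proof. The main point requiring care is this conditioning-on-$n_j$ argument together with the independence bookkeeping it needs; the rest is a direct reduction to results already proved.
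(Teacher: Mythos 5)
Your proof is correct and follows essentially the same route as the paper's: reduce to $\hat{C}_n^{\Omega_j}$, note that on $\{(X,Y)\in\Omega_j\}$ coverage is just the score condition, use scale invariance of the conformal weights to identify $w$ (up to a positive constant) with the density ratio of the $\Omega_j$-restricted target and behaviour measures, and invoke Proposition \ref{coverage_theorem} within the group. Your additional bookkeeping—conditioning on the random group size $n_j=m$, treating the restricted calibration points as i.i.d.\ from the conditioned behaviour distribution, and handling $n_j=0$—is a more careful rendering of steps the paper's proof leaves implicit, not a different argument.
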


\paragraph{Proof of Proposition \ref{prop:grp-balanced-cp}}

\begin{align}
    &\tarprob(Y \in \hat{C}_n^\Omega(X) \mid (X, Y) \in  \Omega_j) \nonumber\\
    &\geq \tarprob(Y \in \hat{C}_n^{\Omega_j}(X) \mid (X, Y) \in  \Omega_j) \nonumber \\
    &\geq \tarprob( (X, Y) \in \Omega_j: s(X,Y) \leq  \eta^{\Omega_j}(X, Y) \mid (X, Y) \in \Omega_j) \label{eq:prop-gp-bal}  
\end{align}
Define the measure $P^{j}_{X, Y}$ by restricting $P^{\pi^*}_{X, Y}$ to $\Omega_j$, i.e.
\[
P^j_{X, Y}(x, y) \propto P^{\pi^*}_{X, Y}(x, y) \mathbbm{1}((x, y) \in \Omega_j)
\]
Then, \eqref{eq:prop-gp-bal} can be written as
\begin{align}
    \eqref{eq:prop-gp-bal} =& \p_{(X, Y) \sim P^{j}_{X, Y}}(s(X,Y) \leq  \eta^{\Omega_j}(X, Y)) \label{eq1:prop-gp-bal}
\end{align}
Moreover, for $(x, y) \in \Omega_j$ we have
\begin{align}
    w(x, y) = \frac{P^{\pi^*}_{X, Y}(x, y)}{P^{\pi^b}_{X, Y}(x, y)} \propto \frac{P^{j}_{X, Y}(x, y)}{P^{\pi^b}_{X, Y}(x, y)} \nonumber
\end{align}
Since $p_i^{\Omega_j}(x, y)$ is invariant to scaling of weights $w(x, y)$, replacing the weights by $\tilde{w}(x, y) = \frac{P^{j}_{X, Y}(x, y)}{P^{\pi^b}_{X, Y}(x, y)}$ keeps the conformal sets unchanged. 

Therefore, using Proposition \ref{coverage_theorem}, the conformal sets constructed will provide coverage guarantees under the measure $P^{j}_{X, Y}$, i.e.
\begin{align}
    \p_{(X, Y) \sim P^{j}_{X, Y}}(s(X,Y) \leq  \eta^{\Omega_j}(X, Y)) \geq 1-\alpha \nonumber
\end{align}
Using \eqref{eq1:prop-gp-bal}, we get that
\begin{align}
    \tarprob(Y \in \hat{C}_n^\Omega(X) \mid (X, Y) \in  \Omega_j) \geq \p_{(X, Y) \sim P^{j}_{X, Y}}(s(X,Y) \leq  \eta^{\Omega_j}(X, Y)) \geq 1-\alpha \nonumber
\end{align}
\qed

\subsubsection{COPP for class-balanced coverage}
\begin{algorithm}
\SetAlgoLined
\textbf{Inputs:} Observational data $\mathcal{D}_{obs}=\{X_i, A_i, Y_i\}_{i=1}^{n_{obs}}$, conf. level $\alpha$, a score function $s(x,y)\in\mathbb{R}$, new data point $x^{test}$, target policy $\pi^*$ \;
\textbf{Output:} $\hat{C}^{\mathcal{Y}}_n(x^{test})$ with coverage guarantee \eqref{label_cond}\;
Split $\mathcal{D}_{obs}$ into training data ($\mathcal{D}_{tr}$) and calibration data ($\mathcal{D}_{cal}$) of sizes $m$ and $n$ respectively\;
Use $\mathcal{D}_{tr}$ to estimate weights $\hat{w}(\cdot, \cdot)$\;
\For{$y \in\mathcal{Y}$}{
Let $\{X_j^y, Y_j^y\}_{j=1}^{n_y}$ be the following subset of calibration data: $\{(X_i, Y_i): Y_i = y\}$\;
Let $V_j^y \coloneqq s(X_j^y, Y_j^y)$, for $j = 1, \dots, n_y$\;
Define $\hat{F}_{n}^{x, y} = \sum_{i=1}^{n_y} p_i^w(x, y)\delta_{V^y_i} + p_{n+1}^w(x,y)\delta_\infty$\;
where, $p_{i}^w(x, y) \coloneqq \frac{w(X^y_i, Y^y_i)}{\sum_{i=1}^{n_y} w(X^y_i, Y^y_i) + w(x,y)}$, $p_{n+1}^w(x, y) \hspace{-0.1cm} \coloneqq \hspace{-0.1cm} \frac{w(x,y)}{\sum_{i=1}^{n_y} w(X^y_i, Y^y_i) + w(x,y)}$\;
$\eta(x, y) \coloneqq \text{Quantile}_{1-\alpha}( \hat{F}_{n}^{x, y})$
}
Define $\hat{C}^{\mathcal{Y}}_n(x^{test}) \coloneqq  \{y: s(x^{test},y) \leq \eta(x^{test}, y) \} $\;
\textbf{Return} $\hat{C}^{\mathcal{Y}}_n(x^{test})$
\caption{COPP for class-balanced coverage}
  \label{cp_label_conditioned}
\end{algorithm}
In the case when $Y$ is discrete, we construct predictive sets, $\hat{C}^{\mathcal{Y}}_n(x)$, which offer label conditioned coverage guarantees using the methodology described above,
\begin{align}
    \tarprob(Y \in \hat{C}^{\mathcal{Y}}_n(X) \mid Y = y) \geq 1- \alpha, \hspace{0.2cm} \textup{for all $y\in \mathcal{Y}$} \label{label_cond}
\end{align}
This is a strictly stronger guarantee than marginal coverage, i.e. $\tarprob(Y \in \hat{C}_n(X)) \geq 1- \alpha$. To understand what \eqref{label_cond} means, consider our running example of recommendation systems, where the outcome $Y$ is whether the recommendation is relevant (0) or not (1) to the user. Then, Eq. \eqref{label_cond} ensures that out of the users who received irrelevant recommendations, the predictive sets contain `not relevant' (1) at least $100\cdot(1-\alpha)\%$ of the times. This can be thought of as controlling the false negative rate of irrelevant recommendations at $100\cdot\alpha\%$. The same is true for users who receive relevant recommendations. This is particularly useful when data is imbalanced, for example, when the majority of the users in observational receive relevant recommendations. 
\subsection{Weights estimation $\hat{w}(x, y)$}\label{sec:weights_estimation_app}
\subsubsection{Consistent estimation of the weights does not imply consistent estimation of $\hat{P}(y| x, a)$}
In Proposition \ref{coverage_theorem}, we assume to have consistent estimator of $w(x, y)$ which begs the following question: In general, does a consistent estimate of $w(x, y)$ imply that we also obtain a consistent estimate of $P(y|x, a)$? In particular, one could then just use the estimate of $\hat{P}(y|x, a)$ to construct the predictive interval. However, we answer the above question with the negative by supplying a counter-example.

\paragraph{Counter-example}
Let $X \in [1, + \infty), a\in \mathbbm{R} \textup{ s.t. } |a| < K$ for $K \in \mathbbm{R}_{>0}$.

Let $Y|X, a \sim \mathcal{N}((KX^2+a)^{0.5}, (KX^2-a))$.

We have $\mathbbm{E}[Y^2|X, a] = Var(Y|X, a) + \mathbbm{E}[Y|X, a]^2 = KX^2 +a + KX^2 -a = 2KX^2$ (independent of $a$)

Next let 
\begin{align}
  \hat{P}(y|x, a) \coloneqq \frac{y^2P(y|x, a)}{2Kx^2}. \label{def:pyxhat} 
\end{align}
Recall that 
\begin{align}
    w(x, y) = \frac{\int P(y|x, a) \pi^*(a|x) \mathrm{d}a}{\int P(y| x, a) \pi^b(a|x) \mathrm{d}a}
\end{align}
Using the above definition of $\hat{P}(y|x, a)$ we have:
\begin{align}
    \hat{w}(x, y) &= \frac{\int \hat{P}(y|x, a) \pi^*(a|x) \mathrm{d}a}{\int \hat{P}(y| x, a) \pi^b(a|x) \mathrm{d}a} \nonumber\\ 
     &= \frac{\int P(y|x, a) \frac{Y^2}{2KX^2} \pi^*(a|x) \mathrm{d}a}{\int P(y| x, a) \frac{Y^2}{2KX^2} \pi^b(a|x) \mathrm{d}a} \nonumber\\ 
    &= w(x, y).\nonumber
\end{align}
Hence, $w(x, y) \equiv \hat{w}(x, y) \centernot\implies \hat{P}(y|x, a) \equiv P(y|x, a)$.
\qed 

More generally, if there exists a function $\Phi: \mathcal{X}\times \mathcal{Y} \rightarrow \mathbb{R}_{\geq0}$ such that 
\begin{enumerate}
    \item $\Phi(x, y)$ is not constant in $y$
    \item $0<\E[\Phi(X,Y) \mid X, A]< \infty$, and does not depend on $A$
\end{enumerate}
Then, we can define $\tilde{P}(y|x, a) \coloneqq P(y|x, a) \Phi(x, y)/\E[\Phi(X,Y) \mid X, A]$, and the weights computed using $\tilde{P}(y|x, a)$ will be the equal to $w(x, y)$ even though $\tilde{P}(y|x, a) \ne P(y|x, a)$.
\subsubsection{Alternative ways to estimate $\hat{w}(x, y)$ without estimating $\hat{P}(y| x, a)$}\label{sec:alternate_weights_est}
In this section, we show how we could estimate $w(x, y)$ without having to estimate $\hat{P}(y|x, a)$. One way to obtain an estimate $\hat{w}(x, y)$ is by taking a closer look at the definition of $w(x, y)$ and rewriting the ratio.
\begin{align}
    w(x,y)&=\frac{P_{X, Y}^{\pi^*}(x,y)}{P_{X, Y}^{\pi^b}(x,y)} \nonumber\\
    &=\int \frac{P_{X,A, Y}^{\pi^*}(x,a,y)}{P_{X,A, Y}^{\pi^b}(x,a,y)}P_{A|X,Y}^{\pi^b}(a|x,y)\mathrm{d}a \nonumber \\
    &=\int \frac{\pi^{\ast}(a|x)}{\pi^b (a|x)}P_{A|X,Y}^{\pi^b}(a|x,y)\mathrm{d}a \nonumber\\
    &= \E_{A \sim P^{\pi^b}_{A\mid X=x,Y=y}}\Big[ \frac{\pi^{\ast}(A|x)}{\pi^b (A|x)} \Big]. \label{eq:ratioidentity}
\end{align}

\begin{lemma}\label{prop:weights-est}
Let $w(x,y)=\frac{P_{X, Y}^{\pi^*}(x,y)}{P_{X, Y}^{\pi^b}(x,y)}$, then
\begin{align}
    w(x,y) = \arg\min_{f} \mathbb{E}_{X,A,Y \sim P^{\pi^b}_{X,A,Y}} \Big[\Big|\Big|\frac{\pi^{\ast}(A|X)}{\pi^b (A|X)}-f(X,Y)\Big|\Big|^2\Big]. \label{eq:weights-obj}
\end{align}
\end{lemma}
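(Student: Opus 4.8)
The plan is to recognize this as the standard characterization of a conditional expectation as the $L^2$-projection, exactly as in the proof of Lemma \ref{lemma:weights-est}. The key observation, already recorded in \eqref{eq:ratioidentity}, is that the weight function is itself a conditional expectation:
\[
w(x,y) = \E_{A\sim P^{\pi^b}_{A\mid X=x,Y=y}}\left[\frac{\pi^{\ast}(A\mid x)}{\pi^b(A\mid x)}\right] = \Ebeh\left[\frac{\pi^{\ast}(A\mid X)}{\pi^b(A\mid X)} \mid X=x,\, Y=y\right].
\]
Writing $\rho(A,X) \coloneqq \pi^{\ast}(A\mid X)/\pi^b(A\mid X)$ and $f^{\star}(X,Y) \coloneqq \Ebeh[\rho(A,X)\mid X,Y]$, the claim reduces to showing that $f^{\star}$ minimizes $\Ebeh[(\rho(A,X) - f(X,Y))^2]$ over measurable $f$, with $w = f^{\star}$.

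First I would note integrability: assuming $\rho(A,X)$ is square-integrable under $P^{\pi^b}_{X,A,Y}$ --- a standing regularity condition analogous to that underlying Lemma \ref{lemma:weights-est} --- the conditional Jensen inequality gives $\Ebeh[f^{\star}(X,Y)^2] \le \Ebeh[\rho(A,X)^2] < \infty$, so $f^{\star}$ is a legitimate candidate. Then for any measurable $f$ with $\Ebeh[f(X,Y)^2] < \infty$ I would expand
\[
\Ebeh\left[(\rho(A,X) - f(X,Y))^2\right] = \Ebeh\left[(\rho(A,X) - f^{\star}(X,Y))^2\right] + \Ebeh\left[(f^{\star}(X,Y) - f(X,Y))^2\right],
\]
where the cross term $2\,\Ebeh[(\rho(A,X) - f^{\star}(X,Y))(f^{\star}(X,Y) - f(X,Y))]$ vanishes: conditioning on $(X,Y)$ and using the tower property, $\Ebeh[\rho(A,X) - f^{\star}(X,Y)\mid X,Y] = 0$, while the factor $f^{\star}(X,Y) - f(X,Y)$ is $(X,Y)$-measurable and hence pulls out of the inner expectation. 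Since the second term on the right is nonnegative and equals zero if and only if $f = f^{\star}$ holds $P^{\pi^b}_{X,Y}$-almost surely, $f^{\star}$ is the ($P^{\pi^b}_{X,Y}$-a.s.\ unique) minimizer; any $f$ with infinite second moment makes the objective infinite and can be discarded. Substituting back $f^{\star} = w$ gives \eqref{eq:weights-obj}.

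There is no genuinely hard step here: the only mild subtlety is making the implicit square-integrability hypothesis explicit so that the $L^2$ decomposition is rigorous and the minimizer is identified up to a $P^{\pi^b}_{X,Y}$-null set, mirroring the treatment of Lemma \ref{lemma:weights-est} earlier in the thesis. One could instead argue purely from the defining property of conditional expectation without invoking $L^2$ geometry, but the orthogonal decomposition above is the cleanest route.
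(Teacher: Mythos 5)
Your proposal is correct and follows essentially the same route as the paper: both rest on the identity $w(x,y)=\Ebeh[\pi^*(A\mid X)/\pi^b(A\mid X)\mid X=x,Y=y]$ and then characterize this conditional expectation as the $L^2$ minimizer, the paper via the conditional variance-plus-squared-bias decomposition and you via the equivalent orthogonal decomposition with vanishing cross term. Your explicit square-integrability remark is a harmless refinement the paper leaves implicit.
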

\paragraph{Proof of Lemma \ref{prop:weights-est}}
This follows directly from the identity (\ref{eq:ratioidentity}). We prove it here for sake of completeness.
\begin{align}
    &\mathbb{E}_{X,A,Y \sim P^{\pi^b}_{X,A,Y}} \Big[\Big|\Big|\frac{\pi^{\ast}(A|X)}{\pi^b (A|X)}-f(X,Y)\Big|\Big|^2\Big] \nonumber \\
    &= \mathbb{E}_{X,Y \sim P^{\pi^b}_{X,Y}} \Big[\E_{A \sim P^{\pi^b}_{A\mid X,Y}} \Big|\Big|\frac{\pi^{\ast}(A|X)}{\pi^b (A|X)}-f(X,Y)\Big|\Big|^2\Big] \nonumber \\
    &= \mathbb{E}_{X,Y \sim P^{\pi^b}_{X,Y}} \Big[\textup{Var}_{A \sim P^{\pi^b}_{A\mid X,Y}}\Big[ \frac{\pi^{\ast}(A|X)}{\pi^b (A|X)} \Big] + \left(\E_{A \sim P^{\pi^b}_{A\mid X,Y}}\Big[ \frac{\pi^{\ast}(A|X)}{\pi^b (A|X)} \Big] - f(X,Y) \right)^2 \Big].
     \label{eq:w-reg}
\end{align}
Where, \eqref{eq:w-reg} is minimized if $f(x, y) = \E_{A \sim P^{\pi^b}_{A\mid X=x,Y=y}}\Big[ \frac{\pi^{\ast}(A|x)}{\pi^b (A|x)} \Big] = w(x,y)$.
\qed

Using Lemma \ref{prop:weights-est}, we can thus approximate $w(x,y)$ by minimizing the loss
\begin{align}
    \hat{w}(x, y) =\arg \min_{f_\theta} \mathbb{E}_{X,A,Y \sim P^{\pi^b}_{X,A,Y}} \Big[\Big|\Big|\frac{\pi^{\ast}(A|X)}{\pi^b (A|X)}-f_\theta(X,Y)\Big|\Big|^2\Big] \label{eq:weights-loss}
\end{align}
Hence we see that the ratio estimation problem can be rewritten as a regression problem where $f_\theta(x,y)$ is for example a neural network. This allows one to estimate directly, without the need for estimating $\hat{P}(y\mid x, a)$ first.

\newpage
\section{Estimation of the quantiles of the target distribution}\label{sec:estimating_target_quantiles}
As mentioned in Section \ref{sec:cond_cov}, we present here a way to estimate the quantiles of the target distribution $P_{X,Y}^{\pi^*}$ consistently when the ground truth weight function $w(x, y)$ is known. As we are interested in the quantiles, we will be using the pinball loss to train our model $\hat{f}_{\theta}$ defined by
\begin{align}
    L_{\alpha}(\theta, x, y) = \begin{cases}
         \alpha (\hat{f}_{\theta}(x)-y) \qquad &\text{ if } (\hat{f}_{\theta}(x)-y) > 0, \\
         (1-\alpha) (y -\hat{f}_{\theta}(x)) \qquad &\text{ if } (\hat{f}_{\theta}(x)-y) < 0.
    \end{cases} \nonumber
\end{align}
Then we have the following objective to optimize:
\begin{align}
\expt[L_\alpha(\theta, X, Y)] &= \int_{X,Y} L_\alpha(\theta, x, y) P_{X,Y}^{\pi^*}(\mathrm{d}x,\mathrm{d}y) \nonumber \\
&= \int_{X,Y} L_\alpha(\theta, x, y) \frac{\mathrm{d}P_{X, Y}^{\pi^*}(x, y)}{\mathrm{d}P_{X,Y}^{\pi^b}(x, y)} P_{X,Y}^{\pi^b}(\mathrm{d}x,\mathrm{d} y) \nonumber \\
&= \int_{X,Y} L_\alpha(\theta, x, y) w(x, y) P_{X,Y}^{\pi^b}(\mathrm{d}x,\mathrm{d}y)\nonumber\\
&= \expb[L_\alpha(\theta, X, Y) w(X, Y)]. \nonumber
\end{align}
The above holds true if the true weight function is known. However in the case where we only have a consistent estimator of $w(x, y)$, it remains to be proven that the above objective will also yield a consistent estimator of the quantiles under $\pi^*$. We leave this for future work to prove as we are simply providing a possible avenue to relax the assumptions in Proposition \ref{sec:cond_cov}.

\newpage
\section{Experiments}\label{sec:exps_app}
The code for our experiments is available at \url{https://anonymous.4open.science/r/COPP-75F5} and we ran all our experiments on Intel(R) Xeon(R) CPU E5-2690 v4 @ 2.60GHz with 8GB RAM per core. We were able to use 100 CPUs in parallel to iterate over different configurations and seeds. However, we would like to note that our algorithms only requires 1 CPU and at most 10 mins to run, as our networks are relatively small.
\subsection{Toy Experiment}\label{sec:toy_experiments_descrip}
\subsubsection{Synthetic data experiments setup}
\paragraph{Model.}
The observational data distribution is defined as follows:
\begin{align}
    & X_i \overset{\textup{i.i.d.}}{\sim} \mathcal{N}(0,9) \nonumber \\
    & A_i \mid x_i \sim \pi^b(\cdot \mid x_i) \hspace{0.2cm} \textup{where $\pi^b$ has been defined below} \nonumber \\
    & Y_i \mid x_i, a_i \sim \mathcal{N}(a_i * x_i, 1) \nonumber
\end{align}

\paragraph{Behaviour and Target Policies.}
We define a family of policies $\pi_\epsilon(a \mid x)$ as follows:
\begin{align}
&\pi_\epsilon(a|x) \coloneqq
     \begin{cases}
          \epsilon\mathbbm{1}(a \in \{1,2,3\}) + (1-3\epsilon)\mathbbm{1}(a=4) &  \textup{if } |x|\in (3, \infty)\\
          \epsilon\mathbbm{1}(a \in \{1,2,4\}) + (1-3\epsilon)\mathbbm{1}(a=3) & \textup{if } |x|\in (2, 3]\\
          \epsilon\mathbbm{1}(a \in \{1,3,4\}) + (1-3\epsilon)\mathbbm{1}(a=2) & \textup{if } |x|\in (1, 2]\\
          \epsilon\mathbbm{1}(a \in \{2,3,4\}) + (1-3\epsilon)\mathbbm{1}(a=1) & \textup{if } |x|\in [0, 1]\\
          \end{cases} \nonumber
\end{align}
We use the parameter $\epsilon \in (0,1/3)$ to control the policy shift between target and behaviour policies. For the behaviour policy $\pi^b$, we use $\epsilon^b = 0.3$, and for target policies $\pi^*$, we use $\epsilon^* \in \{0.1, 0.2, 0.3\}$. Here we use $m=1000$ training datapoints.

\paragraph{Neural Network Architectures}
\begin{itemize}
    \item To approximate the behaviour policy $\pi^b$, we use a neural network with 2 hidden layers and 16 nodes in each hidden layer, and ReLU activation function.
    \item To approximate $P(y|x, a)$, we use $\mathcal{N}(\mu(x, a), \sigma(x, a))$, where $\mu$ and $\sigma$ are neural networks with one-hidden layer, 32 nodes in the hidden layer, and ReLU activation function.
    \item For the score function, we train the quantiles $\hat{q}_{\alpha/2}$ and $\hat{q}_{1 - \alpha/2}$ using quantile regression, each of which are modelled using neural networks with one-hidden layer, 32 nodes in the hidden layer, and ReLU activation functions.
\end{itemize}

\paragraph{Results: Coverage as a function of increase calibration data}\label{app:N-cal_exp_toy}
As mentioned in the main text, we have also performed experiments to investigate how much calibration data is needed for COPP as well as other methods to converge to the required $90\%$ coverage. In the below figure \ref{fig:Toy_GT} we have plotted the coverage as a function of $n$ calibration data points. Our proposed method is converging much faster to the required coverage compared to the competing methods.

\begin{figure*}[htp!]
    \centering
    \includegraphics[width=0.45\textwidth, height=0.3\textwidth]{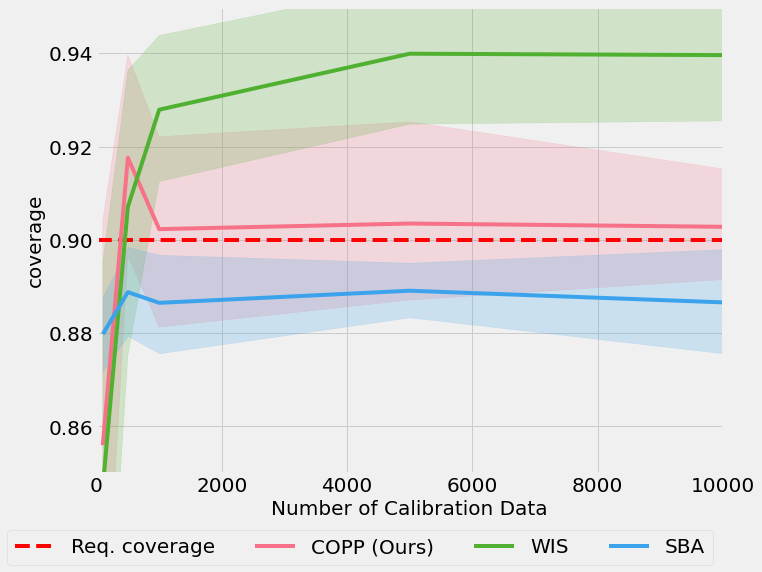}
    \includegraphics[width=0.45\textwidth, height=0.3\textwidth]{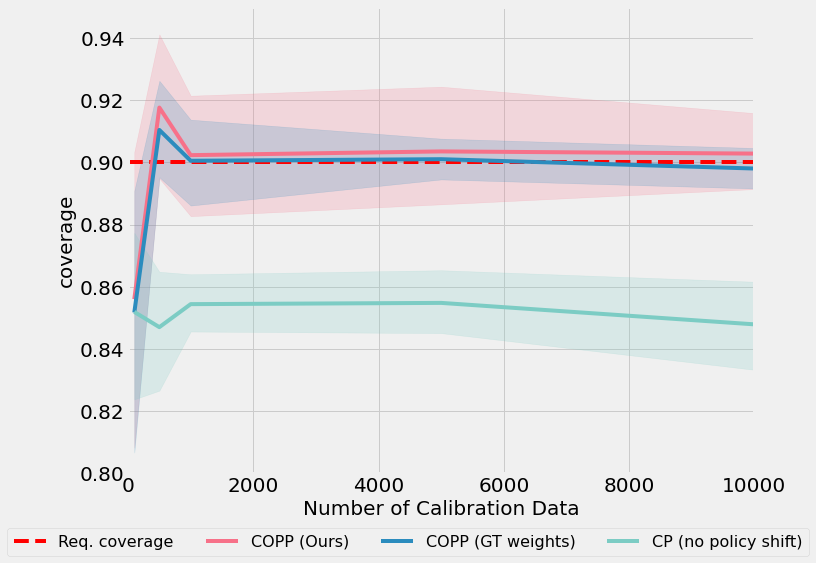}
    \caption{Results for synthetic data experiment with $\pi^b = \pi_{0.3}$ and the target policy is $\pi^* = \pi_{0.1}$. \textbf{Left:} our proposed method is able to converge to the required coverage rather quickly compared to the competing methods. \textbf{Right:} here we see that our method is on par with using the GT weights. Due to estimation error, COPP with estimated weights has slightly higher variance in terms of coverage}
    \label{fig:Toy_GT}
\end{figure*}

\paragraph{Additional experimental baseline using weighted quantile regression.}
In order to add an additional baseline that is also covariate dependent, we have added some experiments using the weighted quantile regression (WQR) as described in Sec. \ref{sec:estimating_target_quantiles} on our toy experiments from Sec. \ref{sec:exp} in the main text. Below in Table \ref{tab:coverage_toy_app} and Table \ref{tab:length_toy_app} we see the complete coverage table with the respective interval lengths. Note also that WQR does not seem to perform well as it does not have any statistical guarantees and heavily relies on good estimation of the ratio. We have added these experiments here in the appendix for completeness and did not add it in the main text as the results were not comparable to other baselines.

\begin{table}[t]
      \centering
      \caption{Mean Coverage as a function of policy shift with 2 standard errors over 10 runs. We have added weighted quantile regression (WQR) for completeness and note that it does not seem to perform well.}\label{tab:coverage_toy_app}
      \resizebox{0.7\columnwidth}{!}{%
        \begin{tabular}{lccc}
\toprule
Coverage &  $\Delta_{\epsilon}=0.0$ &  $\Delta_{\epsilon}=0.1$ &  $\Delta_{\epsilon}=0.2$ \\
\midrule
COPP (Ours)            &                    \textbf{0.90 $\pm$ 0.01}&                    \textbf{0.90 $\pm$ 0.01}&                    \textbf{0.91 $\pm$ 0.01}\\
WIS                  &                    \textbf{0.89 $\pm$ 0.01}&                     \textbf{0.91 $\pm$ 0.02}&                     0.94 $\pm$ 0.02\\
SBA                  &                     \textbf{0.90 $\pm$ 0.01}&                     0.88 $\pm$ 0.01&                     0.87 $\pm$ 0.01\\
\midrule
\midrule
COPP (GT weights Ours)      &                     \textbf{0.90 $\pm$ 0.01}&                     \textbf{0.90 $\pm$ 0.01}&                     \textbf{0.90 $\pm$ 0.01}\\
CP (no policy shift) &                     \textbf{0.90 $\pm$ 0.01}&                     0.87 $\pm$ 0.01&                     0.85 $\pm$ 0.01\\
CP (union) &                      0.96 $\pm$ 0.01 &         0.96 $\pm$ 0.01 &         0.96 $\pm$ 0.01 \\
\red{WQR}          &         \red{0.82 $\pm$ 0.04} &         \red{0.76 $\pm$ 0.03} &          \red{0.70 $\pm$ 0.03} \\
\bottomrule
\end{tabular}
}
\end{table}

\begin{table}[h!]
      \centering
      \caption{Mean Interval Length as a function of policy shift with 2 standard errors over 10 runs. We have added weighted quantile regression (WQR) for completeness and note that it does not seem to perform well.}\label{tab:length_toy_app}
      \resizebox{0.7\columnwidth}{!}{%
        \begin{tabular}{lccc}
\toprule
Interval Lengths &  $\Delta_{\epsilon}=0.0$ &  $\Delta_{\epsilon}=0.1$ &  $\Delta_{\epsilon}=0.2$ \\
\midrule
COPP (Ours)           &                     9.08 $\pm$ 0.10&                     9.48 $\pm$ 0.22&                     9.97 $\pm$ 0.38\\
WIS                  &                    \red{24.14 $\pm$ 0.30}&               \red{32.96 $\pm$ 1.80}&             \red{43.12 $\pm$ 3.49}\\
SBA                  &                     8.78 $\pm$ 0.12&                     8.94 $\pm$ 0.10&                     8.33 $\pm$ 0.09\\
\midrule
\midrule
COPP (GT weights Ours)      &                     8.91 $\pm$ 0.09&                     9.25 $\pm$ 0.12&                     9.59 $\pm$ 0.20\\
CP (no policy shift) &                     9.00 $\pm$ 0.10&                     9.00 $\pm$ 0.10&                     9.00 $\pm$ 0.10\\
CP (union) &                     10.66 $\pm$ 0.18 &         11.04 $\pm$ 0.2 &         11.4 $\pm$ 0.26 \\
\red{WQR}         &         \red{8.55 $\pm$ 0.50} &         \red{8.61 $\pm$ 0.52} &          \red{8.70 $\pm$ 0.55} \\
\bottomrule
\end{tabular}
}
\end{table}

\newpage
\subsubsection{Experiments with continuous action space}\label{subsec:cts_act}
As mentioned in the main text and also in Sec. \ref{sec:comp_lc}, our proposed method, contrary to the work of \cite{lei2020conformal} is able to also handle continuous action space. Given that we are integrating out the actions when computing the weights in Eq. \ref{weight-est} our method trivially extends to the continuous action space, whereas \cite{lei2020conformal} is only applicable for discrete action spaces, as they compute conformal intervals conditioned on a given action.

\paragraph{Model.}
The observational data distribution is defined as follows:
\begin{align}
    & X_i \overset{\textup{i.i.d.}}{\sim} \mathcal{N}(0,4) \nonumber \\
    & A_i \mid x_i \sim \mathcal{N}(x_i/4, 1) \hspace{0.2cm} \nonumber \\
    & Y_i \mid x_i, a_i \sim \mathcal{N}(a_i + x_i, 1) \nonumber
\end{align}
\paragraph{Target Policies.}
We define a family of policies $\pi_\epsilon(a \mid x)$ as follows:
\begin{align}
    \pi_\epsilon(a \mid x) = \mathcal{N}(x/4 + \epsilon, 1). \label{tar_pols}
\end{align}
In our experiments, for the target policy $\pi^*$, we use $\pi^* = \pi_{\epsilon^*}$ for $\epsilon^* \in \{0, 0.5, 1, 1.5, 2, 2.5\}$.

\paragraph{Results.}
Table \ref{tab:cov_cts} shows the coverages of different methods as the policy shift $\epsilon^*$ increases. The behaviour policy $\pi^b = \pi_{0}$ is fixed and we use $n=5000$ calibration datapoints and $m=1000$ training points, across 10 runs. Table \ref{tab:cov_cts} shows, how COPP stays very close to the required coverage of $90\%$ across all target policies with $\epsilon^* \leq 2.0$, compared to WIS and SBA. Both, WIS intervals and SBA intervals suffer from under-coverage i.e. below the required coverage. These results again support our hypothesis from Sec. \ref{sec:weights}, which stated that COPP is less sensitive to estimation errors of $\hat{P}(y|x, a)$ compared to directly using $\hat{P}(y|x, a)$ for the intervals i.e. SBA.

Next, Table \ref{tab:len_cts} shows the mean interval lengths and even though WIS intervals are under-covered, the average interval length is huge compared to COPP. Additionally, for $\epsilon^* \in \{0, 0.5, 1, 1.5\}$, COPP with estimated weights produces results which are close to COPP intervals with ground truth weights. This shows that when the behaviour and target policies have reasonable overlap, the effect of weights estimation error on COPP results is limited. However, as $\epsilon^*$ increases to $2.0$ and $2.5$, the overlap between behaviour and target policies becomes low. We empirically note that this leads to high weights estimation error and consequently under-coverage in COPP with estimated weights. In contrast, COPP with ground truth weights still achieves required coverage, even though it becomes conservative when the overlap is low. Figure \ref{fig:pols_cts_acs} visualises how the overlap between target and behaviour policies decreases with increasing $\epsilon^*$. It can be seen that $\epsilon^* \in \{2, 2.5\}$ leads to very low overlap between the behaviour and target data.

\begin{table}[htp!]
\begin{center}
\caption{Mean Coverage as a function of policy shift with 2 standard errors over 10 runs.}
\label{tab:cov_cts}
\resizebox{0.9\columnwidth}{!}{
\begin{tabular}{lllllll}
\toprule
Coverage &              $\epsilon^*=0.0$ &              $\epsilon^*=0.5$ &              $\epsilon^*=1.0$ &              $\epsilon^*=1.5$ &              $\epsilon^*=2.0$ &              $\epsilon^*=2.5$ \\
\midrule
COPP (Ours)                   &   \textbf{0.90 $\pm$ 0.01} &  \textbf{0.91 $\pm$ 0.01} &  0.92 $\pm$ 0.01 &  \textbf{0.91 $\pm$ 0.01} &  \textbf{0.89 $\pm$ 0.02} &  0.85 $\pm$ 0.02 \\
WIS                           &  0.87 $\pm$ 0.01 &  0.87 $\pm$ 0.01 &  0.87 $\pm$ 0.01 &  0.87 $\pm$ 0.02 &  \textbf{0.89 $\pm$ 0.02} &  0.83 $\pm$ 0.02 \\
SBA                           &  0.86 $\pm$ 0.01 &  0.86 $\pm$ 0.01 &  0.86 $\pm$ 0.01 &  0.86 $\pm$ 0.01 &  \textbf{0.89 $\pm$ 0.02} &  0.83 $\pm$ 0.02 \\
\midrule
\midrule
COPP (GT Weights Ours)             &   \textbf{0.90 $\pm$ 0.01} &  \textbf{0.91 $\pm$ 0.01} &  \textbf{0.91 $\pm$ 0.01} &   \textbf{0.90 $\pm$ 0.01} &  0.96 $\pm$ 0.02 &   0.93 $\pm$ 0.02 \\
CP (no policy shift) &   \textbf{0.90 $\pm$ 0.01} &  0.88 $\pm$ 0.01 &  0.82 $\pm$ 0.01 &  0.73 $\pm$ 0.01 &   0.60 $\pm$ 0.01 &  0.46 $\pm$ 0.01\\
\bottomrule
\end{tabular}
}
\end{center}
\end{table}

\begin{table}[h!]
\begin{center}
\caption{Mean Interval Length as a function of policy shift with 2 standard errors over 10 runs.}
\label{tab:len_cts}
\resizebox{0.9\columnwidth}{!}{
\begin{tabular}{lllllll}
\toprule
Interval Lengths &              $\epsilon^*=0.0$ &              $\epsilon^*=0.5$ &              $\epsilon^*=1.0$ &              $\epsilon^*=1.5$  & $\epsilon^*=2.0$ & $\epsilon^*=2.5$\\
\midrule
COPP (Ours)                   &  4.75 $\pm$ 0.04 &  5.08 $\pm$ 0.09 &  5.89 $\pm$ 0.14 &  6.92 $\pm$ 0.18 &  7.82 $\pm$ 0.41 &  8.45 $\pm$ 0.44\\
WIS                           &   9.55 $\pm$ 0.1 &  9.56 $\pm$ 0.12 &  9.56 $\pm$ 0.27 &  9.44 $\pm$ 0.38 &   9.40 $\pm$ 0.59 &  9.08 $\pm$ 0.64 \\
SBA                           &  4.38 $\pm$ 0.03 &  4.37 $\pm$ 0.03 &  4.36 $\pm$ 0.04 &  4.34 $\pm$ 0.07 &   4.31 $\pm$ 0.1 &  4.28 $\pm$ 0.14 \\
\midrule
\midrule
COPP (GT Weights Ours)             &  4.73 $\pm$ 0.05 &  5.07 $\pm$ 0.09 &  5.87 $\pm$ 0.14 &  6.82 $\pm$ 0.13 &  7.57 $\pm$ 0.19 &  8.07 $\pm$ 0.22 \\
CP (no policy shift) &   4.70 $\pm$ 0.05 &   4.70 $\pm$ 0.05 &   4.70 $\pm$ 0.05 &   4.70 $\pm$ 0.05 &   4.70 $\pm$ 0.05 &   4.70 $\pm$ 0.05 \\
\bottomrule
\end{tabular}
}
\end{center}
\end{table}

\begin{figure*}[t]
    \centering
    \includegraphics[width=0.45\textwidth, height=0.3\textwidth]{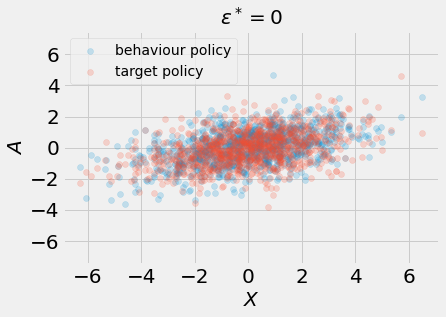}
    \includegraphics[width=0.45\textwidth, height=0.3\textwidth]{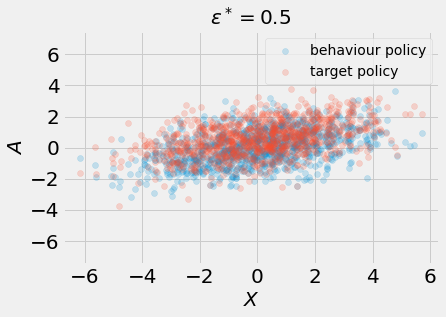}\\
    \includegraphics[width=0.45\textwidth, height=0.3\textwidth]{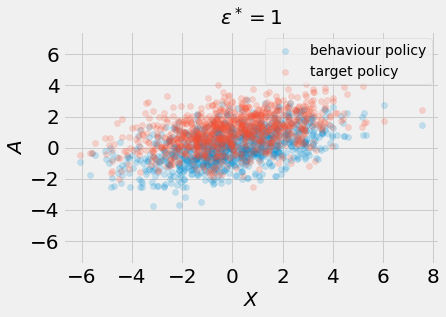}
    \includegraphics[width=0.45\textwidth, height=0.3\textwidth]{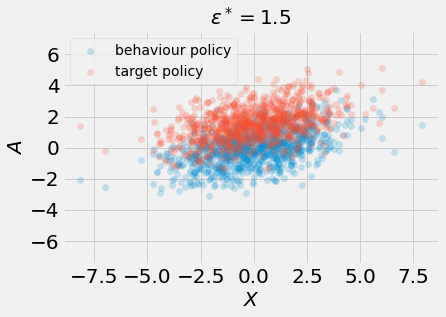}\\
    \includegraphics[width=0.45\textwidth, height=0.3\textwidth]{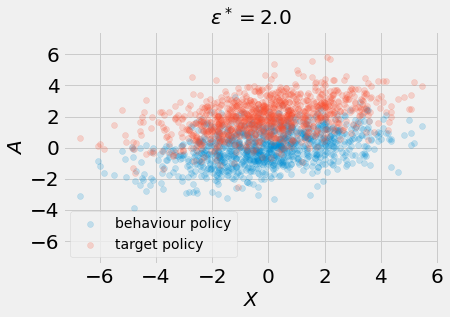}
    \includegraphics[width=0.45\textwidth, height=0.3\textwidth]{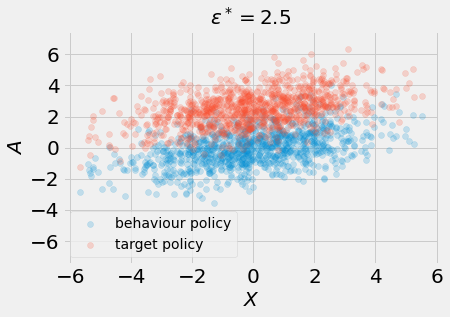}
    \caption{Plots of $A$ against $X$, where $X \sim \mathcal{N}(0, 4)$ and $A\mid X$ is sampled from behaviour and target policies. Here, target policies are defined in \eqref{tar_pols} for $\epsilon^* \in \{0, 0.5, 1, 1.5, 2, 2.5\}$.}
    \label{fig:pols_cts_acs}
\end{figure*}

\newpage

\subsection{Experiments on Microsoft Ranking Dataset}\label{sec:MSR_experiments_decrip}

\paragraph{Dataset details.}
The dataset contains relevance scores for websites recommended to different users, and comprises of $30,000$ user-website pairs. For a user $i$ and website $j$, the data contains a $136$-dimensional feature vector $u_i^j$, which consists of user $i$'s attributes corresponding to website $j$, such as length of stay or number of clicks on the website. Furthermore, for each user-website pair, the dataset also contains a relevance score, i.e. how relevant the website was to the user.

First, given a user $i$ we sample (with replacement) $5$ websites,  $\{u_i^j\}_{j=1}^5$, from the data. Next, we reformulate this into a contextual bandit where $A \in \{1,2,3,4,5\}$ corresponds to the website we recommend to a user. For a user $i$, we define $X$ by combining the $5$ feature vectors corresponding to the user, i.e. $X \in \mathbb{R}^{5 \times 136}$, where $x_i = (u^1_{i},u^2_{i},u^3_{i},u^4_{i}, u^5_{i})$. In addition, $Y \in\{0,1,2,3,4\}$ corresponds to the relevance score for the $A$'th website, i.e. the recommended website. The goal is to construct prediction sets that are guaranteed to contain the true relevance score with a probability of $90\%$. Here we use $m=5000$ training data points.

\paragraph{Behaviour and Target Policies.}
We first train a Neural Network (NN) classifier model mapping each 136-dimensional feature vector to the
softmax scores for each relevance score class, $\hat{f}_\theta:\mathcal{U} \rightarrow [0,1]^5$. We use this trained model $\hat{f}_\theta$ to define a family of policies such that we pick the most relevant website as predicted by $\hat{f}_\theta$ with probability $\epsilon$ and the rest uniformly with probability $(1-\epsilon)/4$. Formally, this has been expressed as follows. We use $\hat{f}^{\textup{label}}_\theta$ to denote the relevance class predicted by $\hat{f}_\theta$, i.e. $\hat{f}^{\textup{label}}_\theta(u) \coloneqq \argmax_i\{\hat{f}_\theta(u)_i\}$. 

Then,
\begin{align}
    \pi_\epsilon (a\mid X=(u^1, u^2,u^3,u^4,u^5)) \coloneqq& \epsilon \mathbbm{1}(a = \argmax_j\{ \hat{f}^{\textup{label}}_\theta(u^j) \}) \nonumber \\
    &+ (1-\epsilon)/4 \mathbbm{1}(a \neq \argmax_j\{ \hat{f}^{\textup{label}}_\theta(u^j) \}) \nonumber
\end{align}

\paragraph{Estimation of ratios, $\hat{w}(X, Y)$.}
To estimate the $\hat{P}(y \mid x, a)$ we use the trained model $\hat{f}_\theta$ as follows:
\[
\hat{P}(y \mid x = (u^1, u^2,u^3,u^4,u^5), a) = \hat{f}_\theta(u^a)_y
\]
where $\hat{f}_\theta(u^a)_y$ corresponds to the softmax prediction of $u^a$ for label $y$ under the model $\hat{f}_\theta$. To estimate the behaviour policy $\hat{\pi}^b$, we train a classifier model $\mathcal{X} \rightarrow \mathcal{A}$ using a neural network. We use \eqref{weight-est} to estimate the weights $\hat{w}(x, y)$.

\paragraph{Neural Network Architectures}
\begin{itemize}
    \item To approximate the behaviour policy, we use a neural network with 2 hidden layers and 25 nodes in each hidden layer, ReLU activations and softmax output.
    \item To approximate $\hat{f}_{\theta}$, we use a neural network with 2 hidden layers with 64 nodes each and ReLU activations.
\end{itemize}

\paragraph{Results: Coverage as a function of increase calibration data.}\label{app:N-cal_exp_msr}
As mentioned in the main text, we have also performed experiments to investigate how much calibration data is needed for COPP as well as other methods to converge to the required $90\%$ coverage. In the below plot we have plotted the coverage as a function of $n$ calibration data points. We observe that our proposed method is converging much faster to the required coverage compared to the competing methods.

\begin{figure*}[htp!]
    \centering
    \includegraphics[width=0.5\textwidth]{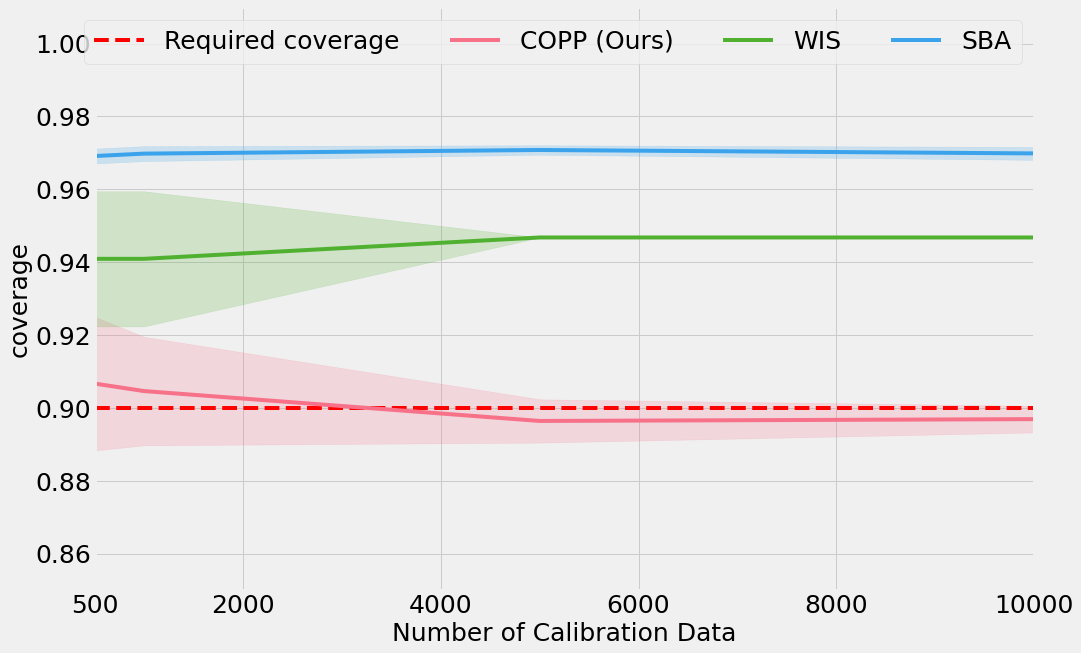} %
    \label{fig:ncal-msr}
    \caption{Results of Microsoft Ranking Dataset experiment with behaviour policy $\pi^b = \pi_{0.5}$ and the target policy is $\pi^* = \pi_{0.2}$. Our proposed method is able to converge to the required coverage rather quickly compared to the competing methods}
    \label{fig:msr}
\end{figure*}
\newpage
\subsubsection{Results: COPP for Class-balanced coverage}\label{sec:results_class_bal_coverage}

Table \ref{tab:label-cond} shows the coverages of COPP predictive sets ($\hat{C}_n$ with marginal coverage guarantee constructed using algorithm \ref{cp_covariate_shift}) and COPP intervals with label conditioned coverage ($\hat{C}^{\mathcal{Y}}_n$ satisfying \eqref{label_cond} constructed using algorithm \ref{cp_label_conditioned}). Extensions of WIS and SBA to the conditional case are not straightforward and hence have not been included. For $\hat{C}_n$, while the overall coverage is very close to the required coverage of $90\%$, we see that there is under-coverage for $Y = 0,1,2,3$. This can be explained by the data imbalance -- the number of test data points with $Y = 0,1,2,3$ is significantly lower than $Y=4$. 

This under-coverage problem disappears in $\hat{C}^{\mathcal{Y}}_n$. Instead, in cases where number of data points is small, ($Y = 0,1,2,3$), the predictive sets $\hat{C}^{\mathcal{Y}}_n$ are conservative (i.e. have coverage $> 90\%$). As a result, the overall coverage increases to 0.941. This is a price to be paid for label conditioned coverage -- the overall coverage may increase, however, being conservative in safety-critical settings is better than being overly optimistic.

\begin{table}[t]
\begin{center}
\caption{Coverages for COPP with and without label conditioned coverage, $\hat{C}^{\mathcal{Y}}_n$ and $\hat{C}_n$ respectively. Overall coverage refers to marginal coverage while $Y=y$ refers to coverage conditioned on $Y=y$. Here $n_{test}$ corresponds to the number of test data points ($\sim P^{\pi^*}$).}
\label{tab:label-cond}
\resizebox{0.5\columnwidth}{!}{
\begin{tabular}{lccccr}
\toprule
   & $n_{test}$  & $\hat{C}_n$ Cov      & $\hat{C}^{\mathcal{Y}}_n$ Cov      \\
\midrule
Overall & 5000  & 0.896 $\pm$ 0.005  & 0.941 $\pm$ 0.003 \\
$Y=0$ & 266  & \red{0.700 $\pm$ 0.020}  & 1.000 $\pm$ 0.000  \\
$Y=1$ & 293  & \red{0.526  $\pm$ 0.019} & 1.000 $\pm$ 0.000  \\
$Y=2$ & 228  & \red{0.772 $\pm$  0.018} & 0.990 $\pm$ 0.029 \\
$Y=3$ & 320  & \red{0.852 $\pm$  0.015} & 0.964 $\pm$ 0.035 \\
$Y=4$ & 3893 & 0.950 $\pm$ 0.006 & 0.928 $\pm$ 0.003 \\
\bottomrule
\end{tabular}
}
\end{center}
\end{table}

\subsection{UCI Dataset experiments}\label{sec:UCI}
Following \cite{risk-assessment, doubly-robust, adaptive-ope} we apply COPP on UCI classification datasets. We can pose classification as contextual bandits by defining the covariates $\mathcal{X}$ as the features, the action space $\mathcal{A} =\mathcal{K}$, where $\mathcal{K}$ is the set of labels, and the outcomes are binary, i.e. $\mathcal{Y}= \{0,1\}$, defined by $Y \mid X, A = \mathbbm{1}(X \textup{ belongs to class }A)$. Here we use $m=1000$ training data points.

\paragraph{Behaviour and Target Policies.}
First we train a neural network classifier mapping each covariate to the softmax scores for each class, $\hat{f}_\theta: \mathcal{X} \rightarrow [0,1]^{|\mathcal{K}|}$. We use this trained model $\hat{f}_\theta$ to define a family of policies such that we pick the most likely label as predicted by $\hat{f}_\theta$ with probability $\epsilon$ and the rest uniformly with probability. Formally, this can be expressed as follows:
\begin{align}
    &\pi_\epsilon (a\mid x) \coloneqq \epsilon \mathbbm{1}(a = \argmax_{k \in \mathcal{K}}\{ \hat{f}_\theta(x)_{k} \}) + (1-\epsilon)/(|\mathcal{K}|-1)\mathbbm{1}(a \neq \argmax_{k \in \mathcal{K}}\{ \hat{f}_\theta(x)_{k} \}) \nonumber
\end{align}
Like other experiments, we use $\epsilon$ to control the shift between behaviour and target policies. For $\pi^b$, we use $\epsilon^b = 0.5$ and for $\epsilon^* \in \{0.05, 0.3, 0.4, 0.5, 0.6, 0.7, 0.95\}$. Using this behaviour policy $\pi^b$, we generate an observational dataset $\mathcal{D}_{obs} = \{x_i, a_i, y_i\}_{i=1}^{n_{obs}}$ which is then split into training $\mathcal{D}_{tr}$ and calibration datasets $\mathcal{D}_{cal}$, of sizes $m$ and $n$ respectively.

\paragraph{Estimation of ratios, $\hat{w}(X, Y)$.}
To estimate the $\hat{P}(y \mid x, a)$ we use the trained model $\hat{f}_\theta$ as follows:
\[
\hat{P}(Y = 1 \mid x, a) = \hat{f}_\theta(x)_a
\]
where $\hat{f}_\theta(x)_a$ corresponds the softmax prediction of $x$ for label $a$ under the model $\hat{f}_\theta$. To estimate the behaviour policy $\hat{\pi}^b$, we train a classifier model $\mathcal{X} \rightarrow \mathcal{A}$ using a neural network. We use \eqref{weight-est} in main text to estimate weights $\hat{w}(x, y)$.

\paragraph{Score.} We define $\hat{P}^{\pi^b}(y \mid x) = \sum_{i \in \mathcal{K}} \hat{\pi}^b(A = i|x) \hat{P}(y|x, A = i)$. Using similar formulation as in \cite{conf-bates}, we define the score as 
\[
s(x, y) = \sum_{y' = 0, 1} \hat{P}^{\pi^b}(y' \mid x) \mathbbm{1}(\hat{P}^{\pi^b}(y' \mid x) \geq \hat{P}^{\pi^b}(y \mid x))
\]

\paragraph{Neural Network Architectures}
\begin{itemize}
    \item To approximate the behaviour policy, we use a neural network with 2 hidden layers and 64 nodes in each hidden layer, ReLU activations and softmax output.
    \item To approximate $\hat{f}_{\theta}$, we use a neural network with 2 hidden layers with 64 nodes each and ReLU activations.
\end{itemize}

\paragraph{Results.} Tables \ref{tab:yeast}-\ref{tab:satimage} show the coverages across varying target policies for different classification datasets. The behaviour policy $\pi^b = \pi_{0.5}$ is fixed and we use $n=5000$ calibration datapoints, across 10 runs with $m=5000$ training data. The tables show that COPP is able to provide the required coverage of 90\% across all target policies. Moreover, compared to COPP, SBA and WIS are overly conservative. WIS estimates are not adaptive w.r.t. $X$, and as a result, the predictive sets produced are uninformative (i.e. contain all outcomes) in these experiments where the outcome is binary. 

We have also included a comparison of COPP using estimated behaviour policy with COPP using GT behaviour policy. The latter provides more accurate coverage, and using estimated behaviour policy provides slightly over-covered predictive sets comparatively in most cases. This can be explained by policy estimation error. Additionally, we observe that using standard CP leads to predictive sets which are not adaptive to policy shift. As a result, the standard CP predictive sets get overly conservative (optimistic) as $\Delta_\epsilon$ becomes more negative (positive).

\newpage
\begin{table}[h!]
\begin{center}
\caption{Yeast dataset results}
\label{tab:yeast}
\resizebox{0.8\columnwidth}{!}{
\begin{tabular}{llllllll}
\toprule
& $\Delta_{\epsilon}=-0.45$ & $\Delta_{\epsilon}=-0.2$ & $\Delta_{\epsilon}=-0.1$ & $\Delta_{\epsilon}=0.0$ & $\Delta_{\epsilon}=0.1$ & $\Delta_{\epsilon}=0.2$ & $\Delta_{\epsilon}=0.45$ \\
\midrule
COPP (Ours)            &              0.92$\pm$0.00 &             0.92$\pm$0.00 &             0.92$\pm$0.00 &            0.92$\pm$0.00 &            0.92$\pm$0.00 &            0.92$\pm$0.00 &             0.91$\pm$0.00 \\
WIS                  &             0.99$\pm$0.01 &              1.00$\pm$0.00 &              1.00$\pm$0.00 &             1.00$\pm$0.00 &             1.00$\pm$0.00 &             1.00$\pm$0.00 &              1.00$\pm$0.00 \\
SBA                  &              0.98$\pm$0.00 &              1.00$\pm$0.00 &              1.00$\pm$0.00 &             1.00$\pm$0.00 &             1.00$\pm$0.00 &             1.00$\pm$0.00 &              1.00$\pm$0.00 \\
\midrule
\midrule
COPP (GT behav policy) &              0.91$\pm$0.00 &             0.91$\pm$0.00 &              0.90$\pm$0.00 &             0.90$\pm$0.00 &             0.90$\pm$0.00 &             0.90$\pm$0.00 &              0.90$\pm$0.00 \\
CP (no policy shift) &              0.97$\pm$0.00 &             0.93$\pm$0.00 &             0.92$\pm$0.00 &             0.90$\pm$0.00 &            0.89$\pm$0.00 &            0.87$\pm$0.00 &             0.83$\pm$0.00 \\
\bottomrule
\end{tabular}
}
\end{center}
\end{table}

\begin{table}[h!]
\begin{center}
\begin{small}
\begin{sc}
\caption{Ecoli dataset results}
\label{tab:ecoli}
\resizebox{0.8\columnwidth}{!}{
\begin{tabular}{llllllll}
\toprule
 & $\Delta_{\epsilon}=-0.45$ & $\Delta_{\epsilon}=-0.2$ & $\Delta_{\epsilon}=-0.1$ & $\Delta_{\epsilon}=0.0$ & $\Delta_{\epsilon}=0.1$ & $\Delta_{\epsilon}=0.2$ & $\Delta_{\epsilon}=0.45$ \\

\midrule
COPP (Ours)            &              0.92$\pm$0.00 &             0.91$\pm$0.00 &             0.91$\pm$0.00 &             0.90$\pm$0.00 &             0.90$\pm$0.00 &             0.90$\pm$0.00 &              0.90$\pm$0.00 \\
WIS                  &               1.00$\pm$0.00 &              1.00$\pm$0.00 &              1.00$\pm$0.00 &             1.00$\pm$0.00 &             1.00$\pm$0.00 &             1.00$\pm$0.00 &              1.00$\pm$0.00 \\
SBA                  &               1.00$\pm$0.00 &              1.00$\pm$0.00 &              1.00$\pm$0.00 &             1.00$\pm$0.00 &             1.00$\pm$0.00 &             1.00$\pm$0.00 &              1.00$\pm$0.00 \\
\midrule
\midrule
COPP (GT behav policy) &              0.91$\pm$0.00 &              0.90$\pm$0.00 &              0.90$\pm$0.00 &             0.90$\pm$0.00 &             0.90$\pm$0.00 &             0.90$\pm$0.00 &             0.90$\pm$0.01 \\
CP (no policy shift) &              0.92$\pm$0.00 &             0.91$\pm$0.00 &             0.91$\pm$0.00 &             0.90$\pm$0.00 &             0.90$\pm$0.00 &            0.89$\pm$0.00 &             0.88$\pm$0.00 \\
\bottomrule
\end{tabular}
}
\end{sc}
\end{small}
\end{center}
\end{table}

\begin{table}[h!]
\begin{center}
\begin{small}
\begin{sc}
\caption{Letter dataset results}
\label{tab:letter}
\resizebox{0.8\columnwidth}{!}{
\begin{tabular}{llllllll}
\toprule
 & $\Delta_{\epsilon}=-0.45$ & $\Delta_{\epsilon}=-0.2$ & $\Delta_{\epsilon}=-0.1$ & $\Delta_{\epsilon}=0.0$ & $\Delta_{\epsilon}=0.1$ & $\Delta_{\epsilon}=0.2$ & $\Delta_{\epsilon}=0.45$ \\

\midrule
COPP (Ours)            &              0.95$\pm$0.00 &             0.93$\pm$0.00 &             0.93$\pm$0.00 &            0.92$\pm$0.00 &            0.92$\pm$0.00 &            0.92$\pm$0.00 &             0.91$\pm$0.00 \\
WIS                  &               1.00$\pm$0.00 &              1.00$\pm$0.00 &              1.00$\pm$0.00 &             1.00$\pm$0.00 &             1.00$\pm$0.00 &             1.00$\pm$0.00 &              1.00$\pm$0.00 \\
SBA                  &              0.97$\pm$0.00 &              1.00$\pm$0.00 &              1.00$\pm$0.00 &             1.00$\pm$0.00 &             1.00$\pm$0.00 &             1.00$\pm$0.00 &              1.00$\pm$0.00 \\
\midrule
\midrule
COPP (GT behav policy) &              0.92$\pm$0.00 &             0.91$\pm$0.00 &             0.91$\pm$0.00 &             0.90$\pm$0.00 &            0.89$\pm$0.00 &            0.89$\pm$0.00 &             0.88$\pm$0.00 \\
CP (no policy shift) &              0.99$\pm$0.00 &             0.94$\pm$0.00 &             0.92$\pm$0.00 &             0.90$\pm$0.00 &            0.88$\pm$0.00 &            0.86$\pm$0.00 &             0.81$\pm$0.00 \\
\bottomrule
\end{tabular}
}
\end{sc}
\end{small}
\end{center}
\end{table}

\begin{table}[h!]
\begin{center}
\begin{small}
\begin{sc}
\caption{Optdigits dataset results}
\label{tab:optdigits}
\resizebox{0.8\columnwidth}{!}{
\begin{tabular}{llllllll}
\toprule
 & $\Delta_{\epsilon}=-0.45$ & $\Delta_{\epsilon}=-0.2$ & $\Delta_{\epsilon}=-0.1$ & $\Delta_{\epsilon}=0.0$ & $\Delta_{\epsilon}=0.1$ & $\Delta_{\epsilon}=0.2$ & $\Delta_{\epsilon}=0.45$ \\
\midrule
COPP (Ours)            &              0.93$\pm$0.00 &             0.93$\pm$0.00 &             0.93$\pm$0.00 &            0.93$\pm$0.00 &            0.93$\pm$0.00 &            0.93$\pm$0.00 &             0.93$\pm$0.00 \\
WIS                  &             0.99$\pm$0.01 &              1.00$\pm$0.00 &              1.00$\pm$0.00 &             1.00$\pm$0.00 &             1.00$\pm$0.00 &             1.00$\pm$0.00 &              1.00$\pm$0.00 \\
SBA                  &              0.97$\pm$0.00 &              1.00$\pm$0.00 &              1.00$\pm$0.00 &             1.00$\pm$0.00 &             1.00$\pm$0.00 &             1.00$\pm$0.00 &             0.99$\pm$0.00 \\
\midrule
\midrule
COPP (GT behav policy) &              0.91$\pm$0.00 &              0.90$\pm$0.00 &              0.90$\pm$0.00 &             0.90$\pm$0.00 &             0.90$\pm$0.00 &            0.89$\pm$0.00 &             0.89$\pm$0.00 \\
CP (no policy shift) &              0.97$\pm$0.00 &             0.93$\pm$0.00 &             0.91$\pm$0.00 &             0.90$\pm$0.00 &            0.88$\pm$0.00 &            0.87$\pm$0.00 &             0.83$\pm$0.00 \\
\bottomrule
\end{tabular}
}
\end{sc}
\end{small}
\end{center}
\end{table}

\begin{table}[h!]
\begin{center}
\begin{small}
\begin{sc}
\caption{Pendigits dataset results}
\label{tab:pendigits}
\resizebox{0.8\columnwidth}{!}{
\begin{tabular}{llllllll}
\toprule
 & $\Delta_{\epsilon}=-0.45$ & $\Delta_{\epsilon}=-0.2$ & $\Delta_{\epsilon}=-0.1$ & $\Delta_{\epsilon}=0.0$ & $\Delta_{\epsilon}=0.1$ & $\Delta_{\epsilon}=0.2$ & $\Delta_{\epsilon}=0.45$ \\
\midrule
COPP (Ours)            &              0.92$\pm$0.00 &             0.92$\pm$0.00 &             0.92$\pm$0.00 &            0.92$\pm$0.00 &            0.92$\pm$0.00 &            0.92$\pm$0.00 &             0.91$\pm$0.00 \\
WIS                  &               1.00$\pm$0.00 &              1.00$\pm$0.00 &              1.00$\pm$0.00 &             1.00$\pm$0.00 &             1.00$\pm$0.00 &             1.00$\pm$0.00 &              1.00$\pm$0.00 \\
SBA                  &              0.97$\pm$0.00 &              1.00$\pm$0.00 &              1.00$\pm$0.00 &             1.00$\pm$0.00 &             1.00$\pm$0.00 &             1.00$\pm$0.00 &             0.99$\pm$0.00 \\
\midrule
\midrule
COPP (GT behav policy) &              0.91$\pm$0.00 &              0.90$\pm$0.00 &              0.90$\pm$0.00 &             0.90$\pm$0.00 &             0.90$\pm$0.00 &            0.89$\pm$0.00 &             0.89$\pm$0.00 \\
CP (no policy shift) &              0.99$\pm$0.00 &             0.94$\pm$0.00 &             0.92$\pm$0.00 &             0.90$\pm$0.00 &            0.88$\pm$0.00 &            0.86$\pm$0.00 &             0.81$\pm$0.00 \\
\bottomrule
\end{tabular}
}
\end{sc}
\end{small}
\end{center}
\end{table}

\begin{table}[h!]
\begin{center}
\begin{small}
\begin{sc}
\caption{Satimage dataset results}
\label{tab:satimage}
\resizebox{0.8\columnwidth}{!}{
\begin{tabular}{llllllll}
\toprule
 & $\Delta_{\epsilon}=-0.45$ & $\Delta_{\epsilon}=-0.2$ & $\Delta_{\epsilon}=-0.1$ & $\Delta_{\epsilon}=0.0$ & $\Delta_{\epsilon}=0.1$ & $\Delta_{\epsilon}=0.2$ & $\Delta_{\epsilon}=0.45$ \\

\midrule
COPP (Ours)            &              0.92$\pm$0.00 &             0.91$\pm$0.00 &             0.91$\pm$0.00 &            0.91$\pm$0.00 &            0.91$\pm$0.00 &            0.91$\pm$0.00 &             0.91$\pm$0.00 \\
WIS                  &               1.00$\pm$0.00 &              1.00$\pm$0.00 &              1.00$\pm$0.00 &             1.00$\pm$0.00 &             1.00$\pm$0.00 &             1.00$\pm$0.00 &              1.00$\pm$0.00 \\
SBA                  &              0.98$\pm$0.00 &              1.00$\pm$0.00 &              1.00$\pm$0.00 &             1.00$\pm$0.00 &             1.00$\pm$0.00 &             1.00$\pm$0.00 &             0.99$\pm$0.00 \\
\midrule
\midrule
COPP (GT behav policy) &               0.90$\pm$0.00 &              0.90$\pm$0.00 &              0.90$\pm$0.00 &             0.90$\pm$0.00 &             0.90$\pm$0.00 &             0.90$\pm$0.00 &             0.89$\pm$0.00 \\
CP (no policy shift) &              0.97$\pm$0.00 &             0.93$\pm$0.00 &             0.92$\pm$0.00 &             0.90$\pm$0.00 &            0.88$\pm$0.00 &            0.87$\pm$0.00 &             0.83$\pm$0.00 \\
\bottomrule
\end{tabular}
}
\end{sc}
\end{small}
\end{center}
\end{table}

\clearpage

\section{How the miscoverage depends on $\hat{P}(y\mid x, a)$}
\begin{proposition}
Let
\begin{align*}
    \tilde{w}(x, y) \coloneqq \frac{\int \hat{P}(y\mid x, a)\pi^*(a\mid x)\mathrm{d}a}{\int \hat{P}(y\mid x, a)\pi^b(a\mid x)\mathrm{d}a}.
\end{align*}
Assume that
$\hat{P}(y\mid x, a)/P(y\mid x, a) \in [1/\Gamma, \Gamma]$ for some $\Gamma \geq 1$.
Then, $$\Delta_w \coloneqq \tfrac{1}{2}\expb \mid \tilde{w}(X,Y) - w(X,Y)\mid \leq \Gamma^2 - 1.$$
\end{proposition}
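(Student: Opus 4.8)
The goal is to control $\Delta_w = \tfrac12 \expb|\tilde{w}(X,Y) - w(X,Y)|$ under the multiplicative error assumption $\hat{P}(y\mid x,a)/P(y\mid x,a) \in [1/\Gamma, \Gamma]$. The natural first step is to observe that the true weight is $w(x,y) = \big(\int P(y\mid x,a)\pi^*(a\mid x)\,\mathrm{d}a\big)/\big(\int P(y\mid x,a)\pi^b(a\mid x)\,\mathrm{d}a\big)$, and that $\tilde{w}(x,y)$ has the identical form with $\hat{P}$ in place of $P$. So I would bound the numerator and denominator of $\tilde w$ separately: since $\hat P(y\mid x,a) \le \Gamma\, P(y\mid x,a)$ pointwise and the policies are nonnegative, $\int \hat P(y\mid x,a)\pi^*(a\mid x)\,\mathrm{d}a \le \Gamma \int P(y\mid x,a)\pi^*(a\mid x)\,\mathrm{d}a$, and similarly $\int \hat P(y\mid x,a)\pi^b(a\mid x)\,\mathrm{d}a \ge \Gamma^{-1}\int P(y\mid x,a)\pi^b(a\mid x)\,\mathrm{d}a$. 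Dividing gives $\tilde{w}(x,y) \le \Gamma^2\, w(x,y)$ pointwise, and by the symmetric argument $\tilde{w}(x,y) \ge \Gamma^{-2}\, w(x,y)$. Thus $|\tilde w(x,y) - w(x,y)| \le (\Gamma^2 - 1)\, w(x,y)$ for $\mathbb P_{\pi^b}$-almost every $(x,y)$ (using $\Gamma^2 - 1 \ge 1 - \Gamma^{-2}$).

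The second step is to take the $\mathbb P_{\pi^b}$-expectation and use the defining property of the weight, namely that $w$ is the Radon--Nikodym derivative $\mathrm{d}P^{\pi^*}_{X,Y}/\mathrm{d}P^{\pi^b}_{X,Y}$, so that $\expb[w(X,Y)] = 1$. This yields
\begin{align*}
    \Delta_w = \tfrac12 \expb|\tilde w(X,Y) - w(X,Y)| \le \tfrac12 (\Gamma^2 - 1)\, \expb[w(X,Y)] = \tfrac12(\Gamma^2 - 1),
\end{align*}
which actually gives the slightly stronger bound $\tfrac12(\Gamma^2-1)$; to recover the stated $\Gamma^2 - 1$ it suffices to note the claimed inequality is weaker, so we are done. (If the paper's intended constant really is $\Gamma^2-1$ without the factor $\tfrac12$, the looser bound $|\tilde w - w| \le (\Gamma^2-1)w$ together with $\Delta_w$ carrying an explicit $\tfrac12$ already suffices; I would simply state the sharper $\tfrac12(\Gamma^2-1)$ and note it implies the claim.)

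The only genuinely delicate point is the pointwise sandwich $\Gamma^{-2} w \le \tilde w \le \Gamma^2 w$: one must be careful that the denominators are strictly positive so the division is legitimate, which follows from the support/overlap assumptions already in force (the behaviour policy has the target policy's support, and $P(y\mid x,a) > 0$ on the relevant region), and that $\hat P(y\mid x,a)/P(y\mid x,a) \in [1/\Gamma,\Gamma]$ holds pointwise in $a$ so the bound survives integration against the policy. Everything else — monotonicity of the integral, the ratio bound, taking expectations, invoking $\expb[w] = 1$ — is routine. I do not anticipate any real obstacle; the main thing to get right is bookkeeping on which direction each $\Gamma$ goes in the numerator versus the denominator.
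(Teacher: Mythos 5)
Your proof is correct and follows essentially the same route as the paper: bound the numerator and denominator of $\tilde{w}$ separately to get the pointwise sandwich $\Gamma^{-2} w \leq \tilde{w} \leq \Gamma^2 w$, deduce $|\tilde{w}-w| \leq (\Gamma^2-1)\,w$, and take the $\pi^b$-expectation using $\expb[w(X,Y)]=1$. Your observation that this actually yields the sharper constant $\tfrac12(\Gamma^2-1)$ is also consistent with the paper's argument, which simply states the weaker bound $\Gamma^2-1$.
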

\begin{proof}
In this proof, we investigate the error of the weights as a function of the error in $\hat{P}(y\mid x, a)$. Therefore, to isolate this effect we ignore the Monte Carlo error, and assume known behavioural policy $\pi^b$.

Under the assumption above, we have that

\begin{align*}
    \frac{1/ \Gamma \int P(y\mid x, a)\pi^*(a\mid x)\mathrm{d}a}{\Gamma \int P(y\mid x, a)\pi^b(a\mid x)\mathrm{d}a} \leq &\tilde{w}(x, y) \leq \frac{\Gamma \int P(y\mid x, a)\pi^*(a\mid x)\mathrm{d}a}{1/\Gamma \int P(y\mid x, a)\pi^b(a\mid x)\mathrm{d}a}.\\
    \implies \frac{1}{\Gamma^2} w(x, y) \leq &\tilde{w}(x, y) \leq \Gamma^2 w(x, y)
\end{align*}
This means that, 

\begin{align*}
    \left(\frac{1}{\Gamma^2}-1 \right) w(x, y) \leq &\tilde{w}(x, y) - w(x, y) \leq (\Gamma^2 - 1) w(x, y)
\end{align*}
So, 
\begin{align*}
    \mid \tilde{w}(x, y) - w(x, y)\mid \leq (\Gamma^2 - 1) w(x, y)
\end{align*}
And therefore, 
\begin{align*}
    \expb \mid \tilde{w}(X,Y) - w(X,Y)\mid \leq (\Gamma^2 - 1) \expb[w(X, Y)] = \Gamma^2 - 1
\end{align*}
\end{proof}

        \chapter{\label{app:causal}Causal Falsification of Digital Twins}

\minitoc

\section{Notation} \label{sec:notation}

\begin{tabularx}{\linewidth}{l X}
$z_{\tx:\tx'}$ & The sequence of elements $(z_\tx, \ldots, z_{\tx'})$ (or the empty sequence when $\tx > \tx'$) \\
$\mathcal{Z}_{\tx:\tx'}$ (where each $\mathcal{Z}_{i}$ is a set) & The cartesian product $\mathcal{Z}_{\tx} \times \cdots \times \mathcal{Z}_{\tx'}$ (or the empty set when $\tx > \tx'$) \\
$Z_{\tx:\tx'}(\ax_{1:\tx'})$ & The sequence of potential outcomes $Z_\tx(\ax_{1:\tx}), \ldots, Z_{\tx'}(\ax_{1:\tx'})$ (or the empty sequence when $\tx > \tx'$) \\
$\Law[Z]$ & The distribution of the random variable $Z$ \\
$\Law[Z \mid M]$ & The conditional distribution of $Z$ given $M$, where $M$ is either an event or a random variable \\
$Z \eqas Z'$ & The random variables $Z$ and $Z'$ are almost surely equal, i.e.\ $\Prob(Z = Z') = 1$ \\
$Z \ci Z'$ & The random variables $Z$ and $Z'$ are independent \\
$Z \ci Z' \mid Z''$ & The random variables $Z$ and $Z'$ are conditionally independent given the random variable $Z''$ \\
$\ind(E)$ & Indicator function of some event $E$ %
\end{tabularx}

\section{Proof of Proposition \ref{prop:interventional-correctness-alternative-characterisation} (unconditional form of interventional correctness)} \label{sec:unconditional-interventional-correctness-proof}

\begin{proof}
Fix any choice of $\ax_{1:\T} \in \Aspace_{1:\T}$.
By disintegrating $\Law[\X_0, \Xt_{1:\T}(\X_0, \ax_{1:\T})]$ and $\Law[\X_{0:\T}(\ax_{1:\T})]$ along their common $\Xspace_0$-marginal (which is namely $\Law[\X_0]$), it holds that
\begin{equation} \label{eq:interventional-correctness-alternative}
    \Law[\X_0, \Xt_{1:\T}(\X_0, \ax_{1:\T})] = \Law[\X_{0:\T}(\ax_{1:\T})]
\end{equation}
if and only if
\begin{equation} \label{eq:joint-interventional-correctness-proof-intermediate-step}
    \Law[\Xt_{1:\T}(\X_0, \ax_{1:\T}) \mid \X_0 = \xx_0] = \Law[\X_{1:\T}(\ax_{1:\T}) \mid \X_0 = \xx_0]
\end{equation}
for $\Law[\X_0]$-almost all $\xx_0 \in \Xspace_0$.
But now, our definition of $\Xt_{1:\T}(\xx_0, \ax_{1:\T})$ in terms of $\twinfunction_\tx$ and $\twinnoise_{1:\tx}$ means we can write $\Xt_{1:\T}(\X_0, \ax_{1:\T}) = \boldsymbol{\twinfunction}(\X_0, \ax_{1:\T}, \twinnoise_{1:\T})$,
where
\[
    \boldsymbol{\twinfunction}(\xx_0, \ax_{1:\T}, \ux_{1:\T}) \coloneqq (\twinfunction_1(\xx_0, \ax_1, \ux_1), \ldots, \twinfunction_\T(\xx_0, \ax_{1:\T}, \ux_{1:\T})).
\]
For all $\xx_0 \in \Xspace_0$ and measurable $\B_{1:\T} \subseteq \Xspace_{1:\T}$, we then have
\begin{align*}
    \Law[\Xt_{1:\T}(\xx_0, \ax_{1:\T})](\B_{1:\T}) &= \E[\ind(\boldsymbol{\twinfunction}(\xx_0, \ax_{1:\T}, \twinnoise_{1:\T}) \in \B_{1:\T})] \\
    &= \int \ind(\boldsymbol{\twinfunction}(\xx_0, \ax_{1:\T}, \ux_{1:\T}) \in \B_{1:\T}) \, \Law[\twinnoise_{1:\T}](\dee \ux_{1:\T}).
\end{align*}
It is standard to show that the right-hand side is a Markov kernel in $\xx_0$ and $\B_{1:\T}$.
Moreover, for any measurable $\B_0 \subseteq \Xspace_0$, we have
\begin{align*}
    &\int_{\B_0} \Law[\Xt_{1:\T}(\xx_0, \ax_{1:\T})](\B_{1:\T}) \, \Law[\X_0](\dee \xx_0) \\
        &\qquad= \int_{\B_0} \left[\int \ind(\boldsymbol{\twinfunction}(\xx_0, \ax_{1:\T}, \ux_{1:\T}) \in \B_{1:\T}) \, \Law[\twinnoise_{1:\T}](\dee \ux_{1:\T}) \right] \, \Law[\X_0](\dee \xx_0) \\
        &\qquad= \int \ind(\xx_0 \in \B_0, \boldsymbol{\twinfunction}(\xx_0, \ax_{1:\T}, \ux_{1:\T}) \in \B_{1:\T}) \, \Law[\X_0, \twinnoise_{1:\T}](\dee \xx_0, \dee \ux_{1:\T}) \\
        &\qquad= \Law[\X_0, \Xt_{1:\T}(\X_0, \ax_{1:\T})](\B_{0:\T}),
\end{align*}
where the second step follows because $\X_0 \ci \twinnoise_{1:\T}$.
It therefore follows that $(\xx_0, \B_{1:\T}) \mapsto \Law[\Xt_{1:\T}(\xx_0, \ax_{1:\T})](\B_{1:\T})$ is a regular conditional distribution of $\Xt_{1:\T}(\X_0, \ax_{1:\T})$ given $\X_0$, i.e.\
\[
    \Law[\Xt_{1:\T}(\xx_0, \ax_{1:\T})] = \Law[\Xt_{1:\T}(\X_0, \ax_{1:\T}) \mid \X_0 = \xx_0] \qquad \text{for $\Law[\X_0]$-almost all $\xx_0 \in \Xspace_0$.}
\]
Substituting this into \eqref{eq:joint-interventional-correctness-proof-intermediate-step}, we see that \eqref{eq:interventional-correctness-alternative} holds if and only if
\[
    \Law[\Xt_{1:\T}(\xx_0, \ax_{1:\T})] = \Law[\X_{1:\T}(\ax_{1:\T}) \mid \X_0 = \xx_0]
\]
for $\Law[\X_0]$-almost all $\xx_0 \in \Xspace_0$.
The result now follows since $\ax_{1:\T}$ was arbitrary.
\end{proof}

\section{Online prediction} \label{sec:online-prediction}

\subsection{Correctness in the online setting}

A distinguishing feature of many digital twins is their ability to integrate real-time information obtained from sensors in their environment \citep{barricelli2019survey}.
It is therefore relevant to consider a setting in which a twin is used repeatedly to make a sequence of predictions over time, each time taking all previous information into account.
One way to formalize this is to instantiate our model for the twin at each timestep.
For example, we could represent the predictions made by the twin at $\tx = 0$ after observing initial covariates $\xx_0$ as potential outcomes $(\Xt^1_{1:\T}(\xx_0, \ax_{1:\T}) : \ax_{1:\T} \in \Aspace_{1:\T})$, similar to what we did in the main text.
We could then represent the predictions made by the twin after some action $\ax_1$ is taken and an additional observation $\xx_1$ is made via potential outcomes $(\Xt^2_{2:\T}(\xx_{0:1}, \ax_{1:\T}) : \ax_{2:\T} \in \Aspace_{2:\T})$.
More generally, for $\tx \in \{1, \ldots, \T\}$, we could introduce potential outcomes $(\Xt^{\tx}_{\tx:\T}(\xx_{0:\tx-1}, \ax_{1:\T}) : \ax_{\tx:\T} \in \Aspace_{\tx:\T})$ to represent the predictions that the twin would make at time $\tx$ after the observations $\xx_{0:\tx-1}$ are made and the actions $\ax_{1:\tx-1}$ are taken.

This extended model requires a new definition of correctness than our Definition \ref{eq:interventional-correctness} from the main text.
A natural approach is to say that the twin is correct in this new setting if
\begin{equation}
    \Law[\Xt_{\tx:\T}^\tx(\xx_{0:\tx-1}, \ax_{1:\T})]
        = \Law[\X_{\tx:\T}(\ax_{1:\T}) \mid \X_{0:\tx-1}(\ax_{1:\tx-1}) = \xx_{0:\tx-1}] \label{eq:online-interventional-correctness-def}
\end{equation}
for all $\tx \in \{1, \ldots, \T\}$, $\ax_{1:\T} \in \Aspace_{1:\T}$, and $\Law[\X_{0:\tx-1}(\ax_{1:\tx-1})]$-almost all $\xx_{0:\tx-1} \in \Xspace_{0:\tx-1}$.
A twin with this property would at each step be able to accurately simulate the future in light of previous information, use this to choose a next action to take, observe the result of doing so, and then repeat.
It is possible to show that \eqref{eq:online-interventional-correctness-def} holds if and only if we have
\begin{align*}
    \Law[\Xt^1_{1:\T}(\xx_0, \ax_{1:\T})] &= \Law[\X_{1:\T}(\ax_{1:\T}) \mid \X_{0}=\xx_0] \\
    \Law[\Xt^\tx_{\tx:\T}(\xx_{0:\tx-1}, \ax_{1:\T})]
        &= \Law[\Xt^1_{\tx:\T}(\xx_0, \ax_{1:\T}) \mid \Xt_{1:\tx-1}^1(\xx_0, \ax_{1:\tx-1}) = \xx_{1:\tx-1}]
\end{align*}
for all $\tx \in \{1, \ldots, \T\}$, $\ax_{1:\T} \in \Aspace_{1:\T}$, $\Law[\X_0]$-almost all $\xx_0 \in \Xspace_0$, and $\Law[\Xt_{1:\tx-1}^1(\xx_0, \ax_{1:\tx-1})]$-almost all $\xx_{1:\tx-1} \in \Xspace_{1:\tx-1}$.
The first condition here says that $\Xt^1_{1:\T}(\xx_0, \ax_{1:\T})$ must be interventionally correct in the sense of Definition \ref{eq:interventional-correctness} from the main text.
The second condition says that the predictions made by the twin across different timesteps must be internally consistent with each other insofar as their conditional distributions must align.
This holds automatically in many circumstances, such as if the predictions of the twin are obtained from a Bayesian model (for example), and otherwise could be checked numerically given the ability to run simulations from the twin, without the need to obtain data or refer to the real-world process in any way.
As such, the problem of assessing the correctness of the twin in this new sense primarily reduces to the problem of assessing the correctness of $\Xt^1_{1:\T}(\xx_0, \ax_{1:\T})$ in the sense of Definition \ref{eq:interventional-correctness} in the main text, which motivates our focus on that condition.

\subsection{Alternative notions of online correctness} \label{sec:online-correctness-alternative-notion-supp}

An important and interesting subtlety arises in this context that is worth noting.
In general it does not follow that a twin correct in the sense of \eqref{eq:online-interventional-correctness-def} satisfies
\begin{equation}
    \Law[\Xt_{\tx:\T}^\tx(\xx_{0:\tx-1}, \ax_{1:\T})] \
        = \Law[\X_{\tx:\T}(\ax_{1:\T}) \mid \X_{0:\tx-1}(\ax_{1:\tx-1}) = \xx_{0:\tx-1}, \A_{1:\tx-1} = \ax_{1:\tx-1}] \label{eq:online-interventional-correctness-alt}
\end{equation}
for all $\ax_{1:\T} \in \Aspace_{1:\T}$, and $\Law[\X_{0:\tx-1}(\ax_{1:\tx-1}) \mid \A_{1:\tx-1} = \ax_{1:\tx-1}]$-almost all $\xx_{0:\tx-1} \in \Xspace_{0:\tx-1}$, 
since in general it does not hold that 
\begin{multline*}
    \Law[\X_{\tx:\T}(\ax_{1:\T}) \mid \X_{0:\tx-1}(\ax_{1:\tx-1}) = \xx_{0:\tx-1}]
        = \Law[\X_{\tx:\T}(\ax_{1:\T}) \mid \X_{0:\tx-1}(\ax_{1:\tx-1}) = \xx_{0:\tx-1}, \A_{1:\tx-1} = \ax_{1:\tx-1}].
\end{multline*}
for all $\ax_{1:\T} \in \Aspace_{1:\T}$ and $\Law[\X_{0:\tx-1}(\ax_{1:\tx-1}) \mid \A_{1:\tx-1} = \ax_{1:\tx-1}]$-almost all $\xx_{0:\tx-1} \in \Xspace_{0:\tx-1}$ unless the actions $\A_{1:\tx-1}$ are unconfounded.
(Here as usual $\A_{1:\T}$ denotes the actions of a behavioural agent; see Section \ref{sec:data-driven-twin-assessment} of the main text.)
In other words, a twin that is correct in the sense of \eqref{eq:online-interventional-correctness-def} will make accurate predictions at time $\tx$ when every action taken before time $\tx$ was unconfounded (as occurs for example when the twin is directly in control of the decision-making process), but in general not when certain taken actions before time $\tx$ were chosen by a behavioural agent with access to more context than is available to the twin (as may occur for example when the twin is used as a decision-support tool).
However, should it be desirable, our framework could be extended to encompass the alternative condition in \eqref{eq:online-interventional-correctness-alt} by relabelling the observed history $(\X_{0:\tx-1}(\A_{1:\tx-1}), \A_{1:\tx-1})$ as $\X_0$, and then assessing the correctness of the potential outcomes $\Xt_{\tx:\T}^\tx(\xx_{0:\tx-1}, \ax_{1:\T})$ in the sense of Definition \ref{eq:interventional-correctness} from the main text.

Overall, the ``right'' notion of correctness in this online setting is to some extent a design choice.
We believe our causal approach to twin assessment provides a useful framework for formulating and reasoning about these possibilities, and consider the investigation of assessment strategies for additional usage regimes to be an interesting direction for future work.

\section{Proof of Theorem \ref{prop:nonidentifiability} (interventional distributions are not identifiable)} \label{sec:non-identifiability-result-proof-supp}

It is well-known in the causal inference literature that the interventional behaviour of the real-world process cannot be uniquely identified from observational data.
For completeness, we now provide a self-contained proof of this result in our notation.
Our statement here is lengthier than Theorem \ref{prop:nonidentifiability} in the main text in order to clarify what is meant by ``uniquely identified'': intuitively, the idea is that there always exist distinct families of potential outcomes whose interventional behaviours differ and yet give rise to the same observational data.
\begin{theorem}
    Suppose we have $\ax_{1:\T} \in \Aspace_{1:\T}$ such that $\Prob(\A_{1:\T} \neq \ax_{1:\T}) > 0$.
    Then there exist potential outcomes $(\tilde{\X}_{0:\T}(\ax_{1:\T}') : \ax_{1:\T}' \in \Aspace_{1:\T})$ such that %
    \begin{equation} \label{eq:nonidentifiability-proof-almost-sure-equality}
        (\tilde{\X}_{0:\T}(\A_{1:\T}), \A_{1:\T}) \eqas (\X_{0:\T}(\A_{1:\T}), \A_{1:\T}).
    \end{equation}
    but for which $\Law[\tilde{\X}_{0:\T}(\ax_{1:\tx})] \neq \Law[\X_{0:\T}(\ax_{1:\tx})]$.
\end{theorem}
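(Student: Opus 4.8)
The plan is to construct $(\tilde{\X}_{0:\T}(\ax_{1:\T}') : \ax_{1:\T}' \in \Aspace_{1:\T})$ by leaving every potential outcome \emph{except} the one indexed by $\ax_{1:\T}$ exactly as it is, and replacing only $\X_{0:\T}(\ax_{1:\T})$ by a fresh random variable whose distribution we are free to choose, subject only to agreeing with the original on the event $\{\A_{1:\T} = \ax_{1:\T}\}$. Concretely, I would first enlarge the probability space (if necessary) to carry an auxiliary random variable $W$, taking values in $\Xspace_{0:\T}$, that is independent of everything already present and whose distribution $\nu \coloneqq \Law[W]$ will be chosen at the end. Then define
\begin{align*}
    \tilde{\X}_{0:\T}(\ax_{1:\T}') &\coloneqq \X_{0:\T}(\ax_{1:\T}') \qquad \text{for all } \ax_{1:\T}' \neq \ax_{1:\T}, \\
    \tilde{\X}_{0:\T}(\ax_{1:\T}) &\coloneqq \ind(\A_{1:\T} = \ax_{1:\T}) \, \X_{0:\T}(\ax_{1:\T}) + \ind(\A_{1:\T} \neq \ax_{1:\T}) \, W.
\end{align*}
(This is coordinatewise; the indicator multiplication just means ``pick $\X_{0:\T}(\ax_{1:\T})$ on one event and $W$ on the complement''.) This is a legitimate family of potential outcomes jointly defined with $\A_{1:\T}$.

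The next step is to verify \eqref{eq:nonidentifiability-proof-almost-sure-equality}. On the event $\{\A_{1:\T} = \ax_{1:\T}\}$ we have $\tilde{\X}_{0:\T}(\A_{1:\T}) = \tilde{\X}_{0:\T}(\ax_{1:\T}) = \X_{0:\T}(\ax_{1:\T}) = \X_{0:\T}(\A_{1:\T})$ by construction; on the event $\{\A_{1:\T} = \ax_{1:\T}'\}$ for any $\ax_{1:\T}' \neq \ax_{1:\T}$ we have $\tilde{\X}_{0:\T}(\A_{1:\T}) = \tilde{\X}_{0:\T}(\ax_{1:\T}') = \X_{0:\T}(\ax_{1:\T}') = \X_{0:\T}(\A_{1:\T})$ since those coordinates were left untouched. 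Since $\A_{1:\T}$ takes values in the finite set $\Aspace_{1:\T}$, these events partition the sample space, so $\tilde{\X}_{0:\T}(\A_{1:\T}) = \X_{0:\T}(\A_{1:\T})$ everywhere, and of course $\A_{1:\T} = \A_{1:\T}$, giving \eqref{eq:nonidentifiability-proof-almost-sure-equality}.

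It remains to choose $\nu$ so that $\Law[\tilde{\X}_{0:\T}(\ax_{1:\T})] \neq \Law[\X_{0:\T}(\ax_{1:\T})]$. Writing $p \coloneqq \Prob(\A_{1:\T} = \ax_{1:\T})$, which satisfies $1 - p = \Prob(\A_{1:\T} \neq \ax_{1:\T}) > 0$ by hypothesis, and using the independence of $W$ from $(\X_{0:\T}(\ax_{1:\T}), \A_{1:\T})$, a short computation gives
\begin{equation*}
    \Law[\tilde{\X}_{0:\T}(\ax_{1:\T})] = p \, \Law[\X_{0:\T}(\ax_{1:\T}) \mid \A_{1:\T} = \ax_{1:\T}] + (1 - p)\, \nu,
\end{equation*}
whereas $\Law[\X_{0:\T}(\ax_{1:\T})] = p \, \Law[\X_{0:\T}(\ax_{1:\T}) \mid \A_{1:\T} = \ax_{1:\T}] + (1 - p)\, \Law[\X_{0:\T}(\ax_{1:\T}) \mid \A_{1:\T} \neq \ax_{1:\T}]$ (the latter term being vacuous if $p = 1$, but $p < 1$ here). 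Subtracting, the two distributions are equal if and only if $(1-p)\,\nu = (1-p)\,\Law[\X_{0:\T}(\ax_{1:\T}) \mid \A_{1:\T} \neq \ax_{1:\T}]$, i.e.\ (since $1 - p > 0$) if and only if $\nu = \Law[\X_{0:\T}(\ax_{1:\T}) \mid \A_{1:\T} \neq \ax_{1:\T}]$. So it suffices to pick \emph{any} probability measure $\nu$ on $\Xspace_{0:\T}$ different from $\Law[\X_{0:\T}(\ax_{1:\T}) \mid \A_{1:\T} \neq \ax_{1:\T}]$; since $\Xspace_{0:\T} = \R^{\Xspacedim_0 + \cdots + \Xspacedim_\T}$ carries infinitely many distinct probability measures, such a choice always exists. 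With this $\nu$ we have $\Law[\tilde{\X}_{0:\T}(\ax_{1:\T})] \neq \Law[\X_{0:\T}(\ax_{1:\T})]$, completing the argument.

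The only mild subtlety — the ``hard part,'' though it is not really hard — is the bookkeeping around possibly enlarging the probability space to accommodate the independent auxiliary variable $W$; one should note explicitly that this does not disturb the joint law of the original random quantities, so statements like \eqref{eq:nonidentifiability-proof-almost-sure-equality} and $\Law[\X_{0:\T}(\ax_{1:\T})]$ are unaffected. Everything else is the elementary mixture-decomposition computation sketched above.
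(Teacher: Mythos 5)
Your overall strategy---coincide with the original trajectory on $\{\A_{1:\T} = \ax_{1:\T}\}$, alter the unobserved behaviour on the complement, and conclude via a mixture decomposition of the interventional law---is exactly the idea behind the paper's proof, and your final mixture computation is correct. However, there is a genuine structural flaw in the construction itself. In this model (per the paper's notation, $\X_{0:\T}(\ax_{1:\T})$ abbreviates $\X_0, \X_1(\ax_1), \ldots, \X_\T(\ax_{1:\T})$), a family of potential outcomes consists of a single action-free $\tilde{\X}_0$ together with variables $\tilde{\X}_\sx(\ax_{1:\sx}')$ indexed only by the actions up to time $\sx$. Your $\tilde{\X}_{0:\T}(\cdot)$ is instead defined freely as a map from \emph{full} action sequences to $\Xspace_{0:\T}$-valued vectors: for $\sx < \T$ (and for $\sx = 0$ in particular), the time-$\sx$ component of your $\tilde{\X}_{0:\T}(\ax_{1:\T})$ is $\ind(\A_{1:\T}=\ax_{1:\T})\,\X_\sx(\ax_{1:\sx}) + \ind(\A_{1:\T}\neq\ax_{1:\T})\,W_\sx$, whereas the time-$\sx$ component of $\tilde{\X}_{0:\T}(\ax_{1:\sx},\ax_{\sx+1:\T}')$ for another continuation is $\X_\sx(\ax_{1:\sx})$; these disagree with positive probability, so there is no single random variable $\tilde{\X}_\sx(\ax_{1:\sx})$ (and no single action-free $\tilde{\X}_0$) realising your family. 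The sentence ``this is a legitimate family of potential outcomes'' is therefore where the argument breaks: what you have built lies outside the assumed model class, while the theorem's content is precisely that the observational law fails to pin down the interventional law \emph{within} that class.

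The fix is small and brings you essentially to the paper's argument: localise the modification to a single timestep. Take any $\tx$ with $\Prob(\A_{1:\tx}\neq\ax_{1:\tx})>0$ (e.g.\ $\tx=\T$), keep $\tilde{\X}_0 \coloneqq \X_0$ and $\tilde{\X}_\sx(\ax_{1:\sx}') \coloneqq \X_\sx(\ax_{1:\sx}')$ for every other index, and set only $\tilde{\X}_\tx(\ax_{1:\tx}) \coloneqq \ind(\A_{1:\tx}=\ax_{1:\tx})\,\X_\tx(\ax_{1:\tx}) + \ind(\A_{1:\tx}\neq\ax_{1:\tx})\,W$ with $W$ taking values in $\Xspace_\tx$. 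Then \eqref{eq:nonidentifiability-proof-almost-sure-equality} and your mixture argument go through verbatim, provided $\Law[W] \neq \Law[\X_\tx(\ax_{1:\tx}) \mid \A_{1:\tx}\neq\ax_{1:\tx}]$. The paper does exactly this, except that instead of enlarging the probability space with an independent $W$ it uses a deterministic point $\xx_\tx \in \Xspace_\tx$ chosen so that $\Prob(\X_\tx(\ax_{1:\tx}) = \xx_\tx \mid \A_{1:\tx}\neq\ax_{1:\tx}) \neq 1$, so that the resulting Dirac component already changes the law; this avoids any auxiliary randomness while achieving the same conclusion.
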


\begin{proof}
    Our assumption that $\Prob(\A_{1:\T} \neq \ax_{1:\T}) > 0$ means there must exist some $\tx \in \{1, \ldots, \T\}$ such that $\Prob(\A_{1:\tx} \neq \ax_{1:\tx}) > 0$.
    Since $\Xspace_\tx = \R^{\Xspacedim_\tx}$, we may also choose some $\xx_\tx \in \Xspace_\tx$ with $\Prob(\X_\tx(\ax_{1:\tx}) = \xx_\tx \mid \A_{1:\tx} \neq \ax_{1:\tx}) \neq 1$.
    Then, for each $\sx \in \{0, \ldots, \T\}$ and $\ax_{1:\sx}' \in \Aspace_{1:\sx}$, define
    \[
         \tilde{\X}_{\sx}(\ax_{1:\sx}') \coloneqq \begin{cases}
            \ind(\A_{1:\tx} = \ax_{1:\tx}) \, \X_{\tx}(\ax_{1:\tx}) + \ind(\A_{1:\tx} \neq \ax_{1:\tx}) \, \xx_\tx & \text{if $\sx = \tx$ and $\ax_{1:\sx}' = \ax_{1:\tx}$} \\
            \X_{\sx}(\ax_{1:\sx}') & \text{otherwise},
         \end{cases}
    \]
    It is then easily checked that \eqref{eq:nonidentifiability-proof-almost-sure-equality} holds, but
    \begin{align*}
        \Law[\tilde{\X}_{\tx}(\ax_{1:\tx})] &= \Law[\tilde{\X}_{\tx}(\ax_{1:\tx}) \mid \A_{1:\tx} = \ax_{1:\tx}] \, \Prob(\A_{1:\tx} = \ax_{1:\tx}) + \Law[\tilde{\X}_{\tx}(\ax_{1:\tx}) \mid \A_{1:\tx} \neq \ax_{1:\tx}] \, \Prob(\A_{1:\tx} \neq \ax_{1:\tx}) \\
        &= \Law[X_{\tx}(\ax_{1:\tx}) \mid \A_{1:\tx} = \ax_{1:\tx}] \, \Prob(\A_{1:\tx} = \ax_{1:\tx}) + \mathrm{Dirac}(\xx_\tx) \, \Prob(\A_{1:\tx} \neq \ax_{1:\tx}) \\
        &\neq \Law[X_{\tx}(\ax_{1:\tx}) \mid \A_{1:\tx} = \ax_{1:\tx}] \, \Prob(\A_{1:\tx} = \ax_{1:\tx}) + \Law[\X_{\tx}(\ax_{1:\tx}) \mid \A_{1:\tx} \neq \ax_{1:\tx}] \, \Prob(\A_{1:\tx} \neq \ax_{1:\tx}) \\
        &= \Law[X_{\tx}(\ax_{1:\tx})],
    \end{align*}
    from which the result follows.
\end{proof}

\section{Deterministic potential outcomes are unconfounded} \label{eq:deterministic-potential-outcomes-are-unconfounded}

In this section we expand on our earlier claim that, if the real-world process is deterministic, then the observational data is unconfounded.
We first make this claim precise.
By ``deterministic'', we mean that there exist measurable functions $\gx_\tx$ for $\tx \in \{1, \ldots, \T\}$ such that
\begin{equation} \label{eq:potential-outcomes-are-deterministic}
    \X_\tx(\ax_{1:\tx}) \eqas \gx_\tx(\X_{0:\tx-1}(\ax_{1:\tx-1}), \ax_{1:\tx}) \qquad \text{for all $\tx \in \{1, \ldots, \T\}$ and $\ax_{1:\tx} \in \Aspace_{1:\tx}$.}
\end{equation}
By ``unconfounded'', we mean that the \emph{sequential randomisation assumption (SRA)} introduced by Robins \citep{robins1986new} holds, i.e.\
\begin{equation} \label{eq:actions-are-unconfounded}
    (\X_{\sx}(\ax_{1:\sx}) : \sx \in \{1, \ldots, \T\}, \ax_{1:\sx} \in \Aspace_{1:\sx}) \ci \A_\tx \mid \X_{0:\tx-1}(\A_{1:\tx-1}), \A_{1:\tx-1} \qquad \text{for all $\tx \in \{1, \ldots, \T\}$},
\end{equation}
where $\ci$ denotes conditional independence.
Intuitively, this says that, apart from the historical observations $(\X_{0:\tx-1}(\A_{1:\tx-1}), \A_{1:\tx-1})$, any additional factors that influence the agent's choice of action $\A_\tx$ are independent of the behaviour of the real-world process.
The SRA provides a standard formulation of the notion of unconfoundedness in longitudinal settings such as ours (see \cite[Chapter 5]{tsiatis2019dynamic} for a review).

It is now a standard exercise to show that \eqref{eq:potential-outcomes-are-deterministic} implies \eqref{eq:actions-are-unconfounded}.
We include a proof below for completeness.
Key to this is the following straightforward Lemma.

\begin{lemma}\label{lem:determinism_conditional_independence}
Suppose $U$ and $V$ are random variables such that, for some measurable function $g$, it holds that $U \eqas g(V)$.
Then, for any other random variable $W$, we have
\[
    U \ci W \mid V.
\]
\end{lemma}

\begin{proof}
By standard properties of conditional expectations, for any measurable sets $S_1$ and $S_2$, we have almost surely
\begin{align*}
    \Prob(U \in S_1, W \in S_2 \mid V) &= \E[\ind(g(V) \in S_1) \, \ind(W \in S_2) \mid V] \\
    &= \ind(g(V) \in S_1) \, \E[\ind(W \in S_2) \mid V] \\
    &= \E[\ind(U \in S_1) \mid V] \, \Prob(W \in S_2 \mid V) \\
    &= \Prob(U \in S_1 \mid V) \, \Prob(W \in S_2 \mid V),
\end{align*}
which gives the result.
\end{proof}

It is now easy to see that \eqref{eq:potential-outcomes-are-deterministic} implies \eqref{eq:actions-are-unconfounded}.
Indeed, by recursive substitution, it is straightforward to show that there exist measurable functions $\tilde{g}_\tx$ for $\tx \in \{1, \ldots, \T\}$ such that
\[
    \X_\tx(\ax_{1:\tx}) \eqas \tilde{g}_\tx(\X_{0}, \ax_{1:\tx}) \qquad \text{for all $\tx \in \{1, \ldots, \T\}$ and $\ax_{1:\tx} \in \Aspace_{1:\tx}$},
\]
and so
\[
    (\X_{\sx}(\ax_{1:\sx}) : \sx \in \{1, \ldots, \T\}, \ax_{1:\sx} \in \Aspace_{1:\sx})
        = (\tilde{g}_\tx(\X_{0}, \ax_{1:\sx}) : \sx \in \{1, \ldots, \T\}, \ax_{1:\sx} \in \Aspace_{1:\sx}).
\]
The right-hand side is now seen to be a measurable function of $\X_0$ and hence certainly of $(\X_{0:\tx-1}(\A_{1:\tx-1}), \A_{1:\tx-1})$, so that the result follows by Lemma \ref{lem:determinism_conditional_independence}.

\section{Motivating toy example} \label{sec:motivating-example}

We provide here a toy scenario that illustrates intuitively the pitfalls that may arise when assessing twins using observational data without properly accounting for causal considerations (including unmeasured confounding in particular).
Suppose a digital twin has been designed for a particular make of car, e.g.\ to facilitate autonomous driving \citep{allamaa2022sim2real}.
The twin simulates how quantities such as the velocity and fuel consumption of the car respond as certain inputs are applied to it, such as braking, acceleration, steering, etc.
We wish to assess the accuracy of this twin using a dataset obtained from a fleet of the same make.
The braking performance of these vehicles is significantly affected by the age of their brake pads: if these are fairly new, then an aggressive braking strategy will stop the car, while if these are old, then the same aggressive strategy will send the car into a skid that will reduce braking efficacy.
Brake pad age is not recorded in the data we have obtained, but \emph{was} known to the drivers who operated these vehicles (e.g.\ perhaps they were aware of how recently their car was serviced), and so the drivers of cars with old brake pads tended to avoid braking aggressively out of safety concerns.

A naive approach to twin assessment in this situation would directly compare the outputs of the twin with the data and conclude the twin is accurate if these match closely.
However, in this scenario, the data contains a spurious relationship between braking strategy and the performance of the car: since aggressive braking is only observed for cars with new brake pads, the data appears to show that aggressive braking is effective at stopping the car, while in fact this is not the case for cars with older brake pads.
As such, the naive assessment approach would yield misleading information about the twin: a twin that captures only the behaviour of cars with newer brake pads would appear to be correct, while a twin that captures the full range of possibilities (i.e.\ regardless of brake pad age) would deviate from the observational data and appear therefore less accurate.
Figure \ref{fig:syn_ex} illustrates this pictorially under a toy model for this scenario.

\begin{figure}
    \centering
    \includegraphics[height=5cm]{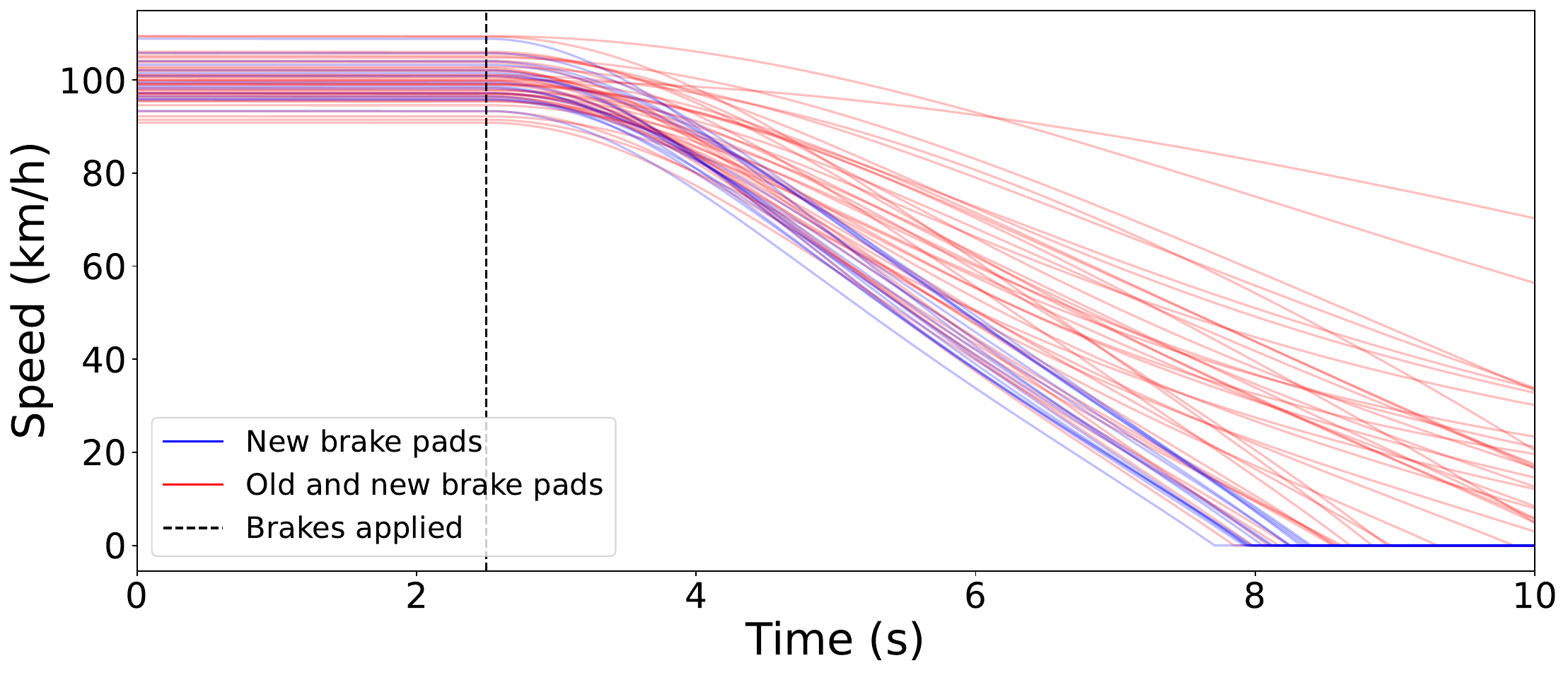}

    \caption{The discrepancy between observational data and interventional behavior. The data only show the effect of aggressive braking on cars with new brake pads (blue). This differs from what \emph{would} be observed if aggressive braking were applied to the entire fleet of cars, encompassing both those with old and new brake pads (red).}

    \label{fig:syn_ex}
\end{figure}

In the causal inference literature, any unmeasured quantity (e.g.\ brake pad age) that affects both some choice of action taken in the data (e.g.\ aggressive braking) and the resulting observation (e.g.\ speed) is referred to as an \emph{unmeasured confounder}.
In general, whenever an unmeasured confounder is present, a potential discrepancy arises between how the real-world process was observed to behave in the dataset and how it \emph{would} behave under certain interventions.
An obvious approach towards mitigating this possibility is to measure additional quantities that may affect the outcome of interest.
For example, if brake pad age were included in the data in the scenario above, then it would be possible to adjust for its effect on braking performance. 
However, in many cases, gathering additional data may be costly or impractical.
Moreover, even if this strategy is pursued, it is rarely possible to rule out the possibility of unmeasured confounding altogether, especially for complicated real-world problems \citep{tsiatis2019dynamic}. 
For example, in the scenario above, it is very conceivable that some other factor such as weather conditions could play a similar confounding role as brake pad age, and so would need also to be included in the data, and so on.
Analogous scenarios are also easily forthcoming for other application domains such as medicine and economics \citep{manski1995identification,tsiatis2019dynamic,hernan2020causal}.
As such, rather than attempting to sidestep the issue of unmeasured confounding, we instead propose a methodology for assessing twins using data that is robust to its presence. %

\section{Causal bounds} \label{sec:causal-bounds-proofs}

\subsection{Proof of Theorem \ref{thm:causal-bounds}}

\begin{proof}
We prove the lower bound; the upper bound is analogous.
It is easily checked that 
\begin{multline} \label{eq:causal-bounds-proof-first-step}
    \E[\Y(\ax_{1:\tx}) \mid \X_{0:\tx}(\ax_{1:\tx}) \in \B_{0:\tx}] \\
        = \E[\Y(\ax_{1:\tx}) \mid \X_{0:\tx}(\ax_{1:\tx}) \in \B_{0:\tx}, \A_{1:\tx} = \ax_{1:\tx}] \, \Prob(\A_{1:\tx} = \ax_{1:\tx} \mid \X_{0:\tx}(\ax_{1:\tx}) \in \B_{0:\tx}) \\
            + \E[\Y(\ax_{1:\tx}) \mid \X_{0:\tx}(\ax_{1:\tx}) \in \B_{0:\tx}, \A_{1:\tx} \neq \ax_{1:\tx}] \, \Prob(\A_{1:\tx} \neq \ax_{1:\tx} \mid \X_{0:\tx}(\ax_{1:\tx}) \in \B_{0:\tx}).
\end{multline}
If $\Prob(\A_{1:\tx} = \ax_{1:\tx} \mid \X_{0:\tx}(\ax_{1:\tx}) \in \B_{0:\tx}) > 0$, then
\begin{align*}
    \E[\Y(\ax_{1:\tx}) \mid \X_{0:\tx}(\ax_{1:\tx}) \in \B_{0:\tx}, \A_{1:\tx} = \ax_{1:\tx}]
        &= \E[\Y(\A_{1:\tx}) \mid \X_{0:\tx}(\A_{1:\tx}) \in \B_{0:\tx}, \A_{1:\tx} = \ax_{1:\tx}] \\
        &= \E[\Y(\A_{1:\N}) \mid \X_{0:\N}(\A_{1:\N}) \in \B_{0:\N}, \A_{1:\tx} = \ax_{1:\tx}],
\end{align*}
where the second step follows because $\Prob(\N = \tx \mid \A_{1:\tx} = \ax_{1:\tx}) = 1$.
Similarly, if $\Prob(\A_{1:\tx} \neq \ax_{1:\tx} \mid \X_{0:\tx}(\ax_{1:\tx}) \in \B_{0:\tx}) > 0$, then \eqref{eq:Y-boundedness-assumption} implies
\[
    \E[\Y(\ax_{1:\tx}) \mid \X_{0:\tx}(\ax_{1:\tx}) \in \B_{0:\tx}, \A_{1:\tx} \neq \ax_{1:\tx}]
        \geq \ylo.
\]
Substituting these results into \eqref{eq:causal-bounds-proof-first-step}, we obtain
\begin{multline} \label{eq:causal-bounds-proof-second-step}
    \E[\Y(\ax_{1:\tx}) \mid \X_{0:\tx}(\ax_{1:\tx}) \in \B_{0:\tx}] \\
    \geq \E[\Y(\A_{1:\tx}) \mid \X_{0:\N}(\A_{1:\N}) \in \B_{0:\N}, \A_{1:\tx} = \ax_{1:\tx}] \, \Prob(\A_{1:\tx} = \ax_{1:\tx} \mid \X_{0:\tx}(\ax_{1:\tx}) \in \B_{0:\tx}) \\
            + \ylo \, \Prob(\A_{1:\tx} \neq \ax_{1:\tx} \mid \X_{0:\tx}(\ax_{1:\tx}) \in \B_{0:\tx}).
\end{multline}
Now observe that the right-hand side of \eqref{eq:causal-bounds-proof-second-step} is a convex combination with mixture weights $\Prob(\A_{1:\tx} = \ax_{1:\tx} \mid \X_{0:\tx}(\ax_{1:\tx}) \in \B_{0:\tx})$ and $\Prob(\A_{1:\tx} \neq \ax_{1:\tx} \mid \X_{0:\tx}(\ax_{1:\tx}) \in \B_{0:\tx})$.
We can bound
\begin{align}
    \Prob(\A_{1:\tx} = \ax_{1:\tx} \mid \X_{0:\tx}(\ax_{1:\tx}) \in \B_{0:\tx})
        &= \frac{\Prob(\X_{0:\tx}(\ax_{1:\tx}) \in \B_{0:\tx}, \A_{1:\tx} = \ax_{1:\tx})}{\Prob(\X_{0:\tx}(\ax_{1:\tx}) \in \B_{0:\tx})} \notag \\
        &\geq \frac{\Prob(\X_{0:\tx}(\ax_{1:\tx}) \in \B_{0:\tx}, \A_{1:\tx} = \ax_{1:\tx})}{\Prob(\X_{0:\N}(\ax_{1:\N}) \in \B_{0:\N})} \notag \\
        &= \frac{\Prob(\X_{0:\N}(\A_{1:\N}) \in \B_{0:\N}, \A_{1:\tx} = \ax_{1:\tx})}{\Prob(\X_{0:\N}(\A_{1:\N}) \in \B_{0:\N})} \notag \\
        &= \Prob(\A_{1:\tx} = \ax_{1:\tx} \mid \X_{0:\N}(\A_{1:\N}) \in \B_{0:\N}), %
\end{align}
where the inequality holds because $\tx \geq \N$ almost surely, and the second equality holds because the definition of $\N$ means
\[
    \X_{0:\N}(\ax_{1:\N}) \eqas \X_{0:\N}(\A_{1:\N}).
\]
As such, we can bound the convex combination in \eqref{eq:causal-bounds-proof-second-step} from below by replacing its mixture weights with $\Prob(\A_{1:\tx} = \ax_{1:\tx} \mid \X_{0:\N}(\A_{1:\N}) \in \B_{0:\N})$ and $\Prob(\A_{1:\tx} \neq \ax_{1:\tx} \mid \X_{0:\N}(\ax_{1:\N}) \in \B_{0:\N})$, which shifts weight from the $\E[\Y(\A_{1:\tx}) \mid \A_{1:\tx} = \ax_{1:\tx}, \X_{0:\N}(\A_{1:\N}) \in \B_{0:\N}]$ term onto the $\ylo$ term.
This yields
\begin{align*}
    &\E[\Y(\ax_{1:\tx}) \mid \X_{0:\tx}(\ax_{1:\tx}) \in \B_{0:\tx}] \\
        &\qquad\qquad\geq \E[\Y(\A_{1:\tx}) \mid \X_{0:\N}(\A_{1:\N}) \in \B_{0:\N}, \A_{1:\tx} = \ax_{1:\tx}] \, \Prob(\A_{1:\tx} = \ax_{1:\tx} \mid \X_{0:\N}(\ax_{1:\N}) \in \B_{0:\N}) \\
        &\qquad\qquad\qquad+ \ylo \, \Prob(\A_{1:\tx} \neq \ax_{1:\tx} \mid \X_{0:\N}(\ax_{1:\N}) \in \B_{0:\N}) \\
        &\qquad\qquad= \E[\Y(\A_{1:\tx}) \, \ind(\A_{1:\tx} = \ax_{1:\tx}) + \ylo \, \ind(\A_{1:\tx} \neq \ax_{1:\tx}) \mid \X_{0:\N}(\A_{1:\N}) \in \B_{0:\N}] \\
        &\qquad\qquad= \E[\Ylo \mid \X_{0:\N}(\A_{1:\N}) \in \B_{0:\N}].
\end{align*}
\end{proof}

\subsection{Proof of Proposition \ref{prop:our-bounds-vs-manskis}}

\begin{proof}
From the definition of $\Yup$, we have straightforwardly
\begin{multline*}
    \Qup = \E[\Y(\A_{1:\tx}) \mid \X_{0:\N}(\A_{1:\N}) \in \B_{0:\N}, \A_{1:\tx} = \ax_{1:\tx}] \, \Prob(\A_{1:\tx} = \ax_{1:\tx} \mid \X_{0:\N}(\A_{1:\N}) \in \B_{0:\N}) \\
            +  \yup \, \Prob(\A_{1:\tx} \neq \ax_{1:\tx} \mid \X_{0:\N}(\A_{1:\N}) \in \B_{0:\N}).
\end{multline*}
A similar expression holds for $\Qlo$.
Subtracting these two expressions yields
\[
    \Qup - \Qlo = (\yup - \ylo) \, (1 - \Prob(\A_{1:\tx}=\ax_{1:\tx} \mid \X_{0:\N}(\A_{1:\N}) \in \B_{0:\N})). 
\]
Similar manipulations show that
\[
    \E[\Yup] - \E[\Ylo] = (\yup - \ylo) \, (1 - \Prob(\A_{1:\tx}=\ax_{1:\tx})),
\]
and the result now follows.

\end{proof}

\subsection{Proof of Proposition \ref{prop:sharpness-of-bounds} and discussion} \label{sec:sharpness-of-bounds-supplement}

\begin{proof}
    We consider the case of the lower bound; the case of the upper bound is analogous.
    Choose $\xx_{1:\T} \in \B_{1:\T}$ arbitrarily.
    (Certainly some choice is always possible, since each $\B_{\sx}$ has positive measure and is therefore nonempty.)
    Define
    \begin{align*}
        \tilde{\X}_0 &\coloneqq \X_0 \\
        \tilde{\X}_{\sx}(\ax_{1:\sx}') &\coloneqq \ind(\A_{1:\sx} = \ax_{1:\sx}') \, \X_{\sx}(\ax_{1:\sx}') + \ind(\A_{1:\sx} \neq \ax_{1:\sx}') \, \xx_{\sx} \qquad \text{for each $\sx \in \{0, \ldots, \T\}$ and $\ax_{1:\sx}' \in \Aspace_{1:\sx}$},
    \end{align*}
    and similarly let
    \[
        \tilde{\Y}(\ax_{1:\tx}') = \ind(\A_{1:\tx} = \ax_{1:\tx}') \, \Y(\ax_{1:\tx}') + \ind(\A_{1:\tx} \neq \ax_{1:\tx}')\, \ylo \qquad \text{for all $\ax_{1:\tx}' \in \Aspace_{1:\tx}$}.
    \]
    It is easy to check that $(\tilde{\X}_{0:\T}(\A_{1:\T}), \tilde{\Y}(\A_{1:\tx}), \A_{1:\T}) \eqas (\X_{0:\T}(\A_{1:\T}), \Y(\A_{1:\tx}), \A_{1:\T})$.
    But now we have directly $\tilde{\Y}(\ax_{1:\tx}) = \Ylo$.
    Moreover, it is easily checked from the definition of $\N$ and $\tilde{\X}_{0:\tx}(\ax_{1:\tx})$ that
    \[
        \tilde{\X}_{0:\tx}(\ax_{1:\tx}) \eqas (\X_{0:\N}(\A_{1:\N}), \xx_{\N+1:\tx}),
    \]
    so that
    \begin{align*}
        \ind(\tilde{\X}_{0:\tx}(\ax_{1:\tx}) \in \B_{0:\tx})
        &\eqas \ind(\tilde{\X}_{0:\N}(\ax_{1:\N}) \in \B_{0:\N}, \xx_{\N+1:\tx} \in \B_{\N+1:\tx}) \\
        &\eqas \ind(\X_{0:\N}(\A_{1:\N}) \in \B_{0:\N})
    \end{align*}
    since each $\xx_{\sx} \in \B_\sx$.
    Consequently,
    \begin{align*}
        \E[\tilde{\Y}(\ax_{1:\tx}) \mid \tilde{\X}_{0:\tx}(\ax_{1:\tx}) \in \B_{0:\tx}]
        &= \E[\Ylo \mid \tilde{\X}_{0:\tx}(\ax_{1:\tx}) \in \B_{0:\tx}] \\
        &= \E[\Ylo \mid \X_{0:\N}(\A_{1:\N}) \in \B_{0:\N}],
    \end{align*}
    which gives the result.
\end{proof}

\subsection{Bounds on the conditional expectation given specific covariate values} \label{sec:impossibility-of-bounds-for-continuous-data}

Theorem \ref{thm:causal-bounds} provides a bound on $\E[\Y(\ax_{1:\tx}) \mid \X_{0:\tx}(\ax_{1:\tx}) \in \B_{0:\tx}]$, i.e.\ the conditional expectation given the \emph{event} $\{\X_{0:\tx}(\ax_{1:\tx}) \in \B_{0:\tx}\}$, which is assumed to have positive probability.
We consider here the prospect of obtaining bounds on $\E[\Y(\ax_{1:\tx}) \mid \X_{0:\tx}(\ax_{1:\tx})]$, i.e.\ the conditional expectation given the \emph{value} of $\X_{0:\tx}(\ax_{1:\tx})$.
For falsification purposes, this would provide a means for determining that twin is incorrect when it outputs specific values of $\Xt_{0:\tx}(\ax_{1:\tx})$, rather than just that it is incorrect on average across all runs that output values $\Xt_{0:\tx}(\ax_{1:\tx}) \in \B_{0:\tx}$.

When $\X_{0:\tx}(\ax_{1:\tx})$ is discrete, Theorem \ref{thm:causal-bounds} yields measurable functions $\lo{\gx}, \up{\gx} : \Xspace_{0:\tx} \to \R$ such that
\begin{equation} \label{eq:psi-lo-up-defining-property}
    \lo{\gx}(\X_{0:\tx}(\ax_{1:\tx}))
        \leq \E[\Y(\ax_{1:\tx}) \mid \X_{0:\tx}(\ax_{1:\tx})] \leq \up{\gx}(\X_{0:\tx}(\ax_{1:\tx})) \qquad \text{almost surely}.
\end{equation}
In particular, $\lo{\gx}(\xx_{0:\tx})$ is obtained as the value of $\E[\Ylo \mid \X_{0:\N}(\A_{1:\N}) \in \B_{0:\N}]$ for $\B_{0:\tx} \coloneqq \{\xx_{0:\tx}\}$, and similarly for $\up{\gx}(\xx_{0:\tx})$.
Moreover, since the constants $\ylo, \yup \in \R$ in Theorem \ref{thm:causal-bounds} were allowed to depend on $\B_{0:\tx}$, and hence here on each choice of $\xx_{0:\tx} \in \Xspace_{0:\tx}$, we may think of these now as measurable functions $\ylo, \yup : \Xspace_{0:\tx} \to \R$ satisfying
\begin{equation} \label{eq:y-boundedness-functional-assumption}
    \ylo(\X_{0:\tx}(\ax_{1:\tx})) \leq \Y(\ax_{1:\tx}) \leq \yup(\X_{0:\tx}(\ax_{1:\tx})) \qquad \text{almost surely}.
\end{equation}
In other words, when $\X_{0:\tx}(\ax_{1:\tx})$ is discrete, Theorem \ref{thm:causal-bounds} provides bounds on the conditional expectation of $\Y(\ax_{1:\tx})$ given the value of $\X_{0:\tx}(\ax_{1:\tx})$ whenever we have $\ylo$ and $\yup$ such that \eqref{eq:y-boundedness-functional-assumption} holds.

When $\Prob(\X_{1:\tx}(\ax_{1:\tx}) \in \B_{1:\tx}) > 0$, a fairly straightforward modification of the proof of Theorem \ref{thm:causal-bounds} yields bounds of the following form:
\begin{align}
    \E[\Ylo \mid \X_0, \X_{1:\N}(\A_{1:\N}) \in \B_{1:\N}] 
        &\leq \E[\Y(\ax_{1:\tx}) \mid \X_0, \X_{1:\tx}(\ax_{1:\tx}) \in \B_{1:\tx}] \notag \\
        &\qquad\qquad\leq \E[\Yup \mid \X_0, \X_{1:\N}(\A_{1:\N}) \in \B_{1:\N}] \qquad \text{almost surely}. \label{eq:bareinboim-style-bounds-with-continuous-initial-covariates}
\end{align}
In particular, this holds regardless of whether or not $\X_0$ is discrete.
In turn, if $\X_{1:\tx}(\ax_{1:\tx})$ is discrete, then by a similar argument as was given in the previous subsection, this yields almost sure bounds on $\E[\Y(\ax_{1:\tx}) \mid \X_{0:\tx}(\ax_{1:\tx})]$ of the form in \eqref{eq:psi-lo-up-defining-property}, provided \eqref{eq:y-boundedness-functional-assumption} holds.
Alternatively, by taking $\B_{1:\tx} \coloneqq \Xspace_{1:\tx}$, \eqref{eq:bareinboim-style-bounds-with-continuous-initial-covariates} yields bounds of the form
\[
    \E[\Ylo \mid \X_0] \leq \E[\Y(\ax_{1:\tx}) \mid \X_0] \leq \E[\Yup \mid \X_0].
\]
If the action sequence $\ax_{1:\tx}$ is thought of as a single choice of an action from the extended action space $\Aspace_{1:\tx}$, then this recovers the bounds originally proposed by \citet{manski}, which allowed conditioning on potentially continuous pre-treatment covariates corresponding to our $\X_0$.

\subsection{Proof of Theorem \ref{thm:no-causal-bounds-for-continuous-data} and discussion}

\begin{proof}
    Suppose we have a permissible $\lo{\gx}$.
    (The case of $\up{\gx}$ is analogous).
    Choose $\xx_{1:\T} \in \Xspace_{1:\T}$ arbitrarily, and define new potential outcomes
    \begin{align*}
        \tilde{\X}_0 &\coloneqq \X_0 \\
        \tilde{\X}_r(\ax_{1:r}') &\coloneqq %
            \ind(\A_{1:r} = \ax_{1:r}') \, \X_{r}(\ax_{1:r}') + \ind(\A_{1:r} \neq \ax_{1:r}') \, \xx_{r} \qquad \text{for $r \in \{1, \ldots, \T\}$ and $\ax_{1:r}' \in \Aspace_{1:r}$}.
    \end{align*}
    Similarly, define
    \begin{align*}
        \tilde{\Y}(\ax_{1:\tx}') &\coloneqq \ind(\A_{1:\tx} = \ax_{1:\tx}') \, \Y(\ax_{1:\tx}') + \ind(\A_{1:\tx} \neq \ax_{1:\tx}') \, \ylo(\tilde{\X}_{0:\tx}(\ax_{1:\tx}')) \qquad \text{for all $\ax_{1:\tx}' \in \Aspace_{1:\tx}$}.
    \end{align*}
    It immediately follows that
    \[
        (\tilde{\X}_{0:\T}(\A_{1:\T}), \tilde{\Y}(\A_{1:\tx}), \A_{1:\T}) \eqas (\X_{0:\T}(\A_{1:\T}), \Y(\A_{1:\tx}), \A_{1:\T}).
    \]
    Moreover, it is easily checked that
    \[
        \ylo(\tilde{\X}_{0:\tx}(\ax_{1:\tx})) \leq \tilde{\Y}(\ax_{1:\tx}) \leq \yup(\tilde{\X}_{0:\tx}(\ax_{1:\tx})) \qquad \text{almost surely}.
    \]
    As such, since $\lo{\gx}$ is permissible, we must have, almost surely,
    \begin{align}
        \lo{\gx}(\tilde{\X}_{0:\tx}(\ax_{1:\tx})) &\leq \E[\tilde{\Y}(\ax_{1:\tx}) \mid \tilde{\X}_{0:\tx}(\ax_{1:\tx})] \notag \\
         &= \begin{multlined}[t]
            \E[\tilde{\Y}(\A_{1:\tx}) \mid \tilde{\X}_{0:\tx}(\ax_{1:\tx}), \A_{1:\tx} = \ax_{1:\tx}] \, \Prob(\A_{1:\tx} = \ax_{1:\tx} \mid \tilde{\X}_{0:\tx}(\ax_{1:\tx})) \\
                + \underbrace{\E[\tilde{\Y}(\ax_{1:\tx}) \mid \tilde{\X}_{0:\tx}(\ax_{1:\tx}), \A_{1:\tx} \neq \ax_{1:\tx}]}_{=\ylo(\tilde{\X}_{0:\tx}(\ax_{1:\tx}))} \, \Prob(\A_{1:\tx} \neq \ax_{1:\tx} \mid \tilde{\X}_{0:\tx}(\ax_{1:\tx})).
        \end{multlined} \label{eq:no-continuous-bounds-proof-convex-combination}
    \end{align}
    Now, by our definition of $\tilde{\X}_{0:\tx}(\ax_{1:\tx})$, we have almost surely
    \begin{align*}
        \ind(\A_{1} \neq \ax_{1}) \, \Prob(\A_{1:\tx} = \ax_{1:\tx} \mid \tilde{\X}_{0:\tx}(\ax_{1:\tx}))
            &= \ind(\A_{1} \neq \ax_{1}, \tilde{\X}_{\sx}(\ax_{1:\sx}) = \xx_\sx) \, \Prob(\A_{1:\tx} = \ax_{1:\tx} \mid \tilde{\X}_{0:\tx}(\ax_{1:\tx})) \\
            &= \ind(\A_{1} \neq \ax_{1}) \, \E[\ind(\A_{1:\tx} = \ax_{1:\tx}, \tilde{\X}_{\sx}(\ax_{1:\sx}) = \xx_\sx) \mid \tilde{\X}_{0:\tx}(\ax_{1:\tx})] \\
            &= \ind(\A_{1} \neq \ax_{1}) \, \E[\ind(\A_{1:\tx} = \ax_{1:\tx}, \X_{\sx}(\A_{1:\sx}) = \xx_\sx) \mid \tilde{\X}_{0:\tx}(\ax_{1:\tx})] \\
            &= 0,
    \end{align*}
    where the last step follows by our assumption that $\Prob(\X_{\sx}(\A_{1:\sx}) = \xx_{\sx}) = 0$.
    Combining this with \eqref{eq:no-continuous-bounds-proof-convex-combination}, we get, almost surely,
    \begin{align}
        \ind(\A_{1} \neq \ax_{1}) \, \lo{\gx}(\X_{0}, \xx_{1:\tx}) &=  \ind(\A_{1} \neq \ax_{1}) \, \lo{\gx}(\tilde{\X}_{0:\tx}(\ax_{1:\tx})) \notag \\
            &\leq \ind(\A_{1} \neq \ax_{1}) \, \ylo(\tilde{\X}_{0:\tx}(\ax_{1:\tx})) \notag \\
            &= \ind(\A_{1} \neq \ax_{1}) \, \ylo(\X_{0}, \xx_{1:\tx}). \label{eq:no-continuous-bounds-intermediate-step}
    \end{align}
    Now let $\xx_0 \in \Xspace_0$ be the value such that $\Prob(\X_0 = \xx_0) = 1$.
    Using our assumption that $\Prob(\A_1 \neq \ax_1) > 0$ and the fact that $\xx_{1:\tx}$ was arbitrary, we obtain
    \[
        \lo{\gx}(\xx_{0:\tx}) \leq \ylo(\xx_{0:\tx}) \qquad \text{for all $\xx_{1:\tx} \in \Xspace_{1:\tx}$}.
    \]
    The result now follows.
\end{proof}

To gain intuition for the phenomenon underlying Theorem \ref{thm:no-causal-bounds-for-continuous-data}, consider a simplified model consisting of $\Xspace$-valued potential outcomes $(\X(\ax') : \ax \in \Aspace)$, $\R$-valued potential outcomes $(\Y(\ax') : \ax \in \Aspace)$, and an $\Aspace$-valued random variable $\A$ representing the choice of action.
(This constitutes a special case of our setup with $\T = 1$ and $\Xspace_0$ taken to be a singleton set.)
Suppose moreover that the following conditions hold:
\begin{align*}
    \Prob(\X(\A) = \xx) &= 0 \qquad \text{for all $\xx \in \Xspace$} \\
    \Prob(\A = \ax) &< 1.
\end{align*}
We then have
\begin{equation} \label{eq:no-continuous-bounds-toy-example}
    \E[\Y(\ax) \mid \X(\ax)]
        \eqas \E[\Y(\A) \mid \X(\A), \A = \ax] \, \Prob(\A = \ax \mid \X(\ax)) + \E[\Y(\ax) \mid \X(\ax), \A \neq \ax] \, \Prob(\A \neq \ax \mid \X(\ax)).
\end{equation}
But now, since the behaviour of $\X(\ax)$ is only observed on $\{\A = \ax\}$, for any given value of $\xx \in \Xspace$, we cannot rule out the possibility that
\[
    \X(\ax) = \ind(\A = \ax) \, \X(\A) + \ind(\A \neq \ax) \, \xx \qquad \text{almost surely}.
\]
In turn, since $\Prob(\A = \ax) > 0$, this would imply $\Prob(\X(\ax) = \xx) > 0$, and, since $\Prob(\X(\A) = \xx) = 0$, that $\Prob(\A = \ax \mid \X(\ax) = \xx) = 0$.
From \eqref{eq:no-continuous-bounds-toy-example}, this would yield
\[
    \E[\Y(\ax) \mid \X(\ax) = \xx] = \E[\Y(\ax) \mid \X(\ax) = \xx, \A \neq \ax].
\]
But now, since the behaviour of $\Y(\ax)$ is unobserved on $\{\A \neq \ax\}$, intuitively speaking, the observational distribution does not provide any information about the value of the right-hand side, and therefore about the behaviour of $\E[\Y(\ax) \mid \X(\ax)]$ more generally since $\xx \in \Xspace$ was arbitrary.

\section{Hypothesis testing methodology} 

\subsection{Validity of testing procedure} \label{sec:hyp-testing-supplement}

We show here that our procedure for testing $\Qt \geq \Qlo$ based on the one-sided confidence intervals $\qlo{\alpha}$ and $\qt{\alpha}$ has the correct probability of type I error, provided $\qlo{\alpha}$ and $\qt{\alpha}$ have the correct coverage probabilities.
In particular, the result below (which applies a standard union bound argument) shows that if $\Qt \geq \Qlo$, then our test rejects (i.e.\ $\qt{\alpha} < \qlo{\alpha}$) with probability at most $\alpha$.
An analogous result is easily proven for testing $\Qt \leq \Qup$ also, with $\qlo{\alpha}$ replaced by a one-sided upper $(1-\alpha/2)$-confidence interval for $\Qup$, and $\qt{\alpha}$ replaced by a one-sided lower $(1 - \alpha/2)$-confidence interval for $\Qt$.

\begin{proposition}
Suppose that for some $\alpha \in (0, 1)$ we have random variables $\qt{\alpha}$ and $\qlo{\alpha}$ satisfying
\begin{align}
    \Prob(\Qlo \geq \qlo{\alpha}) &\geq 1 - \frac{\alpha}{2} \label{eq:qlo-confidence-interval-guarantee} \\
    \Prob(\Qt \leq \qt{\alpha}) &\geq 1 - \frac{\alpha}{2} \label{eq:qt-confidence-interval-guarantee}.
\end{align}
If $\Qt \geq \Qlo$, then $\Prob(\qt{\alpha} < \qlo{\alpha}) \leq \alpha$.
\end{proposition}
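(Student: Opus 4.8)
The plan is to use a straightforward union bound (Bonferroni) argument. The key observation is that the two good events $\{\Qlo \geq \qlo{\alpha}\}$ and $\{\Qt \leq \qt{\alpha}\}$, when they both occur, force $\qt{\alpha} \geq \qlo{\alpha}$ under the hypothesis $\Qt \geq \Qlo$. Contrapositively, the rejection event $\{\qt{\alpha} < \qlo{\alpha}\}$ can only happen if at least one of the two confidence intervals fails to cover.

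First I would argue the deterministic implication: on the event $\{\Qlo \geq \qlo{\alpha}\} \cap \{\Qt \leq \qt{\alpha}\}$, chaining inequalities gives $\qt{\alpha} \geq \Qt \geq \Qlo \geq \qlo{\alpha}$, where the middle step uses the assumption $\Qt \geq \Qlo$. Hence this intersection is contained in $\{\qt{\alpha} \geq \qlo{\alpha}\}$, so by taking complements, $\{\qt{\alpha} < \qlo{\alpha}\} \subseteq \{\Qlo < \qlo{\alpha}\} \cup \{\Qt > \qt{\alpha}\}$.

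Then I would apply the union bound together with the coverage guarantees \eqref{eq:qlo-confidence-interval-guarantee} and \eqref{eq:qt-confidence-interval-guarantee}:
\[
    \Prob(\qt{\alpha} < \qlo{\alpha}) \leq \Prob(\Qlo < \qlo{\alpha}) + \Prob(\Qt > \qt{\alpha}) \leq \frac{\alpha}{2} + \frac{\alpha}{2} = \alpha.
\]
This completes the argument. There is no real obstacle here — the only things to be slightly careful about are that the coverage statements are for the closed events $\{\Qlo \geq \qlo{\alpha}\}$ and $\{\Qt \leq \qt{\alpha}\}$, so their complements are the strict inequalities appearing in the union bound, which lines up exactly with what is needed. (The symmetric statement for testing $\Qt \leq \Qup$ follows by the same reasoning with the roles of the bounds swapped, as noted in the text.)
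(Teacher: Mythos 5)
Your proposal is correct and follows essentially the same argument as the paper: both establish the containment $\{\qt{\alpha} < \qlo{\alpha}\} \subseteq \{\Qlo < \qlo{\alpha}\} \cup \{\Qt > \qt{\alpha}\}$ by noting that on the intersection of the two coverage events one has $\qlo{\alpha} \leq \Qlo \leq \Qt \leq \qt{\alpha}$, and then apply the union bound with the two $\alpha/2$ guarantees. Nothing is missing.
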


\begin{proof}
If $\Qt \geq \Qlo$, then we have
\[
    \{\qt{\alpha} < \qlo{\alpha}\} \subseteq \{\Qt > \qt{\alpha}\} \cup \{\Qlo < \qlo{\alpha}\}.
\]
To see this, note that
\[
    (\{\Qt > \qt{\alpha}\} \cup \{\Qlo < \qlo{\alpha}\})^c
        = \{\Qt > \qt{\alpha}\}^c \cap \{\Qlo < \qlo{\alpha}\}^c
        = \{\Qt \leq \qt{\alpha}\} \cap \{\Qlo \geq \qlo{\alpha}\}
        \subseteq \{\qlo{\alpha} \leq \qt{\alpha}\}.
\]
As such,
\begin{align*}
    \Prob(\qt{\alpha} < \qlo{\alpha}) \leq \Prob(\{\Qt > \qt{\alpha}\} \cup \{\Qlo < \qlo{\alpha}\}) \leq \Prob(\Qt > \qt{\alpha}) + \Prob(\Qlo < \qlo{\alpha}) \leq \alpha/2 + \alpha/2 = \alpha.
\end{align*}
\end{proof}

\subsection{Unbiased sample mean estimates of $\Qlo$, $\Qt$, and $\Qup$} \label{sec:confidence-intervals-methodology-supplement}

We use our data to obtain one-sided confidence intervals $\qlo{\alpha}$ and $\qt{\alpha}$ satisfying \eqref{eq:qlo-confidence-interval-guarantee} and \eqref{eq:qt-confidence-interval-guarantee} as required by our procedure for testing $\Qt \geq \Qlo$.
We use an analogous procedure to obtain confidence intervals for testing $\Qt \leq \Qup$.
We tried two techniques for this: an exact method based on Hoeffding's inequality, and an approximate method based on bootstrapping.
Conceptually, both are based on obtaining unbiased sample mean estimates of $\Qlo$ and $\Qt$, which we describe now, before giving the particulars of each method in the next two subsections.

We begin with our sample mean estimator of $\Qlo$.
Recall that we assume access to a dataset $\D$ consisting of i.i.d.\ copies of observational trajectories of the form
\[
    \X_0, \A_1, \X_1(\A_1), \ldots, \A_\T, \X_\T(\A_{1:\T}).
\]
Let $\D(\ax_{1:\tx}, \B_{0:\tx})$ be the subset of trajectories in $\D$ for which $\X_{0:\N}(\A_{1:\N})\in\B_{0:\N}$.
Obtaining $\D(\ax_{1:\tx}, \B_{0:\tx})$ is possible since the only random quantity that $\N = \max\{0 \leq \sx \leq \tx \mid \A_{1:\sx} = \ax_{1:\sx}\}$ depends on is $\A_{1:\tx}$, which is included in the data.
We denote the cardinality of $\D(\ax_{1:\tx}, \B_{0:\tx})$ by $\nx \coloneqq \abs{\D(\ax_{1:\tx}, \B_{0:\tx})}$.
We then denote by $\Ylo^{(i)}$ for $i \in \{1, \ldots, \nx\}$ the corresponding values of
\begin{align*}
    \Ylo &= \ind(\A_{1:\tx} = \ax_{1:\tx}) \, \fx(\X_{0:\tx}(\A_{1:\tx})) + \ind(\A_{1:\tx} \neq \ax_{1:\tx}) \, \ylo
\end{align*}
obtained from each trajectory in $\D(\ax_{1:\tx}, \B_{0:\tx})$.
This is again possible since both terms only depends on the observational quantities $(\X_{0:\tx}(\A_{1:\tx}), \A_{1:\tx})$.
It is easily seen that the values of $\Ylo^{(i)}$ are i.i.d.\ and satisfy $\E[\Ylo^{(i)}] = \Qlo$.
As a result, the sample mean
\begin{equation} \label{eq:YClmean-definition-supplement}
    \YClmean \coloneqq \frac{1}{\nx} \sum_{i=1}^{\nx} \Ylo^{(i)} 
\end{equation}
is an unbiased estimator of $\Qlo$.

We obtain an unbiased sample mean estimate of $\Qt$ in a similar fashion as for $\Qlo$.
Recall that we assume access to a dataset $\Dt(\ax_{1:\tx})$ consisting of i.i.d.\ copies of
\[
    \X_0, \Xt_1(\X_0, \ax_1), \ldots, \Xt_t(\X_0, \ax_{1:\tx}).
\]
Let $\Dt(\ax_{1:\tx}, \B_{0:\tx})$ denote the subset of twin trajectories in $\Dt(\ax_{1:\tx})$ for which $(\X_0, \Xt_{\tx}(\X_0, \ax_{1:\tx})) \in \B_{0:\tx}$, and denote its cardinality by $\widehat{\nx} \coloneqq \abs{\Dt(\ax_{1:\tx}, \B_{0:\tx})}$.
Then denote by $\Yt^{(i)}$ for $i \in \{1 \ldots, \widehat{\nx}\}$ the corresponding values of
\[
    \Yt = \fx(\X_0, \Xt_{1:\tx}(\X_0, \ax_{1:\tx}))
\]
obtained from each trajectory in $\Dt(\ax_{1:\tx}, \B_{0:\tx})$.
It is easily seen that the values $\Yt^{(i)}$ are i.i.d.\ (since the entries of $\Dt(\ax_{1:\tx})$ are) and satisfy $\E[\Yt^{(i)}] = \Qt$.
As a result, the sample mean
\[
    \Ytmean \coloneqq \frac{1}{\widehat{\nx}} \sum_{i=1}^{\widehat{\nx}} \Yt^{(i)}
\]
is an unbiased estimator of $\Qt$.

\subsection{Exact confidence intervals via Hoeffding's inequality}

Recall that we assume in Section \ref{sec:hypotheses-from-causal-bounds-setup} that $\Y(\ax_{1:\tx})$ has the form $\Y(\ax_{1:\tx}) = \fx(\X_{0:\tx}(\ax_{1:\tx}))$, and that moreover
\begin{equation} \label{eq:f-boundedness-assumption-hoeffding-proof}
    \ylo \leq \fx(\xx_{0:\tx}) \leq \yup \qquad \text{for all $\xx_{0:\tx} \in \B_{0:\tx}$.}
\end{equation}
This means $\Yt^{(i)}$ is almost surely bounded in $[\ylo, \yup]$, and so $\Ytmean$ gives rise to one-sided confidence intervals via an application of Hoeffding's inequality.
The exact form of these confidence intervals is as follows:

\begin{proposition} \label{prop:hoeffding-confidence-bounds-supp}
If \eqref{eq:f-boundedness-assumption-hoeffding-proof} holds, then for each $\alpha \in (0, 1)$, letting
\[
    \CIlen \coloneqq (\yup - \ylo) \, \sqrt{\frac{1}{2 \nx} \, \log \frac{2}{\alpha}}  \qquad \text{and} \qquad \widehat{\CIlen} \coloneqq (\yup - \ylo) \, \sqrt{\frac{1}{2 \widehat{\nx}} \, \log \frac{2}{\alpha}},
\]
and similarly
\begin{align*}
    \qlo{\alpha} \coloneqq \YClmean - \CIlen  \qquad \text{and} \qquad
    \qt{\alpha} \coloneqq \Ytmean + \widehat{\CIlen},
\end{align*}
it follows that
\begin{align*}
    \Prob(\Qlo \geq \qlo{\alpha}) \geq 1 - \frac{\alpha}{2} \qquad \text{and} \qquad \Prob(\Qt \leq \qt{\alpha}) \geq 1 - \frac{\alpha}{2}.
\end{align*}
\end{proposition}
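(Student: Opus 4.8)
The plan is to derive the two coverage statements as direct consequences of Hoeffding's inequality applied to the sample means $\YClmean$ and $\Ytmean$ constructed in Section~\ref{sec:confidence-intervals-methodology-supplement}. Recall from that section that, conditional on the value of $\nx$, the variables $\Ylo^{(1)}, \ldots, \Ylo^{(\nx)}$ are i.i.d.\ with common mean $\Qlo$, and conditional on the value of $\widehat{\nx}$, the variables $\Yt^{(1)}, \ldots, \Yt^{(\widehat{\nx})}$ are i.i.d.\ with common mean $\Qt$. The only extra ingredient needed is uniform boundedness, which I would verify first: assumption~\eqref{eq:f-boundedness-assumption-hoeffding-proof} forces each $\Ylo^{(i)}$ and each $\Yt^{(i)}$ to lie in $[\ylo, \yup]$ almost surely. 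Indeed, on the event $\A_{1:\tx} = \ax_{1:\tx}$ the selection into $\D(\ax_{1:\tx}, \B_{0:\tx})$ gives $\N = \tx$ and hence $\X_{0:\tx}(\A_{1:\tx}) \in \B_{0:\tx}$, so $\fx(\X_{0:\tx}(\A_{1:\tx})) \in [\ylo, \yup]$; on the complement $\Ylo^{(i)} = \ylo$. Likewise, selection into $\Dt(\ax_{1:\tx}, \B_{0:\tx})$ gives $(\X_0, \Xt_\tx(\X_0, \ax_{1:\tx})) \in \B_{0:\tx}$, so $\Yt^{(i)} = \fx(\X_0, \Xt_{1:\tx}(\X_0, \ax_{1:\tx})) \in [\ylo, \yup]$.

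Next I would invoke Hoeffding's inequality in sample-mean form: if $Z_1, \ldots, Z_m$ are i.i.d.\ with values in $[\ylo, \yup]$ and mean $\mu$, then $\Prob(\tfrac{1}{m}\sum_i Z_i - \mu \geq s) \leq \exp(-2ms^2/(\yup-\ylo)^2)$ for all $s > 0$, with the mirror bound for the lower tail. Applying the upper-tail bound to $\YClmean$ conditional on $\nx = k$ gives
\[
    \Prob(\YClmean - \Qlo \geq \CIlen \mid \nx = k) \leq \exp\!\left(-\frac{2k\,\CIlen^2}{(\yup-\ylo)^2}\right),
\]
and substituting the stated value $\CIlen = (\yup-\ylo)\sqrt{\tfrac{1}{2k}\log\tfrac{2}{\alpha}}$ (noting $\CIlen$ is itself a function of $k = \nx$) collapses the exponent to $\log(2/\alpha)$, so the right-hand side equals $\alpha/2$. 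Rearranging yields $\Prob(\Qlo \geq \YClmean - \CIlen \mid \nx = k) \geq 1 - \alpha/2$; since this holds for every $k \geq 1$ it holds unconditionally on the event $\{\nx \geq 1\}$ where $\qlo{\alpha}$ is defined, giving $\Prob(\Qlo \geq \qlo{\alpha}) \geq 1 - \alpha/2$.

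For $\Qt$ I would run the mirror-image argument: conditioning on $\widehat{\nx} = k$, the lower-tail bound applied to $\Ytmean$ gives $\Prob(\Qt - \Ytmean \geq \widehat{\CIlen} \mid \widehat{\nx} = k) \leq \exp(-2k\widehat{\CIlen}^2/(\yup-\ylo)^2) = \alpha/2$ after substituting $\widehat{\CIlen}$, hence $\Prob(\Qt \leq \Ytmean + \widehat{\CIlen} \mid \widehat{\nx} = k) \geq 1 - \alpha/2$, and marginalising over $k$ gives $\Prob(\Qt \leq \qt{\alpha}) \geq 1 - \alpha/2$. There is no substantive obstacle here — the proposition is essentially a packaging of Hoeffding's inequality — but the one point demanding care is the bookkeeping around the data-dependent sample sizes $\nx$ and $\widehat{\nx}$: because these are random, the clean route is to argue conditionally on their values and observe the resulting bound is uniform, rather than treating them as fixed constants.
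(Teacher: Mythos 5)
Your proposal is correct and follows essentially the same route as the paper: verify that $\Ylo^{(i)}$ and $\Yt^{(i)}$ lie in $[\ylo, \yup]$ by \eqref{eq:f-boundedness-assumption-hoeffding-proof}, apply Hoeffding's inequality to each sample mean, and substitute $\CIlen$ (resp.\ $\widehat{\CIlen}$) to collapse the tail bound to $\alpha/2$. Your extra step of conditioning on the random sample sizes $\nx$ and $\widehat{\nx}$ is a sensible piece of bookkeeping that the paper leaves implicit, but it does not change the substance of the argument.
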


\begin{proof}
We only prove the result for $\qlo{\alpha}$; the other statement can be proved analogously.
Recall that $\YClmean$ is the empirical mean of i.i.d.\ samples $\Ylo^{(i)}$ for $i\in \{1, \ldots, \nx\}$ with $\E[\Ylo^{(i)}]=\Qlo$ (see \eqref{eq:YClmean-definition-supplement}).
Moreover, by \eqref{eq:f-boundedness-assumption-hoeffding-proof}, $\Ylo^{(i)}$ is almost surely bounded in $[\ylo, \yup]$.
Hoeffding's inequality then implies that
\begin{align*}
    \Prob(\YClmean - \Qlo > \CIlen) &\leq \exp\left(- \frac{2 \nx \CIlen^2}{(\yup - \ylo)^2 } \right).
\end{align*}
In turn, some basic manipulations yield
\begin{align*}
    \Prob(\Qlo \geq \qlo{\alpha}) &= 1 - \Prob(\Qlo < \YClmean - \CIlen) \\
    &\geq 1 - \exp\left(- \frac{2 \nx \CIlen^2}{(\yup - \ylo)^2 } \right) \\
    &= 1- \frac{\alpha}{2}.
\end{align*}
\end{proof}

\subsection{Approximate confidence intervals via bootstrapping} \label{subsec:bootstrapping}

While Hoeffding's inequality yields the probability guarantees in \eqref{eq:qlo-confidence-interval-guarantee} and \eqref{eq:qt-confidence-interval-guarantee} exactly, the confidence intervals obtained can be conservative.
Consequently, our testing procedure may have lower probability of falsifying certain twins that in fact do not satisfy the causal bounds.
To address this, we also consider an approximate approach based on bootstrapping that can produce tighter confidence intervals.
While other schemes are possible, bootstrapping provides a general-purpose approach that is straightforward to implement and works well in practice.

At a high level, our approach here is again to construct one-sided level $1 - \alpha/2$ confidence intervals via bootstrapping \citep{efron1979bootstrap} on $\Qlo$ and $\Qt$.
Many bootstrapping procedures for obtaining confidence intervals have been proposed in the literature \citep{tibshirani1993introduction,davison1997bootstrap,hesterberg2015what}. 
Our results reported below were obtained via the \emph{reverse percentile} bootstrap (see \cite{hesterberg2015what} for an overview).
(We also tried the \emph{percentile} bootstrap method, which obtained nearly indistinguishable results.)
In particular, this method takes
\[
    \qlo{\alpha} \coloneqq 2 \YClmean - \Delta 
    \qquad \qt{\alpha} \coloneqq 2 \widehat{\mu} - \widehat{\Delta},
\]
where $\Delta$ and $\widehat{\Delta}$ correspond to the approximate $1 - \alpha / 2$ and $\alpha / 2$ quantiles of the distributions of 
\[
    \frac{1}{\nx} \sum_{i=1}^{\nx} \Ylo^{(i^\ast)} 
    \qquad\text{and}\qquad
    \frac{1}{\widehat{\nx}} \sum_{i=1}^{\widehat{\nx}} \Yt^{(i^\ast)},
\]
where each $\Ylo^{(i^\ast)}$ and $\Y^{(i^\ast)}$ is obtained by sampling uniformly with replacement from among the values of $\Ylo^{(i)}$ and $\Y^{(i)}$.
In our case study, as is typically done in practice, we approximated $\Delta$ and $\widehat{\Delta}$ via Monte Carlo sampling.
It can be shown that the confidence intervals produced in this way obtain a coverage level that approaches the desired level of $1 - \alpha/2$ as $\nx$ and $\widehat{\nx}$ grow to infinity under mild assumptions \citep{hall1988theoretical}.

\section{Experimental Details} \label{sec:experiments-supplement}

\subsection{MIMIC preprocessing} \label{sec:mimic-preprocessing-supp}

For data extraction and preprocessing, we re-used the same procedure as \cite{ai-clinician} with minor modifications.
For completeness, we describe the pre-processing steps applied in \cite{ai-clinician} and subsequently outline our modifications to these.

Following \cite{ai-clinician}, we extracted adult patients fulfilling the sepsis-3 criteria \citep{sepsis-criteria}. 
Sepsis was defined as a suspected infection (as indicated by prescription of antibiotics and sampling of bodily fluids for microbiological culture) combined with evidence of organ dysfunction, defined by a SOFA score $\geq 2$ \citep{sepsis-criteria, seymour2016assessment}.  

Following \cite{ai-clinician}, we excluded patients for whom any of the following was true: their age was less than $18$ years old at the time of ICU admission; their mortality not documented; their IV fluid/vasopressors intake was not documented; their treatment was withdrawn.

We made the following modifications to the preprocessing code of \cite{ai-clinician} for our experiment.
First, instead of extracting physiological quantities (e.g.\ heart rate) every 4 hours, we extracted these every hour.
Additionally, we excluded patients with any missing hourly vitals during the first 4 hours of their ICU stay.

We then extracted a total of 19 quantities of interest listed in Table \ref{tab:mimic-features}.
Of these, 17 were physiological quantities associated with the patient, including static demographic quantities (e.g.\ age), patient vital signs (e.g.\ heart rate), and patient lab values (e.g.\ potassium blood concentration).
All of these were continuous values, apart from sex.
These were chosen as the subset of physiological quantities extracted from MIMIC by \cite{ai-clinician} that are also modelled by Pulse, and were used to define our observation spaces $\Xspace_\tx$ as described next.
The remaining 2 quantities (intravenous fluids and vasopressor doses) were chosen since they correspond to treatments that the patient received, and were used to define our action spaces $\Aspace_\tx$ as described below.

\subsection{Sample splitting}

Before proceeding further, we randomly selected 5\% of the extracted our trajectories (583 trajectories, denoted as $\D_0$) to use for preliminary tasks such as choosing the parameters of our hypotheses.
We reserved the remaining 95\% (11,094 trajectories, denoted as $\D$) for the actual testing.
By a standard sample splitting argument \citep{cox1975note}, the statistical guarantees of our testing procedure established above continue to apply even when our hypotheses are defined in this data-dependent way.

\begin{table}[t]%
\centering
\begin{footnotesize}
\begin{tabular}{lll}
Category &  Physiological quantity \\
\midrule
Demographic & Age \\
& Sex  \\
& Weight \\
\midrule
Vital Signs & Heart rate (HR)  \\
& Systolic blood pressure (SysBP)  \\
& Diastolic blood pressure (DiaBP)  \\
& Mean blood pressure (MeanBP) \\
& Respiratory Rate (RR) \\
& Skin Temperature (Temp) \\
\midrule
Lab Values & Potassium Blood Concentration (Potassium)  \\
& Sodium Blood Concentration (Sodium)  \\
& Chloride Blood Concentration (Chloride)  \\
& Glucose Blood Concentration (Glucose)  \\
& Calcium Blood Concentration (Calcium)  \\
& Bicarbonate Blood Concentration ($\textup{HCO}_3$)  \\
& Arterial $\textup{O}_2$ Pressure ($\textup{PaO}_2$)  \\
& Arterial $\textup{CO}_2$ Pressure ($\textup{PaCO}_2$) \\
\midrule
Treatments & Intravenous fluid (IV) dose \\
& Vasopressor dose \\
\bottomrule
\end{tabular}
\end{footnotesize}
\caption{Physiological quantities and treatments extracted from MIMIC}\label{tab:mimic-features}
\end{table}

\subsection{Observation spaces} \label{sec:observation-space-definition-supp}

Our $\Xspace_0$ consisted of the following features: age, sex, weight, heart rate, systolic blood pressure, diastolic blood pressure and respiration rate.
We chose $\Xspace_0$ in this way because, out of the 17 physiological quantities we extracted from MIMIC, these were the quantities that can be initialised to user-provided values before starting a simulation in the version of Pulse we considered (4.x).
(In contrast, Pulse initialises the other 10 features to default values.)
For the remaining observation spaces, we used the full collection of the 17 physiological quantities we extracted to define $\Xspace_1 = \cdots = \Xspace_4$.
We encoded all features in $\Xspace_\tx$ numerically, i.e.\ $\Xspace_0 = \R^7$, and $\Xspace_\tx = \R^{17}$ for $\tx \in \{1, 2, 3, 4\}$.

\subsection{Action spaces} \label{sec:action-space-definition-supp}

Following \cite{ai-clinician}, we constructed our action space using 2 features obtained from MIMIC, namely intravenous fluid (IV) and vasopressor doses.
To obtain discrete action spaces suitable for our framework, we used the same discretization procedure for these quantities as was used by \cite{ai-clinician}.
Specifically, we divided the hourly doses of intravenous fluids and vasopressors into 5 bins each, with the first bin corresponding to zero drug dosage, and the remaining 4 bins based on the quartiles of the non-zero drug dosages in our held-out observational dataset $\D_0$.
From this we obtained action spaces $\Aspace_1 = \cdots = \Aspace_{4}$ with $5 \times 5 = 25$ elements. 
Table \ref{tab:act_space} shows the dosage bins constructed in this way, as well as the frequency of each bin's occurrence in the observational data.

\begin{table}[t]%
    \centering
    \begin{footnotesize}
\begin{tabular}{ll|ccccc}
\multicolumn{1}{c}{} & \multicolumn{1}{c}{} & \multicolumn{5}{c}{Vasopressor dose ($\mu$g/kg/min)}\\
\multicolumn{1}{c}{} & \multicolumn{1}{c}{} &      0 &  0.0 - 0.061 &  0.061 - 0.15 &  0.15 - 0.313 &  $>0.313$ \\
\midrule
\multirow{5}{*}{IV dose (mL/h)} & 0        &  16659 &          329 &           256 &           152 &      145 \\
& 0 - 20   &   5840 &          428 &           351 &           244 &      145 \\
& 20 - 75  &   6330 &          297 &           378 &           383 &      309 \\
& 75 - 214 &   6232 &          176 &           175 &           197 &      273 \\
& $>214$    &   5283 &          347 &           488 &           544 &      747 \\
\end{tabular}
    \end{footnotesize}
\caption{Action space with frequency of occurrence in observational data} \label{tab:act_space}
\end{table}

\subsection{Hypothesis parameters} \label{sec:hypothesis-parameters-supplement}

We used our held-out observational dataset $\D_0$ to obtain a collection of hypothesis parameters $(\tx, \fx, \ax_{1:\tx}, \B_{0:\tx})$.
Specifically, for each physiological quantity of interest (e.g.\ heart rate) in the list of `Vital Signs' and `Lab Values' given in Table \ref{tab:mimic-features}, we did the following.
First, for each $\tx \in \{0, \ldots, 4\}$, we obtained 16 choices of $\B_{\tx}$ by discretizing the patient space $\Xspace_{\tx}$ into 16 subsets based on the values of certain features as follows: 2 bins corresponding to sex; 4 bins corresponding to the quartiles of the ages of patients in $\D_0$; 2 bins corresponding to whether or not the value of the chosen physiological quantity of interest at time $\tx$ was above or below its median value in $\D_0$.

Next, for each $\tx \in \{1, \ldots, 4\}$, $\ax_{1:\tx} \in \Aspace_{1:\tx}$, and sequence $\B_{0:\tx}$ with each $\B_{\tx'}$ as defined in the previous step, let $\D_0(\tx, \ax_{1:\tx}, \B_{0:\tx})$ denote the subset of $\D_0$ corresponding to $(\tx, \ax_{1:\tx}, \B_{0:\tx})$, i.e.\
\[
    \D_0(\tx, \ax_{1:\tx}, \B_{0:\tx}) \coloneqq \{\X_{0:\tx}(\A_{1:\tx}) \mid \text{$(\X_{0:\T}(\A_{1:\T}), \A_{1:\T}) \in \D_0$ with $\A_{1:\tx} = \ax_{1:\tx}$ and $\X_{0:\tx}(\A_{1:\T}) \in \B_{0:\tx}$}\}.
\]
We then selected the set of all triples $(\tx, \ax_{1:\tx}, \B_{0:\tx})$ such that $\D_0(\tx, \ax_{1:\tx}, \B_{0:\tx})$ contained at least one trajectory.
This meant the number of combinations of hypotheses parameters that we considered was limited to a tractable quantity, which had benefits both computationally, and also by ensuring that we did not sacrifice too much power when adjusting for multiple testing.

Finally, for each selected triple $(\tx, \ax_{1:\tx}, \B_{0:\tx})$, we chose a corresponding $\fx$ as follows.
First, we let $i \in \{1, \ldots, \Xspacedim_\tx\}$ denote the index of the physiological quantity of interest in $\Xspace_\tx = \R^{\Xspacedim_\tx}$.
We then set $\ylo, \yup$ to be the .2 and the .8 quantiles of the values in
\[
    \{(\X_\tx(\A_{1:\tx}))_i \mid \X_{0:\tx}(\A_{1:\tx}) \in \D_0(\tx, \ax_{1:\tx}, \B_{0:\tx})\}
\]
We then obtained $\fx : \Xspace_{0:\tx} \to \R$ as the function that extracts the physiological quantity of interest from $\Xspace_\tx$ and clips its value to between $\ylo$ and $\yup$, i.e.\ $\fx(\xx_{0:\tx}) \coloneqq \min(\max(\xx_{\tx})_{i}, \ylo), \yup)$.

Overall, accounting for all physiological quantities of interest, we obtained 721 distinct choices of $(\tx, \fx, \ax_{1:\tx}, \B_{0:\tx})$ in this way.
Figure \ref{fig:n_histograms} shows the amount of non-held out observational and twin data that we subsequently used for testing each hypothesis, i.e.\ the values of $n$ and $\widehat{n}$ as defined in Section \ref{sec:confidence-intervals-methodology-supplement} above.
(We describe how we generated our dataset of twin trajectories in Section \ref{sec:pulse-trajectories-supplement}.)

\subsection{Generating twin trajectories using the Pulse Physiology Engine}\label{sec:pulse-trajectories-supplement}

The Pulse Physiology Engine is an open source comprehensive human physiology simulator that has been used in medical education, research, and training. The core engine of Pulse is C++ based with APIs available in different languages, including python. Detailed documentation is available at: \href{https://pulse.kitware.com/}{pulse.kitware.com}.
Pulse allows users to initialize patient trajectories with given age, sex, weight, heart rate, systolic blood pressure, diastolic blood pressure and respiration rate and medical conditions such as sepsis, COPD, ARDS, etc. Once initialised, users have the ability to advance patient trajectories by a given time step (one hour in our case), and administer actions (e.g. administer a given dose of IV fluids or vasopressors).

In Algorithm \ref{algo:twin-data-generation} we describe how we generated the twin data to test the chosen hypotheses. Note that we sampled $\X_0$ without replacement as it ensures that each $\X_0$ is chosen at most once and consequently twin trajectories in $\Dt(\ax_{1:\tx})$ are i.i.d. 
Additionally, Algorithm \ref{algo:twin-data-generation} can be easily parallelised to improve efficiency.
Figure \ref{fig:n_histograms} shows histograms of the number of twin trajectories $\widehat{\nx}$ (as defined in Section \ref{sec:confidence-intervals-methodology-supplement} above) obtained in this way across all hypotheses.

\begin{algorithm}
\SetAlgoLined
\textbf{Inputs:} Action sequence $\ax_{1:\tx}$; Observational dataset $\D$.\\
\textbf{Output:} Twin data $\Dt(\ax_{1:\tx})$ of size $m$.\\
\For{$i = 1, \dots, m$}{
Sample $\X_0$ without replacement from $\D$\;
$\Xt_0 \leftarrow  \X_0$ i.e., initialize the Pulse trajectory with the information of $\X_0$\;
\For{$\tx' = 1, \dots, \tx$}{
Administer the median doses of IV fluids and vasopressors in action bin $\ax_{\tx'}$\;
\If{$\tx' \equiv 0$ \textup{(mod 3)}}{
Virtual patient in Pulse consumes nutrients and water, and urinates\;
}
Advance the twin trajectory by one hour\;
}
Add the trajectory $\Xt_{0:\tx}(\ax_{1:\tx})$ to $\Dt(\ax_{1:\tx})$\;
}
\textbf{Return} $\Dt(\ax_{1:\tx})$
\caption{Generating Twin data $\Dt(\ax_{1:\tx})$.}
\label{algo:twin-data-generation}
\end{algorithm}

\begin{figure}[t]
    \centering
    \includegraphics[height=21cm]{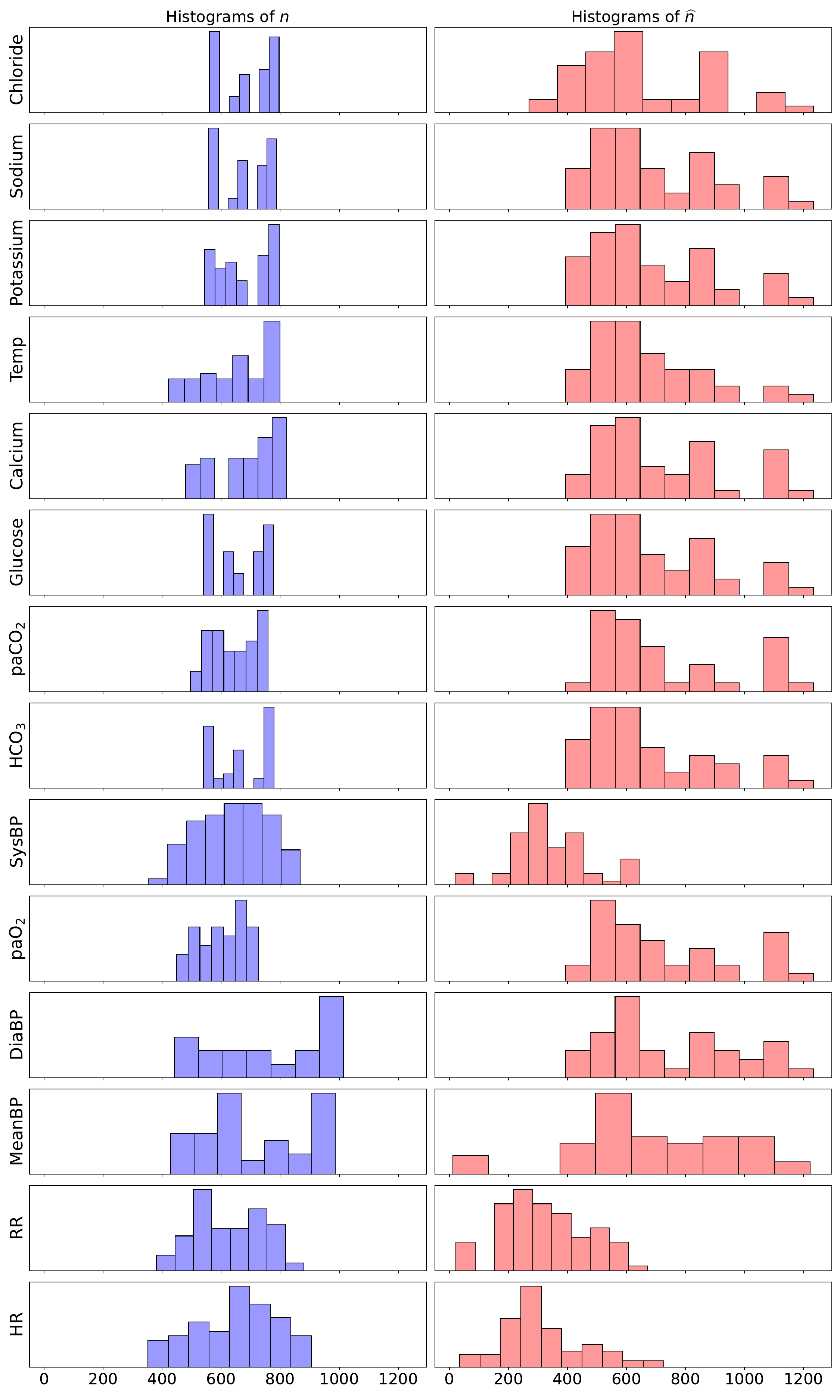}
    \caption{Histograms of $n$ and $\widehat{n}$ (as defined in Section \ref{sec:confidence-intervals-methodology-supplement}) across all hypothesis parameters corresponding to each physiological quantity of interest.}
    \label{fig:n_histograms}
\end{figure}

\begin{figure}[t]%
    \centering
    \includegraphics[height=8cm]{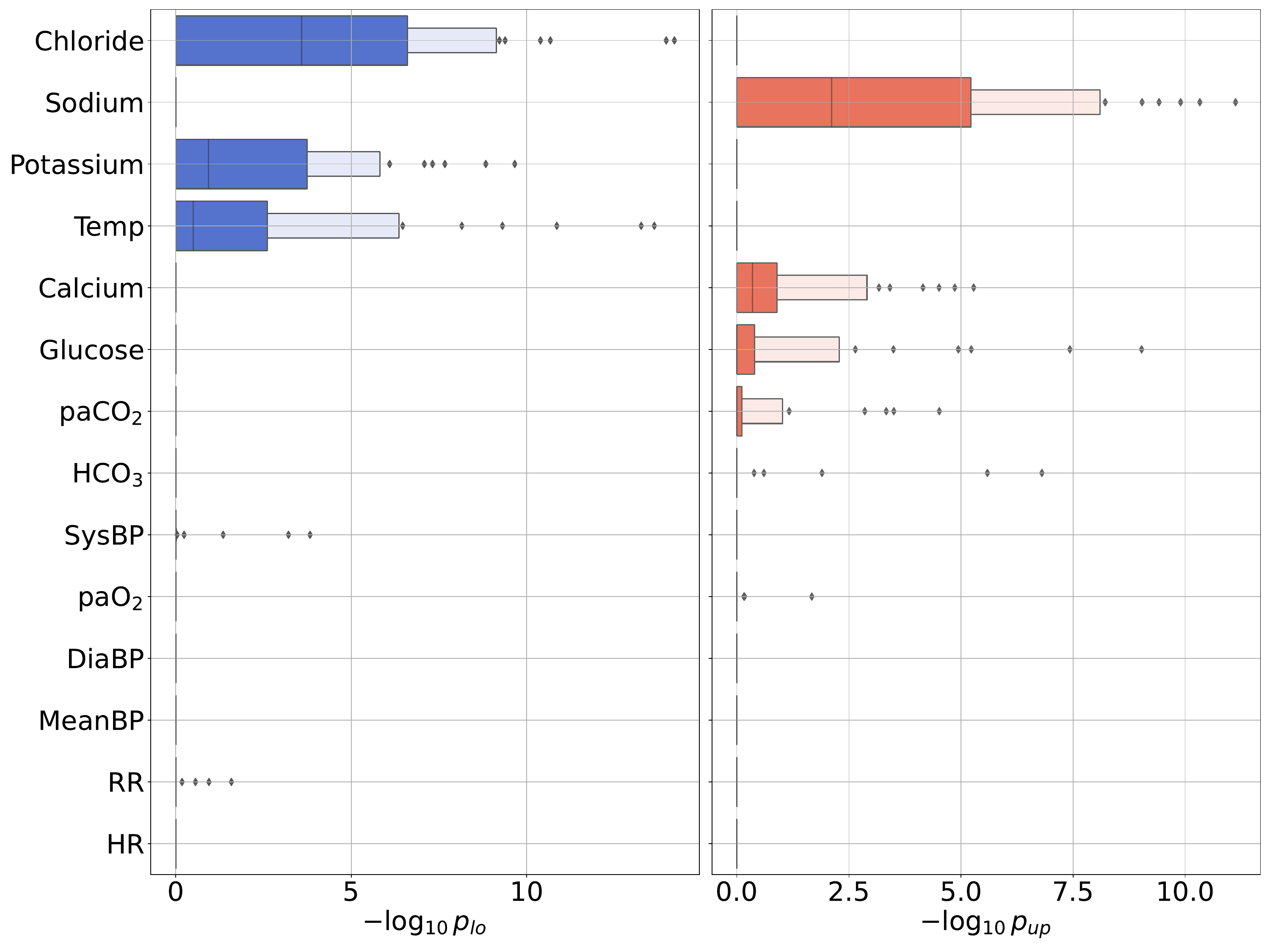}
    \caption{Boxenplots showing distributions of $-\log_{10}{p_\textup{lo}}$ and $-\log_{10}{p_\textup{up}}$ for different physiological quantities obtained via Hoeffding's inequality. Higher values indicate greater evidence in favour of rejection.}
    \label{fig:p_values_hoeff_complete}
\end{figure}

\begin{table}%
    \centering
\begin{footnotesize}
\begin{tabular}{l|cc|cc}
& \multicolumn{2}{c}{Ours} & \multicolumn{2}{c}{Manski} \\
                   Physiological quantity &  Rejs. &  Hyps. &  Rejs. &  Hyps. \\
\midrule
  Chloride Blood Concentration (Chloride) &            24 &            94 & 1 & 46 \\
      Sodium Blood Concentration (Sodium) &            21 &            94 & 9 & 46 \\
Potassium Blood Concentration (Potassium) &            13 &            94 & 0 & 46 \\
                  Skin Temperature (Temp) &            10 &            86 & 9 & 46 \\
    Calcium Blood Concentration (Calcium) &             5 &            88 & 0 & 46 \\
    Glucose Blood Concentration (Glucose) &             5 &            96 & 1 & 46 \\
      Arterial CO$_2$ Pressure (paCO$_2$) &             3 &            70 & 0 & 46 \\
Bicarbonate Blood Concentration (HCO$_3$) &             2 &            90 & 1 & 46 \\
       Systolic Arterial Pressure (SysBP) &             2 &           154 & 0 & 46 \\
        Arterial O$_2$ Pressure (paO$_2$) &             0 &            78 & 1 & 46 \\
                Arterial pH (Arterial\_pH) &             0 &            80 & 0 & 46 \\
      Diastolic Arterial Pressure (DiaBP) &             0 &            72 & 0 & 46 \\
          Mean Arterial Pressure (MeanBP) &             0 &            92 & 0 & 46 \\
                    Respiration Rate (RR) &             0 &           172 & 0 & 46 \\
                          Heart Rate (HR) &             0 &           162 & 0 & 46 \\
\bottomrule
\end{tabular}
\end{footnotesize}
    \caption{Total hypotheses (Hyps.) and rejections (Rejs.) per physiological quantity obtained using Hoeffding's inequality} \label{tab:hypotheses_hoeffding_full}
\end{table}

\begin{table}[t]
\centering
\begin{footnotesize}
\begin{tabular}{l|cc|cc}
& \multicolumn{2}{c}{Ours} & \multicolumn{2}{c}{Manski} \\
                   Physiological quantity &  Rejs. &  Hyps. &  Rejs. &  Hyps. \\
\midrule
  Chloride Blood Concentration (Chloride) &            47 &            94 & 1 & 46 \\
      Sodium Blood Concentration (Sodium) &            46 &            94 & 12 & 46 \\
Potassium Blood Concentration (Potassium) &            33 &            94 & 0 & 46 \\
                  Skin Temperature (Temp) &            43 &            86 & 13 & 46 \\
    Calcium Blood Concentration (Calcium) &            44 &            88 & 0 & 46 \\
    Glucose Blood Concentration (Glucose) &            19 &            96 & 0 & 46 \\
      Arterial CO$_2$ Pressure (paCO$_2$) &            13 &            70 & 0 & 46 \\
Bicarbonate Blood Concentration (HCO$_3$) &             8 &            90 & 0 & 46 \\
       Systolic Arterial Pressure (SysBP) &             8 &           154 & 0 & 46 \\
        Arterial O$_2$ Pressure (paO$_2$) &             4 &            78 & 1 & 46 \\
                Arterial pH (Arterial\_pH) &             0 &            80 & 0 & 46 \\
      Diastolic Arterial Pressure (DiaBP) &             0 &            72 & 0 & 46 \\
          Mean Arterial Pressure (MeanBP) &             3 &            92 & 0 & 46 \\
                    Respiration Rate (RR) &            12 &           172 & 0 & 46 \\
                          Heart Rate (HR) &             1 &           162 & 0 & 46 \\
\bottomrule
\end{tabular}
\end{footnotesize}
\caption{Total hypotheses (Hyps.) and rejections (Rejs.) per physiological quantity obtained using the reverse percentile bootstrap}  \label{tab:hypotheses_rev_percentile}
\end{table}

\subsection{Bootstrapping details} \label{sec:boostrapping-details-supplement}

In addition to Hoeffding's inequality, we also used reverse percentile bootstrap method (see e.g.\ \cite{hesterberg2015what}) to obtain our confidence intervals on $\Qlo$ and $\Qup$ as described in Section \ref{sec:confidence-intervals-methodology-supplement}.
We used 100 bootstrap samples for each confidence interval.
To avoid bootstrapping on small numbers of data points, we did not reject any hypothesis where either the number of observational trajectories $n$ or twin trajectories $\widehat{n}$ was less than 100, and returned a $p$-value of 1 in each such case.

Table \ref{tab:hypotheses_rev_percentile} shows the number of rejected hypotheses for each physiological quantity using this approach.
We observed a similar trend as in our results obtained using Hoeffding's inequality (Table \ref{tab:hypotheses_hoeffding_full}).
For example, we obtained high number of rejections for Sodium, Chloride and Potassium blood concentrations but few rejections for Arterial Pressure and Heart Rate.
Overall, bootstrapping increased the number of rejected hypotheses by a factor of roughly 3.3 compared with Hoeffding's inequality (281 vs.\ 85 rejections in total).
Like we described for Hoeffding's inequality in the main text, we also ran this analysis with each hypothesis obtained using the unconditional bounds of \cite{manski}, and again obtained substantially fewer rejections compared with our approach based on Theorem \ref{thm:causal-bounds}.

\begin{figure}[t]
    \centering
        \begin{subfigure}[b]{0.26\textwidth}
    \includegraphics[height=3.7cm]{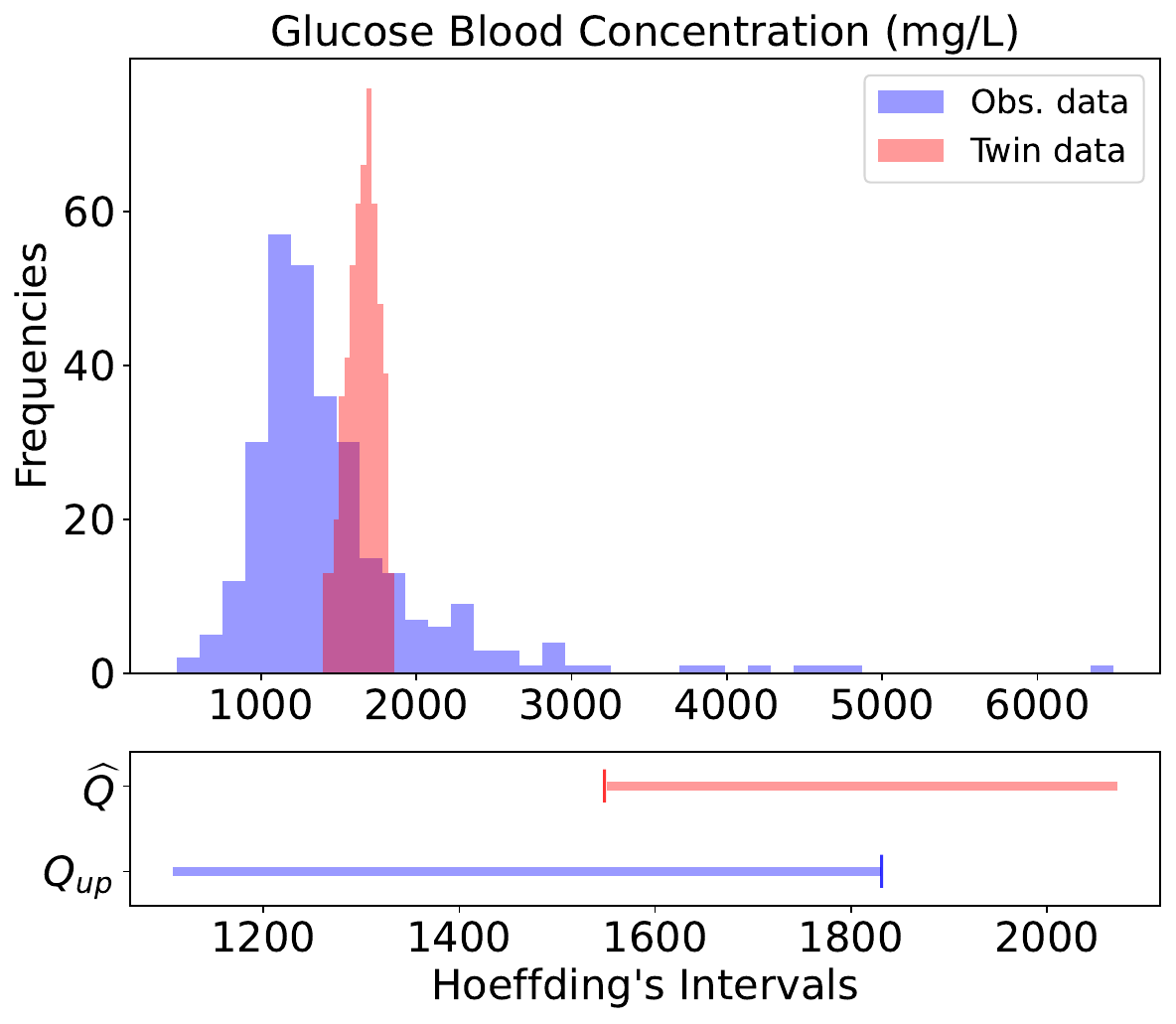}
    \subcaption{Not rejected}
    \label{fig:glucosea-supp}
    \end{subfigure}\hspace{1cm}%
    \begin{subfigure}[b]{0.26\textwidth}
    \includegraphics[height=3.7cm]{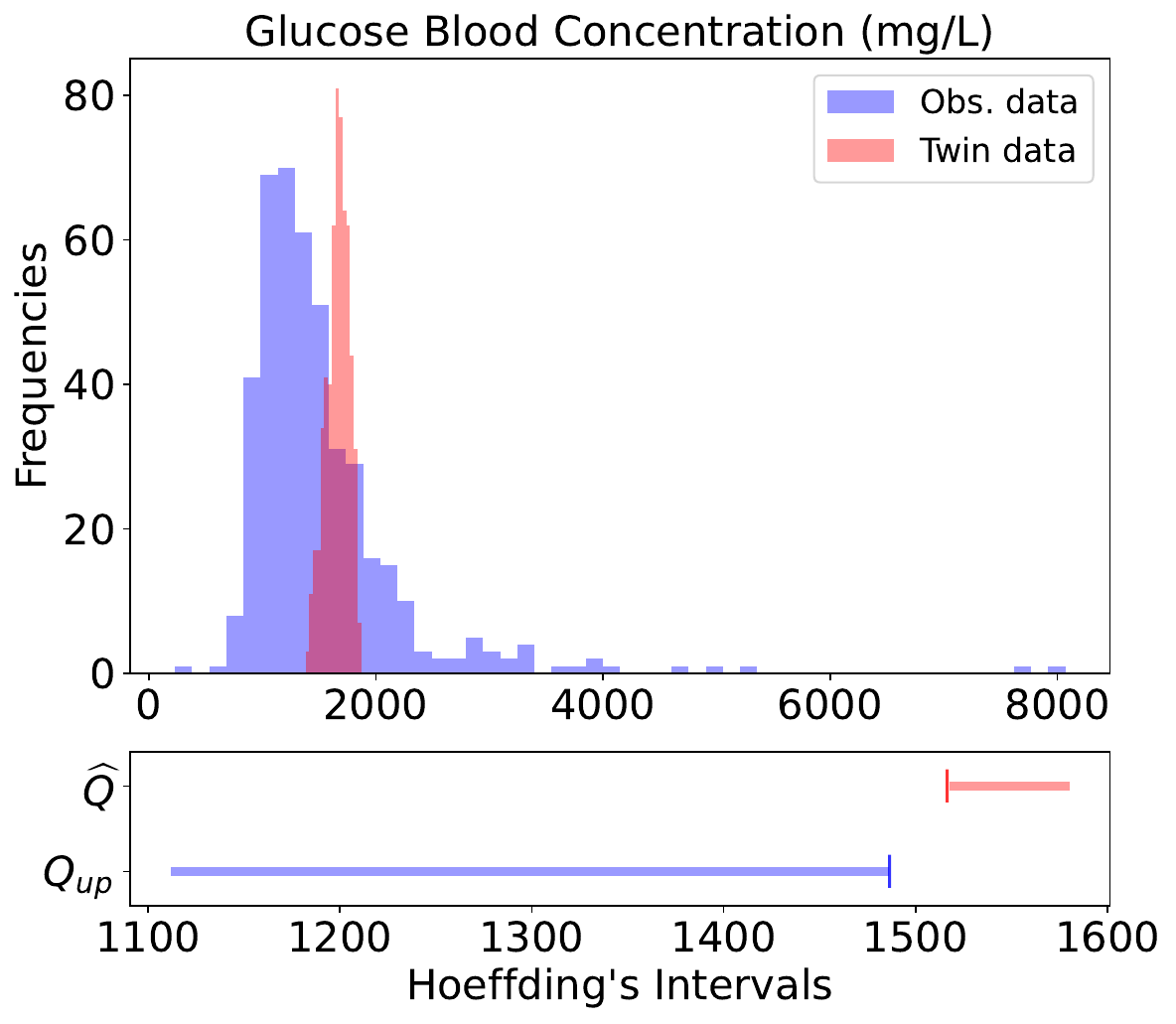}
    \subcaption{Rejected}
    \label{fig:glucoseb-supp}
    \end{subfigure}\\      
    \begin{subfigure}[b]{0.26\textwidth}
    \includegraphics[height=3.7cm]{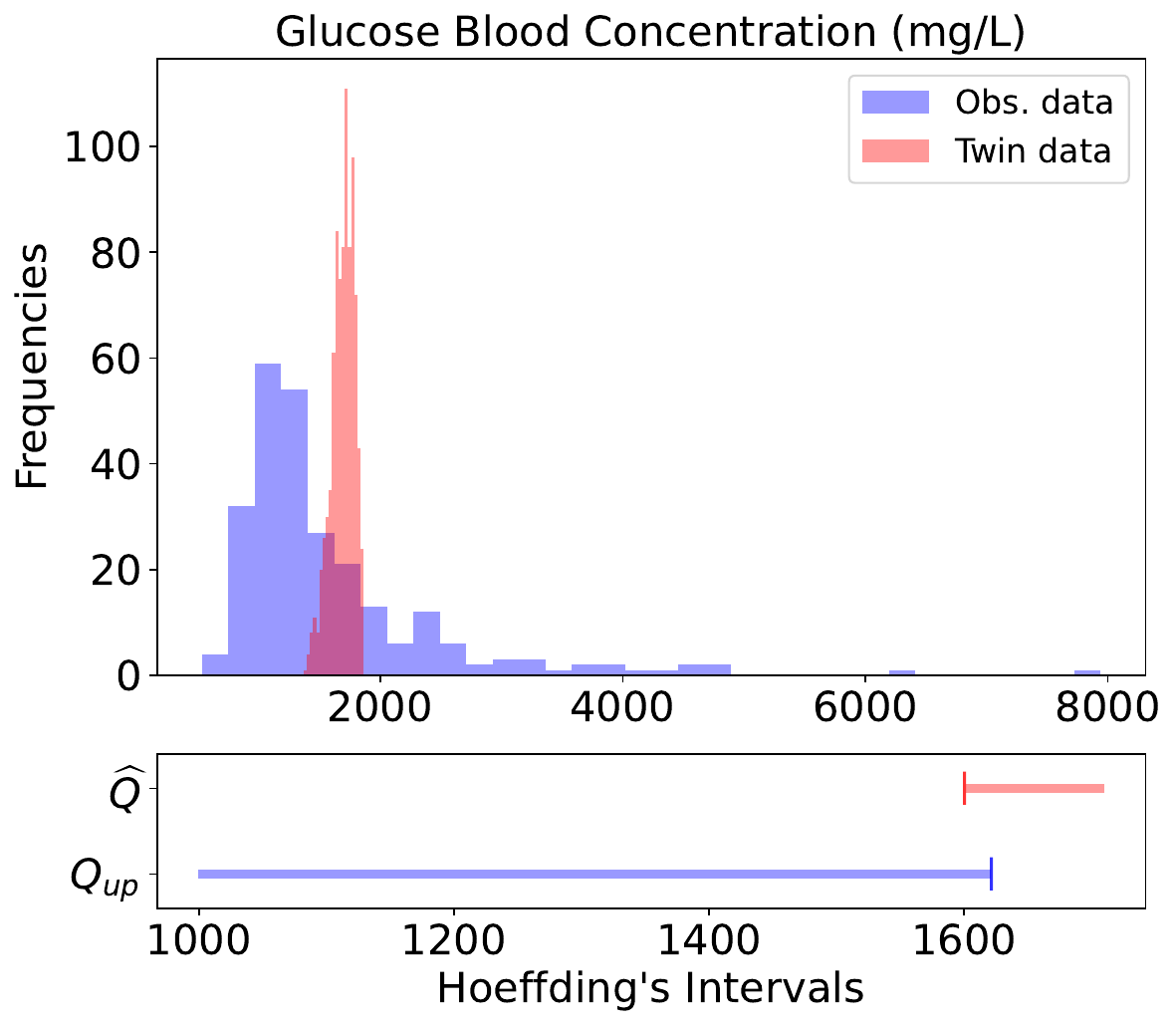}
    \subcaption{Not rejected}
    \label{fig:chloridea}
    \end{subfigure}\hspace{1cm}%
    \begin{subfigure}[b]{0.26\textwidth}
    \includegraphics[height=3.7cm]{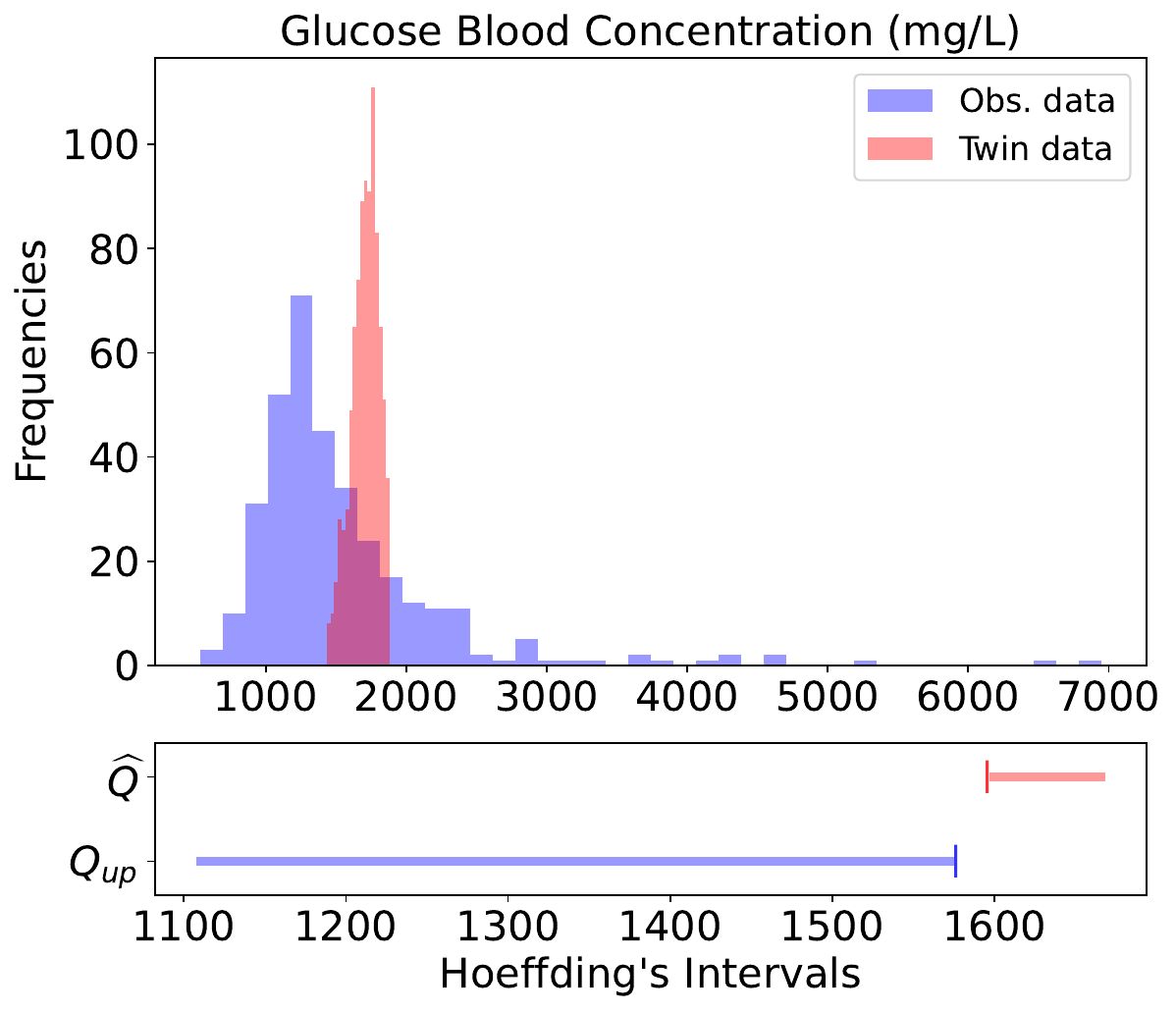}
    \subcaption{Rejected}
    \label{fig:chlorideb}
    \end{subfigure}\\
    \begin{subfigure}[b]{0.26\textwidth}
    \includegraphics[height=3.7cm]{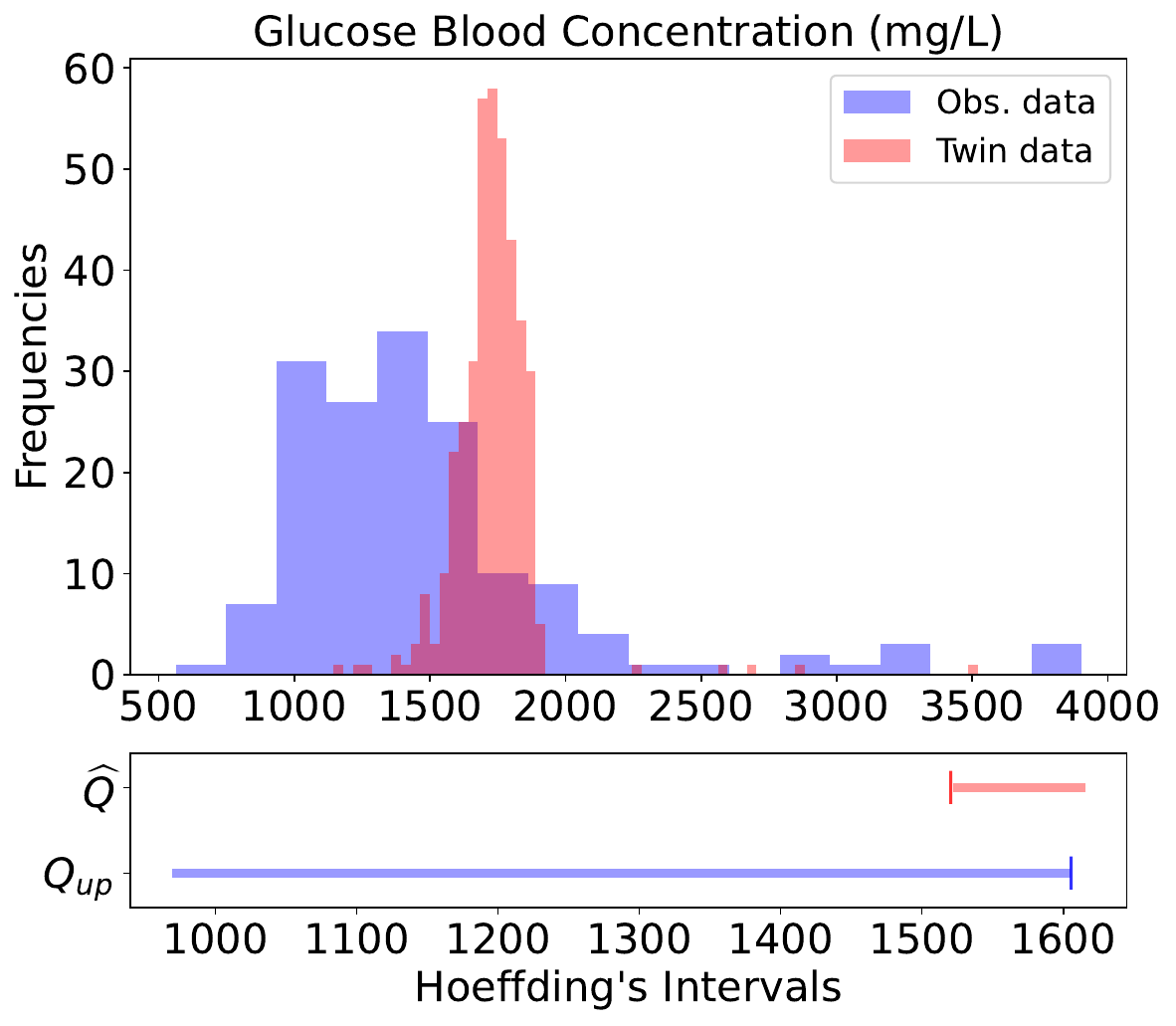}
    \subcaption{Not rejected}
    \label{fig:potassiuma}
    \end{subfigure}\hspace{1cm}%
    \begin{subfigure}[b]{0.26\textwidth}
    \includegraphics[height=3.7cm]{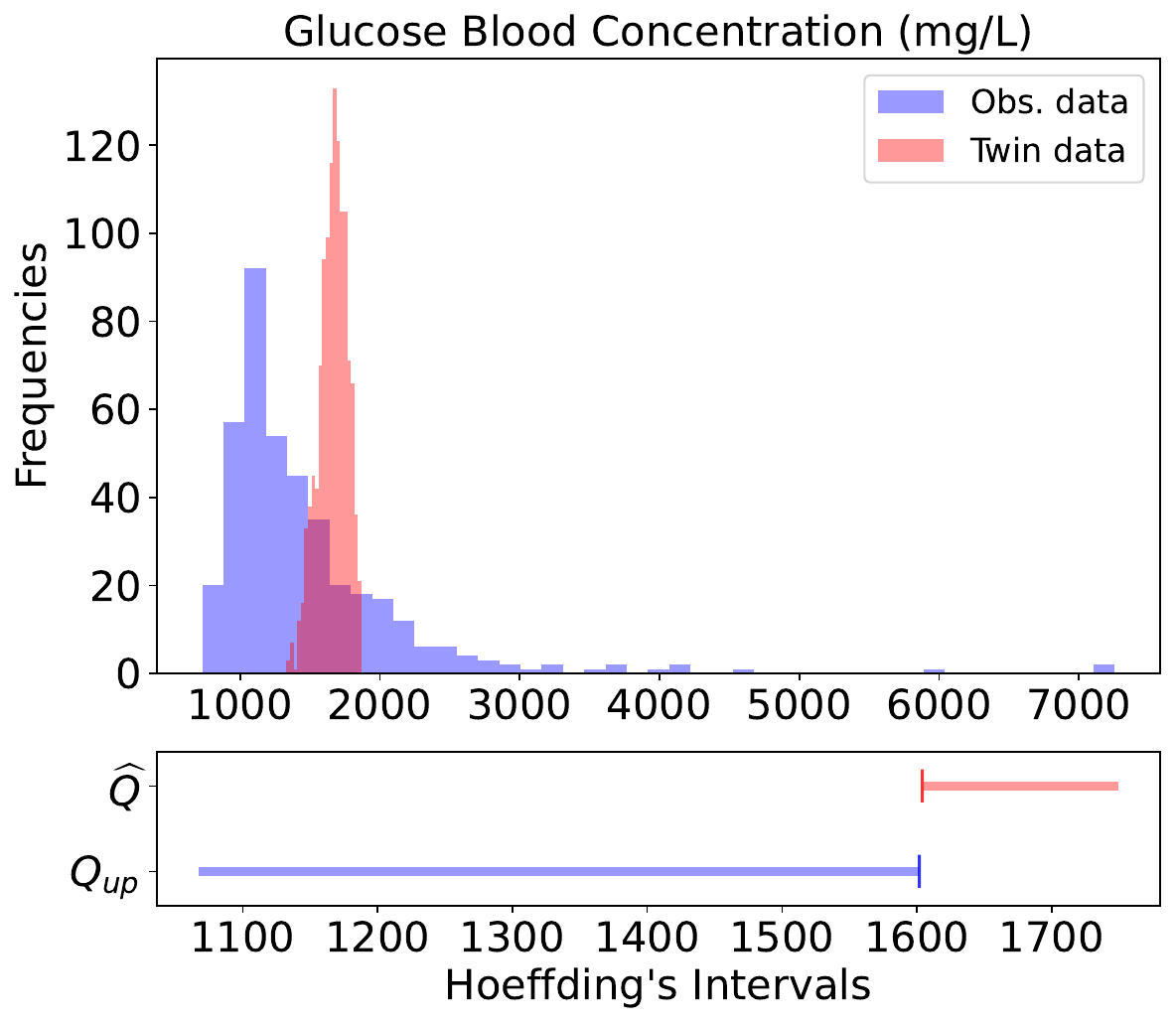}
    \subcaption{Rejected}
    \label{fig:potassiumb}
    \end{subfigure}\\
    \begin{subfigure}[b]{0.26\textwidth}
    \includegraphics[height=3.7cm]{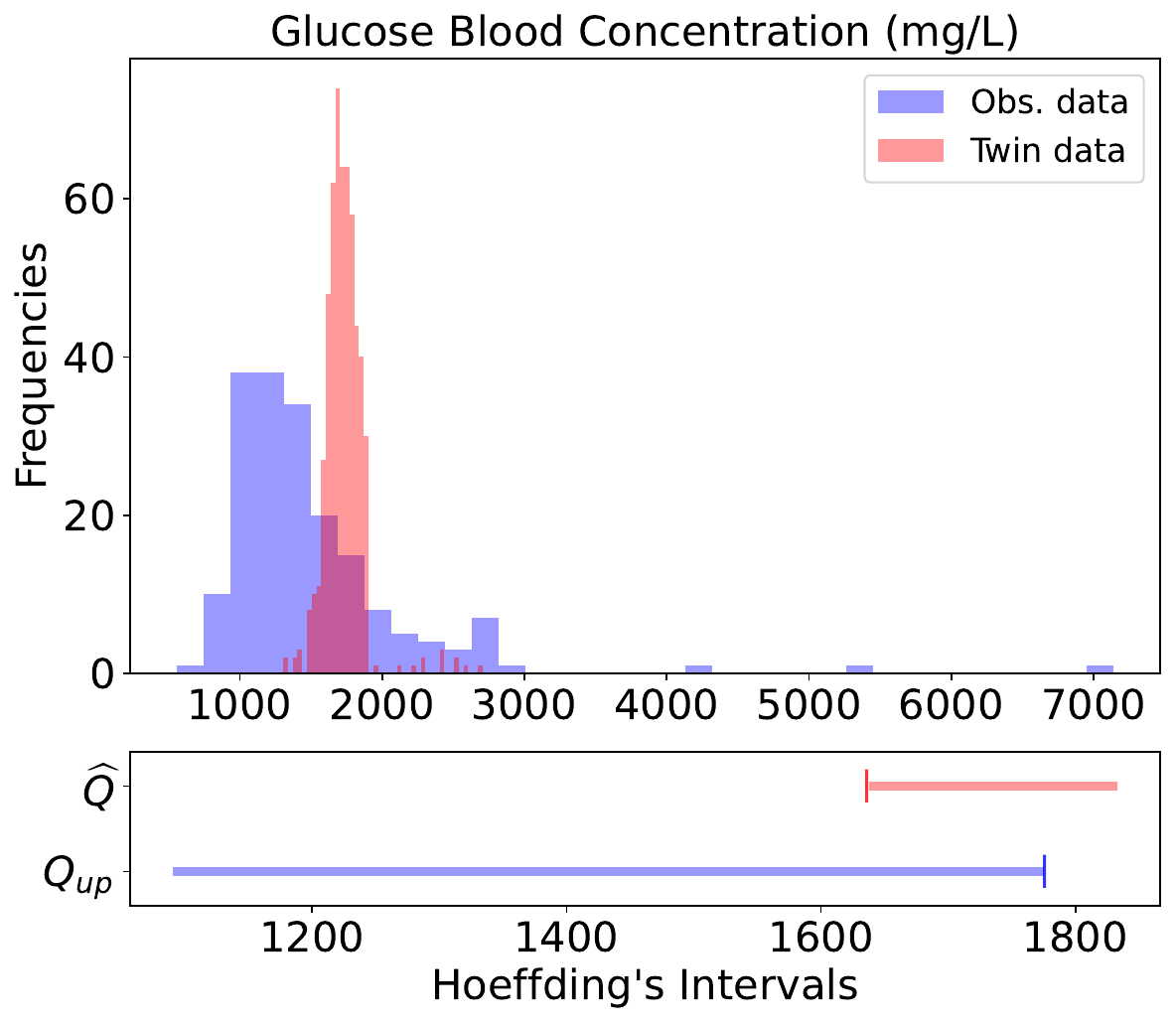}
    \subcaption{Not rejected}
    \label{fig:paco2a}
    \end{subfigure}\hspace{1cm}%
    \begin{subfigure}[b]{0.26\textwidth}
    \includegraphics[height=3.7cm]{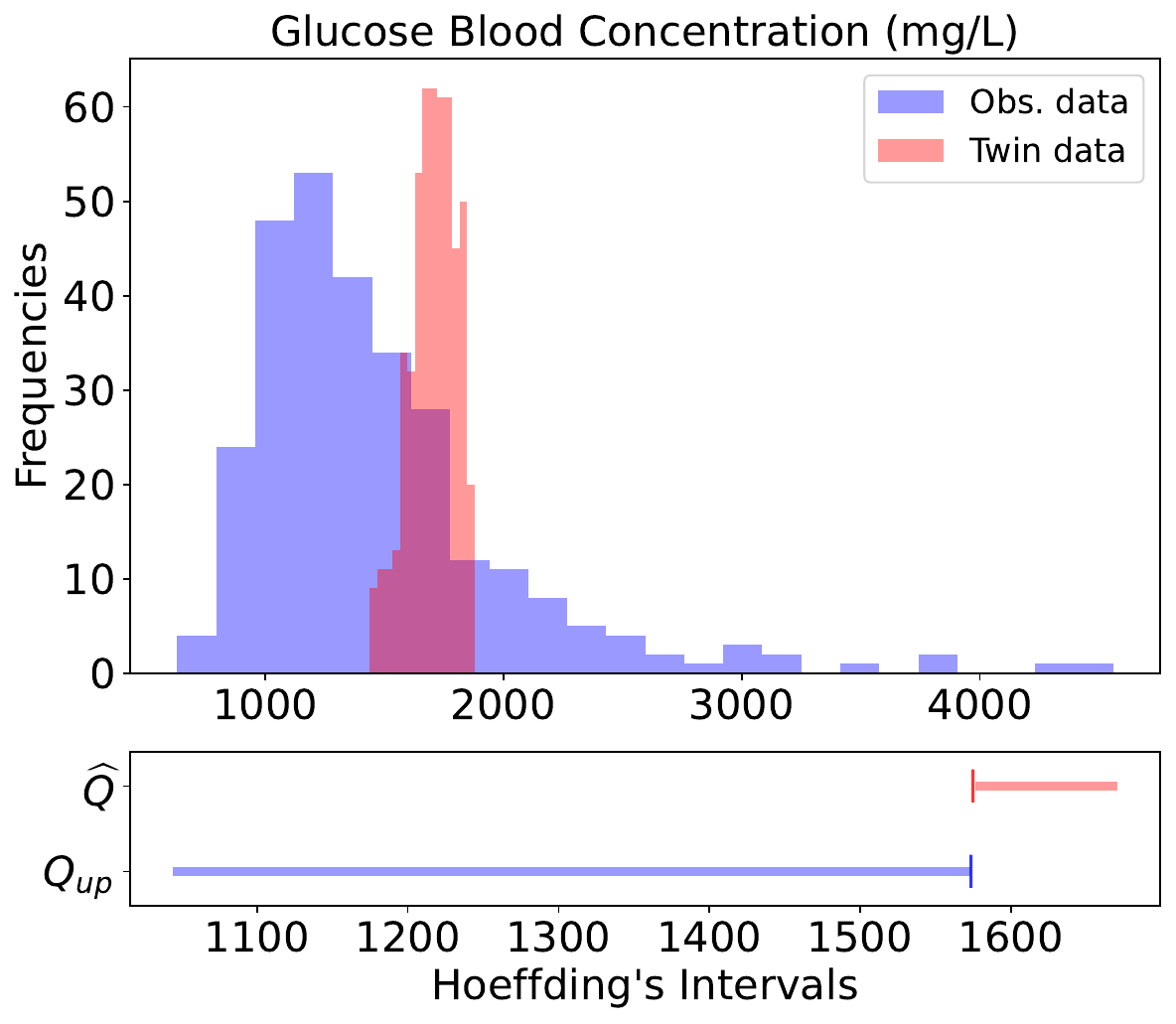}
    \subcaption{Rejected}
    \label{fig:paco2b}
    \end{subfigure}

    \caption{Raw observational data values conditional on $\A_{1:\tx}=\ax_{1:\tx}$ and $\X_{0:\tx}(\A_{1:\tx})\in \B_{0:\tx}$, and from the output of the twin conditional on $\Xt_{0:\tx}(\ax_{1:\tx})\in \B_{0:\tx}$.
    Each row shows two distinct choices of $(\B_{0:\tx}, \ax_{1:\tx})$.
    Below each figure are shown 95\% Hoeffding confidence intervals for $\Qt$ and $\Qup$.
    Unlike Figure \ref{fig:histograms} from the main text, the horizontal axes of the histograms are not truncated, and the first row is in particular an untruncated version of Figure \ref{fig:histograms} from the main text.
    Note however that the scales of the horizontal axes of the confidence intervals differ from those of the histograms, since it is visually more difficult to determine whether or not the confidence intervals overlap when fully zoomed out.} \label{fig:histograms-supplement}
\end{figure}

\subsection{Tightness of bounds and number of data points per hypothesis}
    In this section, we show empirically how both the tightness of the bounds $[\Qlo, \Qup]$ and the number of data points per hypothesis relate to the number of falsifications obtained in our case study.
    Recall that the tightness of $[\Qlo, \Qup]$ is determined by the value of $\Prob(\A_{1:\tx} = \ax_{1:\tx} \mid \X_{0:\N}(\A_{1:\N}) \in \B_{0:\N})$, since we have
    \begin{equation} \label{eq:N-propensity-tightness-relation}
        \frac{\Qup - \Qlo}{\yup - \ylo} = 1 - \Prob(\A_{1:\tx} = \ax_{1:\tx} \mid \X_{0:\N}(\A_{1:\N}) \in \B_{0:\N}).
    \end{equation}
    Here the left-hand side is a number in $[0, 1]$ that quantifies the tightness of the bounds $[\Qlo, \Qup]$ relative to the trivial worst-case bounds $[\ylo, \yup]$, with smaller values meaning tighter bounds. The equation above shows that the higher the value of $\Prob(\A_{1:\tx} = \ax_{1:\tx} \mid \X_{0:\N}(\A_{1:\N}) \in \B_{0:\N})$, the tighter the bounds are.
    
    Figure \ref{fig:scatter-plot} shows the bounds are often informative in practice, with $\Prob(\A_{1:\tx} = \ax_{1:\tx} \mid \X_{0:\N}(\A_{1:\N}) \in \B_{0:\N})$ being reasonably large (and hence the bounds tight, by \eqref{eq:N-propensity-tightness-relation} above) for a significant number of hypotheses we consider.
    However, rejections still occur even when the bounds are reasonably loose (e.g.\ $\Prob(\A_{1:\tx} = \ax_{1:\tx} \mid \X_{0:\N}(\A_{1:\N}) \in \B_{0:\N}) \approx 0.3$), which shows our method can still yield useful information even in this case.
    We moreover observe rejections across a range of different numbers of observational data points used to test each hypothesis, which shows that our method is not strongly dependent on the size of the dataset obtained. 

    \subsection{Sensitivity to $\ylo$ and $\yup$} \label{sec:sensitity-analysis-appendix}

    We investigated the sensitivity of our methodology with respect to our choices of the values $\ylo$ and $\yup$.
    Specifically, we repeated our procedure with the intervals $[\ylo, \yup]$ replaced with $[\ylo\, (1- \Delta/2), \yup\,(1 + \Delta/2)]$ for a range of different values of $\Delta \in \R$.
    Figure \ref{fig:sensitivity-plot-rejections} plots the number of rejections for different values of $\Delta$.
    We observe that for significantly larger $[\ylo, \yup]$ intervals, we do obtain fewer rejections, although this is to be expected since the widths of our both the bounds $[\Qlo, \Qup]$ and our confidence intervals $\qlo{\alpha}$ and $\qup{\alpha}$ obtained using Hoeffding's inequality (see Proposition \ref{prop:hoeffding-confidence-bounds-supp}) grow increasingly large as the width of $[\ylo, \yup]$ grows.
    However, we observe that the number of rejections per outcome is stable for a moderate range of widths of $[\ylo, \yup]$, which indicates that our method is reasonably robust to the choice of $\ylo, \yup$ parameters.

    \begin{figure}[t]
        \centering
        \includegraphics[height=6cm]{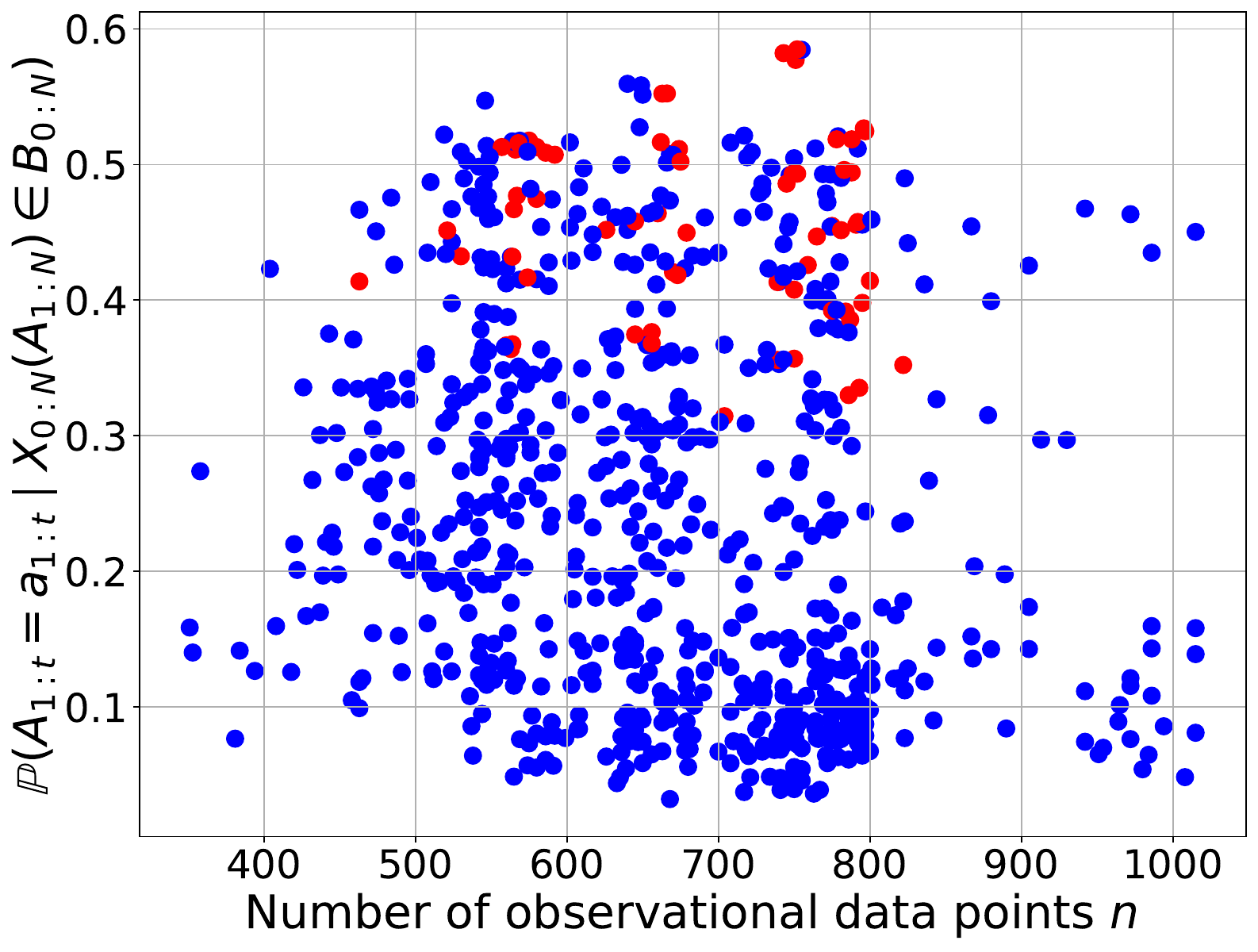}
        \caption{Sample mean estimate of $\Prob(\A_{1:\tx} = \ax_{1:\tx} \mid \X_{0:\N}(\A_{1:\N}) \in \B_{0:\N})$ for each pair of hypotheses $(\Hlo, \Hup)$ corresponding to the same set of parameters $(\tx, \fx, \ax_{1:\tx}, \B_{0:\tx})$ that we tested, along with the corresponding number of observational data points used to test each hypothesis.
        Red points indicate that either $\Hlo$ or $\Hup$ were rejected, while blue points indicate that both $\Hlo$ and $\Hup$ were not rejected.}
        \label{fig:scatter-plot}
    \end{figure}

    \begin{figure}[t]
    \centering
\includegraphics[width=0.6\textwidth]{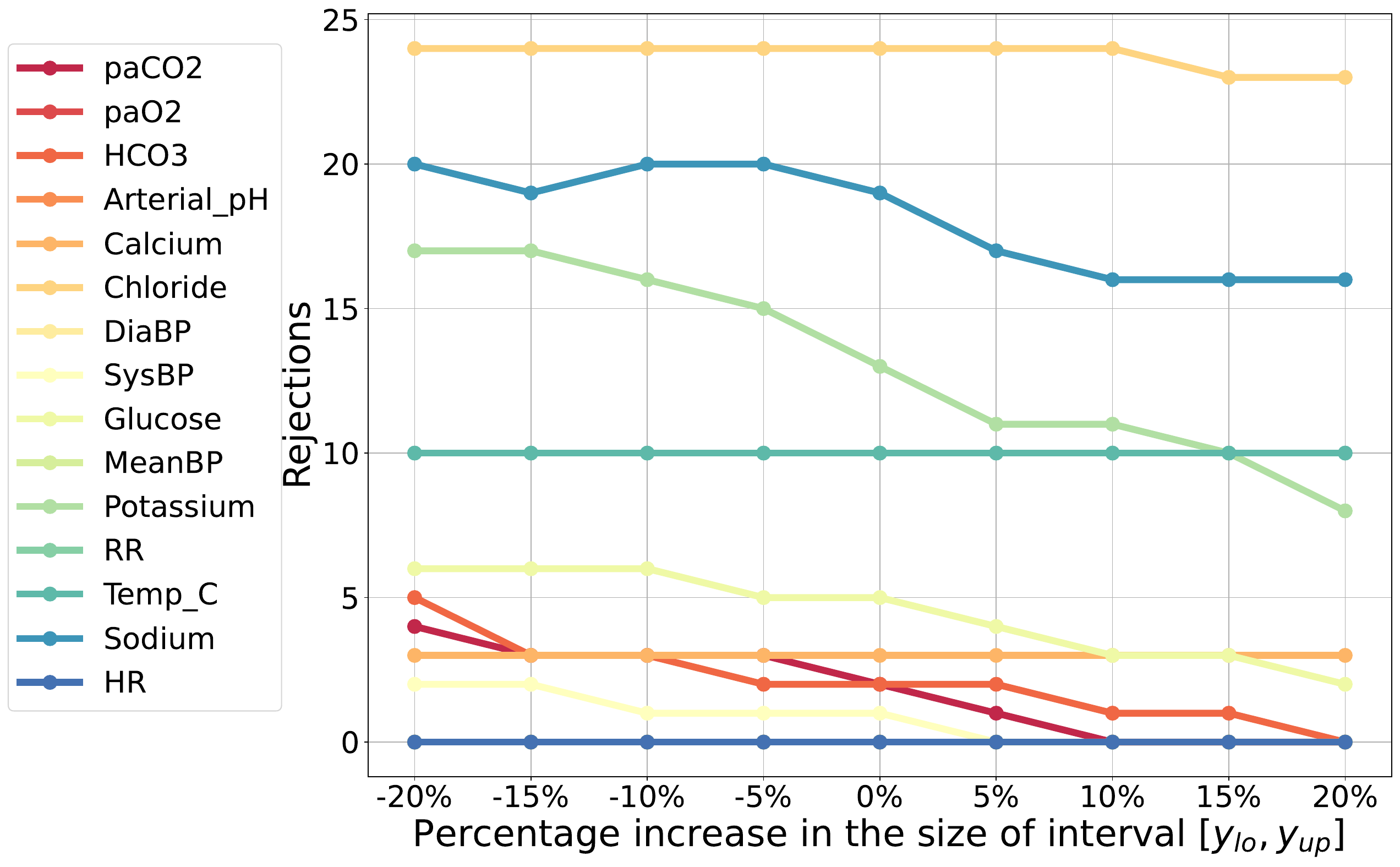}
    \caption{Rejections obtained as the width of the $[\ylo, \yup]$ interval changes. Here, the interval is increased (or decreased) symmetrically on each side.}
    \label{fig:sensitivity-plot-rejections}
\end{figure}

    }
    { %
    }

\setlength{\baselineskip}{0pt} %
\bibliography{references.bib}

\end{document}